\journal{\ldots}
\newtheorem{convention}{Convention}
\newtheorem{notation}{Notation}
\newtheorem{remark}{Remark}
\newtheorem{definition}{Definition}
\newtheorem{example}{Example}
\newtheorem{observation}{Observation}
\newtheorem{lemma}{Lemma}
\newtheorem{proposition}{Proposition}
\newtheorem{theorem}{Theorem}
\newtheorem{corollary}{Corollary}
\begin{document}
\begin{frontmatter}
\title{
	More on extension-based semantics of argumentation
}
\author[nuaa]{Lixing Tan}
\author[nuaa]{Zhaohui Zhu\corref{cor}}
\author[nau]{Jinjin Zhang}
\cortext[cor]{Corresponding author. E-mail addresses: zhaohui@nuaa.edu.cn (Zhaohui Zhu)}
\address[nuaa]{College of Computer Science and Technology, Nanjing University of Aeronautics and Astronautics}
\address[nau]{College of Information Science, Nanjing Audit University}

\begin{abstract}
	After a few decades of development, computational argumentation has become one of the active realms in AI. This paper considers extension-based concrete and abstract semantics of argumentation. For concrete ones, based on Grossi and Modgil's recent work, this paper considers some issues on graded extension-based semantics of abstract argumentation framework (AAF, for short). First, an alternative fundamental lemma is given, which generalizes the corresponding result due to Grossi and Modgil by relaxing the constraint on parameters. This lemma provides a new sufficient condition for preserving conflict-freeness and brings a Galois adjunction between admissible sets and complete extensions, which is of vital importance in constructing some special extensions in terms of iterations of the defense function. Applying such a lemma, some flaws in Grossi and Modgil's work are corrected, and the structural property and universal definability of various extension-based semantics are given. Second, an operator so-called reduced meet modulo an ultrafilter is presented, which is a simple but powerful tool in exploring infinite AAFs. The neutrality function and the defense function, which play central roles in Dung's abstract argumentation theory, are shown to be distributive over reduced meets modulo any ultrafilter. A variety of fundamental semantics of AAFs, including \textit{conflict-free, admissible, complete and stable semantics}, etc, are shown to be closed under this operator. Based on this fact, a number of applications of such operators are considered. In particular, we provide a simple and uniform method to prove the universal definability of a family of range related semantics. Since all graded concrete semantics considered in this paper are generalizations of corresponding non-graded ones, all results about them obtained in this paper also hold in the traditional situation. Apart from concrete semantics, abstract ones, defined by a first order language $\textit{FS}(\Omega)$, are also considered in this paper. We establish a connection between ultraproducts of models in model theory and reduced meets of extensions modulo ultrafilters, and characterize the extension-based semantics that is closed under reduced meets modulo any ultrafilter in terms of $\textit{FS}(\Omega)$-definability, which provides an example of utilizing model theory to explore the theoretical problems of AAFs. It brings metatheorems concerning the universal definability of abstract range related semantics, the Lindenbaum property, $\varepsilon$-extensible and $\varepsilon$-inference associated with extension-based abstract semantics $\varepsilon$, etc. 
\end{abstract}
\begin{keyword}
	Abstract argumentation framework, Graded extension-based semantics, Fundamental lemma, Reduced meet modulo an ultrafilter, Extension-based abstract semantics.
\end{keyword}

\end{frontmatter}

\section{Introduction}\label{Sec:Introduction}
Since Dung proposed his seminal theory based on the notion of an AAF \cite{Dung95Acceptability}, after two decades of development, computational argumentation has become one of the most active realms in artificial intelligence and multi-agent systems \cite{Capon07Introduction, RahSim09Book, Besnard14Introduction, Atkinson17Introduction, Gaggl20Introduction}.
To formalize various reasoning phenomenons in different application contexts, Dung's framework has been developed in many ways. First, the structure of AAFs has been enriched by introducing additional structures on arguments, e.g., preference relations \cite{Amgoud02aPreAF, Amgoud02bPreAF, Modgil09PreAF}, probability \cite{Hunter2017ProAF} and support relations \cite{Cayrol13BipolarAF}  (in \cite{Amgoud16BipolarAF}, support relations are adopted as the starting point instead of attack relations). Second, in order to capture ``\textit{graduality}", an inherent feature that exists in several applications, gradual semantics of AAFs have been explored, which may be traced back to the work of Pollock in \cite{Pollock01Graduality} for indicating the degrees of justification of beliefs. Traditional semantics of AAFs provide simple ways to evaluate the acceptability of arguments, while gradual ones provide a more fine grained assignment of status to arguments based on numerical scales or rankings, which has gained increasing attention in recent years \cite{Bonzon2016Graduality,Baroni19Graduality} and may be roughly divided into two types: ones based on extensions and ones based on rankings.
\par 
The former focuses on extension-based semantics and intends to provide a finer-grained evolution of the acceptability of arguments. Various mechanisms have been proposed to realize this. In \cite{YiningWu10Graduality}, all possible values of arguments, assigned by so-called complete labellings, are used to distinguish their levels of acceptability. In weighted argumentation systems \cite{Dunne11weightedAF}, numeric weights are assigned to every attack in the argument graph, and then ``inconsistency budgets'' are introduced to define the weighted variants of Dung's standard semantics. In \cite{Gabbay16Graduality}, the notion of a U-approach solution is adopted to measure the extent to which an argument is accepted, rejected or undecided, moreover, the relationship between these solutions and traditional extensions is explored in \cite{Gabbay15Graduality}. The graded argumentation system provided in \cite{Grossi19Graded,Grossi15Graded} is a faithful generalization of Dung's system on the aspect of semantics. In this framework, graded defense and neutrality functions are defined in terms of the numbers of attackers and defenders, and two fundamental semantic principles proposed by Dung (i.e., self-defense and conflict-freeness) are preserved.
\par 
The latter deviates from traditional Dung's system and intends to provide a unique ranking on the set of arguments. To rank classical logic arguments, an h-categorizer function for acyclic argument diagrams is presented in \cite{Besnard01Graduality}.
In \cite{Cayrol05Graduality}, through generalizing h-categorizer functions in the context of AAFs, a local approach for evaluating arguments is given, moreover, a global one satisfying independence properties is also considered. Compensation-based \cite{Amgoud2013Graduality, Amgoud2016Graduality} and similarity-based \cite{Amgoud18Graduality} semantics assign each argument a score that represents its overall strength. Through interpreting ``undecided'' as being the most controversial status of an argument, a ranking method based on stratified labellings is given in \cite{Thimm2014Graduality}. In \cite{Matt08Graduality}, two-person zero-sum strategic games with imperfect information are used in computing the strength of arguments.
\paragraph{Paper contribution and related work}
Based on the framework proposed by Grossi and Modgil in \cite{Grossi19Graded,Grossi15Graded}, the paper considers some issues on extension-based concrete semantics of AAFs. Moreover, we also explore extension-based abstract semantics defined by a first order language and lift a number of results on concrete semantics to a metatheoretical level. The main contributions of this paper include:
\begin{itemize}
	\item[(I)] An alternative graded fundamental lemma is provided. Grossi and Modgil generalize the well-known fundamental lemma, obtained by Dung \cite{Dung95Acceptability}, to the graded situation in \cite{Grossi19Graded,Grossi15Graded}, which asserts that $\ell$-conflict-freeness is preserved during iterations of the defense function $D_n^m$ starting at admissible sets whenever the parameters are subject to certain conditions. Based on this result, Grossi and Modgil provide a number of explicit constructions for some special extensions in terms of iterations of the defense function, see, e.g., Theorem 2, Facts 10 and 11, and Corollary 1 in \cite{Grossi19Graded}. However, by giving counterexamples, this paper will point out that the graded fundamental lemma obtained in \cite{Grossi19Graded,Grossi15Graded} is too weak to support their constructions in Theorem 2 and Fact 11 mentioned above. In fact, none of these two results always holds. An alternative fundamental lemma will be given in this paper (Lemma \ref{lemma:fundamental lemma}), which generalizes the one obtained in \cite{Grossi19Graded,Grossi15Graded} by relaxing the constraints on the parameters. Moreover, the flaws in \cite{Grossi19Graded} will be corrected based on this lemma. Our work reveals that, \textit{in general, to preserve $\ell$-conflict-freeness during the iteration of the defense function $D_n^m$ starting at an $\ell mn$-admissible set $E$, different constraints on the parameters $\ell$, $m$ and $n$ are required in different situations, which depends on whether $E$ can be exceeded by the iteration of $D_n^m$ starting at $\emptyset$ (see Corollaries \ref{corollary:properties_defense}($a$) and \ref{corollary:construction_co}), however, such difference disappears for well-founded AAFs (see Corollary \ref{corollary:well-founded_co_empty}($c$)).}
	\item[(II)] As applications of the fundamental lemma given in this paper, a number of results concerning various graded extension-based semantics are provided, which refer to universal definability (see Table \ref{table:conclusion2}), structural properties (see Table \ref{table:conclusion3}), explicit constructions of some special extensions in terms of iterations of the defense function (see Table \ref{table:conclusion4}) and the equivalence of some semantics (Propositions \ref{proposition:structure_gr}, \ref{proposition:equ_pr} and \ref{proposition:equ_ss and rra} and Corollary \ref{corollary:equ_ad_def}), etc. A marked difference compared to related results in the traditional situation is that these results depend on not only the class of AAFs but also the parameters. We will illustrate this and discuss related work where relevant results are given.
	\item[(III)] A useful operator $\bigcap_D$, called \textit{reduced meet modulo an ultrafilter}, is proposed, which provides a simple but powerful construction method. For finitary AAFs, the neutrality function $N_\ell$ and the defense function $D_n^m$ are shown to be distributive over this operator, and a variety of fundamental semantics (including conflict-free, admissible, complete and stable semantics, etc) are shown to be closed under this operator (Theorem \ref{theorem:close under meet_fundamental semantics}). Based on this fact, a simple and uniform proof method is provided to confirm the universal definability of a family of range related semantics (Theorem \ref{theorem:existence_stg_ss_rra_rrs}). For infinite AAFs, the problem of establishing the universal definability of range related semantics is often nontrivial \cite{Caminada10Conjecture}. In the non-graded situation, Weydert firstly solved the open problem presented in \cite{Caminada10Conjecture} and declared the universal definability of semi-stable semantics (i.e., the range related semantics induced by complete extensions) using first-order logic (FoL, for short) \cite{Weydert11SsForInf}. Later on, Baumann and Spanring provided an alternative proof for it based on transfinite induction \cite{Baumann15Infinite}. Our proof takes advantage of the operator $\bigcap_D$ instead of using FoL or transfinite induction, moreover, it is suitable to not only semi-stable semantics but also a family of range related semantics. 
	\item[(IV)] In addition to the fundamental semantics mentioned in (III), we also consider the related properties of the operator $\bigcap_D$ w.r.t. various derived semantics (Propositions \ref{proposition:RMMU_na}, \ref{proposition:RMMU_pr_Dung} and \ref{proposition:RMMU_pr} and Theorem \ref{theorem:RMMU_interval}). As more applications of the operator $\bigcap_D$, for a family of extension-based semantics, a number of theoretical results are provided by taking benefit of this operator, including the (co-)compactness of $\varepsilon$-extensibility and $\varepsilon$-inference (Theorems \ref{theorem:epsilon_extensible} and \ref{theorem:epsilon_inference} and Corollaries \ref{corollary:epsilon_extensible} and \ref{corollary:RM of interval semantics}), the Lindenbaum property (Theorem \ref{theorem:Lindenbaum_RM} and Corollary \ref{corollary:RM of interval semantics}), characterizing equivalent AAFs modulo	$\varepsilon_{\textit{max}, \sigma}$ (or, $\varepsilon_{\textit{max}, \sigma}$-inference) in terms of anti-$\varepsilon_{\sigma}$ sets (Theorem \ref{theorem:anti-epsilon set} and Corollary \ref{corollary:anti-epsilon set}), providing a safe operator related to the conflict-free semantics (Theorem \ref{theorem:safe operators}) and establishing representation theorems of $\varepsilon_{cf}^\ell$ (Theorems \ref{theorem:Representation theorem I of l-cf}, \ref{theorem:Representation theorem II of l-cf} and \ref{theorem:Representation theorem III of l-cf}), etc. All these results indicate that the operator $\bigcap_D$ is useful for studying theoretically on AAFs.
	\item[(V)] Taking advantage of model theory, a family of extension-based abstract semantics, defined by a first order language, are considered. Unlike (II) and (III) above, this part focuses on extension-based abstract semantics instead of concrete ones. We define the language $\textit{FS}(\Omega)$ (Definition \ref{definition:FS_formulas}) and explore extension-based abstract semantics defined by it. To show that $\textit{FS}(\Omega)$ is nontrivial, for the class $\Omega$ of all finitary AAFs, it is proved that $\textit{FS}(\Omega)$ has enough expressive power to define a variety of fundamental extension-based semantics w.r.t. $\Omega$ (Proposition \ref{proposition:FoL_fundamental semantics}). Further, for any abstract semantics $\varepsilon$ defined by $\textit{FS}(\Omega)$, through establishing the connection between reduced meets modulo ultrafilters of extensions and ultraproducts of models (Lemma \ref{lemma:RM_ultraproduct models}), the semantics $\varepsilon$ is shown to be closed under the operator $\bigcap_D$ w.r.t. $\Omega$ (Theorem \ref{theorem:RM_FoL}), which brings us metatheorems concerning the universal definability of abstract range related semantics (Theorem \ref{theorem:FoL and universal definability}), the Lindenbaum property (Corollary \ref{corollary:epsilon_extensible2}) and the compactness of extensibility (Corollary \ref{corollary:epsilon_extensible2}) associated with extension-based abstract semantics, etc.
\end{itemize}
\paragraph{Paper overview}
 The remainder of this paper is organized as follows. Section \ref{Sec:Preliminaries} recalls some concepts and elementary properties. In Section \ref{Sec:Fundamental Lemma and its application}, an alternative graded fundamental lemma is given. Based on it, the universal definability and structural property of various extension-based semantics are considered. Moreover, some comments on relevant results obtained in \cite{Grossi19Graded} are given. Section \ref{Sec: Graded range related semantics} focuses on a family of range related semantics. In particular, we provide a uniform method to establish their universal definability based on the operator reduced meet modulo an ultrafilter. Section \ref{Sec: More on RM} considers more properties of reduced meet modulo an ultrafilter. Section \ref{Sec: Graded parametrized semantics} introduces interval semantics and preliminarily studies its elementary properties. Applying model-theoretical tools, some results in Sections \ref{Sec: Graded range related semantics} and \ref{Sec: More on RM} are lifted to a metatheoretical level in Section \ref{Sec: Extension-based abstract semantics}. Finally, we summarize the paper in Section \ref{Sec:Conclusion}.
\section{Preliminaries} \label{Sec:Preliminaries}
In order to make the paper self-contained, this section recalls some concepts and elementary properties concerning AAFs. 
\begin{definition}[AAF\cite{Dung95Acceptability}]
	An AAF $F$ is a pair $\tuple{\A, \rightarrow}$ where $\A \neq \emptyset$ is a set of arguments, and $\rightarrow \subseteq \A\times\A$ is a binary attack relation over $\A$. $F$ is called finitary if the set $\set{b \in \A \mid b \rightarrow a}$ is finite for each $a \in \A$, whereas $F$ is said to be finite whenever $\A$ is finite, else infinite.
\end{definition}
A finite AAF is often represented graphically. For example, the graph in Figure \ref{figure:graded acceptability} represents the AAF $F$ with the set of arguments $\set{a, b, \cdots, g}$ whose attack relation is captured by arrows.
\par 
A variety of semantics of AAFs have been proposed in the literature. In the following, we will recall graded versions of them introduced in \cite{Grossi19Graded}. Thus the parameters $\ell$, $m$ and $n$ are involved. For the sake of convenience, we adopt the convention below, where the parameter $\eta$ will be used in Section \ref{Sec: Graded range related semantics}.
\begin{convention}
	We assume that the parameters $\ell$, $m$, $n$ and $\eta$ involved in this paper are positive integers.
\end{convention}
\begin{notation}
	($a$) Given a set $X$, the notation $\wp(X)$ (or, $\wp_f(X)$) is used to denote the set of all subsets (finite subsets, resp.) of $X$, moreover $|X|$ is used to represent the cardinal of the cardinality of $X$.
	\par ($b$) As usual, this paper uses $\omega$ to denote the least infinite limit ordinal, equivalently, the set of all natural numbers. The notation $n < \omega$ (equivalently, $n \in \omega$) represents that $n$ is a natural number.
\end{notation}
\begin{definition}[Graded neutrality function\cite{Grossi19Graded}] \label{definition:graded-neutrality}
	Given an AAF $F = \tuple{\A, \rightarrow}$, the graded neutrality function with the parameter $\ell$, denoted by $N_{\ell} \colon \pw{\A} \to \pw{\A}$, is defined as, for any $E \subseteq \A$,
	\begin{eqnarray*}
	N_{\ell}(E)  \triangleq  \set{a \in \A \mid \nexistsn{b}{\ell}(b \rightarrow a \text{~and~} b \in E)}.
	\end{eqnarray*}	
\end{definition}
Here, for simplicity, we ignore the dependence of $N_\ell$ on $F$ in the notation, and adopt $N_\ell$ instead of $N_\ell^F$.
Similar simplification also appears in Definition \ref{definition:grade_defense}. Intuitively, $a \in N_{\ell}(E)$ means that the argument $a$ is attacked by at most $\ell - 1$ arguments in $E$.
\begin{figure}[t]
	\setlength{\abovecaptionskip}{-0.2cm}
	\begin{center}
		\includegraphics[scale=1]{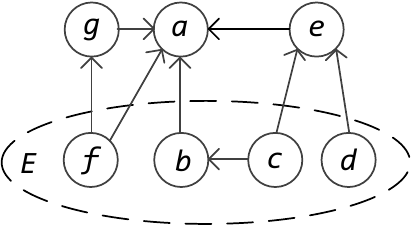}
	\end{center}
	\caption{The AAF for Example \ref{Ex:graded neutrality}.}
	\label{figure:graded acceptability}
\end{figure}
\begin{example} \label{Ex:graded neutrality}
	Consider the AAF $F = \tuple{\A, \rightarrow}$ in Figure \ref{figure:graded acceptability}.
	In the set $E \triangleq \set{b, c, d, f}$, there are altogether two arguments $b$ and $f$ attacking the argument $a$, thus $a \notin N_2(E)$ but $a \in N_3(E)$.
\end{example}
\begin{definition}[Graded defense\cite{Grossi19Graded}] \label{definition:grade_defense}
	Let $F = \tuple{\A, \rightarrow}$ be an AAF. The graded defense function with parameters $m$ and $n$, denoted by $D_n^m \colon \pw{\A} \to \pw{\A}$, is defined as, for any $E \subseteq \A$,
	\begin{eqnarray*}
	D_n^m(E) \triangleq \set{a \in \A \mid \nexistsn{b}{m}(~b \rightarrow a \text{ and } \nexistsn{c}{n}(c \rightarrow b \text{ and } c \in E))}.
	\end{eqnarray*}
\end{definition}
Roughly speaking, $a \in D_n^m(E) $ means that the argument $a$ is defended by $E$, that is, $E$ contains sufficient arguments to attack enough $a$'s attackers. Here the parameters $m$ and $n$ are used to provide a quantitative criterion to evaluate it.
\begin{example}
	Consider the AAF $F$ and $E$ in Example \ref{Ex:graded neutrality} again. There are altogether four arguments $b$, $e$, $f$ and $g$ attacking the argument $a$ but $a$ still is in $D_2^4(E)$ because $e$ is attacked by two arguments in $E$. Similarly, since each of $b$, $e$ and $g$ is attacked by at least one argument in $E$, we also have $a \in D_1^2(E)$.
\end{example}
\begin{definition}[$mn$-self-defended\cite{Grossi19Graded}] \label{definition:def}
	Let $F = \tuple{\A, \rightarrow}$ be an AAF. A set $E \subseteq \A$ is said to be $mn$-self-defended, in symbols $E \in \textit{Def}^{mn}(F)$, if $E \subseteq D_n^m(E)$, that is, $E$ is a post fixpoint of $D_n^m$.
\end{definition}
Similar to the standard (or, non-graded) situation (i.e., $\ell = m = n = \eta = 1$) \cite{Dung95Acceptability}, the functions $N_\ell$ and $D_n^m$ are fundamental for Grossi and Modgil's system, which are the starting point to define semantics. As usual, a graded extension-based semantics is a function $\varepsilon$ which, for any AAF $F = \tuple{\A, \rightarrow}$, assigns a set $\varepsilon(F) \subseteq \pw{\A}$ to $F$. Elements in $\varepsilon(F)$ are so-called extensions of $F$. A number of notions for extensions, which induce corresponding extension-based semantics respectively, have been presented in the literature. Some of them are listed below. Here the notation $\varepsilon_\sigma^\varrho$ is used to denote the graded extension-based semantics induced by $\sigma$-extensions with the parameter $\varrho$.
\begin{definition}[Graded extensions]\label{definition:graded_extensions}
	Let $F = \tuple{\A, \rightarrow}$ be an AAF and $E \subseteq \A$. 
	\begin{itemize}
		\item $E$ is an $\ell$-conflict-free set of $F$, i.e., $E \in \varepsilon_{cf}^\ell(F)$, iff $E \subseteq N_{\ell}(E)$.
		\item $E$ is an $\ell mn$-admissible set of $F$, i.e., $E \in \varepsilon_{ad}^{\ell mn}(F)$, iff $E \subseteq N_{\ell}(E)$ and $E \subseteq D_n^m(E)$.
		\item $E$ is an $\ell mn$-complete extension of $F$, i.e., $E \in \varepsilon_{co}^{\ell mn}(F)$, iff $E \subseteq N_{\ell}(E)$ and $E = D_n^m(E)$.
		\item $E$ is an $\ell mn$-stable extension of $F$, i.e., $E \in \varepsilon_{stb}^{\ell mn}(F)$, iff  $E = N_n(E) = N_m(E) \subseteq N_{\ell}(E)$.
		\item $E$ is the $\ell mn$-grounded extension of $F$, i.e., $E \in \varepsilon_{gr}^{\ell mn}(F)$, iff $E \in \varepsilon_{co}^{\ell mn}(F)$ and $\forall E' \in \varepsilon_{co}^{\ell mn}(F) (E \subseteq E')$.
		\item $E$ is an $\ell mn$-preferred extension of $F$, i.e., $E \in \varepsilon_{pr}^{\ell mn}(F)$, iff $E \in \varepsilon_{co}^{\ell mn}(F)$ and $\nexists E' \in \varepsilon_{co}^{\ell mn}(F) (E \subset E')$.
	\end{itemize}
	\par In addition to the ones above considered in \cite{Grossi19Graded}, the following one will be involved in this paper:
	\begin{itemize}
		\item $E$ is an $\ell$-naive extension of $F$, i.e., $E \in \varepsilon_{na}^{\ell}(F)$, iff $E \in \varepsilon_{cf}^\ell(F)$ and $\nexists E' \in \varepsilon_{cf}^\ell(F) (E \subset E')$.
	\end{itemize}
\end{definition}
These notions agree with the standard ones introduced in \cite{Dung95Acceptability, Bodarenko97Naive} whenever $\ell = m = n =1$.
\begin{example} \label{Ex: graded extensions}
	\begin{figure}[t]
		\centering
		\includegraphics[width=0.4\linewidth]{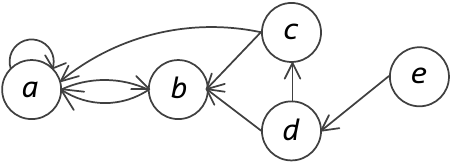}
		\caption{The AAF for Example \ref{Ex: graded extensions}.}
		\label{fig:AF}
	\end{figure}
	Consider the AAF $F = \tuple{\A, \rightarrow}$ in Figure \ref{fig:AF}. For $\ell = 2, m = 1$ and $n = 3$, we have\\
	\[\begin{gathered}
	{\varepsilon_{cf}^\ell(F)} = \left\{ {\begin{array}{*{20}{c}}
		{\emptyset , \set{a}, \set{b}, \set{c}, \set{d}, \set{e}, \set{a, d}, \set{a, e}, \set{b, c}, \set{b, d}, \set{b, e},} \\ 
		{\set{c, d}, \set{c, e}, \set{d, e}, \set{a, d, e}, \set{b, c, e}, \set{b, d, e}, \set{c, d, e}}
		\end{array}} \right\}, \hfill \\
	{\varepsilon_{ad}^{\ell mn}(F)} = \set{\emptyset, \set{e}}, \varepsilon_{co}^{\ell mn}(F) = {\varepsilon_{pr}^{\ell mn}(F)} = {\varepsilon_{gr}^{\ell mn}(F)} = \set{\set{e}}, \hfill \\
	{\varepsilon_{na}^\ell(F)} = \set{\set{a, d, e}, \set{b, c, e}, \set{b,d,e}, \set{c,d,e}} \text{~and} \hfill \\
	{\varepsilon_{stb}^{\ell mn}(F)} = \emptyset. \hfill \\
	\end{gathered} \]
\end{example}
\begin{definition}[\cite{Baumann18Existence}]
	Given a semantics $\varepsilon$ and a class $\Omega$ of AAFs, $\varepsilon$ is said to be universally (or, uniquely) defined w.r.t. $\Omega$ if $|\varepsilon(F)| \geq 1$ ($|\varepsilon(F)| = 1$, resp.) for any $F \in \Omega$.
\end{definition}
Similar to the standard semantics, universal definability is one of the core issues in exploring graded ones. However, for the graded semantics, this issue is related to not only the class of AAFs but also the parameters, which is a marked difference compared to ones of the standard semantics. Moreover, the reader will find that some semantics, which are equivalent in the standard situation, will no longer be equivalent unconditionally in the graded case.
\par 
A number of results obtained in \cite{Grossi19Graded} are recalled below. Here, given a partially ordered set (poset, for short) $\tuple{L, \leq}$, a nonempty subset $X$ of $L$ is said to be (upward) \textit{directed} if, for any $x_1$, $x_2 \in X$, $x_1 \leq x_3$ and $x_2 \leq x_3$ for some $x_3 \in X$.
\begin{lemma}[\cite{Grossi19Graded}] \label{lemma:Grossi_simple}
	Let  $F = \tuple{\A, \rightarrow}$ be an AAF and $X, Y \subseteq \A$.
	\begin{itemize}
		\item[a.] If $X \subseteq Y$ then $N_{\ell}(Y) \subseteq N_{\ell}(X)$.
		\item[b.] If $X \subseteq Y$ then $D_n^m(X) \subseteq  D_n^m(Y)$.
		\item[c.] $N_m(N_n(X)) = D_n^m(X)$.
		\item[d.] If $F$ is finitary then the function $D_n^m$ is continuous, that is, for any directed set $\D \subseteq \wp_f(\A)$, $D_n^m(\bigcup_{X \in \D} X) = \bigcup_{X \in \D} D_n^m(X)$.
	\end{itemize}
\end{lemma}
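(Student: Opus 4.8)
The plan is to treat parts (a)--(c) as bookkeeping on the definitions and to reserve the real work for the continuity claim (d). I would first record the counting reformulations that make the bounded quantifiers transparent: $a \in N_\ell(E)$ holds exactly when $a$ has at most $\ell-1$ attackers lying in $E$, and $a \in D_n^m(E)$ holds exactly when at most $m-1$ of the attackers of $a$ belong to $N_n(E)$. With these in hand, part (a) is immediate: if $X \subseteq Y$ then the attackers of $a$ in $X$ form a subset of those in $Y$, so the bound ``at most $\ell-1$ attackers in $Y$'' forces the same bound for $X$, giving $N_\ell(Y) \subseteq N_\ell(X)$.

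I would prove (c) next and derive (b) from it. Part (c) is simply the observation that the inner clause ``$b$ has fewer than $n$ attackers in $X$'' in the definition of $D_n^m(X)$ is precisely the condition $b \in N_n(X)$, so $D_n^m(X)$ and $N_m(N_n(X))$ unfold to the same set; that is, $D_n^m = N_m \circ N_n$. Part (b) then follows formally: since $N_m$ and $N_n$ are each antitone by (a), their composite $D_n^m = N_m \circ N_n$ is monotone, so $D_n^m(X) \subseteq D_n^m(Y)$ whenever $X \subseteq Y$.

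For (d) write $U = \bigcup_{X \in \D} X$. One inclusion is easy: each $X \subseteq U$, so monotonicity (b) gives $D_n^m(X) \subseteq D_n^m(U)$, whence $\bigcup_{X \in \D} D_n^m(X) \subseteq D_n^m(U)$. The reverse inclusion is where finitariness enters. Fix $a \in D_n^m(U)$ and let $B$ be its set of attackers, which is finite because $F$ is finitary. By the reformulation, the undefeated set $B' = \set{ b \in B \mid b \in N_n(U)}$ satisfies $|B'| \leq m-1$. For each $b \in B \setminus B'$ we have $b \notin N_n(U)$, so $b$ has at least $n$ attackers in $U$; choose $n$ such attackers. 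Ranging over the finite set $B \setminus B'$, these choices form a finite subset $C$ of $U$.

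The key step is to pull $C$ down to a single stage. Each element of $C$ lies in some member of $\D$, and since $C$ is finite and $\D$ is directed, these finitely many members have a common upper bound $X \in \D$, so $C \subseteq X$ (if $B \setminus B' = \emptyset$, take any $X \in \D$, which exists as $\D$ is nonempty). For this $X$, each $b \in B \setminus B'$ retains at least $n$ attackers in $X$, so $b \notin N_n(X)$; therefore $\set{ b \in B \mid b \in N_n(X)} \subseteq B'$, whose size is at most $m-1$. By the reformulation this is exactly $a \in D_n^m(X) \subseteq \bigcup_{X \in \D} D_n^m(X)$. The main obstacle is precisely this passage from the infinite union back to one finite stage: it hinges on $a$ having only finitely many attackers (so that only finitely many defeats must be preserved) together with directedness (to preserve them all simultaneously in one element of $\D$). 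Antimonotonicity of $N_n$ warns that shrinking $U$ to $X$ can only enlarge $N_n$, so the real content is that the chosen $X$ keeps every previously defeated attacker defeated, which the construction of $X$ guarantees.
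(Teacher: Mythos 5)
Your proposal is correct. Note that the paper itself gives no proof of this lemma: it is recalled verbatim from Grossi and Modgil's work \cite{Grossi19Graded}, so there is no in-paper argument to compare against; your write-up therefore serves as a self-contained verification. Parts (a) and (c) are indeed immediate from the counting reformulations, and deriving (b) as the composite of the two antitone maps $N_m \circ N_n$ is exactly the intended use of (c). Your proof of (d) is sound: the passage from $U = \bigcup_{X \in \D} X$ back to a single stage $X \in \D$ correctly exploits finitariness (only finitely many attackers $b$ of $a$, each needing only $n$ witnesses of defeat) plus directedness (a common upper bound in $\D$ for the finitely many members covering the witness set $C$), and the inclusion $\set{b \in B \mid b \in N_n(X)} \subseteq B'$ closes the argument. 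One point worth flagging: your argument for (d) never uses the hypothesis $\D \subseteq \wp_f(\A)$, so it proves the equation for \emph{arbitrary} directed subsets of $\pw{\A}$, not just directed families of finite sets. This is strictly stronger than the statement as given, and it renders the paper's Remark \ref{remark:continuous} (which bootstraps the general directed case from the finite-subsets case via $\wp_f(\bigcup_{X \in \D} X) = \bigcup_{X \in \D} \wp_f(X)$) superfluous: your direct construction of the witness set $C$ accomplishes in one step what the remark obtains by an extra layer of decomposition.
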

\begin{remark} \label{remark:continuous}
	It is well known that the equation involved in the clause ($d$) holds for any directed subset of $\pw{\A}$ whenever $D_n^m$ is continuous. For any set $X$, the set $\wp_f(X)$ is a directed set such that $X = \bigcup \wp_f(X)$. Thus, for any directed set $\D \subseteq \pw{\A}$, by the equation in ($d$), we have
	\begin{eqnarray*}
		D_n^m(\bigcup_{X \in \D} X) &=& D_n^m(\bigcup_{Y \in \wp_f(\bigcup_{X \in \D} X)} Y) = \bigcup_{Y \in \wp_f(\bigcup_{X \in \D} X)} D_n^m(Y) \text{, and}\\
		D_n^m(X) &=& D_n^m(\bigcup_{Y \in \wp_f(X)} Y) = \bigcup_{Y \in \wp_f(X)} D_n^m(Y) \text{~for each~} X \in \D.
	\end{eqnarray*}
	Moreover, since $\D$ is directed, it holds that $\wp_f(\bigcup_{X \in \D} X) = \bigcup_{X \in \D} \wp_f(X)$.
	Thus, by the two equations above, we have
	\begin{eqnarray*}
		D_n^m(\bigcup_{X \in \D} X) = \bigcup_{X \in \D} D_n^m(X).
	\end{eqnarray*}
	Hence, in the situation that $F$ is finitary, this paper will use the equation above for any directed set $\D \subseteq \pw{\A}$ without illustrating explicitly.
\end{remark}
The notions of conflict-freeness and self-defense play central roles in the argument systems in the style of Dung. From Definition \ref{definition:graded_extensions}, it is obvious that the semantics of such systems are developed based on these two notions. Some known elementary properties of them are summarized below, which have been applied explicitly or implicitly in the literature.
\begin{lemma} \label{lemma:properties_cf_def}
	Let $F = \tuple{\A, \rightarrow}$ be an AAF.
	\begin{itemize}
		\item[a.] (union closeness) $\bigcup \D \in \textit{Def}^{mn}(F)$ for any set $\D \subseteq \textit{Def}^{mn}(F)$.
		\item[b.] (directed De Morgan law) $N_\ell(\bigcup_{X \in \D} X) = \bigcap_{X \in \D} N_\ell(X)$ for any directed set $\D \subseteq \pw{\A}$.
		\item[c.] (directed union closeness) $\bigcup \D \in \varepsilon_{cf}^{\ell}(F)$ for any directed set $\D \subseteq \varepsilon_{cf}^{\ell}(F)$.
		\item[d.] (down closeness) $X \subseteq Y \in \varepsilon_{cf}^{\ell}(F)$ implies $X \in \varepsilon_{cf}^{\ell}(F)$ for any $X$, $Y \subseteq \A$.
		\item[e.] For each $X \in \textit{Def}^{mn}(F)$, $X \in \varepsilon_{ad}^{\ell mn}(F)$ iff $X \subseteq E$ for some $E \in \varepsilon_{cf}^{\ell}(F)$. Hence $\varepsilon_{cf}^{\ell}(F) \cap \set{X \subseteq \A \mid Y \subseteq X \text{~for some~} Y \in \textit{Def}^{mn}(F) - \varepsilon_{ad}^{\ell mn}(F)} = \emptyset$.
	\end{itemize}
\end{lemma}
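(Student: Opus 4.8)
The plan is to derive all five clauses from the monotonicity facts recorded in Lemma \ref{lemma:Grossi_simple}($a$,$b$), with clause ($b$) — the directed De Morgan law — as the technical heart from which ($c$) and ($e$) follow. For clause ($a$) I would set $U = \bigcup\D$ and observe that, for each $X \in \D$, the inclusion $X \subseteq U$ combined with self-defendedness and the monotonicity of $D_n^m$ gives $X \subseteq D_n^m(X) \subseteq D_n^m(U)$; taking the union over $X \in \D$ yields $U \subseteq D_n^m(U)$, i.e.\ $U \in \textit{Def}^{mn}(F)$. No directedness is needed here.

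Clause ($b$). Write $U = \bigcup_{X \in \D} X$. The inclusion $N_\ell(U) \subseteq \bigcap_{X \in \D} N_\ell(X)$ is immediate from the antitonicity of $N_\ell$ (Lemma \ref{lemma:Grossi_simple}($a$)) and needs no hypothesis on $\D$. The converse is where directedness and the finiteness of $\ell$ enter. Assuming $a \in \bigcap_{X \in \D} N_\ell(X)$ but, for contradiction, $a \notin N_\ell(U)$, Definition \ref{definition:graded-neutrality} supplies $\ell$ pairwise distinct attackers $b_1, \dots, b_\ell \in U$ of $a$; each $b_i$ lies in some member of $\D$, and since $\D$ is directed I can find, by finitely many applications of directedness, a single $X^\ast \in \D$ containing all of $b_1, \dots, b_\ell$. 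These then witness $a \notin N_\ell(X^\ast)$, contradicting $a \in N_\ell(X^\ast)$. I expect this finite-upper-bound step to be the only place demanding care, since it is exactly where the interplay of $\ell < \omega$ and directedness is used; for an infinite parameter the identity would fail.

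Clauses ($c$)–($e$) are then bookkeeping. For ($c$), given directed $\D \subseteq \varepsilon_{cf}^\ell(F)$ with $U = \bigcup\D$, I fix $a \in U$ and an arbitrary $Y \in \D$; choosing $Z \in \D$ above both $Y$ and the member of $\D$ containing $a$, conflict-freeness of $Z$ gives $a \in Z \subseteq N_\ell(Z) \subseteq N_\ell(Y)$, the last inclusion by Lemma \ref{lemma:Grossi_simple}($a$). Hence $a \in \bigcap_{Y \in \D} N_\ell(Y) = N_\ell(U)$ by clause ($b$), so $U \subseteq N_\ell(U)$. Clause ($d$) is one line: $X \subseteq Y \subseteq N_\ell(Y) \subseteq N_\ell(X)$ using antitonicity.

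For ($e$), since $X \in \textit{Def}^{mn}(F)$ already secures $X \subseteq D_n^m(X)$, membership in $\varepsilon_{ad}^{\ell mn}(F)$ reduces to the single condition $X \subseteq N_\ell(X)$, i.e.\ $X \in \varepsilon_{cf}^\ell(F)$. The forward direction then takes $E = X$, and the backward direction uses clause ($d$) to inherit conflict-freeness of $X$ from $E$. The concluding emptiness assertion is the contrapositive: any $X$ lying in the displayed intersection would be conflict-free and would contain some $Y \in \textit{Def}^{mn}(F)$, whence clause ($d$) makes $Y$ conflict-free and the equivalence just proved forces $Y \in \varepsilon_{ad}^{\ell mn}(F)$, contradicting $Y \in \textit{Def}^{mn}(F) - \varepsilon_{ad}^{\ell mn}(F)$.
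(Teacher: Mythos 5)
Your proposal is correct and follows essentially the same route as the paper: clause ($a$) via monotonicity of $D_n^m$, clause ($b$) by collecting the $\ell$ distinct attackers inside a single member of $\D$ using directedness, clause ($c$) by reducing to ($b$) through an upper bound in $\D$, and clauses ($d$), ($e$) as direct consequences of antitonicity of $N_\ell$. The only cosmetic difference is that you phrase the $\supseteq$-direction of ($b$) as a proof by contradiction where the paper proves the contrapositive directly; the underlying argument is identical.
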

\begin{proof}
	The clause ($e$) immediately follows from ($d$), while the latter is implied by Lemma \ref{lemma:Grossi_simple}($a$). Next we prove ($a$)-($c$) in turn.
	\par ($a$) Let $\D \subseteq \textit{Def}^{mn}(F)$. 
	Then, by Definition \ref{definition:def} and Lemma \ref{lemma:Grossi_simple}($b$), we get
	\begin{eqnarray*}
		\bigcup \D \subseteq \bigcup_{X \in \D} D_n^m(X) \subseteq D_n^m(\bigcup \D),
	\end{eqnarray*}
	and hence $\bigcup \D \in \textit{Def}^{mn}(F)$.
	\par ($b$) ``$\subseteq$'' Immediately follows from Lemma \ref{lemma:Grossi_simple}($a$).
	``$\supseteq$'' Assume $a \notin N_\ell(\bigcup_{X \in \D} X)$.
	Then, there exist $b_i \in \bigcup\D$ with $1 \leq i \leq \ell$ such that $b_i \rightarrow a$ in $F$ and $b_i \neq b_j$ whenever $i \neq j$.
	Since $\D$ is directed, $\set{b_1, \cdots, b_\ell} \subseteq X_0$ for some $X_0 \in \D$.
	Thus, $a \notin N_\ell(X_0)$, and hence $a \notin \bigcap_{X \in \D} N_\ell(X)$.
	\par ($c$) Let $\D \subseteq \varepsilon_{cf}^{\ell}(F)$ be directed.
	Assume $a \in \bigcup \D$.
	It suffices to show that $a \in N_\ell(\bigcup \D)$.
	Suppose $X \in \D$.
	Since $\D$ is directed, $a \in X_1$ and $X \subseteq X_1$ for some $X_1 \in \D$.
	Then, by $X_1 \in \varepsilon_{cf}^{\ell}(F)$ and Lemma \ref{lemma:Grossi_simple}($a$),
	\begin{eqnarray*}
		a \in X_1 \subseteq N_\ell(X_1) \subseteq N_\ell(X).
	\end{eqnarray*}
	Further, since $X$ is an arbitrary element in $\D$, we get $a \in \bigcap_{Y \in \D} N_\ell(Y)$, and hence $a \in N_\ell(\bigcup \D)$ by ($b$). 
\end{proof}
By the clauses ($a$) and ($c$) in Lemma \ref{lemma:properties_cf_def}, the properties of conflict-freeness and self-defense are both compact, that is, if all finite subsets of $X$ are conflict-free (or, self-defended) then so is $X$ itself.
\par 
Although the assertions in Lemma \ref{lemma:properties_cf_def} are almost trivial, they imply some useful structural properties concerning the sets $\textit{Def}^{mn}(F)$, $\varepsilon_{cf}^\ell(F)$ and $\varepsilon_{ad}^{\ell mn}(F)$. Before giving them, some concepts in domain theory \cite{Gierz03Book} are recalled firstly. Let $\tuple{L, \leq}$ be a poset. For any $x, y \in L$, we say that $x$ is \textit{way below} $y$, in symbols $x \ll y$, iff for all directed sets $\D \subseteq L$ such that $\textit{sup}\D$ exists, $y \leq \textit{sup}\D$ implies $x \leq d$ for some $d \in \D$. An element $x$ satisfying $x \ll x$ is said to be \textit{compact}. A poset $L$ is said to be \textit{continuous} if it satisfies the axiom of approximation below,
\begin{eqnarray*}
	x = \textit{sup}^\uparrow \appr x \text{~for any~} x \in L.
\end{eqnarray*}
Here, the notation $\textit{sup}^\uparrow \D$ represents the supremum of the directed subset $\D$. Thus, the axiom of approximation says that, for each $x \in L$, the set $\appr x \triangleq \set{u \in L \mid u \ll x}$ is directed and $x = \textit{sup} \set{u \in L \mid u \ll x}$. A poset is said to be a \textit{dcpo} (directed complete partial order) if its every directed subset has a supremum. A dcpo is said to be a \textit{cpo} (complete partial order) if it has the bottom. A cpo $\tuple{L, \leq}$ is said to be \textit{algebraic} if
\begin{eqnarray*}
	x = \textit{sup}^\uparrow \set{y \in \textit{K(L)} \mid y \leq x} \text{~for each~} x \in L.
\end{eqnarray*}
Here $\textit{K(L)}$ is the set of all compact elements in $L$. A \textit{(algebraic) complete semilattice} is a (algebraic) cpo such that every nonempty subset has an infimum. Finally, a poset is a complete lattice if every subset has an infimum (equivalently, every subset has a supremum). Thus, a complete semilattice is a complete lattice if it has the top (i.e., the infimum of the empty set).
\par 
Based on Lemma \ref{lemma:properties_cf_def}, the structural properties of $\textit{Def}^{mn}(F)$, $\varepsilon_{cf}^{\ell}(F)$ and $\varepsilon_{ad}^{\ell mn}(F)$ are summarized below. In the non-graded situation, Dung has shown that $\tuple{\varepsilon_{ad}^{111}(F), \subseteq}$ is a cpo \cite[Theorem 11]{Dung95Acceptability}.
\begin{corollary} \label{corollary:structure_cf_ad}
	Let $F = \tuple{\A, \rightarrow}$ be an AAF.
	\begin{itemize}
		\item[a.] $\tuple{\textit{Def}^{mn}(F), \subseteq}$ is a complete lattice with the bottom $\emptyset$, the top $\nu D_n^m$ (the greatest fixed point of $D_n^m$ in $\tuple{\pw{\A}, \subseteq}$), $\textit{sup}S = \bigcup S$ and $\textit{inf}S = \bigcup (\bigcap S)^\downarrow$ for any set $S \subseteq \textit{Def}^{mn}(F)$, where $(\bigcap S)^\downarrow \triangleq \set{X \in \textit{Def}^{mn}(F) \mid X \subseteq \bigcap S}$.
		\item[b.] $\tuple{\varepsilon_{cf}^{\ell}(F), \subseteq}$ is a continuous poset. In fact, it is an algebraic complete semilattice with the bottom $\emptyset$, $\textit{sup}\D = \bigcup \D$ for any directed set $\D \subseteq \varepsilon_{cf}^{\ell}(F)$ and $\textit{inf}S = \bigcap S$ for any nonempty set $S \subseteq \varepsilon_{cf}^{\ell}(F)$.
		\item[c.] $\tuple{\varepsilon_{ad}^{\ell mn}(F), \subseteq}$ is a complete semilattice with the bottom $\emptyset$, $\textit{sup}\D = \bigcup \D$ for any directed set $\D \subseteq \varepsilon_{ad}^{\ell mn}(F)$ and $\textit{inf}S = \bigcup (\bigcap S)^\downarrow$ for any nonempty set $S \subseteq \varepsilon_{ad}^{\ell mn}(F)$, where $(\bigcap S)^\downarrow \triangleq \set{X \in \varepsilon_{ad}^{\ell mn}(F) \mid X \subseteq \bigcap S}$.
		\item[d.] For any $X \in \varepsilon_{ad}^{\ell mn}(F)$ (or, $\varepsilon_{cf}^{\ell}(F)$), $\tuple{X^\downarrow, \subseteq}$ is a complete lattice (an algebraic complete lattice, resp.) with $\textit{sup}S = \bigcup S$ for any $S \subseteq X^\downarrow$, where $X^\downarrow \triangleq \set{Y \in \varepsilon_{ad}^{\ell mn}(F) (\varepsilon_{cf}^{\ell}(F), \text{resp.}) \mid Y \subseteq X}$.
	\end{itemize}
\end{corollary}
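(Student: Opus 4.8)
The plan is to reduce every clause to the closure properties established in Lemma~\ref{lemma:properties_cf_def} together with one standard order-theoretic principle: a poset in which every subset possesses a supremum is automatically a complete lattice, the infimum of a subset $S$ being the supremum of the set of its lower bounds. For clause~($a$), union closeness (Lemma~\ref{lemma:properties_cf_def}($a$)) shows that $\textit{Def}^{mn}(F)$ is closed under arbitrary unions, so $\textit{sup}\,S = \bigcup S$ for every $S$, and the empty union supplies the bottom $\emptyset$. By the principle just cited, $\textit{Def}^{mn}(F)$ is then a complete lattice with $\textit{inf}\,S = \textit{sup}\,(\bigcap S)^\downarrow = \bigcup(\bigcap S)^\downarrow$. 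To identify the top I would invoke monotonicity of $D_n^m$ (Lemma~\ref{lemma:Grossi_simple}($b$)) and the Knaster--Tarski theorem on $\tuple{\pw{\A}, \subseteq}$: the greatest post-fixpoint of a monotone map equals its greatest fixpoint, whence $\bigcup \textit{Def}^{mn}(F) = \nu D_n^m$.

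For clause~($b$), directed union closeness (Lemma~\ref{lemma:properties_cf_def}($c$)) makes $\tuple{\varepsilon_{cf}^{\ell}(F), \subseteq}$ a dcpo with $\textit{sup}\,\D = \bigcup \D$ for directed $\D$, and $\emptyset$ is the bottom. For nonempty $S \subseteq \varepsilon_{cf}^{\ell}(F)$, down closeness (Lemma~\ref{lemma:properties_cf_def}($d$)) applied to $\bigcap S \subseteq X$ (any $X \in S$) yields $\bigcap S \in \varepsilon_{cf}^{\ell}(F)$, plainly the greatest lower bound, so $\textit{inf}\,S = \bigcap S$ and the structure is a complete semilattice. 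For algebraicity I would show the compact elements are exactly the finite conflict-free sets: a finite $Y$ is compact because directedness forces $Y \subseteq d$ for some $d \in \D$ whenever $Y \subseteq \bigcup \D$, while an infinite $Y$ fails $Y \ll Y$ since $\wp_f(Y) \subseteq \varepsilon_{cf}^{\ell}(F)$ (by down closeness) is directed with union $Y$ but has no single member containing $Y$. The approximation axiom then holds because $\wp_f(X)$ is a directed family of compact elements with $\bigcup \wp_f(X) = X$, which already yields continuity.

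Clause~($c$) is where the only real subtlety lies. Admissibility combines conflict-freeness with self-defense, and although self-defended sets are union closed (Lemma~\ref{lemma:properties_cf_def}($a$)), conflict-freeness is not; so $\bigcap S$ need not be admissible and the infimum must be taken as $\bigcup(\bigcap S)^\downarrow$. The crux is to verify that $(\bigcap S)^\downarrow$ is directed: given $X_1, X_2 \in (\bigcap S)^\downarrow$, their union lies below $\bigcap S$, which is conflict-free by down closeness; hence $X_1 \cup X_2$ is conflict-free (down closeness again) and self-defended (union closeness), i.e.\ admissible, and so lies in $(\bigcap S)^\downarrow$. Its directed supremum $\bigcup(\bigcap S)^\downarrow$ is therefore admissible and is the greatest lower bound of $S$; that $\tuple{\varepsilon_{ad}^{\ell mn}(F), \subseteq}$ is a dcpo with $\textit{sup}\,\D = \bigcup \D$ follows from Lemma~\ref{lemma:properties_cf_def}($a$,$c$). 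Finally, clause~($d$) specializes these arguments to the down-set $X^\downarrow$: when $X$ is conflict-free, down closeness gives $X^\downarrow = \wp(X)$, an algebraic complete lattice; when $X$ is admissible, $X^\downarrow$ is closed under arbitrary unions (by the conflict-freeness of $X$ plus union closeness), so the principle of the first paragraph makes it a complete lattice with $\textit{sup}\,S = \bigcup S$ and top $X$.

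The main obstacle I anticipate is the directedness verification in clause~($c$): it is the one place where the asymmetry between the union behavior of conflict-freeness and of self-defense must be handled explicitly, and it is tempting to wrongly expect $\textit{inf}\,S = \bigcap S$ by analogy with clause~($b$). Everything else is a bookkeeping combination of the union/down-closure lemmas with the standard ``joins give meets'' principle and the elementary characterization of compactness by finiteness.
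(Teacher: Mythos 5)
Your proposal is correct and follows essentially the same route as the paper: clause ($a$) via union closeness plus Knaster--Tarski for the top, clause ($b$) via the finite-sets-are-the-compact-elements argument with $\wp_f(X)$, clause ($c$) via the infimum $\bigcup(\bigcap S)^\downarrow$, and clause ($d$) by specialization. The only deviation is minor: in ($c$) your detour through directedness of $(\bigcap S)^\downarrow$ is sound but unnecessary, since $\bigcup(\bigcap S)^\downarrow$ is conflict-free outright (it is a subset of any $X_0 \in S$, so Lemma~\ref{lemma:properties_cf_def}($d$) applies) and self-defended by the arbitrary-union closeness of $\textit{Def}^{mn}(F)$, which is exactly how the paper argues.
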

\begin{proof}
	($a$) It follows straightforwardly from Lemma \ref{lemma:properties_cf_def}($a$). The assertion that $\nu D_n^m$ is the top comes from the facts that, by Definition \ref{definition:def}, the set $\textit{Def}^{mn}(F)$ consists exactly of all post fixpoints of $D_n^m$, and $\nu D_n^m$ is the largest among them by the Knaster-Tarski fixed point theorem.
	\par 
	($b$) First, we show that $\tuple{\varepsilon_{cf}^{\ell}(F), \subseteq}$ is continuous. 
	Let $X \in \varepsilon_{cf}^{\ell}(F)$.
	By Lemma \ref{lemma:properties_cf_def}($d$), $\wp_f(X) \subseteq \varepsilon_{cf}^{\ell}(F)$, moreover $\wp_f(X)$ is directed and $\textit{sup}(\wp_f(X)) = \bigcup \wp_f(X) = X$ due to Lemma \ref{lemma:properties_cf_def}($c$).
	To complete the proof, it suffices to show 
	\begin{eqnarray*}
		\wp_f(X) = \rotatebox[]{90}{$\twoheadleftarrow$}X \triangleq \set{Y \in \varepsilon_{cf}^{\ell}(F) \mid Y \ll X}.
	\end{eqnarray*}
	Let $Y \in \rotatebox[]{90}{$\twoheadleftarrow$}X$. Thus, $Y \ll X$.
	Since $X = \textit{sup}(\wp_f(X))$ and $\wp_f(X)$ is a directed subset of $\varepsilon_{cf}^{\ell}(F)$, $Y \subseteq X_0$ for some $X_0 \in \wp_f(X)$.
	Hence, $Y \in \wp_f(X)$. So, $\rotatebox[]{90}{$\twoheadleftarrow$}X \subseteq \wp_f(X)$.
	On the other hand, assume $Z \in \wp_f(X)$.
	Then, $Z \in \varepsilon_{cf}^{\ell}(F)$ due to Lemma \ref{lemma:properties_cf_def}($d$).
	Moreover, for any directed subset $\D$ of $\varepsilon_{cf}^{\ell}(F)$ with $X \subseteq \textit{sup}\D$, by Lemma \ref{lemma:properties_cf_def}($c$), we get $\textit{sup}\D = \bigcup \D$.
	Thus, it follows from $Z \in \wp_f(X)$ and $X \subseteq \bigcup \D$ that $Z \subseteq W$ for some $W \in \D$.
	Hence, $Z \in \rotatebox[]{90}{$\twoheadleftarrow$}X$.
	Thus, $\wp_f(X) \subseteq \rotatebox[]{90}{$\twoheadleftarrow$}X$, as desired.
	So, the poset $\tuple{\varepsilon_{cf}^{\ell}(F), \subseteq}$ is continuous.
	\par 
	Next we show $\tuple{\varepsilon_{cf}^{\ell}(F), \subseteq}$ is an algebraic cpo.
	Clearly, $\emptyset \in \varepsilon_{cf}^{\ell}(F)$ is the bottom.
	Moreover, by Lemma \ref{lemma:properties_cf_def}($c$), $\textit{sup}\D$ exists for any directed subset $\D$ of $\varepsilon_{cf}^{\ell}(F)$.
	Hence, $\tuple{\varepsilon_{cf}^{\ell}(F), \subseteq}$ is a cpo.
	Further, similar to the above, it is easy to verify that the compact elements in $\varepsilon_{cf}^{\ell}(F)$ exactly are the finite ones in $\varepsilon_{cf}^{\ell}(F)$.
	Then, by Lemma \ref{lemma:properties_cf_def}($c$), we have, for each $X \in \varepsilon_{cf}^{\ell}(F)$,
	\begin{eqnarray*}
		X = \bigcup \wp_f(X) = \textit{sup}^\uparrow \set{Y \in \varepsilon_{cf}^{\ell}(F) \mid Y \text{~is compact and~} Y \subseteq X}.
	\end{eqnarray*}
	Hence, $\tuple{\varepsilon_{cf}^{\ell}(F), \subseteq}$ is an algebraic cpo.
	\par 
	Finally, for any nonempty set $S \subseteq \varepsilon_{cf}^{\ell}(F)$, by Lemma \ref{lemma:properties_cf_def}($d$), $\bigcap S \in \varepsilon_{cf}^{\ell}(F)$.
	Thus, $\textit{inf}S$ exists in $\tuple{\varepsilon_{cf}^{\ell}(F), \subseteq}$, in fact, $\textit{inf}S = \bigcap S$.
	Consequently, $\tuple{\varepsilon_{cf}^{\ell}(F), \subseteq}$ is an algebraic complete semilattice.
	\par 
	($c$) $\textit{sup}\D = \bigcup \D$ for any directed set $\D \subseteq \varepsilon_{ad}^{\ell mn}(F)$ immediately follows from ($a$) and ($b$) based on the fact that $\varepsilon_{ad}^{\ell mn}(F) = \varepsilon_{cf}^{\ell}(F) \cap \textit{Def}^{mn}(F)$.
	Moreover, $\emptyset$ is the bottom in $\varepsilon_{ad}^{\ell mn}(F)$.
	Thus, $\tuple{\varepsilon_{ad}^{\ell mn}(F), \subseteq}$ is a cpo.
	Next we consider the infimum.
	Let $\emptyset \neq S \subseteq \varepsilon_{ad}^{\ell mn}(F)$.
	Since $S \neq \emptyset$, there exists an element, say $X_0$, in $S$.
	Then $\bigcup (\bigcap S)^\downarrow \subseteq X_0 \subseteq N_\ell(X_0) \subseteq N_\ell(\bigcup (\bigcap S)^\downarrow)$, that is, $\bigcup (\bigcap S)^\downarrow \in \varepsilon_{cf}^{\ell}(F)$.
	Further, since $(\bigcap S)^\downarrow \subseteq \textit{Def}^{mn}(F)$, by Lemma \ref{lemma:properties_cf_def}($a$), $\bigcup (\bigcap S)^\downarrow \in \textit{Def}^{mn}(F)$.
	Hence, $\bigcup (\bigcap S)^\downarrow \in \varepsilon_{ad}^{\ell mn}(F)$.
	Finally, it is straightforward to verify that $\bigcup (\bigcap S)^\downarrow$ is the largest lower bound of $S$ in $\tuple{\varepsilon_{ad}^{\ell mn}(F), \subseteq}$.
	\par 
	($d$) Immediately follows from the clauses ($b$) and ($c$) in this corollary based on the fact that $X$ itself is the top in $\tuple{X^\downarrow, \subseteq}$.
\end{proof}
The compactness of $\ell$-conflict-freeness and $mn$-self-defense implies the Lindenbaum property given below. For the reader being familiar with FoL, it is a simple matter to see that the next corollary is analogue to the well-known Lindenbaum Theorem in FoL (see, e.g., \cite{Chang1990Book}). It is the reason that it is so named. More general results on this property will be given in Sections \ref{Sec: Graded range related semantics} and \ref{Sec: Extension-based abstract semantics}, see Theorem \ref{theorem:Lindenbaum_RM} and Corollary \ref{corollary:epsilon_extensible2}.
\begin{corollary}[Lindenbaum property of $\varepsilon_{cf}^{\ell}$ and $\varepsilon_{ad}^{\ell mn}$]\label{corollary:Lindenbaum}
	Let $F$ be an AAF. Then each $X \in P$ can be extended to a maximal one in $P$, and hence there exist maximal elements in $P$, where $P$ is either $\varepsilon_{cf}^{\ell}(F)$ or $\varepsilon_{ad}^{\ell mn}(F)$.
\end{corollary}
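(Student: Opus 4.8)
The plan is to obtain both cases at once by a single application of Zorn's Lemma, relying on the directed union closeness recorded in Lemma \ref{lemma:properties_cf_def} (equivalently, on the fact that $\tuple{P, \subseteq}$ is a cpo, established in Corollary \ref{corollary:structure_cf_ad}). Write $P$ for either $\varepsilon_{cf}^{\ell}(F)$ or $\varepsilon_{ad}^{\ell mn}(F)$, fix $X \in P$, and consider the subposet $P_{\geq X} \triangleq \set{Y \in P \mid X \subseteq Y}$ ordered by inclusion. Note that $P_{\geq X} \neq \emptyset$ since $X \in P_{\geq X}$, and that $X$ is its bottom.

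First I would verify the chain condition needed for Zorn's Lemma. Let $\mathcal{C} \subseteq P_{\geq X}$ be an arbitrary chain. If $\mathcal{C} = \emptyset$, then $X$ itself is an upper bound of $\mathcal{C}$ lying in $P_{\geq X}$. Otherwise $\mathcal{C}$ is a nonempty chain, hence a directed subset of $P$; therefore $\bigcup \mathcal{C} \in P$. For $P = \varepsilon_{cf}^{\ell}(F)$ this is exactly Lemma \ref{lemma:properties_cf_def}($c$), and for $P = \varepsilon_{ad}^{\ell mn}(F)$ it follows from $\varepsilon_{ad}^{\ell mn}(F) = \varepsilon_{cf}^{\ell}(F) \cap \textit{Def}^{mn}(F)$ together with the union closeness in Lemma \ref{lemma:properties_cf_def}($a$) (alternatively, directly from Corollary \ref{corollary:structure_cf_ad}($c$)). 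Since every member of $\mathcal{C}$ contains $X$, so does $\bigcup \mathcal{C}$, whence $\bigcup \mathcal{C} \in P_{\geq X}$ and is plainly an upper bound of $\mathcal{C}$. Thus every chain in $P_{\geq X}$ has an upper bound in $P_{\geq X}$.

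By Zorn's Lemma, $P_{\geq X}$ has a maximal element $M$. It then remains to check that $M$ is already maximal in the whole of $P$: if $M \subseteq M'$ for some $M' \in P$, then $X \subseteq M \subseteq M'$ forces $M' \in P_{\geq X}$, so the maximality of $M$ in $P_{\geq X}$ yields $M = M'$. Hence $M$ is maximal in $P$ and extends $X$, which proves the first assertion. The existence of maximal elements in $P$ follows by instantiating this at $X = \emptyset$, which lies in $P$ (the bottom element, by Corollary \ref{corollary:structure_cf_ad}($b$),($c$)).

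As for difficulty, there is essentially no obstacle beyond invoking the directed union closeness correctly. The only points deserving care are that the upper-bound hypothesis of Zorn's Lemma must be checked for \emph{all} chains, including the empty one (handled by the bottom element $X$), and that a chain is used only through its being directed, which is precisely the hypothesis under which Lemma \ref{lemma:properties_cf_def}($c$) and Corollary \ref{corollary:structure_cf_ad} guarantee that the union remains inside $P$.
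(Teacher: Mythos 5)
Your proposal is correct and follows essentially the same route as the paper: both restrict to the subposet $\set{Y \in P \mid X \subseteq Y}$, use the fact that chains are directed so their unions stay in $P$ (Lemma \ref{lemma:properties_cf_def}/Corollary \ref{corollary:structure_cf_ad}), and apply Zorn's Lemma. Your additional care with the empty chain and with transferring maximality from the subposet back to $P$ only makes explicit what the paper leaves implicit.
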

\begin{proof}
	For any $X \in P$, by the clauses ($b$) and ($c$) in Corollary \ref{corollary:structure_cf_ad}, it isn't difficult to see that $\tuple{\set{Y \in P \mid X \subseteq Y}, \subseteq}$ is a cpo.
	By applying Zorn's Lemma on this cpo based on the fact that each chain is directed, it follows easily that there exists a maximal element in $\set{Y \in P \mid X \subseteq Y}$, as desired.
	Further, there exist maximal elements in $P$ due to $\varepsilon_{cf}^{\ell}(F) \neq \emptyset$ and $\varepsilon_{ad}^{\ell mn}(F) \neq \emptyset$.
\end{proof}
\section{Fundamental lemma and its applications} \label{Sec:Fundamental Lemma and its application}
In addition to the compactness of $\ell$-conflict-freeness and $mn$-self-defense, it is also a desired property that $mn$-self-defense and $\ell$-conflict-freeness are preserved during iterations of the defense function $D_n^m$. The former is always preserved, see Observation \ref{observation:D_ordinal} below. However, the latter isn't preserved in general. Fortunately, in the standard situation, the well-known fundamental lemma, due to Dung \cite{Dung95Acceptability}, asserts that $1$-conflict-freeness is preserved when the function $D_1^1$ is applied on $111$-admissible sets. Recently, a graded version of Dung's result is provided in \cite{Grossi19Graded,Grossi15Graded}, which plays a central role in constructing some special extensions based on iterations of the function $D_n^m$. This section will provide an alternative graded fundamental lemma for all AAFs. 
\subsection{Fundamental lemma}
We begin with giving a trivial but useful result below, which is an instantiation of a standard result in partial order theory.
\begin{observation} \label{observation:D_ordinal}
	Let  $F = \tuple{\A, \rightarrow}$ be an AAF and $E \subseteq \A$. If $E \subseteq D_n^m(E)$ (i.e., $E \in \textit{Def}^{mn}(F)$) then, for any ordinal $\xi_1$ and $\xi_2$ with $\xi_1 < \xi_2$, $D_{\substack{m \\ n}}^{\xi_1}(E) \subseteq D_{\substack{m \\ n}}^{\xi_2}(E)$.
	Here, for each ordinal $\xi$ and $X \subseteq \A$,
	\begin{eqnarray*}
		D_{\substack{m \\ n}}^\xi(X) \triangleq
		\begin{cases}
			X &\text{\textit{if}~~$\xi = 0$}  \\ 
			D_n^m(D_{\substack{m \\ n}}^k(X)) &\text{\textit{if}~~$\xi = k + 1$}\\ 
			\bigcup_{k < \xi} D_{\substack{m \\ n}}^k(X) &\text{\textit{if}~~$\xi (\neq 0)$ is a limit ordinal}
		\end{cases}
	\end{eqnarray*}
\end{observation}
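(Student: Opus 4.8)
The plan is to recognize this as the standard fact that the transfinite orbit of a monotone map, started at a post-fixpoint, is an increasing chain, and to isolate the single fact that carries all the weight: every iterate $D_{\substack{m \\ n}}^{\xi}(E)$ is again $mn$-self-defended. First I would prove, by transfinite induction on $\xi$, that $D_{\substack{m \\ n}}^{\xi}(E) \in \textit{Def}^{mn}(F)$ for every ordinal $\xi$. The base case $\xi = 0$ is exactly the hypothesis $E \subseteq D_n^m(E)$. For a successor $\xi = k + 1$, the induction hypothesis gives $D_{\substack{m \\ n}}^{k}(E) \subseteq D_n^m(D_{\substack{m \\ n}}^{k}(E)) = D_{\substack{m \\ n}}^{k+1}(E)$; applying the monotone function $D_n^m$ (Lemma \ref{lemma:Grossi_simple}($b$)) to both sides yields $D_{\substack{m \\ n}}^{k+1}(E) \subseteq D_n^m(D_{\substack{m \\ n}}^{k+1}(E))$, as required. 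For a limit ordinal $\xi$, each $D_{\substack{m \\ n}}^{k}(E)$ with $k < \xi$ lies in $\textit{Def}^{mn}(F)$ by the induction hypothesis, so the union-closeness of $\textit{Def}^{mn}(F)$ (Lemma \ref{lemma:properties_cf_def}($a$)) immediately delivers $D_{\substack{m \\ n}}^{\xi}(E) = \bigcup_{k < \xi} D_{\substack{m \\ n}}^{k}(E) \in \textit{Def}^{mn}(F)$.

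With this claim in hand, the monotonicity statement follows by a second short transfinite induction, this time on $\xi_2$ (for fixed $\xi_1 < \xi_2$). When $\xi_2 = k + 1$ is a successor, I would split on whether $\xi_1 = k$ or $\xi_1 < k$: in the first case the claim gives $D_{\substack{m \\ n}}^{k}(E) \subseteq D_n^m(D_{\substack{m \\ n}}^{k}(E)) = D_{\substack{m \\ n}}^{k+1}(E)$, while in the second case the induction hypothesis gives $D_{\substack{m \\ n}}^{\xi_1}(E) \subseteq D_{\substack{m \\ n}}^{k}(E)$ and composing with the first case closes the step. When $\xi_2$ is a limit ordinal, the inclusion $D_{\substack{m \\ n}}^{\xi_1}(E) \subseteq \bigcup_{k < \xi_2} D_{\substack{m \\ n}}^{k}(E) = D_{\substack{m \\ n}}^{\xi_2}(E)$ holds directly from the definition of the limit stage, since $\xi_1 < \xi_2$.

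I expect the only real subtlety to be the bookkeeping of the limit step together with the recognition that the hypothesis $E \in \textit{Def}^{mn}(F)$ is needed only at the base case, from which the post-fixpoint property propagates upward through monotonicity (at successors) and union-closeness (at limits). Everything else is routine transfinite induction, and notably neither the continuity nor the finitariness of $D_n^m$ is invoked, so the conclusion holds for arbitrary AAFs.
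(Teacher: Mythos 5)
Your proposal is correct and follows essentially the same route as the paper: the key claim that every iterate is again self-defended is literally the paper's first assertion $D_{\substack{m \\ n}}^{\xi}(E) \subseteq D_{\substack{m \\ n}}^{\xi+1}(E)$ (since $D_{\substack{m \\ n}}^{\xi+1}(E) = D_n^m(D_{\substack{m \\ n}}^{\xi}(E))$ by definition), proved by the same transfinite induction, followed by the same second induction on $\xi_2$. The only cosmetic difference is that at limit stages you cite the union-closeness of $\textit{Def}^{mn}(F)$ (Lemma \ref{lemma:properties_cf_def}($a$)) where the paper inlines the corresponding monotonicity computation.
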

\begin{proof}
	We first prove that $D_{\substack{m \\ n}}^{\xi}(E) \subseteq D_{\substack{m \\ n}}^{\xi + 1}(E)$ for any ordinal $\xi$.
	For $\xi = 0$, it holds trivially due to $D_{\substack{m \\ n}}^0(E) = E \subseteq D_n^m(E)$. Now we consider the inductive step. 
	\par If $\xi$ is a successor ordinal, that is, $\xi = \xi_0 + 1$ for some $\xi_0$. By IH, $D_{\substack{m \\ n}}^{\xi_0}(E) \subseteq D_{\substack{m \\ n}}^{\xi_0 + 1}(E) = D_{\substack{m \\ n}}^{\xi}(E)$. Then, by Lemma \ref{lemma:Grossi_simple}(b), $D_{\substack{m \\ n}}^{\xi}(E) = D_{\substack{m \\ n}}^{\xi_0 + 1}(E) \subseteq D_{\substack{m \\ n}}^{\xi + 1}(E)$. 
	\par If $\xi$ is a limit ordinal, then $k + 1 <  \xi$ for each $k < \xi$. Thus, for each $k < \xi$, by IH and Lemma \ref{lemma:Grossi_simple}(b),
	\begin{eqnarray*}
	D_{\substack{m \\ n}}^{k}(E) \subseteq D_{\substack{m \\ n}}^{k + 1}(E)
	\subseteq D_n^m(\bigcup_{\xi' < \xi} D_{\substack{m \\ n}}^{\xi'}(E))
	= D_n^m(D_{\substack{m \\ n}}^{\xi}(E))
	= D_{\substack{m \\ n}}^{\xi + 1}(E).
	\end{eqnarray*}
	Hence, $D_{\substack{m \\ n}}^{\xi}(E) = \bigcup_{k < \xi} D_{\substack{m \\ n}}^k(E) \subseteq D_{\substack{m \\ n}}^{\xi + 1}(E)$.
	\par Next we prove this observation itself by induction on $\xi_2$. 
	Assume $\xi_1 < \xi_2$.
	Clearly, $\xi_2 \neq 0$. If $\xi_2$ is a limit ordinal, it holds trivially because
	$D_{\substack{m \\ n}}^{\xi_1}(E) \subseteq \bigcup_{k < \xi_2} D_{\substack{m \\ n}}^{k}(E) = D_{\substack{m \\ n}}^{\xi_2}(E)$ due to $\xi_1 < \xi_2$. Now we assume $\xi_2 = \xi_3 + 1$. Then $\xi_1 \leq \xi_3$. Hence, by IH, we get $D_{\substack{m \\ n}}^{\xi_1}(E) \subseteq D_{\substack{m \\ n}}^{\xi_3}(E)$. 
	Further,  by the assertion given at the beginning of this proof, we have $D_{\substack{m \\ n}}^{\xi_1}(E) \subseteq D_{\substack{m \\ n}}^{\xi_3}(E) \subseteq D_{\substack{m \\ n}}^{\xi_2}(E)$.
\end{proof}
An alternative fundamental lemma is given below. Restricting this lemma to the special situation that $F$ is finitary, $n \geq \ell = m$ and $0 \leq \xi < \omega$, we get the one obtained in \cite[Lemma 3]{Grossi19Graded}. It should be emphasized that it is of vital importance that weakening the assumption $n \geq \ell = m$ to $n \geq \ell \geq m$ for asserting that $\ell$-conflict-freeness is preserved during iterations of $D_n^m$ starting at any $\ell mn$-admissible set whenever $n \geq \ell \geq m$.
\begin{lemma}[Fundamental lemma]\label{lemma:fundamental lemma}
	Let  $F = \tuple{\A, \rightarrow}$ be an AAF and $n \geq \ell \geq m$. Then for any $E \in \varepsilon_{ad}^{\ell mn}(F)$ and ordinal $\xi$,
    \begin{eqnarray*}
    	E \subseteq D_{\substack{m \\ n}}^\xi(E) \subseteq N_\ell(D_{\substack{m \\ n}}^\xi(E)) \subseteq N_\ell(E).
    \end{eqnarray*}
	In particular, by setting $\ell \triangleq m$, we have, for any $E \in \varepsilon_{ad}^{mmn}(F)$,
	\begin{eqnarray*}
		E \subseteq D_{\substack{m \\ n}}^\xi(E) \subseteq N_m(D_{\substack{m \\ n}}^\xi(E)) \subseteq N_m(E) \text{~whenever $n \geq m$.}
	\end{eqnarray*}
\end{lemma}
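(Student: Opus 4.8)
The plan is to prove the displayed chain of inclusions one link at a time, disposing of the two outer links by pure monotonicity and concentrating the real work on the middle link, which asserts that $\ell$-conflict-freeness is preserved along the transfinite iteration of $D_n^m$.

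First I would handle the outer two inclusions, neither of which needs the hypothesis $n \geq \ell \geq m$. Since $E \in \varepsilon_{ad}^{\ell mn}(F)$ gives $E \subseteq D_n^m(E)$, i.e. $E \in \textit{Def}^{mn}(F)$, Observation \ref{observation:D_ordinal} applies and yields $E = D_{\substack{m \\ n}}^0(E) \subseteq D_{\substack{m \\ n}}^\xi(E)$, which is the leftmost inclusion. The rightmost inclusion $N_\ell(D_{\substack{m \\ n}}^\xi(E)) \subseteq N_\ell(E)$ then follows immediately by applying the antitonicity of $N_\ell$ (Lemma \ref{lemma:Grossi_simple}(a)) to $E \subseteq D_{\substack{m \\ n}}^\xi(E)$.

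The heart of the argument is the middle inclusion $D_{\substack{m \\ n}}^\xi(E) \subseteq N_\ell(D_{\substack{m \\ n}}^\xi(E))$, that is, $D_{\substack{m \\ n}}^\xi(E) \in \varepsilon_{cf}^\ell(F)$, which I would establish by transfinite induction on $\xi$. The base case $\xi = 0$ is just $E \in \varepsilon_{cf}^\ell(F)$, immediate from admissibility. For a limit ordinal $\xi$, Observation \ref{observation:D_ordinal} shows that $\set{D_{\substack{m \\ n}}^k(E) \mid k < \xi}$ forms a chain, hence a directed subset of $\varepsilon_{cf}^\ell(F)$ by the induction hypothesis, so its union $D_{\substack{m \\ n}}^\xi(E)$ again lies in $\varepsilon_{cf}^\ell(F)$ by directed union closeness (Lemma \ref{lemma:properties_cf_def}(c)).

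The successor step is where I expect the difficulty and where all three constraints are consumed. Write $S = D_{\substack{m \\ n}}^k(E)$ and $T = D_n^m(S) = D_{\substack{m \\ n}}^{k+1}(E)$, take any $a \in T$, and bound the number of $a$'s attackers lying in $T$. Using $D_n^m = N_m \circ N_n$ (Lemma \ref{lemma:Grossi_simple}(c)), membership $a \in T$ means $a$ has at most $m-1$ attackers in $N_n(S)$. The key observation I would prove is that no attacker $b$ of $a$ with $b \notin N_n(S)$ can survive in $T$: such a $b$ has at least $n$ attackers in $S$, and since $S \subseteq N_\ell(S)$ (induction hypothesis) together with $N_\ell(S) \subseteq N_n(S)$ (valid because $n \geq \ell$), these attackers all lie in $N_n(S)$; as $n \geq m$ this forces $b \notin N_m(N_n(S)) = T$. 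Consequently every attacker of $a$ belonging to $T$ must lie in $N_n(S)$, so $a$ has at most $m-1 < \ell$ attackers in $T$ (here $\ell \geq m$ is used), giving $a \in N_\ell(T)$ and completing the induction. The ``in particular'' clause is then simply the instance $\ell = m$ of the statement. The main obstacle is precisely this successor-step claim that $n$-attacked attackers cannot persist under $D_n^m$, since it is the single point at which the relaxed hypothesis $n \geq \ell \geq m$ must be shown to suffice in place of the stronger $n \geq \ell = m$ assumed in \cite{Grossi19Graded}.
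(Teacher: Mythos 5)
Your proof is correct and takes essentially the same route as the paper's: transfinite induction on $\xi$, with identical base and limit cases (directed-union closeness of $\ell$-conflict-freeness), and a successor step that consumes $\ell \geq m$, $n \geq m$ and $n \geq \ell$ at exactly the same three points. The only difference is presentational: the paper argues by contradiction, extracting an attacker $b_0$ of $a$ lying in $D_{\substack{m \\ n}}^{\xi}(E)$ but outside $N_n(D_{\substack{m \\ n}}^{k}(E))$ and then a witness $c_0$ violating the induction hypothesis, whereas you prove the contrapositive of that same counting argument directly, by isolating the claim that no argument outside $N_n(S)$ can survive into $D_n^m(S)$ once $S \subseteq N_n(S)$.
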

\begin{proof}
	We proceed by induction on $\xi$.
	Due to $E \in \varepsilon_{ad}^{\ell mn}(F)$, we obtain $E \subseteq N_{\ell}(E)$ and $E \subseteq D_n^m(E)$. Since $D_{\substack{m \\ n}}^0$  is the identity function, it immediately follows that
	\begin{eqnarray*}
		E = D_{\substack{m \\ n}}^0(E) \subseteq N_\ell(E) =  N_\ell(D_{\substack{m \\ n}}^0(E)).
	\end{eqnarray*}
	Assume that $E \subseteq D_{\substack{m \\ n}}^{\xi'}(E) \subseteq N_\ell(D_{\substack{m \\ n}}^{\xi'}(E)) \subseteq N_\ell(E)$ for all $\xi' < \xi$ (IH).	Next we deal with the inductive step by distinguishing two cases based on $\xi$.\\
	\\
	Case 1 : $\xi$ is a successor ordinal. \\
	Then $\xi = k + 1$ for some ordinal $k$. 
	Due to $E \subseteq D_n^m(E)$, by Observation \ref{observation:D_ordinal}, we get 
	\begin{eqnarray*}
		E \subseteq D_n^m(E) \subseteq D_{\substack{m \\ n}}^{k + 1}(E) = D_{\substack{m \\ n}}^\xi(E).
	\end{eqnarray*}
	Hence, by Lemma \ref{lemma:Grossi_simple}(a), we get $N_\ell(D_{\substack{m \\ n}}^\xi(E)) \subseteq N_{\ell}(E)$. It remains to show that $D_{\substack{m \\ n}}^{\xi}(E) \subseteq N_\ell(D_{\substack{m \\ n}}^{\xi}(E))$. Conversely, suppose that $a \notin N_\ell(D_{\substack{m \\ n}}^{\xi}(E))$ for some $a \in D_{\substack{m \\ n}}^{\xi}(E) = D_n^m(D_{\substack{m \\ n}}^{k}(E))$. Thus $\nexistsn{b}{m}(b \rightarrow a \text{~and~} \nexistsn{c}{n}(c \rightarrow b \text{~and~} c \in D_{\substack{m \\ n}}^{k}(E)))$ and $\existsn{b}{\ell}(b \rightarrow a \text{~and~} b \in D_{\substack{m \\ n}}^{\xi}(E))$.
	Hence, due to $\ell \geq m$, there exists an argument, say $b_0$, such that $b_0 \in D_{\substack{m \\ n}}^{\xi}(E)$ and
	\begin{eqnarray}
	\existsn{c}{n}(c \rightarrow b_0 \text{~and~} c \in D_{\substack{m \\ n}}^{k}(E)).  \label{formula:b_0}
	\end{eqnarray}
	Clearly, it follows from $b_0 \in D_{\substack{m \\ n}}^{\xi}(E) = D_n^m(D_{\substack{m \\ n}}^{k}(E))$ that
	\begin{eqnarray*}
		\nexistsn{c}{m}(c \rightarrow b_0 \text{~and~} \nexistsn{d}{n}(d \rightarrow c \text{~and~} d \in D_{\substack{m \\ n}}^{k}(E))).
	\end{eqnarray*}
	Thus, by (\ref{formula:b_0}) and $n \geq m$, there exists an argument, say $c_0$, such that $c_0 \in D_{\substack{m \\ n}}^{k}(E)$ and
	\begin{eqnarray*}
		\existsn{d}{n}(d \rightarrow c_0 \text{~and~} d \in D_{\substack{m \\ n}}^{k}(E)).
	\end{eqnarray*}
	Then, $\existsn{d}{\ell}(d \rightarrow c_0 \text{~and~} d \in D_{\substack{m \\ n}}^{k}(E))$ due to $n \geq \ell$. Hence, $c_0 \notin N_\ell(D_{\substack{m \\ n}}^{k}(E))$, which contradicts that $c_0 \in D_{\substack{m \\ n}}^{k}(E)$ and IH.\\
	\\
	Case 2 : $\xi$ is a limit ordinal.\\
	By IH, we have $E \subseteq D_{\substack{m \\ n}}^k(E) \subseteq N_\ell(D_{\substack{m \\ n}}^k(E)) \subseteq N_\ell(E) \text{~for all~} k < \xi.$
	Hence, $E \subseteq \bigcup_{k < \xi} D_{\substack{m \\ n}}^k(E) = D_{\substack{m \\ n}}^\xi(E)$. 
	Moreover, by (IH) and Observation \ref{observation:D_ordinal}, $\set{D_{\substack{m \\ n}}^k(E) \mid k < \xi}$ is a chain in $\varepsilon_{cf}^\ell(F)$, thus $D_{\substack{m \\ n}}^\xi(E) \subseteq N_\ell(D_{\substack{m \\ n}}^\xi(E))$ by Lemma \ref{lemma:properties_cf_def}($c$).
	Further, due to Lemma \ref{lemma:Grossi_simple}($a$), we also have $N_\ell(D_{\substack{m \\ n}}^\xi(E)) \subseteq N_\ell(E)$, as desired.
\end{proof}
The following example reveals that the assumption $n \geq \ell \geq m$ in Lemma \ref{lemma:fundamental lemma} is necessary even for finite AAFs. Thus, in general case, this requirement can't be weaken any more.
\begin{example}\label{Ex:counterexample}
	\begin{figure}[t]
		\centering
		\includegraphics[width=0.5\linewidth]{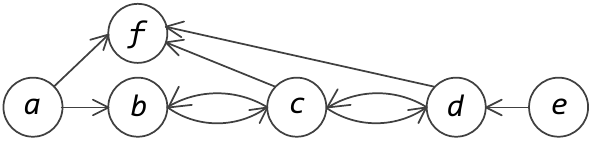}
		\caption{The AAF for Examples \ref{Ex:counterexample} and \ref{Ex:ad_co_pr}.}
		\label{figure:counterexample}
	\end{figure}
	Consider the AAF in Figure \ref{figure:counterexample}. 
	\par Let $\ell = 3$, $m = n = 2$ and $E_1 = \set{a, b, c, d, e}$. Since each argument in $E_1$ is attacked by at most two arguments in $E_1$, we get $E_1 \subseteq N_\ell(E_1)$. Moreover, it is easy to check that $D_n^m(E_1) = \set{a, b, c, d, e, f}$, and hence $E_1 \subseteq D_n^m(E_1)$. Thus, $E_1 \in \varepsilon_{ad}^{\ell mn}(F)$. However, $f \notin N_{\ell}(D_n^m(E_1))$ because $f$ is attacked by three arguments $a$, $c$ and $d$ in $D_n^m(E_1)$, which, together with $f \in D_n^m(E_1)$, implies $D_n^m(E_1) \nsubseteq N_{\ell}(D_n^m(E_1))$.
	\par Let $m = 3$, $\ell = n = 2$ and $E_2 = \set{a, b, d, e}$. Since each argument in $E_2$ is attacked by at most one argument in $E_2$, we get $E_2 \subseteq N_\ell(E_2)$. Furthermore, it is easy to verify that $D_n^m(E_2) = \set{a, b, c, d, e, f}$, and thus $E_2 \subseteq D_n^m(E_2)$. Hence, $E_2 \in \varepsilon_{ad}^{\ell mn}(F)$. However, $c \notin N_{\ell}(D_n^m(E_2))$ because $c$ is attacked by two arguments $b$ and $d$ in $D_n^m(E_2)$, which, together with $c \in D_n^m(E_2)$, implies $D_n^m(E_2) \nsubseteq N_{\ell}(D_n^m(E_2))$. 
	\par Consequently, in the situation that $\ell > n$ or $m > \ell$, $\ell$-conflict-freeness is not always preserved during iterations of the graded defense function $D_n^m$ starting at $\ell mn$-admissible sets.
\end{example}
Before applying the fundamental lemma to obtain further results, we recall the fixed point theorem on ordered sets simply.
One of statements asserted by the well-known Knaster-Tarski fixed point theorem is that,  for any monotone function $f$ on a complete lattice (or, cpo) $\tuple{L, \leq}$ and $d \in L$ with $d \leq f(d)$, there is a least fixed point over $d$ (i.e., it is the least one among fixed points being greater than $d$), which may be obtained based on the iteration of $f$ starting at $d$. In particular, if $d$ is the bottom of $L$, such iteration generates the least fixed point of $f$. The lengths of these iterations are in general transfinite, but are bounded at worst by the cardinal of the cardinality of the lattice (cpo, resp.) under consideration. In particular, if $f$ is continuous, the lengths of these iterations are bounded by $\omega$. This result allows us to adopt the convention below.
\begin{convention}[$\lambda_F$]\label{convention:lambda_F}
	Given an AAF $F = \tuple{\A, \rightarrow}$, we choose arbitrarily and fix an ordinal $\lambda_F$ so that, for any $X \in \textit{Def}^{mn}(F)$, $D_{\substack{m \\ n}}^{\lambda_F}(X)$ is the least fixed point over $X$ of the function $D_n^m$ in the complete lattice $\tuple{\pw{A}, \subseteq}$. For example, we may put $\lambda_F \triangleq | \pw{\A} |$ in any case, and put $\lambda_F \triangleq \omega$ whenever $F$ is finitary.
\end{convention}
In this paper, the notation $\lambda$ sometimes is also used in lambda-expressions to define functions. However, no confusion can arise because that one is an ordinal and the other is a standard notation in lambda-expressions. A principal significance of the fundamental lemma is that it brings an interesting connection between the semantics $\varepsilon_{ad}^{\ell mn}$ and $\varepsilon_{co}^{\ell mn}$ whenever $n \geq \ell \geq m$. Such a connection is captured by the following well-known notion.
\begin{definition}[Galois adjunction \cite{Davey02Book}]
	Let $\tuple{C, \leq_C}$ and $\tuple{D, \leq_D}$ be posets and let $f \colon C \to D$ and $g \colon D \to C$ be two monotone functions. These two functions form a Galois adjunction (or, Galois connection), denoted by $f \dashv g$, if, for each $x \in C$ and $y \in D$, $f(x) \leq_D y$ iff $x \leq_C g(y)$.
\end{definition}
In the situation that $n \geq \ell \geq m$, some useful properties are listed below. 
\begin{corollary}\label{corollary:properties_defense}
	Let $F = \tuple{\A, \rightarrow}$ be an AAF and $n \geq \ell \geq m$. 
	\begin{itemize}
		\item[a.] For any ordinal $\xi$, $E \in \varepsilon_{ad}^{\ell mn}(F)$ implies $D_{\substack{m \\ n}}^{\xi}(E) \in \varepsilon_{ad}^{\ell mn}(F)$.
		\item[b.] For any $E \in \varepsilon_{ad}^{\ell mn}(F)$, $D_{\substack{m \\ n}}^{\lambda_F}(E)$ is the least $\ell mn$-complete extension containing $E$.	
		\item[c.] $D_{\substack{m \\ n}}^{\lambda_F} \dashv \textit{incl}$ is a Galois adjunction between the posets $\tuple{\varepsilon_{ad}^{\ell mn}(F), \subseteq}$ and \\$\tuple{\varepsilon_{co}^{\ell mn}(F), \subseteq}$, where $\textit{incl}$ is the embedding function from $\varepsilon_{co}^{\ell mn}(F)$ to $\varepsilon_{ad}^{\ell mn}(F)$ defined as $\textit{incl} \triangleq \lambda_X.X$, i.e., $\textit{incl}(E) \triangleq E$ for each $E \in \varepsilon_{co}^{\ell mn}(F)$.
		\item[d.] $D_{\substack{m \\ n}}^{\lambda_F} \colon \varepsilon_{ad}^{\ell mn}(F) \to \varepsilon_{co}^{\ell mn}(F)$ preserves $\textit{sup}s$ and $\textit{incl} \colon \varepsilon_{co}^{\ell mn}(F) \to \varepsilon_{ad}^{\ell mn}(F)$ preserves $\textit{inf}s$.
	\end{itemize}	
\end{corollary}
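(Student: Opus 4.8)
The plan is to establish the four clauses in sequence, since each leans on its predecessor, drawing chiefly on the Fundamental Lemma (Lemma \ref{lemma:fundamental lemma}), Observation \ref{observation:D_ordinal}, and the least-fixed-point characterization of $D_{\substack{m \\ n}}^{\lambda_F}$ fixed by Convention \ref{convention:lambda_F}. For clause ($a$) I would recall that $\varepsilon_{ad}^{\ell mn}(F) = \varepsilon_{cf}^{\ell}(F) \cap \textit{Def}^{mn}(F)$, so it suffices to check that $D_{\substack{m \\ n}}^{\xi}(E)$ is both $\ell$-conflict-free and $mn$-self-defended. The conflict-freeness $D_{\substack{m \\ n}}^{\xi}(E) \subseteq N_\ell(D_{\substack{m \\ n}}^{\xi}(E))$ is read off directly from Lemma \ref{lemma:fundamental lemma} (which applies since $n \geq \ell \geq m$). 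Self-defense amounts to $D_{\substack{m \\ n}}^{\xi}(E) \subseteq D_n^m(D_{\substack{m \\ n}}^{\xi}(E)) = D_{\substack{m \\ n}}^{\xi+1}(E)$, which follows from Observation \ref{observation:D_ordinal} applied to the post-fixpoint $E \in \textit{Def}^{mn}(F)$ with $\xi_1 = \xi$ and $\xi_2 = \xi+1$.

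For clause ($b$), Convention \ref{convention:lambda_F} gives that $D_{\substack{m \\ n}}^{\lambda_F}(E)$ is the least fixed point of $D_n^m$ over $E$, so it satisfies $D_{\substack{m \\ n}}^{\lambda_F}(E) = D_n^m(D_{\substack{m \\ n}}^{\lambda_F}(E))$ and contains $E$ (the latter also via Lemma \ref{lemma:fundamental lemma}). Clause ($a$) supplies its $\ell$-conflict-freeness, so it is an $\ell mn$-complete extension containing $E$. For leastness I would note that every $\ell mn$-complete extension containing $E$ is in particular a fixed point of $D_n^m$ above $E$; since $D_{\substack{m \\ n}}^{\lambda_F}(E)$ is the least such fixed point, it is contained in all of them. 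For clause ($c$), I would first check both maps are well defined — $D_{\substack{m \\ n}}^{\lambda_F}$ lands in $\varepsilon_{co}^{\ell mn}(F)$ by ($b$), and $\textit{incl}$ is legitimate because every complete extension is admissible — and that both are monotone. Monotonicity of $D_{\substack{m \\ n}}^{\lambda_F}$ can be obtained cheaply from ($b$): if $E_1 \subseteq E_2$, then $D_{\substack{m \\ n}}^{\lambda_F}(E_2)$ is a complete extension containing $E_1$, so the leastness in ($b$) yields $D_{\substack{m \\ n}}^{\lambda_F}(E_1) \subseteq D_{\substack{m \\ n}}^{\lambda_F}(E_2)$. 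The adjunction equivalence then splits cleanly: for $E \in \varepsilon_{ad}^{\ell mn}(F)$ and $E' \in \varepsilon_{co}^{\ell mn}(F)$, the implication $D_{\substack{m \\ n}}^{\lambda_F}(E) \subseteq E' \Rightarrow E \subseteq E'$ follows from $E \subseteq D_{\substack{m \\ n}}^{\lambda_F}(E)$, and the converse $E \subseteq E' \Rightarrow D_{\substack{m \\ n}}^{\lambda_F}(E) \subseteq E'$ is exactly the leastness established in ($b$).

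For clause ($d$), I would invoke the standard order-theoretic fact that in a Galois adjunction the left adjoint preserves all existing suprema and the right adjoint preserves all existing infima, so it follows from ($c$) with essentially no further work. The one-line reminder suffices: if $d$ is an upper bound in $\varepsilon_{co}^{\ell mn}(F)$ of the image $D_{\substack{m \\ n}}^{\lambda_F}(S)$ of a family $S$ whose supremum exists, then by the adjunction $\textit{incl}(d)$ is an upper bound of $S$, hence dominates $\textit{sup}\,S$, and applying the adjunction once more shows $D_{\substack{m \\ n}}^{\lambda_F}(\textit{sup}\,S) \subseteq d$; the dual argument handles $\textit{incl}$ and infima.

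I expect the one genuinely delicate point to be clause ($b$): making sure the \emph{uniform} choice of $\lambda_F$ in Convention \ref{convention:lambda_F} legitimately renders $D_{\substack{m \\ n}}^{\lambda_F}$ a total function on $\varepsilon_{ad}^{\ell mn}(F)$ whose value is simultaneously a fixed point (hence, with ($a$), a complete extension) and least among all complete extensions above $E$, together with the observation that "least fixed point over $E$" dominates the a priori larger class of complete extensions above $E$. Once ($b$) is secured, clauses ($c$) and ($d$) are essentially formal.
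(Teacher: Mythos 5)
Your proof is correct and follows essentially the same route as the paper's: clause ($a$) from Lemma \ref{lemma:fundamental lemma} together with Observation \ref{observation:D_ordinal}, clause ($b$) from Convention \ref{convention:lambda_F} plus the fact that every $\ell mn$-complete extension is a fixed point of $D_n^m$, clause ($c$) by deriving monotonicity and the adjunction equivalence from the leastness in ($b$), and clause ($d$) by the standard fact that lower adjoints preserve suprema and upper adjoints preserve infima. The only difference is that you spell out the routine verifications the paper leaves as ``easy to check,'' which is fine.
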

\begin{proof}
	($a$) Immediately follows from Observation \ref{observation:D_ordinal} and Lemma \ref{lemma:fundamental lemma}.
	\par ($b$) 
	Due to $E \in \varepsilon_{ad}^{\ell mn}(F) \subseteq \textit{Def}^{mn}(F)$, by Convention \ref{convention:lambda_F}, $D_{\substack{m \\ n}}^{\lambda_F}(E)$ is the least fixed point over $E$.	
	Moreover, by ($a$), $D_{\substack{m \\ n}}^{\lambda_F}(E) \in \varepsilon_{co}^{\ell mn}(F)$.
	Then, since each $\ell mn$-complete extension is a fixed point of $D_n^m$, $D_{\substack{m \\ n}}^{\lambda_F}(E)$ is the least $\ell mn$-complete extension containing $E$.
	\par ($c$) By ($b$), $\lambda_{X \in \varepsilon_{ad}^{\ell mn}(F)}.D_{\substack{m \\ n}}^{\lambda_F}(X)$ is a monotone function from $\tuple{\varepsilon_{ad}^{\ell mn}(F), \subseteq}$ to $\tuple{\varepsilon_{co}^{\ell mn}(F), \subseteq}$.
	Moreover, it is easy to check that, for any $E \in \varepsilon_{ad}^{\ell mn}(F)$ and $E' \in \varepsilon_{co}^{\ell mn}(F)$, we have $D_{\substack{m \\ n}}^{\lambda_F}(E) \subseteq E'$ iff $E \subseteq E' = \textit{incl}(E')$.
	\par ($d$) It is an instantiation of the well-known result about Galois adjunctions which asserts that any upper adjoint (e.g., $\textit{incl}$) preserves $\textit{inf}$s, any lower adjoint (e.g., $D_{\substack{m \\ n}}^{\lambda_F}$) preserves $\textit{sup}$s (see, e.g., Theorem O-3.3 in \cite{Gierz03Book}).
\end{proof}
In fact, in the situation that $n \geq \ell \geq m$, by Corollary \ref{corollary:properties_defense}($b$), it is easy to check that $\tuple{\varepsilon_{co}^{\ell mn}(F), \subseteq}$ is isomorphic to $\tuple{\varepsilon_{ad}^{\ell mn}(F) / \!\!\! \sim, \leq}$. Here, $\varepsilon_{ad}^{\ell mn}(F) / \!\!\! \sim$ is the quotient set of $\varepsilon_{ad}^{\ell mn}(F)$ induced by the kernel $\sim$ of the function $D_{\substack{m \\ n}}^{\lambda_F}$, where $\sim$ is the equivalent relation on $\varepsilon_{ad}^{\ell mn}(F)$ defined as $E_1 \sim E_2$ iff $D_{\substack{m \\ n}}^{\lambda_F}(E_1) = D_{\substack{m \\ n}}^{\lambda_F}(E_2)$, and the order $\leq$ is defined as $[E_1] \leq [E_2]$ iff $D_{\substack{m \\ n}}^{\lambda_F}(E_1) \subseteq D_{\substack{m \\ n}}^{\lambda_F}(E_2)$, where $[E]$ is used to denote the equivalence class $\set{X \in \varepsilon_{ad}^{\ell mn}(F) \mid E \sim X}$. Clearly, the relation $\leq$ is well-defined, that is, it doesn't depend on the representatives.
\subsection{Some applications of fundamental lemma}
This subsection will apply the fundamental lemma to deal with the equivalence and universal definability of some extension-based semantics.
\par For the notion of a grounded extension, in addition to the one in Definition \ref{definition:graded_extensions}, there exist two variants worthy of consideration, which are graded versions of the ones defined in the literature:
\begin{enumerate}
	\item[(a)] a graded version of Dung's definition \cite{Dung95Acceptability}: $E \in \varepsilon_{gr, \textit{Dung}}^{\ell mn}(F)$ iff $E$ is the least fixed point of $D_n^m$. 
	\item[(b)] a graded version of Dunne's definition \cite{Dunne15Grounded}: $E \in \varepsilon_{gr, \textit{Dunne}}^{\ell mn}(F)$ iff $E = \bigcap \varepsilon_{co}^{\ell mn}(F)$. 
\end{enumerate}
\par In the standard situation, the semantics $\varepsilon_{gr, \textit{Dung}}^{\ell mn}$,  $\varepsilon_{gr, \textit{Dunne}}^{\ell mn}$ and $\varepsilon_{gr}^{\ell mn}$ are always equivalent to each other. However, they are conditionally equivalent in the graded situation.
\begin{proposition}\label{proposition:structure_gr}
	 If $\ell \geq m$ and $n \geq m$ then $\varepsilon_{gr}^{\ell mn} = \varepsilon_{gr, \textit{Dung}}^{\ell mn} = \varepsilon_{gr, \textit{Dunne}}^{\ell mn}$ w.r.t. all AAFs.
\end{proposition}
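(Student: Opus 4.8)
The plan is to show that all three semantics return, on every AAF $F$, the single set $G \triangleq D_{\substack{m \\ n}}^{\lambda_F}(\emptyset)$. Since $\emptyset \subseteq D_n^m(\emptyset)$ holds trivially, we have $\emptyset \in \textit{Def}^{mn}(F)$, so by Convention \ref{convention:lambda_F} the set $G$ is the least fixed point of $D_n^m$ over $\emptyset$; as $\emptyset$ is the bottom of $\tuple{\pw{\A}, \subseteq}$, this is exactly the least fixed point of $D_n^m$ overall, i.e.\ the unique element of $\varepsilon_{gr, \textit{Dung}}^{\ell mn}(F)$. The whole argument then reduces to proving two things: that $G$ is an $\ell mn$-complete extension, and that $G$ is contained in every $\ell mn$-complete extension.

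First I would establish $G \in \varepsilon_{co}^{\ell mn}(F)$. Being a fixed point, $G = D_n^m(G)$, so by Definition \ref{definition:graded_extensions} it only remains to check $G \subseteq N_\ell(G)$, and this is where the hypotheses enter. The set $\emptyset$ is unconditionally $mmn$-admissible, so the $\ell = m$ instance of Lemma \ref{lemma:fundamental lemma} (which requires only $n \geq m$) applies to the iteration of $D_n^m$ starting at $\emptyset$ and yields $G \subseteq N_m(G)$; that is, $G$ is $m$-conflict-free. Since $\ell \geq m$ forces $N_m(X) \subseteq N_\ell(X)$ for every $X$ (an argument with fewer than $m$ attackers in $X$ a fortiori has fewer than $\ell$ of them), we upgrade this to $G \subseteq N_\ell(G)$, so $G \in \varepsilon_{co}^{\ell mn}(F)$.

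Next I would show $G$ is least among complete extensions, which settles both remaining equalities at once. Every $E \in \varepsilon_{co}^{\ell mn}(F)$ satisfies $E = D_n^m(E)$ and is therefore a fixed point of $D_n^m$; as $G$ is the least fixed point, $G \subseteq E$. Combined with $G \in \varepsilon_{co}^{\ell mn}(F)$, this says $G$ is a complete extension contained in every complete extension, i.e.\ $\varepsilon_{gr}^{\ell mn}(F) = \set{G}$. The same two facts give Dunne's version immediately: $G \subseteq E$ for all complete $E$ yields $G \subseteq \bigcap \varepsilon_{co}^{\ell mn}(F)$, while $G \in \varepsilon_{co}^{\ell mn}(F)$ yields $\bigcap \varepsilon_{co}^{\ell mn}(F) \subseteq G$, so $\bigcap \varepsilon_{co}^{\ell mn}(F) = G$ and $\varepsilon_{gr, \textit{Dunne}}^{\ell mn}(F) = \set{G}$. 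Thus all three semantics coincide with $\set{G}$ on every $F$.

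The only genuine subtlety — and the reason the weaker hypothesis $\ell \geq m$ and $n \geq m$ (rather than $n \geq \ell \geq m$) already suffices — is the conflict-freeness step. Example \ref{Ex:counterexample} shows that $\ell$-conflict-freeness can fail along iterations of $D_n^m$ when $n < \ell$, so one cannot invoke the fundamental lemma directly at level $\ell$. What rescues the argument is that the relevant iteration starts at $\emptyset$, which is $mmn$-admissible regardless of the parameters, allowing the fundamental lemma to be applied at the lower level $m$; the gap between $m$ and $\ell$ is then closed purely by the monotonicity of $N_{\bullet}$ in its parameter. I expect this interplay between starting point and parameter choice to be the main point to get right, the remaining inclusions being routine fixpoint bookkeeping.
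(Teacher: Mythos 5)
Your proof is correct and follows essentially the same route as the paper's: identify $D_{\substack{m \\ n}}^{\lambda_F}(\emptyset)$ as the least fixed point of $D_n^m$, apply the fundamental lemma at level $m$ (using that $\emptyset \in \varepsilon_{ad}^{mmn}(F)$ and $n \geq m$) to get $m$-conflict-freeness, upgrade to $\ell$-conflict-freeness via $\ell \geq m$, and then conclude all three semantics coincide by routine fixpoint reasoning. The paper's proof is just a more compressed version of the same argument, so there is nothing to change.
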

\begin{proof}
	Let $F = \tuple{\A, \rightarrow}$ be an AAF. Since $P \triangleq \tuple{\pw{\A}, \subseteq}$ is a complete lattice and the function $D_n^m$ is a monotone function on $P$, by the Knaster-Tarski fixed point theorem, $D_n^m$ has the least fixed point $\lfp(D_n^m) = D_{\substack{m \\ n}}^{\lambda_F}(\emptyset)$.
	Since $\emptyset \in \varepsilon_{ad}^{mmn}(F)$ and $n \geq m$, by Lemma \ref{lemma:fundamental lemma}, we obtain 
	\begin{eqnarray*}
		\lfp(D_n^m) = D_{\substack{m \\ n}}^{\lambda_F}(\emptyset) \subseteq N_m(D_{\substack{m \\ n}}^{\lambda_F}(\emptyset)) = N_m(\lfp(D_n^m)).
	\end{eqnarray*}
	Consequently, $\lfp(D_n^m) \subseteq N_m(\lfp(D_n^m)) \subseteq N_\ell(\lfp(D_n^m))$ due to $\ell \geq m$, which, together with $\lfp(D_n^m) = D_n^m(\lfp(D_n^m))$, implies $\lfp(D_n^m) \in \varepsilon_{co}^{\ell mn}(F)$. Moreover, since $E = D_n^m(E)$ for each $E \in \varepsilon_{co}^{\ell mn}(F)$, we get $\lfp(D_n^m) = \bigcap \varepsilon_{co}^{\ell mn}(F)$. Finally, by the definition of $\ell mn$-grounded extension (see, Definition \ref{definition:graded_extensions}), we have $\varepsilon_{gr}^{\ell mn}(F) = \set{\lfp(D_n^m)} = \set{\bigcap \varepsilon_{co}^{\ell mn}(F)}$.
\end{proof}
\begin{example}\label{Ex:counterexampleforgr}
	Without the assumption that $\ell \geq m$ and $n \geq m$, the semantics $\varepsilon_{gr, \textit{Dung}}^{\ell mn}$,  $\varepsilon_{gr, \textit{Dunne}}^{\ell mn}$ and $\varepsilon_{gr}^{\ell mn}$ aren't always equivalent.
	\begin{figure}[t]
		\setlength{\abovecaptionskip}{-0.2cm}
		\begin{center}
			\includegraphics[scale=1]{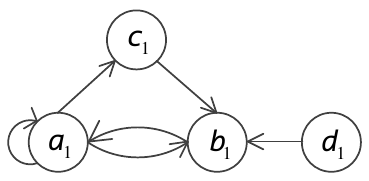}
			\hspace{0.5cm}
			\includegraphics[scale=1]{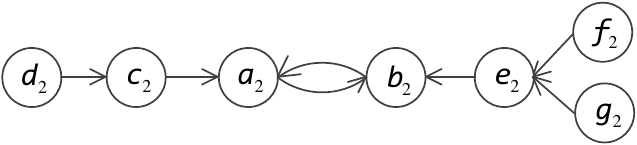}
		\end{center}
		\caption{The AAFs for Example \ref{Ex:counterexampleforgr}. }
		\label{figure:counterexampleforgr}
	\end{figure}
	\par (a) Consider the left AAF $F$ in Figure \ref{figure:counterexampleforgr}.
	For $\ell = 1$ and $m = n = 2$, it is easy to check that 
	\begin{eqnarray*}
		&&\varepsilon_{cf}^{\ell}(F) = \set{\emptyset, \set{b_1}, \set{c_1}, \set{d_1}, \set{c_1, d_1}}, D_n^m(\set{c_1, d_1}) = \set{a_1, c_1, d_1} \text{~and~}\\
		&& D_n^m(\emptyset) = D_n^m(\set{b_1}) = D_n^m(\set{c_1}) = D_n^m(\set{d_1})  = \set{c_1, d_1}.
	\end{eqnarray*}
	Clearly, none in $\varepsilon_{cf}^{\ell}(F)$ is a fixed point of $D_n^m$.
	Hence $\varepsilon_{co}^{\ell mn}(F) = \varepsilon_{gr}^{\ell mn}(F) = \emptyset$. 
	Thus, $\varepsilon_{gr, \textit{Dunne}}^{\ell mn}(F) = \set{\bigcap \varepsilon_{co}^{\ell mn}(F)} = \set{\set{a_1, b_1, c_1, d_1}}$.
	Moreover, since $D_{\substack{m \\ n}}^3(\emptyset) = D_{\substack{m \\ n}}^2(\emptyset) = \set{a_1, c_1, d_1}$, $\varepsilon_{gr, \textit{Dung}}^{\ell mn}(F) = \set{\lfp(D_n^m)} = \set{\set{a_1, c_1, d_1}}$.
	\par (b) Consider the right AAF $F$ in Figure \ref{figure:counterexampleforgr}, which comes from \cite[Example 10]{Grossi19Graded}. For $\ell = m = 2$ and $n = 1$, as shown in \cite[Example 10]{Grossi19Graded}, $\varepsilon_{gr, \textit{Dung}}^{\ell mn}(F) = \set{\lfp(D_n^m)} = \set{\set{a_2, b_2, c_2, d_2, f_2, g_2}}$. Moreover, it is easy to verify that $\varepsilon_{gr}^{\ell mn}(F) = \varepsilon_{co}^{\ell mn}(F) = \emptyset$. Thus, $\varepsilon_{gr, \textit{Dunne}}^{\ell mn}(F) = \set{\bigcap \varepsilon_{co}^{\ell mn}(F)} = \set{\set{a_2, b_2, c_2, d_2, e_2, f_2, g_2}}$.
	\par Consequently, in both situations above, none of these three semantics equals to each other.
\end{example}
The careful reader may find the common phenomenon in the examples ($a$) and ($b$) above, that is, $\varepsilon_{co}^{\ell mn}(F) = \emptyset$. In fact, it is not accidental. Formally, we have the fact below.
\begin{observation}\label{observation:properties_gr}
	Let $F$ be an AAF. Then the following are equivalent:
	\begin{itemize}
		\item[a.] $\varepsilon_{gr}^{\ell mn}(F) = \varepsilon_{gr, \textit{Dung}}^{\ell mn}(F) = \varepsilon_{gr, \textit{Dunne}}^{\ell mn}(F)$.
		\item[b.] $\varepsilon_{co}^{\ell mn}(F) \neq \emptyset$.
		\item[c.] $\varepsilon_{gr}^{\ell mn}(F) = \set{D_{\substack{m \\ n}}^{\lambda_F}(\emptyset)}$.
		\item[d.] $D_{\substack{m \\ n}}^{\lambda_F}(\emptyset) \in \varepsilon_{cf}^{\ell}(F)$.
	\end{itemize}
\end{observation}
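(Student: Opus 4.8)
The plan is to reduce all four conditions to a single pivotal fact: whether the least fixed point $G \triangleq D_{\substack{m \\ n}}^{\lambda_F}(\emptyset)$ of $D_n^m$ is $\ell$-conflict-free. First I would record three observations that hold with no constraint on $\ell$, $m$, $n$. By the Knaster--Tarski theorem (using the monotonicity of $D_n^m$ from Lemma \ref{lemma:Grossi_simple}($b$)) together with Convention \ref{convention:lambda_F}, the set $G$ is exactly $\lfp(D_n^m)$, so $\varepsilon_{gr, \textit{Dung}}^{\ell mn}(F) = \set{G}$ unconditionally. Since every $\ell mn$-complete extension is by definition a fixed point of $D_n^m$, leastness of $G$ gives $G \subseteq E$ for every $E \in \varepsilon_{co}^{\ell mn}(F)$. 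Finally $\varepsilon_{gr, \textit{Dunne}}^{\ell mn}(F) = \set{\bigcap \varepsilon_{co}^{\ell mn}(F)}$ is always a singleton. These three facts let me treat $\varepsilon_{gr, \textit{Dung}}^{\ell mn}(F)$ as the fixed anchor $\set{G}$ and phrase everything in terms of $G$.

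Next I would establish the cycle (b) $\Rightarrow$ (d) $\Rightarrow$ (c) $\Rightarrow$ (b). For (b) $\Rightarrow$ (d), pick any $E \in \varepsilon_{co}^{\ell mn}(F)$; then $E \in \varepsilon_{cf}^{\ell}(F)$ and $G \subseteq E$, so the down-closeness of $\ell$-conflict-freeness (Lemma \ref{lemma:properties_cf_def}($d$)) yields $G \in \varepsilon_{cf}^{\ell}(F)$, which is (d). For (d) $\Rightarrow$ (c), I combine $G \subseteq N_\ell(G)$ with the fixed-point equation $G = D_n^m(G)$ to get $G \in \varepsilon_{co}^{\ell mn}(F)$; since $G$ lies below every fixed point, it is the least complete extension, whence $\varepsilon_{gr}^{\ell mn}(F) = \set{G}$, i.e. (c). For (c) $\Rightarrow$ (b), the definition of the grounded extension forces its unique element to lie in $\varepsilon_{co}^{\ell mn}(F)$, so $\varepsilon_{co}^{\ell mn}(F) \neq \emptyset$.

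It then remains to splice (a) into this cycle. For (c) $\Rightarrow$ (a), I would note that (c) already yields (b) and hence (d) through the cycle, so $G \in \varepsilon_{co}^{\ell mn}(F)$ is the least complete extension; hence $\bigcap \varepsilon_{co}^{\ell mn}(F) = G$ and $\varepsilon_{gr, \textit{Dunne}}^{\ell mn}(F) = \set{G}$, making all three grounded variants equal to $\set{G}$. Conversely, (a) $\Rightarrow$ (b) is immediate: (a) forces $\varepsilon_{gr}^{\ell mn}(F) = \varepsilon_{gr, \textit{Dung}}^{\ell mn}(F) = \set{G} \neq \emptyset$, and a nonempty grounded set requires a complete extension. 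Chaining these implications closes the equivalence of all four conditions.

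I expect the only genuine subtlety --- and the reason this observation is worth stating separately from Proposition \ref{proposition:structure_gr} --- to be the absence of the hypotheses $\ell \geq m$ and $n \geq m$. Under those hypotheses the fundamental lemma (Lemma \ref{lemma:fundamental lemma}) guarantees outright that $\lfp(D_n^m)$ is $\ell$-conflict-free; here no such guarantee is available, so the conflict-freeness of $G$ becomes a genuine dividing line rather than a theorem, and every equivalence ultimately passes through transferring conflict-freeness \emph{downward} from an arbitrary complete extension to $G$ via Lemma \ref{lemma:properties_cf_def}($d$). The one bookkeeping point to keep in mind is that $\bigcap \emptyset = \A$, so that Dunne's variant is a singleton even when $\varepsilon_{co}^{\ell mn}(F) = \emptyset$, matching the computation in Example \ref{Ex:counterexampleforgr}.
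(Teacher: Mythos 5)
Your proposal is correct and follows essentially the same route as the paper: both hinge on the facts that $D_{\substack{m \\ n}}^{\lambda_F}(\emptyset)$ is the least fixed point of $D_n^m$ (hence contained in every $\ell mn$-complete extension) and that down-closeness of $\ell$-conflict-freeness (Lemma \ref{lemma:properties_cf_def}($d$)) transfers conflict-freeness from any complete extension down to it; your cycle $(b)\Rightarrow(d)\Rightarrow(c)$ just makes explicit the intermediate step the paper folds into its proof of $(b)\Rightarrow(c)$. The remaining implications are handled trivially in both, so the two proofs coincide in substance.
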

\begin{proof}
	It is trivial that ($a \Rightarrow b$) and ($c \Rightarrow d \Rightarrow a$).
	For ($b \Rightarrow c$), assume that $E \in \varepsilon_{co}^{\ell mn}(F)$.
	Thus, $D_{\substack{m \\ n}}^{\lambda_F}(\emptyset) \subseteq D_{\substack{m \\ n}}^{\lambda_F}(E) = E$, and hence, by Lemma \ref{lemma:properties_cf_def}($d$), $D_{\substack{m \\ n}}^{\lambda_F}(\emptyset) \in \varepsilon_{cf}^{\ell}(F)$ due to $E \in \varepsilon_{cf}^{\ell}(F)$.
	Then, it is straightforward to see that $\varepsilon_{gr}^{\ell mn}(F) = \set{D_{\substack{m \\ n}}^{\lambda_F}(\emptyset)}$.
\end{proof}
Let us mention some important consequences of Proposition \ref{proposition:structure_gr}. First, for the clauses ($a$) and ($b$) in Corollary \ref{corollary:properties_defense}, the assumption $n \geq \ell \geq m$ may be weakened to $\ell \geq m$ and $n \geq m$ whenever $E \subseteq D_{\substack{m \\ n}}^{\lambda_F}(\emptyset)$, that is
\begin{corollary}\label{corollary:construction_co}
	Let $F$ be any AAF, $\ell \geq m$ and $n \geq m$. For any $E \in \varepsilon_{ad}^{\ell mn}(F)$, if $E \subseteq D_{\substack{m \\ n}}^{\lambda_F}(\emptyset)$ then $D_{\substack{m \\ n}}^{\xi}(E) \in \varepsilon_{ad}^{\ell mn}(F)$ for each ordinal $\xi$, and hence $D_{\substack{m \\ n}}^{\lambda_F}(E)$ is the least $\ell mn$-complete extension containing $E$, in fact, $D_{\substack{m \\ n}}^{\lambda_F}(E) = D_{\substack{m \\ n}}^{\lambda_F}(\emptyset)$ is the least $\ell mn$-complete extension of $F$.
\end{corollary}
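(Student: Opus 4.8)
The plan is to reduce everything to a fact already isolated inside the proof of Proposition~\ref{proposition:structure_gr}: under the hypotheses $\ell \geq m$ and $n \geq m$, the least fixed point $D_{\substack{m \\ n}}^{\lambda_F}(\emptyset) = \lfp(D_n^m)$ is an $\ell mn$-complete extension, and in particular lies in $\varepsilon_{cf}^{\ell}(F)$. This is exactly where the $\ell = m$ instance of Lemma~\ref{lemma:fundamental lemma} does the work: applied to the trivially admissible set $\emptyset \in \varepsilon_{ad}^{mmn}(F)$ it yields $D_{\substack{m \\ n}}^{\lambda_F}(\emptyset) \subseteq N_m(D_{\substack{m \\ n}}^{\lambda_F}(\emptyset))$ whenever $n \geq m$, and then $\ell \geq m$ upgrades $m$-conflict-freeness to $\ell$-conflict-freeness since $N_m(X) \subseteq N_\ell(X)$. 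The key point is that we never invoke the stronger hypothesis $n \geq \ell$ that Corollary~\ref{corollary:properties_defense} would demand; the containment hypothesis $E \subseteq D_{\substack{m \\ n}}^{\lambda_F}(\emptyset)$ will let us transport conflict-freeness \emph{downward} rather than reprove it along the iteration that starts at $E$ itself.

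First I would record two routine monotonicity facts about the transfinite iterate, both by transfinite induction on $\xi$ using Lemma~\ref{lemma:Grossi_simple}(b): (i) the map $X \mapsto D_{\substack{m \\ n}}^{\xi}(X)$ is monotone; and (ii) every fixed point $Z$ of $D_n^m$ satisfies $D_{\substack{m \\ n}}^{\xi}(Z) = Z$ for all $\xi$. Combining these with $E \subseteq D_{\substack{m \\ n}}^{\lambda_F}(\emptyset)$ and the fact that $D_{\substack{m \\ n}}^{\lambda_F}(\emptyset)$ is a fixed point gives, for every ordinal $\xi$,
\[
D_{\substack{m \\ n}}^{\xi}(E) \subseteq D_{\substack{m \\ n}}^{\xi}(D_{\substack{m \\ n}}^{\lambda_F}(\emptyset)) = D_{\substack{m \\ n}}^{\lambda_F}(\emptyset).
\]
Since $D_{\substack{m \\ n}}^{\lambda_F}(\emptyset) \in \varepsilon_{cf}^{\ell}(F)$, the down-closeness of $\ell$-conflict-freeness (Lemma~\ref{lemma:properties_cf_def}(d)) immediately yields $D_{\substack{m \\ n}}^{\xi}(E) \in \varepsilon_{cf}^{\ell}(F)$. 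On the other hand $E \in \varepsilon_{ad}^{\ell mn}(F) \subseteq \textit{Def}^{mn}(F)$, so Observation~\ref{observation:D_ordinal} shows each iterate is $mn$-self-defended, i.e. $D_{\substack{m \\ n}}^{\xi}(E) \in \textit{Def}^{mn}(F)$. The two together give $D_{\substack{m \\ n}}^{\xi}(E) \in \varepsilon_{ad}^{\ell mn}(F)$, which is the first assertion.

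For the remaining claims I would specialize to $\xi = \lambda_F$. By Convention~\ref{convention:lambda_F}, $D_{\substack{m \\ n}}^{\lambda_F}(E)$ is a fixed point of $D_n^m$, and by the first assertion it is $\ell mn$-admissible, hence $\ell mn$-complete; being the least fixed point over $E$, it lies below every fixed point containing $E$, and since each $\ell mn$-complete extension is such a fixed point, it is the least $\ell mn$-complete extension containing $E$. Finally, monotonicity gives $D_{\substack{m \\ n}}^{\lambda_F}(\emptyset) \subseteq D_{\substack{m \\ n}}^{\lambda_F}(E)$, whereas the displayed inclusion above (with $\xi = \lambda_F$) supplies the reverse containment, so $D_{\substack{m \\ n}}^{\lambda_F}(E) = D_{\substack{m \\ n}}^{\lambda_F}(\emptyset) = \lfp(D_n^m)$, which is the least fixed point of $D_n^m$ and therefore the least $\ell mn$-complete extension of $F$. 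The only genuine subtlety, and the step I would flag as the crux, is the opening reduction: recognizing that $\ell$-conflict-freeness of the entire orbit of $E$ can be secured simply by confining that orbit inside the already $\ell$-conflict-free least fixed point, thereby sidestepping the failure of the fundamental lemma at parameter $\ell$ when $n < \ell$.
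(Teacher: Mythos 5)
Your proposal is correct and follows essentially the same route as the paper's own proof: confine the orbit $D_{\substack{m \\ n}}^{\xi}(E) \subseteq D_{\substack{m \\ n}}^{\lambda_F}(\emptyset)$ via monotonicity and the fixed-point property, invoke Proposition~\ref{proposition:structure_gr} for $\ell$-conflict-freeness of the least fixed point, transfer it downward by Lemma~\ref{lemma:properties_cf_def}($d$), and get self-defense from Observation~\ref{observation:D_ordinal}. The paper's proof is just a terser version of yours, leaving the monotonicity bookkeeping and the concluding ``least complete extension'' claims implicit.
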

\begin{proof}
	By Observation \ref{observation:D_ordinal}, $D_{\substack{m \\ n}}^{\xi}(E) \in \textit{Def}^{mn}(F)$.
	Moreover, $D_{\substack{m \\ n}}^{\xi}(E) \subseteq D_{\substack{m \\ n}}^{\lambda_F}(\emptyset)$ due to $E \subseteq D_{\substack{m \\ n}}^{\lambda_F}(\emptyset)$.
	Thus, by Proposition \ref{proposition:structure_gr} and Lemma \ref{lemma:properties_cf_def}($d$), $D_{\substack{m \\ n}}^{\xi}(E) \in \varepsilon_{cf}^{\ell}(F)$, and hence $D_{\substack{m \\ n}}^{\xi}(E) \in \varepsilon_{ad}^{\ell mn}(F)$, as desired.
\end{proof}
Second, since the least fixed point of $D_n^m$ always exists, Proposition \ref{proposition:structure_gr} also asserts the universal definability of $\varepsilon_{gr}^{\ell mn}$ and $\varepsilon_{co}^{\ell mn}$, that is
\begin{corollary} \label{corollary:existence_gr_co}
	For any AAF $F$, if $\ell \geq m$ and $n \geq m$ then $\varepsilon_{gr}^{\ell mn}(F) = \set{D_{\substack{m \\ n}}^{\lambda_F}(\emptyset)}$ and $|\varepsilon_{co}^{\ell mn}(F)| \geq 1$.
\end{corollary}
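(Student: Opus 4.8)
The plan is to read off both assertions directly from Proposition \ref{proposition:structure_gr} together with the defining properties of the grounded semantics, so that essentially no new argument is needed. First I would recall that $\tuple{\pw{\A}, \subseteq}$ is a complete lattice and that $D_n^m$ is monotone on it by Lemma \ref{lemma:Grossi_simple}(b); hence the Knaster--Tarski fixed point theorem guarantees that $D_n^m$ possesses a least fixed point, which by Convention \ref{convention:lambda_F} is exactly $D_{\substack{m \\ n}}^{\lambda_F}(\emptyset)$. By the very definition of the Dung-style grounded semantics, this says $\varepsilon_{gr, \textit{Dung}}^{\ell mn}(F) = \set{D_{\substack{m \\ n}}^{\lambda_F}(\emptyset)}$, and note that this identity holds with no constraint on the parameters, since the existence and uniqueness of the least fixed point are purely order-theoretic facts.

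Next, under the standing hypotheses $\ell \geq m$ and $n \geq m$, Proposition \ref{proposition:structure_gr} asserts $\varepsilon_{gr}^{\ell mn} = \varepsilon_{gr, \textit{Dung}}^{\ell mn}$ with respect to all AAFs. Combining this equality with the identification of $\varepsilon_{gr, \textit{Dung}}^{\ell mn}(F)$ from the previous paragraph yields the first claim $\varepsilon_{gr}^{\ell mn}(F) = \set{D_{\substack{m \\ n}}^{\lambda_F}(\emptyset)}$ immediately. In particular this shows the $\ell mn$-grounded semantics is both universally and uniquely defined on $F$ under these constraints.

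For the bound $|\varepsilon_{co}^{\ell mn}(F)| \geq 1$, I would appeal to the definition of the $\ell mn$-grounded extension in Definition \ref{definition:graded_extensions}: every element of $\varepsilon_{gr}^{\ell mn}(F)$ is, by definition, an element of $\varepsilon_{co}^{\ell mn}(F)$, so $\varepsilon_{gr}^{\ell mn}(F) \subseteq \varepsilon_{co}^{\ell mn}(F)$. Since the first claim exhibits $\varepsilon_{gr}^{\ell mn}(F)$ as a nonempty (indeed singleton) set, it follows at once that $\varepsilon_{co}^{\ell mn}(F) \neq \emptyset$, which is the desired second claim.

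Since the substantive content---verifying that the least fixed point $D_{\substack{m \\ n}}^{\lambda_F}(\emptyset)$ is $\ell$-conflict-free and hence a genuine $\ell mn$-complete extension---has already been carried out inside the proof of Proposition \ref{proposition:structure_gr} via the fundamental lemma, there is no real obstacle at this stage; the corollary is a bookkeeping consequence of results already in hand. The only point deserving a moment of care is recognising that $\varepsilon_{gr, \textit{Dung}}^{\ell mn}(F)$ is a singleton rather than merely nonempty, and this rests precisely on the \emph{uniqueness} of the least fixed point supplied by the Knaster--Tarski theorem.
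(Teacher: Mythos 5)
Your proposal is correct and follows essentially the same route as the paper, which presents this corollary as an immediate consequence of Proposition \ref{proposition:structure_gr}: the least fixed point $D_{\substack{m \\ n}}^{\lambda_F}(\emptyset)$ always exists by Knaster--Tarski, Proposition \ref{proposition:structure_gr} identifies it (under $\ell \geq m$, $n \geq m$) with the unique $\ell mn$-grounded extension, and membership of grounded extensions in $\varepsilon_{co}^{\ell mn}(F)$ (Definition \ref{definition:graded_extensions}) gives $|\varepsilon_{co}^{\ell mn}(F)| \geq 1$. Your only cosmetic deviation is invoking the statement of the proposition (the equality $\varepsilon_{gr}^{\ell mn} = \varepsilon_{gr,\textit{Dung}}^{\ell mn}$) rather than the fact, established inside its proof, that $\lfp(D_n^m) \in \varepsilon_{co}^{\ell mn}(F)$; both yield the corollary without any gap.
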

Notice that the requirement $\ell \geq m$ and $n \geq m$ in Corollary \ref{corollary:existence_gr_co} is necessary, otherwise, $\varepsilon_{co}^{\ell mn}$ isn't universally defined even for finite AAFs, see Example \ref{Ex:counterexampleforgr}.
\begin{corollary} \label{corollary:structure_co}
	For any finitary AAF $F$, let $P \triangleq \tuple{\varepsilon_{co}^{\ell mn}(F), \subseteq}$. If $\varepsilon_{co}^{\ell mn}(F) \neq \emptyset$ then $P$ is a cpo with $\textit{sup}\D = \bigcup \D$ for any directed set $\D \subseteq \varepsilon_{co}^{\ell mn}(F)$. In particular, $P$ is a cpo whenever $\ell \geq m$ and $n \geq m$.
\end{corollary}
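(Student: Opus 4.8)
The plan is to verify directly the two defining properties of a cpo for $P = \tuple{\varepsilon_{co}^{\ell mn}(F), \subseteq}$: that it possesses a bottom element, and that every directed subset has a supremum computed as a union. The hypothesis $\varepsilon_{co}^{\ell mn}(F) \neq \emptyset$ will supply the bottom via the grounded extension, while the finitariness of $F$ (hence continuity of $D_n^m$) will be the engine for the directed suprema.

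First I would produce the bottom. Since $\varepsilon_{co}^{\ell mn}(F) \neq \emptyset$, the equivalence of clauses ($b$) and ($c$) in Observation \ref{observation:properties_gr} yields $\varepsilon_{gr}^{\ell mn}(F) = \set{D_{\substack{m \\ n}}^{\lambda_F}(\emptyset)}$. By the definition of the $\ell mn$-grounded extension (Definition \ref{definition:graded_extensions}), $D_{\substack{m \\ n}}^{\lambda_F}(\emptyset)$ is then a complete extension contained in every complete extension, so it is precisely the bottom element of $P$.

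Next I would handle directed suprema. Fix a directed set $\D \subseteq \varepsilon_{co}^{\ell mn}(F)$. As $\D$ consists of $\ell$-conflict-free sets, Lemma \ref{lemma:properties_cf_def}($c$) gives $\bigcup \D \in \varepsilon_{cf}^{\ell}(F)$, so it remains only to show that $\bigcup \D$ is a fixed point of $D_n^m$. The inclusion $\bigcup \D \subseteq D_n^m(\bigcup \D)$ is immediate from monotonicity (Lemma \ref{lemma:Grossi_simple}($b$)), since each $E \in \D$ satisfies $E = D_n^m(E) \subseteq D_n^m(\bigcup \D)$. For the reverse inclusion I would invoke continuity: because $F$ is finitary, by Remark \ref{remark:continuous} the defining equation for $D_n^m$ holds on arbitrary directed subsets of $\pw{\A}$, whence $D_n^m(\bigcup \D) = \bigcup_{E \in \D} D_n^m(E) = \bigcup_{E \in \D} E = \bigcup \D$. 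Thus $\bigcup \D = D_n^m(\bigcup \D)$, so $\bigcup \D \in \varepsilon_{co}^{\ell mn}(F) = P$; and since the union is already the least upper bound in $\tuple{\pw{\A}, \subseteq}$ and lies in $P$, it is the supremum of $\D$ in $P$. Together with the bottom, this shows $P$ is a cpo with $\textit{sup}\D = \bigcup \D$.

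The ``in particular'' clause then follows at once: when $\ell \geq m$ and $n \geq m$, Corollary \ref{corollary:existence_gr_co} gives $|\varepsilon_{co}^{\ell mn}(F)| \geq 1$, so $\varepsilon_{co}^{\ell mn}(F) \neq \emptyset$ and the first part applies. The hard part will be the reverse inclusion $D_n^m(\bigcup \D) \subseteq \bigcup \D$: this is exactly where finitariness is indispensable, since without continuity a directed union of fixed points of $D_n^m$ need not again be a fixed point, and the argument would break down for general (non-finitary) AAFs.
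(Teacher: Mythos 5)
Your proof is correct and takes essentially the same route as the paper's: the bottom element comes from Observation \ref{observation:properties_gr} (the paper sets $\lambda_F = \omega$ using continuity and cites that observation directly), and directed suprema are unions because conflict-freeness of $\bigcup \D$ follows from Lemma \ref{lemma:properties_cf_def}($c$) while the fixed-point property follows from continuity of $D_n^m$. The only cosmetic differences are that you route the continuity step through Remark \ref{remark:continuous} rather than citing Lemma \ref{lemma:Grossi_simple}($d$) directly, and that you make the ``in particular'' clause explicit via Corollary \ref{corollary:existence_gr_co}, which the paper leaves implicit.
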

\begin{proof}
	Since $F$ is finitary, by Lemma \ref{lemma:Grossi_simple}($d$), the function $D_n^m$ is continuous.
	Thus, we may set $\lambda_F = \omega$.
	Then, by Observation \ref{observation:properties_gr}, $D_{\substack{m \\ n}}^{\omega}(\emptyset) \in \varepsilon_{co}^{\ell mn}(F)$ is the bottom of $\varepsilon_{co}^{\ell mn}(F)$.
	For any directed set $\D \subseteq \varepsilon_{co}^{\ell mn}(F)$, by Lemma \ref{lemma:Grossi_simple}($d$), we get
	\begin{eqnarray*}
		\bigcup \D = \bigcup_{X \in \D} D_n^m(X) = D_n^m(\bigcup \D).
	\end{eqnarray*}
	Moreover, by Lemma \ref{lemma:properties_cf_def}($c$), it holds that $\bigcup \D \in \varepsilon_{cf}^{\ell}(F)$.
	Thus, $\bigcup \D \in \varepsilon_{co}^{\ell mn}(F)$, and hence $\bigcup \D$ is the supremum of $\D$ in $P$.
	Consequently, $P$ is a cpo, as desired.
\end{proof}
Clearly, the proof above depends on the continuity of the function $D_n^m$. At the moment, we don't know whether Corollary \ref{corollary:structure_co} holds for all AAFs. But, under the assumption that $n \geq \ell \geq m$, for any AAF $F$, since the functions $D_{\substack{m \\ n}}^{\lambda_F}$ and $\textit{incl}$ form a Galois adjunction between the posets $\tuple{\varepsilon_{ad}^{\ell mn}(F), \subseteq}$ and $\tuple{\varepsilon_{co}^{\ell mn}(F), \subseteq}$ (see, Corollary \ref{corollary:properties_defense}), similar to the former, the latter is also a complete semilattice. In the situation that $\ell = m = n = 1$, Dung has obtained a similar result in \cite[Theorem 25]{Dung95Acceptability}.
\begin{proposition} \label{proposition:structure_co}
	Let $F$ be an AAF, $P \triangleq \tuple{\varepsilon_{co}^{\ell mn}(F), \subseteq}$ and $n \geq \ell \geq m$. Then $P$ is a complete semilattice with
	\begin{itemize}
		\item[a.] $\textit{inf}S = \bigcup (\bigcap S)^\downarrow$ for any nonempty set $S \subseteq \varepsilon_{co}^{\ell mn}(F)$ in $P$, where \\
		$(\bigcap S)^\downarrow \triangleq \set{X \in \varepsilon_{ad}^{\ell mn}(F) \mid X \subseteq \bigcap S}$.
		\item[b.] For any directed set $\D \subseteq \varepsilon_{co}^{\ell mn}(F)$, $\textit{sup}\D = D_{\substack{m \\ n}}^{\lambda_F}(\bigcup \D)$, in particular, $\textit{sup}\D = \bigcup \D$ whenever $F$ is finitary.
	\end{itemize}
	Hence, $\tuple{E^\downarrow, \subseteq}$ is a complete lattice for each $E \in \varepsilon_{co}^{\ell mn}(F)$, where $E^\downarrow$ is the set of all $\ell mn$-complete extensions of $F$ that are contained in $E$.
\end{proposition}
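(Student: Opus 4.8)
The plan is to exploit the Galois adjunction $D_{\substack{m \\ n}}^{\lambda_F} \dashv \textit{incl}$ of Corollary \ref{corollary:properties_defense} together with the fact, recorded in Corollary \ref{corollary:structure_cf_ad}($c$), that $\tuple{\varepsilon_{ad}^{\ell mn}(F), \subseteq}$ is already a complete semilattice in which directed suprema are unions and the infimum of a nonempty set $S$ has the displayed form $\bigcup(\bigcap S)^\downarrow$. Since every $\ell mn$-complete extension is $\ell mn$-admissible, $P$ is a sub-poset of $\tuple{\varepsilon_{ad}^{\ell mn}(F), \subseteq}$, and the idea is to transport the semilattice structure across the adjunction: the lower adjoint $D_{\substack{m \\ n}}^{\lambda_F}$ supplies directed suprema (it preserves $\textit{sup}$s by Corollary \ref{corollary:properties_defense}($d$)), while inf-preservation of the upper adjoint $\textit{incl}$ yields the infima. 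I would show that $P$ is a cpo with all nonempty infima by treating the bottom, the directed suprema, and the infima in turn.

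For the cpo part, the bottom of $P$ is $D_{\substack{m \\ n}}^{\lambda_F}(\emptyset)$: since $n \geq \ell \geq m$ forces $\ell \geq m$ and $n \geq m$, Corollary \ref{corollary:existence_gr_co} gives $\varepsilon_{gr}^{\ell mn}(F) = \set{D_{\substack{m \\ n}}^{\lambda_F}(\emptyset)}$, and this grounded extension is contained in every $\ell mn$-complete extension. For a directed $\D \subseteq \varepsilon_{co}^{\ell mn}(F)$, note that $\textit{sup}\D = \bigcup\D$ when computed in $\varepsilon_{ad}^{\ell mn}(F)$; applying the sup-preserving map $D_{\substack{m \\ n}}^{\lambda_F}$ and using $D_{\substack{m \\ n}}^{\lambda_F}(E) = E$ for each $E \in \D$ (every complete extension is a fixed point of $D_n^m$), so that $D_{\substack{m \\ n}}^{\lambda_F}[\D] = \D$, I would obtain $\textit{sup}_P \D = D_{\substack{m \\ n}}^{\lambda_F}(\bigcup\D)$, which is part ($b$). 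When $F$ is finitary one may take $\lambda_F = \omega$ and invoke continuity of $D_n^m$ (Lemma \ref{lemma:Grossi_simple}($d$)) to see that $\bigcup\D$ is itself a fixed point, hence $D_{\substack{m \\ n}}^{\lambda_F}(\bigcup\D) = \bigcup\D$, recovering Corollary \ref{corollary:structure_co}.

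The substance lies in part ($a$). Given a nonempty $S \subseteq \varepsilon_{co}^{\ell mn}(F) \subseteq \varepsilon_{ad}^{\ell mn}(F)$, set $E_0 \triangleq \bigcup(\bigcap S)^\downarrow$, which by Corollary \ref{corollary:structure_cf_ad}($c$) is the infimum of $S$ in $\varepsilon_{ad}^{\ell mn}(F)$. The crux is to show $E_0 \in \varepsilon_{co}^{\ell mn}(F)$, i.e.\ that $E_0$ is a fixed point of $D_n^m$. I would argue that $D_{\substack{m \\ n}}^{\lambda_F}(E_0)$ is again a lower bound of $S$ in $\varepsilon_{ad}^{\ell mn}(F)$: for each $E \in S$ we have $E_0 \subseteq E$, whence $D_{\substack{m \\ n}}^{\lambda_F}(E_0) \subseteq D_{\substack{m \\ n}}^{\lambda_F}(E) = E$ by monotonicity and $E$ being a fixed point. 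Since $E_0$ is the greatest lower bound in $\varepsilon_{ad}^{\ell mn}(F)$, this forces $D_{\substack{m \\ n}}^{\lambda_F}(E_0) \subseteq E_0$; together with the extensivity $E_0 \subseteq D_{\substack{m \\ n}}^{\lambda_F}(E_0)$ coming from Corollary \ref{corollary:properties_defense}($b$), this gives $D_{\substack{m \\ n}}^{\lambda_F}(E_0) = E_0$, so $E_0 \in \varepsilon_{co}^{\ell mn}(F)$. As $\varepsilon_{co}^{\ell mn}(F) \subseteq \varepsilon_{ad}^{\ell mn}(F)$, any lower bound of $S$ inside $P$ is also one in $\varepsilon_{ad}^{\ell mn}(F)$ and hence below $E_0$, so $E_0 = \textit{inf}_P S$. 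This step, promoting the admissible infimum to a complete extension, is the main obstacle; everything else is bookkeeping with the adjunction.

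Finally, for a fixed $E \in \varepsilon_{co}^{\ell mn}(F)$, the restriction of the above to $E^\downarrow$ goes through verbatim: the bottom $D_{\substack{m \\ n}}^{\lambda_F}(\emptyset)$ lies in $E^\downarrow$, and both the directed suprema $D_{\substack{m \\ n}}^{\lambda_F}(\bigcup\D)$ and the infima $\bigcup(\bigcap S)^\downarrow$ computed above stay $\subseteq E$ by monotonicity, so $E^\downarrow$ is again a complete semilattice. Since it additionally has the top element $E$, it is a complete lattice by the remark that a complete semilattice with a top is a complete lattice.
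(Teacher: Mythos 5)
Your proof is correct, and for the crux of part ($a$) it takes a genuinely different route from the paper. The scaffolding elsewhere coincides: the paper also obtains the bottom $D_{\substack{m \\ n}}^{\lambda_F}(\emptyset)$ from Proposition \ref{proposition:structure_gr}, proves part ($b$) exactly as you do via Corollaries \ref{corollary:structure_cf_ad}($c$) and \ref{corollary:properties_defense}($d$) together with $D_{\substack{m \\ n}}^{\lambda_F}(X) = X$ for $X \in \D$, and gets the final complete-lattice claim from the top-plus-semilattice remark. The difference is in how $E_0 = \bigcup(\bigcap S)^\downarrow$ is promoted from the admissible infimum to a complete extension. The paper considers $\textit{Q} = \set{Y \in \varepsilon_{ad}^{\ell mn}(F) \mid Y \subseteq E_0}$, which is a complete lattice by Corollary \ref{corollary:structure_cf_ad}($d$), proves that $\textit{Q}$ is closed under $D_n^m$ (using $D_n^m(Y) \subseteq D_n^m(X) = X$ for $X \in S$, plus Corollary \ref{corollary:properties_defense}($a$)), and then applies the Knaster--Tarski theorem inside $\textit{Q}$: since every element of $\textit{Q}$ is a post-fixed point of $D_n^m$, the greatest fixed point in $\textit{Q}$ is the top $E_0$, so $E_0 = D_n^m(E_0)$. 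You instead apply the iteration operator once: $D_{\substack{m \\ n}}^{\lambda_F}(E_0)$ is a complete extension containing $E_0$ (Corollary \ref{corollary:properties_defense}($b$), available since $n \geq \ell \geq m$), and it is a lower bound of $S$ because $D_{\substack{m \\ n}}^{\lambda_F}(E_0) \subseteq D_{\substack{m \\ n}}^{\lambda_F}(E) = E$ for each $E \in S$ (precisely the adjunction inequality of Corollary \ref{corollary:properties_defense}($c$)); the greatest-lower-bound property of $E_0$ in $\varepsilon_{ad}^{\ell mn}(F)$ then forces $D_{\substack{m \\ n}}^{\lambda_F}(E_0) \subseteq E_0$, and extensivity gives equality. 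Both arguments hinge on the same two facts (complete extensions are fixed points of $D_n^m$, and admissibility is preserved by defense when $n \geq \ell \geq m$), but yours trades the Knaster--Tarski application for a one-step use of the Galois adjunction, which is shorter; the paper's version makes the sublattice $\textit{Q}$ explicit, which it reuses in the remark following the proposition to explain why the same promotion fails when $S$ consists of merely admissible sets. In your formulation the corresponding obstruction is equally visible: the inequality $D_{\substack{m \\ n}}^{\lambda_F}(E_0) \subseteq E$ requires $D_{\substack{m \\ n}}^{\lambda_F}(E) = E$, which merely admissible $E$ need not satisfy.
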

\begin{proof}
	By Proposition \ref{proposition:structure_gr}, $D_{\substack{m \\ n}}^{\lambda_F}(\emptyset)$ is the bottom in $P$.
	To complete the proof, we show ($a$) and ($b$) in turn.
	\par ($a$) Since $\emptyset \neq S \subseteq \varepsilon_{co}^{\ell mn}(F) \subseteq \varepsilon_{ad}^{\ell mn}(F)$, by Corollary \ref{corollary:structure_cf_ad}($c$), $\bigcup (\bigcap S)^\downarrow \in \varepsilon_{ad}^{\ell mn}(F)$.
	Then, by Corollary \ref{corollary:structure_cf_ad}($d$), the poset $\tuple{\textit{Q}, \subseteq}$ is a complete lattice, where
	\begin{eqnarray*}
		\textit{Q} \triangleq \set{Y \in \varepsilon_{ad}^{\ell mn}(F) \mid Y \subseteq \bigcup (\bigcap S)^\downarrow}.
	\end{eqnarray*}
	Clearly, $\bigcup (\bigcap S)^\downarrow$ is the top in $\textit{Q}$. Next we give a simple fact about the set $\textit{Q}$.
	\\
	\\ \textbf{Claim~~} The set $\textit{Q}$ is closed under the function $D_n^m$, that is, $D_n^m(Y) \in \textit{Q}$ for each $Y \in \textit{Q}$.\\
	Let $Y \in \textit{Q}$.
	For each $X \in S$, we have $Y \subseteq X$, and hence $D_n^m(Y) \subseteq D_n^m(X) = X$ due to $X \in \varepsilon_{co}^{\ell mn}(F)$.
	Thus, $D_n^m(Y) \subseteq \bigcap S$.
	Moreover, by Corollary \ref{corollary:properties_defense}($a$), $D_n^m(Y) \in \varepsilon_{ad}^{\ell mn}(F)$, and hence $D_n^m(Y) \in \textit{Q}$.\\
	\\
	By the claim above, the function $D_n^m$ may be regarded as an endofunction on the complete lattice $\tuple{\textit{Q}, \subseteq}$.
	Further, since the function $D_n^m$ is monotone, by the Knaster-Tarski fixed point theorem, $D_n^m$ has the largest fixed point $\nu^{\scalebox{0.5}[0.5]{\textit{Q}}} D_n^m$ in $\tuple{\textit{Q}, \subseteq}$, which is obtained as
	\begin{eqnarray*}
		\nu^{\scalebox{0.5}[0.5]{\textit{Q}}} D_n^m = \textit{sup}_{\tuple{\textit{Q}, \subseteq}} \set{X \in \textit{Q} \mid X \subseteq D_n^m(X)}.
	\end{eqnarray*}
	Thus, $\nu^{\scalebox{0.5}[0.5]{\textit{Q}}} D_n^m = \textit{sup}_{\tuple{\textit{Q}, \subseteq}} \textit{Q}$ due to $\textit{Q} \subseteq \varepsilon_{ad}^{\ell mn}(F)$.
	Further, since $\bigcup (\bigcap S)^\downarrow$ is the top in $\textit{Q}$, we get $\nu^{\scalebox{0.5}[0.5]{\textit{Q}}} D_n^m = \bigcup (\bigcap S)^\downarrow$.
	Thus, $D_n^m(\bigcup (\bigcap S)^\downarrow) = \bigcup (\bigcap S)^\downarrow$, and hence $\bigcup (\bigcap S)^\downarrow \in \varepsilon_{co}^{\ell mn}(F)$.
	Moreover, it is easy to see that $\bigcup (\bigcap S)^\downarrow \subseteq X$ for each $X \in S$, and hence $\bigcup (\bigcap S)^\downarrow$ is a lower bound of $S$ in $P$.
	Finally, since $\textit{Q}$ contains all lower bounds of $S$ in the poset $P$, we have $\textit{inf}S = \bigcup (\bigcap S)^\downarrow$.
	\par ($b$) Since $\D$ is a directed subset of $\varepsilon_{co}^{\ell mn}(F)$, it is also a directed subset of $\varepsilon_{ad}^{\ell mn}(F)$.
	Then, by Corollary \ref{corollary:structure_cf_ad}($c$), $\bigcup \D$ is the supremum of $\D$ in $\tuple{\varepsilon_{ad}^{\ell mn}(F), \subseteq}$.
	By Corollary \ref{corollary:properties_defense}($d$), $D_{\substack{m \\ n}}^{\lambda_F}(\bigcup \D)$ is the supremum of the set $\set{D_{\substack{m \\ n}}^{\lambda_F}(X) \mid X \in \D}$ in $P$.
	Then, due to $D_{\substack{m \\ n}}^{\lambda_F}(X) = X$ for each $X \in \D$, we get $\textit{sup}\D = D_{\substack{m \\ n}}^{\lambda_F}(\bigcup \D)$ in $P$, as desired.
	Finally, if $F$ is finitary, by Corollary \ref{corollary:structure_co}, $\textit{sup}\D = \bigcup \D$.
\end{proof} 
\begin{remark}
	Since the poset $\tuple{\varepsilon_{co}^{\ell mn}(F), \subseteq}$ doesn't always have the top, in other words, $\textit{inf}(\emptyset)$ doesn't always exist, the proposition above doesn't imply that the poset is a complete lattice.
	In the proof ($a$) above, the assertion that $\bigcup (\bigcap S)^\downarrow$ is a fixed point of $D_n^m$ depends on the assumption that $S \subseteq \varepsilon_{co}^{\ell mn}(F)$.
	For the poset $\tuple{\varepsilon_{ad}^{\ell mn}(F), \subseteq}$, Corollary \ref{corollary:structure_cf_ad}($c$) also asserts that $\textit{inf}S = \bigcup (\bigcap S)^\downarrow$ for any nonempty set $S \subseteq \varepsilon_{ad}^{\ell mn}(F)$.
	In this situation, although a similar complete lattice 
	\begin{eqnarray*}
				\tuple{\set{Y \in \varepsilon_{ad}^{\ell mn}(F) \mid Y \subseteq \bigcup (\bigcap S)^\downarrow}, \subseteq}
	\end{eqnarray*}
	is at hand due to Corollary \ref{corollary:structure_cf_ad}($d$), $\bigcup (\bigcap S)^\downarrow$ is not always a fixed point of $D_n^m$ because the set $\set{Y \in \varepsilon_{ad}^{\ell mn}(F) \mid Y \subseteq \bigcup (\bigcap S)^\downarrow}$ is not always closed under the function $D_n^m$, and hence $D_n^m$ can't be regarded as the endofunction on this complete lattice, which obstructs the application of the Knaster-Tarski fixed point theorem in this situation.	
\end{remark}
\begin{corollary}[Lindenbaum property of $\varepsilon_{co}^{\ell mn}$] \label{corollary:existence_pr}
	Let $F = \tuple{\A, \rightarrow}$ be an AAF. Under the assumptions in Corollary \ref{corollary:structure_co} or Proposition \ref{proposition:structure_co}, we have
	\begin{itemize}
		\item[a.] There exist maximal elements in $\varepsilon_{co}^{\ell mn}(F)$, that is, $|\varepsilon_{pr}^{\ell mn}(F)| \geq 1$.
		\item[b.] Each $\ell mn$-complete extension of $F$ can be extended to a maximal one (i.e., an extension in $\varepsilon_{pr}^{\ell mn}(F)$).
	\end{itemize}	
\end{corollary}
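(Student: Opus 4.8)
The plan is to mimic the proof of Corollary \ref{corollary:Lindenbaum}, replacing the role of $\varepsilon_{cf}^{\ell}$ or $\varepsilon_{ad}^{\ell mn}$ by $\varepsilon_{co}^{\ell mn}$ and feeding in the structural results just established. The common feature shared by the two hypotheses is that, in both cases, $P \triangleq \tuple{\varepsilon_{co}^{\ell mn}(F), \subseteq}$ is a \emph{cpo}: under Corollary \ref{corollary:structure_co} this is stated verbatim (and $P$ is then nonempty either by the standing assumption $\varepsilon_{co}^{\ell mn}(F)\neq\emptyset$ or, when $\ell\geq m$ and $n\geq m$, by Corollary \ref{corollary:existence_gr_co}), while under Proposition \ref{proposition:structure_co} a complete semilattice is in particular a cpo, with bottom $D_{\substack{m \\ n}}^{\lambda_F}(\emptyset)$ and directed suprema supplied by clause ($b$) there; note that $n \geq \ell \geq m$ forces $\ell\geq m$ and $n\geq m$, so $\varepsilon_{co}^{\ell mn}(F)\neq\emptyset$ by Corollary \ref{corollary:existence_gr_co}. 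I would begin by recording this reduction to ``$P$ is a nonempty cpo'' explicitly, so that the rest of the argument is uniform across the two cases.

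For clause ($b$) I would fix $E \in \varepsilon_{co}^{\ell mn}(F)$ and pass to the principal up-set $U_E \triangleq \set{E' \in \varepsilon_{co}^{\ell mn}(F) \mid E \subseteq E'}$. The key step is to verify that $\tuple{U_E, \subseteq}$ is again a cpo. Its bottom is $E$ itself. For directed suprema, a directed subset $\D \subseteq U_E$ is also a directed subset of $P$, so its supremum $s \triangleq \textit{sup}_P \D$ exists because $P$ is a cpo; as $s$ is an upper bound of the nonempty set $\D \subseteq U_E$, we have $E \subseteq d \subseteq s$ for every $d \in \D$, whence $E \subseteq s$ and $s \in U_E$. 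Consequently $s$ is also the supremum of $\D$ inside $U_E$, so $U_E$ is closed under directed suprema and is a cpo. Crucially, I need not invoke the explicit formulas for $\textit{sup}\D$ (namely $\bigcup\D$ or $D_{\substack{m \\ n}}^{\lambda_F}(\bigcup\D)$): only the bare facts that the supremum exists in $P$ and is an upper bound are used.

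Having established that $U_E$ is a cpo, I would apply Zorn's Lemma exactly as in Corollary \ref{corollary:Lindenbaum}: every chain in $U_E$ is directed, hence has a supremum (upper bound) in $U_E$, so $U_E$ has a maximal element $M$. It then remains to promote maximality from $U_E$ to all of $\varepsilon_{co}^{\ell mn}(F)$: if $M \subseteq M'$ with $M' \in \varepsilon_{co}^{\ell mn}(F)$, then $E \subseteq M \subseteq M'$ places $M'$ in $U_E$, whence $M = M'$ by maximality of $M$ in $U_E$. By the definition of an $\ell mn$-preferred extension (Definition \ref{definition:graded_extensions}), this gives $M \in \varepsilon_{pr}^{\ell mn}(F)$ with $E \subseteq M$, which is exactly ($b$). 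Clause ($a$) is then immediate: $P$ is nonempty, so choosing any $E$ and extending it yields $\varepsilon_{pr}^{\ell mn}(F) \neq \emptyset$, i.e.\ $|\varepsilon_{pr}^{\ell mn}(F)| \geq 1$.

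The only point demanding genuine care — and the natural place to slip — is the closure of $U_E$ under directed suprema in the general (non-finitary) case of Proposition \ref{proposition:structure_co}, where $\textit{sup}_P\D = D_{\substack{m \\ n}}^{\lambda_F}(\bigcup\D)$ need not coincide with $\bigcup\D$. The argument above deliberately sidesteps this by appealing only to the fact that $\textit{sup}_P\D$ is an upper bound lying in $P$, so no further manipulation of the defense operator $D_{\substack{m \\ n}}^{\lambda_F}$ is required and the two hypotheses are handled by one and the same reasoning.
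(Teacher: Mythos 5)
Your proposal is correct and takes essentially the same route as the paper: in both hypothesis cases one observes that $\tuple{\varepsilon_{co}^{\ell mn}(F), \subseteq}$ is a nonempty cpo, obtains ($a$) by Zorn's lemma, and obtains ($b$) by the principal up-set argument of Corollary \ref{corollary:Lindenbaum}. Your explicit use of only the existence of directed suprema (rather than their concrete formulas $\bigcup\D$ or $D_{\substack{m \\ n}}^{\lambda_F}(\bigcup\D)$) is exactly the right way to make the paper's phrase ``similarly to Corollary \ref{corollary:Lindenbaum}'' rigorous in the non-finitary case of Proposition \ref{proposition:structure_co}.
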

\begin{proof}
	Under the assumptions in Proposition \ref{proposition:structure_co} or Corollary \ref{corollary:structure_co}, the poset $\tuple{\varepsilon_{co}^{\ell mn}(F), \subseteq}$ is a cpo, thus ($a$) immediately follows by applying Zorn's lemma. The clause ($b$) may be shown similarly to Corollary \ref{corollary:Lindenbaum}.
\end{proof}
\begin{remark}\label{remark:pr}
	Corollary \ref{corollary:existence_pr} provides two assertions about the universal definability of $\varepsilon_{pr}^{\ell mn}$: one is that $|\varepsilon_{pr}^{\ell mn}(F)| \geq 1$ for any AAF $F$ whenever $n \geq \ell \geq m$, another is that $|\varepsilon_{pr}^{\ell mn}(F)| \geq 1$ for any finitary AAF $F$ whenever $\ell \geq m$ and $n \geq m$.
	In the situation that $\ell \geq m$ and $n \geq m$, although the semantics $\varepsilon_{co}^{\ell mn}$ is universally defined w.r.t. the class of all AAFs (see Corollary \ref{corollary:existence_gr_co}), at the moment, we don't know whether so is $\varepsilon_{pr}^{\ell mn}$. Equivalently, in this situation, it is an open problem that whether $\varepsilon_{co}^{\ell mn}(F)$ has the Lindenbaum property for any AAF $F$. Moreover, a related problem is that whether $\tuple{\varepsilon_{co}^{\ell mn}(F), \subseteq}$ is a cpo for any AAF $F$ whenever $\ell \geq m$ and $n \geq m$ (refer to Corollary \ref{corollary:structure_co}). We are inclined to answer these problems negatively.
\end{remark}
For the notion of a preferred extension, in addition to the one in Definition \ref{definition:graded_extensions}, there is another variant, which is a graded version of the one introduced by Dung in \cite{Dung95Acceptability}.
\begin{definition} \label{definition:pr_Dung}
	Let $F = \tuple{\A, \rightarrow}$ be an AAF and $E \subseteq \A$. Then $E \in \varepsilon_{pr, \textit{Dung}}^{\ell mn}(F)$ iff $E \in \varepsilon_{ad}^{\ell mn}(F)$ and $\nexists E' \in \varepsilon_{ad}^{\ell mn}(F)(E \subset E')$.	
\end{definition}
By Corollary \ref{corollary:Lindenbaum}, it is easy to see that $\varepsilon_{pr, \textit{Dung}}^{\ell mn}$ is universally defined for all AAFs. The next result reveals that the semantics $\varepsilon_{pr, \textit{Dung}}^{\ell mn}$ and $\varepsilon_{pr}^{\ell mn}$ are equivalent whenever $n \geq \ell \geq m$. 
However, they aren't equivalent in general even for finite AAFs with $\ell \geq m$ and $n \geq m$, see Example \ref{Ex:ad_co_pr}.
\begin{proposition}\label{proposition:equ_pr}
	If $n \geq \ell \geq m$ then $\varepsilon_{pr}^{\ell mn} = \varepsilon_{pr, \textit{Dung}}^{\ell mn}$ w.r.t. all AAFs.
\end{proposition}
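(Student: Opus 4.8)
The plan is to pass back and forth between the posets $\tuple{\varepsilon_{ad}^{\ell mn}(F), \subseteq}$ and $\tuple{\varepsilon_{co}^{\ell mn}(F), \subseteq}$ using two facts. The first is immediate from the definitions: every $\ell mn$-complete extension is $\ell mn$-admissible, since $E = D_n^m(E)$ gives $E \subseteq D_n^m(E)$, and the $\ell$-conflict-freeness clause is shared. Hence $\varepsilon_{co}^{\ell mn}(F) \subseteq \varepsilon_{ad}^{\ell mn}(F)$ unconditionally. The second fact is the content of Corollary~\ref{corollary:properties_defense}: under the hypothesis $n \geq \ell \geq m$, for every $E \in \varepsilon_{ad}^{\ell mn}(F)$ the set $D_{\substack{m \\ n}}^{\lambda_F}(E)$ is the least $\ell mn$-complete extension containing $E$, and by clause ($a$) of that corollary it is itself $\ell mn$-admissible. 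Thus each admissible set is sandwiched between itself and a complete extension lying above it, which is exactly what lets me transfer maximality statements across the two semantics.

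For the inclusion $\varepsilon_{pr}^{\ell mn}(F) \subseteq \varepsilon_{pr, \textit{Dung}}^{\ell mn}(F)$, I would take $E$ maximal in $\varepsilon_{co}^{\ell mn}(F)$. It is admissible by the first fact, so it remains only to rule out a strictly larger admissible set. Suppose $E \subset E'$ with $E' \in \varepsilon_{ad}^{\ell mn}(F)$. Applying Corollary~\ref{corollary:properties_defense}($b$) to $E'$ yields an $\ell mn$-complete extension $D_{\substack{m \\ n}}^{\lambda_F}(E')$ with $E' \subseteq D_{\substack{m \\ n}}^{\lambda_F}(E')$ (the left containment coming from the fundamental lemma), hence $E \subset D_{\substack{m \\ n}}^{\lambda_F}(E')$, contradicting the maximality of $E$ among complete extensions. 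So no such $E'$ exists and $E \in \varepsilon_{pr, \textit{Dung}}^{\ell mn}(F)$.

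For the reverse inclusion $\varepsilon_{pr, \textit{Dung}}^{\ell mn}(F) \subseteq \varepsilon_{pr}^{\ell mn}(F)$, I would take $E$ maximal in $\varepsilon_{ad}^{\ell mn}(F)$. By Corollary~\ref{corollary:properties_defense}($b$), $D_{\substack{m \\ n}}^{\lambda_F}(E)$ is an $\ell mn$-complete extension containing $E$, and it is admissible; maximality of $E$ forces $E = D_{\substack{m \\ n}}^{\lambda_F}(E)$, so $E$ is itself $\ell mn$-complete. To see that $E$ is maximal among complete extensions, note that any $E'' \in \varepsilon_{co}^{\ell mn}(F)$ with $E \subset E''$ would be admissible by the first fact, again contradicting the maximality of $E$ in $\varepsilon_{ad}^{\ell mn}(F)$. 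Hence $E \in \varepsilon_{pr}^{\ell mn}(F)$, completing the equivalence.

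The proof is genuinely short, and I do not expect a serious obstacle; the only point requiring care is where the hypothesis $n \geq \ell \geq m$ enters, namely in invoking Corollary~\ref{corollary:properties_defense}($b$) so that iterating $D_n^m$ from an admissible set actually lands on a \emph{complete} extension rather than merely an admissible post-fixpoint. This is precisely the step that fails when the fundamental lemma is unavailable, which is why the equivalence need not hold outside this parameter regime (as the excerpt promises via Example~\ref{Ex:ad_co_pr}). Everything else is bookkeeping about strict versus non-strict containment, handled by the inclusion $E \subseteq D_{\substack{m \\ n}}^{\lambda_F}(E)$ supplied by Lemma~\ref{lemma:fundamental lemma}.
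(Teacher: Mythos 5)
Your proof is correct and follows essentially the same route as the paper's: both directions hinge on Corollary~\ref{corollary:properties_defense} (the consequence of the fundamental lemma under $n \geq \ell \geq m$) to pass from an admissible set to a complete extension above it and transfer maximality between $\tuple{\varepsilon_{ad}^{\ell mn}(F), \subseteq}$ and $\tuple{\varepsilon_{co}^{\ell mn}(F), \subseteq}$. The only cosmetic difference is that, when showing a maximal admissible set is complete, the paper applies $D_n^m$ once via clause ($a$) and uses maximality to conclude $E = D_n^m(E)$, whereas you iterate to the fixed point $D_{\substack{m \\ n}}^{\lambda_F}(E)$ via clause ($b$); the argument is the same either way.
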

\begin{proof}
	Let $F$ be any AAF. We intend to show $\varepsilon_{pr}^{\ell mn}(F) = \varepsilon_{pr, \textit{Dung}}^{\ell mn}(F)$.
	\par ``$\supseteq$'' Assume that $E \in \varepsilon_{ad}^{\ell mn}(F)$ and $\nexists E' \in \varepsilon_{ad}^{\ell mn}(F)(E \subset E')$. Then $E \subseteq N_{\ell}(E)$ and $E \subseteq D_n^m(E)$. 
	Due to $E \in \varepsilon_{ad}^{\ell mn}(F)$ and $n \geq \ell \geq m$, by Corollary \ref{corollary:properties_defense}($a$), we get $D_n^m(E) \in \varepsilon_{ad}^{\ell mn}(F)$. 
	Then, since $E$ is a maximal $\ell mn$-admissible set, it follows from $E \subseteq D_n^m(E)$ that $E = D_n^m(E) $. 
	Thus, $E \in \varepsilon_{co}^{\ell mn}(F)$. 
	Further, since $\varepsilon_{co}^{\ell mn}(F) \subseteq \varepsilon_{ad}^{\ell mn}(F)$, $E$ is also a maximal element in $\varepsilon_{co}^{\ell mn}(F)$. Consequently, $E \in \varepsilon_{pr}^{\ell mn}(F)$, as desired.
	\par ``$\subseteq$'' Let $E \in \varepsilon_{pr}^{\ell mn}(F)$. 
	Clearly, $E \in \varepsilon_{ad}^{\ell mn}(F)$. 
	To complete the proof, it suffices to show that $E$ is a maximal $\ell mn$-admissible set of $F$. 
	Suppose that $E \subseteq E'$ with $E' \in \varepsilon_{ad}^{\ell mn}(F)$. 
	By Corollary \ref{corollary:properties_defense}($b$), there exists an $E^* \in \varepsilon_{co}^{\ell mn}(F)$ such that $E \subseteq E' \subseteq E^*$.
	Then, due to $E \in \varepsilon_{pr}^{\ell mn}(F)$, we have $E = E^*$, and hence $E = E'$, as desired.
\end{proof}
Since $\varepsilon_{pr}^{\ell mn}$ is universally defined for finitary AAFs whenever $\ell \geq m$ and $n \geq m$ (see, Corollary \ref{corollary:existence_pr}) and $\varepsilon_{pr,\textit{Dung}}^{\ell mn}$ is universally defined for any $\ell$, $m$ and $n$ (see, Corollary \ref{corollary:Lindenbaum}), a rational conjecture arises at this point: for finitary AAFs, can we weaken the assumption $n \geq \ell \geq m$ in the proposition above to $\ell \geq m$ and $n \geq m$?
The next example answers this conjecture negatively.
However, it indeed holds for all well-founded AAFs, see, Corollary \ref{corollary:well-founded_AAF_stb}.
\begin{example} \label{Ex:ad_co_pr}
	Consider the AAF $F$ in Figure \ref{figure:counterexample}. For $\ell = 3$ and $m = n = 2$, it is easy to verify that
	\[\begin{gathered}
	{\varepsilon_{ad}^{\ell mn}(F)} = \left\{ {\begin{array}{*{15}{c}}
		{\emptyset , \set{a}, \set{e}, \set{a, c}, \set{a, e}, \set{b, d}, \set{c, e}, \set{a, b, d}, } \\ 
		{\set{a, c, e}, \set{b, d, e}, \set{a, b, c, d}, \set{a, b, d, e},} \\
		{\set{b, c, d, e}, \set{a, b, c, d, e}, \set{b, c, d, e, f}} 
		\end{array}} \right\}, \hfill \\
	{\varepsilon_{co}^{\ell mn}(F)} = \set{\set{a, e}, \set{a, c, e}, \set{a, b, d, e}} \text{~and~} \hfill \\
	{\varepsilon_{pr}^{\ell mn}(F)} = \set{\set{a, c, e}, \set{a, b, d, e}}. \hfill \\
	\end{gathered} \]
	Obviously, $\varepsilon_{pr, \textit{Dung}}^{\ell mn}(F) = \set{\set{a, b, c, d, e}, \set{b, c, d, e, f}} \neq \varepsilon_{pr}^{\ell mn}(F)$.
\end{example}
In the situation that $n \geq \ell \geq m$, by Proposition \ref{proposition:equ_pr}, $\ell mn$-preferred extensions form the top bounder of $\ell mn$-admissible sets. The next result provides a characterization of the distributions of $\ell$-conflict-free sets, $mn$-defended sets and $\ell mn$-admissible sets. In Section \ref{Sec:Conclusion}, it is displayed graphically in Figure \ref{figure:distributions}($a$).
\begin{corollary}[Non-interpolant]\label{corollary:non-interpolant}
	For any AAF $F$, if $n \geq \ell \geq m$ then, for each $X_1 \in \varepsilon_{cf}^{\ell}(F)$ and $X_2 \in \varepsilon_{pr}^{\ell mn}(F)$, there is no $Y \in \textit{Def}^{mn}(F)$ such that $X_2 \subset Y \subseteq X_1$.
\end{corollary}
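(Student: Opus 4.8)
The plan is to argue by contradiction, reducing the whole claim to the maximality of $X_2$ among $\ell mn$-admissible sets. Suppose, toward a contradiction, that there exists $Y \in \textit{Def}^{mn}(F)$ with $X_2 \subset Y \subseteq X_1$. The strategy is to show that such a $Y$ would itself be an $\ell mn$-admissible set strictly containing $X_2$, which is impossible once $X_2$ is recognized as a maximal admissible set.

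First I would promote $Y$ to an $\ell mn$-admissible set. Since $Y \subseteq X_1$ and $X_1 \in \varepsilon_{cf}^{\ell}(F)$, the down-closeness of $\ell$-conflict-freeness (Lemma \ref{lemma:properties_cf_def}($d$)) yields $Y \in \varepsilon_{cf}^{\ell}(F)$. Combined with the hypothesis $Y \in \textit{Def}^{mn}(F)$, i.e.\ $Y \subseteq D_n^m(Y)$, this gives both defining conditions $Y \subseteq N_\ell(Y)$ and $Y \subseteq D_n^m(Y)$, so $Y \in \varepsilon_{ad}^{\ell mn}(F)$ by Definition \ref{definition:graded_extensions}. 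Equivalently, one may invoke Lemma \ref{lemma:properties_cf_def}($e$) directly: $Y \in \textit{Def}^{mn}(F)$ is contained in the $\ell$-conflict-free set $X_1$, hence $Y \in \varepsilon_{ad}^{\ell mn}(F)$.

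Next I would exploit the hypothesis $n \geq \ell \geq m$ to reinterpret the preferred extension $X_2$. By Proposition \ref{proposition:equ_pr}, under this hypothesis $\varepsilon_{pr}^{\ell mn}(F) = \varepsilon_{pr, \textit{Dung}}^{\ell mn}(F)$, so $X_2$ is a \emph{maximal} $\ell mn$-admissible set in the sense of Definition \ref{definition:pr_Dung}: there is no $E' \in \varepsilon_{ad}^{\ell mn}(F)$ with $X_2 \subset E'$. But the previous step produced exactly such an $E'$, namely $Y \in \varepsilon_{ad}^{\ell mn}(F)$ with $X_2 \subset Y$, a contradiction. Therefore no $Y \in \textit{Def}^{mn}(F)$ with $X_2 \subset Y \subseteq X_1$ can exist.

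The argument is short, and the only genuine content lies in assembling the two right ingredients: the down-closure of conflict-freeness, which pushes admissibility down from $X_1$ to $Y$, and Proposition \ref{proposition:equ_pr}, which converts ``preferred'' (maximal \emph{complete}) into ``maximal \emph{admissible}.'' I expect this last conversion to be the only delicate point, and it is precisely where the hypothesis $n \geq \ell \geq m$ is indispensable: without it, a maximal complete extension need not be a maximal admissible set, so $Y$ could properly extend $X_2$ among admissible sets without contradicting anything, as Example \ref{Ex:ad_co_pr} shows.
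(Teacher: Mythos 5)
Your proof is correct and follows exactly the paper's route: the paper's own proof simply cites Lemma \ref{lemma:properties_cf_def}($e$) (to promote $Y$ to an $\ell mn$-admissible set via its containment in the $\ell$-conflict-free set $X_1$) together with Proposition \ref{proposition:equ_pr} (to read $X_2$ as a maximal admissible set), which are precisely your two ingredients. Your write-up just makes the contradiction explicit.
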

\begin{proof}
	Immediately follows from Lemma \ref{lemma:properties_cf_def}($e$) and Proposition \ref{proposition:equ_pr}.
\end{proof}
In the above result, the assumption $n \geq \ell \geq m$ is necessary, otherwise, for instance, consider $X_1 = \set{a, b, c, d, e}$ and $X_2 = \set{a, b, d, e}$ in Example \ref{Ex:ad_co_pr}. 
\par Let $E$ be any $\ell mn$-admissible set. To ensure that $D_{\substack{m \\ n}}^{\xi}(E)$ is $\ell$-conflict-free for any ordinal $\xi$, the parameters $\ell$, $m$ and $n$ often are required to satisfy some conditions.
Compare the clause ($a$) in Corollary \ref{corollary:properties_defense} with Corollary \ref{corollary:construction_co}, the reader may find that such conditions are different for $\emptyset$ and arbitrary other admissible set: for any AAF $F$, $D_{\substack{m \\ n}}^{\xi}(\emptyset)$ is always $\ell$-conflict-free whenever $\ell \geq m$ and $n \geq m$, however, in general, we need the stronger assumption $n \geq \ell \geq m$ to ensure that, for an arbitrary $\ell mn$-admissible set $E$, $D_{\substack{m \\ n}}^{\xi}(E)$ is $\ell$-conflict-free, which can't always be relaxed to the assumption $\ell \geq m$ and $n \geq m$, see, Example \ref{Ex:counterexampleforGrossiTh2}.
From a viewpoint of proof-technique, such difference comes from the fact that $\emptyset$ is always an $\ell mn$-admissible set for any $\ell$, $m$ and $n$.
In fact, due to $\emptyset \in \varepsilon_{ad}^{mmn}(F)$ and $n \geq m$, by Lemma \ref{lemma:fundamental lemma}, we can get $D_{\substack{m \\ n}}^{\xi}(\emptyset) \subseteq N_m(D_{\substack{m \\ n}}^{\xi}(\emptyset))$, and hence $D_{\substack{m \\ n}}^{\xi}(\emptyset) \in \varepsilon_{cf}^\ell(F)$ whenever $\ell \geq m$.
Unfortunately, such proof trick doesn't work for arbitrary $\ell mn$-admissible sets $E$ because $E \in \varepsilon_{ad}^{\ell mn}(F)$ doesn't always implies $E \in \varepsilon_{ad}^{mmn}(F)$ under the assumption $\ell \geq m$. In a word, to preserve $\ell$-conflict-freeness during iterations of $D_n^m$ starting at $\ell mn$-admissible sets, different assumptions on the parameters are required in different situations. It should be pointed out that the assumption $\ell \geq m$ and $n \geq m$ is sufficient for iterations of $D_n^m$ starting at any $\ell mn$-admissible set $E$ with $E \subseteq D_{\substack{m \\ n}}^{\lambda_F}(\emptyset)$, see Corollary \ref{corollary:construction_co}. In fact, for any AAF $F$ with a well-founded attack relation, the reader will find that this assumption works well for the iteration of $D_n^m$ starting at any $\ell mn$-admissible set. In the following, we will consider related issues for AAFs having certain well-foundedness.
\subsection{Local well-foundedness}
\begin{definition} \label{definition:well-founded}
	Let $F = \tuple{\A, \rightarrow}$ be an AAF and $X \subseteq \A$. The attack relation $\rightarrow$ is said to be (local) well-founded on the set $X$ if there is no infinite sequence, like $a_0 \leftarrow a_1 \leftarrow a_2 \leftarrow \cdots$, in $X$ (equivalently, for any nonempty subset $Y$ of $X$, there exists an argument $a \in Y$ such that $\neg \exists b \in Y(b \rightarrow a)$).
	Similarly, the transitive closure $\rightarrow^+$ (of $\rightarrow$) is said to be (local) well-founded on the set $X$ if there is no infinite sequence, like $a_0~^+\!\!\leftarrow a_1~^+\!\!\leftarrow a_2~^+\!\!\leftarrow \cdots$ with $a_i \in X$ for each $i \leq \omega$.
	Moreover, following \cite{Dung95Acceptability}, $F$ itself is said to be (global) well-founded whenever $\rightarrow$ is well-founded on $\A$.
\end{definition}
In the standard situation, the properties of well-founded AAFs have been considered in \cite{Dung95Acceptability}. Unlike \cite{Dung95Acceptability}, this subsection will focus on local well-foundedness. For the global case, $\rightarrow$ is well-founded on $\A$ iff so is $\rightarrow^+$. However, for the local case, it doesn't hold in general.
Clearly, due to $\rightarrow \subseteq \rightarrow^+$, if $\rightarrow^+$ is well-founded on $X$ then so is the attack relation $\rightarrow$ itself. However, the converse is not true. For example, consider the AAF below, 
\begin{figure}[H]
	\centering
	\vspace{-0.2cm}
	\includegraphics[width=0.2\linewidth]{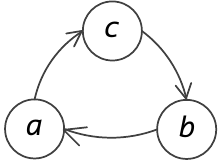}
	\label{fig:Well-founded}
\end{figure}
\noindent the attack relation $\rightarrow$ is well-founded on $\set{a, b}$, but $\rightarrow^+$ is not well-founded on $\set{a, b}$ due to $b \rightarrow^+ a$ and $a \rightarrow^+ b$. 
\par This subsection aims to show that properties of well-founded AAFs obtained in \cite{Dung95Acceptability} still hold in the situation that $\rightarrow^+$ is locally well-founded on a related set, which are consequences of the next result. It is well-known that, similar to induction on well-ordered relations, we may use the inductive method to prove propositions based on well-founded relations (see, e.g., \cite{Kunen1992Book}). The clause ($a$) in the next lemma asserts that a property holds for all arguments in the well-founded structure $\tuple{\A - X, \rightarrow^+}$, we will adopt the well-founded induction to prove it, which is similar to the well-known inductive proof on natural numbers.
\begin{lemma} \label{lemma:transitive closure}
	Let $F = \tuple{\A, \rightarrow}$ be an AAF and $X \subseteq \A$ such that $D_n^m(X) \subseteq X$. If the transitive closure $\rightarrow^+$ is well-founded on $\A - X$ then
	\begin{itemize}
		\item[a.] For each $a \in \A - X$, there is no $Y \in \textit{Def}^{mn}(F)$ such that $X \cup \set{a} \subseteq Y$. 
		\item[b.] For each $Y \in \textit{Def}^{mn}(F)$, if $X \subseteq Y$ then $X = Y$.
		\item[c.] $X$ is the largest fixed point of the function $D_n^m$ in the complete lattice $\tuple{\pw{\A}, \subseteq}$ whenever $X = D_n^m(X)$.
	\end{itemize}
\end{lemma}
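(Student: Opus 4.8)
The plan is to establish clause (a) by well-founded induction on $\tuple{\A - X, \rightarrow^+}$, and then to read off (b) and (c) as quick consequences. For $a \in \A - X$, let $P(a)$ abbreviate the statement ``there is no $Y \in \textit{Def}^{mn}(F)$ with $X \cup \set{a} \subseteq Y$''. Because $\rightarrow^+$ is well-founded on $\A - X$, the well-founded induction principle allows me to assume $P(c)$ for every $c \in \A - X$ satisfying $c \rightarrow^+ a$ while proving $P(a)$.

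For the induction step I would argue by contradiction. Suppose some $Y \in \textit{Def}^{mn}(F)$ satisfies $X \cup \set{a} \subseteq Y$. From $a \in Y \subseteq D_n^m(Y)$ I obtain $a \in D_n^m(Y)$, whereas from $a \in \A - X$ together with the hypothesis $D_n^m(X) \subseteq X$ I obtain $a \notin D_n^m(X)$. Unfolding these two memberships via Definition \ref{definition:grade_defense} reduces matters to a counting comparison between the attackers of $a$ measured against $X$ and against $Y$.

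The hard part will be this counting argument. Unfolding $a \notin D_n^m(X)$ supplies at least $m$ distinct attackers $b$ of $a$, each attacked by fewer than $n$ elements of $X$; unfolding $a \in D_n^m(Y)$ says that at most $m-1$ attackers of $a$ are attacked by fewer than $n$ elements of $Y$. Since $m > m-1$, at least one of those $\geq m$ attackers, say $b_0$, is attacked by at least $n$ elements of $Y$ while being attacked by fewer than $n$ elements of $X$. As $X \subseteq Y$, this increase in the count forces a witness $c \in Y - X$ with $c \rightarrow b_0$; combining with $b_0 \rightarrow a$ yields $c \rightarrow^+ a$ and $c \in \A - X$. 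But then $X \cup \set{c} \subseteq Y \in \textit{Def}^{mn}(F)$ contradicts the induction hypothesis $P(c)$, which closes the step and establishes (a).

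Clauses (b) and (c) then follow immediately. For (b), if $X \subseteq Y \in \textit{Def}^{mn}(F)$ with $X \neq Y$, I would pick $a \in Y - X \subseteq \A - X$; then $X \cup \set{a} \subseteq Y$ contradicts (a), so $X = Y$. For (c), under $X = D_n^m(X)$ the greatest fixed point $\nu D_n^m$ of $D_n^m$ exists by the Knaster--Tarski theorem and satisfies $X \subseteq \nu D_n^m \in \textit{Def}^{mn}(F)$; applying (b) gives $X = \nu D_n^m$, so $X$ is indeed the largest fixed point. Equivalently, one could union $X$ with an arbitrary fixed point using the union-closeness of $\textit{Def}^{mn}(F)$ from Lemma \ref{lemma:properties_cf_def}($a$) and again invoke (b).
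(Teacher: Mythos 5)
Your proposal is correct and follows essentially the same route as the paper: well-founded induction on $\rightarrow^+$ over $\A - X$ for clause ($a$), unfolding $a \notin D_n^m(X)$ and $a \in D_n^m(Y)$ into the same counting comparison (the paper isolates the witness step as a claim that all $n$ attackers of $b_{i_0}$ lie in $X$, while you extract the witness $c \in Y - X$ directly from the cardinality discrepancy, but the underlying argument is identical), and then ($b$) and ($c$) via Knaster--Tarski exactly as in the paper.
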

\begin{proof}
	($a$) We will prove it by induction on the well-founded relation $\rightarrow^+$ on $\A - X$.
	Let $a \in \A - X$.
	Conversely, suppose $X \cup \set{a} \subseteq Y$ for some $Y \in \textit{Def}^{mn}(F)$.
	Since $D_n^m(X) \subseteq X$ and $a \in \A - X$, $a \notin D_n^m(X) = N_m N_n (X)$.
	So, there exist distinct arguments $b_0, \cdots, b_{m - 1} \in \A$ such that $b_i \rightarrow a$ and $b_i \in N_n(X)$ for each $i \leq m - 1$.
	Then, it follows from $a \in Y \subseteq D_n^m(Y)$ that $b_{i_0} \notin N_n(Y)$ for some $i_0 \leq m - 1$.
	Hence, there exist distinct arguments $c_0, \cdots, c_{n - 1} \in \A$ such that $c_i \rightarrow b_{i_0}$ and $c_i \in Y$ for each $i \leq n - 1$. To complete the proof, a simple assertion about these $c_i$'s is given below.\\
	\\ \textbf{Claim~~} $c_i \in X$ for each $i \leq n - 1$.\\
	Conversely, assume that $c_i \notin X$ for some $i \leq n - 1$. Then, $c_i \in \A - X$. By IH, it follows from $c_i \in \A - X$ and $c_i \rightarrow^+ a$ (due to $c_i \rightarrow b_{i_0} \rightarrow a$) that there is no $Z \in \textit{Def}^{mn}(F)$ such that $X \cup \set{c_i} \subseteq Z$, which contradicts that $X \cup \set{c_i} \subseteq Y$ and $Y \in \textit{Def}^{mn}(F)$.\\
	\\
	Now, $b_{i_0} \notin N_n(X)$ comes from that $c_i \rightarrow b_{i_0}$ and $c_i \in X$ for each $i \leq n - 1$, hence a contradiction arises, as desired.
	\par ($b$) Clearly, $X \cup Y \subseteq Y$ due to $X \subseteq Y$.
	Then, by the clause ($a$), we get $Y \subseteq X$, and hence $X = Y$.
	\par ($c$) By the Knaster-Tarski fixed point theorem, $D_n^m$ has the largest fixed point $\nu D_n^m$ in the complete lattice $\tuple{\pw{\A}, \subseteq}$.
	Thus, $X \subseteq \nu D_n^m$ due to $X = D_n^m(X)$.
	Further, by the clause ($b$), it follows from $\nu D_n^m \in \textit{Def}^{mn}(F)$ that $X = \nu D_n^m$.	
\end{proof}
\begin{example} \label{Ex:counterexampleforatt}
	\begin{figure}[t]
		\centering
		\includegraphics[width=0.4\linewidth]{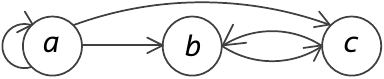}
		\caption{The AAF for Examples \ref{Ex:counterexampleforatt}, \ref{Ex:counterexampleforpr2} and \ref{Ex: Counterexample for V}.}
		\label{figure:counterexampleforatt}
	\end{figure}
	The assumption that $\rightarrow^+$ is well-founded on $\A - X$ in Lemma \ref{lemma:transitive closure} can't be weaken to that $\rightarrow$ is well-founded on $\A - X$. Consider the AAF in Figure \ref{figure:counterexampleforatt} with $n = m = 2$. For $X \triangleq \set{a, b}$, it is obvious that $\rightarrow$ instead of $\rightarrow^+$ (notice that $c \rightarrow^+ c$) is well-founded on $\A - X = \set{c}$, moreover, $Y \in \textit{Def}^{mn}(F)$ due to $Y = D_n^m(Y)$ for each $Y \in \set{\set{a, b}, \set{a, b, c}}$. Then, it is easy to check that none of the clauses ($a$)-($c$) in Lemma \ref{lemma:transitive closure} holds.
\end{example}
The lemma above is crucial in providing constructions of some special extensions in terms of iterations of $D_n^m$ under the assumption of local well-foundedness. One of its useful consequences is that well-foundedness may make some iterations of $D_n^m$ saturated, which induces the formation of the top in $\varepsilon_{co}^{\ell mn}(F)$ (i.e., the unique $\ell mn$-preferred extension) as stated below.
\begin{corollary} \label{corollary:well-founded_co_nonempty}
	Let $F = \tuple{\A, \rightarrow}$ be an AAF, $E \in \varepsilon_{ad}^{\ell mn}(F)$ and $n \geq \ell \geq m$. If $\rightarrow^+$ is well-founded on $\A - D_{\substack{m \\ n}}^{\lambda_F}(E)$ then 
	\begin{itemize}
		\item[a.] $D_{\substack{m \\ n}}^{\lambda_F}(E)$ is the largest $\ell mn$-complete extension of $F$ and hence an $\ell mn$-preferred extension containing $E$, in fact, $D_{\substack{m \\ n}}^{\lambda_F}(E)$ is the unique $\ell mn$-complete extension containing $E$.
		\item[b.] $P \triangleq \tuple{\varepsilon_{ad}^{\ell mn}(F), \subseteq}$ is a complete lattice with the top $D_{\substack{m \\ n}}^{\lambda_F}(E)$.
		\item[c.] $Q \triangleq \tuple{\varepsilon_{co}^{\ell mn}(F), \subseteq}$ is a complete lattice with the top $D_{\substack{m \\ n}}^{\lambda_F}(E)$.		
	\end{itemize}
\end{corollary}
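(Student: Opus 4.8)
The plan is to abbreviate $E^\ast \triangleq D_{\substack{m \\ n}}^{\lambda_F}(E)$ and first pin down its role as a fixed point. By Corollary~\ref{corollary:properties_defense}($b$), since $n \geq \ell \geq m$ and $E \in \varepsilon_{ad}^{\ell mn}(F)$, the set $E^\ast$ is the least $\ell mn$-complete extension containing $E$; in particular $E^\ast = D_n^m(E^\ast)$ and $E \subseteq E^\ast$. This lets me invoke Lemma~\ref{lemma:transitive closure} with $X \triangleq E^\ast$: its two hypotheses, namely $D_n^m(E^\ast) \subseteq E^\ast$ and the well-foundedness of $\rightarrow^+$ on $\A - E^\ast$, are exactly at hand. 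From clause ($c$) of that lemma I obtain that $E^\ast$ is the \emph{largest} fixed point of $D_n^m$ in $\tuple{\pw{\A}, \subseteq}$, i.e. $E^\ast = \nu D_n^m$. This single fact drives all three clauses.

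For ($a$) I would argue that every $\ell mn$-complete extension is by definition a fixed point of $D_n^m$, hence contained in the largest fixed point $E^\ast$; since $E^\ast$ is itself an $\ell mn$-complete extension, it is the largest one, hence a maximal element of $\varepsilon_{co}^{\ell mn}(F)$, i.e. an $\ell mn$-preferred extension, and it contains $E$. Uniqueness among complete extensions containing $E$ follows by squeezing: any $E'' \in \varepsilon_{co}^{\ell mn}(F)$ with $E \subseteq E''$ satisfies $E^\ast \subseteq E''$ by the leastness recorded in Corollary~\ref{corollary:properties_defense}($b$), while $E'' \subseteq E^\ast$ because $E^\ast$ is the largest fixed point; hence $E'' = E^\ast$.

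For ($b$) and ($c$) the strategy is uniform: each of $P$ and $Q$ is already known to be a complete semilattice — $P$ by Corollary~\ref{corollary:structure_cf_ad}($c$) and $Q$ by Proposition~\ref{proposition:structure_co} (both available since $n \geq \ell \geq m$) — so by the fact recalled in the preliminaries that a complete semilattice with a top is a complete lattice, it remains only to exhibit a top, which in each case will be $E^\ast$. For $Q$ this is immediate from ($a$), since $E^\ast$ is the largest $\ell mn$-complete extension. For $P$ I would show that $E^\ast$ dominates \emph{every} self-defended set: given $Y \in \textit{Def}^{mn}(F)$, union-closeness (Lemma~\ref{lemma:properties_cf_def}($a$)) yields $Y \cup E^\ast \in \textit{Def}^{mn}(F)$; were some $a \in Y - E^\ast$ to exist, then $E^\ast \cup \set{a} \subseteq Y \cup E^\ast \in \textit{Def}^{mn}(F)$ would contradict Lemma~\ref{lemma:transitive closure}($a$). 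Hence $Y \subseteq E^\ast$ for all $Y \in \textit{Def}^{mn}(F)$, and in particular for all $Y \in \varepsilon_{ad}^{\ell mn}(F) \subseteq \textit{Def}^{mn}(F)$; since $E^\ast$ is itself admissible, it is the top of $P$.

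I expect the only genuine work to be the derivation $E^\ast = \nu D_n^m$ from local well-foundedness, but that is packaged entirely inside Lemma~\ref{lemma:transitive closure}, so the remaining steps are essentially bookkeeping with the structural results already proved. The one point that needs care is clause ($b$): the top must be shown to lie \emph{above} all admissible sets, not merely to be maximal, and the clean way is to pass through $\textit{Def}^{mn}(F)$ and combine union-closeness with Lemma~\ref{lemma:transitive closure}($a$), rather than relying on Lemma~\ref{lemma:transitive closure}($b$), which only yields the maximality of $E^\ast$ and would not by itself guarantee that every admissible set sits below it.
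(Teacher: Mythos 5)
Your proposal is correct, and for clauses ($a$) and ($c$) it follows the paper's proof essentially verbatim: ($a$) combines the leastness from Corollary~\ref{corollary:properties_defense}($b$) with the largest-fixed-point statement of Lemma~\ref{lemma:transitive closure}($c$), and ($c$) combines Proposition~\ref{proposition:structure_co} with the top supplied by ($a$). The one place you diverge is clause ($b$): the paper takes an arbitrary $E' \in \varepsilon_{ad}^{\ell mn}(F)$, applies Corollary~\ref{corollary:properties_defense}($b$) a second time to get $E' \subseteq D_{\substack{m \\ n}}^{\lambda_F}(E') \in \varepsilon_{co}^{\ell mn}(F)$, and then uses ($a$) to conclude $E' \subseteq D_{\substack{m \\ n}}^{\lambda_F}(E)$; you instead show the stronger statement that $D_{\substack{m \\ n}}^{\lambda_F}(E)$ dominates every set in $\textit{Def}^{mn}(F)$, via union-closeness (Lemma~\ref{lemma:properties_cf_def}($a$)) together with Lemma~\ref{lemma:transitive closure}($a$). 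Both routes are sound; yours buys a little more (it is essentially the content behind Corollary~\ref{corollary:equ_ad_def}, that under this well-foundedness hypothesis admissibility and self-defense coincide below the top) and avoids iterating $D_n^m$ from each $E'$, while the paper's is shorter given that ($a$) is already in hand. Incidentally, once you have $D_{\substack{m \\ n}}^{\lambda_F}(E) = \nu D_n^m$, the Knaster--Tarski theorem already gives that every post-fixpoint of $D_n^m$ lies below it, so your union argument, though perfectly valid, could be compressed further.
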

\begin{proof}
	($a$) By Corollary \ref{corollary:properties_defense}($b$), $D_{\substack{m \\ n}}^{\lambda_F}(E)$ is the least $\ell mn$-complete extension containing $E$.
	Further, by Lemma \ref{lemma:transitive closure}($c$), it is the largest $\ell mn$-complete extension.
	Thus, $D_{\substack{m \\ n}}^{\lambda_F}(E)$ is the unique $\ell mn$-complete extension containing $E$.
	\par ($b$) By Corollary \ref{corollary:structure_cf_ad}($c$), $\textit{inf}S$ exists in the poset $P$ for any nonempty $S \subseteq \varepsilon_{ad}^{\ell mn}(F)$.
	To complete the proof, it suffices to show that $D_{\substack{m \\ n}}^{\lambda_F}(E)$ is the largest one in $P$.
	Let $E' \in \varepsilon_{ad}^{\ell mn}(F)$.
	By Corollary \ref{corollary:properties_defense}($b$), $D_{\substack{m \\ n}}^{\lambda_F}(E') \in \varepsilon_{co}^{\ell mn}(F)$ due to $n \geq \ell \geq m$.	
	Then, by the clause ($a$), we have $E' \subseteq D_{\substack{m \\ n}}^{\lambda_F}(E') \subseteq D_{\substack{m \\ n}}^{\lambda_F}(E)$. Thus, $\textit{inf}(\emptyset) = D_{\substack{m \\ n}}^{\lambda_F}(E)$ in $P$.
	\par ($c$) Since $n \geq \ell \geq m$, by Proposition \ref{proposition:structure_co}, $\textit{inf}S$ exists in the poset $Q$ for any nonempty $S \subseteq \varepsilon_{co}^{\ell mn}(F)$.
	Moreover, by the clause ($a$), $D_{\substack{m \\ n}}^{\lambda_F}(E)$ is the largest one in $Q$, hence $\textit{inf}(\emptyset) = D_{\substack{m \\ n}}^{\lambda_F}(E)$.
	Thus, $Q$ is a complete lattice.
\end{proof}
By Corollary \ref{corollary:properties_defense}($a$), the monotone function $D_n^m$ may be regarded as an endofunction on $\varepsilon_{ad}^{\ell mn}(F)$ whenever $n \geq \ell \geq m$.
Thus, the clause ($c$) in Corollary \ref{corollary:well-founded_co_nonempty} is nothing but a consequence of the Knaster-Tarski fixed point theorem which asserts that all fixed points of monotone endofunction (self-map) on a complete lattice forms a complete lattice in itself (see, e.g., Theorem O-2.3 in \cite{Gierz03Book}). By the way, although $\varepsilon_{co}^{\ell mn}(F) \subseteq \varepsilon_{ad}^{\ell mn}(F)$, $Q$ isn't a sublattice of $P$ in general.
\par In the extreme situation, the structure $\tuple{\varepsilon_{co}^{\ell mn}(F), \subseteq}$ collapses to a singleton set, moreover, the assumption $n \geq \ell \geq m$ in Corollary \ref{corollary:well-founded_co_nonempty} may be relaxed to $\ell \geq m$ and $n \geq m$.
Formally, we have
\begin{corollary} \label{corollary:well-founded_co_empty}
	Let $F = \tuple{\A, \rightarrow}$ be an AAF. If $\rightarrow^+$ is well-founded on $\A - D_{\substack{m \\ n}}^{\lambda_F}(\emptyset)$ then
	\begin{itemize}
		\item[a.] $D_{\substack{m \\ n}}^{\lambda_F}(\emptyset) = D_{\substack{m \\ n}}^{\lambda_F}(X)$ for any $X \in \textit{Def}^{mn}(F)$, and hence $D_{\substack{m \\ n}}^{\lambda_F}(\emptyset)$ is the unique fixed point of $D_n^m$.
	\end{itemize}
	\par Moreover, in the situation that $\ell \geq m$ and $n \geq m$, we also have
	\begin{itemize}
		\item[b.] $\varepsilon_{co}^{\ell mn}(F) = \set{D_{\substack{m \\ n}}^{\lambda_F}(\emptyset)}$, and hence $D_{\substack{m \\ n}}^{\lambda_F}(\emptyset)$ ($= D_{\substack{m \\ n}}^{\lambda_F}(E)$) is the unique $\ell mn$-preferred extension containing $E$ whenever $E \in \varepsilon_{ad}^{\ell mn}(F)$.
		\item[c.] $D_{\substack{m \\ n}}^{\xi}(E) \in \varepsilon_{ad}^{\ell mn}(F)$ for any ordinal $\xi$ and $E \in \varepsilon_{ad}^{\ell mn}(F)$.
		\item[d.] $P \triangleq \tuple{\varepsilon_{ad}^{\ell mn}(F), \subseteq}$ is a complete lattice with the top $D_{\substack{m \\ n}}^{\lambda_F}(\emptyset)$.
	\end{itemize}
\end{corollary}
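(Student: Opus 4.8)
The plan is to set $G \triangleq D_{\substack{m \\ n}}^{\lambda_F}(\emptyset)$ and reduce everything to the single fact that, under the stated hypothesis, $G$ is the \emph{unique} fixed point of $D_n^m$. Since $\emptyset \in \textit{Def}^{mn}(F)$, Convention \ref{convention:lambda_F} tells us that $G$ is the least fixed point of $D_n^m$ in $\tuple{\pw{\A}, \subseteq}$, so in particular $D_n^m(G) = G$ and hence $D_n^m(G) \subseteq G$. This is exactly the standing hypothesis of Lemma \ref{lemma:transitive closure} with $X \triangleq G$, while the assumption that $\rightarrow^+$ is well-founded on $\A - G$ is its well-foundedness premise. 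Applying Lemma \ref{lemma:transitive closure}($c$) therefore yields that $G$ is also the \emph{largest} fixed point $\nu D_n^m$. Being simultaneously the least and the greatest fixed point, $G$ is the unique fixed point of $D_n^m$.

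For clause ($a$), I would recall the standard Knaster--Tarski identity $\nu D_n^m = \textit{sup}\set{X \mid X \subseteq D_n^m(X)}$ already invoked in the proof of Proposition \ref{proposition:structure_co}; it gives $X \subseteq \nu D_n^m = G$ for every post-fixpoint, i.e.\ for every $X \in \textit{Def}^{mn}(F)$. Consequently $D_{\substack{m \\ n}}^{\lambda_F}(X)$, which by Convention \ref{convention:lambda_F} is the least fixed point above $X$, can only be the unique fixed point $G$; this is the asserted equality and reconfirms uniqueness. Clause ($b$) then follows at once: under $\ell \geq m$ and $n \geq m$, Corollary \ref{corollary:existence_gr_co} gives $G \in \varepsilon_{co}^{\ell mn}(F)$, while every $\ell mn$-complete extension $E'$ satisfies $E' = D_n^m(E')$ and is thus a fixed point, forcing $E' = G$; hence $\varepsilon_{co}^{\ell mn}(F) = \set{G}$. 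A singleton set of complete extensions is trivially its own set of maximal elements, so $G$ is the unique $\ell mn$-preferred extension, and it contains any $E \in \varepsilon_{ad}^{\ell mn}(F)$ because $E \subseteq G$ and $D_{\substack{m \\ n}}^{\lambda_F}(E) = G$ by clause ($a$).

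Clauses ($c$) and ($d$) are then essentially bookkeeping. For ($c$), every $E \in \varepsilon_{ad}^{\ell mn}(F) \subseteq \textit{Def}^{mn}(F)$ satisfies $E \subseteq G = D_{\substack{m \\ n}}^{\lambda_F}(\emptyset)$ by the containment just noted, so Corollary \ref{corollary:construction_co} directly delivers $D_{\substack{m \\ n}}^{\xi}(E) \in \varepsilon_{ad}^{\ell mn}(F)$ for every ordinal $\xi$; alternatively, monotonicity gives $D_{\substack{m \\ n}}^{\xi}(E) \subseteq G \in \varepsilon_{cf}^{\ell}(F)$, whence down-closeness of $\varepsilon_{cf}^{\ell}$ (Lemma \ref{lemma:properties_cf_def}($d$)) yields the same conclusion. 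For ($d$), Corollary \ref{corollary:structure_cf_ad}($c$) already makes $\tuple{\varepsilon_{ad}^{\ell mn}(F), \subseteq}$ a complete semilattice with bottom $\emptyset$; since $G \in \varepsilon_{co}^{\ell mn}(F) \subseteq \varepsilon_{ad}^{\ell mn}(F)$ and every admissible set lies below $G$, the element $G$ is the top, and a complete semilattice possessing a top is a complete lattice. The only genuinely substantive step is the appeal to Lemma \ref{lemma:transitive closure}($c$) in the first paragraph; everything downstream is a mechanical deduction from the uniqueness of the fixed point, so I expect the verification that the hypotheses of that lemma are met (namely $D_n^m(G) \subseteq G$, which is immediate once $G$ is identified as a fixed point) to be the sole place requiring care.
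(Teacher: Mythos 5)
Your proof is correct and follows essentially the same route as the paper's: the heart of both arguments is applying Lemma \ref{lemma:transitive closure}($c$) to $D_{\substack{m \\ n}}^{\lambda_F}(\emptyset)$ to get that it is the largest (hence unique) fixed point, then deducing ($b$) from Proposition \ref{proposition:structure_gr}/Corollary \ref{corollary:existence_gr_co}, ($c$) from Corollary \ref{corollary:construction_co}, and ($d$) from the structure results of Corollary \ref{corollary:structure_cf_ad}. The only differences are cosmetic (deriving uniqueness before the equality in ($a$), and invoking Corollary \ref{corollary:structure_cf_ad}($c$) plus ``a complete semilattice with a top is a complete lattice'' instead of Corollary \ref{corollary:structure_cf_ad}($d$) in ($d$)), so no further changes are needed.
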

\begin{proof}
	($a$) Let $X \in \textit{Def}^{mn}(F)$. Clearly, by Lemma \ref{lemma:Grossi_simple}($b$), $D_{\substack{m \\ n}}^{\lambda_F}(\emptyset) \subseteq D_{\substack{m \\ n}}^{\lambda_F}(X)$. 
	Further, since $D_{\substack{m \\ n}}^{\lambda_F}(X)$ is a fixed point of the function $D_n^m$, by Lemma \ref{lemma:transitive closure}($c$), we get $D_{\substack{m \\ n}}^{\lambda_F}(\emptyset) = D_{\substack{m \\ n}}^{\lambda_F}(X)$.
	\par ($b$) Immediately follows from Proposition \ref{proposition:structure_gr} and the clause ($a$).
	\par ($c$) Let $\xi$ be any ordinal and $E \in \varepsilon_{ad}^{\ell mn}(F)$. Since $D_{\substack{m \\ n}}^{\lambda_F}(E)$ is a fixed point containing $E$, by the clause ($a$), $E \subseteq D_{\substack{m \\ n}}^{\lambda_F}(\emptyset)$. Then, by Corollary \ref{corollary:construction_co}, $D_{\substack{m \\ n}}^{\xi}(E) \in \varepsilon_{ad}^{\ell mn}(F)$.
	\par ($d$) For each $E \in \varepsilon_{ad}^{\ell mn}(F)$, by the clauses ($a$) and ($c$), we have $E \subseteq D_{\substack{m \\ n}}^{\lambda_F}(E) = D_{\substack{m \\ n}}^{\lambda_F}(\emptyset) \in \varepsilon_{ad}^{\ell mn}(F)$, and hence $D_{\substack{m \\ n}}^{\lambda_F}(\emptyset)$ is the top in $P$.
	Further, by Corollary \ref{corollary:structure_cf_ad}($d$), it is easy to see that $P$ is a complete lattice.
\end{proof}
By Observation \ref{observation:properties_gr}, it is easy to see that the above clauses ($b$), ($c$) and ($d$) still hold if replacing the assumption $\ell \geq m$ and $n \geq m$ by $\varepsilon_{co}^{\ell mn}(F) \neq \emptyset$. In fact, all these three clauses and $\varepsilon_{co}^{\ell mn}(F) \neq \emptyset$ are equivalent to each other in the situation that $\rightarrow^+$ is well-founded on $\A - D_{\substack{m \\ n}}^{\lambda_F}(\emptyset)$. We leave its verification to the reader. By the way, Corollary \ref{corollary:properties_defense}($a$) asserts the same conclusion as the clause $(c)$ in Corollary \ref{corollary:well-founded_co_empty} under the assumption that $n \geq \ell \geq m$. In the situation that $\rightarrow^+$ is well-founded on $\A - D_{\substack{m \\ n}}^{\lambda_F}(\emptyset)$, three facts below ensure that the assumption $n \geq \ell \geq m$ may be relaxed to $\ell \geq m$ and $n \geq m$: 
\begin{itemize}
	\item $D_{\substack{m \\ n}}^{\lambda_F}(\emptyset)$ is $\ell$-conflict-free under the assumption that $\ell \geq m$ and $n \geq m$ (Corollary \ref{corollary:existence_gr_co}).
	\item The well-foundedness of $\rightarrow^+$ on $\A - D_{\substack{m \\ n}}^{\lambda_F}(\emptyset)$ leads to that any iteration of $D_n^m$ starting at any $X \in \textit{Def}^{mn}(F)$ can't exceed the set $D_{\substack{m \\ n}}^{\lambda_F}(\emptyset)$ (Lemma \ref{lemma:transitive closure}($c$)).
	\item $\varepsilon_{cf}^{\ell}(F)$ is down-closed (Lemma \ref{lemma:properties_cf_def}($d$)).
\end{itemize}
\par For any AAF having certain well-foundedness, its admissible sets agree with self-defended sets whenever the parameters $\ell$, $m$ and $n$ are subject to some conditions. Formally, we have
\begin{corollary} \label{corollary:equ_ad_def}
	For any AAF $F = \tuple{\A, \rightarrow}$, under the assumption of Corollary \ref{corollary:well-founded_co_nonempty} or Corollary \ref{corollary:well-founded_co_empty}, it holds that $\varepsilon_{ad}^{\ell mn}(F) = \textit{Def}^{mn}(F)$.
\end{corollary}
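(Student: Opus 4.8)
The plan is to prove the two inclusions separately, the forward one being trivial. Since $E\in\varepsilon_{ad}^{\ell mn}(F)$ means exactly that $E\in\varepsilon_{cf}^{\ell}(F)$ and $E\in\textit{Def}^{mn}(F)$, we have $\varepsilon_{ad}^{\ell mn}(F)=\varepsilon_{cf}^{\ell}(F)\cap\textit{Def}^{mn}(F)\subseteq\textit{Def}^{mn}(F)$ for free. Hence the whole content is the reverse inclusion $\textit{Def}^{mn}(F)\subseteq\varepsilon_{ad}^{\ell mn}(F)$, i.e., that every $mn$-self-defended set is automatically $\ell$-conflict-free under the stated hypotheses.

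My strategy is to produce a single $\ell$-conflict-free fixed point $T$ of $D_n^m$ that dominates all of $\textit{Def}^{mn}(F)$, and then invoke down-closeness. I write $T\triangleq D_{\substack{m \\ n}}^{\lambda_F}(E)$ under the hypotheses of Corollary \ref{corollary:well-founded_co_nonempty}, and $T\triangleq D_{\substack{m \\ n}}^{\lambda_F}(\emptyset)$ under those of Corollary \ref{corollary:well-founded_co_empty}. In either case $T$ is a fixed point of $D_n^m$ (by Convention \ref{convention:lambda_F}) and $\rightarrow^+$ is well-founded on $\A-T$, so Lemma \ref{lemma:transitive closure}($c$) tells us that $T$ is the largest fixed point $\nu D_n^m$ of $D_n^m$ in $\tuple{\pw{\A},\subseteq}$. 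By the Knaster--Tarski theorem $\nu D_n^m$ is the largest post fixpoint, so every $Y\in\textit{Def}^{mn}(F)$, being a post fixpoint of $D_n^m$ by Definition \ref{definition:def}, satisfies $Y\subseteq T$.

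It remains to see that $T$ itself is $\ell$-conflict-free, and here the two parameter regimes enter. Under Corollary \ref{corollary:well-founded_co_nonempty} (where $n\geq\ell\geq m$), the set $T=D_{\substack{m \\ n}}^{\lambda_F}(E)$ is an $\ell mn$-complete extension by clause ($a$) of that corollary, hence $T\in\varepsilon_{cf}^{\ell}(F)$. Under Corollary \ref{corollary:well-founded_co_empty} (where $\ell\geq m$ and $n\geq m$), clause ($b$) gives $\varepsilon_{co}^{\ell mn}(F)=\set{D_{\substack{m \\ n}}^{\lambda_F}(\emptyset)}=\set{T}$, so again $T$ is an $\ell mn$-complete extension and in particular $\ell$-conflict-free. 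Combining, for any $Y\in\textit{Def}^{mn}(F)$ we have $Y\subseteq T\in\varepsilon_{cf}^{\ell}(F)$, whence $Y\in\varepsilon_{cf}^{\ell}(F)$ by the down-closeness of $\varepsilon_{cf}^{\ell}$ (Lemma \ref{lemma:properties_cf_def}($d$)); together with $Y\in\textit{Def}^{mn}(F)$ this yields $Y\in\varepsilon_{ad}^{\ell mn}(F)$, as desired.

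I expect the only genuine subtlety to be bookkeeping rather than mathematics: one must check that in each of the two hypothesis sets the same element $T$ plays both roles, namely the largest fixed point (so that it bounds every self-defended set) and an $\ell$-conflict-free complete extension, and that these two facts are secured by different prior results demanding the two different parameter constraints ($n\geq\ell\geq m$ versus $\ell\geq m$ and $n\geq m$). Once the domination $Y\subseteq T$ and the conflict-freeness of $T$ are both in hand, down-closeness closes the argument immediately; no fresh well-founded induction is needed beyond what is already packaged inside Lemma \ref{lemma:transitive closure}.
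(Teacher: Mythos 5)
Your proposal is correct and follows essentially the same route as the paper's own proof: identify the iterate ($D_{\substack{m \\ n}}^{\lambda_F}(E)$ or $D_{\substack{m \\ n}}^{\lambda_F}(\emptyset)$) as the largest fixed point $\nu D_n^m$ via Lemma \ref{lemma:transitive closure}($c$), note it is $\ell$-conflict-free by Corollary \ref{corollary:well-founded_co_nonempty}($a$) (resp.\ Corollary \ref{corollary:well-founded_co_empty}($b$)), bound every post fixpoint by it via Knaster--Tarski, and finish with the down-closeness of $\varepsilon_{cf}^{\ell}$ (Lemma \ref{lemma:properties_cf_def}($d$)). The extra bookkeeping you flag about the two parameter regimes is exactly the case distinction the paper handles in its parenthetical citation, so nothing is missing.
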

\begin{proof}
	It suffices to show that $\textit{Def}^{mn}(F) \subseteq \varepsilon_{ad}^{\ell mn}(F)$.
	Let $X \in \textit{Def}^{mn}(F)$. By Lemma \ref{lemma:transitive closure}($c$) and Corollary \ref{corollary:well-founded_co_nonempty}($a$) (Corollary \ref{corollary:well-founded_co_empty}($b$), resp.), the largest fixed point $\nu D_n^m$ of $D_n^m$ in $\tuple{\pw{\A}, \subseteq}$ is $\ell$-conflict-free.
	Since $X$ is a post fixed point of $D_n^m$, $X \subseteq \nu D_n^m$. Then, $X \in \varepsilon_{cf}^{\ell}(F)$ due to Lemma \ref{lemma:properties_cf_def}($d$), and hence $X \in \varepsilon_{ad}^{\ell mn}(F)$.
\end{proof} 
Clearly, for all well-founded AAF $F = \tuple{\A, \rightarrow}$, $\rightarrow^+$ is well-founded on any subset of $\A$. Thus, all results obtained in this subsection always hold for $F$. These results together with related ones in the next subsection will be summarized at the end of this section.
\subsection{Some comments on Grossi and Modgil's work}
This subsection will comment on relevant results obtained in \cite{Grossi19Graded}, which are listed below. Notice that $\bigcup_{0 \leq k < \omega} D_{\substack{m \\ n}}^{k}(E)$ is exactly $D_{\substack{m \\ n}}^{\omega}(E)$.
\par \emph{(I) (Theorem 2 in \cite{Grossi19Graded}) Let $F = \tuple{\A, \rightarrow}$ be a finitary AAF, $E \in \varepsilon_{ad}^{\ell mn}(F)$, $\ell \geq m$ and $n \geq m$. Then $\bigcup_{0 \leq k < \omega} D_{\substack{m \\ n}}^{k}(E)$ is the smallest $\ell mn$-complete extension containing $E$.}
\par \emph{(II) (Fact 10 in \cite{Grossi19Graded}) Let $F = \tuple{\A, \rightarrow}$ be a finitary AAF, $E \in \varepsilon_{ad}^{\ell mn}(F)$, $\ell \geq m$ and $n \geq m$. If $\A \subseteq \Sigma_E \triangleq \set{a \mid b \rightarrow^+ a \text{ for some } b \in E}$ then $\bigcup_{0 \leq k < \omega} D_{\substack{m \\ n}}^{k}(E)$ is the $\ell mn$-preferred extension containing $E$.}
\par \emph{(III) (Corollary 1 in \cite{Grossi19Graded}) Let $F = \tuple{\A, \rightarrow}$ be a finitary AAF, $\ell \geq m$ and $n \geq m$. Then $\bigcup_{0 \leq k < \omega} D_{\substack{m \\ n}}^{k}(\emptyset)$ is the $\ell mn$-grounded extension of $F$.}
\par \emph{(IV) (Fact 11 in \cite{Grossi19Graded}) Let $F = \tuple{\A, \rightarrow}$ be a finitary AAF, $E \in \varepsilon_{ad}^{\ell mn}(F)$, $\ell \geq m$ and $n \geq m$. Then $\bigcup_{0 \leq k < \omega} D_{\substack{m \\ n}}^{k}(E)$ is the smallest $\ell mn$-stable extension containing $E$ iff $\bigcup_{0 \leq k < \omega} D_{\substack{m \\ n}}^{k}(E) = \bigcap_{0 \leq k < \omega} N_{n}(D_{\substack{m \\ n}}^{k}(E))$.}
\par \emph{(V) (Fact 12 in \cite{Grossi19Graded}) $\varepsilon_{stb}^{\ell mn}(F) \subseteq \varepsilon_{pr}^{\ell mn}(F)$ for each AAF $F$.}\\
\par All these interesting results except (V) aim to provide fixpoint constructions for some special extensions based on iterations of the function $D_n^m$, which are the main contributions of \cite{Grossi19Graded}. Among them, (I) and (III) are generalized versions of results obtained in \cite{Grossi15Graded} by relaxing the assumption $n \geq \ell = m$ to $\ell \geq m$ and $n \geq m$. All these results except (V) depend on the following fundamental lemma (Lemma 3 in \cite{Grossi19Graded})\footnote{ For the sake of convenience in discussing, we adopt the assumption $\ell = m$ instead of, as done in \cite{Grossi19Graded}, omitting the parameter $\ell$ and writing $m$ where $\ell$ occurs in the inequality (\ref{equation:GrossiFunLemma}) and the notation $\varepsilon_{ad}^{\ell mn}$.}:\\
\par \emph{
	Let $F = \tuple{\A, \rightarrow}$ be a finitary AAF, $E \in \varepsilon_{ad}^{\ell mn}(F)$ and $n \geq \ell = m$. Then for any $k$ such that $0 \leq k < \omega$,
	\begin{equation}
	E \subseteq D_{\substack{m \\ n}}^{k}(E) \subseteq N_{\ell}(D_{\substack{m \\ n}}^{k}(E)) \subseteq N_{\ell}(E). \label{equation:GrossiFunLemma}
	\end{equation}
}\\
Unfortunately, in the journal version \cite{Grossi19Graded}, such a fundamental lemma with the constraint $n \geq \ell = m$ is applied incorrectly to prove (I) and (II) with the assumptions $\ell \geq m$ and $n \geq m$. 
The following two examples reveal that neither (I) nor (II) always holds. 
By the way, the careful reader may find a common nature of these two examples:  attack circulars occur in both AAFs involved in Examples \ref{Ex:counterexampleforGrossiTh2} and \ref{Ex:counterexampleforpr1}.
It is not accidental, in fact, both (I) and (II) indeed hold for all well-founded AAFs, see Corollary \ref{corollary:well-founded_co_empty}($b$).
\begin{example}[Counterexample for I] \label{Ex:counterexampleforGrossiTh2}
	Consider the AAF in Figure \ref{figure:counterexample} again. For $\ell = 3$, $m = n = 2$ and $E_1 = \set{a, b, c, d, e}$, we have showed that $E_1 \in \varepsilon_{ad}^{\ell mn}(F)$ and $D_n^m(E_1) = \set{a, b, c, d, e, f}$ in Example \ref{Ex:counterexample}.
	Moreover, it is easy to check that $D_n^m(E_1)$ is a fixed point of $D_n^m$. Hence,
	\begin{eqnarray*}
		D_{\substack{m \\ n}}^{\omega}(E_1) = D_n^m(E_1) = \set{a, b, c, d, e, f}.
	\end{eqnarray*}
	However, since $f \notin N_\ell(D_n^m(E_1))$, we have $D_n^m(E_1) \nsubseteq N_\ell(D_n^m(E_1))$, and hence $D_n^m(E_1) \notin \varepsilon_{cf}^{\ell}(F)$.
	Thus, $D_{\substack{m \\ n}}^{\omega}(E_1)$ is not an $\ell mn$-complete extension at all.
\end{example}
Clearly, under the assumption that $F$ is finitary, since the function $D_n^m$ is continuous, all results obtained so far still hold if we replace $\lambda_F$ by $\omega$.
In particular, by Proposition \ref{proposition:structure_gr} and Corollaries \ref{corollary:properties_defense}($b$) and \ref{corollary:construction_co}, (I) may be corrected as below, which distinguishes two cases based on whether a given $\ell mn$-admissible set $E$ can be exceeded by the iteration of $D_n^m$ starting at $\emptyset$.
\begin{corollary} \label{corollary:lfp_omega}
	Let $F$ be a finitary AAF.
	\begin{itemize}
		\item[a.] $D_{\substack{m \\ n}}^{\omega}(E)$ is the least $\ell mn$-complete extension containing $E$ for any $\ell mn$-admissible set $E$ whenever $n \geq \ell \geq m$.
		\item[b.] $D_{\substack{m \\ n}}^{\omega}(E)$ is the least $\ell mn$-complete extension containing $E$ for any $\ell mn$-admissible set $E$ such that $E \subseteq D_{\substack{m \\ n}}^{\omega}(\emptyset)$ whenever $\ell \geq m$ and $n \geq m$. In fact, in this situation, $D_{\substack{m \\ n}}^{\omega}(E) = D_{\substack{m \\ n}}^{\omega}(\emptyset)$ is the least $\ell mn$-complete extension of $F$.
	\end{itemize}	
\end{corollary}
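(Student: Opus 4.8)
The plan is to reduce both clauses to the general results already established with the abstract ordinal $\lambda_F$, exploiting that the specialization to finitary AAFs permits the choice $\lambda_F = \omega$. The enabling observation is that, since $F$ is finitary, the defense function $D_n^m$ is continuous by Lemma \ref{lemma:Grossi_simple}($d$); hence, by Convention \ref{convention:lambda_F}, I may and do set $\lambda_F \triangleq \omega$, so that $D_{\substack{m \\ n}}^{\lambda_F}(X) = D_{\substack{m \\ n}}^{\omega}(X)$ for every $X \in \textit{Def}^{mn}(F)$. Once this identification is in place, each clause becomes a direct instance of a previously proved statement.

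For ($a$), I would simply invoke Corollary \ref{corollary:properties_defense}($b$): under $n \geq \ell \geq m$, for any $E \in \varepsilon_{ad}^{\ell mn}(F)$ the set $D_{\substack{m \\ n}}^{\lambda_F}(E)$ is the least $\ell mn$-complete extension containing $E$; rewriting $\lambda_F$ as $\omega$ yields the claim verbatim. For ($b$), I would appeal to Corollary \ref{corollary:construction_co}, which under the weaker hypotheses $\ell \geq m$ and $n \geq m$ asserts precisely that, for $E \in \varepsilon_{ad}^{\ell mn}(F)$ with $E \subseteq D_{\substack{m \\ n}}^{\lambda_F}(\emptyset)$, the set $D_{\substack{m \\ n}}^{\lambda_F}(E) = D_{\substack{m \\ n}}^{\lambda_F}(\emptyset)$ is the least $\ell mn$-complete extension of $F$ and contains $E$. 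Substituting $\lambda_F = \omega$ gives the stated form, and the identification of $D_{\substack{m \\ n}}^{\omega}(\emptyset)$ with the least complete extension of $F$ is underwritten by Proposition \ref{proposition:structure_gr}, which guarantees that $\lfp(D_n^m) = D_{\substack{m \\ n}}^{\lambda_F}(\emptyset)$ is $\ell$-conflict-free and therefore the grounded (hence least complete) extension under $\ell \geq m$ and $n \geq m$.

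Since the substantive work --- the fundamental lemma (Lemma \ref{lemma:fundamental lemma}), the Galois adjunction of Corollary \ref{corollary:properties_defense}, and the case analysis of Corollary \ref{corollary:construction_co} --- has already been carried out, I anticipate no genuine obstacle here; the result is essentially a translation of the parent corollaries into the finitary idiom. The only point demanding care is logical rather than technical: one must verify that the finitary hypothesis truly licenses $\lambda_F = \omega$ uniformly across all relevant starting sets (not merely for a single iteration), which is exactly the content of the continuity clause together with the fixed-point bound recorded in the discussion preceding Convention \ref{convention:lambda_F}. I would also be careful to emphasize that the contrast between ($a$) and ($b$) --- the stronger constraint $n \geq \ell \geq m$ versus the pair $\ell \geq m$, $n \geq m$ restricted to admissible sets below $D_{\substack{m \\ n}}^{\omega}(\emptyset)$ --- is inherited unchanged from the parent results and is precisely the distinction that Example \ref{Ex:counterexampleforGrossiTh2} shows cannot be collapsed.
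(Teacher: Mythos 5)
Your proposal is correct and follows essentially the same route as the paper: the paper also dispenses with a separate proof, noting that finitarity makes $D_n^m$ continuous so that $\lambda_F$ may be taken to be $\omega$, and then reads off clause ($a$) from Corollary \ref{corollary:properties_defense}($b$) and clause ($b$) from Corollary \ref{corollary:construction_co} together with Proposition \ref{proposition:structure_gr}. Your added remark that the choice $\lambda_F = \omega$ must work uniformly over all starting sets in $\textit{Def}^{mn}(F)$ is exactly what Convention \ref{convention:lambda_F} and the continuity clause guarantee, so there is no gap.
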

\begin{example}[Counterexample for II]\label{Ex:counterexampleforpr1}
	\begin{figure}[t]
		\centering
		\includegraphics[width=0.4\linewidth]{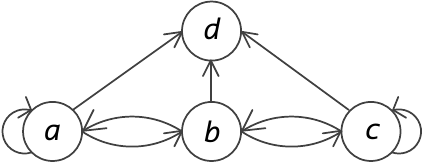}
		\caption{The AAF for Example \ref{Ex:counterexampleforpr1}.}
		\label{figure:counterexampleforpr}
	\end{figure}
	Let $\ell = 3$ and $m = n = 2$.
	Consider the AAF $F$ in Figure \ref{figure:counterexampleforpr} and $E = \set{a, b, c}$.
	Clearly, $\A \subseteq \Sigma_E$.
	Moreover, it is easy to check that $D_n^m(E) = D_n^m(D_n^m(E)) = \set{a, b, c, d}$, and hence $D_{\substack{m \\ n}}^{\omega}(E) = \set{a, b, c, d}$.	
	Since each argument in $E$ is attacked by only two arguments in $E$, we obtain $E \subseteq N_\ell(E)$, which, together with $E = \set{a, b, c} \subseteq \set{a, b, c, d} = D_n^m(E)$, implies $E \in \varepsilon_{ad}^{\ell mn}(F)$. Moreover, $d \notin N_\ell(D_n^m(E))$ because that there are three arguments $a$, $b$ and $c$ attacking $d$ in $D_n^m(E)$. 
	Thus, we have $D_n^m(E) \nsubseteq N_\ell(D_n^m(E))$, and hence $D_{\substack{m \\ n}}^{\omega}(E) = D_n^m(E) \notin \varepsilon_{cf}^{\ell}(F)$. 
	So, $D_{\substack{m \\ n}}^{\omega}(E)$ is not an $\ell mn$-complete extension of $F$ at all, and hence $D_{\substack{m \\ n}}^{\omega}(E) \notin \varepsilon_{pr}^{\ell mn}(F)$.
\end{example}
Corollaries \ref{corollary:well-founded_co_nonempty}($a$) and \ref{corollary:well-founded_co_empty}($b$) deal with the same issue as (II) under the assumption of well-foundedness instead of reachability (i.e., $\A \subseteq \Sigma_E$). It is not difficult to see that the reachability in (II) and well-foundedness are incomparable, that is, none implies the other. Example \ref{Ex:counterexampleforpr1} illustrates that the assumption $\ell \geq m$ and $n \geq m$ is too weak to preserve $\ell$-conflict-freeness. At this point, someone may intend to adopt the reachability instead of well-foundedness to show (II) under the assumption $n \geq \ell \geq m$ by induction on the length $d(E,a)$ of the shortest attack path from $E$ to an argument $a$. Although $d(E, a)$ always exists for each argument $a$ because of the reachability, this proof method is not feasible because, for $a \rightarrow b$ in $F$, it is that $d(E,b) \leq d(E,a) + 1$ always holds instead of $d(E,a) < d(E,b)$, which obstructs the inductive step because of lacking an available inductive hypothesis. By contrast, the well-foundedness of $\rightarrow^+$ on $\A - D_{\substack{m \\ n}}^{\lambda_F}(E)$ ensures that there exists a rank function $\rho$ which assigns ordinals to arguments such that $\rho(a) < \rho(b)$ whenever $a \rightarrow^+ b$ with $a, b \in \A - D_{\substack{m \\ n}}^{\lambda_F}(E)$, which is an instantiation of a well-known result about well-founded relations, see, eg., Theorem 2.27 in \cite{Jech03Book}. In fact, the assertion (II) still doesn't hold even if adopting the assumption $n \geq \ell \geq m$ instead of $\ell \geq m$ and $n \geq m$. Inspired by the analysis above, a counterexample for it is given below. The reader may find that $b \rightarrow c$ but $d(\set{a}, b) = d(\set{a}, c) = 1$ in this example.
\begin{example} \label{Ex:counterexampleforpr2}
	Consider the AAF $F$ in Figure \ref{figure:counterexampleforatt} with $\ell = m = n = 2$. It is easy to see that $\set{a} \in \varepsilon_{ad}^{\ell mn}(F)$ and $\Sigma_{\set{a}} = \set{a, b, c}$. Moreover, $\set{a} \in \varepsilon_{co}^{\ell mn}(F)$ due to $D_n^m(\set{a}) = \set{a}$. Hence, $D_{\substack{m \\ n}}^{\omega}(\set{a}) = \set{a}$. However, since $\set{a} \subset \set{a, b} \in \varepsilon_{co}^{\ell mn}(F)$, $D_{\substack{m \\ n}}^{\omega}(\set{a}) \notin \varepsilon_{pr}^{\ell mn}(F)$.
\end{example}
Based on the discussion and example above, it is natural to conjecture that the assertion (II) holds if adding the assumption that $d(E,a) < d(E,b)$ whenever $a \rightarrow b$ and adopting the assumption $n \geq \ell \geq m$. It indeed holds, moreover, the assumption $\ell \geq m$ and $n \geq m$ is sufficient in this situation. Thus, (II) may be corrected as follows, which is implied by Corollary \ref{corollary:well-founded_co_empty}.
\begin{corollary} \label{corollary:reachability}
	Let $F = \tuple{\A, \rightarrow}$ be an AAF, $E \in \varepsilon_{ad}^{\ell mn}(F)$, $\ell \geq m$ and $n \geq m$. If $\A \subseteq \Sigma_E$ and for each $a, b \in \A$, $d(E,a) < d(E,b)$ whenever $a \rightarrow b$ then $D_{\substack{m \\ n}}^{\lambda_F}(E)$ is the unique $\ell mn$-preferred extension containing $E$.
\end{corollary}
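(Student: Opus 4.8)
The plan is to reduce the statement to Corollary~\ref{corollary:well-founded_co_empty}(b), whose conclusion is \emph{precisely} ``$D_{\substack{m \\ n}}^{\lambda_F}(\emptyset)$ ($=D_{\substack{m \\ n}}^{\lambda_F}(E)$) is the unique $\ell mn$-preferred extension containing $E$'', and which holds under the very parameter constraints $\ell \geq m$ and $n \geq m$ already assumed here. Since that corollary requires only the hypothesis that $\rightarrow^+$ be well-founded on $\A - D_{\substack{m \\ n}}^{\lambda_F}(\emptyset)$, the entire task reduces to deriving this well-foundedness from the reachability assumption $\A \subseteq \Sigma_E$ together with the distance-monotonicity condition $d(E,a) < d(E,b)$ whenever $a \rightarrow b$. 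Once well-foundedness is secured, Corollary~\ref{corollary:well-founded_co_empty}(a) identifies $D_{\substack{m \\ n}}^{\lambda_F}(E)$ with $D_{\substack{m \\ n}}^{\lambda_F}(\emptyset)$ (as $E \in \varepsilon_{ad}^{\ell mn}(F) \subseteq \textit{Def}^{mn}(F)$), and clause (b) completes the argument.

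First I would note that $\A \subseteq \Sigma_E$ guarantees that the shortest-attack-path distance $d(E,a)$ is a well-defined natural number for every $a \in \A$. The crucial intermediate claim is that the single-step monotonicity lifts to the transitive closure: if $a \rightarrow^+ b$, realized by an attack path $a = c_0 \rightarrow c_1 \rightarrow \cdots \rightarrow c_k = b$ with $k \geq 1$, then applying the hypothesis at each edge gives $d(E,c_0) < d(E,c_1) < \cdots < d(E,c_k)$, whence $d(E,a) < d(E,b)$. Thus $d(E,\cdot)$ is strictly increasing along $\rightarrow^+$.

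Next I would establish well-foundedness by contradiction. An infinite $\rightarrow^+$-descending sequence $a_0~^+\!\!\leftarrow a_1~^+\!\!\leftarrow a_2~^+\!\!\leftarrow \cdots$ (that is, $a_{i+1} \rightarrow^+ a_i$ for all $i$) would, by the claim just established, produce $d(E,a_0) > d(E,a_1) > d(E,a_2) > \cdots$, an infinite strictly decreasing sequence of natural numbers, which is impossible. Hence $\rightarrow^+$ is well-founded on all of $\A$, and \emph{a fortiori} on $\A - D_{\substack{m \\ n}}^{\lambda_F}(\emptyset)$. Invoking Corollary~\ref{corollary:well-founded_co_empty}(a),(b) then yields $D_{\substack{m \\ n}}^{\lambda_F}(E) = D_{\substack{m \\ n}}^{\lambda_F}(\emptyset)$ as the unique $\ell mn$-preferred extension containing $E$, as required.

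The argument is short, so there is no deep obstacle; the only point requiring care is the book-keeping of attack directions---matching the ``descending chain'' convention of Definition~\ref{definition:well-founded} against the strict \emph{increase} of $d(E,\cdot)$---and the lifting of the single-edge inequality to $\rightarrow^+$ through a finite path. It is worth emphasizing that the distance condition is doing genuine work here: reachability alone is insufficient, since it permits the attack cycles exhibited in Examples~\ref{Ex:counterexampleforpr1} and~\ref{Ex:counterexampleforpr2}, along which $d(E,\cdot)$ cannot be strictly monotone and the conclusion fails. The hypothesis $d(E,a) < d(E,b)$ precisely excludes such cycles and thereby forces the well-foundedness needed to apply Corollary~\ref{corollary:well-founded_co_empty}.
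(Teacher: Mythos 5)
Your proof is correct and follows essentially the same route as the paper: both derive well-foundedness from the reachability and distance-monotonicity hypotheses via an impossible infinite strictly decreasing sequence of natural numbers, and then invoke Corollary~\ref{corollary:well-founded_co_empty} to conclude. The only cosmetic difference is that you verify well-foundedness of $\rightarrow^+$ directly by lifting the single-edge inequality along paths, whereas the paper notes that $F$ itself is well-founded (which, for global well-foundedness, is equivalent).
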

\begin{proof}
	Since $\A \subseteq \Sigma_E$ and $d(E,a) < d(E,b)$ whenever $a \rightarrow b$ for all $a, b \in \A$, it isn't difficult to see that $F$ itself is well-founded, otherwise, there exists an infinite decreasing sequence of natural numbers $n_0 > n_1 > n_2 > \cdots > n_i > \cdots (i < \omega)$, which is impossible. Thus, by Corollary \ref{corollary:well-founded_co_empty}, $D_{\substack{m \\ n}}^{\lambda_F}(E)$ is the unique $\ell mn$-preferred extension containing $E$.
\end{proof}
More generally, by Corollaries \ref{corollary:well-founded_co_nonempty} and \ref{corollary:well-founded_co_empty}, Corollary \ref{corollary:reachability} may be strengthened as follows.
\begin{corollary}
	Let $F = \tuple{\A, \rightarrow}$ be an AAF, $E \in \varepsilon_{ad}^{\ell mn}(F)$, $\A - D_{\substack{m \\ n}}^{\lambda_F}(E) \subseteq \Sigma_E$ and for each $a, b \notin D_{\substack{m \\ n}}^{\lambda_F}(E)$, $d(E,a) < d(E,b)$ whenever $a \rightarrow^+ b$. If $n \geq \ell \geq m$ (or, $E \subseteq D_{\substack{m \\ n}}^{\lambda_F}(\emptyset)$, $\ell \geq m$ and $n \geq m$) then $D_{\substack{m \\ n}}^{\lambda_F}(E)$ is the unique $\ell mn$-preferred extension containing $E$.
\end{corollary}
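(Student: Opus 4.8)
The plan is to reduce this strengthened statement to the two corollaries on which it is explicitly built, namely Corollary~\ref{corollary:well-founded_co_nonempty} and Corollary~\ref{corollary:well-founded_co_empty}. Both of these already deliver exactly the desired conclusion — that $D_{\substack{m \\ n}}^{\lambda_F}(E)$ is the unique $\ell mn$-preferred extension containing $E$ — once one knows that $\rightarrow^+$ is well-founded on $\A - D_{\substack{m \\ n}}^{\lambda_F}(E)$. Hence the whole burden of the proof is to derive this local well-foundedness from the two structural hypotheses supplied: that $\A - D_{\substack{m \\ n}}^{\lambda_F}(E) \subseteq \Sigma_E$, and that $d(E,a) < d(E,b)$ holds for all $a, b \notin D_{\substack{m \\ n}}^{\lambda_F}(E)$ whenever $a \rightarrow^+ b$.

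First I would use the reachability hypothesis $\A - D_{\substack{m \\ n}}^{\lambda_F}(E) \subseteq \Sigma_E$ to guarantee that the rank $d(E,a)$ is a genuine natural number for every $a \in \A - D_{\substack{m \\ n}}^{\lambda_F}(E)$: membership in $\Sigma_E$ means there is some $b \in E$ with $b \rightarrow^+ a$, so a finite attack path from $E$ to $a$ exists and $d(E,a)$ is well-defined. With this in hand, the well-foundedness of $\rightarrow^+$ on $\A - D_{\substack{m \\ n}}^{\lambda_F}(E)$ follows by the same descending-chain argument used in Corollary~\ref{corollary:reachability}, now localized to the complement of $D_{\substack{m \\ n}}^{\lambda_F}(E)$: were there an infinite sequence $a_0~^+\!\!\leftarrow a_1~^+\!\!\leftarrow a_2~^+\!\!\leftarrow \cdots$ lying entirely inside $\A - D_{\substack{m \\ n}}^{\lambda_F}(E)$, the hypothesis $d(E,a_{i+1}) < d(E,a_i)$ would produce an infinite strictly decreasing sequence of natural numbers, which is impossible. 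Thus $\rightarrow^+$ is well-founded on $\A - D_{\substack{m \\ n}}^{\lambda_F}(E)$, precisely the hypothesis shared by Corollaries~\ref{corollary:well-founded_co_nonempty} and~\ref{corollary:well-founded_co_empty}.

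It then remains to split on the two parameter regimes. If $n \geq \ell \geq m$, I would invoke Corollary~\ref{corollary:well-founded_co_nonempty}($a$) directly, which yields that $D_{\substack{m \\ n}}^{\lambda_F}(E)$ is the largest (hence unique) $\ell mn$-complete extension containing $E$ and is an $\ell mn$-preferred extension; being the largest complete extension, it is the only maximal one, so it is the unique preferred extension containing $E$. In the alternative regime ($E \subseteq D_{\substack{m \\ n}}^{\lambda_F}(\emptyset)$ with $\ell \geq m$ and $n \geq m$), I would first note, as recorded in Corollary~\ref{corollary:construction_co}, that $E \subseteq D_{\substack{m \\ n}}^{\lambda_F}(\emptyset)$ forces $D_{\substack{m \\ n}}^{\lambda_F}(E) = D_{\substack{m \\ n}}^{\lambda_F}(\emptyset)$; consequently $\A - D_{\substack{m \\ n}}^{\lambda_F}(E) = \A - D_{\substack{m \\ n}}^{\lambda_F}(\emptyset)$, so the well-foundedness just established is exactly the hypothesis of Corollary~\ref{corollary:well-founded_co_empty}, whose clause ($b$) then gives the claim.

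The only genuinely delicate point, and the one I would verify most carefully, is the first step's reduction: one must confirm that the localized descending-chain argument really only invokes ranks of vertices lying outside $D_{\substack{m \\ n}}^{\lambda_F}(E)$ — which it does, since every $a_i$ in the putative infinite chain is assumed to lie in $\A - D_{\substack{m \\ n}}^{\lambda_F}(E)$, so the hypothesis on $d(E,\cdot)$ applies to each consecutive pair $a_{i+1} \rightarrow^+ a_i$. Everything else is bookkeeping that defers to the already-proved corollaries, and no fresh fixed-point or lattice-theoretic argument is needed.
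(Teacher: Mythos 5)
Your proof is correct and follows exactly the route the paper intends: the paper states this corollary as a consequence of Corollaries~\ref{corollary:well-founded_co_nonempty} and~\ref{corollary:well-founded_co_empty}, and your argument supplies the missing link — deriving well-foundedness of $\rightarrow^+$ on $\A - D_{\substack{m \\ n}}^{\lambda_F}(E)$ from the rank hypothesis via the descending-chain argument of Corollary~\ref{corollary:reachability}, then splitting on the two parameter regimes (using Corollary~\ref{corollary:construction_co} to identify $D_{\substack{m \\ n}}^{\lambda_F}(E)$ with $D_{\substack{m \\ n}}^{\lambda_F}(\emptyset)$ in the second regime, which correctly avoids any circularity). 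No gaps.
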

Next we continue to comment on (III), (IV) and (V). In \cite{Grossi19Graded}, (III) is obtained as a corollary of (I). Although (I) doesn't hold, (III) is true in itself. It isn't difficult to see that (III) is implied by Corollary \ref{corollary:lfp_omega}($b$) in this paper. 
\par The result (IV) indeed holds but the assumption $n \geq m$ is superfluous. A proof without using this assumption is given below.
\begin{proposition} \label{proposition:structure_stb}
	Let $F$ be an AAF, $E \in \varepsilon_{ad}^{\ell mn}(F)$ and $\ell \geq m$. Then $D_{\substack{m \\ n}}^{\lambda_F}(E)$ is the least $\ell mn$-stable extension containing $E$ iff $D_{\substack{m \\ n}}^{\lambda_F}(E) = N_n(D_{\substack{m \\ n}}^{\lambda_F}(E))$.
\end{proposition}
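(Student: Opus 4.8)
The plan is to abbreviate $G \triangleq D_{\substack{m \\ n}}^{\lambda_F}(E)$ and to begin by recording, via Convention \ref{convention:lambda_F}, that $G$ is the least fixed point of $D_n^m$ over $E$; this is legitimate since $E \in \varepsilon_{ad}^{\ell mn}(F) \subseteq \textit{Def}^{mn}(F)$, and it gives $E \subseteq G = D_n^m(G)$. I would then isolate the real content as a clean reformulation of stability for this particular set: namely, $G \in \varepsilon_{stb}^{\ell mn}(F)$ iff $G = N_n(G)$. The forward direction of this reformulation is immediate from Definition \ref{definition:graded_extensions}. For the converse, assuming $G = N_n(G)$, I would invoke Lemma \ref{lemma:Grossi_simple}(c) to compute $G = D_n^m(G) = N_m(N_n(G)) = N_m(G)$, and then observe that $\ell \geq m$ forces $N_m(G) \subseteq N_\ell(G)$ (an argument attacked by at most $m-1$ members of $G$ is attacked by at most $\ell-1$ of them); hence $G = N_n(G) = N_m(G) \subseteq N_\ell(G)$, which is exactly $\ell mn$-stability.

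Granting this reformulation, the two directions of the proposition follow quickly. For ($\Rightarrow$), if $G$ is the least $\ell mn$-stable extension containing $E$ then in particular $G \in \varepsilon_{stb}^{\ell mn}(F)$, so $G = N_n(G)$. For ($\Leftarrow$), if $G = N_n(G)$ then the reformulation gives $G \in \varepsilon_{stb}^{\ell mn}(F)$, and since $E \subseteq G$ it is an $\ell mn$-stable extension containing $E$; it remains only to check minimality. Here I would take an arbitrary $S \in \varepsilon_{stb}^{\ell mn}(F)$ with $E \subseteq S$ and show $G \subseteq S$: from stability $S = N_n(S) = N_m(S)$, so $D_n^m(S) = N_m(N_n(S)) = N_m(S) = S$ by Lemma \ref{lemma:Grossi_simple}(c), i.e.\ $S$ is a fixed point of $D_n^m$ containing $E$. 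Since $G$ is the least such fixed point over $E$, we conclude $G \subseteq S$.

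I do not expect a serious obstacle: the whole argument is a short chain of equalities resting on Lemma \ref{lemma:Grossi_simple}(c) together with the least-fixed-point characterization of $G$. The one step that needs care is the containment $N_m(X) \subseteq N_\ell(X)$ for $\ell \geq m$, which is the sole place where the hypothesis $\ell \geq m$ (rather than the stronger $n \geq \ell \geq m$ used elsewhere in the paper) is exploited, and it is precisely this observation that lets me dispense with the redundant assumption $n \geq m$ imposed in Fact 11 of \cite{Grossi19Graded}. It is also worth flagging that the implication $G = N_n(G) \Rightarrow G \in \varepsilon_{stb}^{\ell mn}(F)$ crucially uses that $G$ is already a fixed point of $D_n^m$; the same implication would fail for an arbitrary post-fixpoint, so the reduction to fixed points supplied by Convention \ref{convention:lambda_F} is doing genuine work and should be stated up front.
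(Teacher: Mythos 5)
Your proposal is correct and follows essentially the same route as the paper's proof: the nontrivial direction rests on the identity $D_{\substack{m \\ n}}^{\lambda_F}(E) = D_n^m(D_{\substack{m \\ n}}^{\lambda_F}(E)) = N_m(N_n(D_{\substack{m \\ n}}^{\lambda_F}(E))) = N_m(D_{\substack{m \\ n}}^{\lambda_F}(E))$ via Lemma \ref{lemma:Grossi_simple}(c), the inclusion $N_m \subseteq N_\ell$ from $\ell \geq m$, and minimality from the facts that every $\ell mn$-stable extension is a fixed point of $D_n^m$ and that $D_{\substack{m \\ n}}^{\lambda_F}(E)$ is the least fixed point over $E$. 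Your packaging of the key step as a standalone reformulation, and your explicit verification that stable extensions are fixed points (which the paper merely asserts), are only presentational differences.
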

\begin{proof}
	We only show that the right implies the left, the other one is trivial.
	Since $E \in \varepsilon_{ad}^{\ell mn}(F)$, $E \subseteq D_{\substack{m \\ n}}^{\lambda_F}(E) = D_n^m(D_{\substack{m \\ n}}^{\lambda_F}(E))$.
	Moreover, due to the assumption $D_{\substack{m \\ n}}^{\lambda_F}(E) = N_n(D_{\substack{m \\ n}}^{\lambda_F}(E))$, by Lemma \ref{lemma:Grossi_simple}(c), we have 
	\begin{eqnarray*}
		D_{\substack{m \\ n}}^{\lambda_F}(E) = D_n^m(D_{\substack{m \\ n}}^{\lambda_F}(E)) = N_m(N_n(D_{\substack{m \\ n}}^{\lambda_F}(E))) = N_m(D_{\substack{m \\ n}}^{\lambda_F}(E)).
	\end{eqnarray*}
	Further, it follows from $\ell \geq m$ that $N_m(D_{\substack{m \\ n}}^{\lambda_F}(E)) \subseteq N_\ell(D_{\substack{m \\ n}}^{\lambda_F}(E))$. 
	Summarizing, we have 
	\begin{eqnarray*}
		N_n(D_{\substack{m \\ n}}^{\lambda_F}(E)) = D_{\substack{m \\ n}}^{\lambda_F}(E) = N_m(D_{\substack{m \\ n}}^{\lambda_F}(E)) \subseteq N_\ell(D_{\substack{m \\ n}}^{\lambda_F}(E)).
	\end{eqnarray*}
	Hence $D_{\substack{m \\ n}}^{\lambda_F}(E) \in \varepsilon_{stb}^{\ell mn}(F)$.
	Further, since $D_{\substack{m \\ n}}^{\lambda_F}(E)$ is the least among fixpoints of $D_n^m$ containing $E$ and each $\ell mn$-stable extension is a fixed point of $D_n^m$, $D_{\substack{m \\ n}}^{\lambda_F}(E)$ is the least $\ell mn$-stable extension containing $E$.
\end{proof}
For finitary AAFs, by Lemma \ref{lemma:Grossi_simple}($d$), the function $D_n^m$ is continuous, and hence the proposition above still holds if replacing $\lambda_F$ by $\omega$. Moreover, since $\set{D_{\substack{m \\ n}}^{k}(E) \mid k < \omega}$ is a chain, $N_n(D_{\substack{m \\ n}}^{\omega}(E)) = \bigcap_{k < \omega} N_n(D_{\substack{m \\ n}}^{k}(E))$ by \textit{directed De Morgan law} in Lemma \ref{lemma:properties_cf_def}.
Thus, the proposition above restricted to finitary AAFs coincides with (IV) except dispensing with the assumption $n \geq m$.
\begin{corollary} \label{corollary:well-founded_stb}
	Let $F = \tuple{\A, \rightarrow}$ be an AAF, $E \in \varepsilon_{ad}^{\ell mn}(F)$ and $\ell = m = n$.
	If $\rightarrow^+$ is well-founded on $\A - D_{\substack{m \\ n}}^{\lambda_F}(E)$ then $\varepsilon_{stb}^{\ell mn}(F) = \set{D_{\substack{m \\ n}}^{\lambda_F}(E)}$.
\end{corollary}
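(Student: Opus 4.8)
The plan is to set $G \triangleq D_{\substack{m \\ n}}^{\lambda_F}(E)$ and prove the two inclusions $\set{G} \subseteq \varepsilon_{stb}^{\ell mn}(F)$ and $\varepsilon_{stb}^{\ell mn}(F) \subseteq \set{G}$ separately. Since $\ell = m = n$ forces $n \geq \ell \geq m$, Corollary \ref{corollary:well-founded_co_nonempty}($a$) applies verbatim and tells us that $G$ is the largest $\ell mn$-complete extension of $F$; in particular $G \in \varepsilon_{co}^{\ell mn}(F)$, so $G \subseteq N_\ell(G)$. Because $\ell = m = n$, being $\ell mn$-stable collapses (via $N_m(N_n(\cdot)) = D_n^m(\cdot)$, Lemma \ref{lemma:Grossi_simple}($c$)) to the single equation $G = N_\ell(G)$. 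Thus the whole statement reduces to proving $N_\ell(G) \subseteq G$ together with the uniqueness claim.

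First I would establish $N_\ell(G) \subseteq G$ by well-founded induction on the relation $\rightarrow^+$ restricted to $\A - G$, which is legitimate because $\rightarrow^+$ is well-founded on $\A - D_{\substack{m \\ n}}^{\lambda_F}(E) = \A - G$ by hypothesis (this is the same induction principle already used in Lemma \ref{lemma:transitive closure}($a$)). The claim to prove is: for every $a \in \A - G$ we have $a \notin N_\ell(G)$. Fix such an $a$ and assume as IH that $c \notin N_\ell(G)$ for every $c \in \A - G$ with $c \rightarrow^+ a$. Suppose toward a contradiction that $a \in N_\ell(G)$. Since $a \notin G$ and $G = D_\ell^\ell(G) = N_\ell(N_\ell(G))$, unfolding the defense function yields $\ell$ distinct attackers $b_1, \ldots, b_\ell$ of $a$ with each $b_i \in N_\ell(G)$. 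On the other hand, $a \in N_\ell(G)$ says fewer than $\ell$ attackers of $a$ lie in $G$, so at least one of these $\ell$ attackers, say $b_{i_0}$, lies in $\A - G$ while still $b_{i_0} \in N_\ell(G)$. But then $b_{i_0} \rightarrow a$ gives $b_{i_0} \rightarrow^+ a$, and the IH forces $b_{i_0} \notin N_\ell(G)$, a contradiction. Hence $a \notin N_\ell(G)$, completing the induction and giving $N_\ell(G) \subseteq G$; together with $G \subseteq N_\ell(G)$ this shows $G = N_\ell(G)$, i.e., $G \in \varepsilon_{stb}^{\ell mn}(F)$. (Equivalently, $G = N_\ell(G) = N_n(G)$ is exactly the hypothesis needed to apply Proposition \ref{proposition:structure_stb}, which then also identifies $G$ as the least $\ell mn$-stable extension containing $E$.)

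Second, for uniqueness let $S \in \varepsilon_{stb}^{\ell mn}(F)$ be arbitrary. As above, $S = N_\ell(S)$ implies $D_\ell^\ell(S) = N_\ell(N_\ell(S)) = S$ and $S \subseteq N_\ell(S)$, so $S \in \varepsilon_{co}^{\ell mn}(F)$; since $G$ is the largest complete extension, $S \subseteq G$. Applying the antitonicity of $N_\ell$ (Lemma \ref{lemma:Grossi_simple}($a$)) to $S \subseteq G$ gives $N_\ell(G) \subseteq N_\ell(S)$, whence $G = N_\ell(G) \subseteq N_\ell(S) = S$. Combined with $S \subseteq G$ this yields $S = G$, so $\varepsilon_{stb}^{\ell mn}(F) \subseteq \set{G}$, and together with the first part $\varepsilon_{stb}^{\ell mn}(F) = \set{G}$.

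The main obstacle is the well-founded induction establishing $N_\ell(G) \subseteq G$; the delicate point is the counting argument reconciling the two graded conditions. The equality $\ell = m = n$ is exactly what makes the thresholds line up: $a \notin D_\ell^\ell(G)$ produces $\ell$ ``undefended'' attackers lying in $N_\ell(G)$, while $a \in N_\ell(G)$ caps the attackers of $a$ inside $G$ at $\ell - 1$, and it is precisely this mismatch that guarantees an attacker outside $G$ to which the inductive hypothesis can be applied. Without the equality of the parameters the two counts would no longer be forced to disagree and the inductive step would break, which is consistent with the counterexamples recorded earlier for weaker parameter constraints.
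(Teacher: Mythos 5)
Your proof is correct, and its outer skeleton coincides with the paper's: you invoke Corollary \ref{corollary:well-founded_co_nonempty}($a$) to identify $G \triangleq D_{\substack{m \\ n}}^{\lambda_F}(E)$ as the largest $\ell mn$-complete extension, and your uniqueness step (stable implies complete, hence contained in $G$; then antitonicity of $N_\ell$ gives $G = N_\ell(G) \subseteq N_\ell(S) = S$) is exactly the paper's closing argument. The genuine difference is how the key equation $G = N_\ell(G)$ is obtained. The paper notes that, because $m = n$, Lemma \ref{lemma:Grossi_simple}($c$) gives $N_n(G) = N_n(N_m(N_n(G))) = D_n^m(N_n(G))$, so $N_n(G)$ is itself a fixed point of $D_n^m$ and hence lies in $\textit{Def}^{mn}(F)$; since $G \subseteq N_n(G)$, $D_n^m(G) \subseteq G$, and $\rightarrow^+$ is well-founded on $\A - G$, Lemma \ref{lemma:transitive closure}($b$) immediately forces $G = N_n(G)$. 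You instead prove $N_\ell(G) \subseteq G$ from scratch by well-founded induction on $\rightarrow^+$ restricted to $\A - G$, exploiting the counting mismatch between $a \notin N_\ell(N_\ell(G))$ (at least $\ell$ distinct attackers of $a$ inside $N_\ell(G)$) and $a \in N_\ell(G)$ (at most $\ell - 1$ attackers of $a$ inside $G$) to extract an attacker in $N_\ell(G) \setminus G$ to which the inductive hypothesis applies. Both arguments are sound: the paper's route is shorter because it reuses the well-foundedness machinery already packaged in Lemma \ref{lemma:transitive closure} (whose own proof is a well-founded induction of the same flavor as yours), whereas your argument is self-contained, makes explicit exactly where $\ell = m = n$ enters the combinatorics, but in effect re-proves inline the particular instance of that lemma that the paper simply cites.
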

\begin{proof}
	Since $n = m$ and $D_{\substack{m \\ n}}^{\lambda_F}(E)$ is a fixed point of $D_n^m$, by Lemma \ref{lemma:Grossi_simple}($c$), we have 
	\begin{eqnarray*}
		N_n(D_{\substack{m \\ n}}^{\lambda_F}(E)) 
		\!=\! N_n(D_n^m(D_{\substack{m \\ n}}^{\lambda_F}\!(E)\!))
		\!=\! N_n(N_m(N_n(D_{\substack{m \\ n}}^{\lambda_F}\!(E)\!)))
		\!=\! D_n^m(N_n(D_{\substack{m \\ n}}^{\lambda_F}(E))).
	\end{eqnarray*}
	Hence 
	\begin{eqnarray}
		N_n(D_{\substack{m \\ n}}^{\lambda_F}(E)) \in \textit{Def}^{mn}(F).  \label{formula:cor_stb1}
	\end{eqnarray}
	Moreover, since $\ell = m = n$, by Corollary \ref{corollary:properties_defense}($b$), $D_{\substack{m \\ n}}^{\lambda_F}(E) \in \varepsilon_{co}^{\ell mn}(F)$, and hence $D_{\substack{m \\ n}}^{\lambda_F}(E) \subseteq N_\ell(D_{\substack{m \\ n}}^{\lambda_F}(E)) = N_n(D_{\substack{m \\ n}}^{\lambda_F}(E))$.
	Further, by (\ref{formula:cor_stb1}) and Lemma \ref{lemma:transitive closure}($b$), we get $D_{\substack{m \\ n}}^{\lambda_F}(E) = N_n(D_{\substack{m \\ n}}^{\lambda_F}(E))$.
	Thus, by Proposition \ref{proposition:structure_stb} and Corollary \ref{corollary:well-founded_co_nonempty}($a$), $D_{\substack{m \\ n}}^{\lambda_F}(E)$ is the largest $\ell mn$-stable extension of $F$.
	Let $E' \in \varepsilon_{stb}^{\ell mn}(F)$.
	Then, $E' \subseteq D_{\substack{m \\ n}}^{\lambda_F}(E)$, and hence, by Definition \ref{definition:graded_extensions} and Lemma \ref{lemma:Grossi_simple}($a$), $D_{\substack{m \\ n}}^{\lambda_F}(E) = N_n(D_{\substack{m \\ n}}^{\lambda_F}(E)) \subseteq N_n(E') = E'$.
	Thus, $D_{\substack{m \\ n}}^{\lambda_F}(E)$ is the unique element in $\varepsilon_{stb}^{\ell mn}(F)$.
\end{proof}
Finally, we consider the assertion (V). It is well known that $\varepsilon_{stb}^{111}(F) \subseteq \varepsilon_{pr}^{111}(F)$ for each AAF $F$ \cite{Dung95Acceptability}. In \cite{Grossi19Graded}, the same assertion is also given in the graded situation (see, Fact 12 in \cite{Grossi19Graded}). Unfortunately, it doesn't hold in the graded case.
\begin{example}[Counterexample for V] \label{Ex: Counterexample for V}
	Consider the AAF $F$ in Figure \ref{figure:counterexampleforatt} again with $\ell = 3$ and $m = n = 2$. It is easy to verify that $\set{a, b} \in \varepsilon_{stb}^{\ell mn}(F)$. However, $\set{a, b} \notin \varepsilon_{pr}^{\ell mn}(F)$ due to $\set{a, b} \subset \set{a, b, c} \in \varepsilon_{co}^{\ell mn}(F)$.
\end{example}
The assertion (V) may be corrected as below.
\begin{observation}
	For any AAF $F$, $\varepsilon_{stb}^{\ell mn}(F) \subseteq \varepsilon_{pr}^{\ell mn}(F)$ whenever $\ell \leq m$ or $\ell \leq n$.
\end{observation}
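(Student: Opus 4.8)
The plan is to prove the two defining conditions of an $\ell mn$-preferred extension separately: first that every $\ell mn$-stable extension is $\ell mn$-complete, and then that it is $\subseteq$-maximal among $\ell mn$-complete extensions. Let $E \in \varepsilon_{stb}^{\ell mn}(F)$, so by Definition \ref{definition:graded_extensions} we have $E = N_n(E) = N_m(E) \subseteq N_\ell(E)$.

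For the first step, I would observe that completeness follows from stability \emph{without} any constraint on the parameters. Indeed, by Lemma \ref{lemma:Grossi_simple}($c$) we have $D_n^m(E) = N_m(N_n(E))$; substituting $N_n(E) = E$ and then $N_m(E) = E$ gives $D_n^m(E) = N_m(E) = E$. Together with $E \subseteq N_\ell(E)$, this yields $E \in \varepsilon_{co}^{\ell mn}(F)$ by Definition \ref{definition:graded_extensions}. Note that the hypothesis $\ell \leq m$ or $\ell \leq n$ has not yet been used; it enters only in establishing maximality.

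For the second step, suppose towards a contradiction that $E \subset E'$ for some $E' \in \varepsilon_{co}^{\ell mn}(F)$, and pick any $a \in E' \setminus E$. Since $a \notin E = N_n(E)$, Definition \ref{definition:graded-neutrality} supplies at least $n$ distinct arguments in $E$ attacking $a$; likewise $a \notin E = N_m(E)$ supplies at least $m$ such attackers. Hence $a$ is attacked by at least $\max(m, n)$ arguments lying in $E$, and these are also in $E'$ because $E \subseteq E'$. The assumption $\ell \leq m$ or $\ell \leq n$ guarantees $\ell \leq \max(m, n)$, so $a$ is attacked by at least $\ell$ arguments in $E'$, i.e.\ $a \notin N_\ell(E')$. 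This contradicts $a \in E' \subseteq N_\ell(E')$, which holds since $E'$ is $\ell mn$-complete. Therefore no $\ell mn$-complete extension strictly contains $E$, and $E \in \varepsilon_{pr}^{\ell mn}(F)$ by Definition \ref{definition:graded_extensions}.

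The argument is short, and I do not anticipate a serious obstacle; the only point requiring care is the bookkeeping of attackers. The crux is the observation that stability forces every argument outside $E$ to be attacked by at least $\max(m,n)$ members of $E$, so that the weaker of the two hypotheses $\ell \le m$ or $\ell \le n$ already suffices to violate $\ell$-conflict-freeness of any proper complete superset of $E$. This also makes transparent why the original assertion (V) fails: when $\ell > m$ and $\ell > n$, an argument outside a stable $E$ may be attacked by exactly $\max(m,n) < \ell$ members of $E$, leaving room for a strictly larger complete extension, precisely as exhibited in Example \ref{Ex: Counterexample for V}.
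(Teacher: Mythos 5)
Your proof is correct and takes essentially the same route as the paper's: the paper's chain $E' \subseteq N_\ell(E') \subseteq N_\ell(E) \subseteq E$, which rests on antitonicity of $N_\ell$ and on the fact that $\ell \leq m$ or $\ell \leq n$ forces $N_\ell(E) \subseteq N_m(E) = E$ or $N_\ell(E) \subseteq N_n(E) = E$, is exactly the set-level form of your element-wise counting of at least $\max(m,n) \geq \ell$ attackers in $E \subseteq E'$ for any $a \notin E$. The only cosmetic differences are that you spell out the (parameter-free) step that stable implies complete, which the paper merely asserts, and that you argue by contradiction from a strict inclusion where the paper concludes $E' \subseteq E$ directly.
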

\begin{proof}
	Let $E \in \varepsilon_{stb}^{\ell mn}(F)$. Then $E \in \varepsilon_{co}^{\ell mn}(F)$. Assume $E \subseteq E'$ with $E' \in \varepsilon_{co}^{\ell mn}(F)$. Then, by the clause ($a$) in Lemma \ref{lemma:Grossi_simple}, $N_\ell(E') \subseteq N_\ell(E)$. Moreover, since $E = N_n(E) = N_m(E)$, $N_\ell(E) \subseteq E$ due to $\ell \leq m$ or $\ell \leq n$. Thus $E' \subseteq N_\ell(E') \subseteq N_\ell(E) \subseteq E$ due to $E' \in \varepsilon_{co}^{\ell mn}(F)$. Consequently, $E = E'$, as desired.
\end{proof}
Moreover, by Lemma \ref{lemma:transitive closure}, it is easy to see that for any $E \in \varepsilon_{stb}^{\ell mn}(F)$, $E \in \varepsilon_{pr}^{\ell mn}(F)$ whenever $\rightarrow^+$ is well-founded on $\A - E$. Thus, we always have $\varepsilon_{stb}^{\ell mn}(F) \subseteq \varepsilon_{pr}^{\ell mn}(F)$ for all well-founded AAF $F$. Since $\rightarrow^+$ is locally well-founded on any subset $X$ of $\A$ whenever a given AAF $\tuple{\A, \rightarrow}$ is well-founded, a number of properties of well-founded AAFs immediately follow from ones of local well-founded AAFs, which are summarized below. Among these, the clauses ($a$) and ($b$) together provide a graded variant of the one due to Dung \cite[Theorem 30]{Dung95Acceptability}.
\begin{corollary} \label{corollary:well-founded_AAF_stb}
	Let $F = \tuple{\A, \rightarrow}$ be a well-founded AAF. 
	\begin{itemize}
		\item[a.] $\varepsilon_{co}^{\ell mn}(F) = \varepsilon_{pr, \textit{Dung}}^{\ell mn}(F) = \varepsilon_{gr, \textit{Dung}}^{\ell mn}(F) = \set{D_{\substack{m \\ n}}^{\lambda_F}(\emptyset)}$ whenever $\ell \geq m$ and $n \geq m$.
		\item[b.] $\varepsilon_{stb}^{\ell mn}(F) = \set{D_{\substack{m \\ n}}^{\lambda_F}(\emptyset)}$ whenever $\ell = m = n$.
		\item[c.] $\varepsilon_{ad}^{\ell mn}(F) = \textit{Def}^{mn}(F)$ whenever $\ell \geq m$ and $n \geq m$.
		\item[d.] $\varepsilon_{co}^{\ell mn}(F) \neq \emptyset$ iff $D_{\substack{m \\ n}}^{\xi}(E) \in \varepsilon_{ad}^{\ell mn}(F)$ for any $E \in \varepsilon_{ad}^{\ell mn}(F)$ and ordinal $\xi$.
		\item[e.] $\varepsilon_{stb}^{\ell mn}(F) \subseteq \varepsilon_{pr}^{\ell mn}(F)$.
	\end{itemize}
\end{corollary}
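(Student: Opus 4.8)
The plan is to derive all five clauses as direct instantiations of the local-well-foundedness results of this subsection, the enabling point being that \emph{global} well-foundedness is inherited by every subset. First I would record the observation that does all the work: since $F$ is well-founded, $\rightarrow$ (equivalently $\rightarrow^+$, as noted above) is well-founded on $\A$, and hence $\rightarrow^+$ is well-founded on \emph{every} subset $X \subseteq \A$, because any infinite $\rightarrow^+$-descending sequence inside $X$ would already be one inside $\A$. In particular the hypothesis ``$\rightarrow^+$ is well-founded on $\A - Z$'' required by Corollaries \ref{corollary:well-founded_co_empty}, \ref{corollary:equ_ad_def} and \ref{corollary:well-founded_stb}, and by Lemma \ref{lemma:transitive closure}, holds for \emph{any} excluded set $Z$; this reduces each clause to choosing the right $Z$.

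For clauses ($a$), ($c$) and ($d$), which all assume $\ell \geq m$ and $n \geq m$, I would take $Z = D_{\substack{m \\ n}}^{\lambda_F}(\emptyset)$ so that the hypotheses of Corollary \ref{corollary:well-founded_co_empty} are met. Its clause ($b$) then gives $\varepsilon_{co}^{\ell mn}(F) = \set{D_{\substack{m \\ n}}^{\lambda_F}(\emptyset)}$, and its clause ($d$) shows $D_{\substack{m \\ n}}^{\lambda_F}(\emptyset)$ is the top of $\tuple{\varepsilon_{ad}^{\ell mn}(F), \subseteq}$, hence the unique maximal admissible set, i.e.\ $\varepsilon_{pr, \textit{Dung}}^{\ell mn}(F) = \set{D_{\substack{m \\ n}}^{\lambda_F}(\emptyset)}$; since $\varepsilon_{gr, \textit{Dung}}^{\ell mn}(F) = \set{\lfp(D_n^m)} = \set{D_{\substack{m \\ n}}^{\lambda_F}(\emptyset)}$ holds for all parameters by Convention \ref{convention:lambda_F}, clause ($a$) follows. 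Clause ($c$) is then immediate from Corollary \ref{corollary:equ_ad_def}, whose hypotheses are exactly those of Corollary \ref{corollary:well-founded_co_empty}. Clause ($d$) is precisely the equivalence already recorded in the remark following Corollary \ref{corollary:well-founded_co_empty}: under well-foundedness of $\rightarrow^+$ on $\A - D_{\substack{m \\ n}}^{\lambda_F}(\emptyset)$, the conditions $\varepsilon_{co}^{\ell mn}(F) \neq \emptyset$ and ``$D_{\substack{m \\ n}}^{\xi}(E) \in \varepsilon_{ad}^{\ell mn}(F)$ for all $E \in \varepsilon_{ad}^{\ell mn}(F)$ and ordinals $\xi$'' are equivalent via Observation \ref{observation:properties_gr}.

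For clause ($b$), with $\ell = m = n$ I would apply Corollary \ref{corollary:well-founded_stb} to the admissible set $E = \emptyset$ (which lies in $\varepsilon_{ad}^{\ell mn}(F)$ for every parameter choice): taking $Z = D_{\substack{m \\ n}}^{\lambda_F}(\emptyset)$ supplies the needed well-foundedness and yields $\varepsilon_{stb}^{\ell mn}(F) = \set{D_{\substack{m \\ n}}^{\lambda_F}(\emptyset)}$. For clause ($e$), let $E \in \varepsilon_{stb}^{\ell mn}(F)$; then $E \in \varepsilon_{co}^{\ell mn}(F)$, so $E$ is a fixed point of $D_n^m$ and in particular $D_n^m(E) \subseteq E$. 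Choosing $Z = E$, global well-foundedness gives that $\rightarrow^+$ is well-founded on $\A - E$, so Lemma \ref{lemma:transitive closure}($b$) forces $E = Y$ for every $Y \in \textit{Def}^{mn}(F)$ with $E \subseteq Y$; since $\varepsilon_{co}^{\ell mn}(F) \subseteq \textit{Def}^{mn}(F)$, no complete extension properly contains $E$, whence $E \in \varepsilon_{pr}^{\ell mn}(F)$.

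There is no deep obstacle here, as this is a harvesting corollary; the only thing demanding care is bookkeeping, namely checking that the set $Z$ excluded in each local statement is matched by the global hypothesis, and, for clause ($a$), correctly bridging $\varepsilon_{pr, \textit{Dung}}^{\ell mn}$ (defined through maximal \emph{admissible} sets) to the singleton by using that the admissible poset has a top. The subtlety worth double-checking will be clause ($e$), where one must remember that a stable extension is complete and hence a genuine fixed point, so that the premise $D_n^m(E) \subseteq E$ of Lemma \ref{lemma:transitive closure} is actually available.
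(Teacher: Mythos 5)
Your treatment of clauses ($a$), ($b$), ($c$) and ($e$) is correct and follows essentially the paper's own route: the paper likewise harvests ($a$)--($c$) from Corollaries \ref{corollary:well-founded_co_empty}, \ref{corollary:equ_ad_def} and \ref{corollary:well-founded_stb}, and obtains ($e$) from Lemma \ref{lemma:transitive closure}, all enabled by your opening observation that global well-foundedness makes $\rightarrow^+$ well-founded on every subset of $\A$, so the local hypotheses can be instantiated with any excluded set.

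The weak point is clause ($d$). First, you misread the statement: clause ($d$) carries no parameter assumption, yet you group it with ($a$) and ($c$) as ``all assuming $\ell \geq m$ and $n \geq m$''. Second, your proof of ($d$) consists of citing the remark following Corollary \ref{corollary:well-founded_co_empty}; but that remark is precisely the claim whose verification the paper explicitly leaves to the reader, so invoking it is an assertion rather than a proof, and Observation \ref{observation:properties_gr} alone does not close it. From $\varepsilon_{co}^{\ell mn}(F) \neq \emptyset$, Observation \ref{observation:properties_gr} yields only that $D_{\substack{m \\ n}}^{\lambda_F}(\emptyset) \in \varepsilon_{cf}^{\ell}(F)$. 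To conclude that $D_{\substack{m \\ n}}^{\xi}(E) \in \varepsilon_{ad}^{\ell mn}(F)$ for an arbitrary $E \in \varepsilon_{ad}^{\ell mn}(F)$ and ordinal $\xi$, you still need three further steps: (i) $D_{\substack{m \\ n}}^{\xi}(E) \in \textit{Def}^{mn}(F)$, by Observation \ref{observation:D_ordinal}; (ii) $D_{\substack{m \\ n}}^{\xi}(E) \subseteq \nu D_n^m = D_{\substack{m \\ n}}^{\lambda_F}(\emptyset)$, where identifying the greatest fixed point with $D_{\substack{m \\ n}}^{\lambda_F}(\emptyset)$ is Lemma \ref{lemma:transitive closure}($c$) applied with $X = D_{\substack{m \\ n}}^{\lambda_F}(\emptyset)$ (this is where well-foundedness enters); and (iii) down-closedness of $\ell$-conflict-freeness, Lemma \ref{lemma:properties_cf_def}($d$), to transfer conflict-freeness from $D_{\substack{m \\ n}}^{\lambda_F}(\emptyset)$ down to $D_{\substack{m \\ n}}^{\xi}(E)$. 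Together with the trivial converse (take $E = \emptyset$ and $\xi = \lambda_F$, so that $D_{\substack{m \\ n}}^{\lambda_F}(\emptyset)$ is a conflict-free fixed point, hence a complete extension), this is exactly the chain the paper's proof of ($d$) cites. Your own toolkit contains all these ingredients, so the repair is short, but as written clause ($d$) is not proved.
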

\begin{proof}
	The clauses ($a$), ($b$) and ($c$) follow from Corollaries \ref{corollary:well-founded_co_empty}, \ref{corollary:equ_ad_def} and \ref{corollary:well-founded_stb}, and the clause ($e$) comes from Lemma \ref{lemma:transitive closure}. For ($d$), it is obvious that the right implies the left, another direction comes from Observations \ref{observation:D_ordinal} and \ref{observation:properties_gr}, Lemma \ref{lemma:transitive closure} and the clause ($d$) in Lemma \ref{lemma:properties_cf_def}.
\end{proof}
\begin{example} \label{Ex:counterexampleforwell-founded}
	Without the assumptions about parameters $\ell$, $m$ and $n$, none of the clauses ($a$), ($b$) and ($c$) in Corollary \ref{corollary:well-founded_AAF_stb} always holds.
	\begin{figure}[t]
		\setlength{\abovecaptionskip}{-0.2cm}
		\begin{center}
			\includegraphics[scale=1]{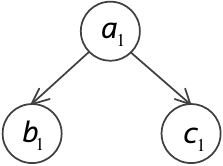}
			\hspace{0.5cm}
			\includegraphics[scale=1]{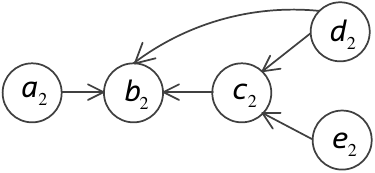}
		\end{center}
		\caption{The AAFs for Example \ref{Ex:counterexampleforwell-founded}. }
		\label{counterexampleforwell-founded}
	\end{figure}
	\par First, consider the left AAF $F$ in Figure \ref{counterexampleforwell-founded}.
	For $\ell = 1$ and $m = n = 4$, it is easy to check that 
	\begin{eqnarray*}
		&& D_n^m(X) = \set{a_1, b_1, c_1} \text{~for each~} X \subseteq \set{a_1, b_1, c_1},\\
		&& \varepsilon_{cf}^{\ell}(F) = \varepsilon_{ad}^{\ell mn}(F) = \set{\emptyset, \set{a_1}, \set{b_1}, \set{c_1}, \set{b_1, c_1}} \text{~and~}\\
		&& \varepsilon_{pr, \textit{Dung}}^{\ell mn}(F) = \set{\set{a_1}, \set{b_1, c_1}}.
	\end{eqnarray*}
	Clearly, none in $\varepsilon_{cf}^{\ell}(F)$ is a fixed point of $D_n^m$. Hence $\varepsilon_{co}^{\ell mn}(F) = \varepsilon_{stb}^{\ell mn}(F) = \emptyset$. Moreover, since $D_n^m(\set{a_1, b_1, c_1}) = \set{a_1, b_1, c_1}$, $\set{a_1, b_1, c_1} \in \textit{Def}^{mn}(F)$, and hence $\varepsilon_{ad}^{\ell mn}(F) \neq \textit{Def}^{mn}(F)$ and $\varepsilon_{gr, \textit{Dung}}^{\ell mn}(F) = \set{\set{a_1, b_1, c_1}} \neq \varepsilon_{co}^{\ell mn}(F)$. 
	\par Second, consider the right AAF $F$ in Figure \ref{counterexampleforwell-founded} with $\ell = m = 3$ and $n = 2$. Since $D_{\substack{m \\ n}}^{3}(\emptyset) = D_{\substack{m \\ n}}^{2}(\emptyset) = \set{a_2, b_2, c_2, d_2, e_2}$, $D_{\substack{m \\ n}}^{\lambda_F}(\emptyset) = \set{a_2, b_2, c_2, d_2, e_2} \in \textit{Def}^{mn}(F)$. However, $b_2 \notin N_\ell(D_{\substack{m \\ n}}^{\lambda_F}(\emptyset))$ because that $b_2$ is attacked by three arguments $a_2$, $c_2$ and $d_2$ in $D_{\substack{m \\ n}}^{\lambda_F}(\emptyset)$, which, together with $b_2 \in D_{\substack{m \\ n}}^{\lambda_F}(\emptyset)$, implies that $D_{\substack{m \\ n}}^{\lambda_F}(\emptyset) \notin \varepsilon_{ad}^{\ell mn}(F)$, and hence $\varepsilon_{ad}^{\ell mn}(F) \neq \textit{Def}^{mn}(F)$. Moreover, it is easy to verify that $\varepsilon_{pr, \textit{Dung}}^{\ell mn}(F) = \set{\set{a_2, b_2, d_2, e_2}, \set{a_2, c_2, d_2, e_2}, \set{b_2, c_2, d_2, e_2}}$ and $\varepsilon_{co}^{\ell mn}(F) = \varepsilon_{stb}^{\ell mn}(F) = \emptyset$.
	Obviously, none of $\varepsilon_{co}^{\ell mn}(F)$, $\varepsilon_{stb}^{\ell mn}(F)$ and $\varepsilon_{pr, \textit{Dung}}^{\ell mn}(F)$ is equal to $\varepsilon_{gr, \textit{Dung}}^{\ell mn}(F) (= \set{D_{\substack{m \\ n}}^{\lambda_F}(\emptyset)})$.
\end{example}
\section{Graded range related semantics and the operator $\bigcap_D$} \label{Sec: Graded range related semantics}
Extension-based semantics (e.g., the ones in Definition \ref{definition:graded_extensions}) often pay close attention to acceptable arguments but lose sight of arguments being in other status (rejected or undecided), while labelling-based semantics, another kind of semantics for AAFs (see, e.g. \cite{Caminada06Labelling}), intend to describe the status of all arguments. Range related semantics is proposed firstly in \cite{Verheij1996Stage} to model the stages of the argumentation process, where a range is a set of arguments that are accepted or rejected. The motivation behind this semantics lies in capturing extensions that allow us to minimize arguments being in the undecided status that is the most controversial status of an argument \cite{Thimm2014Graduality}. To this end, a parameter $\eta$ is involved, which is used to provide a quantitative criterion for assigning the rejected-status to arguments. This section will devote itself to considering a family of graded range related semantics and the operator called reduced meet modulo an ultrafilter.
\subsection{Basic concepts and properties}
\begin{notation} \label{notation:attack}
	Let $F = \tuple{\A, \rightarrow}$ be an AAF, $a \in \A$ and $E \subseteq \A$. 
	\begin{itemize}
		\item $E_\eta^+ \triangleq \set{a \in \A \mid \existsn{b \in E}\eta(b \rightarrow a)}$.
		\item $a^- \triangleq \set{b \in \A \mid b \rightarrow a}$.
		\item $[a^-]^\eta \triangleq \set{X \mid X \subseteq a^- \text{~with~} |X| = \eta}$.
	\end{itemize}                             
\end{notation}
Intuitively, $E_\eta^+$ is the set of all rejected arguments if arguments in $E$ are accepted. A trivial but useful property about the operator $(\cdot)_\eta^+$ is given below.
\begin{observation}\label{observation:eta}
	If $\eta \geq \ell$ and $E \in \varepsilon_{cf}^\ell(F)$ then $E \cap E_\eta'^+ = \emptyset$ for any $E'$ with $E' \subseteq E$.
\end{observation}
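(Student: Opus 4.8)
The plan is to reduce the set-theoretic assertion $E \cap (E')_\eta^+ = \emptyset$ (where $(E')_\eta^+$ is what is written $E_\eta'^+$ in the statement) to a single inclusion between neutrality sets, and then to obtain that inclusion by chaining three elementary facts. First I would unwind Notation \ref{notation:attack}: by definition $a \in (E')_\eta^+$ means exactly that at least $\eta$ distinct arguments of $E'$ attack $a$, which by Definition \ref{definition:graded-neutrality} is precisely the negation of $a \in N_\eta(E')$. Hence $(E')_\eta^+ = \A - N_\eta(E')$, and the goal $E \cap (E')_\eta^+ = \emptyset$ becomes equivalent to the single inclusion $E \subseteq N_\eta(E')$.

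Next I would prove $E \subseteq N_\eta(E')$ as the chain $E \subseteq N_\ell(E) \subseteq N_\eta(E) \subseteq N_\eta(E')$. The first inclusion is nothing but the hypothesis $E \in \varepsilon_{cf}^\ell(F)$, i.e. $E \subseteq N_\ell(E)$. The second holds because $\eta \geq \ell$: an argument attacked by fewer than $\ell$ members of $E$ is \emph{a fortiori} attacked by fewer than $\eta$ members of $E$, so $N_\ell(E) \subseteq N_\eta(E)$ directly from Definition \ref{definition:graded-neutrality} (monotonicity of $N_{(\cdot)}(E)$ in the grade). The third inclusion is the antimonotonicity of the neutrality function recorded in Lemma \ref{lemma:Grossi_simple}(a): since $E' \subseteq E$, we get $N_\eta(E) \subseteq N_\eta(E')$.

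Alternatively, the same fact admits an immediate argument by contradiction that even avoids the complement rewriting: suppose $a \in E \cap (E')_\eta^+$; then $a$ is attacked by at least $\eta$ arguments of $E'$, and since $E' \subseteq E$ all of these lie in $E$, so $a$ is attacked by at least $\eta \geq \ell$ arguments of $E$; this contradicts $a \in E \subseteq N_\ell(E)$, which asserts that $a$ is attacked by at most $\ell - 1$ arguments of $E$.

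I do not expect a genuine obstacle: the observation is essentially bookkeeping from conflict-freeness together with $E' \subseteq E$ and $\eta \geq \ell$. The only point needing care is keeping the two quantifier conventions aligned — $(\cdot)_\eta^+$ counts ``at least $\eta$'' attackers while $N_\ell$ records ``fewer than $\ell$'' attackers — so that the inequality $\eta \geq \ell$ is applied in the correct direction when the attacker count is transferred from $E'$ up to the larger set $E$.
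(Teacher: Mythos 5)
Your proof is correct: both the inclusion chain $E \subseteq N_\ell(E) \subseteq N_\eta(E) \subseteq N_\eta(E')$ and the direct contradiction argument are valid, and you apply $\eta \geq \ell$ in the right direction. The paper itself offers no proof beyond the word ``Trivially,'' and your second (contradiction) argument is precisely the routine verification being omitted, so you are in full agreement with the paper's approach.
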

\begin{proof}
	Trivially.
\end{proof}
Given an extension-based semantics, a corresponding range related semantics may be derived in terms of the maximality and the operator $(\cdot)_\eta^+$. A family of range related semantics are defined below.
\begin{definition} \label{definition:range related extensions}
	Let $F = \tuple{\A, \rightarrow}$ be an AAF and $E \subseteq \A$. 
	\begin{itemize}
		\item $E$ is an $\ell$-$\eta$-stage extension of $F$, i.e., $E \in \varepsilon_{stg}^{\ell \eta}(F)$, iff $E \in \varepsilon_{cf}^{\ell}(F)$ and $\nexists E' \in \varepsilon_{cf}^{\ell}(F)(E \cup E_\eta^+ \subset E' \cup E_\eta'^+)$.
		\item $E$ is an $\ell mn$-$\eta$-semi-stable extension of $F$, i.e., $E \in \varepsilon_{ss}^{\ell mn \eta}(F)$, iff $E \in \varepsilon_{co}^{\ell mn}(F)$ and $\nexists E' \in \varepsilon_{co}^{\ell mn}(F)(E \cup E_\eta^+ \subset E' \cup E_\eta'^+)$. 
		\item $E$ is an $\ell mn$-$\eta$-range related admissible set of $F$, i.e., $E \in \varepsilon_{rra}^{\ell mn \eta}(F)$, iff $E \in \varepsilon_{ad}^{\ell mn}(F)$ and $\nexists E' \in \varepsilon_{ad}^{\ell mn}(F)(E \cup E_\eta^+ \subset E' \cup E_\eta'^+)$.
		\item $E$ is an $\ell mn$-$\eta$-range related stable extension of $F$, i.e., $E \in \varepsilon_{rrs}^{\ell mn \eta}(F)$, iff $E \in \varepsilon_{stb}^{\ell mn}(F)$ and $\nexists E' \in \varepsilon_{stb}^{\ell mn}(F)(E \cup E_\eta^+ \subset E' \cup E_\eta'^+)$. 
	\end{itemize}
\end{definition}
In addition to these concrete range related semantics, this paper also adopt the notation $\varepsilon_{rr \sigma}^\eta$ to denote the range related semantics associated with an extension-based abstract semantics $\varepsilon_{\sigma}$. The semantics $\varepsilon_{stg}^{\ell \eta}$ and $\varepsilon_{ss}^{\ell mn \eta}$ are graded variants of stage and semi-stable semantics introduced in \cite{Verheij1996Stage,Caminada2012Semistable} respectively. In the non-graded situation (i.e., $\ell = m = n = \eta = 1$), $\varepsilon_{rra}^{\ell mn \eta}$ coincides with $\varepsilon_{ss}^{\ell mn \eta}$ \cite{Caminada2012Semistable}, and $\varepsilon_{rrs}^{\ell mn \eta}$ degenerates into $\varepsilon_{stb}^{\ell mn}$. These observations may be generalized as the next two propositions. However, unlike the non-graded case, $\varepsilon_{rra}^{\ell mn \eta}$ (or, $\varepsilon_{rrs}^{\ell mn \eta}$) doesn't always agree with $\varepsilon_{ss}^{\ell mn \eta}$ ($\varepsilon_{stb}^{\ell mn}$, resp.), see Examples \ref{Ex:counterexampleforrra} and \ref{Ex:counterexampleforrrs}. Thus, in the graded situation, both $\varepsilon_{rra}^{\ell mn \eta}$ and $\varepsilon_{rrs}^{\ell mn \eta}$ deserve to be considered in themselves.
\begin{proposition} \label{proposition:equ_ss and rra}
	Let $F$ be any AAF and $n \geq \ell \geq m$.
	\begin{itemize}
			\item[a.] $\varepsilon_{ss}^{\ell mn \eta}(F) \supseteq \varepsilon_{rra}^{\ell mn \eta}(F)$ whenever $\eta \geq \ell$.
			\item[b.] $\varepsilon_{ss}^{\ell mn \eta}(F) \subseteq \varepsilon_{rra}^{\ell mn \eta}(F)$.
		\end{itemize}
\end{proposition}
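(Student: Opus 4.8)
The plan is to move between $\ell mn$-admissible sets and $\ell mn$-complete extensions via the canonical map $E \mapsto D_{\substack{m \\ n}}^{\lambda_F}(E)$. Since $n \geq \ell \geq m$, Corollary \ref{corollary:properties_defense}($b$) guarantees that for every $E \in \varepsilon_{ad}^{\ell mn}(F)$ the set $D_{\substack{m \\ n}}^{\lambda_F}(E)$ is the least $\ell mn$-complete extension containing $E$; in particular $E \subseteq D_{\substack{m \\ n}}^{\lambda_F}(E) \in \varepsilon_{co}^{\ell mn}(F)$. I will also use the monotonicity of the range operator: if $X \subseteq Y$ then $X_\eta^+ \subseteq Y_\eta^+$ (an argument attacked by at least $\eta$ members of $X$ is attacked by at least $\eta$ members of $Y$), and hence $X \subseteq Y$ implies $X \cup X_\eta^+ \subseteq Y \cup Y_\eta^+$.

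For clause ($b$), let $E \in \varepsilon_{ss}^{\ell mn\eta}(F)$; then $E \in \varepsilon_{co}^{\ell mn}(F) \subseteq \varepsilon_{ad}^{\ell mn}(F)$, so it only remains to rule out an admissible witness to non-maximality. Suppose, for contradiction, that some $E' \in \varepsilon_{ad}^{\ell mn}(F)$ satisfies $E \cup E_\eta^+ \subset E' \cup E_\eta'^+$. Passing to $D' \triangleq D_{\substack{m \\ n}}^{\lambda_F}(E') \in \varepsilon_{co}^{\ell mn}(F)$ and using $E' \subseteq D'$ together with monotonicity of the range gives $E \cup E_\eta^+ \subset E' \cup E_\eta'^+ \subseteq D' \cup (D')_\eta^+$. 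Thus $D'$ is a complete extension strictly dominating the range of $E$, contradicting $E \in \varepsilon_{ss}^{\ell mn\eta}(F)$. Hence no such $E'$ exists and $E \in \varepsilon_{rra}^{\ell mn\eta}(F)$. Note that this direction uses only $n \geq \ell \geq m$, never $\eta \geq \ell$.

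For clause ($a$), let $E \in \varepsilon_{rra}^{\ell mn\eta}(F)$ and put $D \triangleq D_{\substack{m \\ n}}^{\lambda_F}(E)$, so $E \subseteq D \in \varepsilon_{co}^{\ell mn}(F)$ and, by monotonicity, $E \cup E_\eta^+ \subseteq D \cup D_\eta^+$. Since $D$ is admissible and $E$ is range-maximal among admissible sets, this inclusion cannot be strict, so $E \cup E_\eta^+ = D \cup D_\eta^+$. The crucial step is to upgrade this to $E = D$. Take any $a \in D \setminus E$; then $a \in D \cup D_\eta^+ = E \cup E_\eta^+$, and as $a \notin E$ we must have $a \in E_\eta^+$, i.e.\ $a$ has at least $\eta \geq \ell$ distinct attackers in $E \subseteq D$. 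This forces at least $\ell$ attackers of $a$ inside $D$, contradicting $a \in D \subseteq N_\ell(D)$, the $\ell$-conflict-freeness of the complete extension $D$. Hence $D \setminus E = \emptyset$, so $E = D \in \varepsilon_{co}^{\ell mn}(F)$. Finally, any complete extension strictly dominating the range of $E$ would in particular be an admissible set doing so, again contradicting range-maximality of $E$ among admissible sets; therefore $E \in \varepsilon_{ss}^{\ell mn\eta}(F)$.

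The main obstacle is precisely the upgrade $E = D$ in clause ($a$): range-maximality only yields equality of ranges, $E \cup E_\eta^+ = D \cup D_\eta^+$, and without further input this is compatible with $E \subsetneq D$. The hypothesis $\eta \geq \ell$ is exactly what closes this gap, since it converts the membership of an element of $D \setminus E$ in $E_\eta^+$ into a violation of the $\ell$-conflict-freeness that $D$ enjoys as a complete extension. This also explains structurally why clause ($b$) needs no constraint on $\eta$: there the canonical map is applied to the competitor $E'$ to produce a dominating \emph{complete} extension, rather than being used to identify $E$ itself with a complete extension, so conflict-freeness is never played off against $\eta$.
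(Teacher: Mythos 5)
Your proof is correct and follows essentially the same route as the paper's: clause ($b$) is argued identically (extend the admissible competitor to a complete extension via Corollary \ref{corollary:properties_defense}($b$) and invoke semi-stable maximality), and clause ($a$) uses the same collapse trick, comparing the range of $E$ with that of a canonical admissible superset produced by the defense function and using $\eta \geq \ell$ together with $\ell$-conflict-freeness (the content of Observation \ref{observation:eta}) to force equality. The only cosmetic difference is that in ($a$) you iterate to the least complete extension $D_{\substack{m \\ n}}^{\lambda_F}(E)$ and show $E$ equals it, whereas the paper applies $D_n^m$ once and shows $E = D_n^m(E)$ directly; both rest on the same corollary and the same disjointness argument.
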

\begin{proof}
	($a$) 
	Let $E \in \varepsilon_{rra}^{\ell mn \eta}(F)$. 
	Since $\varepsilon_{co}^{\ell mn}(F) \subseteq \varepsilon_{ad}^{\ell mn}(F)$, it is easy to see that $E \in \varepsilon_{ss}^{\ell mn \eta}(F)$ whenever $E \in \varepsilon_{co}^{\ell mn}(F)$.
	Thus, it suffices to show $E \in \varepsilon_{co}^{\ell mn}(F)$.
	By Definition \ref{definition:graded_extensions}, it remains to prove that $E = D_n^m(E)$.
	Since $E \in \varepsilon_{ad}^{\ell mn}(F)$ and $n \geq \ell \geq m$, by Corollary \ref{corollary:properties_defense}($a$), we have $D_n^m(E) \in \varepsilon_{ad}^{\ell mn}(F)$.
	Moreover, it follows from $E \subseteq D_n^m(E)$ that $E_\eta^+ \subseteq D_n^m(E)_\eta^+$.
	Then, by Definition \ref{definition:range related extensions} and $E \in \varepsilon_{rra}^{\ell mn \eta}(F)$, we have
	\begin{eqnarray}
	E \cup E_\eta^+ = D_n^m(E) \cup D_n^m(E)_\eta^+ \label{formula:range1}
	\end{eqnarray}
	On the other hand, since $\eta \geq \ell$, by Observation \ref{observation:eta}, $D_n^m(E) \cap D_n^m(E)_\eta^+ = \emptyset$.
	Further, it follows immediately from (\ref{formula:range1}) that $E = D_n^m(E)$ due to $E \subseteq D_n^m(E)$ and $E_\eta^+ \subseteq D_n^m(E)_\eta^+$ , as desired.
	\par ($b$) 
	Let $E \in \varepsilon_{ss}^{\ell mn \eta}(F)$.
	Then, $E \in \varepsilon_{ad}^{\ell mn}(F)$.
	Suppose $E \cup E_\eta^+ \subseteq E' \cup E_\eta'^+$ with $E' \in \varepsilon_{ad}^{\ell mn}(F)$.
	Since $n \geq \ell \geq m$, by Corollary \ref{corollary:properties_defense}($b$), $E' \subseteq E''$ for some $E'' \in \varepsilon_{co}^{\ell mn}(F)$.
	Thus,
	\begin{eqnarray*}
		E \cup E_\eta^+ \subseteq E' \cup E_\eta'^+ \subseteq E'' \cup E_\eta''^{+}.
	\end{eqnarray*}
	Then, by $E \in \varepsilon_{ss}^{\ell mn \eta}(F)$ and Definition \ref{definition:range related extensions}, we have $E \cup E_\eta^+ = E'' \cup E_\eta''^{+}$, and hence $E \cup E_\eta^+ = E' \cup E_\eta'^{+}$.
	Thus, $E \in \varepsilon_{rra}^{\ell mn \eta}(F)$. 
\end{proof}
For all well-founded AAFs, the assumption $n \geq \ell \geq m$ in the proposition above can be relaxed to $n \geq m$ and $\ell \geq m$, see Corollary \ref{corollary:well-founded_ss_rrs_empty}.
However, even if $\eta \geq \ell \geq m$ and $n \geq m$, $\varepsilon_{rra}^{\ell mn \eta}$ doesn't always coincide with $\varepsilon_{ss}^{\ell mn \eta}$ as shown by the example below.
\begin{example} \label{Ex:counterexampleforrra}
	Consider the AAF $F$ in Figure \ref{figure:counterexample} again. For $\ell = \eta = 3$ and $m = n = 2$, it is easy to check that
	\begin{eqnarray*}
		\varepsilon_{rra}^{\ell mn \eta}(F) = \set{\set{a, b, c, d, e}} \text{~and~} \varepsilon_{ss}^{\ell mn \eta}(F) = \set{\set{a, c, e}, \set{a, b, d, e}}.
	\end{eqnarray*}
	Thus, in this situation, $\varepsilon_{rra}^{\ell mn \eta}(F) \neq \varepsilon_{ss}^{\ell mn \eta}(F)$.
\end{example}
\begin{proposition} \label{proposition:relations_stb_rrs_stg_rra_ss}
	Let $F = \tuple{\A, \rightarrow}$ be an AAF. If $\eta \leq n$ or $\eta \leq m$ then
	\begin{eqnarray*}
		\varepsilon_{stb}^{\ell mn}(F) = \varepsilon_{rrs}^{\ell mn \eta}(F) \subseteq \varepsilon_{stg}^{\ell \eta}(F) \cap \varepsilon_{rra}^{\ell mn \eta}(F) \cap \varepsilon_{ss}^{\ell mn \eta}(F).
	\end{eqnarray*}
\end{proposition}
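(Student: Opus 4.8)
The plan is to reduce everything to a single observation: every $\ell mn$-stable extension already has the maximal possible range, namely the whole argument set $\A$. First I would establish the key claim that if $E \in \varepsilon_{stb}^{\ell mn}(F)$ and $\eta \leq n$ or $\eta \leq m$, then $E \cup E_\eta^+ = \A$. To see this, recall that $E$ stable means $E = N_n(E) = N_m(E)$. Consequently, for any $a \in \A - E$ we have $a \notin N_m(E)$ (equivalently $a \notin N_n(E)$), so $a$ is attacked by at least $m$ (at least $n$, resp.) arguments of $E$; since $\eta \leq m$ or $\eta \leq n$, the argument $a$ is attacked by at least $\eta$ arguments of $E$, i.e. $a \in E_\eta^+$. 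Hence $\A - E \subseteq E_\eta^+$, which gives $E \cup E_\eta^+ = \A$.

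Next I would record the standard containments among the underlying semantics. Since $E = N_n(E) = N_m(E)$, Lemma \ref{lemma:Grossi_simple}(c) gives $D_n^m(E) = N_m(N_n(E)) = N_m(E) = E$, while $E \subseteq N_\ell(E)$ holds by the definition of stability. Therefore every $\ell mn$-stable extension is an $\ell mn$-complete extension, and a fortiori an $\ell mn$-admissible set and an $\ell$-conflict-free set. This supplies exactly the base-semantics membership that the clauses in Definition \ref{definition:range related extensions} require for $\varepsilon_{ss}^{\ell mn \eta}$, $\varepsilon_{rra}^{\ell mn \eta}$ and $\varepsilon_{stg}^{\ell \eta}$ respectively.

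With these two facts in hand the proposition is immediate. For any $E \in \varepsilon_{stb}^{\ell mn}(F)$ the key claim gives $E \cup E_\eta^+ = \A$; since $E' \cup E_\eta'^+ \subseteq \A$ for every candidate $E'$, no $E'$ can satisfy $E \cup E_\eta^+ \subset E' \cup E_\eta'^+$, so the range-maximality clause in each of the four definitions holds vacuously. Combined with the membership facts of the previous paragraph this yields $E \in \varepsilon_{rrs}^{\ell mn \eta}(F) \cap \varepsilon_{stg}^{\ell \eta}(F) \cap \varepsilon_{rra}^{\ell mn \eta}(F) \cap \varepsilon_{ss}^{\ell mn \eta}(F)$. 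In particular $\varepsilon_{stb}^{\ell mn}(F) \subseteq \varepsilon_{rrs}^{\ell mn \eta}(F)$; the reverse inclusion $\varepsilon_{rrs}^{\ell mn \eta}(F) \subseteq \varepsilon_{stb}^{\ell mn}(F)$ is immediate from Definition \ref{definition:range related extensions}, so the displayed equality holds, and the three remaining inclusions are precisely what was just shown.

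The whole argument is short, and essentially all of its content sits in the key claim; the genuine obstacle is merely to recognise that the hypothesis $\eta \leq n$ or $\eta \leq m$ is exactly what forces every out-of-extension argument to lie in $E_\eta^+$, collapsing the range of a stable extension to $\A$. Once full range is established, maximality comes for free, so no appeal to the fundamental lemma or to any fixpoint machinery is needed here, and in particular the constraint $n \geq \ell \geq m$ used elsewhere in the paper is not required for this result.
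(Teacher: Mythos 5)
Your proposal is correct and follows essentially the same route as the paper: both rest on the single observation that $E = N_n(E) = N_m(E)$ together with $\eta \leq n$ or $\eta \leq m$ forces $N_\eta(E) \subseteq E$, hence $\A - E \subseteq E_\eta^+$ and $E \cup E_\eta^+ = \A$, after which range-maximality in all four definitions is vacuous and the chain $\varepsilon_{stb}^{\ell mn}(F) \subseteq \varepsilon_{co}^{\ell mn}(F) \subseteq \varepsilon_{ad}^{\ell mn}(F) \subseteq \varepsilon_{cf}^{\ell}(F)$ (which you justify via Lemma \ref{lemma:Grossi_simple}(c), a detail the paper merely asserts) supplies the required base-semantics memberships.
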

\begin{proof}
	Let $E \in \varepsilon_{stb}^{\ell mn}(F)$. Then $E = N_n(E) = N_m(E)$, and hence $E \supseteq N_\eta(E)$ due to $\eta \leq n$ or $\eta \leq m$.
	Thus, $\A - E \subseteq A - N_\eta(E) = E_\eta^+$.
	Hence, $E \cup E_\eta^+ \supseteq E \cup (\A - E) = \A$.
	Thus, $E \cup E_\eta^+ = \A$ due to $E \cup E_\eta^+ \subseteq \A$, which implies $E \in \varepsilon_{rrs}^{\ell mn \eta}(F)$.
	So, $\varepsilon_{stb}^{\ell mn}(F) = \varepsilon_{rrs}^{\ell mn \eta}(F)$, and 
	\begin{eqnarray*}
		\varepsilon_{rrs}^{\ell mn \eta}(F) \subseteq \varepsilon_{stg}^{\ell \eta}(F) \cap \varepsilon_{rra}^{\ell mn \eta}(F) \cap \varepsilon_{ss}^{\ell mn \eta}(F)
	\end{eqnarray*}
	follows immediately from $\varepsilon_{stb}^{\ell mn}(F) \subseteq \varepsilon_{co}^{\ell mn}(F) \subseteq \varepsilon_{ad}^{\ell mn}(F) \subseteq \varepsilon_{cf}^{\ell}(F)$ and the fact shown above, that is, $E \cup E_\eta^+ = \A$ for each $E \in \varepsilon_{stb}^{\ell mn}(F)$.
\end{proof}
The following two examples reveal that the assumption that $\eta \leq n$ or $\eta \leq m$ in Proposition \ref{proposition:relations_stb_rrs_stg_rra_ss} is necessary. 
\begin{example}\label{Ex:counterexampleforstb}
	\begin{figure}[t]
		\centering
		\includegraphics[width=0.5\linewidth]{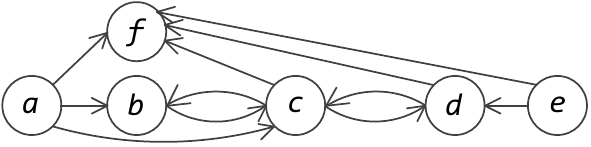}
		\caption{The AAF for Example \ref{Ex:counterexampleforrrs}.}
		\label{figure:counterexampleforrrs}
	\end{figure}
	Consider the AAF $F$ in Figure \ref{figure:counterexampleforpr}. For $\ell = \eta = 3$ and $m = n = 2$, it is easy to check that
	\begin{eqnarray*}
		\begin{split}
			&\varepsilon_{stg}^{\ell \eta}(F) &=& \quad \varepsilon_{rra}^{\ell mn \eta}(F) = \set{\set{a, b, c}},\\
			&\varepsilon_{ss}^{\ell mn \eta}(F) &=& \quad \set{\set{a, b}, \set{a, c}, \set{b, c}} \text{~and}\\
			&\varepsilon_{stb}^{\ell mn}(F) &=& \quad \set{\set{a, c}}.
		\end{split}
	\end{eqnarray*}
	Obviously, $\varepsilon_{stb}^{\ell mn}(F) \nsubseteq \varepsilon_{stg}^{\ell \eta}(F) \cap \varepsilon_{rra}^{\ell mn \eta}(F) \cap \varepsilon_{ss}^{\ell mn \eta}(F)$.
\end{example}
\begin{example}\label{Ex:counterexampleforrrs}
	Consider the AAF $F$ in Figure \ref{figure:counterexampleforrrs}. For $\ell = \eta = 3$ and $m = n = 2$, it is easy to verify that 
	\begin{eqnarray*}
		\varepsilon_{co}^{\ell mn}(F) &=& \set{\set{a, e}, \set{a, c, e}, \set{a, b, d, e}},\\
		N_2(\set{a, e}) &=& \set{a, b, c, d, e},\\
		N_2(\set{a, c, e}) &=& \set{a, c, e} \text{~and~}\\
		N_2(\set{a, b, d, e}) &=& \set{a, b, d, e}.
	\end{eqnarray*}
	Then, by Definition\ref{definition:graded_extensions}, $\varepsilon_{stb}^{\ell mn}(F) = \set{\set{a, c, e}, \set{a, b, d, e}}$.
	Moreover, since
	\begin{eqnarray*}
		\set{a, c, e} \cup \set{a, c, e}_\eta^+ = \set{a, c, e, f} \subset \set{a, b, c, d, e, f} = \set{a, b, d, e} \cup \set{a, b, d, e}_\eta^+,
	\end{eqnarray*}
	$\set{a, c, e} \notin \varepsilon_{rrs}^{\ell mn \eta}(F)$, and thus $\varepsilon_{stb}^{\ell mn}(F) \neq \varepsilon_{rrs}^{\ell mn \eta}(F)$.
\end{example}
As mentioned at the beginning of this section, given an extension $E \in \varepsilon_{cf}^\ell(F)$, $E_\eta^+$ may be regarded as the set of rejected arguments. Intuitively, it is a rational constraint that $E \cap E_\eta^+ = \emptyset$. By Observation \ref{observation:eta}, this holds whenever $\eta \geq \ell$. Moreover, in the view of labelling-based semantics \cite{Baroni11Introduction}, to provide a labelling function $L_E \colon \A \to \set{1, 0, ?}$ that agrees with a given extension $E$, i.e., $\set{a \in \A \mid L_E(a) = 1} = E$ and $\set{a \in \A \mid L_E(a) = 0} = E_\eta^+$, the parameters $\eta$ and $\ell$ are also subject to the condition $\eta \geq \ell$ in general for the sake of the functionality of $L_E$. Although this paper does not adopt ``$E \cap E_\eta^+ = \emptyset$'' as a requirement in defining the notions of range related semantics, we will care about the results caused by the assumption $\eta \geq \ell$.
\begin{proposition}\label{proposition:range_relative}
	For any AAF $F$ and extension-based semantics $\varepsilon_{\sigma}$ with $\varepsilon_{\sigma}(F) \subseteq \varepsilon_{cf}^\ell(F)$, if $\eta \geq \ell$ then the extensions in $\varepsilon_{rr \sigma}^{\eta}(F)$ are maximal in the poset $\tuple{\varepsilon_{\sigma}(F), \subseteq}$,
	in particular, $\varepsilon_{stg}^{\ell \eta}(F) \subseteq \varepsilon_{na}^{\ell}(F)$, $\varepsilon_{rra}^{\ell mn \eta}(F) \subseteq \varepsilon_{pr, \textit{Dung}}^{\ell mn}(F)$ (see, Definition \ref{definition:pr_Dung}) and $\varepsilon_{ss}^{\ell mn \eta}(F) \subseteq \varepsilon_{pr}^{\ell mn}(F)$.
\end{proposition}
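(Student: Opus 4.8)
The plan is to establish the general claim first — that every $E \in \varepsilon_{rr \sigma}^{\eta}(F)$ is maximal in $\tuple{\varepsilon_{\sigma}(F), \subseteq}$ — and then obtain the three concrete inclusions by identifying the maximal elements of the three relevant posets. The decisive ingredient is Observation \ref{observation:eta}: under $\eta \geq \ell$ it forces a conflict-free set to be disjoint from the ``positive range'' of any of its subsets, and this is precisely what lets us pass from maximality of the range $E \cup E_\eta^+$ to maximality of $E$ itself.

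First I would record the monotonicity of $(\cdot)_\eta^+$: if $X \subseteq Y$ then $X_\eta^+ \subseteq Y_\eta^+$, since any argument attacked by at least $\eta$ members of $X$ is a fortiori attacked by at least $\eta$ members of $Y$. Now fix $E \in \varepsilon_{rr \sigma}^{\eta}(F)$ and suppose $E \subseteq E'$ with $E' \in \varepsilon_{\sigma}(F)$; the goal is $E = E'$. By monotonicity $E_\eta^+ \subseteq E_\eta'^+$, hence $E \cup E_\eta^+ \subseteq E' \cup E_\eta'^+$. Since $E$ is range-maximal in $\varepsilon_{\sigma}(F)$, the strict inclusion $E \cup E_\eta^+ \subset E' \cup E_\eta'^+$ is excluded by the definition of $\varepsilon_{rr \sigma}^{\eta}$, so in fact $E \cup E_\eta^+ = E' \cup E_\eta'^+$.

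The crux — and the only point where $\eta \geq \ell$ is genuinely needed — is to upgrade this equality of ranges to equality of the extensions. Suppose toward a contradiction that some $a \in E' \setminus E$. Then $a \in E' \subseteq E' \cup E_\eta'^+ = E \cup E_\eta^+$, and since $a \notin E$ we get $a \in E_\eta^+ \subseteq E_\eta'^+$, whence $a \in E' \cap E_\eta'^+$. But $E' \in \varepsilon_{\sigma}(F) \subseteq \varepsilon_{cf}^{\ell}(F)$ and $\eta \geq \ell$, so applying Observation \ref{observation:eta} with $E'$ as the conflict-free set (and $E'$ itself as the subset) gives $E' \cap E_\eta'^+ = \emptyset$, a contradiction. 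Therefore $E' \setminus E = \emptyset$, i.e. $E = E'$, so $E$ is maximal in $\tuple{\varepsilon_{\sigma}(F), \subseteq}$.

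Finally I would specialize. For $\varepsilon_{stg}^{\ell \eta}$ the underlying semantics is $\varepsilon_{cf}^{\ell}$, whose maximal elements are by definition the $\ell$-naive extensions, giving $\varepsilon_{stg}^{\ell \eta}(F) \subseteq \varepsilon_{na}^{\ell}(F)$; for $\varepsilon_{rra}^{\ell mn \eta}$ the underlying semantics is $\varepsilon_{ad}^{\ell mn} \subseteq \varepsilon_{cf}^{\ell}$, whose maximal elements are exactly $\varepsilon_{pr, \textit{Dung}}^{\ell mn}$ by Definition \ref{definition:pr_Dung}; and for $\varepsilon_{ss}^{\ell mn \eta}$ the underlying semantics is $\varepsilon_{co}^{\ell mn} \subseteq \varepsilon_{cf}^{\ell}$, whose maximal elements are the $\ell mn$-preferred extensions by Definition \ref{definition:graded_extensions}. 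I expect the step demanding the most care to be the passage from range-equality to set-equality, which is exactly where $\eta \geq \ell$ enters via Observation \ref{observation:eta}; without this hypothesis an argument could belong simultaneously to $E'$ and to $E_\eta'^+$, and maximality of the range would no longer force $E = E'$.
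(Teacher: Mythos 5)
Your proof is correct and follows essentially the same route as the paper's: derive the range equality $E \cup E_\eta^+ = E' \cup E_\eta'^{+}$ from monotonicity of $(\cdot)_\eta^+$ plus range-maximality, then invoke Observation \ref{observation:eta} (via $\eta \geq \ell$ and conflict-freeness of $E'$) to force $E' \subseteq E$. The only cosmetic difference is that the paper applies the observation with the pair $(E', E)$ to conclude $E' \cap E_\eta^+ = \emptyset$ directly, whereas you apply it with $(E', E')$ inside a proof by contradiction; the underlying idea is identical.
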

\begin{proof}
	Let $E \in \varepsilon_{rr \sigma}^{\eta}(F)$ and $E \subseteq E'$ with $E' \in \varepsilon_{\sigma}(F)$.
	Then $E \cup E_\eta^+ = E' \cup E_\eta'^{+}$.
	Further, by Observation \ref{observation:eta}, $E' \cap E_\eta^+ = \emptyset$.
	Thus, $E' \subseteq E$, and hence $E = E'$.
	Consequently, $E$ is maximal in $\tuple{\varepsilon_{\sigma}(F), \subseteq}$, as desired.
\end{proof}
\begin{corollary} \label{corollary:well-founded_ss_rrs_nonempty}
	Let $F = \tuple{\A, \rightarrow}$ be an AAF and $E \in \varepsilon_{ad}^{\ell mn}(F)$. If $\rightarrow^+$ is well-founded on $\A - D_{\substack{m \\ n}}^{\lambda_F}(E)$ then
	\begin{itemize}
		\item[a.] $D_{\substack{m \\ n}}^{\lambda_F}(E)$ is the unique $\ell mn$-$\eta$-semi-stable extension of $F$ containing $E$ whenever $n \geq \ell \geq m$.
		\item[b.] $\varepsilon_{ss}^{\ell mn \eta}(F) = \varepsilon_{rra}^{\ell mn \eta}(F) = \set{D_{\substack{m \\ n}}^{\lambda_F}(E)}$ whenever $\eta \geq \ell$ and $n \geq \ell \geq m$.
		\item[c.] $\varepsilon_{rrs}^{\ell mn \eta}(F) = \set{D_{\substack{m \\ n}}^{\lambda_F}(E)}$ whenever $\ell = m = n$.
	\end{itemize}
\end{corollary}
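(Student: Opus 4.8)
The plan is to derive all three clauses from the structural facts already established for the well-founded case, writing $D \triangleq D_{\substack{m \\ n}}^{\lambda_F}(E)$ throughout. The engine of the argument is Corollary \ref{corollary:well-founded_co_nonempty}($a$): under $n \geq \ell \geq m$ together with the stated well-foundedness, $D$ is simultaneously the \emph{largest} $\ell mn$-complete extension of $F$ and the \emph{unique} $\ell mn$-complete extension containing $E$; moreover $E \subseteq D$ by Observation \ref{observation:D_ordinal}. Since every extension named in Definition \ref{definition:range related extensions} for $\sigma \in \set{ss, rra, rrs}$ is in particular complete (respectively admissible, stable), the idea is to show each such extension must coincide with $D$, while $D$ itself meets the relevant range-maximality condition vacuously because it sits at the top.

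For clause ($a$) I would first verify $D \in \varepsilon_{ss}^{\ell mn \eta}(F)$: $D$ is complete, and for every $E' \in \varepsilon_{co}^{\ell mn}(F)$ we have $E' \subseteq D$, whence $E' \cup E_\eta'^{+} \subseteq D \cup D_\eta^{+}$ by monotonicity of $(\cdot)_\eta^{+}$, so no complete extension strictly exceeds $D$ on ranges. Uniqueness among semi-stable extensions containing $E$ is then immediate: any such extension is complete and contains $E$, hence equals $D$ by the uniqueness half of Corollary \ref{corollary:well-founded_co_nonempty}($a$). Note that clause ($a$) uses neither $\eta \geq \ell$ nor the range structure beyond monotonicity.

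For clause ($b$) the extra hypothesis $\eta \geq \ell$ is precisely what upgrades ``$D$ is the unique semi-stable extension containing $E$'' to ``$D$ is the unique semi-stable extension at all''. Take any $E'' \in \varepsilon_{ss}^{\ell mn \eta}(F)$. As $E''$ is complete we have $E'' \subseteq D$, hence $E'' \cup E_\eta''^{+} \subseteq D \cup D_\eta^{+}$; range-maximality of $E''$ forces $E'' \cup E_\eta''^{+} = D \cup D_\eta^{+}$. Picking $a \in D$, from $a \in E'' \cup E_\eta''^{+}$ and the disjointness $D \cap E_\eta''^{+} = \emptyset$ (Observation \ref{observation:eta}, applied to $E'' \subseteq D$ using $\eta \geq \ell$) we conclude $a \in E''$, so $D \subseteq E''$ and thus $E'' = D$. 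Therefore $\varepsilon_{ss}^{\ell mn \eta}(F) = \set{D}$, and $\varepsilon_{rra}^{\ell mn \eta}(F) = \set{D}$ follows from Proposition \ref{proposition:equ_ss and rra}, whose two inclusions both apply once $\eta \geq \ell$ and $n \geq \ell \geq m$.

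Clause ($c$) is the cheapest: Corollary \ref{corollary:well-founded_stb} already yields $\varepsilon_{stb}^{\ell mn}(F) = \set{D}$ when $\ell = m = n$, and since the stable extensions form a singleton the range-maximality clause defining $\varepsilon_{rrs}^{\ell mn \eta}$ is vacuous, so $\varepsilon_{rrs}^{\ell mn \eta}(F) = \varepsilon_{stb}^{\ell mn}(F) = \set{D}$. The only genuinely delicate step is the implication ``equal ranges, $E'' \subseteq D$, and disjointness $\Rightarrow E'' = D$'' in clause ($b$); everything else is bookkeeping on top of the cited corollaries. I would take care to invoke monotonicity of $(\cdot)_\eta^{+}$ for the inclusion of ranges and Observation \ref{observation:eta} exactly at the point where the set equality of ranges is converted back into an inclusion of the accepted parts, since that conversion is where the hypothesis $\eta \geq \ell$ does its real work.
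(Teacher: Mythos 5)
Your proof is correct and takes essentially the same route as the paper, whose proof is a one-line citation of exactly the ingredients you use: Corollary \ref{corollary:well-founded_co_nonempty}($a$) for clauses ($a$)--($b$), Corollary \ref{corollary:well-founded_stb} for clause ($c$), and Propositions \ref{proposition:equ_ss and rra} and \ref{proposition:range_relative} for clause ($b$). The only cosmetic difference is that in clause ($b$) you re-derive the relevant instance of Proposition \ref{proposition:range_relative} inline (range equality plus the disjointness from Observation \ref{observation:eta}) instead of citing it.
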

\begin{proof}
	Follows from Propositions \ref{proposition:equ_ss and rra} and \ref{proposition:range_relative}, and Corollaries \ref{corollary:well-founded_co_nonempty}($a$) and \ref{corollary:well-founded_stb}.
\end{proof}
\begin{corollary} \label{corollary:well-founded_ss_rrs_empty}
	Let $F = \tuple{\A, \rightarrow}$ be an AAF. If $\rightarrow^+$ is well-founded on $\A - D_{\substack{m \\ n}}^{\lambda_F}(\emptyset)$ then
	\begin{itemize}
		\item[a.] $\varepsilon_{ss}^{\ell mn \eta}(F) = \set{D_{\substack{m \\ n}}^{\lambda_F}(\emptyset)}$ whenever $\ell \geq m$ and $n \geq m$.
		\item[b.] $\varepsilon_{rra}^{\ell mn \eta}(F) = \set{D_{\substack{m \\ n}}^{\lambda_F}(\emptyset)}$ whenever $\eta \geq \ell \geq m$ and $n \geq m$.
	\end{itemize}
\end{corollary}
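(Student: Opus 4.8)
The plan is to reduce both parts to the structural facts about $\varepsilon_{co}^{\ell mn}(F)$ and $\varepsilon_{ad}^{\ell mn}(F)$ already established under the present well-foundedness hypothesis, namely Corollary \ref{corollary:well-founded_co_empty}. Throughout write $D \triangleq D_{\substack{m \\ n}}^{\lambda_F}(\emptyset)$.

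For part ($a$) I would first invoke Corollary \ref{corollary:well-founded_co_empty}($b$), which under $\ell \geq m$ and $n \geq m$ gives $\varepsilon_{co}^{\ell mn}(F) = \set{D}$. Since by Definition \ref{definition:range related extensions} every $\ell mn$-$\eta$-semi-stable extension is in particular an $\ell mn$-complete extension, we immediately get $\varepsilon_{ss}^{\ell mn \eta}(F) \subseteq \set{D}$. Conversely $D$ is range-maximal in the singleton $\varepsilon_{co}^{\ell mn}(F)$ vacuously, because no complete extension $E'$ can realize $D \cup D_\eta^+ \subset E' \cup E_\eta'^+$ (the only available $E'$ is $D$ itself, and $D \cup D_\eta^+ \subset D \cup D_\eta^+$ is false). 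Hence $\varepsilon_{ss}^{\ell mn \eta}(F) = \set{D}$, and this conclusion does not involve $\eta$ at all.

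For part ($b$) the extra ingredient is that $\varepsilon_{ad}^{\ell mn}(F)$ need not collapse to a singleton, so range-maximality now carries genuine content. I would begin from Corollary \ref{corollary:well-founded_co_empty}($d$): under $\ell \geq m$ and $n \geq m$ the poset $\tuple{\varepsilon_{ad}^{\ell mn}(F), \subseteq}$ is a complete lattice with top $D$. Using the monotonicity of $(\cdot)_\eta^+$ (namely $E \subseteq E'$ implies $E_\eta^+ \subseteq E_\eta'^+$), every $E \in \varepsilon_{ad}^{\ell mn}(F)$ satisfies $E \cup E_\eta^+ \subseteq D \cup D_\eta^+$; hence no admissible set can strictly dominate $D$ in range, giving $D \in \varepsilon_{rra}^{\ell mn \eta}(F)$. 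For uniqueness, take any $E \in \varepsilon_{rra}^{\ell mn \eta}(F)$; the top property forces $E \subseteq D$, so $E \cup E_\eta^+ \subseteq D \cup D_\eta^+$, and range-maximality of $E$ upgrades this to the equality $E \cup E_\eta^+ = D \cup D_\eta^+$ (otherwise $D$ would witness a strict range-extension of $E$). To recover $D \subseteq E$ I would apply Observation \ref{observation:eta} to the conflict-free set $D$ and its subset $E$: since $\eta \geq \ell$, this yields $D \cap E_\eta^+ = \emptyset$. Then every $a \in D$, lying in $D \cup D_\eta^+ = E \cup E_\eta^+$ but not in $E_\eta^+$, must lie in $E$; so $D \subseteq E$ and therefore $E = D$.

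I expect the disjointness step to be the only delicate point, and it is precisely where the hypothesis $\eta \geq \ell$ is indispensable: without it an admissible set $E \subsetneq D$ could in principle share the maximal $\eta$-range of $D$, producing a second range related admissible extension. With $\eta \geq \ell$ the rejected-range of any subset of $D$ is forced to avoid $D$ itself, which rigidifies the range and pins the unique extension down to $D$. Thus the whole argument is a short assembly of Corollary \ref{corollary:well-founded_co_empty}, the monotonicity of $(\cdot)_\eta^+$, and Observation \ref{observation:eta}, with no transfinite induction required beyond what is already packaged into $D_{\substack{m \\ n}}^{\lambda_F}(\emptyset)$.
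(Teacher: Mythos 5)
Your proof is correct and follows essentially the same route as the paper, which derives both parts from Corollary \ref{corollary:well-founded_co_empty}($b$)($d$) together with Proposition \ref{proposition:range_relative}. The only cosmetic difference is that where the paper cites Proposition \ref{proposition:range_relative} for part ($b$), you re-derive its content inline (monotonicity of $(\cdot)_\eta^+$ plus Observation \ref{observation:eta} applied to the conflict-free top $D_{\substack{m \\ n}}^{\lambda_F}(\emptyset)$), which is exactly the argument inside that proposition's proof.
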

\begin{proof}
	Immediately follows from Proposition \ref{proposition:range_relative} and Corollary \ref{corollary:well-founded_co_empty}($b$)($d$).
\end{proof}
\subsection{Universal definability and reduced meet modulo an ultrafilter}
So far, we have introduced a family of graded range related semantics and provided a number of elementary properties of them. However, we still have not done one vital thing: we have not actually shown that these semantics are universally defined. This issue for range related semantics is often nontrivial. For example, in the standard situation, although the universal definability of semi-stable semantics (i.e., $\varepsilon_{ss}^{1111}$) for finite AAFs is obvious, this problem is difficult for infinite ones.  For finitary AAFs, it was conjectured in \cite{Caminada10Conjecture} that there is at least one semi-stable extension. This conjecture was firstly proven positively by Weydert in \cite{Weydert11SsForInf} using FoL based on labelling-based semantics. Later on, Baumann and Spanring provided an alternative proof based on extension-based semantics using transfinite induction \cite{Baumann15Infinite}. All these work deal with $\varepsilon_{ss}^{\ell mn \eta}$ in the case that $\ell = m = n = \eta = 1$. 
\par In the following, we will consider this problem for arbitrary positive numbers $\ell$, $m$, $n$ and $\eta$, and provide a uniform method to establish the universal definability of all graded range related semantics defined in Definition \ref{definition:range related extensions} for finitary AAFs. 
Different from \cite{Weydert11SsForInf, Baumann15Infinite} mentioned above, our proof uses neither FoL nor transfinite induction.
The operator so-called \textit{reduced meet modulo an ultrafilter} will play a central role in our work.
Such an operator is applied widely in mathematics, however, to our knowledge, it seems that this operator has not been applied in argumentation until now. In fact, the reader will find that this operator is helpful for us to explore theoretical issues of AAFs and it deserves attention in itself. Some auxiliary notions are introduced below firstly.
\begin{definition}\label{definition:propto}
	Let $F = \tuple{\A, \rightarrow}$ be an AAF and $X, Y \subseteq \A$. Then $X \propto Y$ iff $X \cup X_\eta^+ \subseteq Y \cup Y_\eta^+$.
\end{definition}
Given an extension-based semantics $\varepsilon_{\sigma}$, to establish the universal definability of the range related semantics $\varepsilon_{rr \sigma}^\eta$ associated with $\varepsilon_{\sigma}$ by applying Zorn's lemma, we need to provide a method to construct an upper bound for any $\propto$-chain in the pre-ordered structure $\tuple{\varepsilon_{\sigma}(F), \propto}$. Such construction is often nontrivial. The operator union in set theory is inadequate for this task because $\varepsilon_{\sigma}(F)$ is not closed under the union of a $\propto$-chain in general. This paper will adopt the operator \textit{reduced meet modulo an ultrafilter} to provide the desired construction. The notion of an ultrafilter is recalled below (see for instance \cite{Chang1990Book}).
\begin{definition}[Ultrafilter] \label{definition:ultrafilter}
	Let $I$ be a nonempty set. An ultrafilter $D$ over $I$ is defined to be a set $D \subseteq \pw{I}$ such that
	\begin{itemize}
		\item[$U_1$.] $I \in D$ and $\emptyset \notin D$.
		\item[$U_2$.] If $X, Y \in D$ then $X \cap Y \in D$.
		\item[$U_3$.] If $X \in D$ and $X \subseteq Y \subseteq I$ then $Y \in D$.
		\item[$U_4$.] For any $X, Y \subseteq I$, either $X \in D$ or $Y \in D$ whenever $X \cup Y \in D$.
	\end{itemize}
\end{definition}
Some trivial but useful properties of ultrafilters are listed below, which will be applied without illustrating. For any $X, X_i \subseteq I (1 \leq i \leq n < \omega)$,
\begin{itemize}
	\item $\bigcap_{1 \leq i \leq n} X_i \in D$ iff $X_i \in D$ for each $1 \leq i \leq n$,
	\item $\bigcup_{1 \leq i \leq n} X_i \in D$ iff $X_i \in D$ for some $1 \leq i \leq n$ and
	\item $X \in D$ iff $I - X \notin D$.
\end{itemize}
\par Let $I$ be a nonempty set. A set $\Omega \subseteq \pw{I}$ is said to have the \textit{finite intersection property} iff the intersection of any finite number of elements of $\Omega$ is nonempty. A well-known result about ultrafilters is recalled below (see, e.g., \cite{Chang1990Book} again).
\begin{theorem}[Ultrafilter Theorem] \label{theorem:ultrafilter theorem}
	Let $I$ be a nonempty set. If $\Omega \subseteq \pw{I}$ and $\Omega$ has the finite intersection property, then there exists an ultrafilter $D$ over $I$ such that $\Omega \subseteq D$.
\end{theorem}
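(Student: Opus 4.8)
The plan is to realize $D$ as a \emph{maximal} family with the finite intersection property (FIP, for short) that extends $\Omega$, and then to show that maximality alone forces the four ultrafilter axioms. To this end I would consider the collection
\[
\mathcal{F} \triangleq \set{E \subseteq \pw{I} \mid \Omega \subseteq E \text{ and } E \text{ has the finite intersection property}},
\]
ordered by set inclusion. This $\mathcal{F}$ is nonempty, since $\Omega \in \mathcal{F}$ by hypothesis.

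First I would verify that Zorn's lemma applies to $\tuple{\mathcal{F}, \subseteq}$. Given any chain $\mathcal{C} \subseteq \mathcal{F}$, its union $\bigcup \mathcal{C}$ clearly contains $\Omega$; moreover it has FIP, because any finitely many of its members $X_1, \ldots, X_k$ already lie in a single element of $\mathcal{C}$ (as $\mathcal{C}$ is a chain, one of the members witnessing the $X_i$ contains all the others), whence $X_1 \cap \cdots \cap X_k \neq \emptyset$. Thus $\bigcup \mathcal{C} \in \mathcal{F}$ is an upper bound of $\mathcal{C}$, and Zorn's lemma yields a maximal element $D \in \mathcal{F}$.

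It then remains to check that $D$ satisfies $U_1$--$U_4$. For $U_1$, note that $\emptyset \notin D$ since FIP forbids it, whereas $I \in D$ because $D \cup \set{I}$ still has FIP (adjoining $I$ leaves every finite intersection unchanged), so maximality gives $I \in D$. The closure properties $U_2$ and $U_3$ would be obtained the same way: for $X, Y \in D$ the family $D \cup \set{X \cap Y}$ preserves FIP, and for $X \in D$ with $X \subseteq Y \subseteq I$ the family $D \cup \set{Y}$ preserves FIP, since in each case a finite intersection involving the adjoined set is bounded below by a finite intersection of members of $D$; maximality then forces $X \cap Y \in D$ and $Y \in D$ respectively.

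The crux, and the step I expect to be the main obstacle, is $U_4$, the genuinely \emph{ultra} property. Here I would argue by contradiction: suppose $X \cup Y \in D$ yet $X \notin D$ and $Y \notin D$. By maximality, neither $D \cup \set{X}$ nor $D \cup \set{Y}$ has FIP, so there are $Z_1, \ldots, Z_p \in D$ with $X \cap Z_1 \cap \cdots \cap Z_p = \emptyset$ and $W_1, \ldots, W_q \in D$ with $Y \cap W_1 \cap \cdots \cap W_q = \emptyset$. Writing $Z \triangleq Z_1 \cap \cdots \cap Z_p \cap W_1 \cap \cdots \cap W_q$, we get $X \cap Z = \emptyset$ and $Y \cap Z = \emptyset$, hence $(X \cup Y) \cap Z = \emptyset$; but this contradicts the FIP of $D$, since $X \cup Y$ together with $Z_1, \ldots, Z_p, W_1, \ldots, W_q$ all belong to $D$. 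Therefore $X \in D$ or $Y \in D$, which establishes $U_4$ and completes the verification that $D$ is an ultrafilter over $I$ extending $\Omega$.
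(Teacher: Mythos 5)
Your proof is correct. Note that the paper does not give its own proof of this statement: it is recalled as a well-known result with a citation to Chang and Keisler's model theory text, so there is no internal argument to compare against. Your route --- apply Zorn's lemma to the poset of families with the finite intersection property containing $\Omega$, take a maximal element $D$, and derive $U_1$--$U_4$ from maximality, with the union-splitting contradiction handling the genuinely ``ultra'' axiom $U_4$ --- is precisely the standard proof found in that reference, so in substance you have reproduced the intended argument. One step you leave implicit: when you conclude from the failure of the finite intersection property of $D \cup \set{X}$ that there exist $Z_1, \ldots, Z_p \in D$ with $X \cap Z_1 \cap \cdots \cap Z_p = \emptyset$, you are using the fact that any finite subfamily with empty intersection must actually contain $X$, since a finite subfamily of $D$ alone has nonempty intersection. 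This is immediate and does not affect correctness.
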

The following notion plays a central role in our work, which provides a simple but powerful construction method.
\begin{definition}\label{definition:reduced meet}
	Let $\A$ and $I$ be nonempty sets and $D$ an ultrafilter over $I$. For any family of sets $X_i \subseteq \A$ with $i \in I$, the reduced meet of these sets modulo $D$ is defined below, where $\hat{x} \triangleq \set{i \in I \mid x \in X_i}$ for each $x \in \A$.
	\begin{eqnarray*}
		\bigcap_D X_i \triangleq \set{x \in \A \mid \hat{x} \in D}.
	\end{eqnarray*} 
\end{definition}
Roughly speaking, unlike the operator meet in set theory, $\bigcap_D X_i$ consists of all objects that belong to the majority of $X_i$'s.
\begin{example}
	Let $\A = I \triangleq \omega$. For each $n < \omega$, set $X_n \triangleq \set{i \in \omega \mid i \leq n}$, and thus $\hat{n} = \set{i \in \omega \mid i \geq n}$. Clearly, the set $\set{\hat{n} \mid n < \omega}$ has the finite intersection property. Hence, by Theorem \ref{theorem:ultrafilter theorem}, there exists an ultrafilter $D$ over $I$ such that $\set{\hat{n} \mid n < \omega} \subseteq D$. Moreover, $\bigcap_D X_n = \omega$ due to $\hat{n} \in D$ for each $n < \omega$. By contrast, $\bigcap_{n < \omega} X_n = \set{0}$.
\end{example}
In establishing the universal definability of range related semantics by applying Zorn's lemma, as mentioned above, it is a crucial step that constructing an upper bound for any $\propto$-chain. The next two lemmas intend to show that the operator $\bigcap_D$ provides a potential method to realize this step.
\begin{lemma} \label{lemma:out_RM}
	Let $F = \tuple{\A, \rightarrow}$ be a finitary AAF and $X_i \subseteq \A$ for each $i \in I \neq \emptyset$. For any ultrafilter $D$ over $I$ and $a \in \A$, if $\set{i \mid a \in (X_i)_\eta^+} \in D$ then $a \in (\bigcap_D X_i)_\eta^+$.
\end{lemma}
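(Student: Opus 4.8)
The plan is to exhibit, inside $\bigcap_D X_i$, a concrete set of $\eta$ distinct attackers of $a$, thereby witnessing $a \in (\bigcap_D X_i)_\eta^+$ directly from Notation~\ref{notation:attack}. The guiding idea is that $a \in (X_i)_\eta^+$ is equivalent to $X_i$ containing some $\eta$-element subset of $a^-$, and since $F$ is finitary the set $a^-$ is finite, so there are only \emph{finitely many} candidate witness sets, namely the elements of $[a^-]^\eta$. This finiteness is precisely what will let the finite-union property of the ultrafilter convert an existential-over-witnesses statement into the selection of a single witness.

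First I would abbreviate $J \triangleq \set{i \in I \mid a \in (X_i)_\eta^+}$, so the hypothesis reads $J \in D$. For each $T \in [a^-]^\eta$, set $P_T \triangleq \set{i \in I \mid T \subseteq X_i}$. The key observation is that $a \in (X_i)_\eta^+$ holds iff $|X_i \cap a^-| \geq \eta$, which in turn holds iff $T \subseteq X_i$ for some $T \in [a^-]^\eta$; consequently $J \subseteq \bigcup_{T \in [a^-]^\eta} P_T$. Since $J \in D$ and $D$ is upward closed (property $U_3$), this gives $\bigcup_{T \in [a^-]^\eta} P_T \in D$.

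Next I would invoke finitary-ness directly: because $a^-$ is finite, the index family $[a^-]^\eta$ is finite, so the finite-union property of $D$ applies and yields some $T_0 \in [a^-]^\eta$ with $P_{T_0} \in D$. For each $c \in T_0$ one has $P_{T_0} \subseteq \hat{c}$ (if $T_0 \subseteq X_i$ then in particular $c \in X_i$), and upward closure gives $\hat{c} \in D$, i.e. $c \in \bigcap_D X_i$. Hence $T_0 \subseteq \bigcap_D X_i$, and since $T_0 \subseteq a^-$ with $|T_0| = \eta$, the set $T_0$ supplies $\eta$ distinct attackers of $a$ lying in $\bigcap_D X_i$, which is exactly $a \in (\bigcap_D X_i)_\eta^+$.

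The only delicate point — and the step that genuinely uses the hypothesis that $F$ is finitary — is the passage from $\bigcup_{T} P_T \in D$ to $P_{T_0} \in D$ for a single $T_0$: this relies on $[a^-]^\eta$ being finite, so that the finite-union property (rather than an unavailable arbitrary-union analogue, which generally fails for ultrafilters) can be applied. Everything else is routine bookkeeping with the upward closure $U_3$ and the finite-union property derived from $U_4$.
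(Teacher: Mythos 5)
Your proof is correct and follows essentially the same route as the paper: both decompose the hypothesis set over the finitely many witness sets in $[a^-]^\eta$ (your $P_T$ is literally the paper's $\bigcap_{b \in T} \hat{b}$), use upward closure ($U_3$) and the finite-union property derived from ($U_4$) to select a single witness $T_0$ with $P_{T_0} \in D$, and conclude $T_0 \subseteq \bigcap_D X_i$. The only difference is cosmetic: you pass from $P_{T_0} \in D$ to each $\hat{c} \in D$ by upward closure, where the paper uses the finite-intersection property of ultrafilters — an equivalent step.
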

\begin{proof}
	Let $i \in I$ such that $a \in (X_i)_\eta^+$.
	Then there exists $S_0 \in [a^-]^\eta$ such that $S_0 \subseteq X_i$.
	Thus, $i \in \hat{b}$ for each $b \in S_0$, that is, $i \in \bigcap_{b \in S_0} \hat{b}$.
	Hence, we have
	\begin{eqnarray*}
		\set{i \mid a \in (X_i)_\eta^+} \subseteq \bigcup_{S \in [a^-]^\eta} (\bigcap_{b \in S} \hat{b}).
	\end{eqnarray*}
	Further, by the clause ($U_3$) in Definition \ref{definition:ultrafilter}, $\bigcup_{S \in [a^-]^\eta} (\bigcap_{b \in S} \hat{b}) \in D$ due to the assumption $\set{i \mid a \in (X_i)_\eta^+} \in D$.
	Since $F$ is finitary, $[a^-]^\eta$ is finite. 
	Thus, by the clause ($U_4$) in Definition \ref{definition:ultrafilter}, there exists $S_1 \in [a^-]^\eta$ such that $\bigcap_{b \in S_1} \hat{b} \in D$.
	Hence, $\hat{b} \in D$ for each $b \in S_1$ and thus, by Definition \ref{definition:reduced meet}, $S_1 \subseteq \bigcap_D X_i$.
	So, $a \in (\bigcap_D X_i)_\eta^+$ due to $S_1 \in [a^-]^\eta$.
\end{proof}
\begin{lemma} \label{lemma:RM_ub}
	Let $F = \tuple{\A, \rightarrow}$ be a finitary AAF and $X_i \subseteq \A$ for each $i \in I \neq \emptyset$. If $\set{X_i \mid i \in I}$ is a $\propto$-chain, then there exists an ultrafilter $D$ over $I$ such that $\bigcap_D X_i$ is an upper bound of $\set{X_i \mid i \in I}$ w.r.t. $\propto$.
\end{lemma}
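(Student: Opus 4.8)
The plan is to build the ultrafilter $D$ from the ``upper tails'' of the $\propto$-chain and then verify the upper-bound property one element at a time, pushing the genuinely hard work into Lemma \ref{lemma:out_RM}. For each $j \in I$, set $T_j \triangleq \set{i \in I \mid X_j \propto X_i}$, the set of indices whose associated set lies $\propto$-above $X_j$. First I would check that $\set{T_j \mid j \in I}$ has the finite intersection property. Given finitely many $j_1, \dots, j_k$, since $\set{X_i \mid i \in I}$ is a $\propto$-chain the elements $X_{j_1}, \dots, X_{j_k}$ are linearly pre-ordered by $\propto$, so one of them, say $X_{j_t}$, is $\propto$-greatest among them; then $X_{j_s} \propto X_{j_t}$ for every $s$, whence $j_t \in \bigcap_{1 \leq s \leq k} T_{j_s}$. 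Thus the intersection is nonempty, and by the Ultrafilter Theorem (Theorem \ref{theorem:ultrafilter theorem}) there is an ultrafilter $D$ over $I$ with $\set{T_j \mid j \in I} \subseteq D$.

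It then remains to show that this $D$ works, i.e.\ $X_j \propto \bigcap_D X_i$ for every $j \in I$, which by Definition \ref{definition:propto} amounts to $X_j \cup (X_j)_\eta^+ \subseteq (\bigcap_D X_i) \cup (\bigcap_D X_i)_\eta^+$. Fix $j$ and take an arbitrary $a \in X_j \cup (X_j)_\eta^+$. For every $i \in T_j$ we have $X_j \propto X_i$, so $a \in X_i \cup (X_i)_\eta^+$; consequently $T_j \subseteq \hat{a} \cup \set{i \mid a \in (X_i)_\eta^+}$. Since $T_j \in D$, clause $U_3$ of Definition \ref{definition:ultrafilter} yields $\hat{a} \cup \set{i \mid a \in (X_i)_\eta^+} \in D$, and then $U_4$ forces at least one of the two disjuncts into $D$.

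The two cases are exactly where the construction pays off. If $\hat{a} \in D$, then $a \in \bigcap_D X_i$ directly by Definition \ref{definition:reduced meet}. Otherwise $\set{i \mid a \in (X_i)_\eta^+} \in D$, and here I would invoke Lemma \ref{lemma:out_RM} to conclude $a \in (\bigcap_D X_i)_\eta^+$. Either way $a \in (\bigcap_D X_i) \cup (\bigcap_D X_i)_\eta^+$, so $\bigcap_D X_i$ is the desired upper bound. The main obstacle is precisely this second case: an element that is only rejected (not accepted) by a majority of the $X_i$ need not be rejected by the naive set-theoretic meet, and it is the finitariness of $F$ --- used through Lemma \ref{lemma:out_RM}, via the finiteness of $[a^-]^\eta$ together with clause $U_4$ --- that guarantees the $\eta$ attackers witnessing $a \in (X_i)_\eta^+$ can be pinned to a single set $S \in [a^-]^\eta$ with $\hat{b} \in D$ for all $b \in S$, so that $S \subseteq \bigcap_D X_i$. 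Everything else is bookkeeping with the ultrafilter axioms.
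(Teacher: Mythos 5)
Your proof is correct and follows essentially the same route as the paper's: build an ultrafilter extending a family that has the finite intersection property (certified by taking a $\propto$-greatest element among finitely many members of the chain), then split via $U_4$ into the accepted case ($\hat{a} \in D$) and the rejected case, where Lemma \ref{lemma:out_RM} and finitariness do the real work. The only difference is the choice of generators --- you use the upper tails $T_j = \set{i \in I \mid X_j \propto X_i}$ while the paper uses the sets $J_a = \hat{a} \cup \set{i \in I \mid a \in (X_i)_\eta^+}$ --- and since $T_j \subseteq J_a$ for every $a \in X_j \cup (X_j)_\eta^+$, your verification reduces to the paper's after one application of $U_3$.
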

\begin{proof}
	For each $a \in \A$, put
	\begin{eqnarray*}
		J_a \triangleq \set{i \in I \mid a \in X_i \cup (X_i)_\eta^+} = \hat{a} \cup \set{i \in I \mid a \in (X_i)_\eta^+}
	\end{eqnarray*}
	and
	\begin{eqnarray*}
		M \triangleq \set{J_a \mid a \in \bigcup_{i \in I} (X_i \cup (X_i)_\eta^+)}.
	\end{eqnarray*}
	We first prove the following claim.\\
	\\ \textbf{Claim~~} $M$ has the finite intersection property.\\
	\\
	Let $J_{a_1}, \dots, J_{a_n} \in M (n \geq 1)$.
	Then, for each $i$ with $1 \leq i \leq n$, $a_i \in \bigcup_{i \in I} (X_i \cup (X_i)_\eta^+)$, and hence $a_i \in X_{j_i} \cup (X_{j_i})_\eta^+$ for some $j_i \in I$.
	Since $\set{X_i \mid i \in I}$ is a $\propto$-chain, we may put
	\begin{eqnarray*}
		i_0 \triangleq \text{~the index of the largest one among~} X_{j_1}, \dots, X_{j_n} \text{~w.r.t.~} \propto.
	\end{eqnarray*}
	Then, for each $a_i$ with $1 \leq i \leq n$, $a_i \in X_{i_0} \cup (X_{i_0})_\eta^+$ due to $a_i \in X_{j_i} \cup (X_{j_i})_\eta^+$ and $X_{j_i} \propto X_{i_0}$, and hence $i_0 \in J_{a_i}$.
	Thus, $J_{a_1} \cap \dots \cap J_{a_n} \neq \emptyset$ due to $i_0 \in J_{a_1} \cap \dots \cap J_{a_n}$.\\
	\\
	Now we turn to the proof of the lemma. By the claim above and Theorem \ref{theorem:ultrafilter theorem}, there exists an ultrafilter $D$ over $I$ such that $M \subseteq D$. Let $a \in \bigcup_{i \in I} (X_i \cup (X_i)_\eta^+)$. Since $J_a \in M \subseteq D$ and $J_a = \hat{a} \cup \set{i \in I \mid a \in (X_i)_\eta^+}$, by Definition \ref{definition:ultrafilter}, we get either $\hat{a} \in D$ or $\set{i \in I \mid a \in (X_i)_\eta^+} \in D$. If $\hat{a} \in D$, then $a \in \bigcap_D X_i$ by Definition \ref{definition:reduced meet}. If $\set{i \in I \mid a \in (X_i)_\eta^+} \in D$ then $a \in (\bigcap_D X_i)_\eta^+$ by Lemma \ref{lemma:out_RM}. Consequently, $\bigcup_{i \in I} (X_i \cup (X_i)_\eta^+) \subseteq (\bigcap_D X_i) \cup (\bigcap_D X_i)_\eta^+$. Thus, $\bigcap_D X_i$ is an upper bound of $\set{X_i \mid i \in I}$ w.r.t. $\propto$.
\end{proof}
\begin{definition}\label{definition:closed under reduced meets}
	Given a set $\A$, a set $\Xi \subseteq \pw{\A}$ is said to be closed under reduced meets modulo any ultrafilter if, for any nonempty index set $I$ and ultrafilter $D$ over $I$, $X_i \in \Xi$ for all $i \in I$ implies $\bigcap_D X_i \in \Xi$.
\end{definition}
\begin{definition}
	Let $\Omega$ be a class of AAFs and $\varepsilon$ an extension-based semantics. The semantics $\varepsilon$ is said to be closed under reduced meets modulo any ultrafilter w.r.t. $\Omega$ if $\varepsilon(F)$ is closed under reduced meets modulo any ultrafilter for any $F \in \Omega$.
\end{definition}
By Lemma \ref{lemma:RM_ub}, we have the following metatheorem about the universal definability of range related semantics. Here $\varepsilon_{rr \sigma}^\eta$ is the $\eta$-range related semantics induced by a given extension-based semantics $\varepsilon_{\sigma}$, which is defined similarly to the ones in Definition \ref{definition:range related extensions}.
\begin{theorem} \label{theorem:close under meet and universal definability}
	Let $F$ be a finitary AAF and $\varepsilon_{\sigma}$ an extension-based semantics. If $\varepsilon_{\sigma}(F)$ is closed under reduced meets modulo any ultrafilter then $|\varepsilon_{\sigma}(F)| \geq 1$ iff $|\varepsilon_{rr \sigma}^\eta(F)| \geq 1$.	Hence, $\varepsilon_{rr \sigma}^\eta$ agrees with $\varepsilon_{\sigma}$ on the universal definability w.r.t. finitary AAFs whenever $\varepsilon_{\sigma}$ is closed under reduced meets modulo any ultrafilter.
\end{theorem}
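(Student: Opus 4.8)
The plan is to prove the nontrivial implication by a Zorn's lemma argument on the preordered structure $\tuple{\varepsilon_{\sigma}(F), \propto}$, using Lemma \ref{lemma:RM_ub} together with the closure hypothesis to furnish upper bounds that remain inside $\varepsilon_{\sigma}(F)$. First, recall that, by the definition of $\varepsilon_{rr\sigma}^\eta$ (following the pattern of Definition \ref{definition:range related extensions}), an extension $E$ lies in $\varepsilon_{rr\sigma}^\eta(F)$ exactly when $E \in \varepsilon_{\sigma}(F)$ and no $E' \in \varepsilon_{\sigma}(F)$ satisfies $E \cup E_\eta^+ \subset E' \cup E_\eta'^+$; equivalently, $E$ is a $\propto$-maximal element of $\varepsilon_{\sigma}(F)$ in the sense of Definition \ref{definition:propto}. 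Hence $\varepsilon_{rr\sigma}^\eta(F) \subseteq \varepsilon_{\sigma}(F)$, which immediately yields the direction $|\varepsilon_{rr\sigma}^\eta(F)| \geq 1 \Rightarrow |\varepsilon_{\sigma}(F)| \geq 1$.

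For the converse, I would assume $\varepsilon_{\sigma}(F) \neq \emptyset$ and exhibit a $\propto$-maximal element. Since $\propto$ is only a preorder, I would work either with the induced partial order on $\propto$-equivalence classes or, equivalently, via the range map $E \mapsto E \cup E_\eta^+$, so that ``maximal'' matches the strict-inclusion condition in the definition of $\varepsilon_{rr\sigma}^\eta$. It then suffices to show that every nonempty $\propto$-chain in $\varepsilon_{\sigma}(F)$ has a $\propto$-upper bound lying in $\varepsilon_{\sigma}(F)$. Given such a chain, I would index it as $\set{X_i \mid i \in I}$ with $I \neq \emptyset$. By Lemma \ref{lemma:RM_ub}, there is an ultrafilter $D$ over $I$ for which $\bigcap_D X_i$ is a $\propto$-upper bound of the chain.

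The decisive step is that, because every $X_i \in \varepsilon_{\sigma}(F)$ and $\varepsilon_{\sigma}(F)$ is closed under reduced meets modulo any ultrafilter, the upper bound $\bigcap_D X_i$ is \emph{itself} an element of $\varepsilon_{\sigma}(F)$. Thus every nonempty chain has an upper bound within $\varepsilon_{\sigma}(F)$, and Zorn's lemma produces a $\propto$-maximal element of $\varepsilon_{\sigma}(F)$, i.e.\ an element of $\varepsilon_{rr\sigma}^\eta(F)$, giving $|\varepsilon_{rr\sigma}^\eta(F)| \geq 1$. The final ``hence'' then follows at once, since the displayed equivalence holds for every finitary $F$, so $\varepsilon_{\sigma}$ and $\varepsilon_{rr\sigma}^\eta$ are simultaneously universally defined (or not) w.r.t.\ the class of finitary AAFs.

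I expect the main obstacle to be conceptual rather than computational: the entire weight of the argument rests on the interplay between Lemma \ref{lemma:RM_ub}, which only guarantees that $\bigcap_D X_i$ is an upper bound w.r.t.\ $\propto$, and the closure hypothesis, which is precisely what keeps that upper bound inside $\varepsilon_{\sigma}(F)$ so that Zorn's lemma is applicable. The secondary technical point requiring care is that $\propto$ is merely a preorder (distinct extensions can share the same range $E \cup E_\eta^+$), so the maximality notion must be set up through the range map or the $\propto$-quotient to avoid a spurious appeal to antisymmetry.
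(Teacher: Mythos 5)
Your proposal is correct and follows essentially the same route as the paper's own proof: the paper also reduces the nontrivial direction to Zorn's lemma on the poset $\tuple{\set{X \cup X_\eta^+ \mid X \in \varepsilon_{\sigma}(F)}, \subseteq}$ (your ``range map'' formulation of the $\propto$-preorder issue), uses Lemma \ref{lemma:RM_ub} to produce an ultrafilter $D$ with $\bigcap_D X_i$ a $\propto$-upper bound of the chain, and invokes the closure hypothesis to keep $\bigcap_D X_i$ inside $\varepsilon_{\sigma}(F)$. The two points you flag as delicate (the interplay between Lemma \ref{lemma:RM_ub} and closure, and handling the preorder via ranges) are exactly how the paper's argument is structured.
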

\begin{proof}
	We only show the nontrivial direction.
	Let $\set{X_i \cup (X_i)_\eta^+ \mid i \in I}$ be a chain in the poset 
	\begin{eqnarray*}
		P \triangleq \tuple{\set{X \cup X_\eta^+ \mid X \in \varepsilon_{\sigma}(F)}, \subseteq}.
	\end{eqnarray*}
	Thus, $\set{X_i \mid i \in I}$ is a $\propto$-chain.
	By Lemma \ref{lemma:RM_ub}, there exists an ultrafilter $D$ over $I$ such that $\bigcap_D X_i$ is an upper bound of $\set{X_i \mid i \in I}$ w.r.t. $\propto$.
	Moreover, $\bigcap_D X_i \in \varepsilon_{\sigma}(F)$ by the assumption that $\varepsilon_{\sigma}(F)$ is closed under reduced meets modulo any ultrafilter.
	Thus, $\bigcap_D X_i \cup (\bigcap_D X_i)_\eta^+$ is an upper bound of the chain $\set{X_i \cup (X_i)_\eta^+ \mid i \in I}$ in $P$.
	Consequently, by Zorn's lemma, there exist maximal elements in $P$, and hence $|\varepsilon_{rr \sigma}^\eta(F)| \geq 1$.
\end{proof}
By the metatheorem above, to establish the universal definability of a range related semantics $\varepsilon_{rr \sigma}^\eta$ w.r.t. a class $\Omega$ of finitary AAFs, it is enough to prove that, for any $F \in \Omega$, $\varepsilon_{\sigma}(F)$ is closed under reduced meets modulo any ultrafilter. In the following, we will show that a variety of fundamental semantics in argumentation have such closeness w.r.t. finitary AAFs. To this end, the next result is needed, which asserts that both the neutrality function $N_\ell$ and the defense function $D_n^m$ are distributive over reduced meets modulo any ultrafilter.
\begin{lemma}[Distributive law] \label{lemma:RMMU_Distributive}
	Let $F = \tuple{\A, \rightarrow}$ be an AAF, $I \neq \emptyset$ and $D$ an ultrafilter over $I$, and let $E_i \subseteq \A$ for each $i \in I$.
	\begin{itemize}
		\item[a.] $\bigcap_D N_\ell(E_i) \subseteq N_\ell(\bigcap_D E_i)$.
		\item[b.] $N_\ell(\bigcap_D E_i) \subseteq \bigcap_D N_\ell(E_i)$ whenever $F$ is finitary.
		\item[c.] $D_n^m(\bigcap_D E_i) = \bigcap_D D_n^m(E_i)$ whenever $F$ is finitary.
	\end{itemize}
\end{lemma}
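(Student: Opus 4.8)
The plan is to establish the two inclusions (a) and (b) separately — only the second of which will require finitariness — and then obtain the defence-function identity (c) essentially for free by composing the resulting equality with the factorization $D_n^m = N_m \circ N_n$ from Lemma \ref{lemma:Grossi_simple}(c).

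For (a), I would argue by contradiction and note that no finiteness is needed. Suppose $a \in \bigcap_D N_\ell(E_i)$ but $a \notin N_\ell(\bigcap_D E_i)$; the latter yields $\ell$ pairwise distinct attackers $b_1, \dots, b_\ell$ of $a$ all lying in $\bigcap_D E_i$. By Definition \ref{definition:reduced meet} each $\hat{b_j} \in D$, so $\bigcap_{j=1}^{\ell} \hat{b_j} \in D$ by the finite-intersection property of $D$; intersecting this with $\{i \mid a \in N_\ell(E_i)\} \in D$ produces a set in $D$, in particular a nonempty one. Any index $i_0$ in it satisfies $a \in N_\ell(E_{i_0})$ while simultaneously $b_1, \dots, b_\ell \in E_{i_0}$ attack $a$, contradicting membership in $N_\ell(E_{i_0})$.

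For (b), finitariness enters essentially, and this is the crux of the whole lemma. Assuming $a \in N_\ell(\bigcap_D E_i)$, I want $\{i \mid a \in N_\ell(E_i)\} \in D$, equivalently $\{i \mid a \notin N_\ell(E_i)\} \notin D$. The key observation is that, since $F$ is finitary, $a^-$ is finite, so there are only finitely many $\ell$-element subsets $S_1, \dots, S_k$ of $a^-$, and one has the exact decomposition
\[
\{i \mid a \notin N_\ell(E_i)\} = \bigcup_{t=1}^{k} \{i \mid S_t \subseteq E_i\} = \bigcup_{t=1}^{k} \bigcap_{b \in S_t} \hat{b}.
\]
Were this finite union in $D$, the finite-union property would force some $\bigcap_{b \in S_t} \hat{b} \in D$, whence every $\hat{b}$ with $b \in S_t$ lies in $D$, i.e.\ $S_t \subseteq \bigcap_D E_i$; but then the $\ell$ distinct arguments of $S_t \subseteq a^-$ would all attack $a$ from within $\bigcap_D E_i$, contradicting $a \in N_\ell(\bigcap_D E_i)$. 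Hence the union is not in $D$, which is the claim.

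Finally, (c) follows by assembling (a) and (b): for finitary $F$ they combine into the equality $N_\ell(\bigcap_D E_i) = \bigcap_D N_\ell(E_i)$, valid for every parameter and every family of sets. Applying this first with parameter $n$ to the family $(E_i)$, and then with parameter $m$ to the family $(N_n(E_i))$, while using $D_n^m = N_m \circ N_n$ (Lemma \ref{lemma:Grossi_simple}(c)) at the ends, gives
\[
D_n^m(\textstyle\bigcap_D E_i) = N_m(N_n(\textstyle\bigcap_D E_i)) = N_m(\textstyle\bigcap_D N_n(E_i)) = \textstyle\bigcap_D N_m(N_n(E_i)) = \textstyle\bigcap_D D_n^m(E_i).
\]
The main obstacle is precisely the finitary step in (b): the decomposition into finitely many $\ell$-subsets of $a^-$ is exactly what allows the finite-union property of the ultrafilter to pull out a single witnessing subset, and without finiteness of $a^-$ this reduction — and indeed the inclusion itself — breaks down.
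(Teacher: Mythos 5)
Your proposal is correct and follows essentially the same route as the paper's proof: part (a) via the finite-intersection property of $D$ applied to the $\ell$ distinct attackers, part (b) via the decomposition of $\set{i \mid a \notin N_\ell(E_i)}$ into the finitely many sets $\bigcap_{b \in S}\hat{b}$ indexed by $S \in [a^-]^\ell$ (finitariness) together with the finite-union property of ultrafilters, and part (c) by composing the resulting equality with $D_n^m = N_m \circ N_n$. The only differences are cosmetic — you argue by contradiction where the paper argues contrapositively, and you note the decomposition is an exact equality where the paper only needs one inclusion.
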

\begin{proof}
	\par($a$) Let $a \notin N_\ell(\bigcap_D E_i)$. Then $X_0 \subseteq \bigcap_D E_i$ for some $X_0 \in [a^-]^\ell$. Hence $J \triangleq \set{ i \in I \mid X_0 \subseteq E_i} \in D$. For each $j \in J$, $a \notin N_\ell(E_j)$ due to $X_0 \in [a^-]^\ell$ and $X_0 \subseteq E_j$. Thus, $J \subseteq \set{i \in I \mid a \notin N_\ell(E_i)} \in D$. Consequently, $a \notin \bigcap_D N_\ell(E_i)$.
	\par($b$) Let $a \notin \bigcap_D N_\ell(E_i)$. Then $\hat{a} = \set{i \in I \mid a \in N_\ell(E_i)} \notin D$. Since $D$ is an ultrafilter, we get $\set{i \in I \mid a \notin N_\ell(E_i)} = I - \hat{a} \in D$. For each $X \in [a^-]^\ell$, put
	\begin{eqnarray*}
		J_X \triangleq \set{ i \in I \mid X \subseteq E_i}.
	\end{eqnarray*}
	For each $j \in I - \hat{a}$, $j \in J_X$ for some $X \in [a^-]^\ell$ because of $a \notin N_\ell(E_i)$. Thus
	\begin{eqnarray*}
		I - \hat{a} \subseteq \bigcup_{X \in [a^-]^\ell} J_X,
	\end{eqnarray*}
	and hence $\bigcup_{X \in [a^-]^\ell} J_X \in D$ due to $I - \hat{a} \in D$. Further, since $F$ is finitary, the set $[a^-]^\ell$ is finite. Thus, $J_{X_0} \in D$ for some $X_0 \in [a^-]^\ell$. Then, for each $d \in X_0$, we have $J_{X_0} \subseteq \set{ i \in I \mid d \in E_i}$, and hence $d \in \bigcap_D E_i$. Thus, $X_0 \subseteq \bigcap_D E_i$. Then it follows from $X_0 \in [a^-]^\ell$ that $a \notin N_\ell(\bigcap_D E_i)$.
	\par ($c$) By the clauses ($a$) and ($b$), we have $D_n^m(\bigcap_D E_i) = N_m(N_n(\bigcap_D E_i)) = N_m(\bigcap_D N_n(E_i)) = \bigcap_D N_m(N_n(E_i)) = \bigcap_D D_n^m(E_i)$.
\end{proof}
The clause ($a$) in Lemma \ref{lemma:RMMU_Distributive} may be viewed as a generalization of the directed De Morgan law (see, Lemma \ref{lemma:properties_cf_def}($b$)) due to the assertion below, which asserts that any union of directed sets is a reduced meet modulo an ultrafilter.
\begin{observation} \label{observation:RMMU_continuity}
	Let $A$ and $I$ be any nonempty sets. If $\set{E_i \subseteq A \mid i \in I}$ is a directed set then $\bigcap_D E_i = \bigcup_{i \in I} E_i$ for some ultrafilter $D$ over $I$.
\end{observation}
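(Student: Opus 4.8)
The plan is to exploit the Ultrafilter Theorem (Theorem \ref{theorem:ultrafilter theorem}) to manufacture an ultrafilter that makes every $\hat{x}$ attached to a member of $\bigcup_{i \in I} E_i$ large. First I would record the inclusion that holds for free: for \emph{any} ultrafilter $D$ over $I$, if $x \in \bigcap_D E_i$ then $\hat{x} \in D$, so $\hat{x} \neq \emptyset$ by clause ($U_1$) of Definition \ref{definition:ultrafilter}, whence $x \in E_i$ for some $i$ and thus $x \in \bigcup_{i \in I} E_i$. Hence $\bigcap_D E_i \subseteq \bigcup_{i \in I} E_i$ regardless of the choice of $D$, and it remains only to secure the converse inclusion by a judicious choice of $D$.

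The core of the argument is the observation that the family $\Omega \triangleq \set{\hat{x} \mid x \in \bigcup_{i \in I} E_i}$ has the finite intersection property, and this is exactly where directedness enters. Given $x_1, \dots, x_n \in \bigcup_{i \in I} E_i$, each $x_t$ lies in some $E_{i_t}$; since $\set{E_i \mid i \in I}$ is directed with respect to $\subseteq$, an easy induction yields an index $k \in I$ with $E_{i_t} \subseteq E_k$ for every $t$, so that $x_t \in E_k$ and therefore $k \in \hat{x_t}$ for all $t$. Thus $k \in \bigcap_{1 \leq t \leq n} \hat{x_t}$, proving that this intersection is nonempty.

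With the finite intersection property in hand, Theorem \ref{theorem:ultrafilter theorem} furnishes an ultrafilter $D$ over $I$ with $\Omega \subseteq D$. Then for every $x \in \bigcup_{i \in I} E_i$ we have $\hat{x} \in \Omega \subseteq D$, so $x \in \bigcap_D E_i$ by Definition \ref{definition:reduced meet}; this gives $\bigcup_{i \in I} E_i \subseteq \bigcap_D E_i$, and combined with the free inclusion yields the desired equality. The only point deserving care is the degenerate case $\bigcup_{i \in I} E_i = \emptyset$, in which $\Omega$ is empty: here any ultrafilter over $I$ works (one exists because $I \neq \emptyset$), since the free inclusion already forces $\bigcap_D E_i = \emptyset$. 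I expect the main obstacle, though a light one, to be the verification of the finite intersection property, as it is the single step that genuinely consumes the directedness hypothesis; everything else is bookkeeping with the defining clauses of an ultrafilter.
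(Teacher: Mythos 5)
Your proof is correct and follows essentially the same route as the paper's: the finite intersection property of $\set{\hat{x} \mid x \in \bigcup_{i \in I} E_i}$ via directedness, the Ultrafilter Theorem to obtain $D$, and the observation that $\hat{b} \notin D$ (indeed $\hat{b} = \emptyset$) for $b \notin \bigcup_{i \in I} E_i$ forces the reverse inclusion. Your explicit treatment of the degenerate case $\bigcup_{i \in I} E_i = \emptyset$ is a small extra care the paper leaves implicit, but it changes nothing substantive.
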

\begin{proof}
	Let $S \triangleq \set{\hat{a} \mid a \in \bigcup_{i \in I} E_i}$. Assume that $a_1, \cdots, a_n \in \bigcup_{i \in I} E_i$. Then, for each $k$ with $1 \leq k \leq n$, $a_k \in E_{i_k}$ for some $i_k \in I$. Since $\set{E_i \mid i \in I}$ is directed, there exists $i_0 \in I$  such that $E_{i_k} \subseteq E_{i_0}$ for each $1 \leq k \leq n$. Hence, $i_0 \in \hat{a}_1 \cap \cdots \cap \hat{a}_n$. Thus, $S$ has the finite intersection property. By Theorem \ref{theorem:ultrafilter theorem}, $S \subseteq D$ for some ultrafilter $D$ over $I$. Clearly, $\bigcup_{i \in I} E_i \subseteq \bigcap_D E_i$. Moreover, $\bigcap_D E_i \subseteq \bigcup_{i \in I} E_i$ because of $\hat{b} = \emptyset \notin D$ for each $b \notin \bigcup_{i \in I} E_i$.
\end{proof}
Let $\set{E_i \subseteq \A \mid i \in I}$ be directed. By Observation \ref{observation:RMMU_continuity}, $\bigcup_{i \in I} E_i = \bigcap_D E_i$ for some ultrafilter $D$ over $I$. Then, we have $N_\ell(\bigcup_{i \in I} E_i) = N_\ell(\bigcap_D E_i) \supseteq \bigcap_D N_\ell(E_i)$ by the clause ($a$) in Lemma \ref{lemma:RMMU_Distributive}. Furthermore, $\bigcap_{i \in I} N_\ell(E_i) \subseteq \bigcap_D N_\ell(E_i)$ due to $I \in D$ and, by Lemma \ref{lemma:Grossi_simple}($a$), $N_\ell(\bigcup_{i \in I} E_i) \subseteq \bigcap_{i \in I} N_\ell(E_i)$. Thus $N_\ell(\bigcup_{i \in I} E_i) = N_\ell(\bigcap_D E_i) \supseteq \bigcap_D N_\ell(E_i) \supseteq \bigcap_{i \in I} N_\ell(E_i) \supseteq N_\ell(\bigcup_{i \in I} E_i)$. Consequently, the directed De Morgan law is implied by Lemma \ref{lemma:RMMU_Distributive}($a$).
\par It is well known that the function $D_n^m$ is continuous whenever the AAF $F$ is finitary. The clause ($c$) in the lemma above may be viewed as a generalization of the continuity of $D_n^m$.
For any finitary AAF $F = \tuple{\A, \rightarrow}$ and directed subset $\set{E_i \mid i \in I}$ of $\pw{\A}$, by Observation \ref{observation:RMMU_continuity}, there exists an ultrafilter $D$ over $I$ such that $\bigcap_D E_i = \bigcup_{i \in I} E_i$. Moreover, since $D_n^m$ is monotonic, $D_n^m(E_i) \subseteq D_n^m(\bigcap_D E_i)$ for each $i \in I$. Thus $\bigcup_{i \in I} D_n^m(E_i) \subseteq D_n^m(\bigcap_D E_i)$. On the other hand, it is obvious that $\bigcap_D D_n^m(E_i) \subseteq \bigcup_{i \in I} D_n^m(E_i)$. Then, by the clause ($c$) in Lemma \ref{lemma:RMMU_Distributive}, we have
\begin{eqnarray*}
	D_n^m(\bigcup_{i \in I} E_i) = D_n^m(\bigcap_D E_i) = \bigcap_D D_n^m(E_i)  \subseteq \bigcup_{i \in I} D_n^m(E_i) \subseteq D_n^m(\bigcap_D E_i).
\end{eqnarray*}
Hence $D_n^m(\bigcup_{i \in I} E_i) = D_n^m(\bigcap_D E_i) = \bigcap_D D_n^m(E_i)  = \bigcup_{i \in I} D_n^m(E_i)$. Consequently, for finitary AAFs, the continuity of $D_n^m$ is an instantiation of the distributivity of $D_n^m$ over the operator $\bigcap_D$.
\begin{lemma}\label{lemma:RMMU_subset}
	Let $A$ and $I$ be any nonempty sets and let $X_i, Y_i \subseteq A$ for each $i \in I$. For any ultrafilter $D$ over $I$, if $\set{i \in I \mid X_i \subseteq Y_i} \in D$ then $\bigcap_D X_i \subseteq \bigcap_D Y_i$.
\end{lemma}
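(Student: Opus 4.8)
The plan is to reduce the claim directly to the two basic closure properties of an ultrafilter, namely $U_2$ (closure under finite intersection) and $U_3$ (upward closure) from Definition \ref{definition:ultrafilter}. By Definition \ref{definition:reduced meet}, membership in a reduced meet is witnessed precisely by the relevant index set belonging to $D$, so the whole statement is just a manipulation of index sets inside the filter $D$.

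First I would fix an arbitrary $x \in \bigcap_D X_i$ and aim to show $x \in \bigcap_D Y_i$. Unwinding Definition \ref{definition:reduced meet}, the hypothesis $x \in \bigcap_D X_i$ says exactly that $\set{i \in I \mid x \in X_i} \in D$, while the goal $x \in \bigcap_D Y_i$ amounts to proving $\set{i \in I \mid x \in Y_i} \in D$. At this point I would deliberately avoid the abbreviation $\hat{x}$ of Definition \ref{definition:reduced meet}, since it is overloaded across the two families $X_i$ and $Y_i$, and instead keep both index sets written out in full.

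The core of the argument is to intersect the index set witnessing $x \in \bigcap_D X_i$ with the hypothesis set $J \triangleq \set{i \in I \mid X_i \subseteq Y_i}$, which lies in $D$ by assumption. Since $\set{i \in I \mid x \in X_i} \in D$ and $J \in D$, property $U_2$ gives $J \cap \set{i \in I \mid x \in X_i} \in D$. For every index $i$ in this intersection we have simultaneously $X_i \subseteq Y_i$ and $x \in X_i$, hence $x \in Y_i$; therefore $J \cap \set{i \in I \mid x \in X_i} \subseteq \set{i \in I \mid x \in Y_i}$. Applying the upward-closure property $U_3$ then yields $\set{i \in I \mid x \in Y_i} \in D$, i.e.\ $x \in \bigcap_D Y_i$, which completes the argument.

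I do not anticipate any genuine obstacle here: the statement is an immediate consequence of the filter axioms, and the only point that calls for a little care is the notational one mentioned above, namely keeping the index sets for the two families $X_i$ and $Y_i$ distinct rather than collapsing them into the single symbol $\hat{x}$.
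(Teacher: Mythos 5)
Your proof is correct and is essentially identical to the paper's: both fix an element of $\bigcap_D X_i$, intersect its witnessing index set with $\set{i \in I \mid X_i \subseteq Y_i}$ via closure under finite intersections, and conclude by upward closure of $D$. The only difference is that you cite the axioms $U_2$ and $U_3$ explicitly, which the paper leaves implicit.
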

\begin{proof}
	Let $a \in \bigcap_D X_i$. Then $\set{i \in I \mid a \in X_i} \in D$. Thus 
	\begin{eqnarray*}
		\set{i \in I \mid X_i \subseteq Y_i \text{~and~} a \in X_i} \in D
	\end{eqnarray*}
	due to $\set{i \in I \mid X_i \subseteq Y_i} \in D$. Since $\set{i \in I \mid X_i \subseteq Y_i \text{~and~} a \in X_i} \subseteq \set{i \in I \mid a \in Y_i}$, we have $\set{i \in I \mid a \in Y_i} \in D$, and hence $a \in \bigcap_D Y_i$.
\end{proof}
\begin{lemma} \label{lemma:RMMU_ultrafilter}
	Let $F = \tuple{\A, \rightarrow}$ be a finitary AAF and $E_i \subseteq \A$ for each $i \in I \neq \emptyset$. If $D$ is an ultrafilter over $I$ then 
	\begin{itemize}
		\item[a.] $\set{i \in I \mid E_i \subseteq N_\ell(E_i)} \in D$ implies $\bigcap_D E_i \subseteq N_\ell(\bigcap_D E_i)$\footnote{This clause doesn't depend on the assumption that $F$ is finitary.}.
		\item[b.] $\set{i \in I \mid N_\ell(E_i) \subseteq E_i} \in D$ implies $N_\ell(\bigcap_D E_i) \subseteq \bigcap_D E_i$.
		\item[c.] $\set{i \in I \mid E_i \subseteq D_n^m(E_i)} \in D$ implies $\bigcap_D E_i \subseteq D_n^m(\bigcap_D E_i)$.
		\item[d.] $\set{i \in I \mid D_n^m(E_i) \subseteq E_i} \in D$ implies $D_n^m(\bigcap_D E_i) \subseteq \bigcap_D E_i$.
	\end{itemize}
	Hence, for $\varepsilon \in \set{\textit{Def}^{mn}, \varepsilon_{cf}^{\ell}, \varepsilon_{ad}^{\ell mn}, \varepsilon_{co}^{\ell mn}, \varepsilon_{stb}^{\ell mn}}$, $\set{i \in I \mid E_i \in 
	\varepsilon(F)} \in D$ implies $\bigcap_D E_i \in \varepsilon(F)$.
\end{lemma}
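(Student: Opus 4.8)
The plan is to prove each of the four clauses by a single chain of inclusions obtained by composing Lemma \ref{lemma:RMMU_subset} with the appropriate part of the distributive law (Lemma \ref{lemma:RMMU_Distributive}), and then to assemble the concluding statement by elementary bookkeeping with the ultrafilter axioms of Definition \ref{definition:ultrafilter}.

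For clause (a), I would take $X_i \triangleq E_i$ and $Y_i \triangleq N_\ell(E_i)$. The hypothesis $\set{i \in I \mid E_i \subseteq N_\ell(E_i)} \in D$ is precisely $\set{i \in I \mid X_i \subseteq Y_i} \in D$, so Lemma \ref{lemma:RMMU_subset} yields $\bigcap_D E_i \subseteq \bigcap_D N_\ell(E_i)$; composing with Lemma \ref{lemma:RMMU_Distributive}(a), namely $\bigcap_D N_\ell(E_i) \subseteq N_\ell(\bigcap_D E_i)$, gives the claim. Since neither Lemma \ref{lemma:RMMU_subset} nor Lemma \ref{lemma:RMMU_Distributive}(a) requires finitariness, this clause holds for arbitrary $F$, as the footnote asserts. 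Clause (b) runs symmetrically but in reverse order: Lemma \ref{lemma:RMMU_Distributive}(b), which does require $F$ finitary, gives $N_\ell(\bigcap_D E_i) \subseteq \bigcap_D N_\ell(E_i)$, and Lemma \ref{lemma:RMMU_subset} applied with $X_i \triangleq N_\ell(E_i)$, $Y_i \triangleq E_i$ then gives $\bigcap_D N_\ell(E_i) \subseteq \bigcap_D E_i$. Clauses (c) and (d) have the same structure with $D_n^m$ in place of $N_\ell$, except that here the equality $D_n^m(\bigcap_D E_i) = \bigcap_D D_n^m(E_i)$ from Lemma \ref{lemma:RMMU_Distributive}(c) supplies both one-sided inclusions at once; so (c) and (d) follow by reading the inclusion produced by Lemma \ref{lemma:RMMU_subset} in the two respective directions.

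For the concluding statement I would unfold each semantics via Definitions \ref{definition:def} and \ref{definition:graded_extensions} into a conjunction of fixpoint inequalities, and observe that $\set{i \in I \mid E_i \in \varepsilon(F)}$ is exactly the intersection of the index sets attached to those inequalities. Since this intersection lies in $D$ and each conjunct's index set contains it, axiom ($U_3$) places every such index set in $D$, so the matching clause among (a)--(d) applies. Concretely: $\textit{Def}^{mn}$ needs only (c); $\varepsilon_{cf}^{\ell}$ needs only (a); $\varepsilon_{ad}^{\ell mn}$ combines (a) and (c); $\varepsilon_{co}^{\ell mn}$ combines (a), (c) and (d) so as to recover $\bigcap_D E_i = D_n^m(\bigcap_D E_i)$ together with conflict-freeness; and $\varepsilon_{stb}^{\ell mn}$ applies (a) and (b) first with parameter $n$ and then with parameter $m$ to obtain $\bigcap_D E_i = N_n(\bigcap_D E_i) = N_m(\bigcap_D E_i)$, plus (a) with parameter $\ell$ for the final inclusion into $N_\ell(\bigcap_D E_i)$.

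I do not anticipate a genuine obstacle, since all the analytic content has already been isolated in the distributive law; the argument is a matter of chaining inclusions and invoking ($U_3$). The one point requiring care is that clauses (b)--(d), and hence the closure of every listed semantics other than $\varepsilon_{cf}^{\ell}$ and $\textit{Def}^{mn}$ via the ``$\supseteq$'' directions, rest on the finitariness of $F$, which is precisely what powers Lemma \ref{lemma:RMMU_Distributive}(b),(c). A second minor point is that clauses (a) and (b) are stated for $N_\ell$ yet must be invoked for $N_n$ and $N_m$ in the stable case; this is legitimate because $\ell$ is a free parameter throughout Lemmas \ref{lemma:RMMU_Distributive} and \ref{lemma:RMMU_subset} and may be specialized freely.
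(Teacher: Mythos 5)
Your proof is correct and takes essentially the same route as the paper: the paper proves clause ($a$) exactly as you do, by composing Lemma \ref{lemma:RMMU_subset} with Lemma \ref{lemma:RMMU_Distributive}($a$), and then dismisses clauses ($b$)--($d$) and the concluding statement as ``proved similarly'' --- which is precisely the chaining and ultrafilter bookkeeping (including the instantiation of the free parameter $\ell$ at $n$ and $m$ for the stable case) that you spell out explicitly.
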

\begin{proof}
	We only show the clause ($a$), other clauses may be proved similarly. Since $\set{i \in I \mid E_i \subseteq N_\ell(E_i)} \in D$, by Lemma \ref{lemma:RMMU_subset}, we have $\bigcap_D E_i \subseteq \bigcap_D N_\ell(E_i)$. Moreover, by the clause ($a$) in Lemma \ref{lemma:RMMU_Distributive}, $\bigcap_D N_\ell(E_i) \subseteq N_\ell(\bigcap_D E_i)$. Hence $\bigcap_D E_i \subseteq N_\ell(\bigcap_D E_i)$.	 
\end{proof}
Intuitively, these assertions in the above lemma say that, for any family $\set{E_i}_{i \in I}$ of subsets of $\A$, if the majority of elements in this family are prefixed (postfixed) points of the function $f$ then so is its reduced meets modulo any ultrafilter, and hence for any finitary AAF $F$, the set of all prefixed (postfixed) points of $f$ is closed under the operator $\bigcap_D$, where $f$ is either $N_\ell$ or $D_n^m$. The principal significance of this lemma is that it allows us to assert an interesting property of some extension-based semantics as below. A further result about this property will be given in Section \ref{Sec: Extension-based abstract semantics}, which will bring another metatheorem concerning the universal definability of range related semantics.
\begin{theorem} \label{theorem:close under meet_fundamental semantics}
	The semantics $\textit{Def}^{mn}$, $\varepsilon_{cf}^{\ell}$, $\varepsilon_{ad}^{\ell mn}$, $\varepsilon_{co}^{\ell mn}$, $\varepsilon_{gr}^{\ell mn}$ and $\varepsilon_{stb}^{\ell mn}$ are closed under reduced meets modulo any ultrafilter w.r.t. the class of all finitary AAFs\footnote{For $\varepsilon_{cf}^{\ell}$ and $\varepsilon_{gr}^{\ell mn}$, it doesn't depend on the assumption that $F$ is finitary.}.
\end{theorem}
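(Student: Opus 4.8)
The plan is to dispatch the six semantics in two groups, since five of them are already essentially packaged by Lemma \ref{lemma:RMMU_ultrafilter}, whereas the grounded semantics—being defined by a \emph{global} minimality condition rather than by a (pre/post)fixpoint inequality of $N_\ell$ or $D_n^m$—lies outside the scope of that lemma and must be handled on its own.

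For the semantics $\textit{Def}^{mn}$, $\varepsilon_{cf}^{\ell}$, $\varepsilon_{ad}^{\ell mn}$, $\varepsilon_{co}^{\ell mn}$ and $\varepsilon_{stb}^{\ell mn}$ I would argue uniformly. Fix a finitary AAF $F$, a nonempty index set $I$, an ultrafilter $D$ over $I$, and a family $\{E_i\}_{i \in I}$ with $E_i \in \varepsilon(F)$ for every $i \in I$. Then $\set{i \in I \mid E_i \in \varepsilon(F)} = I$, and $I \in D$ by clause $U_1$ of Definition \ref{definition:ultrafilter}. Hence the concluding implication of Lemma \ref{lemma:RMMU_ultrafilter} applies verbatim and yields $\bigcap_D E_i \in \varepsilon(F)$, which is precisely closure under reduced meets modulo $D$; since $D$ and $I$ were arbitrary, this gives closure under reduced meets modulo any ultrafilter. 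For $\varepsilon_{cf}^{\ell}$ the argument invokes only clause $(a)$ of Lemma \ref{lemma:RMMU_ultrafilter}, whose footnote records that it does not use finitariness, so the conflict-free case in fact holds for arbitrary AAFs.

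For $\varepsilon_{gr}^{\ell mn}$ I would instead exploit the fact that the grounded semantics is at most single-valued: if $E_1, E_2 \in \varepsilon_{gr}^{\ell mn}(F)$ then $E_1 \subseteq E_2$ and $E_2 \subseteq E_1$ by Definition \ref{definition:graded_extensions}, so $E_1 = E_2$ and $|\varepsilon_{gr}^{\ell mn}(F)| \leq 1$. Now take any nonempty $I$, ultrafilter $D$ over $I$, and family $\{E_i\}_{i \in I}$ drawn from $\varepsilon_{gr}^{\ell mn}(F)$. If $\varepsilon_{gr}^{\ell mn}(F) = \emptyset$ there is no such family and the requirement is vacuous; otherwise $\varepsilon_{gr}^{\ell mn}(F) = \set{E}$ forces $E_i = E$ for all $i$, and the reduced meet of a constant family collapses: for each $x \in \A$ one has $\hat{x} = I \in D$ when $x \in E$ and $\hat{x} = \emptyset \notin D$ when $x \notin E$, whence $\bigcap_D E_i = E \in \varepsilon_{gr}^{\ell mn}(F)$. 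This argument never refers to $F$ being finitary, matching the footnote.

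The genuine content of the statement therefore resides in the earlier distributive law (Lemma \ref{lemma:RMMU_Distributive}) and in Lemma \ref{lemma:RMMU_ultrafilter}; the only conceptual step remaining here is recognizing that grounded extensions escape the prefixpoint/postfixpoint bookkeeping of Lemma \ref{lemma:RMMU_ultrafilter} and must instead be treated through their uniqueness. Once that split is made no calculation is left, and I expect the sole subtlety to be presentational—namely handling the empty-valued case of $\varepsilon_{gr}^{\ell mn}$ by the vacuous reading of the closure condition rather than by any reduced-meet computation.
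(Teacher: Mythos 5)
Your proof is correct and takes essentially the same route as the paper's: the five semantics $\textit{Def}^{mn}$, $\varepsilon_{cf}^{\ell}$, $\varepsilon_{ad}^{\ell mn}$, $\varepsilon_{co}^{\ell mn}$, $\varepsilon_{stb}^{\ell mn}$ are dispatched by the concluding implication of Lemma \ref{lemma:RMMU_ultrafilter} together with $I \in D$, and $\varepsilon_{gr}^{\ell mn}$ is handled separately via its uniqueness, reducing to the observation that the reduced meet of a constant family is that constant (with the empty case trivial/vacuous). This matches the paper's proof in both decomposition and detail, including the footnoted remark that the $\varepsilon_{cf}^{\ell}$ and $\varepsilon_{gr}^{\ell mn}$ cases do not use finitariness.
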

\begin{proof}
	 For these semantics except $\varepsilon_{gr}^{\ell mn}$, it follows immediately from Lemma \ref{lemma:RMMU_ultrafilter} based on the fact that $I \in D$ for each ultrafilter $D$ over $I$. Next, we consider $\varepsilon_{gr}^{\ell mn}$.
	 Let $F = \tuple{\A, \rightarrow}$ be any AAF.
	 Clearly, $|\varepsilon_{gr}^{\ell mn}(F)| \leq 1$.
	 If $\varepsilon_{gr}^{\ell mn}(F) = \emptyset$ then it holds trivially.
	 In the situation that $\varepsilon_{gr}^{\ell mn}(F) = \set{E}$ for some $E \subseteq \A$, let $I \neq \emptyset$ and $E_i \in \varepsilon_{gr}^{\ell mn}(F) (i \in I)$.
	 Then $E_i = E$ for all $i \in I$.
	 For any ultrafilter $D$ over $I$, it is easy to see that, for each $a \in \A$, if $a \in E$ then $\hat{a} = I$ else $\hat{a} = \emptyset$, which implies $\bigcap_D E_i = E$, as desired.
\end{proof}
\begin{remark}\label{remark:RMMU_finite AAFs}
	For any AAF $F$ and extension-based semantics $\varepsilon$, $\varepsilon(F)$ is always closed under reduced meets modulo any ultrafilter whenever $|\varepsilon(F)| < \omega$. In particular, $\varepsilon(F)$ is closed under the operator $\bigcap_D$ for any finite AAF $F$.
	If $\varepsilon(F) = \emptyset$ then it holds trivially. Next assume $0 < |\varepsilon(F)| < \omega$. Let $I \neq \emptyset$ and $D$ be any ultrafilter over $I$. Assume $E_i \in \varepsilon(F)$ for each $i \in I$. For each $E \in \varepsilon(F)$, put 
	\begin{eqnarray*}
		I_E \triangleq \set{i \in I \mid E_i = E}.
	\end{eqnarray*}
	Then $\bigcup_{E \in \varepsilon(F)} I_E = I$. Since $|\varepsilon(F)| < \omega$ and $I \in D$, $I_{E_0} \in D$ for some $E_0 \in \varepsilon(F)$. Hence, $E_0 \subseteq \bigcap_D E_i$. Let $b \in \bigcap_D E_i$. Then $\set{i \mid b \in E_i} \in D$. Thus, $I_{E_0} \cap \set{i \mid b \in E_i} \in D$. Hence $b \in E_0$ due to $I_{E_0} \cap \set{i \mid b \in E_i} \neq \emptyset$. Thus, $\bigcap_D E_i = E_0 \in \varepsilon(F)$.	
\end{remark}
Now we arrive at one of the main results in this section.
\begin{theorem} \label{theorem:existence_stg_ss_rra_rrs}
	Let $F$ be any finitary AAF. Then
	\begin{itemize}
		\item[a.] $|\varepsilon_{stg}^{\ell \eta}(F)| \geq 1$ and $|\varepsilon_{rra}^{\ell mn \eta}(F)| \geq 1$.
		\item[b.] $|\varepsilon_{ss}^{\ell mn \eta}(F)| \geq 1$ whenever $\varepsilon_{co}^{\ell mn}(F) \neq \emptyset$.
		\item[c.] $|\varepsilon_{rrs}^{\ell mn \eta}(F)| \geq 1$ whenever $\varepsilon_{stb}^{\ell mn}(F) \neq \emptyset$.
	\end{itemize}
\end{theorem}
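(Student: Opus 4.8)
The plan is to reduce all three parts to the metatheorem Theorem~\ref{theorem:close under meet and universal definability} together with the closure result Theorem~\ref{theorem:close under meet_fundamental semantics}, so that essentially no fresh computation is needed. The first step is purely bookkeeping: reading off Definition~\ref{definition:range related extensions}, the semantics $\varepsilon_{stg}^{\ell\eta}$, $\varepsilon_{rra}^{\ell mn\eta}$, $\varepsilon_{ss}^{\ell mn\eta}$ and $\varepsilon_{rrs}^{\ell mn\eta}$ are precisely the $\eta$-range related semantics $\varepsilon_{rr\sigma}^\eta$ induced, respectively, by the inducing semantics $\varepsilon_{cf}^\ell$, $\varepsilon_{ad}^{\ell mn}$, $\varepsilon_{co}^{\ell mn}$ and $\varepsilon_{stb}^{\ell mn}$, since each clause demands membership in the corresponding $\varepsilon_\sigma(F)$ together with maximality in the preorder $\propto$ of Definition~\ref{definition:propto}.

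Next I would invoke Theorem~\ref{theorem:close under meet_fundamental semantics}, which guarantees that each of these four inducing semantics is closed under reduced meets modulo any ultrafilter with respect to the class of all finitary AAFs. This verifies the hypothesis of Theorem~\ref{theorem:close under meet and universal definability} in every case, whence for a finitary $F$ we obtain the equivalence $|\varepsilon_\sigma(F)| \geq 1$ iff $|\varepsilon_{rr\sigma}^\eta(F)| \geq 1$ for each of the four choices of $\sigma$.

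It then remains only to discharge the left-hand side of each equivalence. For part~(a) this is unconditional: from Definition~\ref{definition:graded_extensions} one has $\emptyset \subseteq N_\ell(\emptyset)$ and $\emptyset \subseteq D_n^m(\emptyset)$, so $\emptyset \in \varepsilon_{cf}^\ell(F)$ and $\emptyset \in \varepsilon_{ad}^{\ell mn}(F)$, and therefore both inducing semantics are always nonempty; the metatheorem then yields $|\varepsilon_{stg}^{\ell\eta}(F)| \geq 1$ and $|\varepsilon_{rra}^{\ell mn\eta}(F)| \geq 1$. For parts~(b) and~(c) the respective hypotheses $\varepsilon_{co}^{\ell mn}(F) \neq \emptyset$ and $\varepsilon_{stb}^{\ell mn}(F) \neq \emptyset$ supply exactly the required $|\varepsilon_\sigma(F)| \geq 1$, so the same application of the metatheorem delivers $|\varepsilon_{ss}^{\ell mn\eta}(F)| \geq 1$ and $|\varepsilon_{rrs}^{\ell mn\eta}(F)| \geq 1$.

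I do not expect a genuine obstacle here, since the substantive work has been absorbed into the two cited theorems; the only point demanding care is the first, structural step---checking that the concrete maximality clauses of Definition~\ref{definition:range related extensions} really instantiate the abstract schema $\varepsilon_{rr\sigma}^\eta$ to which Theorem~\ref{theorem:close under meet and universal definability} applies. Once that identification is made, the argument is a short citation for each clause, the only asymmetry being that $\varepsilon_{cf}^\ell$ and $\varepsilon_{ad}^{\ell mn}$ carry $\emptyset$ as a canonical witness (giving the unconditional statements of~(a)) whereas $\varepsilon_{co}^{\ell mn}$ and $\varepsilon_{stb}^{\ell mn}$ require the explicit nonemptiness hypotheses of~(b) and~(c).
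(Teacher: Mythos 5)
Your proposal is correct and coincides exactly with the paper's own proof, which simply cites Theorems~\ref{theorem:close under meet and universal definability} and~\ref{theorem:close under meet_fundamental semantics} and then remarks (immediately after the theorem) that clause~($a$) holds unconditionally because $\emptyset \in \varepsilon_{cf}^{\ell}(F) \cap \varepsilon_{ad}^{\ell mn}(F)$, while ($b$) and ($c$) need their nonemptiness hypotheses. Your extra care in checking that the concrete semantics of Definition~\ref{definition:range related extensions} instantiate the abstract schema $\varepsilon_{rr\sigma}^{\eta}$ is a sound and welcome addition, but the argument is the same.
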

\begin{proof}
	Immediately follows from Theorems \ref{theorem:close under meet and universal definability} and \ref{theorem:close under meet_fundamental semantics}.
\end{proof}
Here, the clause ($a$) holds unconditionally because of $|\varepsilon_{cf}^{\ell}(F)| \geq 1$ and $|\varepsilon_{ad}^{\ell mn}(F)| \geq 1$ due to $\emptyset \in \varepsilon_{cf}^{\ell}(F) \cap \varepsilon_{ad}^{\ell mn}(F)$, whereas the assumptions in the clauses ($b$) and ($c$) are necessary because that neither $\varepsilon_{co}^{\ell mn}$ nor $\varepsilon_{stb}^{\ell mn}$ is always universally defined, for instance, $\varepsilon_{co}^{\ell mn}(F) = \varepsilon_{stb}^{\ell mn}(F) = \emptyset$ for $F$, $\ell$, $m$ and $n$ considered in Example \ref{Ex:counterexampleforgr}. However, by Proposition \ref{proposition:structure_gr}, $\varepsilon_{co}^{\ell mn}$ is universally defined w.r.t. all AAFs whenever $\ell \geq m$ and $n \geq m$. Thus, we have the following corollary.
\begin{corollary} \label{corollary:existence_ss}
	For any finitary AAF $F$, $|\varepsilon_{ss}^{\ell mn \eta}(F)| \geq 1$ whenever $\ell \geq m$ and $n \geq m$.
\end{corollary}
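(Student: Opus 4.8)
The plan is to reduce the statement to Theorem~\ref{theorem:existence_stg_ss_rra_rrs}($b$), whose conclusion $|\varepsilon_{ss}^{\ell mn \eta}(F)| \geq 1$ holds for any finitary $F$ under the single proviso $\varepsilon_{co}^{\ell mn}(F) \neq \emptyset$. Thus the entire task is to verify that, under the hypotheses $\ell \geq m$ and $n \geq m$, the set of $\ell mn$-complete extensions is nonempty for an arbitrary (finitary) $F$. This is precisely the content already packaged as Corollary~\ref{corollary:existence_gr_co}, so the proof is a two-line combination: first invoke Corollary~\ref{corollary:existence_gr_co} to obtain $\varepsilon_{co}^{\ell mn}(F) \neq \emptyset$, and then apply Theorem~\ref{theorem:existence_stg_ss_rra_rrs}($b$).

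If I were to unfold the nonemptiness step rather than cite it directly, I would proceed as follows. Since $\tuple{\pw{\A}, \subseteq}$ is a complete lattice and $D_n^m$ is monotone by Lemma~\ref{lemma:Grossi_simple}($b$), the Knaster--Tarski theorem guarantees that the least fixpoint $\lfp(D_n^m) = D_{\substack{m \\ n}}^{\lambda_F}(\emptyset)$ exists. The crucial point is to show this fixpoint is $\ell$-conflict-free. Here I would use that $\emptyset \in \varepsilon_{ad}^{mmn}(F)$ trivially, so that Lemma~\ref{lemma:fundamental lemma} (in its $\ell \triangleq m$ instantiation, valid because $n \geq m$) yields $D_{\substack{m \\ n}}^{\lambda_F}(\emptyset) \subseteq N_m(D_{\substack{m \\ n}}^{\lambda_F}(\emptyset))$, and then the assumption $\ell \geq m$ together with Lemma~\ref{lemma:Grossi_simple}($a$) upgrades this to $D_{\substack{m \\ n}}^{\lambda_F}(\emptyset) \subseteq N_\ell(D_{\substack{m \\ n}}^{\lambda_F}(\emptyset))$. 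Combined with the fixpoint identity $D_{\substack{m \\ n}}^{\lambda_F}(\emptyset) = D_n^m(D_{\substack{m \\ n}}^{\lambda_F}(\emptyset))$, this exhibits $D_{\substack{m \\ n}}^{\lambda_F}(\emptyset)$ as an $\ell mn$-complete extension (this is exactly the argument in Proposition~\ref{proposition:structure_gr}), so $\varepsilon_{co}^{\ell mn}(F) \neq \emptyset$.

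There is no genuine obstacle in this corollary; the substantive work lives entirely in the chain Lemma~\ref{lemma:fundamental lemma} $\to$ Proposition~\ref{proposition:structure_gr} $\to$ Theorem~\ref{theorem:existence_stg_ss_rra_rrs}. The one point that deserves emphasis, and which I would flag in the write-up, is that the parameter constraint $\ell \geq m$ and $n \geq m$ is not cosmetic: the hypothesis $\varepsilon_{co}^{\ell mn}(F) \neq \emptyset$ in Theorem~\ref{theorem:existence_stg_ss_rra_rrs}($b$) cannot be dropped, as witnessed by Example~\ref{Ex:counterexampleforgr} where $\varepsilon_{co}^{\ell mn}(F) = \emptyset$ and hence no $\ell mn$-$\eta$-semi-stable extension need exist. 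So the role of $\ell \geq m$ and $n \geq m$ is exactly to force the grounded extension to survive as a complete extension, thereby feeding the range-related machinery based on the operator $\bigcap_D$.
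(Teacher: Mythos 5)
Your proposal is correct and takes essentially the same route as the paper, whose own (implicit) proof is exactly the sentence preceding the corollary: Proposition~\ref{proposition:structure_gr} gives $\varepsilon_{co}^{\ell mn}(F) \neq \emptyset$ whenever $\ell \geq m$ and $n \geq m$, and Theorem~\ref{theorem:existence_stg_ss_rra_rrs}($b$) then yields $|\varepsilon_{ss}^{\ell mn \eta}(F)| \geq 1$ for finitary $F$. One negligible slip in your optional unfolding: the upgrade $N_m(X) \subseteq N_\ell(X)$ for $\ell \geq m$ is monotonicity in the \emph{parameter} and follows directly from Definition~\ref{definition:graded-neutrality}, not from Lemma~\ref{lemma:Grossi_simple}($a$) (which is antitonicity in the set argument); this does not affect correctness.
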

Since the function $\lambda_X.X_\eta^+$ is monotone w.r.t. $\subseteq$, each range related semantics $\varepsilon_{rr \sigma}^\eta$ is up-closed, that is, $E_1 \in \varepsilon_{rr \sigma}^\eta(F)$ and $E_1 \subseteq E_2 \in \varepsilon_{\sigma}(F)$ implies $E_2 \in \varepsilon_{rr \sigma}^\eta(F)$. Moreover, by Corollary \ref{corollary:structure_co} and the clauses ($b$) and ($c$) in Corollary \ref{corollary:structure_cf_ad} , we have
\begin{corollary} \label{corollary:structure_stg_ss}
	Let $F = \tuple{\A, \rightarrow}$ be a finitary AAF.
	\begin{itemize}
			\item[a.] Both $\tuple{\varepsilon_{stg}^{\ell \eta}(F), \subseteq}$ and $\tuple{\varepsilon_{rra}^{\ell mn \eta}(F), \subseteq}$ are dcpos.
			\item[b.] $\tuple{\varepsilon_{ss}^{\ell mn \eta}(F), \subseteq}$ is a dcpo whenever $\ell \geq m$ and $n \geq m$.
	\end{itemize}
	Hence, all these posets have the Lindenbaum property.
\end{corollary}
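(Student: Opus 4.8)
The plan is to reduce the dcpo property of each range related poset to the already-established structure of the underlying fundamental semantics, leaning on two ingredients: first, the fact recorded just above the statement that each $\varepsilon_{rr\sigma}^\eta$ is up-closed inside $\varepsilon_\sigma(F)$ (a consequence of the monotonicity of $\lambda_X.X_\eta^+$); and second, the fact that directed suprema in $\varepsilon_{cf}^\ell(F)$, in $\varepsilon_{ad}^{\ell mn}(F)$, and (under $\ell \geq m$ and $n \geq m$) in $\varepsilon_{co}^{\ell mn}(F)$ are all computed as plain set-theoretic unions, by Corollary \ref{corollary:structure_cf_ad}($b$),($c$) and Corollary \ref{corollary:structure_co}. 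Since $\varepsilon_{stg}^{\ell\eta}(F)\subseteq\varepsilon_{cf}^\ell(F)$, $\varepsilon_{rra}^{\ell mn\eta}(F)\subseteq\varepsilon_{ad}^{\ell mn}(F)$ and $\varepsilon_{ss}^{\ell mn\eta}(F)\subseteq\varepsilon_{co}^{\ell mn}(F)$, directed completeness of the ambient poset will be transported to the sub-semantics by up-closedness.

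Concretely, I would fix a directed set $\D \subseteq \varepsilon_{stg}^{\ell\eta}(F)$ and argue uniformly (the cases of $\varepsilon_{rra}^{\ell mn\eta}$ and $\varepsilon_{ss}^{\ell mn\eta}$ are identical, replacing $\varepsilon_{cf}^\ell$ by $\varepsilon_{ad}^{\ell mn}$ or $\varepsilon_{co}^{\ell mn}$). As $\D$ is a directed subset of $\varepsilon_{cf}^\ell(F)$, Corollary \ref{corollary:structure_cf_ad}($b$) gives $\textit{sup}\D=\bigcup\D\in\varepsilon_{cf}^\ell(F)$. Because $\D$ is nonempty by the definition of directedness, I pick any $E_0\in\D$; then $E_0\in\varepsilon_{stg}^{\ell\eta}(F)$ and $E_0\subseteq\bigcup\D\in\varepsilon_{cf}^\ell(F)$, so up-closedness forces $\bigcup\D\in\varepsilon_{stg}^{\ell\eta}(F)$. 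Finally, every upper bound of $\D$ in the subposet $\tuple{\varepsilon_{stg}^{\ell\eta}(F),\subseteq}$ is in particular an upper bound in $\tuple{\varepsilon_{cf}^\ell(F),\subseteq}$, so $\bigcup\D$, being the least such bound in the latter, is also the supremum of $\D$ inside $\varepsilon_{stg}^{\ell\eta}(F)$. This yields the dcpo property; for $\varepsilon_{ss}^{\ell mn\eta}$ the hypothesis $\ell \geq m$, $n \geq m$ is exactly what makes $\varepsilon_{co}^{\ell mn}(F)$ a cpo with union-suprema in Corollary \ref{corollary:structure_co}, which is why the constraint appears in clause ($b$) but not in clause ($a$).

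For the Lindenbaum property I would mirror the proof of Corollary \ref{corollary:Lindenbaum}. Each of the three posets is nonempty: Theorem \ref{theorem:existence_stg_ss_rra_rrs}($a$) gives $|\varepsilon_{stg}^{\ell\eta}(F)|\geq 1$ and $|\varepsilon_{rra}^{\ell mn\eta}(F)|\geq 1$, and Corollary \ref{corollary:existence_ss} gives $|\varepsilon_{ss}^{\ell mn\eta}(F)|\geq 1$ under $\ell \geq m$, $n \geq m$. Given $X$ in such a poset $P$, the up-set $\tuple{\set{Y\in P \mid X\subseteq Y},\subseteq}$ is itself directed complete: any chain in it is directed, its union lies in $P$ by the dcpo property just proved, and that union contains $X$, hence remains in the up-set. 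Applying Zorn's lemma produces a $\subseteq$-maximal element above $X$, which gives both that each extension can be extended to a maximal one and that maximal elements exist.

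The only delicate point, and it is a mild one, is ensuring that the directed join lands back in the range related semantics rather than merely in the ambient fundamental one; this is precisely what up-closedness supplies, and it is the reason the monotonicity of $\lambda_X.X_\eta^+$ was isolated beforehand. Everything else is routine bookkeeping: directedness is inherited by sub-collections, and a supremum computed in a larger poset remains a supremum in a subposet once it is shown to belong to that subposet.
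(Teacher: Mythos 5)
Your proposal is correct and follows essentially the same route as the paper: the corollary is exactly the combination of the up-closedness of $\varepsilon_{rr\sigma}^\eta$ inside $\varepsilon_\sigma(F)$ with the union-computed directed suprema from Corollaries \ref{corollary:structure_cf_ad}($b$),($c$) and \ref{corollary:structure_co}, and the Lindenbaum claim is the same Zorn argument as in Corollaries \ref{corollary:Lindenbaum} and \ref{corollary:existence_pr}, with nonemptiness supplied by Theorem \ref{theorem:existence_stg_ss_rra_rrs} and Corollary \ref{corollary:existence_ss}. Your identification of why the constraint $\ell \geq m$, $n \geq m$ appears only in clause ($b$) also matches the paper's reasoning.
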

For any AAF $F$ with $n \geq \ell \geq m$, by Proposition \ref{proposition:equ_ss and rra}($b$), $\varepsilon_{ss}^{\ell mn \eta}(F) \subseteq \varepsilon_{rra}^{\ell mn \eta}(F)$. Moreover, by Corollary \ref{corollary:properties_defense}($b$), it is easy to verify that $D_{\substack{m \\ n}}^{\lambda_F}(E) \in \varepsilon_{ss}^{\ell mn \eta}(F)$ whenever $E \in \varepsilon_{rra}^{\ell mn \eta}(F)$. More precisely, we have
\begin{corollary}
	Let $F = \tuple{\A, \rightarrow}$ be a finitary AAF and $n \geq \ell \geq m$. Then $D_{\substack{m \\ n}}^{\omega} \dashv \textit{incl}$ is a Galois adjunction between the posets $\tuple{\varepsilon_{rra}^{\ell mn \eta}(F), \subseteq}$ and $\tuple{\varepsilon_{ss}^{\ell mn \eta}(F), \subseteq}$.
\end{corollary}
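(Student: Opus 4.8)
The plan is to verify the two conditions in the definition of a Galois adjunction directly, reusing the adjunction $D_{\substack{m \\ n}}^{\lambda_F} \dashv \textit{incl}$ between $\tuple{\varepsilon_{ad}^{\ell mn}(F), \subseteq}$ and $\tuple{\varepsilon_{co}^{\ell mn}(F), \subseteq}$ from Corollary \ref{corollary:properties_defense}($c$) and cutting it down to the range-related subposets. Since $F$ is finitary, by Lemma \ref{lemma:Grossi_simple}($d$) the function $D_n^m$ is continuous, so I may take $\lambda_F = \omega$ throughout. First I would record that $\textit{incl}$ is a legitimate monotone map $\varepsilon_{ss}^{\ell mn \eta}(F) \to \varepsilon_{rra}^{\ell mn \eta}(F)$: this is exactly Proposition \ref{proposition:equ_ss and rra}($b$), which gives $\varepsilon_{ss}^{\ell mn \eta}(F) \subseteq \varepsilon_{rra}^{\ell mn \eta}(F)$ under $n \geq \ell \geq m$, and monotonicity of the identity embedding is immediate.

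The substantive step is to show that $D_{\substack{m \\ n}}^{\omega}$ maps $\varepsilon_{rra}^{\ell mn \eta}(F)$ into $\varepsilon_{ss}^{\ell mn \eta}(F)$. Let $E \in \varepsilon_{rra}^{\ell mn \eta}(F)$, so $E \in \varepsilon_{ad}^{\ell mn}(F)$ and $E$ is $\propto$-maximal in $\varepsilon_{ad}^{\ell mn}(F)$. By Corollary \ref{corollary:properties_defense}($b$), $D_{\substack{m \\ n}}^{\omega}(E)$ is the least $\ell mn$-complete extension containing $E$; in particular $E \subseteq D_{\substack{m \\ n}}^{\omega}(E) \in \varepsilon_{co}^{\ell mn}(F) \subseteq \varepsilon_{ad}^{\ell mn}(F)$. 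Monotonicity of $\lambda_X.X_\eta^+$ then yields $E \propto D_{\substack{m \\ n}}^{\omega}(E)$, and since $D_{\substack{m \\ n}}^{\omega}(E) \in \varepsilon_{ad}^{\ell mn}(F)$ the $\propto$-maximality of $E$ forces the ranges to coincide, i.e. $E \cup E_\eta^+ = D_{\substack{m \\ n}}^{\omega}(E) \cup D_{\substack{m \\ n}}^{\omega}(E)_\eta^+$. To upgrade $D_{\substack{m \\ n}}^{\omega}(E)$ from complete to semi-stable, I would take any $E'' \in \varepsilon_{co}^{\ell mn}(F)$ with $D_{\substack{m \\ n}}^{\omega}(E) \propto E''$; transitivity of $\propto$ together with the coincidence of ranges gives $E \propto E''$, and since $E'' \in \varepsilon_{ad}^{\ell mn}(F)$ the maximality of $E$ yields $E \cup E_\eta^+ = E'' \cup (E'')_\eta^+$, whence $D_{\substack{m \\ n}}^{\omega}(E) \cup D_{\substack{m \\ n}}^{\omega}(E)_\eta^+ = E'' \cup (E'')_\eta^+$. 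Thus no $\propto$-proper extension of $D_{\substack{m \\ n}}^{\omega}(E)$ exists inside $\varepsilon_{co}^{\ell mn}(F)$, that is, $D_{\substack{m \\ n}}^{\omega}(E) \in \varepsilon_{ss}^{\ell mn \eta}(F)$. Monotonicity of $D_{\substack{m \\ n}}^{\omega}$ is inherited from the monotonicity of each finite iterate, which follows by induction from Lemma \ref{lemma:Grossi_simple}($b$), exactly as underlies Corollary \ref{corollary:properties_defense}.

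Finally I would check the defining biconditional: for $E \in \varepsilon_{rra}^{\ell mn \eta}(F)$ and $E' \in \varepsilon_{ss}^{\ell mn \eta}(F)$, that $D_{\substack{m \\ n}}^{\omega}(E) \subseteq E'$ iff $E \subseteq \textit{incl}(E') = E'$. The forward direction is immediate from $E \subseteq D_{\substack{m \\ n}}^{\omega}(E)$. For the backward direction, if $E \subseteq E'$ then, since $E' \in \varepsilon_{ss}^{\ell mn \eta}(F) \subseteq \varepsilon_{co}^{\ell mn}(F)$ is an $\ell mn$-complete extension containing $E$, the minimality clause of Corollary \ref{corollary:properties_defense}($b$) gives $D_{\substack{m \\ n}}^{\omega}(E) \subseteq E'$. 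I expect the main obstacle to be the middle step: establishing that the image genuinely lands in $\varepsilon_{ss}^{\ell mn \eta}(F)$ rather than merely in $\varepsilon_{co}^{\ell mn}(F)$, which hinges on carefully transporting the $\propto$-maximality of $E$ through the equality of ranges $E \cup E_\eta^+ = D_{\substack{m \\ n}}^{\omega}(E) \cup D_{\substack{m \\ n}}^{\omega}(E)_\eta^+$; once the image and monotonicity are secured, the biconditional is a routine consequence of the least-complete-extension property.
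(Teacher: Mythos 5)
Your proof is correct and follows essentially the same route as the paper: $\varepsilon_{ss}^{\ell mn \eta}(F) \subseteq \varepsilon_{rra}^{\ell mn \eta}(F)$ via Proposition \ref{proposition:equ_ss and rra}($b$), the image claim $D_{\substack{m \\ n}}^{\omega}(E) \in \varepsilon_{ss}^{\ell mn \eta}(F)$ for $E \in \varepsilon_{rra}^{\ell mn \eta}(F)$ via Corollary \ref{corollary:properties_defense}($b$), and the adjunction biconditional from the least-complete-extension property. In fact you carefully spell out (through the coincidence of ranges and transitivity of $\propto$) the step the paper dismisses as ``easy to verify,'' which is exactly the right way to fill that gap.
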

\section{More on reduced meets modulo ultrafilters} \label{Sec: More on RM}
Although the operator $\bigcap_D$ is introduced as the tool to establish the universal definability of range related semantics, it also contributes to explore other issues of AAFs. This section will explore it further.
\subsection{Some metatheorems}
The closeness under the operator $\bigcap_D$ implies a number of interesting properties. This subsection intends to provide some examples to illustrate this.
\begin{definition}
	Let $F = \tuple{\A, \rightarrow}$ be an AAF and $X \subseteq \A$. Given an extension-based semantics $\varepsilon$, $X$ is said to be $\varepsilon$-extensible if $X \subseteq E$ for some $E \in \varepsilon(F)$, and $X$ is said to be finitely $\varepsilon$-extensible if every finite subset of $X$ is $\varepsilon$-extensible.
\end{definition}
As usual in logic and topology, in exploring infinite AAFs, the compactness is an important desired property because it provides the possibility of studying infinite objects based on finite ones. Here we intend to show that the property $\varepsilon$-extensibility is compact whenever $\varepsilon$ is closed under reduced meets modulo any ultrafilter, that is, whether a given set can be extended to an $\varepsilon$-extension may be boiled down to the $\varepsilon$-extensibility of its finite subsets.
\begin{theorem} \label{theorem:epsilon_extensible}
	Let $F = \tuple{\A, \rightarrow}$ be any AAF and $\varepsilon$ an extension-based semantics. If $\varepsilon(F)$ is closed under reduced meets modulo any ultrafilter, then the property $\varepsilon$-extensibility is compact, that is, for each $X \subseteq \A$, $X$ is $\varepsilon$-extensible iff $X$ is finitely $\varepsilon$-extensible.
\end{theorem}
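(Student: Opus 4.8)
The plan is to establish the nontrivial ``if'' direction by a compactness argument built directly on the Ultrafilter Theorem (Theorem \ref{theorem:ultrafilter theorem}), using the closeness of $\varepsilon(F)$ under $\bigcap_D$ to manufacture the desired extension. The ``only if'' direction is immediate and I would dispose of it first: if $X \subseteq E$ for some $E \in \varepsilon(F)$, then every finite subset of $X$ is contained in the same $E$, so $X$ is finitely $\varepsilon$-extensible.

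For the converse, suppose $X$ is finitely $\varepsilon$-extensible. I would take the index set to be $I \triangleq \wp_f(X)$, which is nonempty since $\emptyset \in \wp_f(X)$. By finite $\varepsilon$-extensibility, for each $Y \in I$ there is some $E_Y \in \varepsilon(F)$ with $Y \subseteq E_Y$; fix such a choice. The target is an ultrafilter $D$ over $I$ for which $X \subseteq \bigcap_D E_Y$, where the family $\set{E_Y \mid Y \in I}$ plays the role of $\set{X_i \mid i \in I}$ in Definition \ref{definition:reduced meet}, so that $\hat{x} = \set{Y \in I \mid x \in E_Y}$.

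To build $D$, for each $x \in X$ set $V_x \triangleq \set{Y \in I \mid x \in Y}$. The family $\set{V_x \mid x \in X}$ has the finite intersection property, because for any $x_1, \dots, x_k \in X$ the finite set $\set{x_1, \dots, x_k} \in \wp_f(X)$ lies in $V_{x_1} \cap \dots \cap V_{x_k}$. Hence by Theorem \ref{theorem:ultrafilter theorem} there is an ultrafilter $D$ over $I$ with $\set{V_x \mid x \in X} \subseteq D$. Now fixing any $x \in X$, the implication $x \in Y \Rightarrow x \in Y \subseteq E_Y$ gives $V_x \subseteq \hat{x}$, whence $\hat{x} \in D$ by the upward closure $U_3$ of Definition \ref{definition:ultrafilter}. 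By Definition \ref{definition:reduced meet} this means $x \in \bigcap_D E_Y$; as $x$ was arbitrary, $X \subseteq \bigcap_D E_Y$. Finally, since each $E_Y \in \varepsilon(F)$ and $\varepsilon(F)$ is closed under reduced meets modulo any ultrafilter, $\bigcap_D E_Y \in \varepsilon(F)$, which witnesses that $X$ is $\varepsilon$-extensible.

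The only genuinely delicate point is the correct choice of index set and of the cone sets $V_x$: one must index by the \emph{finite} subsets of $X$, so that each index furnishes an honest witnessing extension $E_Y$, while arranging the ultrafilter to concentrate on cones large enough to force every element of $X$ into the reduced meet. Everything else is bookkeeping with the ultrafilter axioms. Degenerate cases cause no difficulty: when $X = \emptyset$ the family $\set{V_x \mid x \in X}$ is empty, trivially has the finite intersection property, and any ultrafilter over $I$ works, giving $X = \emptyset \subseteq \bigcap_D E_Y \in \varepsilon(F)$.
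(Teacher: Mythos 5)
Your proof is correct and follows essentially the same route as the paper: index by $\wp_f(X)$, fix witnesses $E_Y \supseteq Y$, apply the Ultrafilter Theorem to get an ultrafilter making every $\hat{x}$ large, and conclude via closedness under $\bigcap_D$. The only (cosmetic) difference is that you obtain $\hat{x} \in D$ indirectly from the cone sets $V_x = \set{Y \mid x \in Y}$ and upward closure $U_3$, whereas the paper verifies the finite intersection property for the sets $\hat{x}$ directly.
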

\begin{proof}
	It is trivial that the left implies the right.
	Next we consider another direction.
	By the assumption, for each finite subset $Y$ of $X$ (i.e., $Y \in \wp_f(X)$), we may choose arbitrarily and fix a set $E_Y \in \varepsilon(F)$ such that $Y \subseteq E_Y$.
	For each $x \in X$, put $\hat{x} \triangleq \set{Y \in \wp_f(X) \mid x \in E_Y}$.
	For any finite subset $Y = \set{x_1, \cdots, x_n}$ of $X$, it is easy to see that $\hat{x}_1 \cap \cdots \cap \hat{x}_n \neq \emptyset$ due to $x_i \in Y \subseteq E_Y$ for each $1 \leq i \leq n$.
	Hence, the set $\set{\hat{x} \mid x \in X}$ has the finite intersection property.
	Thus, by Theorem \ref{theorem:ultrafilter theorem}, there exists an ultrafilter $D$ over $\wp_f(X)$ such that $\set{\hat{x} \mid x \in X} \subseteq D$.
	Since $\varepsilon(F)$ is closed under reduced meets modulo any ultrafilter, we get $\bigcap_D E_Y \in \varepsilon(F)$.
	Moreover, by Definition \ref{definition:reduced meet}, $X \subseteq \bigcap_D E_Y$ due to $\set{\hat{x} \mid x \in X} \subseteq D$, as desired.
\end{proof}
\begin{corollary} \label{corollary:RMMU_extensible}
	Let $F = \tuple{\A, \rightarrow}$ be any AAF and $\varepsilon$ an extension-based semantics, and let $X_i \subseteq \A$ for each $i \in I \neq \emptyset$. If $\varepsilon(F)$ is closed under reduced meets modulo any ultrafilter, then, for each ultrafilter $D$ over $I$, $\set{i \in I \mid X_i \text{~is~} \varepsilon \text{-extensible}} \in D$ implies $\bigcap_D X_i$ is $\varepsilon$-extensible, hence the set $\set{X \subseteq \A \mid X \text{~is~} \varepsilon \text{-extensible}}$ is closed under the operator $\bigcap_D$.
\end{corollary}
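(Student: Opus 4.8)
The plan is to derive the claim straight from the closure hypothesis on $\varepsilon(F)$ together with the monotonicity of reduced meets recorded in Lemma~\ref{lemma:RMMU_subset}, rather than from any direct manipulation of the sets $\hat{x}$. Write $J \triangleq \set{i \in I \mid X_i \text{~is~} \varepsilon\text{-extensible}}$ and assume $J \in D$. The first thing I would observe is that $J \neq \emptyset$, since $\emptyset \notin D$ for every ultrafilter $D$; hence some $X_{i_0}$ with $i_0 \in J$ sits inside a member of $\varepsilon(F)$, so in particular $\varepsilon(F) \neq \emptyset$, and I may fix once and for all a default element $E_0 \in \varepsilon(F)$.

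Next I would manufacture a family indexed by the \emph{whole} of $I$ that lies entirely inside $\varepsilon(F)$, because the closure hypothesis applies only to such total families. For each $i \in J$, $\varepsilon$-extensibility lets me choose and fix $E_i \in \varepsilon(F)$ with $X_i \subseteq E_i$; for each $i \in I - J$ I would simply put $E_i \triangleq E_0$. Then $E_i \in \varepsilon(F)$ for \emph{every} $i \in I$, and the assumption that $\varepsilon(F)$ is closed under reduced meets modulo any ultrafilter gives $\bigcap_D E_i \in \varepsilon(F)$.

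It then remains to show $\bigcap_D X_i \subseteq \bigcap_D E_i$, which exhibits $\bigcap_D X_i$ as $\varepsilon$-extensible. By construction $X_i \subseteq E_i$ holds for all $i \in J$, so $J \subseteq \set{i \in I \mid X_i \subseteq E_i}$; upward closure of $D$ (clause $U_3$ of Definition~\ref{definition:ultrafilter}) together with $J \in D$ then forces $\set{i \in I \mid X_i \subseteq E_i} \in D$, and Lemma~\ref{lemma:RMMU_subset} delivers $\bigcap_D X_i \subseteq \bigcap_D E_i$. The concluding ``hence'' clause is the special case $J = I$: if every $X_i$ is $\varepsilon$-extensible then the relevant index set equals $I \in D$ for any $D$, so the collection of all $\varepsilon$-extensible subsets of $\A$ is closed under $\bigcap_D$.

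The one delicate point, and the only real obstacle (modest as it is), is that the closure hypothesis is phrased for families defined on all of $I$, whereas $\varepsilon$-extensibility is assumed only on the $D$-large set $J$. Bridging this mismatch is exactly the purpose of the default assignment $E_i = E_0$ on $I - J$, which is legitimate precisely because $J \in D$ forces $\varepsilon(F) \neq \emptyset$. I would also want to remark that the behaviour of the family off $J$ is irrelevant: the inclusion $X_i \subseteq E_i$ is needed only on a $D$-large set, and $J$ is already $D$-large, so no constraint on the $E_i$ with $i \notin J$ is required beyond their membership in $\varepsilon(F)$.
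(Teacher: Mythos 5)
Your proof is correct, but it takes a genuinely different route from the paper's. The paper proves this corollary as a quick application of Theorem~\ref{theorem:epsilon_extensible}: it takes an arbitrary finite subset $Y$ of $\bigcap_D X_i$, notes that $\set{i \in I \mid Y \subseteq X_i}$ is a finite intersection of the sets $\hat{y}$ ($y \in Y$) and hence lies in $D$, intersects this with the $D$-large set of indices where $X_i$ is $\varepsilon$-extensible to find a single $i_0$ with $Y \subseteq X_{i_0} \subseteq E$ for some $E \in \varepsilon(F)$, and then invokes the compactness of $\varepsilon$-extensibility to conclude. You instead bypass compactness entirely: you totalize the family of witnesses (choosing $E_i \supseteq X_i$ in $\varepsilon(F)$ for $i$ in the $D$-large set $J$, and padding with a fixed $E_0 \in \varepsilon(F)$ off $J$), apply the closure hypothesis directly to the family $\set{E_i}_{i \in I}$ and the \emph{given} ultrafilter $D$, and finish with Lemma~\ref{lemma:RMMU_subset}. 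What the paper's route buys is brevity, since Theorem~\ref{theorem:epsilon_extensible} is already on the shelf; what your route buys is self-containedness and an explicit witness: you exhibit $\bigcap_D E_i \in \varepsilon(F)$ as a concrete superset of $\bigcap_D X_i$, and you avoid the second ultrafilter (over $\wp_f(X)$, produced via the Ultrafilter Theorem) hidden inside the compactness argument. Your observation about the padding step --- that it is legitimate precisely because $J \in D$ forces $J \neq \emptyset$ and hence $\varepsilon(F) \neq \emptyset$, and that the values of $E_i$ off $J$ are otherwise irrelevant --- is exactly the right care needed to match the hypothesis of Definition~\ref{definition:closed under reduced meets}, which quantifies over families indexed by all of $I$.
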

\begin{proof}
	Let $Y$ be any finite subset of $\bigcap_D X_i$. Then $\set{i \in I \mid Y \subseteq X_i} \in D$, and hence $\set{i \in I \mid Y \subseteq X_i \text{~and~} X_i \text{~is~} \varepsilon \text{-extensible}} \in D$. Thus, $Y \subseteq X_{i_0}$ and $X_{i_0}$ is $\varepsilon$-extensible for some $i_0 \in I$. Hence $Y$ is $\varepsilon$-extensible, and so is $\bigcap_D X_i$ by Theorem \ref{theorem:epsilon_extensible}.
\end{proof}
\begin{definition} \label{definition:infer}
	Let $F = \tuple{\A, \rightarrow}$ be any AAF and $\varepsilon$ an extension-based semantics. For any $X \subseteq \A$ and $a \in \A$, $X$ $\varepsilon$-infers $a$, in symbols $X \nc{\varepsilon}{F} a$ (or simply $X \nc{\varepsilon}{} a$ if no confusion will arise), if, for any $E \in \varepsilon(F)$, $X \subseteq E$ implies $a \in E$. Otherwise, we say that $X$ doesn't $\varepsilon$-infer $a$, denoted by $X \nNc{\varepsilon}{F} a$.
\end{definition}
\begin{theorem} \label{theorem:epsilon_inference}
	Let $F = \tuple{\A, \rightarrow}$ be any AAF and $\varepsilon$ an extension-based semantics. If $\varepsilon(F)$ is closed under reduced meets modulo any ultrafilter then $\varepsilon$-inference is co-compact, that is, for any $X \subseteq \A$ and $a \in \A$, $X \nNc{\varepsilon}{} a$ iff $Y \nNc{\varepsilon}{} a$ for each finite subset $Y$ of $X$ (equivalently, $X \nc{\varepsilon}{} a$ iff $X_0 \nc{\varepsilon}{} a$ for some finite subset $X_0$ of $X$).
\end{theorem}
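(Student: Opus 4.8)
The plan is to prove the $\nNc{\varepsilon}{}$-formulation and observe that the parenthetical ``equivalently'' statement is just its contrapositive (with the trivial direction being monotonicity of $\nc{\varepsilon}{}$). The forward implication is immediate and uses no closure hypothesis: if $X \nNc{\varepsilon}{} a$, then there is some $E \in \varepsilon(F)$ with $X \subseteq E$ and $a \notin E$, and since every finite $Y \subseteq X$ satisfies $Y \subseteq X \subseteq E$ together with $a \notin E$, we get $Y \nNc{\varepsilon}{} a$.

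The substance is the converse, and I would mimic the ultrafilter construction from the proof of Theorem \ref{theorem:epsilon_extensible}. Assume $Y \nNc{\varepsilon}{} a$ for every finite $Y \subseteq X$. For each $Y \in \wp_f(X)$ I would fix a witness $E_Y \in \varepsilon(F)$ such that $Y \subseteq E_Y$ and, crucially, $a \notin E_Y$; both conditions are exactly what $Y \nNc{\varepsilon}{} a$ supplies. For each $x \in X$ set $\hat{x} \triangleq \set{Y \in \wp_f(X) \mid x \in E_Y}$. As in Theorem \ref{theorem:epsilon_extensible}, for any finite $Y_0 = \set{x_1, \dots, x_k} \subseteq X$ we have $Y_0 \in \hat{x}_1 \cap \cdots \cap \hat{x}_k$ because $x_i \in Y_0 \subseteq E_{Y_0}$ for each $i$; hence $\set{\hat{x} \mid x \in X}$ has the finite intersection property. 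By the Ultrafilter Theorem (Theorem \ref{theorem:ultrafilter theorem}) there is an ultrafilter $D$ over $\wp_f(X)$ with $\set{\hat{x} \mid x \in X} \subseteq D$.

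Now closure under reduced meets modulo $D$ gives $\bigcap_D E_Y \in \varepsilon(F)$. From $\set{\hat{x} \mid x \in X} \subseteq D$ and Definition \ref{definition:reduced meet} we obtain $X \subseteq \bigcap_D E_Y$, exactly as in the extensibility argument. The single new ingredient — and the only place the requirement $a \notin E_Y$ is used — is that $\hat{a} = \set{Y \in \wp_f(X) \mid a \in E_Y} = \emptyset$, so $\hat{a} \notin D$ by axiom $U_1$ of Definition \ref{definition:ultrafilter}, and hence $a \notin \bigcap_D E_Y$ by Definition \ref{definition:reduced meet}. Thus $\bigcap_D E_Y$ witnesses $X \subseteq \bigcap_D E_Y$ and $a \notin \bigcap_D E_Y$, i.e.\ $X \nNc{\varepsilon}{} a$, as desired.

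I do not anticipate a serious obstacle, since the construction is essentially identical to that of Theorem \ref{theorem:epsilon_extensible}. The only care needed is to select each witness $E_Y$ so that it \emph{simultaneously} contains $Y$ and omits $a$, and then to record that omitting $a$ in \emph{every} $E_Y$ forces $a$ out of the reduced meet regardless of which ultrafilter extends the finite-intersection family $\set{\hat{x}\mid x\in X}$. Boundary cases are harmless: $\wp_f(X)$ is always nonempty (it contains $\emptyset$), and when $X = \emptyset$ the family $\set{\hat{x}\mid x\in X}$ is empty and trivially has the finite intersection property, so the argument still produces the required witness.
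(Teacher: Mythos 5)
Your proposal is correct and takes essentially the same approach as the paper's own proof: for each $Y \in \wp_f(X)$ you fix a witness $E_Y \in \varepsilon(F)$ with $Y \subseteq E_Y$ and $a \notin E_Y$, extend the family $\set{\hat{x} \mid x \in X}$ to an ultrafilter $D$ over $\wp_f(X)$ via the finite intersection property, and then use closure under $\bigcap_D$ together with $\hat{a} = \emptyset \notin D$ to conclude that $\bigcap_D E_Y$ witnesses $X \nNc{\varepsilon}{} a$. The only differences are cosmetic: you spell out the trivial direction and the $X = \emptyset$ boundary case, which the paper omits.
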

\begin{proof}
	We only show that the right implies the left, the other direction is trivial. 
	Assume $Y \nNc{\varepsilon}{} a$ for each finite subset $Y$ of $X$.
	Then, for each $Y \in \wp_f(X)$, we may choose and fix an extension $E_Y \in \varepsilon(F)$ such that $Y \subseteq E_Y$ and $a \notin E_Y$.
	Similar to Theorem \ref{theorem:epsilon_extensible}, there exists an ultrafilter $D$ over $\wp_f(X)$ such that $\set{\hat{x} \mid x \in X} \subseteq D$, where $\hat{x} \triangleq \set{Y \in \wp_f(X) \mid x \in E_Y}$.
	Hence $X \subseteq \bigcap_D E_Y \in \varepsilon(F)$.
	Moreover, due to $a \notin E_Y$ for each $Y \in \wp_f(X)$, $\hat{a} = \emptyset$ and hence $\hat{a} \notin D$.
	Thus, $a \notin \bigcap_D E_Y$.
	Consequently, $X \nNc{\varepsilon}{} a$, as desired.
\end{proof}
Similar to Corollary \ref{corollary:RMMU_extensible}, we have the following result. Its proof is left to the reader.
\begin{corollary}
	Let $F = \tuple{\A, \rightarrow}$ be any AAF and $\varepsilon$ an extension-based semantics. If $\varepsilon(F)$ is closed under reduced meets modulo any ultrafilter, $a \in \A$ and $X_i \subseteq \A$ for each $i \in I \neq \emptyset$ then, for any ultrafilter $D$ over $I$, $\set{i \in I \mid X_i \nNc{\varepsilon}{} a} \in D$ implies $\bigcap_D X_i \nNc{\varepsilon}{} a$, and hence the set $\set{X \subseteq \A \mid X \nNc{\varepsilon}{} a}$ is closed under the operator $\bigcap_D$.
\end{corollary}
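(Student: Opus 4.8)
The plan is to reduce the assertion to finite subsets by invoking the co-compactness of $\varepsilon$-inference established in Theorem \ref{theorem:epsilon_inference}, exactly paralleling the proof of Corollary \ref{corollary:RMMU_extensible}. First I would assume $\set{i \in I \mid X_i \nNc{\varepsilon}{} a} \in D$ and note that, by Theorem \ref{theorem:epsilon_inference}, to conclude $\bigcap_D X_i \nNc{\varepsilon}{} a$ it suffices to show $Y \nNc{\varepsilon}{} a$ for every finite subset $Y$ of $\bigcap_D X_i$.

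Next I would fix such a finite $Y$. By Definition \ref{definition:reduced meet}, each $y \in Y$ satisfies $\hat{y} = \set{i \in I \mid y \in X_i} \in D$, so closure of $D$ under finite intersections yields $\set{i \in I \mid Y \subseteq X_i} = \bigcap_{y \in Y} \hat{y} \in D$. Intersecting this with the hypothesis gives $\set{i \in I \mid Y \subseteq X_i \text{ and } X_i \nNc{\varepsilon}{} a} \in D$, which is nonempty since $\emptyset \notin D$; hence there is an index $i_0$ with $Y \subseteq X_{i_0}$ and $X_{i_0} \nNc{\varepsilon}{} a$.

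The key step will then be the downward monotonicity of the non-inference relation. Since $X_{i_0} \nNc{\varepsilon}{} a$, Definition \ref{definition:infer} supplies some $E \in \varepsilon(F)$ with $X_{i_0} \subseteq E$ and $a \notin E$; because $Y \subseteq X_{i_0} \subseteq E$, the very same $E$ witnesses $Y \nNc{\varepsilon}{} a$. Thus every finite subset of $\bigcap_D X_i$ fails to infer $a$, and Theorem \ref{theorem:epsilon_inference} delivers $\bigcap_D X_i \nNc{\varepsilon}{} a$. For the final closure claim I would let each $X_i$ range over $\set{X \subseteq \A \mid X \nNc{\varepsilon}{} a}$, so that $\set{i \in I \mid X_i \nNc{\varepsilon}{} a} = I \in D$ holds automatically and the first part applies.

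I do not expect a genuine obstacle, as the structure is routine once co-compactness is available; the hypothesis that $\varepsilon(F)$ is closed under reduced meets modulo any ultrafilter enters only indirectly, as the precondition of Theorem \ref{theorem:epsilon_inference}. The one point demanding mild care will be recognizing that a single witnessing extension $E$ for $X_{i_0} \nNc{\varepsilon}{} a$ simultaneously witnesses $Y \nNc{\varepsilon}{} a$ for every $Y \subseteq X_{i_0}$, i.e.\ that $\nNc{\varepsilon}{}$ is preserved under shrinking the premise set.
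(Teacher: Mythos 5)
Your proof is correct and takes exactly the route the paper intends: the paper leaves this proof to the reader, remarking only that it is similar to Corollary \ref{corollary:RMMU_extensible}, and your argument — passing to an arbitrary finite subset $Y$ of $\bigcap_D X_i$, using the finite intersection property of the ultrafilter to find a single index $i_0$ with $Y \subseteq X_{i_0}$ and $X_{i_0} \nNc{\varepsilon}{} a$, transferring non-inference downward to $Y$ via the witnessing extension $E$, and concluding by the co-compactness of Theorem \ref{theorem:epsilon_inference} — is precisely that parallel argument. The one point you flagged as needing care (that a witness $E \in \varepsilon(F)$ for $X_{i_0} \nNc{\varepsilon}{} a$ simultaneously witnesses $Y \nNc{\varepsilon}{} a$ for every $Y \subseteq X_{i_0}$) is handled correctly, and the closure claim follows as you say from $I \in D$.
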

As shown in Theorem \ref{theorem:close under meet_fundamental semantics}, all semantics in Definition \ref{definition:graded_extensions} but $\varepsilon_{na}^{\ell}$ and $\varepsilon_{pr}^{\ell mn}$ are closed under reduced meets modulo any ultrafilter w.r.t. finitary AAFs. Consequently, Theorems \ref{theorem:epsilon_extensible} and \ref{theorem:epsilon_inference} can be applied to these concrete semantics. Formally, we have
\begin{corollary} \label{corollary:epsilon_extensible}
	For any finitary AAF and $\varepsilon \in \set{\textit{Def}^{mn}, \varepsilon_{cf}^{\ell}, \varepsilon_{ad}^{\ell mn}, \varepsilon_{co}^{\ell mn}, \varepsilon_{gr}^{\ell mn}, \varepsilon_{stb}^{\ell mn}}$, $\varepsilon$-extensibility is compact and $\varepsilon$-inference is co-compact\footnote{For $\varepsilon_{cf}^{\ell}$ and $\varepsilon_{gr}^{\ell mn}$, it doesn't depend on the assumption that $F$ is finitary.}.
\end{corollary}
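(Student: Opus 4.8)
The plan is to obtain this corollary as an immediate consequence of the three preceding results, since all the substantive work has already been done. The key observation is that Theorems \ref{theorem:epsilon_extensible} and \ref{theorem:epsilon_inference} each reduce their conclusion -- compactness of $\varepsilon$-extensibility and co-compactness of $\varepsilon$-inference, respectively -- to one and the same hypothesis: that $\varepsilon(F)$ is closed under reduced meets modulo any ultrafilter. Thus the only thing that needs checking is that each of the six semantics $\textit{Def}^{mn}$, $\varepsilon_{cf}^{\ell}$, $\varepsilon_{ad}^{\ell mn}$, $\varepsilon_{co}^{\ell mn}$, $\varepsilon_{gr}^{\ell mn}$ and $\varepsilon_{stb}^{\ell mn}$ enjoys this closure property on finitary AAFs.

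First I would invoke Theorem \ref{theorem:close under meet_fundamental semantics}, which asserts precisely this closure for exactly the stated list of semantics with respect to the class of all finitary AAFs. Since the list of semantics appearing in the corollary coincides verbatim with the list in that theorem, no case analysis or additional verification is required. Feeding this closure into Theorem \ref{theorem:epsilon_extensible} yields that $\varepsilon$-extensibility is compact for each such $\varepsilon$ on any finitary $F$, and feeding it into Theorem \ref{theorem:epsilon_inference} yields that $\varepsilon$-inference is co-compact. This settles the main assertion.

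For the footnoted refinement concerning $\varepsilon_{cf}^{\ell}$ and $\varepsilon_{gr}^{\ell mn}$, I would appeal to the footnote of Theorem \ref{theorem:close under meet_fundamental semantics}, which already records that these two semantics are closed under $\bigcap_D$ \emph{without} any finitariness assumption. The crucial point to confirm is that Theorems \ref{theorem:epsilon_extensible} and \ref{theorem:epsilon_inference} are themselves stated for an \emph{arbitrary} AAF $F$ and use only the closure hypothesis in their proofs, never invoking finitariness of $F$. Granting this, the conclusion for $\varepsilon_{cf}^{\ell}$ and $\varepsilon_{gr}^{\ell mn}$ propagates to every AAF, finitary or not. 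There is essentially no obstacle in this argument: the entire content lies in the earlier meta-theorems, and the only item deserving a moment's attention is the bookkeeping just described -- namely, matching the scope of the closure results (all AAFs for $\varepsilon_{cf}^{\ell}$ and $\varepsilon_{gr}^{\ell mn}$, finitary AAFs otherwise) against the hypotheses of Theorems \ref{theorem:epsilon_extensible} and \ref{theorem:epsilon_inference} so that the restricted and unrestricted conclusions are each attributed correctly.
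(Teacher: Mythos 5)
Your proposal is correct and follows exactly the paper's own route: the paper derives this corollary by combining Theorem \ref{theorem:close under meet_fundamental semantics} (closure of the listed semantics under $\bigcap_D$ w.r.t.\ finitary AAFs, with the footnote exempting $\varepsilon_{cf}^{\ell}$ and $\varepsilon_{gr}^{\ell mn}$ from finitariness) with Theorems \ref{theorem:epsilon_extensible} and \ref{theorem:epsilon_inference}, which are stated for arbitrary AAFs under the closure hypothesis. Your bookkeeping on the scope of the footnote is precisely the point the paper relies on, so nothing is missing.
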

In Corollaries \ref{corollary:structure_cf_ad} and \ref{corollary:structure_co}, we have shown that $\textit{Def}^{mn}$, $\varepsilon_{cf}^{\ell}$, $\varepsilon_{ad}^{\ell mn}$ and $\varepsilon_{co}^{\ell mn}$ are dcpos with $\textit{sup}S = \bigcup S$ for any directed subset $S$. We may lift these results to a metatheoretical level as follows.
\begin{theorem}[Dcpo]\label{theorem:dcpo}
	Let $F = \tuple{\A, \rightarrow}$ be any AAF and $\varepsilon$ an extension-based semantics. If $\varepsilon(F)$ is closed under reduced meets modulo any ultrafilter then $\tuple{\varepsilon(F), \subseteq}$ is a dcpo with $\textit{sup}S = \bigcup S$ for any directed subset $S$ of $\varepsilon(F)$ whenever $\varepsilon(F) \neq \emptyset$.
\end{theorem}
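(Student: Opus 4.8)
The plan is to reduce the existence of directed suprema entirely to the closure hypothesis, using the bridge already established in Observation~\ref{observation:RMMU_continuity}, which says that any directed union can be realized as a reduced meet modulo a suitable ultrafilter. So first I would fix a directed subset $S$ of $\varepsilon(F)$. Since the notion of a directed set in this paper is required to be nonempty, $S \neq \emptyset$, and I may simply take $S$ itself as an index set, writing $S = \set{E_i \mid i \in I}$ with $I = S$ and $E_i \subseteq \A$ for each $i \in I$.

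The heart of the argument is the single step that produces $\bigcup S \in \varepsilon(F)$. Because $S = \set{E_i \mid i \in I}$ is a directed family of subsets of $\A$, Observation~\ref{observation:RMMU_continuity} supplies an ultrafilter $D$ over $I$ with $\bigcap_D E_i = \bigcup_{i \in I} E_i = \bigcup S$. Each $E_i$ lies in $\varepsilon(F)$, so the assumption that $\varepsilon(F)$ is closed under reduced meets modulo any ultrafilter (in the sense of Definition~\ref{definition:closed under reduced meets}) yields $\bigcap_D E_i \in \varepsilon(F)$, i.e.\ $\bigcup S \in \varepsilon(F)$. This is the only place where any real content is used; everything else is a routine verification about the order $\subseteq$.

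It then remains to check that $\bigcup S$ is the supremum of $S$ in the poset $\tuple{\varepsilon(F), \subseteq}$. It is an upper bound since $E_i \subseteq \bigcup S$ for every $i \in I$, and it is the least one because any upper bound $U \in \varepsilon(F)$ of $S$ satisfies $E_i \subseteq U$ for all $i$, whence $\bigcup S \subseteq U$. Thus $\textit{sup}\,S = \bigcup S$ computed inside $\varepsilon(F)$. As $S$ was an arbitrary directed subset, every directed subset of $\varepsilon(F)$ has a supremum, so $\tuple{\varepsilon(F), \subseteq}$ is a dcpo; the hypothesis $\varepsilon(F) \neq \emptyset$ guarantees the underlying poset is nonempty.

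I do not expect a genuine obstacle here: the delicate combinatorial work has already been carried out in Observation~\ref{observation:RMMU_continuity} and in the closure lemmas (e.g.\ Theorem~\ref{theorem:close under meet_fundamental semantics}), and this statement is precisely the metatheoretical packaging of those facts. The only point demanding care is the bookkeeping of the index set — one must invoke directedness of $S$ so that Observation~\ref{observation:RMMU_continuity} applies and the resulting reduced meet collapses to the plain union, rather than to a proper subset of it. Once that identification $\bigcap_D E_i = \bigcup S$ is in hand, the conclusion is immediate.
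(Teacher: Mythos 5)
Your proposal is correct and follows essentially the same route as the paper's own proof: invoke Observation~\ref{observation:RMMU_continuity} to realize the directed union $\bigcup S$ as a reduced meet $\bigcap_D E_i$ modulo some ultrafilter, apply the closure hypothesis to conclude $\bigcup S \in \varepsilon(F)$, and then verify it is the least upper bound. The extra detail you supply (indexing $S$ by itself, the explicit supremum check) is just an expansion of what the paper compresses into ``it is easy to see that $\textit{sup}S = \bigcup S$.''
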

\begin{proof}
	Let $S = \set{E_i \mid i \in I}$ be any directed subset of $\varepsilon(F)$. By Observation \ref{observation:RMMU_continuity}, $\bigcup S = \bigcap_D E_i$ for some ultrafilter $D$ over $I$. By the assumption, we have $\bigcap_D E_i \in \varepsilon(F)$. Then it is easy to see that $\textit{sup}S = \bigcup S$.
\end{proof}
For each extension $E$ in $\varepsilon_{cf}^{\ell}(F)$ (or, $\varepsilon_{ad}^{\ell mn}(F)$, $\varepsilon_{co}^{\ell mn}(F)$), we have shown that  $E$ can be extended to a maximal one in $\varepsilon_{cf}^{\ell}(F)$ ($\varepsilon_{ad}^{\ell mn}(F)$, $\varepsilon_{co}^{\ell mn}(F)$, resp.), see Corollaries \ref{corollary:Lindenbaum} and \ref{corollary:existence_pr}. For $\varepsilon_{stb}^{\ell mn}(F)$, it also holds trivially because that $\varepsilon_{stb}^{\ell mn}(F)$ is an antichain (i.e., for any $E_1$, $E_2$ in $\varepsilon_{stb}^{\ell mn}(F)$, $E_1 \subseteq E_2$ implies $E_1 = E_2$). These results may be lifted to a metatheoretical level as follows:
\begin{theorem}[Lindenbaum property] \label{theorem:Lindenbaum_RM}
	Let $F = \tuple{\A, \rightarrow}$ be any AAF and $\varepsilon_\sigma$ an extension-based semantics. If $\varepsilon_\sigma(F)$ is closed under reduced meets modulo any ultrafilter, then
	\begin{itemize}
		\item[a.] The set $\varepsilon_\sigma(F)$ has maximal elements w.r.t. $\subseteq$ whenever $\varepsilon_\sigma(F) \neq \emptyset$.
		\item[b.] Each element in $\varepsilon_\sigma(F)$ can be extended to a maximal element in $\varepsilon_\sigma(F)$.
		\item[c.] The clauses ($a$) and ($b$) also hold for the range related semantics $\varepsilon_{rr \sigma}^\eta$ associated with $\varepsilon_\sigma$.
	\end{itemize}
\end{theorem}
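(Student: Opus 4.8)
The plan is to reduce all three clauses to a single application of Zorn's lemma, with Theorem \ref{theorem:dcpo} supplying the upper bounds needed for chains. I would prove clause (b) first, since clause (a) follows from it immediately: if $\varepsilon_\sigma(F) \neq \emptyset$, pick any $X \in \varepsilon_\sigma(F)$ and extend it to a maximal element.

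For (b), fix $X \in \varepsilon_\sigma(F)$ and set $U_X \triangleq \set{Y \in \varepsilon_\sigma(F) \mid X \subseteq Y}$, which is nonempty since $X \in U_X$. Any chain $\mathcal{C} \subseteq U_X$ is a directed subset of $\varepsilon_\sigma(F)$, so by Theorem \ref{theorem:dcpo} its supremum $\bigcup \mathcal{C}$ lies in $\varepsilon_\sigma(F)$; as $X \subseteq \bigcup \mathcal{C}$ (and $X$ itself serves as an upper bound for the empty chain), we get $\bigcup \mathcal{C} \in U_X$. Zorn's lemma then yields a maximal $M \in U_X$. Finally $M$ is maximal in $\varepsilon_\sigma(F)$ as well: if $M \subseteq Z \in \varepsilon_\sigma(F)$ then $X \subseteq Z$, so $Z \in U_X$, and maximality of $M$ in $U_X$ forces $M = Z$. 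This establishes (b), and hence (a).

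For (c), the key additional ingredient is that $\varepsilon_{rr \sigma}^\eta(F)$ is \emph{up-closed} in $\varepsilon_\sigma(F)$ (noted before Corollary \ref{corollary:structure_stg_ss}, since $\lambda_X.X_\eta^+$ is monotone w.r.t.\ $\subseteq$): if $E_1 \in \varepsilon_{rr \sigma}^\eta(F)$ and $E_1 \subseteq E_2 \in \varepsilon_\sigma(F)$ then $E_2 \in \varepsilon_{rr \sigma}^\eta(F)$. Using this, I would repeat the argument with $V_E \triangleq \set{Y \in \varepsilon_{rr \sigma}^\eta(F) \mid E \subseteq Y}$ for a fixed $E \in \varepsilon_{rr \sigma}^\eta(F)$. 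A nonempty chain $\mathcal{C} \subseteq V_E$ is still directed in $\varepsilon_\sigma(F)$, so Theorem \ref{theorem:dcpo} gives $\bigcup \mathcal{C} \in \varepsilon_\sigma(F)$; choosing any $E_0 \in \mathcal{C}$ we have $E_0 \subseteq \bigcup \mathcal{C}$, whence up-closedness returns $\bigcup \mathcal{C}$ to $\varepsilon_{rr \sigma}^\eta(F)$, so $\bigcup \mathcal{C} \in V_E$. Zorn's lemma again produces a maximal element of $V_E$, which is maximal in $\varepsilon_{rr \sigma}^\eta(F)$ by the same reasoning as above, delivering (b) and hence (a) for $\varepsilon_{rr \sigma}^\eta$.

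The main obstacle — and the reason (c) is not merely a restatement of (a) and (b) — is that $\varepsilon_{rr \sigma}^\eta(F)$ need not itself be closed under reduced meets modulo any ultrafilter (just as $\varepsilon_{pr}^{\ell mn}$ is not), so Theorem \ref{theorem:dcpo} cannot be invoked for $\varepsilon_{rr \sigma}^\eta$ directly. The resolution is to keep forming directed suprema inside the ambient dcpo $\varepsilon_\sigma(F)$ and to re-enter $\varepsilon_{rr \sigma}^\eta(F)$ through up-closedness, so that directed unions of range-related extensions stay range-related without ever requiring $\bigcap_D$-closedness of $\varepsilon_{rr \sigma}^\eta$ itself.
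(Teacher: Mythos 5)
Your proposal is correct and follows essentially the same route as the paper's proof: Theorem \ref{theorem:dcpo} supplies upper bounds for chains, Zorn's lemma yields maximal elements, and the up-closedness of $\varepsilon_{rr\sigma}^\eta(F)$ in the ambient dcpo $\tuple{\varepsilon_\sigma(F), \subseteq}$ handles clause ($c$) without needing $\bigcap_D$-closedness of the range related semantics itself. The only differences are cosmetic: you derive ($a$) from ($b$) rather than proving ($a$) first, and you spell out the maximality-transfer and empty-chain details that the paper leaves implicit.
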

\begin{proof}
	($a$) By Theorem \ref{theorem:dcpo}, the poset $\tuple{\varepsilon_\sigma(F), \subseteq}$ is a dcpo. Hence  each chain has an upper bound. Then, by Zorn's lemma, $\varepsilon_\sigma(F)$ has maximal elements w.r.t. $\subseteq$.
	\par ($b$) Similar to ($a$) but consider the poset $\tuple{\set{X \in \varepsilon_\sigma(F) \mid E \subseteq X}, \subseteq}$ instead of $\tuple{\varepsilon_\sigma(F), \subseteq}$ for each $E \in \varepsilon_\sigma(F)$.
	\par ($c$)
	Let $\set{E_i \mid i \in I}$ be a chain in $\tuple{\varepsilon_{rr \sigma}^\eta(F), \subseteq}$. Clearly, it has an upper bound $E$ in the dcpo $\tuple{\varepsilon_\sigma(F), \subseteq}$. Since $\varepsilon_{rr \sigma}^\eta(F)$ is up-closed, $E \in \varepsilon_{rr \sigma}^\eta(F)$, and hence, it is also an upper bound of $\set{E_i \mid i \in I}$ in $\tuple{\varepsilon_{rr \sigma}^\eta(F), \subseteq}$. Further, we get the conclusion by the same reasoning as ($a$) and ($b$).
\end{proof}
\subsection{Properties of the operator $\bigcap_D$ related to $\varepsilon_{\textit{max}, \sigma}$}
Given an extension-based semantics $\varepsilon_{\sigma}$, we can define the derived semantics $\varepsilon_{\textit{max}, \sigma}$ by
\begin{eqnarray*}
	\varepsilon_{\textit{max}, \sigma}(F) \triangleq \set{E \mid E \text{~is maximal in~} \varepsilon_{\sigma}(F)} \text{~for each AAF~} F.
\end{eqnarray*}
For example, the semantics $\varepsilon_{na}^\ell$, $\varepsilon_{pr, \textit{Dung}}^{\ell mn}$, $\varepsilon_{pr}^{\ell mn}$, $\varepsilon_{id}^{\ell mn}$ and $\varepsilon_{eg}^{\ell mn \eta}$ are all defined in this pattern, where $\varepsilon_{id}^{\ell mn}$ and $\varepsilon_{eg}^{\ell mn \eta}$ will be introduced in the next section. Clearly, by Theorem \ref{theorem:Lindenbaum_RM}, for any class $\Omega$ of AAFs, the semantics $\varepsilon_{\textit{max}, \sigma}$ agrees with $\varepsilon_{\sigma}$ on the universal definability w.r.t. $\Omega$ whenever $\varepsilon_{\sigma}$ is closed under reduced meets modulo any ultrafilter w.r.t. $\Omega$. At this point, a natural problem arises, that is, whether $\varepsilon_{\textit{max}, \sigma}(F)$ is closed under reduced meets modulo any ultrafilter. We provide the example below to illustrate that the derived semantics $\varepsilon_{\textit{max}, \sigma}$ isn't closed under reduced meets modulo any ultrafilter in general.
\begin{example} \label{Ex:counterexample for the derived semantics}
	Consider the AAF $F \triangleq \tuple{\omega \times \omega, \rightarrow}$ and $\ell = m = n = 1$, where $\rightarrow$ is defined as $\tuple{k, j} \rightarrow \tuple{k', j'}$ iff $k \neq k'$. Then it is easy to see that 
	\begin{eqnarray*}
		\varepsilon_{na}^\ell(F) = \varepsilon_{pr, \textit{Dung}}^{\ell mn}(F) = \varepsilon_{pr}^{\ell mn}(F) = \varepsilon_{stb}^{\ell mn}(F) = \set{\set{\tuple{k, j} \mid j < \omega} \mid k < \omega}.
	\end{eqnarray*}
	For each $i < \omega$, put $E_i \triangleq \set{\tuple{i, j} \mid j < \omega}$. Consider the well-known Fr$\acute{e}$chet filter $\mathcal{F} = \set{X \in \pw{\omega} \mid \omega - X \text{~ is finite}}$. It is easy to see that $\mathcal{F}$ has the finite intersection property. Let $D$ be any ultrafilter over $\omega$ such that $\mathcal{F} \subseteq D$. Then, for any $X \in \wp_f(\omega)$, we have $X \notin D$ due to $\omega - X \in \mathcal{F} \subseteq D$. Since $\set{ i \mid \tuple{k, j} \in E_i} = \set{k} \notin D$ for any $\tuple{k, j} \in \omega \times \omega$, we have $\bigcap_D E_i = \emptyset$. Thus, none of $\varepsilon_{na}^\ell(F)$, $\varepsilon_{pr, \textit{Dung}}^{\ell mn}(F)$, $\varepsilon_{pr}^{\ell mn}(F)$ and $\varepsilon_{stb}^{\ell mn}(F)$ is closed under the operator $\bigcap_D$. Notice that, since $F$ isn't finitary, it doesn't contradict Theorem \ref{theorem:close under meet_fundamental semantics} that $\varepsilon_{stb}^{\ell mn}(F)$ is not closed under $\bigcap_D$.
\end{example}
In the following, we intend to consider the related properties of the operator $\bigcap_D$ w.r.t. the semantics $\varepsilon_{na}^{\ell}$, $\varepsilon_{pr, \textit{Dung}}^{\ell mn}$ and $\varepsilon_{pr}^{\ell mn}$ in turn. To this end, we introduce the notion below.
\begin{definition}
	An AAF $F = \tuple{\A, \rightarrow}$ is said to be co-finitary if the set $\set{b \in \A \mid a \rightarrow b}$ is finite for each $a \in \A$.
\end{definition}
\begin{observation}\label{observation:ultrafilter}
	Let $A$ and $I (\neq \emptyset)$ be any set and $a \in A$. If $E_i \subseteq A$ for each $i \in I$ then $\bigcap_D (E_i \cup \set{a}) = \bigcap_D E_i \cup \set{a}$ for any ultrafilter $D$ over $I$.
\end{observation}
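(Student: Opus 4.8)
The plan is to unfold Definition \ref{definition:reduced meet} and argue by a case split on whether a given argument equals the distinguished element $a$. Recall that membership in a reduced meet is governed entirely by the index set $\hat{x}$ together with the ultrafilter $D$. So the first step I would take is to fix an arbitrary $x \in A$ and compare the two index sets attached to $x$: the one computed relative to the family $\set{E_i \mid i \in I}$, and the one computed relative to the augmented family $\set{E_i \cup \set{a} \mid i \in I}$.

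For the element $a$ itself, the index set relative to the augmented family is $\set{i \in I \mid a \in E_i \cup \set{a}} = I$, which lies in $D$ by clause $U_1$ of Definition \ref{definition:ultrafilter}. Hence $a \in \bigcap_D (E_i \cup \set{a})$ unconditionally, while $a \in \bigcap_D E_i \cup \set{a}$ holds trivially by construction of the right-hand side. Thus $a$ belongs to both sides regardless of the choice of $D$.

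For any $x \neq a$, the two index sets coincide: since $x \in E_i \cup \set{a}$ is equivalent to $x \in E_i$ when $x \neq a$, we have $\set{i \in I \mid x \in E_i \cup \set{a}} = \hat{x}$. Consequently $x \in \bigcap_D (E_i \cup \set{a})$ iff $\hat{x} \in D$ iff $x \in \bigcap_D E_i$, and since $x \neq a$ this is in turn equivalent to $x \in \bigcap_D E_i \cup \set{a}$. Combining the two cases gives membership equivalence for every $x \in A$, which is the claimed equality.

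The main point to flag is that there is no real obstacle here — this is a bookkeeping verification rather than a substantive argument. The only subtlety worth emphasizing is that the inclusion of $a$ on the left rests solely on the filter axiom $I \in D$ and not on any property of the sets $E_i$; this is precisely why, unlike the earlier results on $\bigcap_D$ (Lemmas \ref{lemma:RMMU_Distributive} and \ref{lemma:RMMU_ultrafilter}), no finitariness or directedness hypothesis is required, and the identity holds for an arbitrary ultrafilter $D$.
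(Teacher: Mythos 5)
Your proof is correct and follows essentially the same route as the paper's: the paper's own argument cites exactly the two facts you establish, namely $\set{i \in I \mid a \in E_i \cup \set{a}} = I \in D$ and $\set{i \in I \mid b \in E_i \cup \set{a}} = \set{i \in I \mid b \in E_i}$ for $b \neq a$. Your elaboration via the case split on $x = a$ versus $x \neq a$ is just a spelled-out version of the same bookkeeping, so there is nothing to correct.
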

\begin{proof}
	It immediately follows from the facts that $\set{i \in I \mid a \in E_i \cup \set{a}} = I$ and $\set{i \in I \mid b \in E_i \cup \set{a}} = \set{i \in I \mid b \in E_i}$ for any $b \neq a$.
\end{proof}
\begin{lemma}\label{lemma:cf-extensible}
	Let $F = \tuple{\A, \rightarrow}$ be any finitary and co-finitary AAF, $I \neq \emptyset$ and $D$ an ultrafilter over $I$, and let $E_i \subseteq \A$ for each $i \in I$ such that $\set{i \in I \mid E_i \in \varepsilon_{cf}^\ell(F)} \in D$. Then, for each $a \in \A$, $\bigcap_D (E_i \cup \set{a}) \in \varepsilon_{cf}^\ell(F)$ iff $\set{i \in I \mid E_i \cup \set{a} \in \varepsilon_{cf}^\ell(F)} \in D$.
\end{lemma}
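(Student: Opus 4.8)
The plan is to reduce both directions to a statement about the single set $E \triangleq \bigcap_D E_i$. By Observation \ref{observation:ultrafilter} we have $\bigcap_D (E_i \cup \set{a}) = E \cup \set{a}$, so the claim becomes: $E \cup \set{a} \in \varepsilon_{cf}^\ell(F)$ iff $\set{i \in I \mid E_i \cup \set{a} \in \varepsilon_{cf}^\ell(F)} \in D$. The backward implication is immediate and needs neither finiteness hypothesis: applying the ``Hence'' clause of Lemma \ref{lemma:RMMU_ultrafilter} to the family $(E_i \cup \set{a})_{i \in I}$ yields $\bigcap_D (E_i \cup \set{a}) \in \varepsilon_{cf}^\ell(F)$ whenever the displayed index set lies in $D$. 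Thus the whole content sits in the forward implication, and the standing hypothesis $J \triangleq \set{i \in I \mid E_i \in \varepsilon_{cf}^\ell(F)} \in D$ will be used there.

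For the forward implication I would argue by contradiction. Assume $E \cup \set{a} \in \varepsilon_{cf}^\ell(F)$ but, negating the conclusion and using that $D$ is an ultrafilter, $K \triangleq \set{i \mid E_i \cup \set{a} \notin \varepsilon_{cf}^\ell(F)} \in D$; then $J \cap K \in D$. The first step is to observe that for $i \in J \cap K$ the newly created conflict must be \emph{localized at} $a$. Indeed $a \notin E_i$ for such $i$ (else $E_i \cup \set{a} = E_i \in \varepsilon_{cf}^\ell(F)$), and since $E_i$ is already $\ell$-conflict-free the only way $E_i \cup \set{a}$ can fail is that either (i) $a$ itself is attacked by at least $\ell$ members of $E_i \cup \set{a}$, or (ii) some $x \in E_i$ with $a \rightarrow x$ was attacked by exactly $\ell - 1$ members of $E_i$ and so acquires an $\ell$-th attacker $a$. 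Writing $K_1, K_2$ for the corresponding subsets of $J \cap K$, we have $J \cap K = K_1 \cup K_2$, whence $K_1 \in D$ or $K_2 \in D$.

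The heart of the proof is to upgrade a per-index witness of conflict into a single witness valid on a $D$-large set, and this is exactly where the two finiteness hypotheses enter. If $K_1 \in D$, then since $F$ is finitary the set $a^-$ is finite, so $i \mapsto a^- \cap E_i$ takes only finitely many values; partitioning $K_1$ accordingly and using that a finite union lies in $D$ only if some piece does, I would fix a value $S^* \subseteq a^-$ with $\set{i \in K_1 \mid a^- \cap E_i = S^*} \in D$. Each $b \in S^*$ then belongs to $E_i$ on a $D$-large set, so $b \in E$, and $S^*$ together with the self-attack $a \rightarrow a$ (if present) witnesses that $a$ has at least $\ell$ attackers in $E \cup \set{a}$, contradicting $E \cup \set{a} \in \varepsilon_{cf}^\ell(F)$. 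If instead $K_2 \in D$, I first use co-finitariness to fix a single $x^* \in \set{b \in \A \mid a \rightarrow b}$ (a finite set) lying in $E$ with $x^* \notin N_\ell(E_i \cup \set{a})$ on a $D$-large set, and then use finitariness of $(x^*)^-$ to fix a single attacker set $T^* \subseteq (x^*)^-$, necessarily containing $a$, with $T^* \setminus \set{a} \subseteq E$ and $|T^*| \geq \ell$; this makes $x^*$ over-attacked in $E \cup \set{a}$, again a contradiction. The main obstacle to anticipate is precisely this transfer step: a naive union of the $E_i$ need not be conflict-free, and without the finitary and co-finitary assumptions the pool of candidate conflict-witnesses could be infinite, so no single one would be forced into $D$; the twin finiteness conditions are what keep each pigeonhole finite and let the ultrafilter select a stable witness that descends to $E$.
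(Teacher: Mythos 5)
Your proposal is correct and follows essentially the same route as the paper's proof: the backward direction via Lemma \ref{lemma:RMMU_ultrafilter}, and the forward direction by showing that failure of the right-hand side forces a conflict in $\bigcap_D E_i \cup \set{a}$, split into the same two cases (conflict located at $a$ versus at some $x \in E_i$ attacked by $a$), with finitariness and co-finitariness supplying exactly the same pigeonhole steps. The only cosmetic difference is that you pigeonhole on the exact trace $a^- \cap E_i$ (and $(x^*)^- \cap E_i$) rather than, as the paper does, on subsets in $[a^-]^{\ell-1}$ or $[a^-]^{\ell}$, which lets you avoid the paper's case distinctions on $\ell = 1$ and on whether $a \rightarrow a$.
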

\begin{proof}
	It immediately follows from Lemma \ref{lemma:RMMU_ultrafilter} that the right implies the left. Next we consider another direction.
	Assume that $\set{i \in I \mid E_i \cup \set{a} \in \varepsilon_{cf}^\ell(F)} \notin D$. Then
	\begin{eqnarray*}
		I_0 \triangleq \set{i \in I \mid E_i \cup \set{a} \notin \varepsilon_{cf}^\ell(F)} \cap \set{i \in I \mid E_i \in \varepsilon_{cf}^\ell(F)} \in D.
	\end{eqnarray*}
	For each $i \in I_0$, we have either $a \notin N_\ell(E_i \cup \set{a})$ or $\exists b \in E_i (b \notin N_\ell(E_i \cup \set{a}))$. Put
	\begin{eqnarray*}
		I_1 \triangleq \set{i \in I_0 \mid a \notin N_\ell(E_i \cup \set{a})} \text{~and~} I_2 \triangleq \set{i \in I_0 \mid \exists b \in E_i (b \notin N_\ell(E_i \cup \set{a}))}.
	\end{eqnarray*}
	Then $I_1 \in D$ or $I_2 \in D$ due to $I_1 \cup I_2 = I_0 \in D$.\\
	\\
	Case 1 : $I_1 \in D$.\\
	First, we consider the case that $a \rightarrow a$ in $F$. If $\ell = 1$ then, by Observation \ref{observation:ultrafilter}, $\bigcap_D (E_i \cup \set{a}) \notin \varepsilon_{cf}^\ell(F)$ immediately follows. Next we consider the case that $\ell > 1$. For each $i \in I_1$, since $a \notin N_\ell(E_i \cup \set{a})$, there exists a set $X \in [a^-]^{\ell - 1}$ such that $X \subseteq E_i$ and $a \notin X$. Thus
	\begin{eqnarray*}
		I_1 \subseteq \bigcup_{a \notin X \in [a^-]^{\ell - 1}} \set{i \mid X \subseteq E_i}.
	\end{eqnarray*}
	Since $I_1 \in D$ and $F$ is finitary, $\set{i \mid a \notin X_0 \subseteq E_i} \in D$ for some $X_0 \in [a^-]^{\ell - 1}$. Hence, $X_0 \subseteq \bigcap_D E_i$ and $X_0 \cup \set{a} \in [a^-]^{\ell}$. Thus, $\bigcap_D (E_i \cup \set{a}) = \bigcap_D E_i \cup \set{a} \notin \varepsilon_{cf}^\ell(F)$. We may deal with another case that $a \nrightarrow a$ in $F$ similarly but considering $X_0 \in [a^-]^{\ell}$ instead of $X_0 \in [a^-]^{\ell - 1}$.\\
	\\
	Case 2 : $I_2 \in D$.\\
	Let $i \in I_2$. Then there exists an argument $b \in E_i$ such that $b \notin N_\ell(E_i \cup \set{a})$. Clearly, $b \in N_\ell(E_i)$ due to $E_i \in \varepsilon_{cf}^\ell(F)$. Hence, $a \notin E_i$ and $a \rightarrow b$ in $F$. For each $b \in \A$ with $a \rightarrow b$, set
	\begin{eqnarray*}
		I_b \triangleq \set{i \mid b \notin N_\ell(E_i \cup \set{a}), a \notin E_i \text{~and~} b \in E_i}.
	\end{eqnarray*}
	Then $I_2 \subseteq \bigcup \set{I_b \mid a \rightarrow b}$. Since $F$ is co-finitary, there exists an argument $b_0 \in \A$ such that $a \rightarrow b_0$ and $I_{b_0} \in D$. Clearly, $b_0 \in \bigcap_D E_i$ because of $I_{b_0} \subseteq \hat{b}_0$. If $\ell = 1$ then $\bigcap_D (E_i \cup \set{a}) \notin \varepsilon_{cf}^\ell(F)$ due to $\set{a, b_0} \subseteq \bigcap_D E_i \cup \set{a} = \bigcap_D (E_i \cup \set{a})$. Next we consider the case that $\ell > 1$. For each $i \in I_{b_0}$, there exists a set $X \in [b_0^-]^{\ell - 1}$ such that $X \subseteq E_i$ due to $b_0 \notin N_\ell(E_i \cup \set{a})$ and $a \rightarrow b_0$. Thus
	\begin{eqnarray*}
		I_{b_0} \subseteq \bigcup_{X \in [b_0^-]^{\ell - 1}} \set{i \mid X \subseteq E_i, a \notin E_i \text{~and~} b_0 \in E_i}.
	\end{eqnarray*}
	Since $F$ is finitary and $I_{b_0} \in D$, there exists a set $X_0 \in [b_0^-]^{\ell - 1}$ such that 
	\begin{eqnarray*}
		\set{i \mid X_0 \subseteq E_i, a \notin E_i \text{~and~} b_0 \in E_i} \in D.
	\end{eqnarray*}
	Thus $X_0 \cup \set{b_0} \subseteq \bigcap_D E_i \subseteq \bigcap_D (E_i \cup \set{a})$. Hence $\bigcap_D (E_i \cup \set{a}) \notin \varepsilon_{cf}^\ell(F)$ due to $\set{a} \cup X_0 \in [b_0^-]^{\ell}$.
\end{proof}
\begin{proposition}\label{proposition:RMMU_na}
	Let $F = \tuple{\A, \rightarrow}$ be any finitary and co-finitary AAF, $I \neq \emptyset$ and $D$ an ultrafilter over $I$. Let $E_i \subseteq \A$ for each $i \in I$, if $\set{i \in I \mid E_i \in \varepsilon_{na}^\ell(F)} \in D$ then $\bigcap_D E_i \in \varepsilon_{na}^\ell(F)$. Hence $\varepsilon_{na}^\ell$ is closed under reduced meets modulo any ultrafilter w.r.t. finitary and co-finitary AAFs.
\end{proposition}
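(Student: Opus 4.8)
The plan is to reduce everything to the already-established closure of $\varepsilon_{cf}^\ell$ under $\bigcap_D$ together with the extensibility criterion of Lemma~\ref{lemma:cf-extensible}, since an $\ell$-naive extension is by definition a maximal $\ell$-conflict-free set. Write $J \triangleq \set{i \in I \mid E_i \in \varepsilon_{na}^\ell(F)}$, so that $J \in D$ by hypothesis. First I would record that $\bigcap_D E_i$ is $\ell$-conflict-free: because $\varepsilon_{na}^\ell(F) \subseteq \varepsilon_{cf}^\ell(F)$, we have $J \subseteq \set{i \in I \mid E_i \in \varepsilon_{cf}^\ell(F)}$, hence the latter set lies in $D$ by upward closure ($U_3$), and then $\bigcap_D E_i \in \varepsilon_{cf}^\ell(F)$ by Lemma~\ref{lemma:RMMU_ultrafilter}(a), which, as its footnote records, needs no finiteness hypothesis.

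The remaining task is maximality, and here I would argue by contradiction. If $\bigcap_D E_i$ were not maximal in $\tuple{\varepsilon_{cf}^\ell(F), \subseteq}$, then there would be a properly larger $\ell$-conflict-free set, and by down-closeness (Lemma~\ref{lemma:properties_cf_def}(d)) one could pick an argument $a \in \A$ with $a \notin \bigcap_D E_i$ and $\bigcap_D E_i \cup \set{a} \in \varepsilon_{cf}^\ell(F)$. By Observation~\ref{observation:ultrafilter} this set equals $\bigcap_D (E_i \cup \set{a})$, so $\bigcap_D (E_i \cup \set{a}) \in \varepsilon_{cf}^\ell(F)$. Now I invoke Lemma~\ref{lemma:cf-extensible}, whose hypothesis $\set{i \in I \mid E_i \in \varepsilon_{cf}^\ell(F)} \in D$ was verified in the previous step; it yields $K \triangleq \set{i \in I \mid E_i \cup \set{a} \in \varepsilon_{cf}^\ell(F)} \in D$.

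The final step extracts the contradiction from the maximality of the individual $E_i$'s. Since $J, K \in D$, also $J \cap K \in D$; in particular $J \cap K \neq \emptyset$. For every $i \in J \cap K$ the set $E_i$ is a maximal $\ell$-conflict-free set while $E_i \cup \set{a}$ is $\ell$-conflict-free and contains $E_i$, so maximality forces $E_i \cup \set{a} = E_i$, i.e. $a \in E_i$. Hence $J \cap K \subseteq \hat{a}$, and so $\hat{a} \in D$ by $U_3$, which means $a \in \bigcap_D E_i$ and contradicts the choice of $a$. This establishes $\bigcap_D E_i \in \varepsilon_{na}^\ell(F)$; the concluding closure statement then follows immediately by specializing to the case $E_i \in \varepsilon_{na}^\ell(F)$ for all $i$, where $J = I \in D$. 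The only genuinely delicate ingredient is Lemma~\ref{lemma:cf-extensible}, where both the finitary and the co-finitary assumptions are consumed — the former to push a witnessing set of attackers of $a$ into the reduced meet, the latter to locate a fixed argument $b_0$ attacked by $a$ that survives into $\bigcap_D E_i$; once that lemma is in hand, the argument above is pure bookkeeping with the ultrafilter axioms.
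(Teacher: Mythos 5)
Your proof is correct and follows essentially the same route as the paper's: both argue by contradiction from non-maximality of $\bigcap_D E_i$, using Lemma~\ref{lemma:RMMU_ultrafilter} for conflict-freeness, Observation~\ref{observation:ultrafilter}, down-closeness, and Lemma~\ref{lemma:cf-extensible} as the key bridge. The only cosmetic difference is where the contradiction lands — you apply Lemma~\ref{lemma:cf-extensible} left-to-right and contradict $a \notin \bigcap_D E_i$, while the paper applies it right-to-left (via the maximality of the $E_i$'s) and contradicts down-closeness — which is the same argument rearranged.
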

\begin{proof}
	Assume $\bigcap_D E_i \notin \varepsilon_{na}^\ell(F)$. By Lemma \ref{lemma:RMMU_ultrafilter}, $\bigcap_D E_i \in \varepsilon_{cf}^\ell(F)$ due to
	\begin{eqnarray*}
		\set{i \in I \mid E_i \in \varepsilon_{na}^\ell(F)} \subseteq \set{i \in I \mid E_i \in \varepsilon_{cf}^\ell(F)} \in D,
	\end{eqnarray*}
	and hence $\bigcap_D E_i \subset E$ for some $E \in \varepsilon_{cf}^\ell(F)$. Thus, there exists an argument, say $a$, such that $a \in E - \bigcap_D E_i$. Since $a \notin \bigcap_D E_i$, $\hat{a} \notin D$ and hence $\set{i \in I \mid a \notin E_i} \in D$. Thus $J \triangleq \set{i \in I \mid E_i \in \varepsilon_{na}^\ell(F)} \cap \set{i \in I \mid a \notin E_i} \in D$. For each $i \in J$, we have $E_i \cup \set{a} \notin \varepsilon_{cf}^\ell(F)$ due to $E_i \in \varepsilon_{na}^\ell(F)$. Thus, $\set{i \in I \mid E_i \cup \set{a} \notin \varepsilon_{cf}^\ell(F)} \in D$ because of $J \subseteq \set{i \in I \mid E_i \cup \set{a} \notin \varepsilon_{cf}^\ell(F)}$. Since 
	\begin{eqnarray*}
		\set{i \in I \mid E_i \in \varepsilon_{na}^\ell(F)} \subseteq \set{i \in I \mid E_i \in \varepsilon_{cf}^\ell(F)} \in D,
	\end{eqnarray*}
	by Lemmas \ref{lemma:cf-extensible} and \ref{lemma:properties_cf_def}($d$), a contradiction arises because of $\bigcap_D(E_i \cup \set{a}) = \bigcap_D E_i \cup \set{a} \subseteq E \in \varepsilon_{cf}^\ell(F)$. 
\end{proof}
In the following, for finitary and co-finitary AAFs, we will provide sufficient and necessary conditions so that $\bigcap_D E_i \in \varepsilon_{pr, \textit{Dung}}^{\ell mn}(F)$ (or, $\varepsilon_{pr}^{\ell mn}(F)$) whenever each $E_i \in \varepsilon_{pr, \textit{Dung}}^{\ell mn}(F)$ ($\varepsilon_{pr}^{\ell mn}(F)$, resp.).
\begin{proposition} \label{proposition:RMMU_pr_Dung}
	Let $F = \tuple{\A, \rightarrow}$ be any finitary and co-finitary AAF, $I \neq \emptyset$ and $D$ an ultrafilter over $I$, and let $E_i \subseteq \A$ for each $i \in I$. If $\set{i \in I \mid E_i \in \varepsilon_{pr, \textit{Dung}}^{\ell mn}(F)} \in D$ (in particular, $E_i \in \varepsilon_{pr, \textit{Dung}}^{\ell mn}(F)$ for each $i \in I$), then the following are equivalent:
	\begin{itemize}
		\item[a.]  $\bigcap_D E_i \in \varepsilon_{pr, \textit{Dung}}^{\ell mn}(F)$.
		\item[b.] For each $a \in \A$, $a \in D_n^m(\bigcap_D (E_i \cup \set{a}))$ whenever $\bigcap_D (E_i \cup \set{a})$ is $\varepsilon_{ad}^{\ell mn}$-extensible.
	\end{itemize}
\end{proposition}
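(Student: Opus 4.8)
The plan is to reduce membership of the reduced meet in $\varepsilon_{pr, \textit{Dung}}^{\ell mn}(F)$ to a non-extensibility condition phrased in exactly the vocabulary of clause (b), and then to compare the two termwise across the ultrafilter. Write $B \triangleq \bigcap_D E_i$ and $J \triangleq \set{i \in I \mid E_i \in \varepsilon_{pr, \textit{Dung}}^{\ell mn}(F)} \in D$. First I would record that, since $\varepsilon_{pr, \textit{Dung}}^{\ell mn}(F) \subseteq \varepsilon_{ad}^{\ell mn}(F) \subseteq \varepsilon_{cf}^{\ell}(F)$, clause $U_3$ of Definition \ref{definition:ultrafilter} gives $\set{i \mid E_i \in \varepsilon_{ad}^{\ell mn}(F)} \in D$ and $\set{i \mid E_i \in \varepsilon_{cf}^{\ell}(F)} \in D$; hence $B \in \varepsilon_{ad}^{\ell mn}(F)$ by Lemma \ref{lemma:RMMU_ultrafilter}. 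The key reformulation is then that, for an admissible $B$, one has $B \in \varepsilon_{pr, \textit{Dung}}^{\ell mn}(F)$ iff there is no $a \in \A$ with $a \notin B$ such that $B \cup \set{a}$ is $\varepsilon_{ad}^{\ell mn}$-extensible (for the forward direction a strictly larger admissible set would violate maximality; for the converse one picks a witness $a$ inside such a set). By Observation \ref{observation:ultrafilter}, $B \cup \set{a} = \bigcap_D(E_i \cup \set{a})$, so this condition speaks of precisely the sets appearing in (b).

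The implication (a) $\Rightarrow$ (b) is then light. For $a \in B$ we have $\bigcap_D(E_i \cup \set{a}) = B$, which is admissible hence extensible, and $a \in B \subseteq D_n^m(B)$, so the conclusion of (b) holds. For $a \notin B$, maximality of $B$ together with the reformulation makes the hypothesis ``$\bigcap_D(E_i \cup \set{a})$ is extensible'' false, so the implication in (b) is vacuously true.

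The substance, and where I expect the work to lie, is (b) $\Rightarrow$ (a), which I would argue by contraposition. Suppose $B$ is not maximal; by the reformulation there is $a \notin B$ with $C_a \triangleq \bigcap_D(E_i \cup \set{a})$ extensible, say $C_a \subseteq E'$ with $E' \in \varepsilon_{ad}^{\ell mn}(F)$. Monotonicity of $N_\ell$ (Lemma \ref{lemma:Grossi_simple}(a)) gives $C_a \subseteq E' \subseteq N_\ell(E') \subseteq N_\ell(C_a)$, so $C_a \in \varepsilon_{cf}^{\ell}(F)$; Lemma \ref{lemma:cf-extensible} then converts this into $\set{i \mid E_i \cup \set{a} \in \varepsilon_{cf}^{\ell}(F)} \in D$. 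Since $C_a$ is extensible, hypothesis (b) yields $a \in D_n^m(C_a)$, and the distributive law Lemma \ref{lemma:RMMU_Distributive}(c) (using finitariness) rewrites $D_n^m(C_a) = \bigcap_D D_n^m(E_i \cup \set{a})$, so $\set{i \mid a \in D_n^m(E_i \cup \set{a})} \in D$. Intersecting these two $D$-large sets with $J$ gives a nonempty index set; for any $i$ in it, $E_i$ is maximal admissible, $E_i \cup \set{a}$ is $\ell$-conflict-free, and $E_i \subseteq D_n^m(E_i) \subseteq D_n^m(E_i \cup \set{a}) \ni a$ shows $E_i \cup \set{a} \in \textit{Def}^{mn}(F)$, whence $E_i \cup \set{a} \in \varepsilon_{ad}^{\ell mn}(F)$. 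Maximality of $E_i$ forces $a \in E_i$, so $\hat{a}$ contains this $D$-large set and thus $a \in B$, contradicting $a \notin B$.

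The delicate point is the passage through Lemma \ref{lemma:cf-extensible}: it is what turns the conflict-freeness of the single set $C_a$ back into a $D$-large family of conflict-free $E_i \cup \set{a}$, so that the termwise maximality argument can be launched. This is also exactly where \emph{co-finitariness} is consumed (finitariness being used separately for the distributive law), which is why the hypothesis in the statement asks for both. I would therefore treat the careful verification of the hypotheses of Lemma \ref{lemma:cf-extensible} as the crux, the remaining manipulations being bookkeeping with the ultrafilter axioms.
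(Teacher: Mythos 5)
Your proof is correct and follows essentially the same route as the paper's: both argue (b)~$\Rightarrow$~(a) by contraposition, using Lemma \ref{lemma:RMMU_ultrafilter} to get admissibility of $\bigcap_D E_i$, Observation \ref{observation:ultrafilter} to identify $\bigcap_D(E_i\cup\set{a})$ with $\bigcap_D E_i\cup\set{a}$, Lemma \ref{lemma:cf-extensible} (where co-finitariness enters) for the conflict-free part, and the distributive law Lemma \ref{lemma:RMMU_Distributive}($c$) (where finitariness enters) for the defense part. The only difference is bookkeeping: the paper analyzes termwise \emph{failure} of admissibility of $E_i\cup\set{a}$ ($I_1$ versus $I_2$) and lands the contradiction on condition (b), whereas you apply (b) first and build termwise admissibility of $E_i\cup\set{a}$ to contradict $a\notin\bigcap_D E_i$ — the same chain of implications run in the opposite direction.
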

\begin{proof}
	The assertion ($a \Rightarrow b$) is trivial. Next we show ($b \Rightarrow a$). Assume that $\bigcap_D E_i \notin \varepsilon_{pr, \textit{Dung}}^{\ell mn}(F)$. By Lemma \ref{lemma:RMMU_ultrafilter}, $\bigcap_D E_i \in \varepsilon_{ad}^{\ell mn}(F)$ because of 
	\begin{eqnarray*}
		\set{i \in I \mid E_i \in \varepsilon_{pr, \textit{Dung}}^{\ell mn}(F)} \subseteq \set{i \in I \mid E_i \in \varepsilon_{ad}^{\ell mn}(F)} \in D.
	\end{eqnarray*}
	Thus, there exists a set $E \in \varepsilon_{ad}^{\ell mn}(F)$ such that $\bigcap_D E_i \subset E$. Let $a \in E - \bigcap_D E_i$. Then, $\set{i \in I \mid a \in E_i} \notin D$, and hence $\set{i \in I \mid a \notin E_i} \in D$. Thus 
	\begin{eqnarray*}
		I_a \triangleq \set{i \in I \mid a \notin E_i} \cap \set{i \in I \mid E_i \in \varepsilon_{pr, \textit{Dung}}^{\ell mn}(F)} \in D.
	\end{eqnarray*}
	For each $i \in I_a$, since $E_i \in \varepsilon_{pr, \textit{Dung}}^{\ell mn}(F)$ and $a \notin E_i$, we have $E_i \cup \set{a} \notin \varepsilon_{ad}^{\ell mn}(F)$, and hence either $E_i \cup \set{a} \notin \varepsilon_{cf}^{\ell}(F)$ or $E_i \cup \set{a} \notin \textit{Def}^{mn}(F)$. Put 
	\begin{eqnarray*}
			I_1 \triangleq \set{i \in I \mid E_i \cup \set{a} \notin \varepsilon_{cf}^{\ell}(F)} \text{~and~} 
			I_2 \triangleq \set{i \in I \mid E_i \cup \set{a} \notin \textit{Def}^{mn}(F)}.
	\end{eqnarray*}
	Then $I_a \subseteq I_1 \cup I_2$. Thus $I_1 \in D$ or $I_2 \in D$. Since $\bigcap_D(E_i \cup \set{a}) = \bigcap_D E_i \cup \set{a} \subseteq E \in \varepsilon_{cf}^{\ell}(F)$, by Lemma \ref{lemma:cf-extensible}, $I_1 \notin D$. Hence $I_2 \in D$ and 
	\begin{eqnarray*}
		I_3 \triangleq I_2 \cap \set{i \in I \mid E_i \in \varepsilon_{pr, \textit{Dung}}^{\ell mn}(F)} \in D.
	\end{eqnarray*}
	For each $i \in I_3$, $a \notin D_n^m(E_i \cup \set{a})$ because of $E_i \subseteq D_n^m(E_i) \subseteq D_n^m(E_i \cup \set{a})$. Then, $I_3 \subseteq \set{i \in I \mid a \notin D_n^m(E_i \cup \set{a})} \in D$, that is, $a \notin \bigcap_D D_n^m(E_i \cup \set{a})$. Thus, by Lemma \ref{lemma:RMMU_Distributive}($c$), $a \notin D_n^m(\bigcap_D (E_i \cup \set{a}))$. Further, by the condition ($b$), $\bigcap_D (E_i \cup \set{a})$ isn't $\varepsilon_{ad}^{\ell mn}$-extensible, which contradicts $\bigcap_D (E_i \cup \set{a}) = \bigcap_D E_i \cup \set{a} \subseteq E \in \varepsilon_{ad}^{\ell mn}(F)$, as desired.
\end{proof}
\begin{lemma} \label{lemma:def-extensible}
	Let $F = \tuple{\A, \rightarrow}$ be any finitary and co-finitary AAF, $I \neq \emptyset$ and $D$ an ultrafilter over $I$, and let $E_i \subseteq \A$ for each $i \in I$ such that $\set{i \in I \mid D_n^m(E_i) \subseteq E_i} \in D$. Then, for each $a \in \A$,
	\begin{eqnarray*}
		D_n^m(\bigcap_D (E_i \cup \set{a})) \subseteq \bigcap_D (E_i \cup \set{a}) \text{~iff~} \set{i \in I \mid D_n^m(E_i \cup \set{a}) \subseteq E_i \cup \set{a}} \in D.
	\end{eqnarray*}
\end{lemma}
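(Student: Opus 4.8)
The plan is to prove the two implications of the biconditional separately, treating the statement as an analogue of Lemma \ref{lemma:cf-extensible} but for the $mn$-defense operator $D_n^m$ in place of $\ell$-conflict-freeness. The backward implication (the right-hand condition implies the left-hand one) is immediate: assuming $\set{i \in I \mid D_n^m(E_i \cup \set{a}) \subseteq E_i \cup \set{a}} \in D$, I would simply apply Lemma \ref{lemma:RMMU_ultrafilter}($d$) to the family $\set{E_i \cup \set{a}}_{i \in I}$ (i.e.\ with $E_i \cup \set{a}$ playing the role of $E_i$ there), which yields $D_n^m(\bigcap_D(E_i \cup \set{a})) \subseteq \bigcap_D(E_i \cup \set{a})$ directly. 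Neither the co-finitary hypothesis nor the standing assumption $\set{i \mid D_n^m(E_i)\subseteq E_i}\in D$ is needed for this half.

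The forward implication is the substantive one, and I would establish it by contraposition. Assuming $\set{i \in I \mid D_n^m(E_i \cup \set{a}) \subseteq E_i \cup \set{a}} \notin D$, I intersect the complement with the hypothesis set to obtain $I_0 \triangleq \set{i \in I \mid D_n^m(E_i \cup \set{a}) \nsubseteq E_i \cup \set{a}} \cap \set{i \in I \mid D_n^m(E_i) \subseteq E_i} \in D$. For each $i \in I_0$ I choose a witness $c_i \in D_n^m(E_i \cup \set{a}) - (E_i \cup \set{a})$; since $c_i \notin E_i$ and $D_n^m(E_i) \subseteq E_i$, it follows that $c_i \notin D_n^m(E_i)$, so $c_i$ is an argument that becomes $mn$-defended \emph{only} after adjoining $a$. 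The key structural observation, obtained by unfolding $D_n^m = N_m N_n$ (Lemma \ref{lemma:Grossi_simple}($c$)) and using the antitonicity of $N_m$ and $N_n$ (Lemma \ref{lemma:Grossi_simple}($a$)), is that such a $c_i$ must possess an attacker $b$ lying in $N_n(E_i) - N_n(E_i \cup \set{a})$; because adjoining the single argument $a$ can raise the number of $b$'s attackers by at most one, this forces $a \rightarrow b$ (with $b$ carrying exactly $n-1$ attackers from $E_i$) and $b \rightarrow c_i$. Hence every witness lies in $C_a \triangleq \set{c \in \A \mid \exists b (a \rightarrow b \text{~and~} b \rightarrow c)}$.

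The crucial step is then to invoke co-finitariness: since $\set{b \in \A \mid a \rightarrow b}$ is finite and each $\set{c \in \A \mid b \rightarrow c}$ is finite, the set $C_a$ is finite. Writing $I_0 = \bigcup_{c \in C_a} I_c$ where $I_c \triangleq \set{i \in I_0 \mid c \in D_n^m(E_i \cup \set{a}) \text{~and~} c \notin E_i \cup \set{a}}$, and applying the clause ($U_4$) of Definition \ref{definition:ultrafilter} over the finite union, I obtain a single $c_0 \in C_a$ with $I_{c_0} \in D$. From $\set{i \mid c_0 \in D_n^m(E_i \cup \set{a})} \supseteq I_{c_0} \in D$ and the distributive law (Lemma \ref{lemma:RMMU_Distributive}($c$), which requires finitariness) I get $c_0 \in \bigcap_D D_n^m(E_i \cup \set{a}) = D_n^m(\bigcap_D(E_i \cup \set{a}))$, while $\set{i \mid c_0 \notin E_i \cup \set{a}} \supseteq I_{c_0} \in D$ gives $c_0 \notin \bigcap_D(E_i \cup \set{a})$. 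Together these witness $D_n^m(\bigcap_D(E_i \cup \set{a})) \nsubseteq \bigcap_D(E_i \cup \set{a})$, completing the contrapositive.

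I expect the main obstacle to be the witness-localization argument, namely verifying that any newly-defended argument $c_i$ is reachable from $a$ in exactly two attack steps: this is precisely what confines all the (potentially varying) witnesses to the finite set $C_a$ and thereby makes the ultrafilter selection of a single uniform $c_0$ possible. This mirrors Case~2 in the proof of Lemma \ref{lemma:cf-extensible}, where co-finitariness plays the analogous role, with finitariness entering through both the finiteness of $C_a$ (via finite attacker sets) and the distributive law.
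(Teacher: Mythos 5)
Your proof is correct, and it reaches the forward implication by a genuinely different mechanism than the paper's, while the backward implication is handled identically via Lemma~\ref{lemma:RMMU_ultrafilter}($d$). The paper also argues by contraposition with the same set $I_0$ and witnesses $b_i \in D_n^m(E_i \cup \set{a}) - (E_i \cup \set{a})$, but instead of localizing the witnesses it uniformizes their \emph{defense signatures}: to each $b_i$ it attaches the set $w(b_i)$ of pairs $\tuple{c, Y}$ with $a \rightarrow c$, $c$ attacking $b_i$, $Y \subseteq E_i$ and $Y \cup \set{a} \in [c^-]^n$, shows (using finitariness and co-finitariness) that the ambient signature space is finite, extracts a $D$-large $I_1$ on which all signatures agree, and then proves directly -- via a two-case analysis per $X \in [b_i^-]^m$, splitting on whether $X \subseteq N_n(E_k)$ holds for $D$-many $k$ -- that each $b_i$ with $i \in I_1$ lies in $D_n^m(\bigcap_D E_i \cup \set{a})$; only at the very end does it invoke the two-step-reachability observation ($a \rightarrow c \rightarrow b_i$) and co-finitariness to find one witness outside $\bigcap_D E_i$. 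You promote that reachability observation to the centerpiece: your localization step is sound (the attacker $b \in N_n(E_i) - N_n(E_i\cup\set{a})$ exists because the at-least-$m$ attackers of $c_i$ in $N_n(E_i)$ cannot all survive into $N_n(E_i\cup\set{a})$, and $a \rightarrow b$ is forced since adjoining $a$ adds at most one attacker), so all witnesses lie in the finite set $C_a$, the ultrafilter pigeonholes a single uniform witness $c_0$ on a $D$-large set, and the distributive law $\bigcap_D D_n^m(E_i \cup \set{a}) = D_n^m(\bigcap_D(E_i \cup \set{a}))$ of Lemma~\ref{lemma:RMMU_Distributive}($c$) transfers both $c_0$'s membership in $D_n^m(E_i\cup\set{a})$ and its non-membership in $E_i\cup\set{a}$ to the reduced meet in one stroke. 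What your route buys is brevity and modularity: the uniformization is coarser (only the witness's identity, not its whole signature), and the paper's delicate Claim~2 disappears because the already-established distributive law does that work; both arguments consume finitariness and co-finitariness at analogous points, so neither is more general in its hypotheses.
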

\begin{proof}
	It immediately follows from Lemma \ref{lemma:RMMU_ultrafilter}($d$) that the right implies the left. Next we deal with another direction.
	Assume that $\set{i \in I \mid D_n^m(E_i \cup \set{a}) \subseteq E_i \cup \set{a}} \notin D$. Then
	\begin{eqnarray*}
		I_0 \triangleq \set{i \in I \mid D_n^m(E_i \cup \set{a}) \nsubseteq E_i \cup \set{a}} \cap \set{i \in I \mid D_n^m(E_i) \subseteq E_i} \in D.
	\end{eqnarray*}
	For each $i \in I_0$, we may choose arbitrarily and fix an argument $b_i$ such that $b_i \in D_n^m(E_i \cup \set{a})$ but $b_i \notin E_i \cup \set{a}$. For each $i \in I_0$, the set $w(b_i)$ is defined as
	\begin{eqnarray*}
			w(b_i) \triangleq \set{\tuple{c, Y} \bigg|
		 		\begin{array}{*{20}{c}}
					{ a \rightarrow c \text{~and~} Y \subseteq E_i \text{~such that}} \\ 
					{ Y \cup \set{a} \in [c^-]^n \text{and~} c \in X \text{~for some~} X \in [b_i^-]^m}
				\end{array}
			}.
	\end{eqnarray*}
	Moreover, the set $S$ is defined as
	\begin{eqnarray*}
				S \triangleq \set{\tuple{c, Y} \mid a \rightarrow c \text{~and~} Y \subseteq \A \text{~such that~} Y \cup \set{a} \in [c^-]^n}.
	\end{eqnarray*}
	\\ \textbf{Claim 1~~} For each $i \in I_0$, $w(b_i) \neq \emptyset$.\\
	Since $b_i \in D_n^m(E_i \cup \set{a})$, $X \nsubseteq N_n(E_i \cup \set{a})$ for each $X \in [b_i^-]^m$. On the other hand, it follows from $b_i \notin E_i$ and $D_n^m(E_i) \subseteq E_i$ that $X_0 \subseteq N_n(E_i)$ for some $X_0 \in [b_i^-]^m$. Then it isn't difficult to see that $w(b_i) \neq \emptyset$ due to $X_0 \subseteq N_n(E_i)$ and $X_0 \nsubseteq N_n(E_i \cup \set{a})$.\\
	\\
	Clearly, by Claim 1, $w(b_i) \in \pw{S} - \set{\emptyset}$ for each $i \in I_0$ and
	\begin{eqnarray*}
			I_0 = \bigcup_{\emptyset \neq Z \in \pw{S}} \set{i \in I_0 \mid w(b_i) = Z}.
	\end{eqnarray*}
	Since $F$ is finitary and co-finitary, $|\pw{S}| < \omega$ due to $|S| < \omega$. Further, it follows from $I_0 \in D$ that there exists a nonempty set $Z_0 \in \pw{S}$ such that
	\begin{eqnarray*}
				I_1\triangleq \set{i \in I_0 \mid w(b_i) = Z_0} \in D.
	\end{eqnarray*}
	\\ \textbf{Claim 2~~} For each $i \in I_1$, $X \nsubseteq N_n(\bigcap_D E_i \cup \set{a})$ for each $X \in [b_i^-]^m$, and hence $b_i \in D_n^m(\bigcap_D E_i \cup \set{a})$.\\
	Let $X \in [b_i^-]^m$ with $i \in I_1$. Then $X \nsubseteq N_n(E_i \cup \set{a})$ because of $X \in [b_i^-]^m$ and $b_i \in D_n^m(E_i \cup \set{a})$. Put
	\begin{eqnarray*}
			I_2 \triangleq \set{i \in I_1 \mid X \subseteq N_n(E_i)} \text{~and~} I_3 \triangleq \set{i \in I_1 \mid X \nsubseteq N_n(E_i)}.
	\end{eqnarray*}
	Since $I_1 = I_2 \cup I_3 \in D$, we have either $I_2 \in D$ or $I_3 \in D$.\\
	\\
	Case 1 : $I_2 \in D$.\\
	Let $i \in I_2$. Since $X \subseteq N_n(E_i)$ and $X \nsubseteq N_n(E_i \cup \set{a})$, there exist $c \in X$ and $Y \subseteq E_i$ such that $a \rightarrow c$ and $Y \cup \set{a} \in [c^-]^n$, that is, $\tuple{c, Y} \in w(b_i) = Z_0$. Since $I_2 \subseteq I_1 = \set{i \in I_0 \mid w(b_i) = Z_0}$, by the definition of $w(b_i)$, we have $Y \subseteq E_k$ for each $k \in I_2$. Further, $Y \subseteq \bigcap_D E_i$ due to $I_2 \in D$. Thus $Y \cup \set{a} \subseteq \bigcap_D E_i \cup \set{a}$, and hence $X \nsubseteq N_n(\bigcap_D E_i \cup \set{a})$ because of $c \in X$ and $Y \cup \set{a} \in [c^-]^n$.\\
	\\
	Case 2 : $I_3 \in D$.\\
	For each $i \in I_3$, since $X \nsubseteq N_n(E_i)$, there exists an argument $c \in X$ such that $c \notin N_n(E_i)$ and hence $Y \subseteq E_i$ for some $Y \in [c^-]^n$. Then
	\begin{eqnarray*}
			I_3 = \bigcup_{V \in H} \set{i \in I_3 \mid V \subseteq E_i} \text{~with~} H \triangleq \bigcup_{d \in X} [d^-]^n.
	\end{eqnarray*}
	Since $F$ is finitary, the set $H$ is finite due to $|X| = m$. Thus $\set{i \in I_3 \mid Y_0 \subseteq E_i} \in D$ for some $Y_0 \in H$. Hence, $Y_0 \subseteq \bigcap_D E_i$. Then $d_0 \notin N_n(\bigcap_D E_i)$ due to $Y_0 \in [d_0^-]^n$ for some $d_0 \in X$. Thus $X \nsubseteq N_n(\bigcap_D E_i)$, and hence $X \nsubseteq N_n(\bigcap_D E_i \cup \set{a})$ because of $N_n(\bigcap_D E_i \cup \set{a}) \subseteq N_n(\bigcap_D E_i)$.\\
	\\
	So far, we have shown that $b_i \in D_n^m(\bigcap_D E_i \cup \set{a}) = D_n^m(\bigcap_D (E_i \cup \set{a}))$ for each $i \in I_1(\neq \emptyset)$. To complete the proof, in the following, we will show that there exists $i \in I_1$ such that $b_i \notin \bigcap_D (E_i \cup \set{a})$. Since $b_i \neq a$ for each $i \in I_1$, by Observation \ref{observation:ultrafilter}, it is enough to prove that $b_i \notin \bigcap_D E_i$ for some $i \in I_1$. Put
	\begin{eqnarray*}
			W \triangleq \set{b_i \mid i \in I_1}.
	\end{eqnarray*}
	For each $b \in W$, by the definition of $I_1$, $w(b) = Z_0 \neq \emptyset$, and hence there exists an argument $c$ such that $a \rightarrow c$ and $c \rightarrow b$. Thus
	\begin{eqnarray*}
			W \subseteq W' \triangleq \set{x \mid a \rightarrow c \text{~and~} c \rightarrow x \text{~for some~} c}.
	\end{eqnarray*}
	Since $F$ is co-finitary, $W'$ is finite and hence so is $W$. For each $b \in W$, set
	\begin{eqnarray*}
			I_b \triangleq \set{i \in I_1 \mid b \notin E_i}.
	\end{eqnarray*}
	Since $b_i \notin E_i$ for each $i \in I_1$, $I_1 = \bigcup_{b \in W} I_b$. Since $|W| < \omega$ and $I_1 \in D$, there exists an argument $b_0 \in W$ such that $I_{b_0} \in D$. Thus, $\set{i \in I_1 \mid b_0 \notin E_i} \in D$. Since $\set{i \in I_1 \mid b_0 \notin E_i} \subseteq \set{i \in I \mid b_0 \notin E_i}$, we have $\set{i \in I \mid b_0 \in E_i} \notin D$, that is, $b_0 \notin \bigcap_D E_i$, as desired.
\end{proof}
\begin{proposition}\label{proposition:RMMU_pr}
	Let $F = \tuple{\A, \rightarrow}$ be any finitary and co-finitary AAF, $I \neq \emptyset$ and $D$ an ultrafilter over $I$. Let $E_i \subseteq \A$ for each $i \in I$, if $\set{i \in I \mid E_i \in \varepsilon_{pr}^{\ell mn}(F)} \in D$ (in particular, $E_i \in \varepsilon_{pr}^{\ell mn}(F)$ for each $i \in I$) then the following are equivalent:
	\begin{itemize}
		\item[a.]  $\bigcap_D E_i \in \varepsilon_{pr}^{\ell mn}(F)$.
		\item[b.] For each $a \in \A$, $a \in D_n^m(\bigcap_D (E_i \cup \set{a})) \subseteq \bigcap_D (E_i \cup \set{a})$ whenever $\bigcap_D (E_i \cup \set{a})$ is $\varepsilon_{co}^{\ell mn}$-extensible.
	\end{itemize}
\end{proposition}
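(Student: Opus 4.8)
The plan is to follow the template of Proposition~\ref{proposition:RMMU_pr_Dung}, adapting the machinery of Lemmas~\ref{lemma:cf-extensible} and~\ref{lemma:def-extensible} to the complete (rather than merely admissible) setting, and to exploit that $\varepsilon_{pr}^{\ell mn}$ selects the maximal elements of $\varepsilon_{co}^{\ell mn}$. Writing $E^* \triangleq \bigcap_D E_i$, I would first record two standing facts under the hypothesis $\set{i \in I \mid E_i \in \varepsilon_{pr}^{\ell mn}(F)} \in D$. Since $\varepsilon_{pr}^{\ell mn}(F) \subseteq \varepsilon_{co}^{\ell mn}(F)$, Lemma~\ref{lemma:RMMU_ultrafilter} gives $E^* \in \varepsilon_{co}^{\ell mn}(F)$, so $E^*$ is automatically complete and the only issue is its maximality; and by Observation~\ref{observation:ultrafilter} we have $\bigcap_D (E_i \cup \set{a}) = E^* \cup \set{a}$ for every $a \in \A$, which lets me read condition (b) as a statement purely about $E^* \cup \set{a}$.

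For $(a \Rightarrow b)$ I expect a short argument. Assuming $E^* \in \varepsilon_{pr}^{\ell mn}(F)$ and that $E^* \cup \set{a}$ is $\varepsilon_{co}^{\ell mn}$-extensible, say $E^* \cup \set{a} \subseteq E'$ with $E' \in \varepsilon_{co}^{\ell mn}(F)$, maximality of $E^*$ among complete extensions forces $E^* = E'$, hence $a \in E^*$ and $E^* \cup \set{a} = E^*$. As $E^*$ is a fixed point of $D_n^m$, both conjuncts of (b) collapse to $a \in E^* = D_n^m(E^*) = \bigcap_D (E_i \cup \set{a})$.

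The substance lies in $(b \Rightarrow a)$, which I would argue by contraposition and where the main obstacle sits. Suppose $E^*$ is complete but not maximal, so $E^* \subset E$ for some $E \in \varepsilon_{co}^{\ell mn}(F)$, and fix $a \in E - E^*$. Since $a \notin E^* = \bigcap_D E_i$ we get $\set{i \mid a \notin E_i} \in D$, so $I_a \triangleq \set{i \mid a \notin E_i} \cap \set{i \mid E_i \in \varepsilon_{pr}^{\ell mn}(F)} \in D$; for each $i \in I_a$ maximality of $E_i$ forces $E_i \cup \set{a}$ to fail completeness. The delicate point is to locate that failure in the majority. First I would rule out conflict-freeness as the cause: $E^* \cup \set{a} \subseteq E \in \varepsilon_{cf}^{\ell}(F)$ is conflict-free by down-closeness (Lemma~\ref{lemma:properties_cf_def}($d$)), so Lemma~\ref{lemma:cf-extensible} yields $\set{i \mid E_i \cup \set{a} \in \varepsilon_{cf}^{\ell}(F)} \in D$, whence $I_1 \triangleq \set{i \in I_a \mid E_i \cup \set{a} \notin \varepsilon_{cf}^{\ell}(F)} \notin D$ and $I_a - I_1 \in D$. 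On $I_a - I_1$ the set $E_i \cup \set{a}$ is conflict-free but not a fixed point of $D_n^m$; using $E_i = D_n^m(E_i) \subseteq D_n^m(E_i \cup \set{a})$, this can only happen if $a \notin D_n^m(E_i \cup \set{a})$ or $D_n^m(E_i \cup \set{a}) \nsubseteq E_i \cup \set{a}$, so one of $I_2 \triangleq \set{i \in I_a - I_1 \mid a \notin D_n^m(E_i \cup \set{a})}$ and $I_3 \triangleq \set{i \in I_a - I_1 \mid D_n^m(E_i \cup \set{a}) \nsubseteq E_i \cup \set{a}}$ lies in $D$.

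Each case then contradicts exactly one conjunct of (b), which applies because $E^* \cup \set{a} \subseteq E$ is $\varepsilon_{co}^{\ell mn}$-extensible. If $I_2 \in D$, then $a \notin \bigcap_D D_n^m(E_i \cup \set{a}) = D_n^m(E^* \cup \set{a})$ by the distributive law (Lemma~\ref{lemma:RMMU_Distributive}($c$)), contradicting $a \in D_n^m(\bigcap_D (E_i \cup \set{a}))$. If $I_3 \in D$, then $\set{i \mid D_n^m(E_i \cup \set{a}) \subseteq E_i \cup \set{a}} \notin D$, and since $\set{i \mid D_n^m(E_i) \subseteq E_i} \in D$ (complete extensions are fixed points), Lemma~\ref{lemma:def-extensible} gives $D_n^m(\bigcap_D (E_i \cup \set{a})) \nsubseteq \bigcap_D (E_i \cup \set{a})$, contradicting the second conjunct of (b). The difficulty I anticipate is precisely this bookkeeping: recognizing that condition (b) is engineered so that its two conjuncts neutralize the two surviving failure modes of completeness, and that co-finitariness enters only through the two transfer lemmas (\ref{lemma:cf-extensible} and~\ref{lemma:def-extensible}) that carry these pointwise failures across the ultrafilter.
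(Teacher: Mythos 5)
Your proposal is correct and follows essentially the same route as the paper's proof: reduce to completeness of $\bigcap_D E_i$ via Lemma~\ref{lemma:RMMU_ultrafilter}, rule out the conflict-freeness failure mode with Lemma~\ref{lemma:cf-extensible}, kill the ``$a \notin D_n^m(E_i \cup \set{a})$'' mode with condition ($b$) and the distributive law (Lemma~\ref{lemma:RMMU_Distributive}($c$)), and kill the remaining mode with Lemma~\ref{lemma:def-extensible}. The only cosmetic difference is that you inline the monotonicity argument ($E_i = D_n^m(E_i) \subseteq D_n^m(E_i \cup \set{a})$) that the paper imports by reference to Proposition~\ref{proposition:RMMU_pr_Dung}, and you spell out the ($a \Rightarrow b$) direction that the paper dismisses as trivial.
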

\begin{proof}
	The assertion ($a \Rightarrow b$) is trivial. Next we show ($b \Rightarrow a$).
	Assume that $\bigcap_D E_i \notin \varepsilon_{pr}^{\ell mn}(F)$. By Lemma \ref{lemma:RMMU_ultrafilter}, $\bigcap_D E_i \in \varepsilon_{co}^{\ell mn}(F)$ because of $\set{i \in I \mid E_i \in \varepsilon_{pr}^{\ell mn}(F)} \subseteq \set{i \in I \mid E_i \in \varepsilon_{co}^{\ell mn}(F)} \in D$. Hence, $\bigcap_D E_i \subset E$ for some $E \in \varepsilon_{co}^{\ell mn}(F)$. Let $a \in E - \bigcap_D E_i$. Then $\set{i \in I \mid a \notin E_i} \in D$. Thus 
	\begin{eqnarray*}
		I_a \triangleq \set{i \in I \mid a \notin E_i} \cap \set{i \in I \mid E_i \in \varepsilon_{pr}^{\ell mn}(F)} \in D.
	\end{eqnarray*}
	For each $i \in I_a$, $E_i \cup \set{a} \notin \varepsilon_{co}^{\ell mn}(F)$ due to $E_i \in \varepsilon_{pr}^{\ell mn}(F)$ and $a \notin E_i$. Thus, for $i \in I_a$, at least one of the following holds:
	\begin{enumerate}
		\item[(1)] $E_i \cup \set{a} \notin \varepsilon_{cf}^{\ell}(F)$,
		\item[(2)] $E_i \cup \set{a} \notin \textit{Def}^{mn}(F)$ and
		\item[(3)] $D_n^m(E_i \cup \set{a}) \nsubseteq E_i \cup \set{a}$.
	\end{enumerate}
	Put $I_k \triangleq \set{i \in I \mid E_i \text{~satisfies the condition~} (k)}$ for $k = 1, 2, 3$.
	Thus, $I_a \subseteq I_1 \cup I_2 \cup I_3$ and hence $I_k \in D$ for some $k \in \set{1, 2, 3}$. Since $\bigcap_D (E_i \cup \set{a}) = \bigcap_D E_i \cup \set{a} \subseteq E \in \varepsilon_{cf}^{\ell}(F)$, by Lemma \ref{lemma:cf-extensible}, $I_1 \notin D$. By the condition ($b$), we have $a \in D_n^m(\bigcap_D (E_i \cup \set{a}))$. Hence, by Lemma \ref{lemma:RMMU_Distributive}($c$), similar to Proposition \ref{proposition:RMMU_pr_Dung}, we also have $I_2 \notin D$. Thus $I_3 \in D$. Then, by Lemma \ref{lemma:def-extensible}, we get $D_n^m(\bigcap_D (E_i \cup \set{a})) \nsubseteq \bigcap_D (E_i \cup \set{a})$ due to $\set{i \in I \mid E_i \in \varepsilon_{pr}^{\ell mn}(F)} \subseteq \set{i \in I \mid D_n^m(E_i) \subseteq E_i} \in D$, which contradicts the condition ($b$) and $\bigcap_D E_i \cup \set{a} \subseteq E \in \varepsilon_{co}^{\ell mn}(F)$, as desired.
\end{proof}
We intend to end this subsection by discussing the closeness of abstract semantics $\varepsilon_{\textit{max}, \sigma}$ under the operator $\bigcap_D$. Before giving the result, we introduce the dual notion of $\varepsilon$-extensibility as follows.
\begin{definition} \label{definition:anti-epsilon set}
	Let $F = \tuple{\A, \rightarrow}$ be any AAF and $\varepsilon$ an extension-based semantics. A finite subset $X$ of $\A$ is said to be an anti-$\varepsilon$ set if $X \nsubseteq E$ for each $E \in \varepsilon(F)$. We use the notation Anti-$\varepsilon(F)$ to denote the set of all minimal anti-$\varepsilon$ sets of $F$.
\end{definition}
In the situation that $\varepsilon(F)$ is closed under reduced meets modulo any ultrafilter, by Theorem \ref{theorem:epsilon_extensible}, it is easy to see that, for any $X \subseteq \A$, X is $\varepsilon$-extensible iff $\wp_f(X) \cap \text{Anti-}\varepsilon(F) = \emptyset$. 
\begin{definition} \label{definition:antiset finitary}
	Let $F = \tuple{\A, \rightarrow}$ be any AAF and $\varepsilon_\sigma$ an extension-based semantics. For each $a \in \A$, the set $\Gamma_{\varepsilon_\sigma}(a)$ is defined as $\Gamma_{\varepsilon_\sigma}(a) \triangleq \set{X \in \text{Anti-}\varepsilon_\sigma(F) \mid a \in X}$. For any nonempty $\Gamma_{\varepsilon_\sigma}(a)$, the equivalent relation $\sim_a$ over $\Gamma_{\varepsilon_\sigma}(a)$ is defined as, for each $X_1, X_2 \in \Gamma_{\varepsilon_\sigma}(a)$, $X_1 \sim_a X_2$ iff for each $E \in \varepsilon_{\textit{max}, \sigma}(F)$, $X_1 - \set{a} \subseteq E \Leftrightarrow X_2 - \set{a} \subseteq E$. We denote the corresponding quotient set by $\Gamma_{\varepsilon_\sigma}(a) / \!\sim_a$.
\end{definition}
\begin{definition}
	Let $F = \tuple{\A, \rightarrow}$ be any AAF and $\varepsilon_\sigma$ an extension-based semantics. $F$ is said to be Anti($\varepsilon_\sigma$)-finitary if for each $a \in \A$, either $\Gamma_{\varepsilon_\sigma}(a) = \emptyset$ or $|\Gamma_{\varepsilon_\sigma}(a) / \!\sim_a| < \omega$.
\end{definition}
\begin{example} \label{Ex:Anti-ad-co}
	Consider the AAF $F = \tuple{\omega, \rightarrow}$ with $\rightarrow \triangleq \set{\tuple{n+1, n} \mid n < \omega}$. Clearly, 
	\begin{eqnarray*}
		\begin{split}
			\varepsilon_{ad}^{111}(F) &= \set{\emptyset} \cup \set{\set{n + 2k \mid k < \omega} \mid n < \omega} \text{~and}\\
			\varepsilon_{co}^{111}(F) &= \set{\set{1, 3, 5, 7, \cdots}, \set{2, 4, 6, 8, \cdots}}.
		\end{split}
	\end{eqnarray*}
	Then it is easy to see that, for each $n < \omega$, although $\Gamma_{\varepsilon_{ad}^{111}}(n)$ and $\Gamma_{\varepsilon_{co}^{111}}(n)$ are infinite, both $\Gamma_{\varepsilon_{ad}^{111}}(n)/ \!\sim_n$ and $\Gamma_{\varepsilon_{co}^{111}}(n)/ \!\sim_n$ are finite. Hence, $F$ is both Anti($\varepsilon_{ad}^{111}$)-finitary and Anti($\varepsilon_{co}^{111}$)-finitary. In fact, it is straightforward to verify that $|\varepsilon_{\textit{max}, \sigma}(F)| < \omega$ implies $F$ is Anti($\varepsilon_\sigma$)-finitary. For the AAF $F$ in Example \ref{Ex:counterexample for the derived semantics}, it is obvious that it is neither Anti($\varepsilon_{ad}^{111}$)-finitary nor Anti($\varepsilon_{co}^{111}$)-finitary.
\end{example}
It isn't difficult to show that, for any finitary and co-finitary AAF $F = \tuple{\A, \rightarrow}$, the set $\set{X \in \text{Anti}(\varepsilon_{cf}^\ell) \mid a \in X}$ is finite for each $a \in \A$, and hence $F$ is Anti($\varepsilon_{cf}^\ell$)-finitary. Its proof may be found in Theorem \ref{theorem:Representation theorem III of l-cf}. But the converse isn't true.
\begin{example} \label{Ex:Anti-cf}
	Consider the AAF $F_1 = \tuple{\omega, \rightarrow}$ with $\rightarrow \triangleq \set{\tuple{0, n} \mid n < \omega}$. Then $F_1$ is Anti($\varepsilon_{cf}^{2}$)-finitary because of Anti-$\varepsilon_{cf}^{2}(F) = \emptyset$, but $F$ is not co-finitary. Similarly, for $F_2 = \tuple{\omega, \rightarrow}$ with $\rightarrow \triangleq \set{\tuple{n, 0} \mid n < \omega}$, $F_2$ is Anti($\varepsilon_{cf}^{2}$)-finitary but not finitary.
\end{example}
\begin{proposition} \label{proposition:RMMU_max}
	Let $F = \tuple{\A, \rightarrow}$ be any AAF and $\varepsilon_\sigma$ an extension-based semantics, and let $F$ be Anti($\varepsilon_\sigma$)-finitary. If $\varepsilon_\sigma(F)$ is closed under reduced meets modulo any ultrafilter then so is $\varepsilon_{\textit{max}, \sigma}(F)$.
\end{proposition}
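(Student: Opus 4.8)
The plan is to fix a nonempty index set $I$, an ultrafilter $D$ over $I$, and a family $E_i \in \varepsilon_{\textit{max}, \sigma}(F)$ for $i \in I$, and to prove that $E^* \triangleq \bigcap_D E_i$ lies in $\varepsilon_{\textit{max}, \sigma}(F)$. Since each $E_i$ belongs to $\varepsilon_{\sigma}(F)$ and $\varepsilon_{\sigma}(F)$ is closed under reduced meets modulo any ultrafilter, we immediately obtain $E^* \in \varepsilon_{\sigma}(F)$; hence the entire task reduces to showing that $E^*$ is \emph{maximal} in $\varepsilon_{\sigma}(F)$. I would argue by contradiction: suppose $E^* \subset E'$ for some $E' \in \varepsilon_{\sigma}(F)$ and fix an argument $a \in E' - E^*$. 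Because $a \notin E^* = \bigcap_D E_i$, Definition \ref{definition:reduced meet} together with the ultrafilter axioms yields $K \triangleq \set{i \in I \mid a \notin E_i} \in D$.

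The next step extracts a minimal anti-$\varepsilon_{\sigma}$ set from each $E_i$ with $i \in K$. For such an $i$, the maximality of $E_i$ together with $a \notin E_i$ forces $E_i \cup \set{a}$ to fail to be $\varepsilon_{\sigma}$-extensible, since any $\varepsilon_{\sigma}$-extension of $E_i \cup \set{a}$ would strictly contain $E_i$. Here I would invoke the compactness of $\varepsilon_{\sigma}$-extensibility (Theorem \ref{theorem:epsilon_extensible}, available because $\varepsilon_{\sigma}(F)$ is closed under $\bigcap_D$) and the resulting characterization stated after Definition \ref{definition:anti-epsilon set}: a set is $\varepsilon_{\sigma}$-extensible iff none of its finite subsets lies in $\text{Anti-}\varepsilon_{\sigma}(F)$. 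Thus $E_i \cup \set{a}$ contains some $X_i \in \text{Anti-}\varepsilon_{\sigma}(F)$; and since $E_i \in \varepsilon_{\sigma}(F)$ can contain no anti-$\varepsilon_{\sigma}$ set, necessarily $a \in X_i$, so $X_i \in \Gamma_{\varepsilon_{\sigma}}(a)$ while $X_i - \set{a} \subseteq E_i$.

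Now the Anti($\varepsilon_{\sigma}$)-finitary hypothesis enters. As $\Gamma_{\varepsilon_{\sigma}}(a) \neq \emptyset$, the quotient $\Gamma_{\varepsilon_{\sigma}}(a) / \!\sim_a$ is finite, so the assignment $i \mapsto [X_i]_{\sim_a}$ partitions $K$ into finitely many blocks; since $K \in D$ and $D$ is an ultrafilter, one block $J \triangleq \set{i \in K \mid X_i \sim_a X_0} \in D$ for a fixed representative $X_0 \in \Gamma_{\varepsilon_{\sigma}}(a)$. The crucial move is that for each $i \in J$ we have $E_i \in \varepsilon_{\textit{max}, \sigma}(F)$ and $X_i - \set{a} \subseteq E_i$, so applying the definition of $\sim_a$ (which quantifies exactly over $\varepsilon_{\textit{max}, \sigma}(F)$) with $E = E_i$ gives $X_0 - \set{a} \subseteq E_i$. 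Because $J \in D$ and $X_0 - \set{a}$ is finite, this propagates to $X_0 - \set{a} \subseteq \bigcap_D E_i = E^*$. Since $a \in X_0$, we conclude $X_0 = (X_0 - \set{a}) \cup \set{a} \subseteq E^* \cup \set{a} \subseteq E'$, contradicting that $X_0$ is an anti-$\varepsilon_{\sigma}$ set while $E' \in \varepsilon_{\sigma}(F)$. Hence $E^*$ is maximal, i.e.\ $E^* \in \varepsilon_{\textit{max}, \sigma}(F)$.

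I expect the main obstacle to be the bookkeeping around which semantics the equivalence $\sim_a$ refers to: the transfer $X_i - \set{a} \subseteq E_i \Rightarrow X_0 - \set{a} \subseteq E_i$ works only because $E_i$ is genuinely \emph{maximal}, not merely a member of $\varepsilon_{\sigma}(F)$, so the hypothesis $E_i \in \varepsilon_{\textit{max}, \sigma}(F)$ must be kept alive throughout and used precisely at this point. The finiteness of $\Gamma_{\varepsilon_{\sigma}}(a) / \!\sim_a$ is exactly what legitimizes the ultrafilter pigeonhole step, so the argument hinges on coupling Anti($\varepsilon_{\sigma}$)-finiteness with the compactness characterization of extensibility; the remaining points (that any non-extensible finite set contains a minimal anti-$\varepsilon_{\sigma}$ set, and the elementary ultrafilter manipulations) are routine.
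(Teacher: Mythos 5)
Your proof is correct and follows essentially the same route as the paper's: assume non-maximality, use the ultrafilter to pass to the index set where $a \notin E_i$, extract minimal anti-$\varepsilon_{\sigma}$ sets via the compactness of extensibility (Theorem \ref{theorem:epsilon_extensible}), pigeonhole through the finite quotient $\Gamma_{\varepsilon_\sigma}(a)/\!\sim_a$, and derive a contradiction from $X_0 \subseteq \bigcap_D E_i \cup \set{a} \subseteq E'$. Your explicit remark that the $\sim_a$-transfer step is exactly where the maximality of each $E_i$ is needed is a correct and welcome clarification of a point the paper leaves implicit in its claim that $\sim_a$-equivalent sets yield the same index sets.
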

\begin{proof}
	Let $E_i \in \varepsilon_{\textit{max}, \sigma}(F)$ for each $i \in I (\neq \emptyset)$ and $D$ any ultrafilter over $I$. Assume $\bigcap_D E_i \notin \varepsilon_{\textit{max}, \sigma}(F)$. Since $\varepsilon_{\sigma}(F)$ is closed under the operator $\bigcap_D$, we have $\bigcap_D E_i \in \varepsilon_{\sigma}(F)$. Thus, $\bigcap_D E_i \subset E$ for some $E \in \varepsilon_{\sigma}(F)$. Let $a \in E - \bigcap_D E_i$. Then $I_a \triangleq \set{i \in I \mid a \notin E_i} \in D$. Let $i \in I_a$. Then $E_i \cup \set{a}$ isn't $\varepsilon_\sigma$-extensible because of $a \notin E_i$ and $E_i \in \varepsilon_{\textit{max}, \sigma}(F)$. By Theorem \ref{theorem:epsilon_extensible}, there exists a finite subset $S \subseteq E_i \cup \set{a}$ such that $S$ isn't $\varepsilon_\sigma$-extensible. W.l.o.g., we may assume that $S \in \text{Anti-}\varepsilon_\sigma(F)$ due to the finiteness of $S$. Since both $\set{a}$ and $E_i$ are $\varepsilon_\sigma$-extensible, we have $a \in S$ and $S \cap E_i \neq \emptyset$. Hence
	\begin{eqnarray*}
		I_a \subseteq \bigcup_{X \in \Gamma_{\varepsilon_\sigma}(a)} \set{i \in I \mid X \subseteq E_i \cup \set{a}} \in D.
	\end{eqnarray*}
	It is easy to see that, for any $X_1, X_2 \in \Gamma_{\varepsilon_\sigma}(a)$, 
	\begin{eqnarray*}
		X_1 \sim_a X_2 \text{~implies~} \set{i \in I \mid X_1 \subseteq E_i \cup \set{a}} = \set{i \in I \mid X_2 \subseteq E_i \cup \set{a}}.
	\end{eqnarray*}
	Further, since $F$ is Anti($\varepsilon_\sigma$)-finitary, there exists $X_0 \in \Gamma_{\varepsilon_\sigma}(a)$ such that 
	\begin{eqnarray*}
		\set{i \in I \mid X_0 \subseteq E_i \cup \set{a}} \in D.
	\end{eqnarray*}
	Thus, $X_0 \subseteq \bigcap_D (E_i \cup \set{a})$, which contradicts that $X_0$ isn't $\varepsilon_\sigma$-extensible and $\bigcap_D (E_i \cup \set{a}) = \bigcap_D E_i \cup \set{a} \subseteq E \in \varepsilon_\sigma(F)$, as desired.
\end{proof}
We may strengthen the conclusion of Proposition \ref{proposition:RMMU_max} above as 
\begin{eqnarray*}
	\set{i \in I \mid E_i \in \varepsilon_{\textit{max}, \sigma}(F)} \in D \text{~implies~} \bigcap_D E_i \in \varepsilon_{\textit{max}, \sigma}(F),
\end{eqnarray*}
which may be proved by modifying the proof of Proposition \ref{proposition:RMMU_max} slightly. Its detail is left to the reader.
\begin{remark}
	Since any finitary and co-finitary AAF must be Anti($\varepsilon_{cf}^{\ell}$)-finitary but the converse is not true (see Example \ref{Ex:Anti-cf}), Proposition \ref{proposition:RMMU_max} contains a generalization of Proposition \ref{proposition:RMMU_na}. However, the advantage of Proposition \ref{proposition:RMMU_na} lies in: it only depends on structural properties of AAFs. Unlike the conditions of finitary and co-finitary, the condition of Anti($\varepsilon$)-finitary refers to both AAFs and the semantics.
\end{remark}
Although $\varepsilon_{\textit{max}, \sigma}(F)$ isn't closed under under the operator $\bigcap_D$ in general, by Theorems \ref{theorem:epsilon_extensible} and \ref{theorem:Lindenbaum_RM}, we still can obtain some interesting properties of $\varepsilon_{\textit{max}, \sigma}(F)$ whenever $\varepsilon_{\sigma}(F)$ is closed under reduced meets modulo any ultrafilter.
\begin{corollary}
	Let $F$ be any AAF and $\varepsilon_{\sigma}$ an extension-based semantics. If $\varepsilon_{\sigma}(F)$ is closed under reduced meets modulo any ultrafilter then
	\begin{itemize}
			\item[a.] $\varepsilon_{\textit{max}, \sigma}$-extensibility is compact.
			\item[b.] $|\varepsilon_{rr\textit{max}(\sigma)}^\eta(F)| \geq 1$ whenever $F$ is finitary and $|\varepsilon_{\sigma}(F)| \geq 1$, where $\varepsilon_{rr\textit{max}(\sigma)}^\eta$ is the range related semantics induced by $\varepsilon_{\textit{max}, \sigma}$.
	\end{itemize}
\end{corollary}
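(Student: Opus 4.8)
The plan is to reduce both clauses to the corresponding facts for $\varepsilon_\sigma$ by means of the Lindenbaum property, since the derived semantics $\varepsilon_{\textit{max}, \sigma}$ need not itself be closed under $\bigcap_D$ (Example \ref{Ex:counterexample for the derived semantics}), so Theorems \ref{theorem:epsilon_extensible} and \ref{theorem:close under meet and universal definability} cannot be applied to it directly. The crucial observation, which I would isolate first, is that $\varepsilon_{\textit{max}, \sigma}$-extensibility and $\varepsilon_\sigma$-extensibility coincide: since $\varepsilon_{\textit{max}, \sigma}(F) \subseteq \varepsilon_\sigma(F)$, every $\varepsilon_{\textit{max}, \sigma}$-extensible set is trivially $\varepsilon_\sigma$-extensible, while conversely, if $X \subseteq E$ for some $E \in \varepsilon_\sigma(F)$, then Theorem \ref{theorem:Lindenbaum_RM}($b$) supplies a maximal $E' \in \varepsilon_{\textit{max}, \sigma}(F)$ with $E \subseteq E'$, whence $X \subseteq E'$.

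For clause ($a$), I would record this coincidence and note that it equally identifies finite $\varepsilon_{\textit{max}, \sigma}$-extensibility with finite $\varepsilon_\sigma$-extensibility, as the two notions quantify over the same finite subsets of $X$. Compactness of $\varepsilon_{\textit{max}, \sigma}$-extensibility then follows at once from compactness of $\varepsilon_\sigma$-extensibility, which is Theorem \ref{theorem:epsilon_extensible} applied to $\varepsilon_\sigma$ (legitimate, since $\varepsilon_\sigma(F)$ is closed under $\bigcap_D$ by hypothesis). No fresh calculation is required.

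For clause ($b$), the argument I envisage runs as follows. As $\varepsilon_\sigma(F)$ is closed under $\bigcap_D$, $F$ is finitary and $|\varepsilon_\sigma(F)| \geq 1$, Theorem \ref{theorem:close under meet and universal definability} yields an extension $E_0 \in \varepsilon_{rr \sigma}^\eta(F)$, that is, an element of $\varepsilon_\sigma(F)$ that is $\propto$-maximal in $\varepsilon_\sigma(F)$. Using Theorem \ref{theorem:Lindenbaum_RM}($b$) once more, I would extend $E_0$ to a $\subseteq$-maximal $E_0^\ast \in \varepsilon_{\textit{max}, \sigma}(F)$. Because $E_0 \subseteq E_0^\ast$ entails $E_0 \propto E_0^\ast$ (the map $\lambda_X.X_\eta^+$ is $\subseteq$-monotone) and $E_0^\ast \in \varepsilon_\sigma(F)$, the $\propto$-maximality of $E_0$ forces $E_0 \cup (E_0)_\eta^+ = E_0^\ast \cup (E_0^\ast)_\eta^+$. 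It then remains to verify $E_0^\ast \in \varepsilon_{rr\textit{max}(\sigma)}^\eta(F)$: were there an $E' \in \varepsilon_{\textit{max}, \sigma}(F)$ with $E_0^\ast \cup (E_0^\ast)_\eta^+ \subset E' \cup E_\eta'^+$, then substituting the equality and using $E' \in \varepsilon_\sigma(F)$ would contradict the $\propto$-maximality of $E_0$ in $\varepsilon_\sigma(F)$. Hence $E_0^\ast \in \varepsilon_{rr\textit{max}(\sigma)}^\eta(F)$ and $|\varepsilon_{rr\textit{max}(\sigma)}^\eta(F)| \geq 1$.

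The main obstacle here is conceptual rather than computational: $\varepsilon_{\textit{max}, \sigma}$ falls outside the reach of both metatheorems precisely because it is not $\bigcap_D$-closed, so every step must be routed back to $\varepsilon_\sigma$. The Lindenbaum property is the bridge that makes this routing work, both by collapsing the two extensibility notions (for ($a$)) and by certifying that a $\propto$-maximal $\varepsilon_\sigma$-extension may be replaced, without altering its range $E \cup E_\eta^+$, by a genuinely $\subseteq$-maximal extension lying in $\varepsilon_{\textit{max}, \sigma}(F)$ (for ($b$)).
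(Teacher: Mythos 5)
Your proposal is correct and follows essentially the same route as the paper's own proof: both clauses are reduced to the corresponding facts for $\varepsilon_{\sigma}$ via Theorem \ref{theorem:epsilon_extensible} (resp. Theorem \ref{theorem:close under meet and universal definability}) and then transferred to $\varepsilon_{\textit{max}, \sigma}$ by the Lindenbaum property of Theorem \ref{theorem:Lindenbaum_RM}($b$). Your explicit verification in clause ($b$) that the range equality propagates maximality is just the up-closedness of range related semantics that the paper invokes implicitly, so there is no substantive difference.
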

\begin{proof}
	($a$) Assume that $X (\subseteq \A)$ is finitely $\varepsilon_{\textit{max}, \sigma}$-extensible. Since $\varepsilon_{\textit{max}, \sigma}(F) \subseteq \varepsilon_{\sigma}(F)$, $X$ is finitely $\varepsilon_{\sigma}$-extensible. Then, by Theorem \ref{theorem:epsilon_extensible}, $X \subseteq E$ for some $E \in \varepsilon_{\sigma}(F)$. Moreover, by Theorem \ref{theorem:Lindenbaum_RM}($b$), $E \subseteq E^*$ for some $E^* \in \varepsilon_{\textit{max}, \sigma}(F)$. Thus, we have $X \subseteq E \subseteq E^*$, and hence $X$ is $\varepsilon_{\textit{max}, \sigma}$-extensible.
	\par ($b$) Since $|\varepsilon_{\sigma}(F)| \geq 1$, by Theorem \ref{theorem:close under meet and universal definability}, $|\varepsilon_{rr\sigma}^\eta(F)| \geq 1$. Let $E \in \varepsilon_{rr\sigma}^\eta(F) \subseteq \varepsilon_{\sigma}(F)$. By Theorem \ref{theorem:Lindenbaum_RM}($b$), $E \subseteq E'$ for some $E' \in \varepsilon_{\textit{max}, \sigma}(F)$. Thus, $E' \in \varepsilon_{rr\sigma}^\eta(F)$ due to $E \cup E_\eta^+ \subseteq E' \cup E_\eta'^+$. Hence, $E' \in \varepsilon_{rr\textit{max}(\sigma)}^\eta(F)$, as desired.
\end{proof}
\subsection{Some applications}
This subsection intends to provide some applications of Theorem \ref{theorem:epsilon_extensible} (the compactness of the extensibility) and Theorem \ref{theorem:Lindenbaum_RM} (the Lindenbaum property). First, we consider the equivalence of two AAFs $\tuple{\A, \rightarrow_i}(i = 1, 2)$ w.r.t. a given semantics. The next result characterizes equivalent AAFs up to $\varepsilon_{\textit{max}, \sigma}$ (or, $\varepsilon_{\textit{max}, \sigma}$-inference, $\approx$) in terms of anti-$\varepsilon_{\sigma}$ sets. Here the binary relation $\approx$ is defined below.
\begin{definition}
	Let $F_i = \tuple{\A, \rightarrow_i}(i = 1, 2)$ be two AAFs and $\varepsilon$ an extension-based semantics.
	\begin{itemize}
		\item $\varepsilon(F_1) \sqsubseteq \varepsilon(F_2)$ iff $\forall E \in \varepsilon(F_1) \exists E' \in \varepsilon(F_2) (E \subseteq E')$.
		\item $\varepsilon(F_1) \approx \varepsilon(F_2)$ iff $\varepsilon(F_1) \sqsubseteq \varepsilon(F_2)$ and $\varepsilon(F_2) \sqsubseteq \varepsilon(F_1)$.
	\end{itemize}
\end{definition}
\begin{theorem} \label{theorem:anti-epsilon set}
	Let $F_i = \tuple{\A, \rightarrow_i}(i = 1, 2)$ be two AAFs and $\varepsilon_{\sigma}$ an extension-based semantics. If $\varepsilon_{\sigma}(F_i)(i = 1, 2)$ is closed under reduced meets modulo any ultrafilter then the following are equivalent:
	\begin{itemize}
		\item[a.] $\text{Anti-}\varepsilon_{\sigma}(F_1) = \text{Anti-}\varepsilon_{\sigma}(F_2)$.
		\item[b.] $\varepsilon_{\sigma}(F_1) \approx \varepsilon_{\sigma}(F_2)$.
		\item[c.] $\varepsilon_{\textit{max}, \sigma}(F_1) = \varepsilon_{\textit{max}, \sigma}(F_2)$.
	\end{itemize}
	Moreover, under the additional assumption $\text{Anti-}\varepsilon_{\sigma}(F_i) \neq \emptyset$ (or, $A \notin \varepsilon_{\sigma}(F_i)$) for $i = 1, 2$, each of the clauses ($a$), ($b$)  and ($c$) is also equivalent to
	\begin{itemize}
		\item[d.] $\nc{\varepsilon_{\textit{max}, \sigma}}{F_1} = \nc{\varepsilon_{\textit{max}, \sigma}}{F_2}$.
	\end{itemize}
\end{theorem}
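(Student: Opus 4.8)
The plan is to route every equivalence through a single bridging property $(\ast)$: that $\varepsilon_{\sigma}$-extensibility coincides for $F_1$ and $F_2$, i.e. for every $X \subseteq \A$, $X$ is $\varepsilon_{\sigma}$-extensible in $F_1$ iff it is in $F_2$. The starting observation is the characterization recorded just after Definition \ref{definition:anti-epsilon set}: since each $\varepsilon_{\sigma}(F_i)$ is closed under reduced meets modulo any ultrafilter, Theorem \ref{theorem:epsilon_extensible} applies and a set $X$ is $\varepsilon_{\sigma}$-extensible in $F_i$ iff $\wp_f(X) \cap \text{Anti-}\varepsilon_{\sigma}(F_i) = \emptyset$. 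Combined with the fact that a \emph{finite} set is an anti-$\varepsilon_{\sigma}$ set of $F_i$ exactly when it fails to be $\varepsilon_{\sigma}$-extensible in $F_i$, this makes $(a)$ and $(\ast)$ interchangeable: $(a)\Rightarrow(\ast)$ is immediate from the characterization, and $(\ast)\Rightarrow(a)$ holds because equal families of anti-$\varepsilon_{\sigma}$ sets have equal minimal members.

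For the cycle $(a)\Leftrightarrow(b)\Leftrightarrow(c)$ I would first prove $(a)\Rightarrow(c)$. Given $E \in \varepsilon_{\textit{max}, \sigma}(F_1)$, note $E$ is trivially $\varepsilon_{\sigma}$-extensible in $F_1$, hence in $F_2$ by $(\ast)$, while for each $a \notin E$ the set $E \cup \{a\}$ is \emph{not} $\varepsilon_{\sigma}$-extensible in $F_1$ (else maximality of $E$ would fail), hence not in $F_2$ by $(\ast)$. Using the Lindenbaum property (Theorem \ref{theorem:Lindenbaum_RM}) I would extend $E$ to some maximal $E' \in \varepsilon_{\textit{max}, \sigma}(F_2)$ with $E \subseteq E'$; any $a \in E' \setminus E$ would render $E \cup \{a\} \subseteq E'$ extensible in $F_2$, a contradiction, so $E = E' \in \varepsilon_{\textit{max}, \sigma}(F_2)$, and $(c)$ follows by symmetry. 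Next, $(c)\Rightarrow(b)$ is a direct Lindenbaum argument: each $E \in \varepsilon_{\sigma}(F_1)$ lies below some maximal $E^{*} \in \varepsilon_{\textit{max}, \sigma}(F_1) = \varepsilon_{\textit{max}, \sigma}(F_2) \subseteq \varepsilon_{\sigma}(F_2)$, giving $\varepsilon_{\sigma}(F_1) \sqsubseteq \varepsilon_{\sigma}(F_2)$, and symmetrically. Finally $(b)\Rightarrow(\ast)$ is trivial, since the two $\sqsubseteq$-relations transport a witnessing extension from either framework to the other; as $(\ast)$ is equivalent to $(a)$, the cycle closes.

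For the additional equivalence with $(d)$ I would first record that the side hypotheses are interchangeable: $\text{Anti-}\varepsilon_{\sigma}(F_i) \neq \emptyset$ iff $\A \notin \varepsilon_{\sigma}(F_i)$. This follows from compactness (Theorem \ref{theorem:epsilon_extensible}), because $\text{Anti-}\varepsilon_{\sigma}(F_i) = \emptyset$ forces every finite set, hence $\A$ itself, to be $\varepsilon_{\sigma}$-extensible, so $\A \in \varepsilon_{\sigma}(F_i)$. The implication $(c)\Rightarrow(d)$ is immediate, the inference relation being defined purely from $\varepsilon_{\textit{max}, \sigma}(F_i)$. The crux is $(d)\Rightarrow(a)$: I would recover extensibility from the inference relation by noting that, since $\A \notin \varepsilon_{\sigma}(F_i)$ gives $\A \notin \varepsilon_{\textit{max}, \sigma}(F_i)$, every maximal extension of $F_i$ omits some argument; hence for any $X \subseteq \A$, $X$ is $\varepsilon_{\sigma}$-extensible in $F_i$ iff $X \nNc{\varepsilon_{\textit{max}, \sigma}}{F_i} a$ for some $a \in \A$ (when $X$ is inextensible the inference holds vacuously for every $a$). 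Equality of the inference relations therefore yields $(\ast)$, and thus $(a)$.

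The main obstacle I expect is exactly this recovery step $(d)\Rightarrow(a)$. The inference relation sees maximal extensions only through a universally quantified membership test, and it is solely the hypothesis $\A \notin \varepsilon_{\sigma}(F_i)$ that separates the two readings of ``$X$ infers every argument'': either $X$ is inextensible, or $X$ already sits inside a top extension equal to $\A$. Establishing this dichotomy cleanly, and verifying it simultaneously for $i = 1, 2$, is the delicate part; everywhere else the reasoning is routine once $(\ast)$ together with the Lindenbaum and compactness machinery is in place.
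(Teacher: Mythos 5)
Your proof is correct, and it runs on the same two engines as the paper's own argument --- the compactness of $\varepsilon_{\sigma}$-extensibility (Theorem \ref{theorem:epsilon_extensible}) and the Lindenbaum property (Theorem \ref{theorem:Lindenbaum_RM}), both drawn from closure under $\bigcap_D$ --- but the decomposition is genuinely different in one place. For the cycle, you make the transfer of extensibility $(\ast)$ an explicit pivot equivalent to ($a$) and prove $(a)\Rightarrow(c)\Rightarrow(b)\Rightarrow(\ast)\Rightarrow(a)$, whereas the paper proves $(a)\Rightarrow(b)\Rightarrow(c)\Rightarrow(a)$ directly; this difference is largely cosmetic, since both use compactness to pass from anti-sets to extensions and Lindenbaum to reach maximal ones. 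The substantive divergence is clause ($d$): you prove $(d)\Rightarrow(a)$ by recovering extensibility from the inference relation --- $X$ is $\varepsilon_{\sigma}$-extensible in $F_i$ iff $X \nNc{\varepsilon_{\textit{max}, \sigma}}{F_i} a$ for some $a \in \A$, which is valid because $\A \notin \varepsilon_{\sigma}(F_i)$ forces every maximal extension to omit some argument (the forward direction also silently uses Lindenbaum to land $X$ inside a maximal extension) --- while the paper instead proves $(d)\Rightarrow(c)$ via the identity $E = \set{a \in \A \mid E \nc{\varepsilon_{\textit{max}, \sigma}}{F_1} a} = \bigcap \set{X \in \varepsilon_{\textit{max}, \sigma}(F_2) \mid E \subseteq X}$ for maximal $E$, using $E \neq \A$ to rule out a vacuous intersection and then a back-and-forth maximality argument. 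Your closing ``dichotomy'' worry is exactly the issue the paper's non-vacuity step handles, and your treatment of it is sound. What your route buys is a clean, reusable characterization of extensibility purely in terms of $\varepsilon_{\textit{max}, \sigma}$-inference, which makes $(d)$ interchangeable with the other clauses through the single bridge $(\ast)$; what the paper's route buys is that it lands directly on ($c$) without detouring through anti-sets. Two harmless elisions to patch when writing yours up: in $(a)\Rightarrow(c)$, $E$ need not itself lie in $\varepsilon_{\sigma}(F_2)$, so Lindenbaum must be applied to a witness $E'' \in \varepsilon_{\sigma}(F_2)$ containing $E$ rather than to $E$; and the claim that $\text{Anti-}\varepsilon_{\sigma}(F_i) = \emptyset$ makes every finite set extensible uses the (easy, finiteness-based) fact that every anti-$\varepsilon_{\sigma}$ set contains a minimal one.
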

\begin{proof}
	($a \Rightarrow b$)
	Let $E \subseteq \A$ be a $\varepsilon_{\sigma}$-extension of $F_1$. Clearly, $\pw{E} \cap \text{Anti-}\varepsilon_{\sigma}(F_2) = \emptyset$ due to $E \in \varepsilon_{\sigma}(F_1)$ and $\text{Anti-}\varepsilon_{\sigma}(F_1) = \text{Anti-}\varepsilon_{\sigma}(F_2)$. Further, since $\varepsilon_{\sigma}(F_2)$ is closed under reduced meets modulo any ultrafilter, by Theorem \ref{theorem:epsilon_extensible} and $\pw{E} \cap \text{Anti-}\varepsilon_{\sigma}(F_2) = \emptyset$, there exists $E_1 \in \varepsilon_{\sigma}(F_2)$ such that $E \subseteq E_1$. Thus $\varepsilon_{\sigma}(F_1) \sqsubseteq \varepsilon_{\sigma}(F_2)$. Similarly, we also have $\varepsilon_{\sigma}(F_2) \sqsubseteq \varepsilon_{\sigma}(F_1)$.
	\par ($b \Rightarrow c$)
	Assume $E \subseteq \A$ is a maximal $\varepsilon_{\sigma}$-extension of $F_1$. It is enough to show that $E$ is also a maximal $\varepsilon_{\sigma}$-extension of $F_2$. By the clause ($b$), $E \subseteq E_1$ for some $E_1 \in \varepsilon_{\sigma}(F_2)$. Thus, by Theorem \ref{theorem:Lindenbaum_RM}, there exists a maximal $\varepsilon_{\sigma}$-extension $E_2$ of $F_2$ such that $E \subseteq E_1 \subseteq E_2$. To complete the proof, it is enough to show $E = E_2$. Since $E_2 \in \varepsilon_{\sigma}(F_2)$, applying the clause ($b$) again, there exists an $\varepsilon_{\sigma}$-extension $E_3$ of $F_1$ such that $E \subseteq E_1 \subseteq E_2 \subseteq E_3$. Then, since $E$ is maximal in $\varepsilon_{\sigma}(F_1)$, we have $E = E_1 = E_2 = E_3$, as desired.
	\par ($c \Rightarrow a$) 
	Assume $X \in \text{Anti-}\varepsilon_{\sigma}(F_1)$. Since $\varepsilon_{\sigma}(F_2)$ is closed under reduced meets modulo any ultrafilter and $\varepsilon_{\textit{max}, \sigma}(F_1) = \varepsilon_{\textit{max}, \sigma}(F_2)$, by Theorem \ref{theorem:Lindenbaum_RM}, it is easy to verify that $X$ is an anti-$\varepsilon_{\sigma}$ set of $F_2$. To complete the proof, it suffices to show that $X$ is a minimal anti-$\varepsilon_{\sigma}$ set of $F_2$. Assume that $X_1$ is an anti-$\varepsilon_{\sigma}$ set of $F_2$ such that $X_1 \subseteq X$. Clearly, by Theorem \ref{theorem:Lindenbaum_RM} and $\varepsilon_{\textit{max}, \sigma}(F_1) = \varepsilon_{\textit{max}, \sigma}(F_2)$ again, $X_1$ is also an anti-$\varepsilon_{\sigma}$ set of $F_1$. Then $X_1 = X$ immediately follows from $X \in \text{Anti-}\varepsilon_{\sigma}(F_1)$, as desired.
	\par($c \Leftrightarrow d$)
	It is trivial that the clause ($c$) implies ($d$). Next we prove another direction. Let $E \in \varepsilon_{\textit{max}, \sigma}(F_1)$. By the clause ($d$) and Definition \ref{definition:infer}, we have
	\begin{eqnarray*}
		E \longequal{\text{Def.} \ref{definition:infer}} \set{a \in \A \mid E \nc{\varepsilon_{\textit{max}, \sigma}}{F_1} a} \overset{(d)} = \set{a \in \A \mid E \nc{\varepsilon_{\textit{max}, \sigma}}{F_2} a}.
	\end{eqnarray*}
	It follows from $\text{Anti-}\varepsilon_{\sigma}(F_1) \neq \emptyset$ and $E \in \varepsilon_{\sigma}(F_1)$ that $E \neq \A$. Further, due to 
	\begin{eqnarray*}
		E = \set{a \in \A \mid E \nc{\varepsilon_{\textit{max}, \sigma}}{F_2} a} = \bigcap \set{X \in \varepsilon_{\textit{max}, \sigma}(F_2) \mid E \subseteq X},
	\end{eqnarray*}
	we get $\set{X \in \varepsilon_{\textit{max}, \sigma}(F_2) \mid E \subseteq X} \neq \emptyset$ (otherwise, it follows that $E = \A$, and a contradiction arises).
	Thus, there exits $E_1 \in \varepsilon_{\textit{max}, \sigma}(F_2)$ such that $E \subseteq E_1$.
	Similarly, $E_1 \subseteq E_2$ for some $E_2 \in \varepsilon_{\textit{max}, \sigma}(F_1)$.
	Hence, $E = E_1 = E_2$ due to $E \in \varepsilon_{\textit{max}, \sigma}(F_1)$.
	Consequently, $E \in \varepsilon_{\textit{max}, \sigma}(F_2)$, as desired.
\end{proof}
Clearly, in the above theorem, the assertion ($c \Leftrightarrow d$) doesn't depend on the assumption that $\varepsilon_{\sigma}(F_i)$ is closed under reduced meets modulo any ultrafilter. But other equivalence needs it.
\begin{corollary}\label{corollary:anti-epsilon set}
	Let $F_i = \tuple{\A, \rightarrow_i}(i = 1, 2)$ be two finitary AAFs.
	\begin{itemize}
		\item[a.] $\varepsilon_{na}^\ell(F_1) = \varepsilon_{na}^\ell(F_2)$ iff $\varepsilon_{cf}^\ell(F_1) = \varepsilon_{cf}^\ell(F_2)$ iff $\text{Anti-}\varepsilon_{cf}^\ell(F_1) = \text{Anti-}\varepsilon_{cf}^\ell(F_2)$\footnote{This clause doesn't depend on the assumption that $F_i (i = 1, 2)$ is finitary.}.
		\item[b.] $\varepsilon_{pr, \textit{Dung}}^{\ell mn}(F_1) = \varepsilon_{pr, \textit{Dung}}^{\ell mn}(F_2)$ iff $\text{Anti-}\varepsilon_{ad}^{\ell mn}(F_1) = \text{Anti-}\varepsilon_{ad}^{\ell mn}(F_2)$.
		\item[c.] $\varepsilon_{pr}^{\ell mn}(F_1) = \varepsilon_{pr}^{\ell mn}(F_2)$ iff $\text{Anti-}\varepsilon_{co}^{\ell mn}(F_1) = \text{Anti-}\varepsilon_{co}^{\ell mn}(F_2)$.
	\end{itemize}
	Moreover, in the situation that $\ell \geq m$, $n \geq m$ and $F_i$ is well-founded $(i = 1, 2)$, we also have
	\begin{itemize}
		\item[d.] $\text{Anti-}\varepsilon_{co}^{\ell mn}(F_1) = \text{Anti-}\varepsilon_{co}^{\ell mn}(F_2)$ iff $\varepsilon_{ss}^{\ell mn \eta}(F_1) = \varepsilon_{ss}^{\ell mn \eta}(F_2)$.
		\item[e.] $\text{Anti-}\varepsilon_{ad}^{\ell mn}(F_1) = \text{Anti-}\varepsilon_{ad}^{\ell mn}(F_2)$ iff $\varepsilon_{rra}^{\ell mn \eta}(F_1) = \varepsilon_{rra}^{\ell mn \eta}(F_2)$ whenever $\eta \geq \ell$.
	\end{itemize}
\end{corollary}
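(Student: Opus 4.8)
The plan is to recognise that each ``maximal'' semantics on the left-hand sides is an instance of the derived semantics $\varepsilon_{\textit{max},\sigma}$ and then read most of the equivalences straight off Theorem \ref{theorem:anti-epsilon set}. By Definition \ref{definition:graded_extensions} and Definition \ref{definition:pr_Dung}, $\varepsilon_{na}^{\ell}$, $\varepsilon_{pr,\textit{Dung}}^{\ell mn}$ and $\varepsilon_{pr}^{\ell mn}$ are exactly $\varepsilon_{\textit{max},\sigma}$ for $\varepsilon_\sigma$ equal to $\varepsilon_{cf}^{\ell}$, $\varepsilon_{ad}^{\ell mn}$ and $\varepsilon_{co}^{\ell mn}$ respectively. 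To invoke Theorem \ref{theorem:anti-epsilon set} for these $\sigma$ I first supply the required closure of $\varepsilon_{\sigma}(F_i)$ under reduced meets modulo any ultrafilter from Theorem \ref{theorem:close under meet_fundamental semantics}: for $\varepsilon_{cf}^{\ell}$ this holds for arbitrary AAFs, and for $\varepsilon_{ad}^{\ell mn}$ and $\varepsilon_{co}^{\ell mn}$ it holds under the standing finitariness hypothesis.

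With this in hand, clauses (b) and (c) are immediate: the equivalence (a)$\Leftrightarrow$(c) of Theorem \ref{theorem:anti-epsilon set}, specialised to $\sigma=ad^{\ell mn}$ and $\sigma=co^{\ell mn}$, says precisely that $\text{Anti-}\varepsilon_{ad}^{\ell mn}(F_1)=\text{Anti-}\varepsilon_{ad}^{\ell mn}(F_2)$ iff $\varepsilon_{\textit{max},\sigma}(F_1)=\varepsilon_{\textit{max},\sigma}(F_2)$, i.e. iff $\varepsilon_{pr,\textit{Dung}}^{\ell mn}(F_1)=\varepsilon_{pr,\textit{Dung}}^{\ell mn}(F_2)$, and symmetrically for the complete case. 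The same specialisation with $\sigma=cf^{\ell}$ yields the equivalence between equality of the anti sets and equality of the naive semantics in (a). The remaining biconditional in (a), $\varepsilon_{cf}^{\ell}(F_1)=\varepsilon_{cf}^{\ell}(F_2)$ iff $\text{Anti-}\varepsilon_{cf}^{\ell}(F_1)=\text{Anti-}\varepsilon_{cf}^{\ell}(F_2)$, I would prove separately and elementarily: one direction is trivial, and for the other I use that $\varepsilon_{cf}^{\ell}$ is down-closed (Lemma \ref{lemma:properties_cf_def}) together with the characterisation ``$X$ is $\varepsilon_{cf}^{\ell}$-extensible iff $\wp_f(X)\cap\text{Anti-}\varepsilon_{cf}^{\ell}(F)=\emptyset$'' coming from the compactness of extensibility (Theorem \ref{theorem:epsilon_extensible}); each $E\in\varepsilon_{cf}^{\ell}(F_1)$ has all finite subsets conflict-free, hence avoiding $\text{Anti-}\varepsilon_{cf}^{\ell}(F_1)=\text{Anti-}\varepsilon_{cf}^{\ell}(F_2)$, so $E$ is $\varepsilon_{cf}^{\ell}$-extensible in $F_2$ and, by down-closedness, lies in $\varepsilon_{cf}^{\ell}(F_2)$. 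Since this uses only closure of $\varepsilon_{cf}^{\ell}$ for arbitrary AAFs, clause (a) needs no finitariness, matching the footnote.

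For clauses (d) and (e) I would exploit that, under $\ell\geq m$, $n\geq m$ and well-foundedness, all relevant semantics collapse to the single extension $D_{\substack{m \\ n}}^{\lambda_F}(\emptyset)$. For well-founded $F$ the transitive closure $\rightarrow^{+}$ is well-founded on every subset of $\A$, so Corollary \ref{corollary:well-founded_AAF_stb}(a) gives $\varepsilon_{co}^{\ell mn}(F)=\varepsilon_{pr,\textit{Dung}}^{\ell mn}(F)=\{D_{\substack{m \\ n}}^{\lambda_F}(\emptyset)\}$, whence $\varepsilon_{pr}^{\ell mn}(F)=\{D_{\substack{m \\ n}}^{\lambda_F}(\emptyset)\}$ as well, while Corollary \ref{corollary:well-founded_ss_rrs_empty} gives $\varepsilon_{ss}^{\ell mn\eta}(F)=\{D_{\substack{m \\ n}}^{\lambda_F}(\emptyset)\}$ and, when $\eta\geq\ell$, $\varepsilon_{rra}^{\ell mn\eta}(F)=\{D_{\substack{m \\ n}}^{\lambda_F}(\emptyset)\}$. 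Hence $\varepsilon_{pr}^{\ell mn}(F_i)=\varepsilon_{ss}^{\ell mn\eta}(F_i)$ and $\varepsilon_{pr,\textit{Dung}}^{\ell mn}(F_i)=\varepsilon_{rra}^{\ell mn\eta}(F_i)$ for $i=1,2$, so (d) reduces to (c) and (e) reduces to (b) by substituting equals for equals.

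The specialisations of Theorem \ref{theorem:anti-epsilon set} and the collapse arguments for (d)/(e) are pure citations. I expect the only genuinely content-bearing step to be the separate biconditional in (a) linking equality of the \emph{full} conflict-free semantics to equality of the minimal anti sets, because Theorem \ref{theorem:anti-epsilon set} by itself delivers only equality of the maximal (naive) extensions. The point to get right there is that conflict-freeness is simultaneously down-closed and compact, so that the minimal anti sets encode all of $\varepsilon_{cf}^{\ell}$ rather than merely its maximal members, and that this encoding continues to work after dropping the finitariness assumption.
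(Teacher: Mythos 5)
Your proposal is correct and takes essentially the same route as the paper: clauses (a)--(c) come from Theorem \ref{theorem:anti-epsilon set} combined with the closure results of Theorem \ref{theorem:close under meet_fundamental semantics} and down-closedness (Lemma \ref{lemma:properties_cf_def}(d)), while clauses (d)--(e) follow by the well-foundedness collapse to the singleton $\set{D_{\substack{m \\ n}}^{\lambda_F}(\emptyset)}$ (your Corollaries \ref{corollary:well-founded_AAF_stb} and \ref{corollary:well-founded_ss_rrs_empty} are exactly the packaged consequences of the Proposition \ref{proposition:range_relative} and Corollary \ref{corollary:well-founded_co_empty} that the paper cites), reducing them to (b)/(c). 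Your explicit argument for the second biconditional in (a) --- compactness of extensibility plus down-closedness --- is precisely the detail the paper's terse citation intends, including its independence from finitariness.
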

\begin{proof}
	The clauses ($a$), ($b$) and ($c$) follow from Lemma \ref{lemma:properties_cf_def}($d$) and Theorems \ref{theorem:close under meet_fundamental semantics} and \ref{theorem:anti-epsilon set}. The clauses ($d$) and ($e$) come from Proposition \ref{proposition:range_relative}, Corollary \ref{corollary:well-founded_co_empty} and Theorem \ref{theorem:anti-epsilon set}.
\end{proof}
A consequence of the result above is that, for any extension-based semantics $\varepsilon_{\textit{max}, \sigma}$ induced by the maximality (e.g., $\varepsilon_{na}^\ell$, $\varepsilon_{pr, \textit{Dung}}^{\ell mn}$ and $\varepsilon_{pr}^{\ell mn}$), an operator on finitary AAFs is $\varepsilon_{\textit{max}, \sigma}$-safe whenever it doesn't change the minimal anti sets of its base semantics $\varepsilon_{\sigma}$ (correspondingly, $\varepsilon_{cf}^\ell$, $\varepsilon_{ad}^{\ell mn}$ and $\varepsilon_{co}^{\ell mn}$). Here, the notion of a safe operator is defined formally below.
\begin{definition} \label{definition:safe-operator}
	Given an extension-based semantics $\varepsilon$ and a class $\Omega$ of AAF, an operator $\mathcal{O}_\varepsilon$ on AAFs  is $\varepsilon$-safe w.r.t $\Omega$ if $\varepsilon(F) = \varepsilon(\mathcal{O}_\varepsilon(F))$ for any AAF $F \in \Omega$.
\end{definition}
Next we provide a safe operator related to the conflict-free semantics and leave ones related to other semantics to the future work.
\begin{definition} \label{definition:o-operator}
	The operator $\mathcal{O}_{cf}^\ell$ on AAFs is defined as, for any AAF $F = \tuple{A, \rightarrow}$, $\mathcal{O}_{cf}^\ell(F) \triangleq \tuple{A, \rightarrow'}$, where, for any $a, b \in \A$, 
	\begin{eqnarray*}
		a \rightarrow' b \text{~iff~} a \rightarrow b \text{~and~} \exists X \in \text{Anti-}\varepsilon_{cf}^{\ell}(F)(a, b \in X).
	\end{eqnarray*}
\end{definition}
\begin{example} \label{Ex: safe operator}
	Let $\ell = 3$. For the AAF $F$ in Figure \ref{figure:counterexample}, $\mathcal{O}_{cf}^\ell(F)$ is given graphically below.
	\begin{figure}[H]
		\centering
		\vspace{-0.2cm}
		\includegraphics[width=0.5\linewidth]{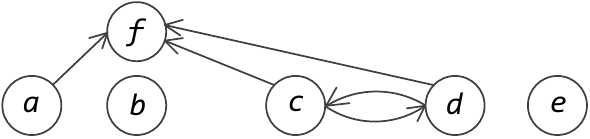}
		\label{fig:Anti-cf}
	\end{figure}
\end{example}
\begin{lemma} \label{lemma:safe operator_cf}
	For any AAF $F$, $\text{Anti-}\varepsilon_{cf}^{\ell}(F) = \text{Anti-}\varepsilon_{cf}^{\ell}(\mathcal{O}_{cf}^\ell(F))$.
\end{lemma}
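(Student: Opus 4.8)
The plan is to reduce the whole statement to two elementary facts and then bootstrap the harder inclusion from the easier one using finiteness. Throughout, write $F' = \mathcal{O}_{cf}^\ell(F) = \langle \A, \rightarrow'\rangle$. First I would record the characterization of anti sets: by the down-closeness of $\ell$-conflict-freeness (Lemma~\ref{lemma:properties_cf_def}($d$)), for any AAF $G$ a finite set $X$ is an anti-$\varepsilon_{cf}^\ell$ set of $G$ if and only if $X \notin \varepsilon_{cf}^\ell(G)$. Indeed, if $X$ is $\ell$-conflict-free then $X \subseteq X \in \varepsilon_{cf}^\ell(G)$, and conversely any $X$ contained in some $\ell$-conflict-free set is itself $\ell$-conflict-free. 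Second, since $\rightarrow' \subseteq \rightarrow$ directly from Definition~\ref{definition:o-operator}, I would note $\varepsilon_{cf}^\ell(F) \subseteq \varepsilon_{cf}^\ell(F')$; equivalently, any set that is not $\ell$-conflict-free in $F'$ is not $\ell$-conflict-free in $F$.

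For the inclusion $\text{Anti-}\varepsilon_{cf}^\ell(F) \subseteq \text{Anti-}\varepsilon_{cf}^\ell(F')$, I would take a minimal anti set $X$ of $F$ and show it is a minimal anti set of $F'$. Since $X \notin \varepsilon_{cf}^\ell(F)$, there is some $b \in X$ attacked in $F$ by at least $\ell$ distinct arguments of $X$; for each such attacker $a$ we have $a, b \in X \in \text{Anti-}\varepsilon_{cf}^\ell(F)$, so the defining condition of $\mathcal{O}_{cf}^\ell$ yields $a \rightarrow' b$. Hence $b$ is still attacked by at least $\ell$ members of $X$ in $F'$, so $X \notin \varepsilon_{cf}^\ell(F')$, i.e. $X$ is an anti set of $F'$. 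Minimality then transfers for free: every proper subset $Y \subset X$ is $\ell$-conflict-free in $F$ by minimality of $X$ in $F$ and the characterization above, hence $\ell$-conflict-free in $F'$ by the second preliminary fact, so no proper subset of $X$ is an anti set of $F'$.

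For the reverse inclusion I would deliberately avoid dissecting the internal structure of anti sets of $F'$ and instead leverage the inclusion just proved. Given $X \in \text{Anti-}\varepsilon_{cf}^\ell(F')$, the second fact gives $X \notin \varepsilon_{cf}^\ell(F)$; since $X$ is finite, among its finitely many subsets there is a minimal one $Z$ that is not $\ell$-conflict-free in $F$. This $Z$ is a genuine (global) minimal anti set of $F$, because all its proper subsets are subsets of $X$ and are therefore $\ell$-conflict-free in $F$. By the inclusion already established, $Z \in \text{Anti-}\varepsilon_{cf}^\ell(F')$; but $Z \subseteq X$ and $X$ is itself minimal in $\text{Anti-}\varepsilon_{cf}^\ell(F')$, so minimality forces $Z = X$, whence $X = Z \in \text{Anti-}\varepsilon_{cf}^\ell(F)$. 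Combining the two inclusions gives the desired equality.

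I do not expect a genuine obstacle here; the subtleties are purely bookkeeping. The one point that must be handled carefully is that minimality is always meant globally, over all finite subsets of $\A$, rather than merely within a fixed superset — the finiteness step in the last paragraph is exactly what upgrades "minimal among subsets of $X$" to "minimal anti set of $F$". Self-attacks need no separate treatment, since a self-attacking witness $b$ simply counts among its own $\ell$ attackers and $b \rightarrow' b$ follows from the same defining condition of $\mathcal{O}_{cf}^\ell$.
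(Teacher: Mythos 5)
Your proof is correct and follows essentially the same route as the paper's: the same two inclusions, with the first established by transferring the $\ell$ attacking witnesses through the defining condition of $\mathcal{O}_{cf}^\ell$ and using down-closeness (Lemma \ref{lemma:properties_cf_def}($d$)) plus $\rightarrow' \subseteq \rightarrow$ for minimality, and the second bootstrapped from the first by extracting a minimal non-conflict-free subset $Z \subseteq X$ (the paper's $Y_0 \in \Omega$) and invoking minimality of $X$ in $\text{Anti-}\varepsilon_{cf}^{\ell}(\mathcal{O}_{cf}^\ell(F))$ to force $Z = X$. The only differences are cosmetic, such as phrasing the minimality transfer in contrapositive form.
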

\begin{proof}
	``$\subseteq$'' 
	Let $X \in \text{Anti-}\varepsilon_{cf}^{\ell}(F)$. Then, by Definition \ref{definition:anti-epsilon set}, $X$ is an anti-$\varepsilon_{cf}^{\ell}$ set of $F$, and hence $X \notin \varepsilon_{cf}^{\ell}(F)$. Thus, there exist $a, b_1, \cdots, b_\ell \in X$ such that $b_i \rightarrow a$ for each $1 \leq i \leq \ell$. Further, due to $X \in \text{Anti-}\varepsilon_{cf}^{\ell}(F)$, by Definition \ref{definition:o-operator}, $b_i \rightarrow' a$ for each $1 \leq i \leq \ell$. Then $X \notin \varepsilon_{cf}^{\ell}(\mathcal{O}_{cf}^\ell(F))$. So, by Lemma \ref{lemma:properties_cf_def}($d$), $X$ is also an anti-$\varepsilon_{cf}^{\ell}$ set of $\mathcal{O}_{cf}^\ell(F)$. To complete the proof, it suffices to prove that $X$ is a minimal anti-$\varepsilon$ set of $\mathcal{O}_{cf}^\ell(F)$. Suppose that $Y \subseteq X$ is an anti-$\varepsilon_{cf}^{\ell}$ set of $\mathcal{O}_{cf}^\ell(F)$. Then $Y \notin \varepsilon_{cf}^{\ell}(\mathcal{O}_{cf}^\ell(F))$, and hence $Y \notin \varepsilon_{cf}^{\ell}(F)$ due to $\rightarrow' \subseteq \rightarrow$. Thus, by Lemma \ref{lemma:properties_cf_def}($d$) again, $Y$ is an anti-$\varepsilon_{cf}^{\ell}$ set of $F$. So, it follows from $X \in \text{Anti-}\varepsilon_{cf}^{\ell}(F)$ that $X = Y$.
	\par ``$\supseteq$'' 
	Let $X \in \text{Anti-}\varepsilon_{cf}^{\ell}(\mathcal{O}_{cf}^\ell(F))$. Then, by Definition \ref{definition:anti-epsilon set}, $X$ is an anti-$\varepsilon_{cf}^{\ell}$ set of $\mathcal{O}_{cf}^\ell(F)$. Based on $\rightarrow' \subseteq \rightarrow$ and Lemma \ref{lemma:properties_cf_def}($d$), it is easy to see that $X$ is an anti-$\varepsilon_{cf}^{\ell}$ set of $F$. To complete the proof, we intend to show that $X$ is a minimal anti-$\varepsilon$ set of $F$. Set
	\begin{eqnarray*}
		\Omega \triangleq \set{Y \subseteq X \mid Y \text{~is an anti-}\varepsilon_{cf}^{\ell} \text{~set of~} F}.
	\end{eqnarray*}
	Then $\Omega \neq \emptyset$ due to $X \in \Omega$. Moreover, since $\Omega$ is a set of finite sets, there is a minimal set, say $Y_0$, in $\Omega$. Thus $Y_0 \in \text{Anti-}\varepsilon_{cf}^{\ell}(F)$. So, by the $\subseteq$-direction above, $Y_0 \in \text{Anti-}\varepsilon_{cf}^{\ell}(\mathcal{O}_{cf}^\ell(F))$. Further, it follows from $X \in \text{Anti-}\varepsilon_{cf}^{\ell}(\mathcal{O}_{cf}^\ell(F))$ and $Y_0 \subseteq X$ that $Y_0 = X$. Therefore, $X \in \text{Anti-}\varepsilon_{cf}^{\ell}(F)$, as desired.
\end{proof}
\begin{theorem} \label{theorem:safe operators}
	For all AAFs, the operator $\mathcal{O}_{cf}^\ell$ is both $\varepsilon_{cf}^{\ell}$-safe and $\varepsilon_{na}^{\ell}$-safe.
\end{theorem}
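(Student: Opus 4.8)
The plan is to reduce both safety claims to the single fact, already established in Lemma \ref{lemma:safe operator_cf}, that $\mathcal{O}_{cf}^\ell$ leaves the minimal anti-conflict-free sets untouched, i.e. $\text{Anti-}\varepsilon_{cf}^{\ell}(F) = \text{Anti-}\varepsilon_{cf}^{\ell}(\mathcal{O}_{cf}^\ell(F))$. The key observation is that, for any AAF $G = \tuple{\A, \rightarrow}$, the whole semantics $\varepsilon_{cf}^{\ell}(G)$ is recoverable from $\text{Anti-}\varepsilon_{cf}^{\ell}(G)$ alone. Concretely, I would first prove the characterization
\[
E \in \varepsilon_{cf}^{\ell}(G) \iff X \nsubseteq E \text{~for every~} X \in \text{Anti-}\varepsilon_{cf}^{\ell}(G),
\]
valid for all $E \subseteq \A$. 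Granting this, $\varepsilon_{cf}^{\ell}$-safety is immediate: applying the characterization to both $G = F$ and $G = \mathcal{O}_{cf}^\ell(F)$ and then substituting the equality of the two Anti-sets from Lemma \ref{lemma:safe operator_cf} shows that ``$E \in \varepsilon_{cf}^{\ell}(F)$'' and ``$E \in \varepsilon_{cf}^{\ell}(\mathcal{O}_{cf}^\ell(F))$'' are literally the same condition on $E$; hence $\varepsilon_{cf}^{\ell}(F) = \varepsilon_{cf}^{\ell}(\mathcal{O}_{cf}^\ell(F))$, which is exactly $\varepsilon_{cf}^{\ell}$-safety in the sense of Definition \ref{definition:safe-operator}.

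To establish the characterization I would use two elementary properties of conflict-freeness from Lemma \ref{lemma:properties_cf_def}: down-closeness (clause ($d$)) and compactness, the latter following from directed union closeness (clause ($c$)), which guarantees that a set is $\ell$-conflict-free as soon as all of its finite subsets are. For the forward direction, if $E \in \varepsilon_{cf}^{\ell}(G)$ and some $X \in \text{Anti-}\varepsilon_{cf}^{\ell}(G)$ satisfied $X \subseteq E$, then down-closeness would force $X \in \varepsilon_{cf}^{\ell}(G)$, contradicting that an anti-set is contained in no $\ell$-conflict-free set (Definition \ref{definition:anti-epsilon set}). For the converse, suppose $E \notin \varepsilon_{cf}^{\ell}(G)$; by compactness there is a finite $Y \subseteq E$ with $Y \notin \varepsilon_{cf}^{\ell}(G)$, and down-closeness makes $Y$ an anti-$\varepsilon_{cf}^{\ell}$ set; being finite, $Y$ contains a minimal anti-set $X \in \text{Anti-}\varepsilon_{cf}^{\ell}(G)$ with $X \subseteq Y \subseteq E$. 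This is exactly the contrapositive of the right-hand side, completing the equivalence.

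Finally, $\varepsilon_{na}^{\ell}$-safety follows formally from $\varepsilon_{cf}^{\ell}$-safety. By Definition \ref{definition:graded_extensions}, $\varepsilon_{na}^{\ell}(G)$ is precisely the set of $\subseteq$-maximal elements of the poset $\tuple{\varepsilon_{cf}^{\ell}(G), \subseteq}$, i.e. the derived semantics $\varepsilon_{\textit{max}, \sigma}$ with base $\sigma = cf^\ell$. Since $\varepsilon_{cf}^{\ell}(F) = \varepsilon_{cf}^{\ell}(\mathcal{O}_{cf}^\ell(F))$ as plain sets, the two posets are identical and therefore have the same maximal elements, giving $\varepsilon_{na}^{\ell}(F) = \varepsilon_{na}^{\ell}(\mathcal{O}_{cf}^\ell(F))$.

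The only genuine content sits in Lemma \ref{lemma:safe operator_cf}, which is already proved; the argument here is essentially a matter of packaging. Accordingly the main obstacle is not technical depth but stating and using the characterization \emph{symmetrically} for $F$ and $\mathcal{O}_{cf}^\ell(F)$ — in particular, noticing that $\varepsilon_{cf}^{\ell}(G)$ depends on $G$ only through $\text{Anti-}\varepsilon_{cf}^{\ell}(G)$, so that no further comparison of the two attack relations $\rightarrow$ and $\rightarrow'$ is needed beyond the invariance already supplied by Lemma \ref{lemma:safe operator_cf}.
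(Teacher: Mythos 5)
Your proof is correct, but it takes a genuinely more elementary route than the paper. The paper disposes of the theorem in one line: Lemma \ref{lemma:safe operator_cf} gives $\text{Anti-}\varepsilon_{cf}^{\ell}(F) = \text{Anti-}\varepsilon_{cf}^{\ell}(\mathcal{O}_{cf}^\ell(F))$, and then Corollary \ref{corollary:anti-epsilon set}($a$) converts equality of Anti-sets into equality of both $\varepsilon_{cf}^{\ell}$ and $\varepsilon_{na}^{\ell}$; that corollary, however, is itself obtained from the ultrafilter-based metatheorem (Theorem \ref{theorem:anti-epsilon set}), which rests on closure of $\varepsilon_{cf}^{\ell}$ under $\bigcap_D$ (Theorem \ref{theorem:close under meet_fundamental semantics}), the compactness of extensibility (Theorem \ref{theorem:epsilon_extensible}) and the Lindenbaum property (Theorem \ref{theorem:Lindenbaum_RM}). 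You bypass the reduced-meet machinery entirely: your characterization that $E \in \varepsilon_{cf}^{\ell}(G)$ iff no $X \in \text{Anti-}\varepsilon_{cf}^{\ell}(G)$ is contained in $E$ follows from down-closedness (Lemma \ref{lemma:properties_cf_def}($d$)) and the compactness of conflict-freeness, which the paper itself derives from directed union closure (Lemma \ref{lemma:properties_cf_def}($c$)); your extraction of a minimal anti-set inside a finite non-conflict-free subset is the same argument the paper uses inside the proof of Lemma \ref{lemma:safe operator_cf}, so it is sound. You then get the $\varepsilon_{na}^{\ell}$ claim for free as the maximal elements of identical posets, rather than through the nontrivial direction of Theorem \ref{theorem:anti-epsilon set}. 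The trade-off: the paper's route reuses general machinery valid for any semantics closed under $\bigcap_D$ (the same corollary also covers $\varepsilon_{pr,\textit{Dung}}^{\ell mn}$, $\varepsilon_{pr}^{\ell mn}$, etc.), while yours is self-contained, needs only the one direction actually used here (equal Anti-sets implies equal semantics), and makes explicit the sharper fact---stated only informally after Definition \ref{definition:anti-epsilon set}---that a down-closed, compact semantics is literally recoverable from its minimal anti-sets.
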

\begin{proof}
	Immediately follows from Corollary \ref{corollary:anti-epsilon set} and Lemma \ref{lemma:safe operator_cf}.
\end{proof}
We intend to end this subsection by considering an inverse problem related to the conflict-free semantics. Given an extension-based semantics $\varepsilon$, the inverse problem related with $\varepsilon$ is, to find an AAF $F = \tuple{\A, \rightarrow}$ such that $\Omega = \varepsilon(F)$ for a given set $\Omega \subseteq \pw{\A}$. Next, we will provide sufficient and necessary conditions on $\Omega$ so that the equation $\Omega = \varepsilon_{cf}^\ell(F)$ has solutions.
\begin{definition}
	Let $\A \neq \emptyset$ and $\Omega \subseteq \pw{\A}$. $\Gamma_\Omega$ is the set of all minimal finite subsets $Y$ of $\A$ such that $Y \nsubseteq X$ for each $X \in \Omega$, that is,
	\begin{eqnarray*}
		\Gamma_\Omega \triangleq \textit{min}\set{Y \in \wp_f(\A) \mid \neg \exists X \in \Omega(Y \subseteq X)}.
	\end{eqnarray*}
\end{definition}
We first deal with the non-graded case, i.e., $\ell = 1$.
\begin{theorem}[Representation Theorem of $\varepsilon_{cf}^1$] \label{theorem:Representation theorem of 1-cf}
	Let $\A \neq \emptyset$ and $\Omega \subseteq \pw{\A}$. There exists an AAF $F = \tuple{\A, \rightarrow}$ such that $\varepsilon_{cf}^1(F) = \Omega$ iff $\Omega$ satisfies the following conditions:
	\begin{itemize}
		\item[a.] $\Omega \neq \emptyset$.
		\item[b.] $\Omega$ is down-closed.
		\item[c.] $\Omega$ is closed under reduced meets modulo any ultrafilter.
		\item[d.] $0 < |Y| \leq 2$ for each $Y \in \Gamma_\Omega$.
	\end{itemize}
\end{theorem}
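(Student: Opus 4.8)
The plan is to prove the two implications separately, after the basic observation that since $\ell = 1$ the requirement $E \subseteq N_1(E)$ says exactly that no argument of $E$ is attacked by an argument of $E$; thus $\varepsilon_{cf}^1(F)$ is precisely the family of (standard) conflict-free sets of $F$. For the necessity direction I would assume $\varepsilon_{cf}^1(F) = \Omega$ and read each condition off from results already in hand. Condition (a) holds because $\emptyset$ is conflict-free; (b) is the down-closeness of $\varepsilon_{cf}^\ell$ recorded in Lemma \ref{lemma:properties_cf_def}($d$); and (c) is the closure of $\varepsilon_{cf}^\ell$ under reduced meets modulo any ultrafilter established in Theorem \ref{theorem:close under meet_fundamental semantics} (whose footnote confirms that no finitary hypothesis is needed for $\varepsilon_{cf}^\ell$). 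For (d) I would note that, by down-closeness, a finite $Y$ is contained in no member of $\Omega$ exactly when $Y$ is itself not conflict-free; hence $\Gamma_\Omega$ consists of the $\subseteq$-minimal non-conflict-free finite sets. Such a minimal set is witnessed by a single conflict $b \rightarrow a$ with $a, b \in Y$, so $Y \subseteq \set{a, b}$ and $1 \leq |Y| \leq 2$, while $\emptyset$ is always conflict-free, giving $|Y| > 0$.

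For sufficiency, given (a)--(d), I would construct $F = \tuple{\A, \rightarrow}$ by declaring $a \rightarrow b$ iff $\set{a, b} \in \Gamma_\Omega$, with the convention that $\set{a, a}$ denotes the singleton $\set{a}$ (so that singletons in $\Gamma_\Omega$ induce self-attacks). By condition (d) the attacks and self-attacks of $F$ are in exact correspondence with the elements of $\Gamma_\Omega$, so a set $E$ is conflict-free in $F$ iff no $Y \in \Gamma_\Omega$ satisfies $Y \subseteq E$. The goal then reduces to the identity, for arbitrary $E \subseteq \A$,
\[
E \in \Omega \iff \text{no } Y \in \Gamma_\Omega \text{ satisfies } Y \subseteq E,
\]
which identifies $\Omega$ with $\varepsilon_{cf}^1(F)$.

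To prove this identity I would first settle the finite case using only (a),(b): the forbidden finite sets (those contained in no member of $\Omega$) form an upward-closed family under $\subseteq$, and since finite sets are well-founded under inclusion, every forbidden finite set contains a minimal one, i.e.\ an element of $\Gamma_\Omega$; combined with down-closeness this yields, for finite $Z$, that $Z \in \Omega$ iff $Z$ contains no element of $\Gamma_\Omega$. I would then lift this to arbitrary $E$ by a compactness argument. The direction $E \in \Omega \Rightarrow \wp_f(E) \subseteq \Omega$ is immediate from (b); conversely, writing $E = \bigcup \wp_f(E)$ as the union of the directed family $\wp_f(E) \subseteq \Omega$, Observation \ref{observation:RMMU_continuity} produces an ultrafilter $D$ over $\wp_f(E)$ with $\bigcap_D Y = E$, whence $E \in \Omega$ by (c). Thus $E \in \Omega$ iff every finite subset of $E$ lies in $\Omega$, and stitching this together with the finite characterization gives the displayed identity.

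The main obstacle I anticipate is precisely this passage from the finite to the infinite case: without condition (c) the equivalence between ``$E \in \Omega$'' and the conflict-free description (which is compact, being witnessed by subsets of size $\leq 2$) would break down for infinite $E$, and the representation would fail. I would also verify the degenerate situations separately to confirm the correspondence between $\Gamma_\Omega$ and the attack structure is exact in every case: self-loops arising from singletons of $\Gamma_\Omega$, and the limiting case $\Gamma_\Omega = \emptyset$, which forces $\rightarrow = \emptyset$ and hence $\Omega = \pw{\A}$.
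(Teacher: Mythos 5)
Your proposal is correct and follows essentially the same route as the paper's proof: the same citations for necessity (Lemma \ref{lemma:properties_cf_def}($d$), Theorem \ref{theorem:close under meet_fundamental semantics}), the same attack relation built from pairs and singletons of $\Gamma_\Omega$, and the same two ingredients for sufficiency, namely extraction of a minimal forbidden finite set and an ultrafilter-based compactness step. The only difference is cosmetic: you route the finite-to-infinite passage through Observation \ref{observation:RMMU_continuity} and organize sufficiency as a single identity, whereas the paper argues the two inclusions directly and invokes the argument of Theorem \ref{theorem:epsilon_extensible}; the underlying ultrafilter construction is identical.
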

\begin{proof}
	($\Rightarrow$) It is obvious that $\Omega \neq \emptyset$ due to $\emptyset \in \varepsilon_{cf}^1(F)$. The clauses ($b$) and ($c$) immediately follow from Lemma \ref{lemma:properties_cf_def}($d$) and Theorem \ref{theorem:close under meet_fundamental semantics}, respectively. Next we prove the clause ($d$). Let $Y \in \Gamma_\Omega$. Then $Y \notin \Omega = \varepsilon_{cf}^1(F)$, and hence $Y \nsubseteq N_1(Y)$. Thus, $a \rightarrow b$ in $F$ for some $a, b \in Y$. Since $\varepsilon_{cf}^1(F)$ = $\Omega$, $\set{a, b} \nsubseteq X$ for each $X \in \Omega$. Then, due to the minimality of $Y$, we get $Y = \set{a, b}$, and hence $0 < |Y| \leq 2$. Clearly, the size of $Y$ depends on whether it holds that $a = b$.
	\par
	($\Leftarrow$) Put $F_\Omega \triangleq \tuple{\A, \rightarrow}$ with
	\begin{eqnarray*}
		\rightarrow \triangleq \set{\tuple{a, b} \mid a, b \in X \text{~for some~} X \in \Gamma_\Omega \text{~and~} a \neq b} \cup \set{\tuple{a, a} \mid a \in \A \text{~and~} \set{a} \in \Gamma_\Omega}.
	\end{eqnarray*}
	Let $E \subseteq \A$ and $E \notin \Omega$. Since $\Omega$ is closed under reduced meets modulo any ultrafilter, similar to Theorem \ref{theorem:epsilon_extensible}, it is easy to see that there exists a finite subset of $E$, say $E_0$, such that $E_0 \notin \Omega$. Since $\Omega$ is down-closed, there is no $X \in \Omega$ such that $E_0 \subseteq X$. Then the set $\set{Y \subseteq E_0 \mid \neg \exists X \in \Omega (Y \subseteq X)}$ is nonempty. Due to the finiteness of $E_0$, this set has a minimal element, say $E_1$. Clearly, $E_1 \in \Gamma_\Omega$. Since $\Omega \neq \emptyset$, we have $\emptyset \notin \Gamma_\Omega$. Thus, $E_1 \neq \emptyset$. Then, by the definition of $\rightarrow$, $a \rightarrow b$ for some $a, b \in E_1$, and hence $E \notin \varepsilon_{cf}^1(F_\Omega)$ because of $E_1 \subseteq E$. Consequently, $\varepsilon_{cf}^1(F_\Omega) \subseteq \Omega$.
	Next, we show $\Omega \subseteq \varepsilon_{cf}^1(F_\Omega)$. Let $E \subseteq \A$ and $E \notin \varepsilon_{cf}^1(F_\Omega)$. Then, there exist $a, b \in E$ such that $a \rightarrow b$. If $a \neq b$ then, by the definition of $\rightarrow$ and the condition ($d$), we get $\set{a, b} \in \Gamma_\Omega$, and hence $E \notin \Omega$ due to $\set{a, b} \subseteq E$. If $a = b$ then $\set{a} \in \Gamma_\Omega$ by the definition of $\rightarrow$, and hence $E \notin \Omega$ due to $\set{a} \subseteq E$ and $\set{a} \in \Gamma_\Omega$. Consequently, $\Omega \subseteq \varepsilon_{cf}^1(F_\Omega)$, as desired.
\end{proof}
A natural conjecture is that a representation theorem of $\varepsilon_{cf}^\ell$ for each $\ell > 0$ may be obtained from the above theorem by replacing the condition $0 < |Y| \leq 2$ by $\ell - 1 < |Y| \leq \ell + 1$. Unfortunately, it seems to be not true. In the graded situation, we need more conditions to refine the construction in the proof of Theorem \ref{theorem:Representation theorem of 1-cf}. To this end, we introduce the notion below.
\begin{definition} \label{definition:choice function}
	Let $\A \neq \emptyset$ and $\Omega \subseteq \pw{\A}$.
	\begin{itemize}
		\item[a.] A function $\ch$ from $\Gamma_\Omega$ to $\A$ is said to be a $\Gamma_\Omega$-choice function if $\ch(Y) \in Y$ for each $Y \in \Gamma_\Omega$. The function $\ch$ is said to be well-founded if there is no $Y_i \in \Gamma_\Omega (i < \omega)$ such that $\ch(Y_{i + 1}) \in Y_i$ and $\ch(Y_{i}) \neq \ch(Y_{i + 1})$ for each $i < \omega$. The function $\ch$ is said to be finitary, if for each $a \in \A$, there are only finitely many $Y \in \Gamma_\Omega$ such that $\ch(Y) = a$.
		\item[b.] The set $\Omega$ is said to be well-organized if either $\Gamma_\Omega = \emptyset$ or there exists a $\Gamma_\Omega$-choice function $\ch$ such that, for each $Y \in \Gamma_\Omega$ and $Z \subseteq \bigcup \set{X \in \Gamma_\Omega \mid \ch(Y) = \ch(X)}$, if $\ch(Y) \in Z$ and $|Z| \geq |Y|$ then $Z \notin \Omega$.
		\item[c.] The set $\Omega$ is said to be well-organized with a well-founded (or, finitary) $\Gamma_\Omega$-choice if there exists a well-founded (finitary, resp.) $\Gamma_\Omega$-choice function $\ch$ satisfying the requirement in ($b$) whenever $\Gamma_\Omega \neq \emptyset$.
	\end{itemize}
\end{definition}
\begin{theorem}[Representation Theorem (I) of $\varepsilon_{cf}^\ell$] \label{theorem:Representation theorem I of l-cf}
	Let $\A \neq \emptyset$ and $\Omega \subseteq \pw{\A}$. There exists an AAF $F = \tuple{\A, \rightarrow}$ such that $\varepsilon_{cf}^\ell(F) = \Omega$ iff $\Omega$ satisfies the following conditions:
	\begin{itemize}
		\item[a.] $\Omega \neq \emptyset$.
		\item[b.] $\Omega$ is down-closed.
		\item[c.] $\Omega$ is closed under reduced meets modulo any ultrafilter.
		\item[d.] $\ell - 1 < |Y| \leq \ell + 1$ for each $Y \in \Gamma_\Omega$.
		\item[e.] $\Omega$ is well-organized.
	\end{itemize}
\end{theorem}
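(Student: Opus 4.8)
The plan is to follow and sharpen the proof of Theorem~\ref{theorem:Representation theorem of 1-cf}, replacing its fixed edge set by one steered through a well-organized $\Gamma_\Omega$-choice function, with condition (e) carrying the graded content.

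For the necessity direction ($\Rightarrow$), assume $\varepsilon_{cf}^\ell(F)=\Omega$. Conditions (a)--(c) are immediate: $\emptyset\in\varepsilon_{cf}^\ell(F)$ gives (a), Lemma~\ref{lemma:properties_cf_def}(d) gives (b), and Theorem~\ref{theorem:close under meet_fundamental semantics} gives (c). For (d) and (e) I would first record the shape of a minimal conflict: if $Y\in\Gamma_\Omega$ then $Y\notin\Omega$, so some $a\in Y$ fails $a\in N_\ell(Y)$, i.e.\ $a$ is attacked inside $Y$ by at least $\ell$ arguments; the set consisting of $a$ together with $\ell$ of these attackers is already non-extensible (by down-closedness), so minimality forces $Y$ to equal it. Thus $|Y|=\ell$ when $a$ carries a self-loop and $|Y|=\ell+1$ otherwise, which is exactly (d). Setting $\ch(Y)\triangleq a$ (an over-attacked argument of $Y$) yields a $\Gamma_\Omega$-choice function; to verify (e), note that every $X$ with $\ch(X)=a$ satisfies $X\subseteq\set{a}\cup a^-$, hence so does any $Z\subseteq\bigcup\set{X\in\Gamma_\Omega\mid\ch(X)=a}$. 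If moreover $a\in Z$ and $|Z|\geq|Y|$, then a short count (split on whether $a$ self-attacks) shows $a$ is attacked by at least $\ell$ members of $Z$, whence $Z\notin\varepsilon_{cf}^\ell(F)=\Omega$, as required.

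For sufficiency ($\Leftarrow$), assume (a)--(e). If $\Gamma_\Omega=\emptyset$ the compactness argument below forces $\Omega=\pw{\A}$, and the attack-free AAF works. Otherwise fix a well-organized $\Gamma_\Omega$-choice function $\ch$ and define $F_\Omega\triangleq\tuple{\A,\rightarrow}$ by
\[
\rightarrow\triangleq\set{\tuple{b,\ch(Y)}\mid Y\in\Gamma_\Omega,\ b\in Y,\ b\neq\ch(Y)}\cup\set{\tuple{a,a}\mid a=\ch(Y)\text{ for some }Y\in\Gamma_\Omega\text{ with }|Y|=\ell}.
\]
The intent is that each $Y\in\Gamma_\Omega$ becomes a conflict located at its centre $\ch(Y)$: the $|Y|-1$ remaining members attack $\ch(Y)$, supplemented by a self-loop precisely when one attacker is missing (the $|Y|=\ell$ case). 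The bookkeeping fact I would isolate is that the attackers of a fixed $a$ all lie in $\bigcup\set{X\in\Gamma_\Omega\mid\ch(X)=a}$, and that $a$ self-attacks iff some such $X$ has $|X|=\ell$. To prove $\varepsilon_{cf}^\ell(F_\Omega)\subseteq\Omega$, take $E\notin\Omega$; using (b), (c) and Observation~\ref{observation:RMMU_continuity} (writing $E$ as a reduced meet, modulo a suitable ultrafilter, of the directed family $\wp_f(E)$) I obtain a finite $E_0\subseteq E$ with $E_0\notin\Omega$, and the non-extensible subsets of $E_0$ have a minimal member $E_1\in\Gamma_\Omega$. By construction $\ch(E_1)$ is attacked inside $E_1\subseteq E$ by at least $\ell$ arguments (using (d) and the self-loop when $|E_1|=\ell$), so $E\notin\varepsilon_{cf}^\ell(F_\Omega)$.

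The reverse inclusion $\Omega\subseteq\varepsilon_{cf}^\ell(F_\Omega)$ is where condition (e) does the real work, and it is the step I expect to be most delicate. Suppose $E\in\Omega$ but some $a\in E$ is attacked by at least $\ell$ members of $E$; put $Z\triangleq\set{a}\cup(E\cap a^-)$, so $Z\subseteq E$ and hence $Z\in\Omega$ by (b). On the other hand the bookkeeping fact gives $Z\subseteq\bigcup\set{X\in\Gamma_\Omega\mid\ch(X)=a}$ with $a\in Z$, and a count—again split on whether $a$ self-attacks, matching $|Z|\geq\ell=|Y|$ in the self-loop case and $|Z|\geq\ell+1=|Y|$ otherwise—yields $|Z|\geq|Y|$ for a suitable $Y\in\Gamma_\Omega$ centred at $a$. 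The well-organized property then forces $Z\notin\Omega$, a contradiction, so every $a\in E$ has fewer than $\ell$ attackers in $E$, i.e.\ $E\in\varepsilon_{cf}^\ell(F_\Omega)$. The main obstacle is not a deep idea but precisely the alignment of the size thresholds in (d)/(e) with the self-loop convention; the hardest part will be packaging the self-attacking and non-self-attacking cases uniformly so that the single inequality $|Z|\geq|Y|$ delivers the contradiction in both.
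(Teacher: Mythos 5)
Your proposal is correct and follows essentially the same route as the paper's proof: the identical construction of $F_\Omega$ (attacks from $Y\setminus\{\ch(Y)\}$ into $\ch(Y)$, plus a self-loop exactly when $|Y|=\ell$), the same compactness argument via closure under $\bigcap_D$ to extract a minimal witness $E_1\in\Gamma_\Omega$, and the same case split on self-attack when invoking well-organization. The only cosmetic differences are that you take $Z=\set{a}\cup(E\cap a^-)$ where the paper uses $\set{b_1,\dots,b_\ell,a}$, and you cite Observation \ref{observation:RMMU_continuity} explicitly where the paper argues ``similar to Theorem \ref{theorem:epsilon_extensible}''.
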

\begin{proof}
	($\Rightarrow$)  Similar to Theorem \ref{theorem:Representation theorem of 1-cf}, the clauses ($a$)-($d$) hold. In the following, we deal with ($e$). Assume that $\Gamma_\Omega \neq \emptyset$. A function $\ch \colon \Gamma_\Omega \to \A$ is defined as follows. For each $Y \in \Gamma_\Omega$, since $Y \notin \Omega = \varepsilon_{cf}^\ell(F)$, there exists an argument, say $a_Y$, such that $a_Y \in Y$ and $b_i \rightarrow a_Y$ for some $b_i (1 \leq i \leq \ell)$ in $Y$. We may choose arbitrarily and fix such an argument $a_Y$, and put $\ch(Y) \triangleq a_Y$. Then, by the minimality of $Y$, it isn't difficult to see that $b \rightarrow \ch(Y)$ in $F$ for each $b \in Y$ with $b \neq \ch(Y)$, moreover $\ch(Y) \rightarrow \ch(Y)$ in $F$ whenever $|Y| = \ell$. Based on this fact, it is straightforward to verify that, for each $Y \in \Gamma_\Omega$ and $Z \subseteq \bigcup \set{X \in \Gamma_\Omega \mid \ch(Y) = \ch(X)}$ with $\ch(Y) \in Z$, $|Z| \geq |Y|$ implies $Z \notin \varepsilon_{cf}^\ell(F)$, as desired.
	\par
	($\Leftarrow$) We consider only the nontrivial case that $\Gamma_\Omega \neq \emptyset$. By the condition ($e$), there exists a $\Gamma_\Omega$-choice function $\ch$ satisfying the requirement in Definition \ref{definition:choice function}($b$). For each $Y \in \Gamma_\Omega$, by the clause ($d$), either $|Y| = \ell$ or $|Y| = \ell + 1$. Put $F_\Omega \triangleq \tuple{\A, \rightarrow}$ with $\rightarrow \triangleq \bigcup_{Y \in \Gamma_\Omega} \rightarrow_{Y, \ch}$, where
	\begin{eqnarray*}
		\rightarrow_{Y, \ch} \triangleq 
		\begin{cases}
			\set{\tuple{b, \ch(Y)} \mid b \in Y \text{~and~} b \neq \ch(Y)} &\text{\textit{if}~~$|Y| = \ell + 1$}  \\ 
			\set{\tuple{b, \ch(Y)} \mid b \in Y} &\text{\textit{if}~~$|Y| = \ell$}
		\end{cases}
	\end{eqnarray*}
	In the following, we intend to show that $\Omega = \varepsilon_{cf}^\ell(F_\Omega)$.
	Let $E \subseteq \A$ and $E \notin \Omega$. By the condition ($c$), similar to Theorem \ref{theorem:epsilon_extensible}, there exists $Y \in \Gamma_\Omega$ such that $Y \subseteq E$. By the definition of $\rightarrow_{Y, \ch}$, it is easy to see that there exist distinct arguments $b_1, \cdots, b_\ell \in Y$ such that $b_i \rightarrow_{Y, \ch} \ch(Y)$. Hence, $E \notin \varepsilon_{cf}^\ell(F_\Omega)$ due to $\rightarrow_{Y, \ch} \subseteq \rightarrow$ and $Y \subseteq E$. Thus, $\varepsilon_{cf}^\ell(F_\Omega) \subseteq \Omega$.
	Let $E \subseteq \A$ and $E \notin \varepsilon_{cf}^\ell(F_\Omega)$. Thus, there exist distinct arguments $b_1, \cdots, b_\ell \in E$ such that $b_i \rightarrow a$ $(1 \leq i \leq \ell)$ for some $a \in E$. Then, by the definition of attack relation $\rightarrow$, there exists $Y_i (1 \leq i \leq \ell) \in \Gamma_\Omega$ such that, for each $1 \leq i \leq \ell$, $b_i \in Y_i$ and $\ch(Y_i) = a$. If $a \rightarrow a$ then $a = \ch(Y_a)$ for some $Y_a$ with $|Y_a| = \ell$, further, by the condition ($e$), $\set{b_1, \cdots, b_\ell, a} \notin \Omega$ due to $\set{b_1, \cdots, b_\ell, a} \subseteq \bigcup \set{X \in \Gamma_\Omega \mid \ch(X) = \ch(Y_a) = a}$ and $|\set{b_1, \cdots, b_\ell, a}| \geq \ell = |Y_a|$. If $a \nrightarrow a$ in $F_\Omega$ then $a \neq b_i$ for each $1 \leq i \leq \ell$, and hence, applying the condition ($e$) again, we also have $\set{b_1, \cdots, b_\ell, a} \notin \Omega$ due to $|\set{b_1, \cdots, b_\ell, a}| = \ell + 1 \geq |Y_i|$ for each $1 \leq i \leq \ell$. Anyway, we have $\set{b_1, \cdots, b_\ell, a} \notin \Omega$ in each case. Then, by the condition ($b$), we have $E \notin \Omega$ due to $\set{b_1, \cdots, b_\ell, a} \subseteq E$. Consequently, $\Omega \subseteq \varepsilon_{cf}^\ell(F_\Omega)$, as desired.
\end{proof}
\begin{remark}
	In the non-graded case, since a coarse construction is enough, the proof of Theorem \ref{theorem:Representation theorem of 1-cf} doesn't depend on the condition of well-organization. However, in this situation, the conditions ($b$) and ($d$) in Theorem \ref{theorem:Representation theorem of 1-cf} indeed imply that $\Omega$ is well-organized, in fact, each $\Gamma_\Omega$-choice function $\ch$ realizes the requirement in the clause ($b$) of Definition \ref{definition:choice function} whenever $\Gamma_\Omega \neq \emptyset$. For each $Y \in \Gamma_\Omega$, by ($d$), either $|Y| = 1$ or $|Y| = 2$. If $|Y| = 1$, then, for each $X \subseteq \A$ such that $|X| \geq |Y|$ and $\ch(Y) \in X$, we have $X \supseteq Y$, and hence $X \notin \Omega$ due to $Y \notin \Omega$ and ($b$). If $|Y| = 2$, then, for each $X \subseteq \bigcup \set{Z \in \Gamma_\Omega \mid \ch(Y) = \ch(Z)}$ with $|X| \geq |Y|$ and $\ch(Y) \in X$, we have $X \supseteq Y_1$ for some $Y_1 \in \Gamma_\Omega$, thus we also have $X \notin \Omega$. By the way, it is not difficult to see that Theorems \ref{theorem:Representation theorem of 1-cf} and \ref{theorem:Representation theorem I of l-cf} still hold if replacing the condition ($c$) by that $\Omega$ is closed under directed unions. The detail is left to the reader.
\end{remark}
Through strengthening the conditions ($d$) and ($e$) in the result above, we may obtain the representation theorem of $\varepsilon_{cf}^\ell$ w.r.t. well-founded AAFs. 
\begin{theorem}[Representation Theorem (II) of $\varepsilon_{cf}^\ell$] \label{theorem:Representation theorem II of l-cf}
	Let $\A \neq \emptyset$ and $\Omega \subseteq \pw{\A}$. There exists a well-founded AAF $F = \tuple{\A, \rightarrow}$ such that $\varepsilon_{cf}^\ell(F) = \Omega$ iff $\Omega$ satisfies the conditions ($a$)-($c$) in Theorem \ref{theorem:Representation theorem I of l-cf} and
	\begin{itemize}
		\item[d'.] $|Y| = \ell + 1$ for each $Y \in \Gamma_\Omega$.
		\item[e'.] $\Omega$ is well-organized with a well-founded $\Gamma_\Omega$-choice.
	\end{itemize}
\end{theorem}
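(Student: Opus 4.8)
The plan is to follow the template of Theorem~\ref{theorem:Representation theorem I of l-cf}, adapting its construction so that the resulting attack relation carries no self-loops and admits no infinite backward chain. Conditions ($a$)--($c$) are inherited verbatim: ($a$) holds since $\emptyset \in \varepsilon_{cf}^\ell(F)$, ($b$) follows from Lemma~\ref{lemma:properties_cf_def}($d$), and ($c$) follows from Theorem~\ref{theorem:close under meet_fundamental semantics} (neither requiring finitariness). Thus the whole content of the theorem lies in matching the \emph{well-foundedness} of $F$ with the two strengthened clauses ($d'$) and ($e'$), and the single device driving both directions is a translation between a backward $\rightarrow$-chain in the relevant AAF and a ``forbidden sequence'' for a $\Gamma_\Omega$-choice function in the sense of Definition~\ref{definition:choice function}($a$).

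For the direction ($\Rightarrow$), suppose $F = \tuple{\A, \rightarrow}$ is well-founded with $\varepsilon_{cf}^\ell(F) = \Omega$. First I would sharpen the argument for ($d$) in Theorem~\ref{theorem:Representation theorem I of l-cf}: for $Y \in \Gamma_\Omega$ we have $Y \notin \varepsilon_{cf}^\ell(F)$, so some $a \in Y$ is attacked by $\ell$ distinct $b_1, \dots, b_\ell \in Y$; since a self-loop would contradict well-foundedness, $a \neq b_i$ for all $i$, whence minimality of $Y$ forces $Y = \set{b_1, \dots, b_\ell, a}$ and $|Y| = \ell + 1$, giving ($d'$). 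I then define $\ch(Y) \triangleq a$ exactly as there; by minimality every element of $Y$ other than $\ch(Y)$ attacks $\ch(Y)$, and the well-organizedness requirement of Definition~\ref{definition:choice function}($b$) is verified as in Theorem~\ref{theorem:Representation theorem I of l-cf} (any $Z \subseteq \bigcup\set{X \in \Gamma_\Omega \mid \ch(X) = \ch(Y)}$ containing $\ch(Y)$ with $|Z| \geq \ell + 1$ supplies $\ell$ attackers of $\ch(Y)$, so $Z \notin \Omega$). Finally, to see that $\ch$ is \emph{well-founded}, suppose there were $Y_i \in \Gamma_\Omega$ with $\ch(Y_{i+1}) \in Y_i$ and $\ch(Y_i) \neq \ch(Y_{i+1})$ for all $i < \omega$; then $\ch(Y_{i+1}) \rightarrow \ch(Y_i)$ in $F$, producing an infinite chain $\ch(Y_0) \leftarrow \ch(Y_1) \leftarrow \cdots$ and contradicting well-foundedness of $F$. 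This yields ($e'$).

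For the direction ($\Leftarrow$), let $\ch$ be a well-founded $\Gamma_\Omega$-choice witnessing ($e'$). Since ($d'$) makes every $Y \in \Gamma_\Omega$ have size $\ell + 1$, only the first clause of $\rightarrow_{Y, \ch}$ is ever used, so the constructed $F_\Omega \triangleq \tuple{\A, \bigcup_{Y \in \Gamma_\Omega} \rightarrow_{Y, \ch}}$ has no self-loops. The identity $\varepsilon_{cf}^\ell(F_\Omega) = \Omega$ is then obtained exactly as in Theorem~\ref{theorem:Representation theorem I of l-cf}, the case analysis there collapsing to its $a \nrightarrow a$ subcase. It remains to check that $F_\Omega$ is well-founded, which is the mirror image of the step above: an infinite backward chain $a_0 \leftarrow a_1 \leftarrow \cdots$ would have each attack $a_{i+1} \rightarrow a_i$ witnessed by some $Y_i \in \Gamma_\Omega$ with $a_i = \ch(Y_i)$ and $a_{i+1} \in Y_i \setminus \set{\ch(Y_i)}$; reading the witness of the next attack gives $a_{i+1} = \ch(Y_{i+1})$, so $\ch(Y_{i+1}) \in Y_i$ and $\ch(Y_i) \neq \ch(Y_{i+1})$ for all $i$, contradicting well-foundedness of $\ch$.

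I expect the main obstacle to be stating this translation cleanly in both directions — in particular ensuring, on the ($\Leftarrow$) side, that the indexed family $\set{Y_i}$ simultaneously witnesses consecutive attacks (so that $a_{i+1} = \ch(Y_{i+1})$ really holds), and, on the ($\Rightarrow$) side, that ruling out self-loops is exactly what promotes the bound $|Y| \leq \ell + 1$ to the equality $|Y| = \ell + 1$. Once this dictionary is set up, both the derivation of ($d'$), ($e'$) and the verification that $F_\Omega$ is well-founded are short, and the remaining set-theoretic bookkeeping is identical to that already carried out for Theorem~\ref{theorem:Representation theorem I of l-cf}.
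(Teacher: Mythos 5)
Your proposal is correct and follows essentially the same route as the paper's proof: both directions reuse the construction and choice function from Theorem \ref{theorem:Representation theorem I of l-cf}, with the absence of self-loops upgrading $|Y|\leq \ell+1$ to $|Y|=\ell+1$ in ($\Rightarrow$), and the same two-way dictionary between infinite backward attack chains and forbidden sequences $\ch(Y_{i+1})\in Y_i$, $\ch(Y_i)\neq\ch(Y_{i+1})$ handling well-foundedness in both directions. Your write-up is in fact slightly more explicit than the paper's (e.g., in extracting the witnesses $Y_i$ on the ($\Leftarrow$) side), but no step differs in substance.
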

\begin{proof}
	($\Rightarrow$) Clearly, $\Gamma_\Omega = \text{Anti-}\varepsilon_{cf}^{\ell}(F)$. Since $F$ is well-founded, there is no attack relation like $a \rightarrow a$ in $F$. Thus, it is not difficult to see that $|Y| = \ell + 1$ for each $Y \in \text{Anti-}\varepsilon_{cf}^{\ell}(F)$ (i.e., $\Gamma_\Omega$). Next, we show that the $\Gamma_\Omega$-choice function $\ch$ given in the proof ($\Rightarrow$) of Theorem \ref{theorem:Representation theorem I of l-cf} is well-founded. Assume there exist $Y_i \in \text{Anti-}\varepsilon_{cf}^{\ell}(F)$ ($i < \omega$) such that $\ch(Y_{i + 1}) \in Y_i$ and $\ch(Y_{i}) \neq \ch(Y_{i + 1})$ for each $i < \omega$. Then $|Y_i| = \ell + 1$ for each $i < \omega$. Since $b \rightarrow \ch(Y_i)$ in $F$ for each $b (\neq \ch(Y_i)) \in Y_i$ and $i < \omega$, there is an infinite attack sequence $\ch(Y_0) \leftarrow \ch(Y_1) \leftarrow \cdots \leftarrow \ch(Y_n) \leftarrow \ch(Y_{n + 1}) \leftarrow \cdots$ in $F$, which contradicts the well-foundedness of $F$.
	\par
	($\Leftarrow$) It is enough to show that the AAF $F_\Omega$ constructed in the proof ($\Leftarrow$) of Theorem \ref{theorem:Representation theorem I of l-cf} is well-founded. On the contrary, assume that there exists an infinite attack sequence $a_0 \leftarrow a_1 \leftarrow \cdots \leftarrow a_n \leftarrow a_{n + 1} \leftarrow \cdots$ ($n < \omega$) in $F_\Omega$. Then, by the construction of $F_\Omega$, there exists $Y_i \in \Gamma_\Omega$ ($i < \omega$) such that, for each $i < \omega$, $a_i = \ch(Y_i)$. Moreover, by the definition of $\rightarrow_{Y, \ch}$, it is easy to see that $a_{i + 1} \in Y_i$ and $a_i \neq a_{i + 1}$ for each $i < \omega$ due to the condition ($d'$), which contradicts the condition ($e'$), as desired.
\end{proof}
Similarly, if we strengthen the condition ($e$) in Theorem \ref{theorem:Representation theorem I of l-cf} as ``$\Omega$ is well-organized with a finitary $\Gamma_\Omega$-choice", we will get a representation theorem of $\varepsilon_{cf}^\ell$ w.r.t. finitary AAFs. The detail is left to the reader.
\begin{theorem}[Representation Theorem (III) of $\varepsilon_{cf}^\ell$] \label{theorem:Representation theorem III of l-cf}
	Let $\A \neq \emptyset$ and $\Omega \subseteq \pw{\A}$. There exists a finitary and co-finitary AAF $F = \tuple{\A, \rightarrow}$ such that $\varepsilon_{cf}^\ell(F) = \Omega$ iff $\Omega$ satisfies the conditions ($a$)-($e$) in Theorem \ref{theorem:Representation theorem I of l-cf} and
	\begin{itemize}
		\item[f.] $|\set{X \in \Gamma_\Omega \mid a \in X}| < \omega$ for each $a \in \A$.
	\end{itemize}
\end{theorem}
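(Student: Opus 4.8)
The plan is to reduce everything to the Representation Theorem (I) of $\varepsilon_{cf}^\ell$ (Theorem \ref{theorem:Representation theorem I of l-cf}) and to isolate the single extra ingredient, condition ($f$), as exactly what pins down finitariness together with co-finitariness. The overall architecture is the familiar two directions of an iff, each piggy-backing on the corresponding direction of Theorem \ref{theorem:Representation theorem I of l-cf}.

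For the forward direction, suppose $F = \tuple{\A, \rightarrow}$ is finitary and co-finitary with $\varepsilon_{cf}^\ell(F) = \Omega$. Since $F$ is in particular an AAF, Theorem \ref{theorem:Representation theorem I of l-cf} ($\Rightarrow$) immediately yields conditions ($a$)--($e$), so the only real work is ($f$). Here I would first record the identification $\Gamma_\Omega = \text{Anti-}\varepsilon_{cf}^{\ell}(F)$: a member $X$ of $\Gamma_\Omega$ is a $\subseteq$-minimal finite set that is not $\ell$-conflict-free, so by minimality it carries a distinguished \emph{centre} $c \in X$ attacked by exactly $\ell$ members of $X$, whence $X \subseteq \set{c} \cup c^-$ and $\ell \leq |X| \leq \ell + 1$. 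Fixing $a \in \A$, I would then cover $\set{X \in \Gamma_\Omega \mid a \in X}$ according to the role of $a$ in $X$: either $a$ is a centre, in which case $X \subseteq \set{a} \cup a^-$ and finitariness of $F$ (i.e. $|a^-| < \omega$) leaves only finitely many choices; or $a$ is a non-centre element, so $a \in c^-$ for the centre $c$, i.e. $a \rightarrow c$, whence $c$ ranges over the set $\set{x \mid a \rightarrow x}$, which is finite by co-finitariness, and for each such $c$ the candidate sets lie in $\set{c} \cup c^-$, again finite by finitariness. The union of these finitely many finite families contains $\set{X \in \Gamma_\Omega \mid a \in X}$, giving ($f$). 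This is the one genuinely new estimate in the theorem, and it is exactly the claim deferred from the discussion preceding Example \ref{Ex:Anti-cf}.

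For the reverse direction, assume ($a$)--($e$) and ($f$). I would reuse verbatim the AAF $F_\Omega = \tuple{\A, \rightarrow}$ built in the proof ($\Leftarrow$) of Theorem \ref{theorem:Representation theorem I of l-cf} from a $\Gamma_\Omega$-choice function $\ch$ supplied by well-organization (condition ($e$)); that theorem already guarantees $\varepsilon_{cf}^\ell(F_\Omega) = \Omega$, so nothing about conflict-freeness needs to be redone. What remains is that $F_\Omega$ is finitary and co-finitary, and this follows from a single observation: since $\ch(Y) \in Y$, the inclusion $\set{Y \in \Gamma_\Omega \mid \ch(Y) = a} \subseteq \set{Y \in \Gamma_\Omega \mid a \in Y}$ holds, so ($f$) bounds $\set{Y \mid a \in Y}$ for every $a$. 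In $F_\Omega$ every attack points at some $\ch(Y)$; thus the attackers of $a$ lie in $\bigcup \set{Y \mid \ch(Y) = a}$ (finite by ($f$), each $Y$ of size $\leq \ell+1$), giving finitariness, while the targets attacked by $a$ are among $\set{\ch(Y) \mid a \in Y}$ (finite by ($f$)), giving co-finitariness.

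The main obstacle is the forward proof of ($f$): one must see that minimality of an anti-$\varepsilon_{cf}^\ell$ set forces the rigid ``centre plus attackers'' shape, and then that bounding the anti-sets through a fixed $a$ genuinely needs \emph{both} structural hypotheses — finitariness alone controls the centre case but not the non-centre case, and co-finitariness is what converts ``$a$ attacks the centre'' into a finite search over centres. Everything else is bookkeeping on top of Theorem \ref{theorem:Representation theorem I of l-cf}; in particular I would stress that ($f$) makes finitariness and co-finitariness fall out \emph{simultaneously}, which is why no separate strengthening of ($e$) (as in the finitary-only remark) is needed here.
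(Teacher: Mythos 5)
Your proposal is correct and takes essentially the same route as the paper's proof: both directions piggy-back on Theorem \ref{theorem:Representation theorem I of l-cf}, with the forward direction deriving ($f$) from the minimality-forced shape $X = Y \cup \set{b}$, $Y \in [b^-]^\ell$ (so either $a = b$ or $a \rightarrow b$, bounded by finitariness and co-finitariness), and the reverse direction showing the very same $F_\Omega$ is finitary and co-finitary because every attack in $F_\Omega$ lies inside some member of $\Gamma_\Omega$, which ($f$) together with ($d$) makes a finite neighbourhood. Your slightly more explicit centre/non-centre case split and the inclusion $\set{Y \in \Gamma_\Omega \mid \ch(Y) = a} \subseteq \set{Y \in \Gamma_\Omega \mid a \in Y}$ are just expanded versions of the paper's compact bookkeeping; there is no gap.
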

\begin{proof}
We prove it by continuing the proof of Theorem \ref{theorem:Representation theorem I of l-cf}.
	\par ($\Rightarrow$) It is enough to verify the condition ($f$). Let $a \in \A$ and $X \in \Gamma_\Omega$ (= Anti-$\varepsilon_{cf}^\ell(F)$) with $a \in X$. Then $X \notin \varepsilon_{cf}^\ell(F)$. Hence, there exists $b \in X$ such that $Y \subseteq X$ for some $Y \in [b^-]^\ell$. Due to the minimality of $X$, we have $Y \cup \set{b} = X$. Thus, either $a = b$ or $a \rightarrow b$. Hence
	\begin{eqnarray*}
		X \in S \triangleq \set{Y \cup \set{b} \mid (a = b \text{~or~} a \rightarrow b) \text{~and~} Y \in [b^-]^\ell}.
	\end{eqnarray*}
	Since $F$ is both finitary and co-finitary, the set $S$ is finite. Hence the condition ($f$) holds.
	\par ($\Leftarrow$) It suffices to show the AAF $F_\Omega$ constructed in the proof of Theorem \ref{theorem:Representation theorem I of l-cf} is finitary and co-finitary. By the construction of $F_\Omega$, for any $a, b \in \A$, $a \rightarrow b$ in $F_\Omega$ only if $\set{a, b} \subseteq Y$ for some $Y \in \Gamma_\Omega$. Since each set in $\Gamma_\Omega$ is finite (due to the condition ($d$)), by the condition ($f$), it is not difficult to see that, for each $a \in \A$, the set $\set{b \in \A \mid a \rightarrow b \text{~or~} b \rightarrow a \text{~in~} F_\Omega}$ is finite. Hence, $F_\Omega$ is both finitary and co-finitary.
\end{proof}
By the proof of Theorem \ref{theorem:Representation theorem I of l-cf}, it is easy to see that the operator $\mathcal{C}_{cf}^\ell$ defined below is $\varepsilon_{cf}^\ell$-safe, for each AAF $F = \tuple{\A, \rightarrow}$, $\mathcal{C}_{cf}^\ell(F) \triangleq \tuple{\A, \rightarrow'}$ with
\begin{eqnarray*}
	\rightarrow' \triangleq
	\begin{cases}
		\emptyset &\text{\textit{if}~~$\text{Anti-}\varepsilon_{cf}^{\ell}(F) = \emptyset$}  \\ 
		\bigcup_{Y \in \text{Anti-}\varepsilon_{cf}^{\ell}(F)} \rightarrow_{Y, \ch} &\text{\textit{otherwise}}
	\end{cases}
\end{eqnarray*}
where $\ch$ is an arbitrary but fixed choice function of $\text{Anti-}\varepsilon_{cf}^{\ell}(F)$ satisfying the requirement in Definition \ref{definition:choice function}($b$). Notice that, by Theorem \ref{theorem:Representation theorem I of l-cf}, such a choice function always exists for each $F$ with $\text{Anti-}\varepsilon_{cf}^{\ell}(F) \neq \emptyset$. For example, consider the AAF in Example \ref{Ex: safe operator} (i.e., AAF in Figure \ref{figure:counterexample}) with $\ell = 3$ and the choice function $\ch$ defined as $\ch(\set{a, c, f, d}) = f$, then $\mathcal{C}_{cf}^\ell(F)$ is given graphically below.
\begin{figure}[H]
	\centering
	\vspace{-0.2cm}
	\includegraphics[width=0.5\linewidth]{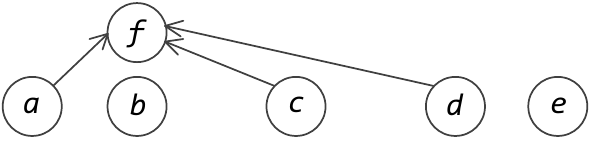}
	\label{figure:Choice function}
\end{figure}
By Theorem \ref{theorem:Representation theorem I of l-cf}, for a given set $\Omega$, there exists at most finitely many $\ell$ so that $\Omega = \varepsilon_{cf}^\ell(F)$ for some $F$ in the nontrivial case. Formally, we have
\begin{corollary}
	Let $\A \neq \emptyset$ and $\Omega \subseteq \pw{\A}$, and let $k \triangleq \textit{min}\set{|Y| \mid Y \in \Gamma_\Omega}$ whenever $\Gamma_\Omega \neq \emptyset$, and $\rho(\Omega) \triangleq \set{\ell \mid \Omega = \varepsilon_{cf}^\ell(F) \text{~for some AAF~} F = \tuple{\A, \rightarrow}}$. If $\Gamma_\Omega = \emptyset$ (equivalently, $A \in \Omega$) then $\rho(\Omega) = \omega - \set{0}$, else
	\begin{itemize}
			\item[a.] $\rho(\Omega) = \set{k, k - 1}$ if $\rho(\Omega) \neq \emptyset$ and $|Y| = k > 1$ for each $Y \in \Gamma_\Omega$,
			\item[b.] $\rho(\Omega) = \set{1}$ if $\rho(\Omega) \neq \emptyset$ and $|Y| = 1$ for each $Y \in \Gamma_\Omega$,
			\item[c.] $\rho(\Omega) = \set{k}$ if $\rho(\Omega) \neq \emptyset$ and $\set{||Y_1| - |Y_2|| \mid Y_1, Y_2 \in \Gamma_\Omega} = \set{0, 1}$ and
			\item[d.] $\rho(\Omega) = \emptyset$ if $||Y_1| - |Y_2|| \geq 2$ for some $Y_1, Y_2 \in \Gamma_\Omega$.
		\end{itemize}
\end{corollary}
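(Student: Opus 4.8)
The plan is to read off everything from the Representation Theorem (I) of $\varepsilon_{cf}^\ell$ (Theorem \ref{theorem:Representation theorem I of l-cf}): for a fixed $\ell$ we have $\ell \in \rho(\Omega)$ if and only if $\Omega$ meets the five conditions (a)--(e) of that theorem \emph{with parameter $\ell$}. Thus the whole corollary reduces to determining, as $\ell$ ranges over $\omega - \set{0}$, for which $\ell$ those conditions hold simultaneously. The decisive first step is to sort the five conditions by their dependence on $\ell$. Conditions (a) ($\Omega \neq \emptyset$), (b) (down-closedness) and (c) (closure under reduced meets modulo any ultrafilter) plainly do not mention $\ell$; and, crucially, neither does condition (e), since well-organization (Definition \ref{definition:choice function}(b)) is phrased entirely in terms of the cardinalities $|Y|$, $|Z|$ and the comparison $|Z| \geq |Y|$, with $\ell$ never appearing. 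Only condition (d), namely $\ell - 1 < |Y| \leq \ell + 1$ for each $Y \in \Gamma_\Omega$ --- equivalently $|Y| \in \set{\ell, \ell+1}$ --- is $\ell$-sensitive.

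Writing $\Phi(\Omega)$ for the conjunction of the $\ell$-independent conditions (a), (b), (c), (e), this yields the clean dichotomy
\[
\rho(\Omega) =
\begin{cases}
D(\Omega) & \text{if } \Phi(\Omega) \text{ holds},\\
\emptyset & \text{otherwise},
\end{cases}
\qquad
D(\Omega) \triangleq \set{\ell \geq 1 \mid |Y| \in \set{\ell, \ell+1} \text{ for each } Y \in \Gamma_\Omega}.
\]
This is precisely why the subcases (a)--(c) of the corollary carry the hypothesis $\rho(\Omega) \neq \emptyset$: once the size pattern of $\Gamma_\Omega$ makes $D(\Omega)$ nonempty, the extra assumption $\rho(\Omega) \neq \emptyset$ is exactly what forces $\Phi(\Omega)$, so that $\rho(\Omega)$ equals $D(\Omega)$ rather than $\emptyset$; in subcase (d) the size pattern alone already empties $D(\Omega)$, so $\rho(\Omega) = \emptyset$ unconditionally.

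It then remains to compute $D(\Omega)$ from the set of sizes $S \triangleq \set{|Y| \mid Y \in \Gamma_\Omega}$, noting first that $\Omega \neq \emptyset$ forces $\emptyset \notin \Gamma_\Omega$, so every element of $S$ is at least $1$. A window $\set{\ell, \ell+1}$ contains $S$ iff $S$ spans at most two consecutive integers, so I would split on $\max S - \min S$. If all $Y \in \Gamma_\Omega$ share the common size $k$ (so $S = \set{k}$), then $\set{\ell,\ell+1} \supseteq S$ iff $\ell \in \set{k-1, k}$, giving $D(\Omega) = \set{k-1, k}$ when $k > 1$ (subcase (a)) and $D(\Omega) = \set{1}$ when $k = 1$, the value $\ell = 0$ being excluded (subcase (b)); if $S$ consists of two consecutive sizes, equivalently $\set{||Y_1| - |Y_2|| \mid Y_1, Y_2 \in \Gamma_\Omega} = \set{0,1}$, the window must coincide with $S$, forcing $\ell = k = \min S$ and $D(\Omega) = \set{k}$ (subcase (c)); and if some pair differs by at least $2$ then no window works and $D(\Omega) = \emptyset$ (subcase (d)). The remaining case $\Gamma_\Omega = \emptyset$ makes (d) and (e) vacuous and $D(\Omega) = \omega - \set{0}$, with $\Omega \neq \emptyset$ automatic; under closure condition (c) the compactness of extensibility (Theorem \ref{theorem:epsilon_extensible}) also delivers the stated equivalence $\Gamma_\Omega = \emptyset \Leftrightarrow \A \in \Omega$.

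I expect the main obstacle to be boundary bookkeeping rather than any deep argument, since Theorem \ref{theorem:Representation theorem I of l-cf} does the real work. The points demanding care are: (i) verifying rigorously that well-organization is genuinely independent of $\ell$, so that it enters only through $\Phi(\Omega)$ and never further thins $D(\Omega)$ --- this is the linchpin of the whole reduction; (ii) the positivity constraint $\ell \geq 1$, which is what collapses $\set{k-1,k}$ to $\set{1}$ at $k = 1$ and separates subcase (a) from (b); and (iii) the first case, where one must confirm that $\Gamma_\Omega = \emptyset$ together with the $\ell$-independent conditions makes $\Omega$ representable as $\varepsilon_{cf}^\ell$ for \emph{every} $\ell$, and that the equivalence with $\A \in \Omega$ rests on closure under $\bigcap_D$ as indicated above.
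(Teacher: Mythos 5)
Your proposal is correct and takes essentially the same approach as the paper: the paper offers no separate argument but presents this corollary as an immediate consequence of Theorem \ref{theorem:Representation theorem I of l-cf}, which is precisely the reduction you carry out by isolating condition ($d$) as the only $\ell$-sensitive requirement and performing the window arithmetic $|Y| \in \set{\ell, \ell+1}$ on the sizes occurring in $\Gamma_\Omega$. Your explicit caveat that the case $\Gamma_\Omega = \emptyset$ still requires the $\ell$-independent conditions ($b$) and ($c$) (down-closedness and closure under $\bigcap_D$) before one may conclude $\rho(\Omega) = \omega - \set{0}$ is in fact more careful than the paper's own unqualified statement of that case.
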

In the next section, more applications of the operator reduced meet modulo an ultrafilter will be given. All these indicate that this operator is useful for studying theoretically on AAF.
\section{Interval semantics} \label{Sec: Graded parametrized semantics}
So far, in addition to a variety of fundamental semantics, e.g., $\varepsilon_{cf}^{\ell}$, $\varepsilon_{ad}^{\ell mn}$, $\varepsilon_{co}^{\ell mn}$ and $\varepsilon_{stb}^{\ell mn}$, we have considered their derived ones that are defined in terms of the maximality or minimality, e.g., $\varepsilon_{na}^{\ell}$, $\varepsilon_{pr,\textit{Dung}}^{\ell mn}$, $\varepsilon_{pr}^{\ell mn}$, $\varepsilon_{gr}^{\ell mn}$, $\varepsilon_{stg}^{\ell}$ and $\varepsilon_{ss}^{\ell mn \eta}$, etc. There exists another kind of derived semantics called parameterized ones \cite{Dunne13ParametricSemantics}, whose definitions are based on intervals. This section will preliminarily discuss interval semantics, a generalization of parameterized ones.
\begin{definition}[Interval semantics] \label{definition: interval semantics}
	Let $\varepsilon_{\sigma_l}$, $\varepsilon_{\sigma}$ and $\varepsilon_{\sigma_r}$ be three extension-based semantics. The $(\sigma_l, \sigma, \sigma_r)$-interval semantics, denoted by $\varepsilon_{\sigma_l, \sigma, \sigma_r}$, is defined as, for any AAF $F$,	
	\begin{eqnarray*}
		\varepsilon_{\sigma_l,\sigma, \sigma_r}(F) \triangleq \set{ E \in \varepsilon_{\sigma}(F) \mid \bigcap \varepsilon_{\sigma_l}(F) \subseteq E \subseteq \bigcap \varepsilon_{\sigma_r}(F)}.
    \end{eqnarray*}
	We use the notation $\varepsilon_{\textit{max}, (\sigma_l,\sigma,\sigma_r)}$ to denote the semantics induced by the maximality based on $\varepsilon_{\sigma_l,\sigma, \sigma_r}$, that is,
	\begin{eqnarray*}
		\varepsilon_{\textit{max}, (\sigma_l,\sigma,\sigma_r)}(F) \triangleq \set{ E \mid E \text{~is maximal in~} \varepsilon_{\sigma_l, \sigma, \sigma_r}(F)} \text{~for each AAF~} F.
	\end{eqnarray*}
\end{definition}
First, we intend to explore the properties of abstract interval semantics $\varepsilon_{\sigma_l,\sigma, \sigma_r}$. As more applications of the operator $\bigcap_D$, we will focus on $\varepsilon_{\sigma_l,\sigma, \sigma_r}$ such that $\varepsilon_{\sigma}$ is closed under this operator. A simple but crucial observation is that $\varepsilon_{\sigma_l,\sigma, \sigma_r}$ inherits the closeness of $\varepsilon_{\sigma}$ under the operator $\bigcap_D$. Formally, we have
\begin{theorem} \label{theorem:RMMU_interval}
	Let $\varepsilon_{\sigma_l}$, $\varepsilon_{\sigma}$ and $\varepsilon_{\sigma_r}$ be three extension-based semantics. For any AAF $F$, if $\varepsilon_{\sigma}(F)$ is closed under reduced meets modulo any ultrafilter, then so is $\varepsilon_{\sigma_l,\sigma, \sigma_r}(F)$. Hence, $\varepsilon_{\sigma_l,\sigma, \sigma_r}(F)$ is closed under reduced meets modulo any ultrafilter for each $\varepsilon_{\sigma} \in \set{\varepsilon_{cf}^{\ell}, \varepsilon_{ad}^{\ell mn}, \varepsilon_{co}^{\ell mn}, \varepsilon_{stb}^{\ell mn}}$ whenever $F$ is finitary\footnote{For $\varepsilon_{cf}^{\ell}$, it doesn't depend on the assumption that $F$ is finitary.}.
\end{theorem}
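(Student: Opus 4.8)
The plan is to exploit the fact that the interval semantics merely refines $\varepsilon_\sigma$ by a pair of \emph{fixed} bounds. Writing $L \triangleq \bigcap \varepsilon_{\sigma_l}(F)$ and $R \triangleq \bigcap \varepsilon_{\sigma_r}(F)$, Definition \ref{definition: interval semantics} says that $E \in \varepsilon_{\sigma_l,\sigma,\sigma_r}(F)$ iff $E \in \varepsilon_\sigma(F)$ and $L \subseteq E \subseteq R$, where $L$ and $R$ do not depend on $E$. Thus to prove closure under reduced meets I would take any nonempty index set $I$, any ultrafilter $D$ over $I$, and any family $\set{E_i}_{i \in I} \subseteq \varepsilon_{\sigma_l,\sigma,\sigma_r}(F)$, and verify the two defining conditions for $\bigcap_D E_i$ separately: that it lies in $\varepsilon_\sigma(F)$, and that it remains sandwiched between $L$ and $R$.

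The membership $\bigcap_D E_i \in \varepsilon_\sigma(F)$ is immediate, since every $E_i$ lies in $\varepsilon_\sigma(F)$ and the latter is assumed closed under reduced meets modulo any ultrafilter. For the sandwiching, the first step I would record is that a constant family reproduces itself under $\bigcap_D$: by Definition \ref{definition:reduced meet}, the constant family $X_i \triangleq L$ has $\bigcap_D X_i = L$ (because $\hat{a} = I \in D$ when $a \in L$ and $\hat{a} = \emptyset \notin D$ otherwise), and likewise the constant family equal to $R$ reduces to $R$. Next, since each $E_i$ satisfies $L \subseteq E_i \subseteq R$, both $\set{i \in I \mid L \subseteq E_i}$ and $\set{i \in I \mid E_i \subseteq R}$ equal $I$ and hence belong to $D$; applying Lemma \ref{lemma:RMMU_subset} to the pairs $(L, E_i)$ and $(E_i, R)$ then yields $L = \bigcap_D L \subseteq \bigcap_D E_i$ and $\bigcap_D E_i \subseteq \bigcap_D R = R$. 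This completes the abstract statement.

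For the concluding assertion, I would simply instantiate the abstract part using Theorem \ref{theorem:close under meet_fundamental semantics}, which guarantees that each of $\varepsilon_{cf}^{\ell}$, $\varepsilon_{ad}^{\ell mn}$, $\varepsilon_{co}^{\ell mn}$ and $\varepsilon_{stb}^{\ell mn}$ is closed under reduced meets modulo any ultrafilter with respect to the class of finitary AAFs (and $\varepsilon_{cf}^{\ell}$ without even the finitary hypothesis), so choosing $\varepsilon_\sigma$ among these gives the claim. I do not anticipate a genuine obstacle here, as the argument is short. The only point deserving care is not to conflate $\bigcap_D E_i$ with the reduced meets of the two constant families $\set{L}_{i \in I}$ and $\set{R}_{i \in I}$; once the trivial computation that a constant family is its own reduced meet is in hand, the monotonicity Lemma \ref{lemma:RMMU_subset} does all the remaining work, and no finiteness of $F$ is needed for this abstract part at all, since finiteness enters only through Theorem \ref{theorem:close under meet_fundamental semantics} in the concrete corollary.
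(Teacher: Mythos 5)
Your proposal is correct and takes essentially the same approach as the paper: membership of $\bigcap_D E_i$ in $\varepsilon_{\sigma}(F)$ comes from the closure hypothesis, the sandwich condition between $\bigcap \varepsilon_{\sigma_l}(F)$ and $\bigcap \varepsilon_{\sigma_r}(F)$ is verified via the ultrafilter, and the concrete instances follow from Theorem \ref{theorem:close under meet_fundamental semantics}. The only cosmetic difference is that you package the sandwich check as Lemma \ref{lemma:RMMU_subset} applied to constant families, whereas the paper checks it element-wise directly from Definition \ref{definition:reduced meet} (namely $\hat{a} = I \in D$ for $a$ in the lower bound and $\hat{b} = \emptyset \notin D$ for $b$ outside the upper bound); the two computations are identical in substance.
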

\begin{proof}
	Let $I \neq \emptyset$ and $E_i \in \varepsilon_{\sigma_l,\sigma, \sigma_r}(F)$ for each $i \in I$.
	Assume that $D$ is an ultrafilter over $I$.
	Since $E_i \in \varepsilon_{\sigma_l,\sigma, \sigma_r}(F) \subseteq \varepsilon_{\sigma}(F)$ for each $i \in I$, we have $\bigcap_D E_i \in \varepsilon_{\sigma}(F)$ by the assumption that  $\varepsilon_{\sigma}(F)$ is closed under $\bigcap_D$.
	Let $b \notin \bigcap \varepsilon_{\sigma_r}(F)$.
	Then, for each $i \in I$, $b \notin E_i$ due to $E_i \subseteq \bigcap \varepsilon_{\sigma_r}(F)$.
	Thus $\hat{b} \triangleq \set{i \in I \mid b \in E_i} = \emptyset$, and hence $b \notin \bigcap_D E_i$.
	So, $\bigcap_D E_i \subseteq \bigcap \varepsilon_{\sigma_r}(F)$.
	On the other hand, since $\bigcap \varepsilon_{\sigma_l}(F) \subseteq E_i$ for each $i \in I$, we have $\hat{a} = I$ for each $a \in \bigcap \varepsilon_{\sigma_l}(F)$, and hence $\bigcap \varepsilon_{\sigma_l}(F) \subseteq \bigcap_D E_i$.
	Consequently, $\bigcap_D E_i \in \varepsilon_{\sigma_l,\sigma, \sigma_r}(F)$.
\end{proof}
By the result above and the ones in the precedent section, the following is to be expected.
\begin{corollary} \label{corollary:RM of interval semantics}
	Let $F = \tuple{\A, \rightarrow}$ be an AAF, and let $\varepsilon_{\sigma_l}$, $\varepsilon_{\sigma}$ and $\varepsilon_{\sigma_r}$ be three extension-based semantics. If $\varepsilon_{\sigma}(F)$ is closed under reduced meets modulo any ultrafilter then
	\begin{itemize}
		\item[a.] $|\varepsilon_{\sigma_l,\sigma, \sigma_r}(F)| \geq 1$ iff $|\varepsilon_{\textit{max}, (\sigma_l, \sigma,\sigma_r)}(F)| \geq 1$.
		\item[b.] $|\varepsilon_{\sigma_l,\sigma, \sigma_r}(F)| \geq 1$ iff $|\varepsilon _{rr({\sigma _l},{\sigma},{\sigma_r})}^\eta(F)| \geq 1$ whenever $F$ is finitary, where $\varepsilon _{rr({\sigma _l},{\sigma},{\sigma_r})}^\eta$ is the range related semantics induced by $\varepsilon_{\sigma_l,\sigma, \sigma_r}$.
		\item[c.] Each $E$ in $\varepsilon_{\sigma_l,\sigma, \sigma_r}(F)$ can be extended to a maximal one in $\varepsilon_{\sigma_l,\sigma, \sigma_r}(F)$.
		\item[d.] For each $X \subseteq \A$, $X$ is $\varepsilon_{\sigma_l,\sigma, \sigma_r}$-extensible iff $X$ is finitely $\varepsilon_{\sigma_l,\sigma, \sigma_r}$-extensible.
		\item[e.] For any $X \subseteq \A$ and $a \in \A$, $X \nc{\varepsilon_{\sigma_l,\sigma, \sigma_r}}{} a$ iff $X_0 \nc{\varepsilon_{\sigma_l,\sigma, \sigma_r}}{} a$ for some finite subset $X_0$ of $X$.
	\end{itemize}
	In particular, for all finitary AAFs, the clauses ($a$)-($e$) above hold for any $\varepsilon_{\sigma_l,\sigma, \sigma_r}$ with $\varepsilon_{\sigma} \in \set{\varepsilon_{cf}^{\ell}, \varepsilon_{ad}^{\ell mn}, \varepsilon_{co}^{\ell mn}, \varepsilon_{stb}^{\ell mn}}$.
\end{corollary}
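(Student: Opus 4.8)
The plan is to reduce the entire corollary to a single structural fact, namely Theorem \ref{theorem:RMMU_interval}, and then feed the resulting closure property into the metatheorems already established in this section for arbitrary semantics closed under reduced meets modulo any ultrafilter. The first step is to observe that, under the standing hypothesis that $\varepsilon_{\sigma}(F)$ is closed under $\bigcap_D$, Theorem \ref{theorem:RMMU_interval} immediately yields that $\varepsilon_{\sigma_l,\sigma,\sigma_r}(F)$ is also closed under reduced meets modulo any ultrafilter. This is the only place where the interval structure is genuinely used; once it is established, the derived semantics $\varepsilon_{\sigma_l,\sigma,\sigma_r}$ may be treated as an arbitrary extension-based semantics meeting the closure hypothesis of the preceding metatheorems, and the five clauses become instantiations.

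The second step is to dispatch the clauses by instantiating the appropriate earlier result at $\varepsilon = \varepsilon_{\sigma_l,\sigma,\sigma_r}$. For ($a$) I would apply Theorem \ref{theorem:Lindenbaum_RM}($a$): if $\varepsilon_{\sigma_l,\sigma,\sigma_r}(F) \neq \emptyset$ then it has a maximal element, i.e.\ $\varepsilon_{\textit{max},(\sigma_l,\sigma,\sigma_r)}(F) \neq \emptyset$, while the converse is immediate from $\varepsilon_{\textit{max},(\sigma_l,\sigma,\sigma_r)}(F) \subseteq \varepsilon_{\sigma_l,\sigma,\sigma_r}(F)$. Clause ($b$) is precisely Theorem \ref{theorem:close under meet and universal definability} read off for $\varepsilon_{\sigma_l,\sigma,\sigma_r}$, which is exactly why it carries the restriction to finitary $F$. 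Clause ($c$) is Theorem \ref{theorem:Lindenbaum_RM}($b$); clause ($d$) is Theorem \ref{theorem:epsilon_extensible} (compactness of extensibility); and clause ($e$) is Theorem \ref{theorem:epsilon_inference} (co-compactness of inference), each applied verbatim to $\varepsilon_{\sigma_l,\sigma,\sigma_r}$.

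For the final ``in particular'' assertion, the third step is to verify the standing hypothesis for the listed base semantics: by Theorem \ref{theorem:close under meet_fundamental semantics}, each of $\varepsilon_{cf}^{\ell}$, $\varepsilon_{ad}^{\ell mn}$, $\varepsilon_{co}^{\ell mn}$ and $\varepsilon_{stb}^{\ell mn}$ is closed under reduced meets modulo any ultrafilter w.r.t.\ finitary AAFs (and $\varepsilon_{cf}^{\ell}$ even without finitude, matching the attached footnote). Hence clauses ($a$)--($e$) apply with $\varepsilon_{\sigma}$ taken from this list, and the footnote caveat for $\varepsilon_{cf}^{\ell}$ is recorded separately.

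In truth there is no serious obstacle here: the whole content has been front-loaded into Theorem \ref{theorem:RMMU_interval} together with the metatheorems of this section, so the corollary is a clean assembly rather than a fresh argument. The only point demanding care is bookkeeping, namely tracking which clauses require $F$ to be finitary---clause ($b$), and the universal-definability half of the ``in particular'' part, since Theorems \ref{theorem:close under meet and universal definability} and \ref{theorem:close under meet_fundamental semantics} invoke finitude---versus which hold for arbitrary $F$, where clauses ($a$), ($c$)--($e$) need only closure under $\bigcap_D$. The hard part, such as it was, was already discharged in proving Theorem \ref{theorem:RMMU_interval}.
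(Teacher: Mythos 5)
Your proof is correct and takes essentially the same route as the paper: the paper's own proof is a one-line assembly that applies Theorem \ref{theorem:RMMU_interval} to transfer closure under $\bigcap_D$ to $\varepsilon_{\sigma_l,\sigma,\sigma_r}$ and then instantiates Theorems \ref{theorem:close under meet and universal definability}, \ref{theorem:close under meet_fundamental semantics}, \ref{theorem:epsilon_inference} and \ref{theorem:Lindenbaum_RM} (with Theorem \ref{theorem:epsilon_extensible} implicitly covering clause ($d$)). Your clause-by-clause dispatch, including the bookkeeping of exactly where finitarity is required, is just a more explicit write-up of the same argument.
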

\begin{proof}
	Immediately follows from Theorem \ref{theorem:RMMU_interval} and Theorems \ref{theorem:close under meet and universal definability}, \ref{theorem:close under meet_fundamental semantics}, \ref{theorem:epsilon_inference} and \ref{theorem:Lindenbaum_RM}.
\end{proof}
In a word, all clauses in Corollary \ref{corollary:RM of interval semantics} assert that $\varepsilon_{\sigma_l,\sigma, \sigma_r}$ adopts the related properties of $\varepsilon_{\sigma}$ whenever the latter is closed under reduced meets modulo any ultrafilter. Moreover, $\varepsilon_{\sigma_l,\sigma, \sigma_r}$ also inherits certain structural nature of $\varepsilon_{\sigma}$ as stated below.
\begin{theorem} \label{theorem:structure of interval semantics}
	Let $F = \tuple{\A, \rightarrow}$ be an AAF, $\varepsilon_{\sigma_l}$, $\varepsilon_{\sigma}$ and $\varepsilon_{\sigma_r}$ three extension-based semantics such that $\varepsilon_{\sigma_l,\sigma, \sigma_r}(F) \neq \emptyset$ and let $\varepsilon_{\sigma}(F)$ be closed under reduced meets modulo any ultrafilter. Then $\tuple{\varepsilon_{\sigma_l,\sigma, \sigma_r}(F), \subseteq}$ is a sub-dcpo of $\tuple{\varepsilon_{\sigma}(F), \subseteq}$.
\end{theorem}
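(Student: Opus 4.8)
The plan is to assemble the statement from the two facts already in hand, namely the dcpo result Theorem \ref{theorem:dcpo} and the closure result Theorem \ref{theorem:RMMU_interval}, together with Observation \ref{observation:RMMU_continuity}, which converts directed unions into reduced meets modulo a suitable ultrafilter. First I would note that $\varepsilon_{\sigma}(F) \neq \emptyset$, since $\varepsilon_{\sigma_l,\sigma, \sigma_r}(F) \subseteq \varepsilon_{\sigma}(F)$ and the former is nonempty by hypothesis. Hence, by Theorem \ref{theorem:dcpo} applied to $\varepsilon_{\sigma}$, the ambient poset $\tuple{\varepsilon_{\sigma}(F), \subseteq}$ is a dcpo in which $\textit{sup}\,S = \bigcup S$ for every directed $S \subseteq \varepsilon_{\sigma}(F)$.

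Second, by Theorem \ref{theorem:RMMU_interval}, the closure of $\varepsilon_{\sigma}(F)$ under reduced meets modulo any ultrafilter is inherited by $\varepsilon_{\sigma_l,\sigma, \sigma_r}(F)$. Since the latter is nonempty, Theorem \ref{theorem:dcpo} applies once more, now to $\varepsilon_{\sigma_l,\sigma, \sigma_r}$, so that $\tuple{\varepsilon_{\sigma_l,\sigma, \sigma_r}(F), \subseteq}$ is itself a dcpo whose directed suprema are again computed as unions.

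The remaining point --- that this is a \emph{sub}-dcpo, i.e.\ that directed suprema taken inside the interval agree with those taken in the ambient poset --- is the only part requiring a short argument. For any directed $S = \set{E_i \mid i \in I} \subseteq \varepsilon_{\sigma_l,\sigma, \sigma_r}(F)$, Observation \ref{observation:RMMU_continuity} supplies an ultrafilter $D$ over $I$ with $\bigcup_{i \in I} E_i = \bigcap_D E_i$; closure of $\varepsilon_{\sigma_l,\sigma, \sigma_r}(F)$ under $\bigcap_D$ then yields $\bigcup S \in \varepsilon_{\sigma_l,\sigma, \sigma_r}(F)$. Since $\bigcup S$ is simultaneously the supremum of $S$ in $\varepsilon_{\sigma}(F)$ (by the first step) and the supremum of $S$ in $\varepsilon_{\sigma_l,\sigma, \sigma_r}(F)$ (by the second step), the inclusion $\varepsilon_{\sigma_l,\sigma, \sigma_r}(F) \hookrightarrow \varepsilon_{\sigma}(F)$ preserves directed suprema, which is exactly the sub-dcpo property.

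I expect no genuine obstacle here: the content is essentially a bookkeeping reduction to Theorems \ref{theorem:dcpo} and \ref{theorem:RMMU_interval}. The only care needed is in matching the precise meaning of \emph{sub}-dcpo --- one must confirm that the supremum of a directed family, \emph{as computed in the larger structure}, actually lands inside the interval, not merely that the interval is abstractly a dcpo on its own. This is precisely what the identity $\bigcup S = \bigcap_D E_i$ combined with the inherited closure under $\bigcap_D$ delivers.
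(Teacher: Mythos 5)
Your proposal is correct and follows essentially the same route as the paper: the paper likewise combines Theorem \ref{theorem:RMMU_interval} (inheritance of closure under $\bigcap_D$) with Theorem \ref{theorem:dcpo} applied to both $\varepsilon_{\sigma}(F)$ and $\varepsilon_{\sigma_l,\sigma,\sigma_r}(F)$, and concludes that directed suprema agree because both are computed as unions. Your third step merely unpacks explicitly, via Observation \ref{observation:RMMU_continuity}, what Theorem \ref{theorem:dcpo} already packages, so there is no substantive difference.
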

\begin{proof}
	Since $\varepsilon_{\sigma_l,\sigma, \sigma_r}(F) \neq \emptyset$, so is $\varepsilon_{\sigma}(F)$. Then, by Theorem \ref{theorem:dcpo}, both $\tuple{\varepsilon_{\sigma_l,\sigma, \sigma_r}(F), \subseteq}$ and $\tuple{\varepsilon_{\sigma}(F), \subseteq}$ are dcpos. Moreover, by Theorem \ref{theorem:dcpo} again, any directed subset $S$ of $\varepsilon_{\sigma_l,\sigma, \sigma_r}(F)$ has the same supremum in these two dcpos.
\end{proof}
As mentioned at the beginning of this section, interval semantics is a  generalization of parameterized semantics. The latter is defined as follows.
\begin{definition}[Parametrized semantics] \label{definition:parametrized set}
	Let $\varepsilon_{\sigma}$ and $\varepsilon_{\sigma_r}$ be two extension-based semantics. The $(\sigma, \sigma_r)$-parametrized semantics, denoted by $\varepsilon_{\sigma,\sigma_r}$, is defined as $\varepsilon_{\sigma,\sigma_r} \triangleq \varepsilon_{\sigma_l,\sigma,\sigma_r}$ with $\varepsilon_{\sigma_l} \triangleq \lambda_F.\set{\emptyset}$, the constant function mapping each AAF $F$ to $\set{\emptyset}$.
	That is, for any AAF $F$,
	\begin{eqnarray*}
		\varepsilon_{\sigma,\sigma_r}(F) \triangleq \set{ E \in \varepsilon_{\sigma}(F) \mid E \subseteq \bigcap \varepsilon_{\sigma_r}(F)}.
	\end{eqnarray*}
\end{definition}
Similarly, we use the notation $\varepsilon_{\textit{max}, (\sigma, \sigma_r)}$ to denote the semantics induced by the maximality based on $\varepsilon_{\sigma, \sigma_r}$. Clearly, parameterized semantics has the properties asserted by Corollary \ref{corollary:RM of interval semantics} and Theorems \ref{theorem:RMMU_interval} and \ref{theorem:structure of interval semantics}. Moreover, for parameterized semantics $\varepsilon_{\sigma,\sigma_r}$, Theorem \ref{theorem:structure of interval semantics} may be strengthened as below.
\begin{theorem} \label{theorem:structure_para}
	Let $F = \tuple{\A, \rightarrow}$ be an AAF, $\varepsilon_{\sigma}$ and $\varepsilon_{\sigma_r}$ two extension-based semantics such that $\varepsilon_{\sigma,\sigma_r}(F) \neq \emptyset$ and let $\varepsilon_{\sigma}(F)$ be closed under reduced meets modulo any ultrafilter.
	\begin{itemize}
		\item[a.] If $\tuple{\varepsilon_{\sigma}(F), \subseteq}$ is a (algebraic) cpo (dcpo) then so is $\tuple{\varepsilon_{\sigma,\sigma_r}(F), \subseteq}$.
		\item[b.] If $\tuple{\varepsilon_{\sigma}(F), \subseteq}$ is a (algebraic) complete semilattice then so is $\tuple{\varepsilon_{\sigma,\sigma_r}(F), \subseteq}$.
	\end{itemize}	 
\end{theorem}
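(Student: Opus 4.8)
The plan is to transfer each order-theoretic property from $\varepsilon_{\sigma}(F)$ to $\varepsilon_{\sigma,\sigma_r}(F)$ by exploiting the fact that the latter is, inside $\varepsilon_{\sigma}(F)$, a principal down-set. Write $R \triangleq \bigcap \varepsilon_{\sigma_r}(F)$, $L \triangleq \tuple{\varepsilon_{\sigma}(F), \subseteq}$ and $P \triangleq \tuple{\varepsilon_{\sigma,\sigma_r}(F), \subseteq}$, so that by Definition \ref{definition:parametrized set} the set $P$ consists exactly of those $E \in \varepsilon_{\sigma}(F)$ with $E \subseteq R$. Two structural facts will be used throughout. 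First, $P$ is down-closed in $L$: if $E \in P$ and $E' \in \varepsilon_{\sigma}(F)$ with $E' \subseteq E$, then $E' \subseteq R$, hence $E' \in P$. Second, directed suprema in $L$ are unions (Theorem \ref{theorem:dcpo}) and are inherited by $P$: by Theorem \ref{theorem:RMMU_interval} the set $\varepsilon_{\sigma,\sigma_r}(F)$ is closed under reduced meets modulo any ultrafilter, so Theorem \ref{theorem:dcpo} already yields that $P$ is a dcpo with $\textit{sup}\,S = \bigcup S$ for every directed $S \subseteq P$, and this common value coincides with $\textit{sup}_L S$. Thus $P$ is a sub-dcpo of $L$ on which directed suprema agree.

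For clause (a), the dcpo case is the previous sentence. If $L$ is moreover a cpo, let $\bot$ be its bottom; since $P \neq \emptyset$ I would pick $E_0 \in P$, whence $\bot \subseteq E_0 \subseteq R$ and $\bot \in P$, so $\bot$ is the bottom of $P$ and $P$ is a cpo. For the algebraic case I would first record a forward compactness transfer: if $z \in K(L)$ and $z \in P$ then $z \in K(P)$, because any directed $\D \subseteq P$ with $z \subseteq \textit{sup}_P \D = \textit{sup}_L \D$ is also directed in $L$, so compactness of $z$ in $L$ gives $z \subseteq d$ for some $d \in \D$. Then, given $x \in P$, algebraicity of $L$ supplies a directed family $\set{y \in K(L) \mid y \subseteq x}$ with union $x$; each such $y$ satisfies $y \subseteq x \subseteq R$, hence lies in $P$ and in $K(P)$ by the transfer. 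This already forces $\bigcup \set{z \in K(P) \mid z \subseteq x} = x$, and a short cofinality argument gives directedness of $\set{z \in K(P) \mid z \subseteq x}$: for $z_1,z_2 \in K(P)$ below $x$, compactness in $P$ applied to the directed set $\set{y \in K(L) \mid y \subseteq x} \subseteq P$ forces $z_i \subseteq y_i$ for some approximants $y_i \subseteq x$, and any common upper bound $y_3$ of $y_1,y_2$ among those approximants lies in $K(P)$ and dominates both $z_1,z_2$. Hence $x = \textit{sup}^\uparrow \set{z \in K(P) \mid z \subseteq x}$ and $P$ is an algebraic cpo.

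For clause (b) it remains to supply infima of nonempty subsets, the bottom and algebraicity being handled as in (a). Given $\emptyset \neq S \subseteq P$, the complete-semilattice structure of $L$ provides $\textit{inf}_L S$; fixing any $s_0 \in S$ I get $\textit{inf}_L S \subseteq s_0 \subseteq R$, so $\textit{inf}_L S \in P$. Since every lower bound of $S$ lying in $P$ is a fortiori a lower bound in $L$ and therefore below $\textit{inf}_L S$, this element is also the infimum of $S$ in $P$. Combining this with the cpo (resp. algebraic cpo) conclusion of (a) shows that $P$ is a complete semilattice (resp. algebraic complete semilattice), as required.

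I expect the main obstacle to be the algebraic case of clause (a). Compactness is a poset-relative notion, so the delicate point is not merely that $L$-compact elements below $x$ stay $L$-compact, but that the \emph{whole} set of $P$-compact elements below $x$ is directed with the correct supremum. The cofinality argument above—pulling an arbitrary $P$-compact $z \subseteq x$ down to an $L$-compact approximant of $x$—is exactly what closes this gap; everything else reduces to the down-closure of $P$ and the agreement of directed suprema already secured by Theorems \ref{theorem:dcpo} and \ref{theorem:RMMU_interval}.
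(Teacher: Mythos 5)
Your proof is correct and takes essentially the same route as the paper's: the paper first invokes Theorem \ref{theorem:structure of interval semantics} (whose proof is exactly your combination of Theorems \ref{theorem:RMMU_interval} and \ref{theorem:dcpo}, since parametrized semantics is the special case of interval semantics with $\varepsilon_{\sigma_l} = \lambda_F.\set{\emptyset}$) to get the sub-dcpo property, and then transfers the bottom, the compact elements $\set{X \in K(\varepsilon_{\sigma}(F)) \mid X \subseteq E}$, and nonempty infima using precisely the down-closure of $\varepsilon_{\sigma,\sigma_r}(F)$ that you identify. The only difference is one of detail: your explicit forward compactness transfer and the cofinality argument establishing directedness of the set of $P$-compact elements below $x$ spell out what the paper compresses into the claim that these assertions are ``straightforward''.
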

\begin{proof}
	By Theorem \ref{theorem:structure of interval semantics}, $\tuple{\varepsilon_{\sigma,\sigma_r}(F), \subseteq}$ is a sub-dcpo of $\tuple{\varepsilon_{\sigma}(F), \subseteq}$.
	To complete the proof, we need to prove the assertions: (1) The bottom in $\tuple{\varepsilon_{\sigma}(F), \subseteq}$ must be in $\varepsilon_{\sigma,\sigma_r}(F)$ if it exists; (2) $\set{X \in K(\varepsilon_{\sigma}(F)) \mid X \subseteq E} \subseteq \varepsilon_{\sigma,\sigma_r}(F)$ for each $E \in \varepsilon_{\sigma,\sigma_r}(F)$; and (3) For any nonempty subset $S$ of $\varepsilon_{\sigma,\sigma_r}(F)$, the infimum of $S$ in $\tuple{\varepsilon_{\sigma}(F), \subseteq}$ is also in $\varepsilon_{\sigma,\sigma_r}(F)$ whenever it exists. Here $K(\varepsilon_{\sigma}(F))$ is the set of all compact elements in $\tuple{\varepsilon_{\sigma}(F), \subseteq}$. Under the assumption that $\varepsilon_{\sigma,\sigma_r}(F) \neq \emptyset$, all these are straightforward based on the fact that $\varepsilon_{\sigma,\sigma_r}(F)$ is down-closed, that is, $E_1 \subseteq E_2 \in \varepsilon_{\sigma,\sigma_r}(F)$ and $E_1 \in \varepsilon_{\sigma}(F)$ implies $E_1 \in \varepsilon_{\sigma,\sigma_r}(F)$.
\end{proof}
In fact, it is easy to see that any directed subset (or, nonempty subset) of $\varepsilon_{\sigma,\sigma_r}(F)$ has the same supremum (infimum, resp.) in $\tuple{\varepsilon_{\sigma,\sigma_r}(F), \subseteq}$ and $\tuple{\varepsilon_{\sigma}(F), \subseteq}$. Thus, the former is a sub-cpo (or, sub-dcpo, sub-complete semilattice) of the latter.
\par
Next we consider concrete parametrized semantics by instantiating $\varepsilon_\sigma$ and $\varepsilon_{\sigma_r}$ in $\varepsilon_{\sigma,\sigma_r}$. In particular, we will focus on $\varepsilon_{\sigma,\sigma_r}$ with $\varepsilon_{\sigma} = \varepsilon_{ad}^{\ell mn}$, which is the graded variant of the standard parametrized semantics \cite{Dunne13ParametricSemantics}. 
\begin{remark} \label{remark:parametrized semantics}
	 In the situation that $\varepsilon_{\sigma}$ is $\varepsilon_{ad}^{\ell mn}$, we also denote $\varepsilon_{\sigma,\sigma_r}$ (or, $\varepsilon_{\textit{max}, (\sigma, \sigma_r)}$) by $\varepsilon_{ad,\sigma_r}^{\ell mn}$ ($\varepsilon_{\textit{max}, (ad, \sigma_r)}^{\ell mn}$, resp.). A similar notation is adopted when $\varepsilon_{\sigma}$ is other semantics, e.g., $\varepsilon_{cf}^{\ell}$ and $\varepsilon_{co}^{\ell mn}$. If $\varepsilon_{\sigma_r}$ is $\varepsilon_{pr}^{\ell mn}$ (or, $\varepsilon_{ss}^{\ell mn \eta}$) then $\varepsilon_{\textit{max}, (ad, \sigma_r)}^{\ell mn}$ is exactly the graded variant of the ideal semantics (eager semantics, resp.) introduced in \cite{Dung07Ideal} (\cite{Caminada2007Eager}, resp.).
	Thus, in these two situations, we also use the notations $\varepsilon_{id}^{\ell mn}$ and $\varepsilon_{eg}^{\ell mn \eta}$ to denote them respectively.
\end{remark}
\begin{corollary}\label{corollary:structure_para}
	Let $F$ be a finitary AAF and $\varepsilon_{\sigma_r}$ an extension-based semantics\footnote{The clause ($a$) doesn't depend on the assumption that $F$ is finitary.}.
	\begin{itemize}
		\item[a.] $\tuple{\varepsilon_{cf, \sigma_r}^{\ell}(F), \subseteq}$ is a sub-algebraic complete semilattice of $\tuple{\varepsilon_{cf}^{\ell}(F), \subseteq}$.
		\item[b.] $\tuple{\varepsilon_{ad, \sigma_r}^{\ell mn}(F), \subseteq}$ is a sub-complete semilattice of $\tuple{\varepsilon_{ad}^{\ell mn}(F), \subseteq}$.
		\item[c.] $\tuple{\varepsilon_{co, \sigma_r}^{\ell mn}(F), \subseteq}$ is a sub-cpo of $\tuple{\varepsilon_{co}^{\ell mn}(F), \subseteq}$ whenever $\varepsilon_{co, \sigma_r}^{\ell mn}(F) \neq \emptyset$.
		\item[d.] $\tuple{\varepsilon_{co, \sigma_r}^{\ell mn}(F), \subseteq}$ is a sub-complete semilattice of $\tuple{\varepsilon_{co}^{\ell mn}(F), \subseteq}$ whenever $n \geq \ell \geq m$ and $\varepsilon_{co, \sigma_r}^{\ell mn}(F) \neq \emptyset$.
	\end{itemize}
\end{corollary}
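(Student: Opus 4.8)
The plan is to read off each clause directly from Theorem \ref{theorem:structure_para}, feeding it the appropriate structural fact about the base semantics together with the closure under reduced meets modulo any ultrafilter established in Theorem \ref{theorem:close under meet_fundamental semantics}. Throughout I will use the observation recorded immediately after Theorem \ref{theorem:structure_para}, namely that any directed (resp.\ nonempty) subset of $\varepsilon_{\sigma,\sigma_r}(F)$ has the same supremum (resp.\ infimum) computed in $\tuple{\varepsilon_{\sigma,\sigma_r}(F), \subseteq}$ and in $\tuple{\varepsilon_{\sigma}(F), \subseteq}$; this is exactly what upgrades the conclusions of Theorem \ref{theorem:structure_para} to the \emph{sub}-structure assertions demanded here.

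First I would dispose of the non-emptiness hypothesis needed to invoke Theorem \ref{theorem:structure_para}. For clauses ($a$) and ($b$) this is automatic: since $\emptyset \in \varepsilon_{cf}^{\ell}(F) \cap \varepsilon_{ad}^{\ell mn}(F)$ and $\emptyset \subseteq \bigcap \varepsilon_{\sigma_r}(F)$, both $\varepsilon_{cf,\sigma_r}^{\ell}(F)$ and $\varepsilon_{ad,\sigma_r}^{\ell mn}(F)$ contain $\emptyset$ and are therefore nonempty. For clauses ($c$) and ($d$) the non-emptiness of $\varepsilon_{co,\sigma_r}^{\ell mn}(F)$ is part of the hypothesis, and it forces $\varepsilon_{co}^{\ell mn}(F) \neq \emptyset$ as well, which is what the base-level structural results require.

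Then each clause becomes a matter of matching the base poset to the correct theorem. For ($a$), $\varepsilon_{cf}^{\ell}(F)$ is closed under the operator $\bigcap_D$ with no finitariness assumption (Theorem \ref{theorem:close under meet_fundamental semantics}, together with its footnote), and $\tuple{\varepsilon_{cf}^{\ell}(F), \subseteq}$ is an algebraic complete semilattice (Corollary \ref{corollary:structure_cf_ad}($b$)); Theorem \ref{theorem:structure_para}($b$) then yields that $\tuple{\varepsilon_{cf,\sigma_r}^{\ell}(F), \subseteq}$ is an algebraic complete semilattice, and the preservation-of-sups/infs observation makes it a sub-structure, which explains the footnote that ($a$) needs no finitariness. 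For ($b$), finitariness of $F$ gives closure of $\varepsilon_{ad}^{\ell mn}(F)$ under $\bigcap_D$ (Theorem \ref{theorem:close under meet_fundamental semantics}), while $\tuple{\varepsilon_{ad}^{\ell mn}(F), \subseteq}$ is a complete semilattice (Corollary \ref{corollary:structure_cf_ad}($c$)), so Theorem \ref{theorem:structure_para}($b$) applies again. For ($c$), closure of $\varepsilon_{co}^{\ell mn}(F)$ under $\bigcap_D$ holds for finitary $F$, and $\tuple{\varepsilon_{co}^{\ell mn}(F), \subseteq}$ is a cpo once it is nonempty (Corollary \ref{corollary:structure_co}); Theorem \ref{theorem:structure_para}($a$) then delivers the sub-cpo. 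Clause ($d$) runs identically, except that the stronger hypothesis $n \geq \ell \geq m$ upgrades the base structure from a cpo to a complete semilattice via Proposition \ref{proposition:structure_co}, so Theorem \ref{theorem:structure_para}($b$) supplies a sub-complete semilattice.

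The argument is essentially bookkeeping, and I expect no genuine obstacle beyond keeping track of which structural result is available for each base semantics. The only point meriting care is the contrast between ($c$) and ($d$): for a general finitary AAF only Corollary \ref{corollary:structure_co} is at hand, guaranteeing that $\varepsilon_{co}^{\ell mn}(F)$ is a cpo but not necessarily a complete lattice (the poset may lack a top), so ($c$) cannot be strengthened as it stands; it is precisely the constraint $n \geq \ell \geq m$ that, through Proposition \ref{proposition:structure_co}, furnishes the infima of arbitrary nonempty subsets needed for the complete-semilattice conclusion in ($d$).
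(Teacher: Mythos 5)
Your proposal is correct and follows essentially the same route as the paper's (very terse) proof: both reduce each clause to Theorem \ref{theorem:structure_para} combined with Theorem \ref{theorem:close under meet_fundamental semantics}, Corollaries \ref{corollary:structure_cf_ad} and \ref{corollary:structure_co}, and Proposition \ref{proposition:structure_co}, with the nonemptiness of $\varepsilon_{cf,\sigma_r}^{\ell}(F)$ and $\varepsilon_{ad,\sigma_r}^{\ell mn}(F)$ supplied by $\emptyset$ and the ``sub-'' upgrade supplied by the remark following Theorem \ref{theorem:structure_para}. Your write-up is merely a more explicit bookkeeping of the same citations, including the correct handling of the footnote on clause ($a$).
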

\begin{proof}
	Immediately follows from Theorems \ref{theorem:close under meet_fundamental semantics} and \ref{theorem:structure_para}, Corollaries \ref{corollary:structure_cf_ad} and \ref{corollary:structure_co}, and Proposition \ref{proposition:structure_co}.	Notice that both $\varepsilon_{cf, \sigma_r}^{\ell}(F)$ and $\varepsilon_{ad, \sigma_r}^{\ell mn}(F)$ are always nonempty because that there is at least one element $\emptyset$ in them.
\end{proof}
In the nontrivial situation that $\emptyset \neq \varepsilon_{\sigma_r}(F) \subseteq \varepsilon_{cf}^{\ell}(F)$, the clauses ($a$) and ($b$) in Corollary \ref{corollary:structure_para} can be strengthened as below.
\begin{proposition} \label{proposition:structure_para}
	Let $F$ be an AAF and $\varepsilon_{\sigma_r}$ an extension-based semantics such that $\emptyset \neq \varepsilon_{\sigma_r}(F) \subseteq \varepsilon_{cf}^{\ell}(F)$. 
	\begin{itemize}
		\item[a.] $\tuple{\varepsilon_{cf, \sigma_r}^{\ell}(F), \subseteq}$ is an algebraic complete lattice with $\textit{sup}\D = \bigcup \D$ for any $\D \subseteq \varepsilon_{cf,\sigma_r}^{\ell}(F)$ and the top $\bigcap \varepsilon_{\sigma_r}(F)$. Hence $\varepsilon_{\textit{max}, (cf, \sigma_r)}^{\ell}(F) = \set{\bigcap \varepsilon_{\sigma_r}(F)}$.
		\item[b.] $\tuple{\varepsilon_{ad, \sigma_r}^{\ell mn}(F), \subseteq}$ is a complete lattice with $\textit{sup}\D = \bigcup \D$ for any $\D \subseteq \varepsilon_{ad,\sigma_r}^{\ell mn}(F)$. Hence $\varepsilon_{\textit{max}, (ad, \sigma_r)}^{\ell mn}(F) = \set{\bigcup \varepsilon_{ad, \sigma_r}^{\ell mn}(F)}$.
	\end{itemize}
\end{proposition}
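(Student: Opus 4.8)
The plan is to reduce both clauses to the structural facts already established in Corollary \ref{corollary:structure_cf_ad}, with the nonemptiness hypothesis $\varepsilon_{\sigma_r}(F) \neq \emptyset$ together with the down-closeness of $\ell$-conflict-freeness doing the essential work. Throughout I would write $T \triangleq \bigcap \varepsilon_{\sigma_r}(F)$. The first step is to show $T \in \varepsilon_{cf}^{\ell}(F)$: since $\varepsilon_{\sigma_r}(F) \neq \emptyset$, pick any $E_0 \in \varepsilon_{\sigma_r}(F)$; then $T \subseteq E_0$ and $E_0 \in \varepsilon_{cf}^{\ell}(F)$ by the hypothesis $\varepsilon_{\sigma_r}(F) \subseteq \varepsilon_{cf}^{\ell}(F)$, so down-closeness (Lemma \ref{lemma:properties_cf_def}($d$)) gives $T \in \varepsilon_{cf}^{\ell}(F)$. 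In particular $T \in \varepsilon_{cf,\sigma_r}^{\ell}(F)$ because $T \subseteq T$.

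For clause ($a$) I would simply observe that $\varepsilon_{cf,\sigma_r}^{\ell}(F) = T^\downarrow$ in the sense of Corollary \ref{corollary:structure_cf_ad}($d$), i.e. $\set{Y \in \varepsilon_{cf}^{\ell}(F) \mid Y \subseteq T}$. Since $T \in \varepsilon_{cf}^{\ell}(F)$, that corollary directly yields that $\tuple{\varepsilon_{cf,\sigma_r}^{\ell}(F), \subseteq}$ is an algebraic complete lattice with $\textit{sup}\,\D = \bigcup \D$ for any $\D$ and with top $T$. Consequently the unique maximal element is $T$, so $\varepsilon_{\textit{max},(cf,\sigma_r)}^{\ell}(F) = \set{T}$.

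For clause ($b$) the obstruction is that $T$ need only be $\ell$-conflict-free, not $\ell mn$-admissible, so Corollary \ref{corollary:structure_cf_ad}($d$) does not apply verbatim. The key point is nevertheless that arbitrary unions of elements of $\varepsilon_{ad,\sigma_r}^{\ell mn}(F)$ stay inside it. Given any $\D \subseteq \varepsilon_{ad,\sigma_r}^{\ell mn}(F)$, union-closeness of $\textit{Def}^{mn}$ (Lemma \ref{lemma:properties_cf_def}($a$)) gives $\bigcup \D \in \textit{Def}^{mn}(F)$; and since each member of $\D$ lies in $T$, we have $\bigcup \D \subseteq T \in \varepsilon_{cf}^{\ell}(F)$, whence $\bigcup \D \in \varepsilon_{cf}^{\ell}(F)$ by down-closeness again. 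Thus $\bigcup \D \in \varepsilon_{ad}^{\ell mn}(F)$ with $\bigcup \D \subseteq T$, i.e. $\bigcup \D \in \varepsilon_{ad,\sigma_r}^{\ell mn}(F)$, and it is plainly the least upper bound. Since every subset (including $\emptyset$, whose union is the bottom $\emptyset$, and the whole set) then has a supremum, the poset is a complete lattice with $\textit{sup}\,\D = \bigcup \D$; taking $\D = \varepsilon_{ad,\sigma_r}^{\ell mn}(F)$ identifies the top, giving $\varepsilon_{\textit{max},(ad,\sigma_r)}^{\ell mn}(F) = \set{\bigcup \varepsilon_{ad,\sigma_r}^{\ell mn}(F)}$.

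The main subtlety is precisely the interplay in clause ($b$): conflict-freeness fails to be preserved under arbitrary unions in general, so the argument hinges on confining every set beneath the conflict-free ceiling $T$ and invoking down-closeness, which upgrades the union-closeness of self-defense into union-closeness of admissibility within the interval. I expect no delicate computation beyond this observation; the remainder is bookkeeping with Lemma \ref{lemma:properties_cf_def} and Corollary \ref{corollary:structure_cf_ad}.
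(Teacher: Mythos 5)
Your proposal is correct and follows essentially the same route as the paper: clause ($a$) is exactly the paper's reduction to Corollary \ref{corollary:structure_cf_ad}($d$) after using nonemptiness of $\varepsilon_{\sigma_r}(F)$ and down-closeness (Lemma \ref{lemma:properties_cf_def}($d$)) to place $\bigcap \varepsilon_{\sigma_r}(F)$ in $\varepsilon_{cf}^{\ell}(F)$, and clause ($b$) is the paper's argument verbatim in substance — union-closeness of $\textit{Def}^{mn}$ plus down-closeness beneath the conflict-free ceiling $\bigcap \varepsilon_{\sigma_r}(F)$ shows arbitrary unions stay in $\varepsilon_{ad,\sigma_r}^{\ell mn}(F)$, whence the complete-lattice structure. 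No gaps.
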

\begin{proof}
	($a$) Immediately follows from Corollary \ref{corollary:structure_cf_ad}($d$) and Lemma \ref{lemma:properties_cf_def}($d$).
	\par ($b$) Clearly, $\varepsilon_{ad, \sigma_r}^{\ell mn}(F)$ is nonempty due to $\emptyset \in \varepsilon_{ad,\sigma_r}^{\ell mn}(F)$.
	Let $\D \subseteq \varepsilon_{ad,\sigma_r}^{\ell mn}(F)$. 
	Then 
	\begin{eqnarray}
		\bigcup \D \subseteq \bigcap \varepsilon_{\sigma_r}(F). \label{formulas:para}
	\end{eqnarray}
	Since $\emptyset \neq \varepsilon_{\sigma_r}(F) \subseteq \varepsilon_{cf}^{\ell}(F)$, by Lemma \ref{lemma:properties_cf_def}($d$), we have $\bigcup \D \in \varepsilon_{cf}^{\ell}(F)$.
	Moreover, by Lemma \ref{lemma:properties_cf_def}($a$), $\bigcup \D \in \textit{Def}^{mn}(F)$ due to $\D \subseteq \varepsilon_{ad}^{\ell mn}(F)$.
	Thus, $\bigcup \D \in \varepsilon_{ad}^{\ell mn}(F)$, and hence $\bigcup \D \in \varepsilon_{ad,\sigma_r}^{\ell mn}(F)$ due to (\ref{formulas:para}).
	Consequently, $\textit{sup}\D = \bigcup \D$.
\end{proof}
We end this section by considering graded variants of ideal semantics and eager semantics, which are derived from parameterized ones in terms of the maximality, see, Remark \ref{remark:parametrized semantics}. As shown in \cite{Baroni11Introduction}, ideal semantics may be regarded as a parametric model of acceptance for non-graded argumentation, which is inspired by an analogous notion of ideal skeptical semantics for extended logic programs \cite{Alferes1993Ideal}. Eager semantics, proposed in \cite{Caminada2007Eager}, is a more credulous semantics. The parametric properties of these semantics are given in \cite{Dunne13ParametricSemantics} to support skeptical reasoning within abstract argumentation settings. In the situation that $\emptyset \neq \varepsilon_{\sigma_r}(F) \subseteq \varepsilon_{ad}^{\ell mn}(F)$, by Proposition \ref{proposition:structure_para} and Corollary \ref{corollary:structure_cf_ad}($c$), it is obvious that the unique extension in $\varepsilon_{\textit{max}, (ad, \sigma_r)}^{\ell mn}(F)$ is the infimum of the set $\varepsilon_{\sigma_r}(F)$ in the complete semilattice $\tuple{\varepsilon_{ad}^{\ell mn}(F), \subseteq}$. In particular, we have
\begin{corollary} \label{corollary:structure_id_eg}
	Let $F$ be an AAF.
	\begin{itemize}
		\item[a.] $\varepsilon_{id}^{\ell mn}(F) = \set{\textit{inf}(\varepsilon_{pr}^{\ell mn}(F))}$ whenever $\ell \geq m$, $n \geq m$ and $F$ is finitary.
		\item[b.] $\varepsilon_{id}^{\ell mn}(F) = \set{\textit{inf}(\varepsilon_{pr}^{\ell mn}(F))}$ whenever $n \geq \ell \geq m$. 
		\item[c.] $\varepsilon_{eg}^{\ell mn \eta}(F) = \set{\textit{inf}(\varepsilon_{ss}^{\ell mn \eta}(F))}$ whenever $\ell \geq m$, $n \geq m$ and $F$ is finitary. 
	\end{itemize}
	where all infimums are considered in the complete semilattice $\tuple{\varepsilon_{ad}^{\ell mn}(F), \subseteq}$.
\end{corollary}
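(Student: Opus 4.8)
The plan is to reduce all three clauses to the structural observation stated immediately before the corollary, namely that whenever $\emptyset \neq \varepsilon_{\sigma_r}(F) \subseteq \varepsilon_{ad}^{\ell mn}(F)$ the set $\varepsilon_{\textit{max}, (ad, \sigma_r)}^{\ell mn}(F)$ is the singleton $\set{\textit{inf}(\varepsilon_{\sigma_r}(F))}$, the infimum being taken in the complete semilattice $\tuple{\varepsilon_{ad}^{\ell mn}(F), \subseteq}$ (this rests on Proposition \ref{proposition:structure_para} together with Corollary \ref{corollary:structure_cf_ad}($c$)). By Remark \ref{remark:parametrized semantics} we have $\varepsilon_{id}^{\ell mn} = \varepsilon_{\textit{max}, (ad, pr)}^{\ell mn}$ and $\varepsilon_{eg}^{\ell mn \eta} = \varepsilon_{\textit{max}, (ad, ss)}^{\ell mn \eta}$, so in each clause it suffices to check the single hypothesis $\emptyset \neq \varepsilon_{\sigma_r}(F) \subseteq \varepsilon_{ad}^{\ell mn}(F)$ for the appropriate $\sigma_r$ under the parameter constraints imposed.

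The inclusion part of this hypothesis is immediate and uniform across all three clauses. By Definition \ref{definition:graded_extensions} every $\ell mn$-preferred extension is an $\ell mn$-complete extension, hence an $\ell mn$-admissible set, giving $\varepsilon_{pr}^{\ell mn}(F) \subseteq \varepsilon_{co}^{\ell mn}(F) \subseteq \varepsilon_{ad}^{\ell mn}(F)$; likewise, by Definition \ref{definition:range related extensions} every $\ell mn$-$\eta$-semi-stable extension is $\ell mn$-complete, so $\varepsilon_{ss}^{\ell mn \eta}(F) \subseteq \varepsilon_{co}^{\ell mn}(F) \subseteq \varepsilon_{ad}^{\ell mn}(F)$. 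Thus the only genuine work is to establish nonemptiness of $\varepsilon_{\sigma_r}(F)$ in the relevant parameter regime, and here I would simply cite the universal-definability results already proved. For clause ($a$) the hypotheses $\ell \geq m$, $n \geq m$ and $F$ finitary are exactly those of Corollary \ref{corollary:structure_co}, so $|\varepsilon_{pr}^{\ell mn}(F)| \geq 1$ by Corollary \ref{corollary:existence_pr}; for clause ($b$) the hypothesis $n \geq \ell \geq m$ places us under Proposition \ref{proposition:structure_co}, again yielding $|\varepsilon_{pr}^{\ell mn}(F)| \geq 1$ via Corollary \ref{corollary:existence_pr}; and for clause ($c$) the hypotheses $\ell \geq m$, $n \geq m$ and $F$ finitary give $|\varepsilon_{ss}^{\ell mn \eta}(F)| \geq 1$ directly by Corollary \ref{corollary:existence_ss}.

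Having verified $\emptyset \neq \varepsilon_{\sigma_r}(F) \subseteq \varepsilon_{ad}^{\ell mn}(F)$ in each case, I would conclude by applying the structural observation: clauses ($a$) and ($b$) give $\varepsilon_{id}^{\ell mn}(F) = \set{\textit{inf}(\varepsilon_{pr}^{\ell mn}(F))}$ and clause ($c$) gives $\varepsilon_{eg}^{\ell mn \eta}(F) = \set{\textit{inf}(\varepsilon_{ss}^{\ell mn \eta}(F))}$, with all infimums computed in $\tuple{\varepsilon_{ad}^{\ell mn}(F), \subseteq}$ as required; no separate argument for the infimum's existence is needed, since the complete-semilattice structure of $\varepsilon_{ad}^{\ell mn}(F)$ guarantees infima of all nonempty subsets. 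The corollary is therefore essentially an assembly step rather than a source of new difficulty, its entire content being packaged in the prior structural and existence theorems. The only point demanding care — and the closest thing to an obstacle — is bookkeeping: matching each parameter regime to the correct existence theorem (Corollary \ref{corollary:structure_co} versus Proposition \ref{proposition:structure_co} for the two routes to $\varepsilon_{pr}^{\ell mn}(F) \neq \emptyset$, and Corollary \ref{corollary:existence_ss} for $\varepsilon_{ss}^{\ell mn \eta}(F)$), since invoking the wrong one would silently weaken or break the hypotheses.
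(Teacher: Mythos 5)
Your proposal is correct and follows essentially the same route as the paper: the paper's proof is exactly ``Immediately follows from Proposition \ref{proposition:structure_para} and Corollaries \ref{corollary:structure_cf_ad}($c$), \ref{corollary:existence_pr} and \ref{corollary:existence_ss},'' relying on the structural observation preceding the corollary and Remark \ref{remark:parametrized semantics}, just as you do. Your write-up merely makes explicit the bookkeeping (the inclusions $\varepsilon_{pr}^{\ell mn}(F), \varepsilon_{ss}^{\ell mn \eta}(F) \subseteq \varepsilon_{ad}^{\ell mn}(F)$ and the matching of each parameter regime to the right existence result) that the paper leaves implicit.
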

\begin{proof}
	Immediately follows from Proposition \ref{proposition:structure_para} and Corollaries \ref{corollary:structure_cf_ad}($c$), \ref{corollary:existence_pr} and \ref{corollary:existence_ss}.
\end{proof}
If $\eta \geq \ell$, by Proposition \ref{proposition:range_relative} and Definition \ref{definition:graded_extensions}, we have $\varepsilon_{ss}^{\ell mn \eta}(F) \subseteq \varepsilon_{pr}^{\ell mn}(F) \subseteq \varepsilon_{co}^{\ell mn}(F)$ for any AAF $F$, hence their infimums in the complete semilattice $\tuple{\varepsilon_{ad}^{\ell mn}(F), \subseteq}$ are comparable as below whenever these sets are nonempty:
\begin{eqnarray}
	\textit{inf}(\varepsilon_{co}^{\ell mn}(F)) \subseteq \textit{inf}(\varepsilon_{pr}^{\ell mn}(F)) \subseteq \textit{inf}(\varepsilon_{ss}^{\ell mn \eta}(F)). \label{formulas:credulity}
\end{eqnarray}
For any finitary AAF $F$, in the situation that $\eta \geq \ell \geq m$ and $n \geq m$, by Corollary \ref{corollary:structure_id_eg} and Proposition \ref{proposition:structure_gr}, these infimums are exactly the unique extension in $\varepsilon_{gr}^{\ell mn}(F)$, $\varepsilon_{id}^{\ell mn}(F)$ and $\varepsilon_{eg}^{\ell mn \eta}(F)$, respectively. Thus, the credulity degrees of the semantics $\varepsilon_{gr}^{\ell mn}$, $\varepsilon_{id}^{\ell mn}$ and $\varepsilon_{eg}^{\ell mn \eta}$ are comparable as displayed in (\ref{formulas:credulity}) in this situation, which is a graded version of \cite[Theorem 5]{Caminada2007Eager}. 
\par Under the assumption of well-foundedness, the unique extension in $\varepsilon_{id}^{\ell mn}(F)$ (or, $\varepsilon_{eg}^{\ell mn \eta}(F)$) can be explicitly constructed in terms of iterations of $D_n^m$.
\begin{corollary} \label{corollary:well-founded_id_nonempty}
	Let $F = \tuple{\A, \rightarrow}$ be an AAF, $n \geq \ell \geq m$ and $E \in \varepsilon_{ad}^{\ell mn}(F)$. If $\rightarrow^+$ is well-founded on $\A - D_{\substack{m \\ n}}^{\lambda_F}(E)$ then $\varepsilon_{id}^{\ell mn}(F) = \set{D_{\substack{m \\ n}}^{\lambda_F}(E)}$, moreover, $\varepsilon_{eg}^{\ell mn \eta}(F) = \set{D_{\substack{m \\ n}}^{\lambda_F}(E)}$ whenever $\eta \geq \ell$.
\end{corollary}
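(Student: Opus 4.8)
The plan is to reduce the whole statement to facts already proved in the well-foundedness subsections, namely that under the stated hypothesis the preferred and semi-stable extensions each collapse to the single set $D_{\substack{m \\ n}}^{\lambda_F}(E)$, and then to observe that the maximal--parametrized construction defining $\varepsilon_{id}^{\ell mn}$ and $\varepsilon_{eg}^{\ell mn \eta}$ degenerates to that same set. Recall from Remark \ref{remark:parametrized semantics} that $\varepsilon_{id}^{\ell mn} = \varepsilon_{\textit{max}, (ad, pr)}^{\ell mn}$ and $\varepsilon_{eg}^{\ell mn \eta} = \varepsilon_{\textit{max}, (ad, ss)}^{\ell mn \eta}$, i.e.\ both are the maximal elements of the parametrized family $\set{X \in \varepsilon_{ad}^{\ell mn}(F) \mid X \subseteq \bigcap \varepsilon_{\sigma_r}(F)}$ with $\sigma_r$ instantiated by $pr$ and $ss$ respectively. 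Hence the crux is to pin down $\bigcap \varepsilon_{pr}^{\ell mn}(F)$ and $\bigcap \varepsilon_{ss}^{\ell mn \eta}(F)$.

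For the ideal part I would first invoke Corollary \ref{corollary:well-founded_co_nonempty}($a$): since $n \geq \ell \geq m$, $E \in \varepsilon_{ad}^{\ell mn}(F)$ and $\rightarrow^+$ is well-founded on $\A - D_{\substack{m \\ n}}^{\lambda_F}(E)$, the set $D_{\substack{m \\ n}}^{\lambda_F}(E)$ is the \emph{largest} $\ell mn$-complete extension of $F$. Being the largest complete extension, it is by Definition \ref{definition:graded_extensions} the unique $\ell mn$-preferred extension, so $\varepsilon_{pr}^{\ell mn}(F) = \set{D_{\substack{m \\ n}}^{\lambda_F}(E)}$ and therefore $\bigcap \varepsilon_{pr}^{\ell mn}(F) = D_{\substack{m \\ n}}^{\lambda_F}(E)$. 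Consequently the parametrized family $\varepsilon_{ad, pr}^{\ell mn}(F)$ equals $\set{X \in \varepsilon_{ad}^{\ell mn}(F) \mid X \subseteq D_{\substack{m \\ n}}^{\lambda_F}(E)}$; since $D_{\substack{m \\ n}}^{\lambda_F}(E)$ is itself complete, hence admissible, it belongs to this family and dominates every member of it, so it is the unique maximal element. This yields $\varepsilon_{id}^{\ell mn}(F) = \set{D_{\substack{m \\ n}}^{\lambda_F}(E)}$ (equivalently, one may cite Corollary \ref{corollary:structure_id_eg}($b$) together with the singleton value of $\varepsilon_{pr}^{\ell mn}(F)$ just computed).

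For the eager part I would add the hypothesis $\eta \geq \ell$ and appeal to Corollary \ref{corollary:well-founded_ss_rrs_nonempty}($b$), which, under exactly the present assumptions, gives $\varepsilon_{ss}^{\ell mn \eta}(F) = \set{D_{\substack{m \\ n}}^{\lambda_F}(E)}$, and hence $\bigcap \varepsilon_{ss}^{\ell mn \eta}(F) = D_{\substack{m \\ n}}^{\lambda_F}(E)$. Repeating the collapse argument of the previous paragraph with $\sigma_r = ss$ in place of $pr$ then produces $\varepsilon_{eg}^{\ell mn \eta}(F) = \set{D_{\substack{m \\ n}}^{\lambda_F}(E)}$. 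I would deliberately route through Corollary \ref{corollary:well-founded_ss_rrs_nonempty}($b$) rather than Corollary \ref{corollary:structure_id_eg}($c$), since the latter carries a finitary assumption absent from the present statement; using the well-foundedness corollary directly keeps the argument valid for arbitrary, possibly infinite, AAFs.

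All steps are short, and the only point requiring care --- the mild ``obstacle'' --- is the collapse of the maximal--parametrized construction: one must check that $\bigcap \varepsilon_{\sigma_r}(F)$, which here is a complete extension, actually lies in $\varepsilon_{ad}^{\ell mn}(F)$, so that it is simultaneously a member and an upper bound of the parametrized family and thereby forces that family to possess a greatest element. This is immediate from $\varepsilon_{co}^{\ell mn}(F) \subseteq \varepsilon_{ad}^{\ell mn}(F)$. Since all substantive content is inherited from Corollaries \ref{corollary:well-founded_co_nonempty} and \ref{corollary:well-founded_ss_rrs_nonempty}, no new combinatorial analysis of the attack relation is needed.
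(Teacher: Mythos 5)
Your proposal is correct and follows essentially the same route as the paper, whose proof simply cites Remark \ref{remark:parametrized semantics} together with Corollaries \ref{corollary:well-founded_co_nonempty} and \ref{corollary:well-founded_ss_rrs_nonempty}; you have merely made explicit the collapse of $\varepsilon_{pr}^{\ell mn}(F)$ and $\varepsilon_{ss}^{\ell mn \eta}(F)$ to the singleton $\set{D_{\substack{m \\ n}}^{\lambda_F}(E)}$ and the resulting degeneration of the maximal--parametrized construction, which the paper leaves implicit. Your remark about avoiding Corollary \ref{corollary:structure_id_eg} (and its finitary hypothesis) is consistent with what the paper does.
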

\begin{proof}
	Immediately follows from Remark \ref{remark:parametrized semantics}, Corollaries \ref{corollary:well-founded_co_nonempty} and \ref{corollary:well-founded_ss_rrs_nonempty}.
\end{proof}
\begin{corollary} \label{corollary:well-founded_id_empty}
	Let $F = \tuple{\A, \rightarrow}$ be an AAF, $\ell \geq m$ and $n \geq m$. If $\rightarrow^+$ is well-founded on $\A - D_{\substack{m \\ n}}^{\lambda_F}(\emptyset)$ then $\varepsilon_{id}^{\ell mn}(F) = \varepsilon_{eg}^{\ell mn \eta}(F) = \set{D_{\substack{m \\ n}}^{\lambda_F}(\emptyset)}$.
\end{corollary}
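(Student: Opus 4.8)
The plan is to reduce the statement entirely to the two preceding well-foundedness corollaries, after unfolding the definitions of $\varepsilon_{id}^{\ell mn}$ and $\varepsilon_{eg}^{\ell mn \eta}$ through Remark \ref{remark:parametrized semantics}. Write $E_0 \triangleq D_{\substack{m \\ n}}^{\lambda_F}(\emptyset)$ for brevity. By Remark \ref{remark:parametrized semantics} we have $\varepsilon_{id}^{\ell mn} = \varepsilon_{\textit{max}, (ad, pr)}^{\ell mn}$ and $\varepsilon_{eg}^{\ell mn \eta} = \varepsilon_{\textit{max}, (ad, ss)}^{\ell mn}$, so it suffices to show that the underlying parametrized sets $\varepsilon_{ad, pr}^{\ell mn}(F)$ and its semi-stable counterpart both coincide with $\varepsilon_{ad}^{\ell mn}(F)$ and that $E_0$ is their unique maximal element.

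First I would pin down the upper cut-offs $\bigcap \varepsilon_{pr}^{\ell mn}(F)$ and $\bigcap \varepsilon_{ss}^{\ell mn \eta}(F)$. Under the stated hypotheses ($\ell \geq m$, $n \geq m$, and $\rightarrow^+$ well-founded on $\A - E_0$), Corollary \ref{corollary:well-founded_co_empty}($b$) gives $\varepsilon_{co}^{\ell mn}(F) = \set{E_0}$; since $\ell mn$-preferred extensions are exactly the maximal $\ell mn$-complete extensions, a singleton complete semantics forces $\varepsilon_{pr}^{\ell mn}(F) = \set{E_0}$, whence $\bigcap \varepsilon_{pr}^{\ell mn}(F) = E_0$. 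In parallel, Corollary \ref{corollary:well-founded_ss_rrs_empty}($a$) yields $\varepsilon_{ss}^{\ell mn \eta}(F) = \set{E_0}$ and hence $\bigcap \varepsilon_{ss}^{\ell mn \eta}(F) = E_0$ as well.

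Next I would invoke the lattice structure supplied by Corollary \ref{corollary:well-founded_co_empty}($d$), which states that $\tuple{\varepsilon_{ad}^{\ell mn}(F), \subseteq}$ is a complete lattice with top $E_0$. Consequently every $\ell mn$-admissible set is contained in $E_0$, so by Definition \ref{definition:parametrized set} the set $\varepsilon_{ad, pr}^{\ell mn}(F) = \set{E \in \varepsilon_{ad}^{\ell mn}(F) \mid E \subseteq E_0}$ (and likewise the eager variant cut off at $\bigcap \varepsilon_{ss}^{\ell mn \eta}(F) = E_0$) collapses to the whole of $\varepsilon_{ad}^{\ell mn}(F)$. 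Since $E_0$ is itself admissible (being complete) and is the top, it is the unique maximal element of each parametrized set, giving $\varepsilon_{\textit{max}, (ad, pr)}^{\ell mn}(F) = \varepsilon_{\textit{max}, (ad, ss)}^{\ell mn}(F) = \set{E_0}$, which are exactly the desired equalities.

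There is no genuinely hard step: the result is essentially a bookkeeping exercise chaining Corollaries \ref{corollary:well-founded_co_empty} and \ref{corollary:well-founded_ss_rrs_empty} through the definition of parametrized semantics, mirroring the proof of Corollary \ref{corollary:well-founded_id_nonempty}. The only points demanding a moment of care are the passage from $\varepsilon_{co}^{\ell mn}(F) = \set{E_0}$ to $\varepsilon_{pr}^{\ell mn}(F) = \set{E_0}$, which rests on the fact that a unique complete extension is automatically the unique maximal one, and the observation that the cut-off at $E_0$ is vacuous precisely because $E_0$ is the top of the admissible lattice. Notably, because these inputs already hold under the weaker hypothesis $\ell \geq m$ and $n \geq m$ rather than $n \geq \ell \geq m$, no finitariness of $F$ is required here, in contrast with Corollary \ref{corollary:structure_id_eg}.
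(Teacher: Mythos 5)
Your proof is correct and follows exactly the route the paper takes: its proof is the one-line citation ``immediately follows from Remark \ref{remark:parametrized semantics}, Corollaries \ref{corollary:well-founded_co_empty} and \ref{corollary:well-founded_ss_rrs_empty}'', and you have simply made that chain explicit, using Corollary \ref{corollary:well-founded_co_empty}($b$),($d$) to identify both cut-offs with $D_{\substack{m \\ n}}^{\lambda_F}(\emptyset)$ and to make the parametrized restriction vacuous, and Corollary \ref{corollary:well-founded_ss_rrs_empty}($a$) for the semi-stable cut-off. Your closing remark that no finitariness is needed here (unlike Corollary \ref{corollary:structure_id_eg}) is also accurate.
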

\begin{proof}
	Immediately follows from Remark \ref{remark:parametrized semantics}, Corollaries \ref{corollary:well-founded_co_empty} and \ref{corollary:well-founded_ss_rrs_empty}.
\end{proof}
\section{Extension-based abstract semantics}\label{Sec: Extension-based abstract semantics}
The preceding sections have considered a variety of extension-based concrete semantics. This section aims to explore abstract semantics defined by a first order language. The reader may find that the operator $\bigcap_D$ plays a central role in the proofs of Corollary \ref{corollary:RM of interval semantics} and Theorems \ref{theorem:close under meet and universal definability}, \ref{theorem:existence_stg_ss_rra_rrs}, \ref{theorem:epsilon_extensible}, \ref{theorem:epsilon_inference}, \ref{theorem:Lindenbaum_RM}, \ref{theorem:anti-epsilon set} and \ref{theorem:structure_para}. This depends on the fact (or, assumption) that the graded semantics involved in these results are closed under the operator $\bigcap_D$. Clearly, we can apply metatheorems (e.g., Corollary \ref{corollary:RM of interval semantics} and Theorems \ref{theorem:close under meet and universal definability}, \ref{theorem:epsilon_extensible}, \ref{theorem:epsilon_inference}, \ref{theorem:Lindenbaum_RM}, \ref{theorem:anti-epsilon set} and \ref{theorem:structure_para}) on a given concrete semantics $\varepsilon$ only when we know that $\varepsilon$ is closed under $\bigcap_D$. However, so far, we only show that a family of concrete semantics have the $\bigcap_D$-closeness (Theorem \ref{theorem:close under meet_fundamental semantics} and Proposition \ref{proposition:RMMU_na}) and interval semantics inherit such closeness (Theorem \ref{theorem:RMMU_interval}). Thus, at this point, a problem arises naturally, that is, what kind of extension-based semantics is closed under reduced meets modulo any ultrafilter. This section intends to solve this problem partially. To this end, we must provide a method to describe semantics abstractly. This section will focus on the semantics defined by a first order language and consider this problem from a model-theoretical viewpoint.
\par 
Compared to studying on finite AAFs, we often need more powerful mathematical tools to explore infinite AAFs. This section will adopt a technique so that we can take advantage of fruitful results in model theory. Through formalizing the semantics of a given formal system in FoL, one often can deal with some issues of this system by using results, ideas and proof methods in FoL. Such technique is widely applied in, e.g., modal logic \cite{Blackburn03Book}, argumentation \cite{Weydert11SsForInf} and nonmonotonic logic \cite{Zhaohui06TCS, Zhaohui07JSL, Zhaohui07NML}, etc. This section will apply this technique to deal with the problem mentioned above. However, unlike \cite{Weydert11SsForInf,Blackburn03Book,Zhaohui06TCS, Zhaohui07JSL, Zhaohui07NML}, what we care about is not a certain extension-based concrete semantics but a family of semantics. We intend to provide a segment of a first order language and show that each extension-based semantics defined by this segment is closed under reduced meets modulo any ultrafilter w.r.t finitary AAFs. The reader is assumed to be familiar with some elementary knowledge in model theory (e.g., \cite{Chang1990Book}).
\subsection{A first order language $\textit{FS}(\Omega)$}
Given an AAF $F = \tuple{\A, \rightarrow}$, it is obvious that $F$ itself may be viewed as a model of the first order language $\Lan = \set{Att}$, where $Att$ is a binary relation symbol. In the model $F$, $Att$ is interpreted by the attack relation $\rightarrow$ in $F$, that is, for any $a, b \in \A$, $F \vDash Att(x, y)[a, b]$ iff $a \rightarrow b$ in $F$. Here, as usual, $F \vDash Att(x, y)[a, b]$ means that the formula $Att(x, y)$ is satisfied by the sequence $a$, $b$ in $F$. To consider the problem mentioned above, the language $\Lan$ is expanded to $\Lan' = \Lan \cup \set{P}$, where $P$ is an unary relation symbol. A $F$-based model for $\Lan'$ is a triple $\mathfrak{A} = \tuple{\A, \rightarrow, E}$ with $E \subseteq \A$, and $P$ is interpreted by $E$, that is, for any $a \in \A$, $\mathfrak{A} \vDash P(x)[a]$ iff $a \in E$. In the following, to simplify notations, we often adopt the notation $\rho$ to denote an assignment abstractly. Given an assignment $\rho$, the value assigned to a variable $x$ is denoted by $\rho(x)$, and $\rho \set{b/x}$ is used to denote the assignment which agrees with $\rho$ except that $b$, instead of $\rho(x)$, is assigned to $x$. The notation $\mathfrak{A} \vDash \phi [\rho]$ means that the formula $\phi$ is satisfied by the assignment $\rho$ in the model $\mathfrak{A}$, which is defined by induction on formulas, see \cite{Chang1990Book} for more details.
\begin{definition} \label{definition:universal assignment}
	Let $F = \tuple{\A, \rightarrow}$ be an AAF, $E \subseteq \A$ and $\alpha \triangleq \exists {x}\phi$ a formula of $\Lan'$. An assignment $\rho$ is said to be universal w.r.t. the model $\mathfrak{A} \triangleq \tuple{\A, \rightarrow, E}$ and the formula $\alpha$ if
	\begin{eqnarray*}
		\mathfrak{A} \vDash \exists {x} \phi[\rho] \text{~implies~} \mathfrak{A} \vDash \forall {x} \phi[\rho].
	\end{eqnarray*}
	Otherwise, $\rho$ is said to be non-universal w.r.t. $\mathfrak{A}$ and $\alpha$, and the set of all such assignments is denoted by $\textit{Nua}(\mathfrak{A}, \alpha)$ or $\textit{Nua}(F, E, \alpha)$.
\end{definition}
Roughly speaking, for any assignment $\rho$ that is universal w.r.t. the model $\mathfrak{A}$ and the formula $\exists {x}\phi$, the value assigned to the variable $x$ has no effect on the satisfiability of $\phi$ (notice, not $\exists {x}\phi$) under the assignment $\rho$ in $\mathfrak{A}$, in other words, $\mathfrak{A} \vDash \phi[\rho \set{b/x}]$ for some $b \in \A$ implies $\mathfrak{A} \vDash \phi[\rho \set{a/x}]$ for each $a \in \A$.
\begin{example}
	(1) Consider the formula $\alpha \triangleq \exists {x_2} (P(x_1) \vee \neg P(x_1) \vee Att(x_2, x_1))$.
	For any AAF $F = \tuple{\A, \rightarrow}$, $E \subseteq \A$ and assignment $\rho$, due to 
	\begin{eqnarray*}
		\tuple{\A, \rightarrow, E} \vDash P(x_1) \vee \neg P(x_1) [\rho],
	\end{eqnarray*}
	we always have $\tuple{\A, \rightarrow, E} \vDash \forall {x_2} (P(x_1) \vee \neg P(x_1) \vee Att(x_2, x_1)) [\rho]$, and hence $\textit{Nua}(F, E, \alpha) = \emptyset$.
	\par (2) Consider the left AAF $F$ in Figure \ref{figure:counterexampleforgr}. Let $\alpha \triangleq \exists {x_2} (P(x_1) \vee Att(x_2, x_3))$ and $E = \set{a_1, b_1}$. 
	\par For each assignment $\rho$ with $\rho(x_1) \in E$, due to $\tuple{\A, \rightarrow, E} \vDash P(x_1) [\rho]$, we have $\tuple{\A, \rightarrow, E} \vDash \forall {x_2} (P(x_1) \vee Att(x_2, x_3))[\rho]$. Thus, all assignments $\rho$ with $\rho(x_1) \in E$ are universal w.r.t. the model $\tuple{\A, \rightarrow, E}$ and the formula $\alpha$.
	\par On the other hand, for each assignment $\rho$ with $\rho(x_1) = c_1$ and $\rho(x_3) \neq d_1$, since $u^- \neq \A$ for each argument $u$ in $F$ and $\tuple{\A, \rightarrow, E} \vDash \neg P(x_1) [\rho]$, we have 
	\begin{eqnarray*}
		\tuple{\A, \rightarrow, E} \vDash \neg \forall {x_2} (P(x_1) \vee Att(x_2, x_3))[\rho],
	\end{eqnarray*}
	however $\tuple{\A, \rightarrow, E} \vDash \exists {x_2} (P(x_1) \vee Att(x_2, x_3))[\rho]$ due to $u^- \neq \emptyset$ for each argument $u~(\neq d_1)$ in $F$. Thus, all assignments $\rho$ with $\rho(x_1) = c_1$ and $\rho(x_3) \neq d_1$ are non-universal w.r.t. the model $\tuple{\A, \rightarrow, E}$ and the formula $\alpha$.
\end{example}
In dealing with infinite AAFs, it is often assumed that the AAF under consideration is finitary, see, e.g. \cite{Dung95Acceptability, Grossi19Graded, Weydert11SsForInf, Baumann15Infinite, Baroni13Infinite}. 
The next notion reflects this assumption in the model theoretical sense.
\begin{definition} \label{definition:omega_finitary}
	Given a class $\Omega$ of AAFs, a formula $\alpha \triangleq \exists {x} \phi$ of $\Lan'$ is said to be $\Omega$-finitary if, for any AAF $F = \tuple{\A, \rightarrow} \in \Omega$ and assignment $\rho$,
	\begin{eqnarray*}
		|\bigcup_{E \subseteq \A} \set{b \in \A \mid \rho \in \textit{Nua}(F, E, \alpha) \text{~and~} \tuple{\A, \rightarrow, E} \vDash \phi[\rho\set{b/x}]}| < \omega.
	\end{eqnarray*}
\end{definition}
Thus, for any $F \in \Omega$, assignment $\rho$ and $\Omega$-finitary formula $\exists{x} \phi$, there are only finitely many arguments $b$ such that $\phi$ is realized nontrivially in $F$-based models under the assignment $\rho \set{b/x}$. Here, the statement ``$\phi$ is realized nontrivially under the assignment $\rho \set{b/x}$'' means that it depends essentially on the value $b$ that $\tuple{\A, \rightarrow, E} \vDash \phi[\rho \set{b/x}]$, which is captured by the condition $\rho \in \textit{Nua}(F, E, \exists x \phi)$.
\begin{example}
	For any finitary AAF $F = \tuple{\A, \rightarrow}$, since $|\set{b \in \A \mid b \rightarrow a}| < \omega$ for each $a \in \A$, we have
	\begin{eqnarray*}
		| \set{b\in \A \mid \tuple{\A, \rightarrow} \vDash Att(x, y)[\rho \set{b/x}]} | < \omega \text{~for any assignment~} \rho.
	\end{eqnarray*}
	Based on this, it is easy to see that the formula $\exists x Att(x, y)$ is $\Omega$-finitary for the class $\Omega$ of all finitary AAFs. More nontrivial $\Omega$-finitary formulas will be given in the proof of Proposition \ref{proposition:FoL_fundamental semantics}.
\end{example}
This section aims to characterize extension-based semantics that is closed under the operator $\bigcap_D$ in terms of model theoretical notions. To this end,  a segment of $\Lan'$ is introduced below.
\begin{definition} \label{definition:FS_formulas}
	Given a class $\Omega$ of AAFs, the $\Omega$-finitary segment of $\Lan'$, denoted by $\textit{FS}(\Omega)$, is the least set of $\Lan'$-formulas satisfying the following closure conditions:	
	\begin{itemize}
		\item[$FS_1$.] $P(x)$, $Att(x, y)$, $x \equiv y \in \textit{FS}(\Omega)$ for any variables $x$, $y$. That is, $\textit{FS}(\Omega)$ contains all atomic formulas in $\Lan'$.
		\item[$FS_2$.] If $\alpha$, $\beta \in \textit{FS}(\Omega)$ in which neither $\forall$ nor $\exists$ occurs, then ($\alpha \wedge \beta$), ($\alpha \vee \beta$), ($\neg \alpha) \in \textit{FS}(\Omega)$. That is, $\textit{FS}(\Omega)$ contains all boolean combinations of atomic formulas in $\Lan'$.
		\item[$FS_3$.] If $\alpha(x_1 \dots x_n) \in \textit{FS}(\Omega)$ and $\exists {x_i} \alpha(x_1 \dots x_n)$ ($1 \leq i \leq n$) is $\Omega$-finitary (see, Definition \ref{definition:omega_finitary}) then  $\exists {x_i} \alpha(x_1 \dots x_n) \in \textit{FS}(\Omega)$.
		\item[$FS_4$.] If $\alpha(x_1 \dots x_n) \in \textit{FS}(\Omega)$ and $1 \leq i \leq n$ then $\forall {x_i} \alpha(x_1 \dots x_n) \in \textit{FS}(\Omega)$.
	\end{itemize}	
\end{definition}
It is not difficult to see that each formula in $\textit{FS}(\Omega)$ is in prenex form. Here, as usual, the notation $\alpha(x_1 \dots x_n)$ means that the free variables of $\alpha$ are among $x_1 \dots x_n$. A formula $\alpha$ is said to be a \textit{sentence} (i.e., closed formula) if all variables occurring in $\alpha$ are bounded by quantifiers, that is, $\alpha$ doesn't contain any free variable. The parentheses in ($\alpha \wedge \beta$), ($\alpha \vee \beta$) and ($\neg \alpha$) will be omitted in the remainder of this paper when no confusion can arise. In addition to $\wedge$, $\vee$ and $\neg$, the boolean operator of implication is often used, which is defined as $\alpha \rightarrow \beta \triangleq \neg \alpha \vee \beta$ as usual. At the beginning of this section, we mention that this section will focus on the semantics defined by a first order language. Now we define this notion formally.
\begin{definition} \label{definition:fol_ext}
	Given a class $\Omega$ of AAFs, an extension-based semantics $\varepsilon$ is said to be FoL-definable w.r.t. $\Omega$ if there exists a set $\Sigma_{\varepsilon}$ of sentences in $\Lan'$ such that, for any AAF $F = \tuple{\A, \rightarrow} \in \Omega$ and $E \subseteq \A$,
	\begin{eqnarray*}
		\tuple{\A, \rightarrow, E} \vDash \Sigma_{\varepsilon} \text{~iff~} E \in \varepsilon(F).
	\end{eqnarray*}
	In particular, $\varepsilon$ is said to be $\textit{FS}(\Omega)$-definable if $\Sigma_{\varepsilon} \subseteq \textit{FS}(\Omega)$.
	Here $\tuple{\A, \rightarrow, E} \vDash \Sigma_{\varepsilon}$ means that each sentence in $\Sigma_{\varepsilon}$ is true in the model $\tuple{\A, \rightarrow, E}$.	
\end{definition}
The semantics, playing fundamental roles in AAFs, $\textit{Def}^{mn}$, $\varepsilon_{cf}^\ell$, $\varepsilon_{ad}^{\ell mn}$, $\varepsilon_{co}^{\ell mn}$ and $\varepsilon_{stb}^{\ell mn}$ are $\textit{FS}(\Omega)$-definable w.r.t. the class $\Omega$ of all finitary AAFs.
By Definition \ref{definition:graded_extensions}, it is immediate to see that these semantics are FoL-definable in $\Lan'$ w.r.t. all AAFs because their definitions may be formalized in $\Lan'$. For instance, the sentences $\alpha_{mn}^1$, $\alpha_{mn}^2$, $\alpha_{\ell}^3$ and $\alpha_{\ell}^4$ given in the proof of Proposition \ref{proposition:FoL_fundamental semantics} may be used to do this. 
A further observation is that $\alpha_{mn}^1$, $\alpha_{mn}^2$, $\alpha_{\ell}^3$ and $\alpha_{\ell}^4$ are essentially in $\textit{FS}(\Omega)$ up to logical equivalence, which is demonstrated in the next proposition.
\begin{proposition}\label{proposition:FoL_fundamental semantics}
	The semantics $\textit{Def}^{mn}$, $\varepsilon_{cf}^{\ell}$, $\varepsilon_{ad}^{\ell mn}$, $\varepsilon_{co}^{\ell mn}$ and $\varepsilon_{stb}^{\ell mn}$ are $\textit{FS}(\Omega)$-definable w.r.t. the class $\Omega$ of all finitary AAFs.
\end{proposition}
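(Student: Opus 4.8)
The plan is to exhibit, for each of the five semantics, an explicit finite set of defining sentences of $\Lan'$ and then argue that each such sentence is logically equivalent to a member of $\textit{FS}(\Omega)$. The starting point is to translate the membership conditions ``$x \in N_\ell(E)$'' and ``$x \in D_n^m(E)$'' into first-order formulas. Since the graded functions count attackers and defenders, I would simulate each graded quantifier $\exists^{\geq k}$ by a block of $k$ ordinary existential quantifiers together with the distinctness conjuncts $\bigwedge_{i<j}\neg(y_i\equiv y_j)$. In this way ``$x\in N_\ell(E)$'' becomes the universal formula $\forall y_1\cdots\forall y_\ell\,\neg(\bigwedge_i(P(y_i)\wedge Att(y_i,x))\wedge\bigwedge_{i<j}\neg(y_i\equiv y_j))$, and ``$x\in D_n^m(E)$'' becomes a $\forall\exists$ formula whose inner existential block, guarded by $Att(z_k,y)$, expresses that an attacker $y$ of $x$ is itself attacked by at least $n$ members of $E$. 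From these building blocks I would assemble sentences $\alpha^1_{mn}$ (for $E\subseteq D_n^m(E)$), $\alpha^2_{mn}$ (for $D_n^m(E)\subseteq E$), $\alpha^3_\ell$ (for $E\subseteq N_\ell(E)$) and the analogous sentences for $N_n,N_m$, and then define $\textit{Def}^{mn}$ by $\{\alpha^1_{mn}\}$, $\varepsilon_{cf}^\ell$ by $\{\alpha^3_\ell\}$, $\varepsilon_{ad}^{\ell mn}$ by $\{\alpha^1_{mn},\alpha^3_\ell\}$, $\varepsilon_{co}^{\ell mn}$ by $\{\alpha^1_{mn},\alpha^2_{mn},\alpha^3_\ell\}$, and $\varepsilon_{stb}^{\ell mn}$ by the corresponding combination.

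The crux is that $\textit{FS}(\Omega)$ is not closed under arbitrary Boolean combinations: rule $FS_2$ only forms conjunctions, disjunctions and negations of \emph{quantifier-free} formulas, while $FS_3$ admits an existential quantifier only when the resulting formula is $\Omega$-finitary. Consequently I cannot take the translated sentences at face value; I must first rewrite each into prenex normal form with a quantifier-free matrix, so that the quantifiers can be reintroduced one layer at a time by $FS_3$ and $FS_4$. Exploiting that every AAF has a nonempty domain, I would push the Boolean connectives inward past the quantifiers, using the equivalences $\neg A\vee\exists \vec z\,M\equiv\exists\vec z\,(\neg A\vee M)$ and $\bigvee_i\exists\vec z^{\,i}\,\theta_i\equiv\exists\vec z^{\,1}\cdots\exists\vec z^{\,m}\bigvee_i\theta_i$ (with disjoint renamed blocks). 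This yields, for $\alpha^1_{mn}$, a $\Pi_2$ sentence $\forall x\forall\vec y\,\exists\vec z\,M$ with $M$ quantifier-free; for $\alpha^2_{mn}$ a $\Pi_3$ sentence $\forall x\,\exists\vec y\,\forall\vec z\,M'$; and for the conflict-free and $N_k$-based sentences either $\Pi_1$ or $\Pi_2$ matrices. Building the matrix by $FS_1$ and $FS_2$, then inserting the innermost quantifiers and working outward (applying $FS_3$ for each $\exists$ and $FS_4$ for each $\forall$, in the order dictated by the prenex prefix), produces the required $\textit{FS}(\Omega)$-formula, provided each $FS_3$ step is legitimate.

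The main obstacle, therefore, is the verification that every existential quantifier introduced in this process is $\Omega$-finitary over the class of finitary AAFs, \emph{as measured by} Definition~\ref{definition:omega_finitary}. The key structural fact I would rely on is that each existential variable occurs inside an attack atom $Att(\,\cdot\,,\,\cdot\,)$ that binds it to the attackers of some already-fixed argument; since $F$ is finitary, that attacker set $\rho(y)^-$ is finite. The delicate part is that $\Omega$-finitariness quantifies over \emph{all} subsets $E\subseteq\A$ and restricts attention to non-universal assignments, so I must check two preservation phenomena: first, that prepending a quantifier-free context such as $\neg A$ does not enlarge the relevant witness set, because whenever the added context already makes the matrix true the assignment becomes universal and contributes nothing to $\textit{Nua}$; and second, that nesting an inner universal block (as in $\alpha^2_{mn}$) beneath the existential does not spoil finiteness, since a witness still must satisfy its guarding $Att$-atom. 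Once these preservation lemmas are in place the union over $E$ in Definition~\ref{definition:omega_finitary} stays inside a fixed finite attacker set, so each $FS_3$ application is justified and the construction goes through; I expect the stable case, with its three distinct parameters $\ell,m,n$ and the two-way inclusions $E=N_n(E)=N_m(E)$, to require the most bookkeeping but no new idea.
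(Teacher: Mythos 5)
Your proposal follows essentially the same route as the paper's proof: the same defining sentences built from blocks of distinct-witness quantifiers (the paper's $Cf$-formulas), the same prenex rewriting forced by the closure rules $FS_2$--$FS_4$, and the same key finitariness argument — namely that for a non-universal assignment the disjunct not containing the quantified variable must fail (the paper's Claim 1.1), so every witness is pinned by its guarding $Att$-atom to the attacker set of a fixed argument, which is finite in a finitary AAF; your two ``preservation phenomena'' are exactly the paper's Claims in parts (I) and (II), and your assembly of the five semantics (including the two-way inclusions for the stable case) matches the paper's part (V).
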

\begin{proof}
	For any positive number $n$, put
	\begin{eqnarray*}
		Cf(x_1, \cdots, x_n, x) \triangleq (\bigwedge_{1 \leq i \neq j \leq n} \neg(x_i \equiv x_j)) \wedge (\bigwedge_{1 \leq i \leq n} Att(x_i, x)).
	\end{eqnarray*}
	The proof falls naturally into five parts. The procedure is to find $\textit{FS}(\Omega)$-sentences to formalize these semantics.
	\par (I) We intend to show that $mn$-self-defended sets may be defined by $\textit{FS}(\Omega)$-sentences. To this end, for any positive number $m$ and $n$, let
	\begin{eqnarray*}
		\begin{split}
			\alpha_{mn}^1 \triangleq 
				&~ \forall x(P(x) \rightarrow \neg \exists {x_1 \cdots x_m}(Cf(x_1, \cdots, x_m, x) \wedge \\
				& \bigwedge_{1 \leq i \leq m} \neg \exists {y_i^1 \cdots y_i^n}(Cf(y_i^1, \cdots, y_i^n, x_i) \wedge \bigwedge_{1 \leq k \leq n}P(y_i^k)))).
		\end{split}
	\end{eqnarray*}
	It is easy to check that, for any AAF $F = \tuple{\A, \rightarrow}$ and $E \subseteq \A$, we have
	\begin{eqnarray*}
		\begin{split}
				&\tuple{\A, \rightarrow, E} \vDash \alpha_{mn}^1\\
			\Leftrightarrow &~ \text{For each~} a \in E, \tuple{\A, \rightarrow, E} \vDash \neg \exists {x_1 \cdots x_m}(Cf(x_1, \cdots, x_m, x) \wedge \\
			& \bigwedge_{1 \leq i \leq m} \neg \exists {y_i^1 \cdots y_i^n}(Cf(y_i^1, \cdots, y_i^n, x_i) \wedge \bigwedge_{1 \leq k \leq n}P(y_i^k)))[a]. \\
			\Leftrightarrow &~ \text{For each $a \in E$, it holds in $F$ that~} \nexistsn{b}{m}(~b \rightarrow a \text{ and } \nexistsn{c}{n}(c \rightarrow b \text{ and } c \in E)).\\
			\Leftrightarrow &~ \text{For each $a \in E$, it holds in $F$ that $a \in D_n^m(E)$.}\\
			\Leftrightarrow &~ \text{It holds in $F$ that~} E \subseteq D_n^m(E).\\
			\Leftrightarrow &~ E \in \textit{Def}^{mn}(F).
		\end{split}
	\end{eqnarray*}
	Thus, $mn$-self-defended sets are FoL-definable.
	Moreover, it isn't difficult to check that $\alpha_{mn}^1$ is equivalent to
	\begin{eqnarray*}
	\begin{split}
		\beta_{mn}^1 \triangleq 
		&~ \forall {x~x_1 \cdots x_m}\exists {y_1^1 \cdots y_1^n \cdots y_m^1 \cdots y_m^n}(\neg P(x) \vee \neg Cf(x_1, \cdots, x_m, x) \vee\\
		& \bigvee_{1 \leq i \leq m}(Cf(y_i^1, \cdots, y_i^n, x_i) \wedge \bigwedge_{1 \leq k \leq n}P(y_i^k))).
	\end{split}
	\end{eqnarray*}
	Thus, $mn$-self-defended sets are also defined by $\beta_{mn}^1$.	
	Next we verify that $\beta_{mn}^1 \in \textit{FS}(\Omega)$.
	By Definition \ref{definition:FS_formulas}, it suffices to show that, for any $u$, $v$ with $1 \leq u \leq m$ and $1 \leq v \leq n$, the formula $\beta_u^v$ is $\Omega$-finitary, where
	\begin{eqnarray*}
		\beta_u^v \triangleq
		\begin{cases}
			\exists {y_u^v \cdots y_u^n y_{u + 1}^1 \cdots y_{u + 1}^n \cdots y_m^1 \cdots y_m^n}\alpha &{\text{if~} u < m \text{~and~} v < n}  \\ 
			\exists {y_u^n y_{u + 1}^1 \cdots y_{u + 1}^n \cdots y_m^1 \cdots y_m^n}\alpha &{\text{if~} u < m \text{~and~} v = n}  \\ 
			\exists {y_m^v \cdots y_m^n}\alpha &{\text{if~} u = m \text{~and~} v < n}  \\ 
			\exists {y_m^n}\alpha &{\text{if~} u = m \text{~and~} v = n}
		\end{cases}
	\end{eqnarray*}
	and $\alpha$ is the matrix of the prenex formula $\beta_{mn}^1$, that is
	\begin{eqnarray*}
		\alpha \triangleq \neg P(x) \vee \neg Cf(x_1, \cdots, x_m, x) \vee \bigvee_{1 \leq i \leq m}(Cf(y_i^1, \cdots, y_i^n, x_i) \wedge \bigwedge_{1 \leq k \leq n}P(y_i^k)).
	\end{eqnarray*}
	We give the proof only for the case $u < m$ and $v < n$, the same reasoning may be applied to other cases.
	To simplify notations, put
	\begin{eqnarray*}
		\exists {\widetilde{y}} \triangleq \exists {y_u^{v + 1} \cdots y_u^n y_{u + 1}^1 \cdots y_{u + 1}^n \cdots y_m^1 \cdots y_m^n}.
	\end{eqnarray*}
	Thus, $\beta_u^v = \exists {y_u^v} \exists {\widetilde{y}}\alpha$.
	Clearly, the formula $\alpha$ is equivalent to $\gamma_1 \vee \gamma_2$ with
	\begin{eqnarray*}
		\begin{split}
			\gamma_1 &\triangleq \neg P(x) \vee \neg Cf(x_1, \cdots, x_m, x) \vee \bigvee_{1 \leq i \ne u \leq m}(Cf(y_i^1, \cdots, y_i^n, x_i) \wedge \bigwedge_{1 \leq k \leq n}P(y_i^k)),\\
			\gamma_2 &\triangleq Cf(y_u^1, \cdots, y_u^n, x_u) \wedge \bigwedge_{1 \leq k \leq n}P(y_u^k).
		\end{split}
	\end{eqnarray*}
	Moreover, it is easy to see that the variable $y_u^v$ doesn't occur in $\gamma_1$.
	In order to show that $\beta_u^v$ is $\Omega$-finitary, we prove the following claim.\\
	\\ \textbf{Claim 1~~} Let $F = \tuple{\A, \rightarrow}$ be an AAF. For each $E \subseteq \A$ and $\rho \in \textit{Nua}(F, E, \exists {y_u^v} \exists {\widetilde{y}} \alpha)$, 
	\begin{eqnarray*}
		\set{b \in \A \mid \tuple{\A, \rightarrow, E} \vDash \exists {\widetilde{y}}\alpha[\rho \set{b/y_u^v}]} \subseteq \set{b \in \A \mid b \rightarrow \rho(x_u) \text{~in~} F}.
	\end{eqnarray*}
	Assume that $E \subseteq \A$ and $\mathfrak{A} \triangleq \tuple{\A, \rightarrow, E}$. 
	We first prove the following assertion.\\
	\\
	\textbf{Claim 1.1~~} $\mathfrak{A} \vDash \neg \exists {\widetilde{y}}\gamma_1[\rho]$ for each assignment $\rho \in \textit{Nua}(\mathfrak{A}, \exists {y_u^v} \exists {\widetilde{y}}\alpha)$.\\
	\\
	Otherwise, since the variable $y_u^v$ doesn't occur in $\gamma_1$, it follows from $\mathfrak{A} \vDash \exists {\widetilde{y}}\gamma_1[\rho]$ that $\mathfrak{A} \vDash \exists {\widetilde{y}}\gamma_1[\rho \set{c/y_u^v}] \text{~for each~} c \in \A$.
	Thus, $\mathfrak{A} \vDash \forall y_u^v \exists {\widetilde{y}}\gamma_1[\rho]$.
	Further, since $\alpha$ is equivalent to $\gamma_1 \vee \gamma_2$, it follows that $\mathfrak{A} \vDash \forall y_u^v \exists {\widetilde{y}}\alpha[\rho]$,
	which contradicts $\rho \in \textit{Nua}(\mathfrak{A}, \exists {y_u^v} \exists {\widetilde{y}}\alpha)$, as desired.\\
	\\
	Next we show Claim 1 itself. 
	Let $\rho \in \textit{Nua}(\mathfrak{A}, \exists {y_u^v} \exists {\widetilde{y}}\alpha)$.
	By Claim 1.1, $\mathfrak{A} \vDash \neg \exists {\widetilde{y}}\gamma_1[\rho]$.
	Further, since $y_u^v$ doesn't occur in $\gamma_1$, we get 
	\begin{eqnarray}
		\mathfrak{A} \vDash \neg \exists {\widetilde{y}}\gamma_1[\rho \set{b/y_u^v}] \text{~for any~} b \in \A \label{formula:fol}
	\end{eqnarray}
	Thus,
	\begin{eqnarray*}
		\begin{split}
			&\qquad \quad \set{b \in \A \mid \mathfrak{A} \vDash \exists {\widetilde{y}} \alpha[\rho \set{b/y_u^v}]}\\
			&= \qquad \set{b \in \A \mid \mathfrak{A} \vDash \exists {\widetilde{y}}(\gamma_1 \vee \gamma_2)[\rho 
			      \set{b/y_u^v}]}\\
			&= \qquad \set{b \in \A \mid \mathfrak{A} \vDash \exists {\widetilde{y}}\gamma_1 [\rho \set{b/y_u^v}]
					\text{~or~} \mathfrak{A} \vDash \exists {\widetilde{y}}\gamma_2[\rho \set{b/y_u^v}]}\\
			& \longequal{\textit{by}~(\ref{formula:fol})} \set{b \in \A \mid \mathfrak{A} \vDash \exists    
			                                                        {\widetilde{y}}\gamma_2[\rho \set{b/y_u^v}]}\\ 
			&= \qquad \set{b \in \A \bigg| \mathfrak{A} \vDash \exists {\widetilde{y}}(Cf(y_u^1, \cdots, y_u^n, x_u)  
			       \wedge \bigwedge_{1 \leq k \leq n}P(y_u^k))[\rho \set{b/y_u^v}]}\\
			&\overset{*} \subseteq \qquad \set{b \in \A \mid \mathfrak{A} \vDash Att(y_u^v, x_u)[\rho \set{b/y_u^v}]}\\
			&= \qquad \set{b \in \A \mid b \rightarrow \rho(x_u) \text{~in~} F}.
		\end{split}
	\end{eqnarray*}
	\\
	Here, the inequality ($*$) comes from the definition of $Cf(y_u^1, \cdots, y_u^n, x_u)$. Now we return to (I) itself. Let $F = \tuple{\A, \rightarrow}$ be a finitary AAF and $\rho$ an assignment.
	 By Claim 1, we have
	\begin{eqnarray*}
		\begin{split}
			&\bigcup_{E \subseteq \A} \set{b \in \A \mid \rho \in \textit{Nua}(F, E, \exists {y_u^v} \exists {\widetilde{y}} \alpha) \text{~and~} \tuple{\A, \rightarrow, E} \vDash \exists {\widetilde{y}}\alpha[\rho \set{b/y_u^v}]} \\
			\subseteq &~ \set{b \in \A \mid b \rightarrow \rho(x_u) \text{~in~} F}.
		\end{split}
	\end{eqnarray*}	
	Further, since $F$ is finitary, $|\set{b \in \A \mid b \rightarrow \rho(x_u) \text{~in~} F}| < \omega$. Hence
	\begin{eqnarray*}
		|\bigcup_{E \subseteq \A} \set{b \in \A \mid \rho \in \textit{Nua}(F, E, \exists {y_u^v} \exists {\widetilde{y}} \alpha) \text{~and~} \tuple{\A, \rightarrow, E} \vDash \exists {\widetilde{y}}\alpha[\rho \set{b/y_u^v}]}| < \omega.
	\end{eqnarray*}
	Thus, $\exists {y_u^v} \exists {\widetilde{y}} \alpha$ (i.e., $\beta_u^v$) is $\Omega$-finitary, as desired.
	\par (II) This part aims to give $\textit{FS}(\Omega)$-sentences to capture pre-fixed points of the function $D_n^m$ (that is, sets $X$ such that $D_n^m(X) \subseteq X$).
	For any positive number $m$ and $n$, put
	\begin{eqnarray*}
		\begin{split}
			&\alpha_{mn}^2 \triangleq \forall x(\neg \exists {x_1 \cdots x_m}\gamma_3 \rightarrow P(x)) \text{, where}\\
			&\gamma_3 \triangleq Cf(x_1, \cdots, x_m, x) \wedge \! \bigwedge_{1 \leq i \leq m} \! (\neg \exists {y_i^1 \cdots y_i^n}(Cf(y_i^1, \cdots, y_i^n, x_i) \wedge \! \bigwedge_{1 \leq k \leq n} \!\! P(y_i^k))).
		\end{split}
	\end{eqnarray*}
	Clearly, for any AAF $F = \tuple{\A, \rightarrow}$ and $E \subseteq \A$, we have
	\begin{eqnarray*}
		\begin{split}
			&\tuple{\A, \rightarrow, E} \vDash \alpha_{mn}^2\\
			\Leftrightarrow &~ \text{For any~} a \in \A, \tuple{\A, \rightarrow, E} \vDash \neg \exists {x_1 \cdots x_m}\gamma_3(x)[a] \text{~implies~} \tuple{\A, \rightarrow, E} \vDash P(x)[a]. \\
			\Leftrightarrow &~ \text{For any~} a \in \A, \nexistsn{b}{m}(~b \rightarrow a \text{ and } \nexistsn{c}{n}(c \rightarrow b \text{ and } c \in E)) \text{~implies~} a \in E.\\
			\Leftrightarrow &~ \text{For any~} a \in \A, a \in D_n^m(E) \text{~implies~} a \in E.\\
			\Leftrightarrow &~ D_n^m(E) \subseteq E.
		\end{split}
	\end{eqnarray*}
	Moreover, it is easy to see that $\alpha_{mn}^2$ is equivalent to 
	\begin{eqnarray*}
		\begin{split}
			&\beta_{mn}^2 \triangleq \forall x \exists {x_1 \cdots x_m} \forall {y_1^1 \cdots y_1^n \cdots y_m^1 \cdots y_m^n}\gamma_4 \text{~with~}\\
			&\gamma_4 \triangleq P(x) \vee (Cf(x_1, \cdots, x_m, x) \wedge \! \bigwedge_{1 \leq i \leq m} \! (\neg Cf(y_i^1, \cdots, y_i^n, x_i) \vee \! \bigvee_{1 \leq k \leq n}\!\!\!\! \neg P(y_i^k))).
		\end{split}
	\end{eqnarray*}
	Next we intend to prove $\beta_{mn}^2 \in \textit{FS}(\Omega)$. 
	By Definition \ref{definition:FS_formulas}, $\gamma_4$, $\forall {y_1^1 \cdots y_m^n}\gamma_4 \in \textit{FS}(\Omega)$, moreover, $\beta_{mn}^2 \in \textit{FS}(\Omega)$ whenever $\exists {x_1 \cdots x_m} \forall {y_1^1 \cdots y_m^n}\gamma_4 \in \textit{FS}(\Omega)$. To complete the proof, it suffices to demonstrate that, for each $u$ with $1 \leq u \leq m$, the formula
	$\theta_u \triangleq \exists {x_u \cdots x_m} \forall {y_1^1 \cdots y_m^n}\gamma_4$ is $\Omega$-finitary.
	Assume that $F = \tuple{\A, \rightarrow}$ is an AAF and $E \subseteq \A$.
	Let $\mathfrak{A} = \tuple{\A, \rightarrow, E}$ and $\rho \in \textit{Nua}(\mathfrak{A}, \theta_u)$. Then, we have $\mathfrak{A} \vDash \neg P(x)[\rho]$, otherwise, since $x$ and $x_u$ are distinct variables, it immediately follows from $\mathfrak{A} \vDash P(x)[\rho]$ that
	\begin{eqnarray*}
		\mathfrak{A} \vDash \forall x_u \exists {x_{u + 1} \cdots x_m} \forall {y_1^1 \cdots y_m^n}\gamma_4[\rho],
	\end{eqnarray*}
	which contradicts that $\rho \in \textit{Nua}(\mathfrak{A}, \theta_u)$.
	Thus, $\mathfrak{A} \vDash \neg P(x)[\rho \set{b/x_u}]$ for each $b \in \A$.
	Then
	\begin{eqnarray*}
		\begin{split}
			&~ \set{b \in \A \mid \mathfrak{A} \vDash \exists {x_{u + 1} \cdots x_m} \forall {y_1^1 \cdots y_m^n}\gamma_4[\rho \set{b/x_u}]} \\
			= &~ \set{b \in \A \bigg|
		 		\begin{array}{*{20}{c}}
					{ \mathfrak{A} \vDash \exists {x_{u + 1} \cdots x_m} \forall {y_1^1 \cdots y_m^n}(Cf(x_1, \cdots, x_m, x) \wedge} \\ 
					{ \bigwedge_{1 \leq i \leq m} (\neg Cf(y_i^1, \cdots, y_i^n, x_i) \vee \bigvee_{1 \leq k \leq n} \neg P(y_i^k)))[\rho \set{b/x_u}]}
				\end{array}
				}\\
			\subseteq &~ \set{b \in \A \mid \mathfrak{A} \vDash Att(x_u, x)[\rho \set{b/x_u}]}\\
			= &~ \set{b \in \A \mid b \rightarrow \rho(x) \text{~in~} F}.
		\end{split}
	\end{eqnarray*}
	Further, we may conclude that $\theta_u$ is $\Omega$-finitary by the same reasoning applied in (I).
	\par (III) Consider the formula $\alpha_{\ell}^3$ defined as
	\begin{eqnarray*}
		\alpha_{\ell}^3 \triangleq \forall x(P(x) \rightarrow \neg \exists {x_1 \cdots x_\ell}(\bigwedge_{1 \leq i \leq \ell}P(x_i) \wedge Cf(x_1, \cdots, x_\ell, x))).
	\end{eqnarray*}
	It is easy to see that $\varepsilon_{cf}^\ell$ is defined by $\alpha_{\ell}^3$, moreover, $\alpha_{\ell}^3$ is equivalent to
	\begin{eqnarray*}
		\beta_{\ell}^3 \triangleq \forall {x~x_1 \cdots x_\ell}(\neg P(x) \vee \neg(\bigwedge_{1 \leq i \leq \ell}P(x_i) \wedge Cf(x_1, \cdots, x_\ell, x))).
	\end{eqnarray*}
	Since $\neg P(x) \vee \neg(\bigwedge_{1 \leq i \leq \ell}P(x_i) \wedge Cf(x_1, \cdots, x_\ell, x))$ is in $\textit{FS}(\Omega)$, by Definition \ref{definition:FS_formulas}, so is $\beta_{\ell}^3$.
	Thus, $\varepsilon_{cf}^\ell$ is $\textit{FS}(\Omega)$-definable.
	\par (IV) This part intends to provide $\textit{FS}(\Omega)$-sentences to define the semantics $\varepsilon^\ell$, where
	\begin{eqnarray*}
		\varepsilon^\ell(F) \triangleq \set{E \subseteq \A \mid N_\ell(E) \subseteq E} \text{~for each AAF~} F = \tuple{\A, \rightarrow}.
	\end{eqnarray*}
	For this purpose, for any positive number $\ell$, set
	\begin{eqnarray*}
		\alpha_{\ell}^4 \triangleq \forall x(\neg \exists {x_1 \cdots x_\ell}(\bigwedge_{1 \leq i \leq \ell}P(x_i) \wedge Cf(x_1, \cdots, x_\ell, x)) \rightarrow P(x)).
	\end{eqnarray*}
	Suppose that $F = \tuple{\A, \rightarrow}$ is an AAF and $E \subseteq \A$.
	Let $\mathfrak{A} = \tuple{\A, \rightarrow, E}$.
	\begin{eqnarray*}
		\begin{split}
			&~ \mathfrak{A} \vDash \alpha_{\ell}^4\\
			\Leftrightarrow &~ \text{For each~} a \in \A, \mathfrak{A} \vDash \neg \exists {x_1 \cdots x_\ell}(\bigwedge_{1 \leq i \leq \ell}P(x_i) \wedge Cf(x_1, \cdots, x_\ell, x))[a] \\
			&~ \text{~implies~} \mathfrak{A} \vDash P(x)[a]. \\
			\Leftrightarrow &~ \text{For each~} a \in \A, a \in N_\ell(E) \text{~implies~} a \in E. \\
			\Leftrightarrow &~ N_\ell(E) \subseteq E\\
			\Leftrightarrow &~ E \in \varepsilon^\ell(F).
		\end{split}
	\end{eqnarray*}
	Thus, $\varepsilon^\ell$ may be defined by $\alpha_{\ell}^4$.
	Moreover, it is obvious that $\alpha_{\ell}^4$ is equivalent to
	\begin{eqnarray*}
		\beta_{\ell}^4 \triangleq \forall {x} \exists{x_1 \cdots x_\ell}((\bigwedge_{1 \leq i \leq \ell}P(x_i) \wedge Cf(x_1, \cdots, x_\ell, x)) \vee P(x)).
	\end{eqnarray*}
	In order to show $\beta_{\ell}^4 \in \textit{FS}(\Omega)$, it suffices to prove that $\beta_u \in \textit{FS}(\Omega)$ for each $u$ with $1 \leq u \leq \ell$, where
	\begin{eqnarray*}
		\beta_u \triangleq \exists {x_u, x_{u + 1} \cdots x_\ell} ((\bigwedge_{1 \leq i \leq \ell}P(x_i) \wedge Cf(x_1, \cdots, x_\ell, x)) \vee P(x)).
	\end{eqnarray*}
	This may be demonstrated similarly to (I) based on the fact that $\mathfrak{A} \vDash \neg P(x)[\rho]$ for any model $\mathfrak{A}$ based on AAFs and $\rho \in \textit{Nua}(\mathfrak{A}, \beta_u)$.
	The detailed verification is left to the reader. 
	\par (V) By (I), (II), (III) and (IV), it is not difficult to see that $\textit{Def}^{mn}$, $\varepsilon_{cf}^{\ell}$, $\varepsilon_{ad}^{\ell mn}$, $\varepsilon_{co}^{\ell mn}$ and $\varepsilon_{stb}^{\ell mn}$ can be $\textit{FS}(\Omega)$-defined by $\Sigma_{mn\textit{Def}} \triangleq \set{\beta_{mn}^1}$, $\Sigma_{cf}^\ell \triangleq \set{\beta_\ell^3}$, $\Sigma_{ad}^{\ell mn} \triangleq \set{\beta_\ell^3, \beta_{mn}^1}$, $\Sigma_{co}^{\ell mn} \triangleq \set{\beta_\ell^3, \beta_{mn}^1, \beta_{mn}^2}$, and $\Sigma_{stb}^{\ell mn} \triangleq \set{\beta_n^3, \beta_m^3, \beta_n^4, \beta_m^4, \beta_\ell^3}$, respectively.
\end{proof}
The result above indicates that the segment $\textit{FS}(\Omega)$ of $\Lan'$ associated with finitary AAFs is nontrivial, which is enough to formalize some important semantics of AAF, including $\varepsilon_{cf}^{\ell}$, $\varepsilon_{ad}^{\ell mn}$, $\varepsilon_{co}^{\ell mn}$ and $\varepsilon_{stb}^{\ell mn}$. 
\subsection{Ultraproduct of models and reduced meet modulo an ultrafilter}
This subsection will demonstrate that any extension-based semantics defined by $\textit{FS}(\Omega)$-sentences must be closed under reduced meets modulo any ultrafilter. In the following, some concepts and results in model theory will be referred to, which are recalled below. Here, these concepts and results are given in a limited form in which neither function nor constant symbol is involved. For a fuller treatment, we refer the reader to \cite{Chang1990Book}.
\par Given nonempty sets $I$ and $\A$, the set of all functions from $I$ to $\A$ is denoted by $A^I$. For any ultrafilter $D$ over $I$, the binary relation $\equiv_D$ on $A^I$ is defined as, for any $f, g \in A^I$,
\begin{eqnarray*}
	f \equiv_D g \text{~iff~} \set{i \in I \mid f(i) = g(i)} \in D.
\end{eqnarray*}
It is well-known that the relation $\equiv_D$ is an equivalent relation on $A^I$ \cite{Chang1990Book}. In the following, for any $f \in A^I$, the equivalence class $\set{g \mid g \in A^I \text{~and~} f \equiv_D g}$ is denoted by $f_D$. Following the standard definition (see \cite{Chang1990Book}), an ultraproduct of a family of $F$-based models is defined below.
\begin{definition}[Ultraproduct of $F$-based Models] \label{definition:ultraproduct}
	Given an AAF $F = \tuple{\A, \rightarrow}$, let $\mathfrak{A}_i = \tuple{\A, \rightarrow, E_i} (i \in I \neq \emptyset)$ be $F$-based models of the language $\Lan'$ and $D$ an ultrafilter over $I$.
	The ultraproduct of these $\mathfrak{A}_i$'s modulo $D$, denoted by $\prod_D \mathfrak{A}_i$, is defined as $\prod_D \mathfrak{A}_i \triangleq \tuple{\prod_D \A, \rightarrow_D, E_D}$, where
	\begin{itemize}
		\item[$D_1$.] $\prod_D \A \triangleq \set{f_D \mid f \in \A^I}$.
		\item[$D_2$.] For any $f_D$, $g_D \in \prod_D \A$, $f_D \rightarrow_D g_D$ iff $\set{i \in I \mid f(i) \to g(i) \text{~in~} F} \in D$.
		\item[$D_3$.] For any $f_D \in \prod_D \A$, $f_D \in E_D$ iff $\set{i \in I \mid f(i) \in E_i} \in D$.
	\end{itemize}
\end{definition}
As usual, the model $\prod_D \mathfrak{A}_i$ is well-defined. In other words, the definitions of relations $\rightarrow_D$ and $E_D$ don't depend on the representatives, see, Proposition 4.1.7 in \cite{Chang1990Book}. The next well-known result is the fundamental theorem of ultraproducts (see, e.g., \cite{Chang1990Book}). Here we only list the clauses that we need.
\begin{theorem}[The Fundamental Theorem of Ultraproducts]\label{theorem:ft_ultraproducts}
	Let $\prod_D \mathfrak{A}_i$ be the ultraproduct of $\mathfrak{A}_i$'s modulo an ultrafilter $D$ defined in Definition \ref{definition:ultraproduct}. Then
	\begin{itemize}
		\item[a.] For any formula $\phi(x_1 \dots x_n)$ of $\Lan'$ and elements $f_D^1, \dots, f_D^n \in \prod_D \A$, we have\\
		$\prod_D \mathfrak{A}_i \vDash \phi[f_D^1 \dots f_D^n] \text{~iff~} \set{i \in I \mid \mathfrak{A}_i \vDash \phi[f^1(i) \dots f^n(i)]} \in D$.
		\item[b.] For any sentence $\phi$ of $\Lan'$, $\prod_D \mathfrak{A}_i \vDash \phi \text{~iff~} \set{i \in I \mid \mathfrak{A}_i \vDash \phi} \in D$.
	\end{itemize}
\end{theorem}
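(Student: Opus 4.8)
The plan is to prove clause (a) by induction on the complexity of the formula $\phi$, and then to obtain clause (b) as the special case in which $\phi$ is a sentence: with no free variables the list $f_D^1,\dots,f_D^n$ is vacuous and (a) collapses to exactly the statement of (b). For the induction I would fix a tuple $f_D^1,\dots,f_D^n \in \prod_D \A$ and abbreviate $J_\phi \triangleq \set{i \in I \mid \mathfrak{A}_i \vDash \phi[f^1(i)\dots f^n(i)]}$, so that the goal throughout is the single equivalence $\prod_D \mathfrak{A}_i \vDash \phi[f_D^1\dots f_D^n]$ iff $J_\phi \in D$. For the atomic base cases: for $x_j \equiv x_k$ the equivalence is immediate from the definition of $\equiv_D$ on $\A^I$, since $\prod_D \mathfrak{A}_i \vDash (x_j \equiv x_k)[f_D^j, f_D^k]$ means $f_D^j = f_D^k$, i.e. $f^j \equiv_D f^k$, i.e. $J_{x_j \equiv x_k} = \set{i \mid f^j(i) = f^k(i)} \in D$; for $P(x_j)$ and $Att(x_j,x_k)$ the equivalence is nothing but clauses $D_3$ and $D_2$ of Definition \ref{definition:ultraproduct}, which were designed precisely so that membership in $E_D$ and the relation $\rightarrow_D$ are governed by $D$-largeness of the corresponding index sets.

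For the Boolean connectives I would invoke only the ultrafilter axioms $U_1$--$U_4$. For $\neg\psi$ I use $J_{\neg\psi} = I - J_\psi$ together with the property $X \in D$ iff $I - X \notin D$; for $\psi \wedge \chi$ I use $J_{\psi \wedge \chi} = J_\psi \cap J_\chi$ and the fact that $X \cap Y \in D$ iff both $X, Y \in D$ (from $U_2$, $U_3$); for $\psi \vee \chi$ I use $J_{\psi \vee \chi} = J_\psi \cup J_\chi$ together with $U_4$ (and $U_3$), which gives $X \cup Y \in D$ iff $X \in D$ or $Y \in D$. In each case the induction hypothesis converts satisfaction in $\prod_D \mathfrak{A}_i$ into $D$-membership of $J_\psi$, $J_\chi$, and the Boolean set identity then closes the step mechanically.

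The quantifier step is the heart of the argument. The existential case splits into two directions. One is easy: if $\prod_D \mathfrak{A}_i \vDash \exists x\,\psi$, a witness $g_D$ gives, by the induction hypothesis, $J_{\psi} \in D$ for the extended tuple, and since that set is contained in $J_{\exists x\,\psi}$, upward closure $U_3$ yields $J_{\exists x\,\psi} \in D$. The converse is the delicate point: assuming $J_{\exists x\,\psi} \in D$, I must manufacture a \emph{single} element $g_D \in \prod_D \A$. For each $i \in J_{\exists x\,\psi}$ the model $\mathfrak{A}_i$ satisfies $\exists x\,\psi$, so by the axiom of choice I select $g(i) \in \A$ with $\mathfrak{A}_i \vDash \psi[g(i), f^1(i)\dots]$, defining $g(i)$ arbitrarily off $J_{\exists x\,\psi}$. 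Then $J_{\exists x\,\psi}$ is contained in the index set witnessing $\psi$ under $g$, so that set lies in $D$ by $U_3$, and the induction hypothesis delivers $\prod_D \mathfrak{A}_i \vDash \psi[g_D,\dots]$, hence $\prod_D \mathfrak{A}_i \vDash \exists x\,\psi$. The universal case I would handle dually, either by the same witness-selection argument applied to the negation or simply by reducing $\forall x\,\psi$ to $\neg\exists x\,\neg\psi$ and reusing the already-established negation and existential steps.

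Two points deserve attention before and after the induction. First, the very statement of (a) presupposes that $\prod_D \mathfrak{A}_i$ is well defined, i.e. that $\rightarrow_D$ and $E_D$ do not depend on the representatives $f$ of $f_D$; this is the remark following Definition \ref{definition:ultraproduct} and rests again on $U_2$, $U_3$, so I would dispatch it first. Second, the main obstacle is exactly the backward direction of the existential step, where the axiom of choice is unavoidable and the upward closure of the ultrafilter is precisely what allows a pointwise choice of witnesses to assemble into a global element of $\prod_D \A$; every remaining case is a transcription of a single ultrafilter axiom. Since this is the classical {\L}o\'s theorem, I would keep the induction at this level and defer the routine verifications to \cite{Chang1990Book}.
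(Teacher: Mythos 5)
Your proposal is correct: it is the classical induction-on-formula-complexity proof of {\L}o\'s's theorem, with the only delicate step (the backward direction of the existential case, requiring the axiom of choice to assemble pointwise witnesses into an element of $\prod_D \A$) handled properly, and the preliminary well-definedness check and the derivation of (b) from (a) both in order. The paper itself gives no proof of this statement --- it quotes it as a well-known result and defers to \cite{Chang1990Book} --- and your argument is precisely the standard proof found there, so the two approaches coincide.
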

The next lemma is a crucial step in proving the main result of this section, which provides a connection between the ultraproduct of a family of models $\mathfrak{A}_i = \tuple{\A, \rightarrow, E_i}$ $(i \in I)$ modulo an ultrafilter $D$ and the model $\mathfrak{A} \triangleq \tuple{\A, \rightarrow, \bigcap_D E_i}$, where $\bigcap_D E_i$ is the reduced meets of $E_i$'s modulo $D$, see Definition \ref{definition:reduced meet}.
\begin{lemma} \label{lemma:RM_ultraproduct models}
	Let $\Omega$ be a class of AAFs and $F = \tuple{\A, \rightarrow} \in \Omega$, and let $\mathfrak{A}_i = \tuple{\A, \rightarrow, E_i}$ with $E_i \subseteq \A (i \in I \neq \emptyset)$ be $F$-based models and $D$ an ultrafilter over $I$. Then, for any formula $\phi(x_1 \dots x_n) \in \textit{FS}(\Omega)$, $a_1, \dots, a_n \in \A$ and assignment $\rho$ such that $\rho(x_i) = a_i$ for $1 \leq i \leq n$,
	\begin{eqnarray*}
		\prod_D \mathfrak{A}_i \vDash \phi[d_D^{a_1} \cdots d_D^{a_n}] \text{~implies~} \mathfrak{A} \vDash \phi[\rho].
	\end{eqnarray*}
	In particular, for any sentence $\alpha \in \textit{FS}(\Omega)$,
	\begin{eqnarray*}
		\prod_D \mathfrak{A}_i \vDash \alpha \text{~implies~} \mathfrak{A} \vDash \alpha.
	\end{eqnarray*}
	Here $\mathfrak{A} \triangleq \tuple{\A, \rightarrow, \bigcap_D E_i}$, and, for any $a \in \A$, $d^a \triangleq \lambda_{i \in I}.a$, the constant function mapping each $i \in I$ to $a$.
\end{lemma}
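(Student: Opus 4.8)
The plan is to prove the displayed implication by induction on the construction of $\phi$ according to the closure conditions $FS_1$--$FS_4$ of Definition \ref{definition:FS_formulas}, the sentence case then being the instance $n = 0$. Since every $\textit{FS}(\Omega)$-formula is prenex with a quantifier-free matrix, the induction splits into the quantifier-free layer ($FS_1$, $FS_2$) and the two quantifier steps ($FS_3$, $FS_4$); I expect a genuine implication (not an equivalence) only at the quantifier steps, while the quantifier-free layer will in fact yield a biconditional. For a quantifier-free $\phi$ I would argue directly from Theorem \ref{theorem:ft_ultraproducts}($a$): its left-hand side is equivalent to $\set{i \in I \mid \mathfrak{A}_i \vDash \phi[a_1 \cdots a_n]} \in D$, and a routine sub-induction on the Boolean structure shows this index set lies in $D$ exactly when $\mathfrak{A} \vDash \phi[\rho]$. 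The atomic cases are where the interpretation of $P$ by $\bigcap_D E_i$ enters: for $P(x)$ this is precisely Definitions \ref{definition:reduced meet} and \ref{definition:ultraproduct}($D_3$), whereas for $Att(x,y)$ and $x \equiv y$ the relevant index set is either $I$ or $\emptyset$ because neither the attack relation nor equality varies with $i$; the Boolean steps then use the ultrafilter laws $U_2$, $U_4$ together with complementation.

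For the universal step $FS_4$, where $\phi = \forall x_i \alpha$, the argument is short. Assuming $\prod_D \mathfrak{A}_i \vDash \forall x_i \alpha[d_D^{a_1} \cdots d_D^{a_n}]$ and fixing an arbitrary $a \in \A$, I would instantiate the universal statement at the element $d_D^{a} \in \prod_D \A$ to get $\prod_D \mathfrak{A}_i \vDash \alpha[\cdots d_D^{a} \cdots]$, and then apply the induction hypothesis for $\alpha$ with the assignment $\rho \set{a/x_i}$ to obtain $\mathfrak{A} \vDash \alpha[\rho \set{a/x_i}]$. As $a$ is arbitrary, $\mathfrak{A} \vDash \forall x_i \alpha[\rho]$. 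The only fact used beyond the induction hypothesis is that every argument of $\A$ gives rise to a constant-function element $d_D^{a}$ of the ultraproduct, so the universal quantifier over $\prod_D \A$ transfers to $\A$.

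The existential step $FS_3$, with $\phi = \exists x_i \alpha$ and $\exists x_i \alpha$ being $\Omega$-finitary, is the crux and the main obstacle. From $\prod_D \mathfrak{A}_i \vDash \exists x_i \alpha[\cdots]$ one obtains, by Theorem \ref{theorem:ft_ultraproducts}($a$), a witness $g_D \in \prod_D \A$ with $J \triangleq \set{i \in I \mid \mathfrak{A}_i \vDash \alpha[\cdots g(i) \cdots]} \in D$; the difficulty is that $g$ need not be equivalent modulo $D$ to any constant function, so the induction hypothesis cannot be applied to $g_D$ directly. I would resolve this by the ultrafilter dichotomy applied to the partition of $I$ by the set $I_0 \triangleq \set{i \in I \mid \rho \in \textit{Nua}(F, E_i, \exists x_i \alpha)}$ of indices at which $\rho$ is non-universal. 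If $I \setminus I_0 \in D$, then on $J \cap (I \setminus I_0)$ universality makes every argument a witness, so any fixed $a_0 \in \A$ (recall $\A \neq \emptyset$) gives $\set{i \mid \mathfrak{A}_i \vDash \alpha[\cdots a_0 \cdots]} \in D$. If instead $I_0 \in D$, then for each $i \in J \cap I_0$ the value $g(i)$ lies in the set
\begin{eqnarray*}
W \triangleq \bigcup_{E \subseteq \A} \set{b \in \A \mid \rho \in \textit{Nua}(F, E, \exists x_i \alpha) \text{~and~} \tuple{\A, \rightarrow, E} \vDash \alpha[\rho \set{b/x_i}]},
\end{eqnarray*}
which Definition \ref{definition:omega_finitary} guarantees is finite precisely because $\exists x_i \alpha$ is $\Omega$-finitary. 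Hence $J \cap I_0 \subseteq \bigcup_{w \in W} \set{i \mid g(i) = w} \in D$, and since $W$ is finite the finite-union property of $D$ supplies some $w_0 \in W$ with $\set{i \mid g(i) = w_0} \in D$, i.e. $g_D = d_D^{w_0}$.

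In either subcase I will have produced an argument $a \in \A$ (namely $a_0$ or $w_0$) with $\prod_D \mathfrak{A}_i \vDash \alpha[\cdots d_D^{a} \cdots]$ (in the first subcase via Theorem \ref{theorem:ft_ultraproducts}($a$), in the second because the witness itself equals $d_D^{w_0}$), and the induction hypothesis for $\alpha$ then yields $\mathfrak{A} \vDash \alpha[\rho \set{a/x_i}]$, whence $\mathfrak{A} \vDash \exists x_i \alpha[\rho]$. This completes the induction. The entire weight of the argument rests on this one collapse of an arbitrary ultraproduct witness to a constant-function witness modulo $D$, which is exactly the phenomenon that the $\Omega$-finitary restriction on existential quantifiers in $\textit{FS}(\Omega)$ was designed to enable; the universal quantifier needs no such restriction because constant functions suffice to read it off, which is why only condition $FS_3$ (and not $FS_4$) carries the finiteness hypothesis.
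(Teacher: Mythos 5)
Your proposal is correct and follows essentially the same route as the paper's proof: a quantifier-free base case settled (as a biconditional) by the fundamental theorem of ultraproducts, a universal step handled by instantiating at the constant-function elements $d_D^{a}$, and an existential step resolved by the dichotomy on whether the assignment is non-universal at $D$-many indices, with $\Omega$-finitariness supplying the finite set of candidate witnesses and the ultrafilter's finite-union property selecting one. The only cosmetic difference is in the non-universal subcase, where you collapse the witness function $g$ itself to a constant (concluding $g_D = d_D^{w_0}$), whereas the paper instead finds some $b_0$ that witnesses $\alpha$ at $D$-many indices via its auxiliary sets $S$ and $I_b$; likewise, in the universal subcase you apply the induction hypothesis to a directly chosen witness rather than routing through the $\forall$-case — the same finiteness argument packaged slightly differently.
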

\begin{proof}
	Since each formula in $\textit{FS}(\Omega)$ is in prenex form, it suffices to show the following two claims.\\
	\\ 
	\textbf{Claim 1~~} For any formula $\phi(x_1 \dots x_n)$ of $\Lan'$ containing no quantifier,
	\begin{eqnarray*}
		\prod_D \mathfrak{A}_i \vDash \phi[d_D^{a_1} \cdots d_D^{a_n}] \text{~iff~} \mathfrak{A} \vDash \phi[\rho].
	\end{eqnarray*}
	We proceed by induction on the formulas.
	\par (a) $\phi = x_{i_1} \equiv x_{i_2}$ with $1 \leq i_1, i_2 \leq n$.
	\begin{eqnarray*}
	\mathfrak{A} \vDash x_{i_1} \equiv x_{i_2}[\rho] &\Leftrightarrow& a_{i_1} = a_{i_2}\\
	&\Leftrightarrow& d^{a_{i_1}} = d^{a_{i_2}}\\
	&\overset{*_1} \Longleftrightarrow & d_D^{a_{i_1}} = d_D^{a_{i_2}}\\
	&\Leftrightarrow& \prod_D \mathfrak{A}_i \vDash x_{i_1} \equiv x_{i_2}[d_D^{a_1} \cdots d_D^{a_n}].
	\end{eqnarray*}
	$*_1$: Clearly, the set $J \triangleq \set{i \in I \mid d^{a_{i_1}}(i) = d^{a_{i_2}}(i)}$ is either $I$ or $\emptyset$, which depends on whether it holds that $a_{i_1} = a_{i_2}$.
	If $d_D^{a_{i_1}} = d_D^{a_{i_2}}$ then $d^{a_{i_1}} \equiv_D d^{a_{i_2}}$, i.e., $J \in D$, and hence $J = I$, that is, $d^{a_{i_1}} = d^{a_{i_2}}$. Another direction is trivial.
	\par (b) $\phi = Att(x_{i_1}, x_{i_2})$ with $1 \leq i_1, i_2 \leq n$.
	\begin{eqnarray*}
	\mathfrak{A} \vDash Att(x_{i_1}, x_{i_2})[\rho] &\Leftrightarrow& a_{i_1} \rightarrow a_{i_2} \text{~in~} F\\
	&\Leftrightarrow& \set{i \in I \mid d^{a_{i_1}}(i) \rightarrow d^{a_{i_2}}(i) \text{~in~} F} = I\\
	&\overset{*_2} \Longleftrightarrow & \set{i \in I \mid d^{a_{i_1}}(i) \rightarrow d^{a_{i_2}}(i) \text{~in~} F} \in D\\
	&\Leftrightarrow& d_D^{a_{i_1}} \rightarrow_D d_D^{a_{i_2}}\\
	&\Leftrightarrow& \prod_D \mathfrak{A}_i \vDash Att(x_{i_1}, x_{i_2})[d_D^{a_1} \cdots d_D^{a_n}].
	\end{eqnarray*}
	Similar to $*_1$, the step $*_2$ holds because $\set{i \in I \mid d^{a_{i_1}}(i) \rightarrow d^{a_{i_2}}(i) \text{~in~} F}$ is either $I$ or $\emptyset$.
	\par (c) $\phi = P(x_{j})$ with $1 \leq j \leq n$.
	\begin{eqnarray*}
	\mathfrak{A} \vDash P(x_{j})[\rho] &\Leftrightarrow& a_{j} \in \bigcap_D E_i\\
	&\Leftrightarrow& \set{i \in I \mid a_{j} \in E_i} \in D\\
	&\Leftrightarrow& \set{i \in I \mid d^{a_{j}}(i) \in E_i} \in D\\
	&\Leftrightarrow& d_D^{a_{j}} \in E_D\\
	&\Leftrightarrow& \prod_D \mathfrak{A}_i \vDash P(x_{j})[d_D^{a_1} \cdots d_D^{a_n}].
	\end{eqnarray*}
	\par (d) $\phi$ is of the form $\neg \alpha$, $\alpha_1 \vee \alpha_2$ or $\alpha_1 \wedge \alpha_2$.\\
	In this situation, the proof is straightforward by using IH.\\
	\\ \textbf{Claim 2~~} For any prenex formula $\phi = Q_1 x_{i_1} \cdots Q_m x_{i_m} \beta(x_1 \dots x_n) \in \textit{FS}(\Omega)$ with $1 \leq i_1, \cdots, i_m \leq n$, 
	\begin{eqnarray*}
		\prod_D \mathfrak{A}_i \vDash \phi[d_D^{a_1} \cdots d_D^{a_n}] \text{~implies~} \mathfrak{A} \vDash \phi[\rho],
	\end{eqnarray*}
	where $Q_i (1 \leq i \leq m)$ is either $\forall$ or $\exists$, and $\beta(x_1 \dots x_n)$ is the matrix of $\phi$.\\
	\\
	We proceed by induction on $m$. If $m = 0$ then $\phi$ contains no quantifier, and hence the conclusion holds due to Claim 1.
	Next we deal with the inductive step by distinguishing two cases based on $Q_1$. Put $\alpha(x_1 \dots x_n) \triangleq Q_2 x_{i_2} \cdots Q_m x_{i_m} \beta(x_1 \dots x_n)$. It is easy to see that $\alpha(x_1 \dots x_n) \in \textit{FS}(\Omega)$.\\
	\\Case 1 : $Q_1$ is $\forall$.\\
	Then $\phi = \forall {x_j} \alpha(x_1 \dots x_n)$ for some $1 \leq j \leq n$. Let $\prod_D \mathfrak{A}_i \vDash \phi[d_D^{a_1} \cdots d_D^{a_n}]$.  Hence, for each $a \in \A$, we have
	\begin{eqnarray*}
		\prod_D \mathfrak{A}_i \vDash \alpha[d_D^{a_1} \cdots d_D^{a_{j - 1}}~d_D^{a}~d_D^{a_{j + 1}} \cdots d_D^{a_n}].
	\end{eqnarray*}
	Thus, by IH, $\mathfrak{A} \vDash \alpha[\rho \set{a/x_j}]$ for each $a \in \A$.
	Consequently, we get $\mathfrak{A} \vDash \forall {x_j} \alpha[\rho]$.\\
	\\Case 2 : $Q_1$ is $\exists$.\\
	Then $\phi = \exists {x_j} \alpha(x_1 \dots x_n)$ for some $1 \leq j \leq n$. Assume $\prod_D \mathfrak{A}_i \vDash \phi[d_D^{a_1} \cdots d_D^{a_n}]$. Put
	\begin{eqnarray*}
		I_1 \triangleq \set{i \in I \mid \rho \in \textit{Nua}(\mathfrak{A}_i, \phi)} \text{~and~} I_2 \triangleq \set{i \in I \mid \rho \notin \textit{Nua}(\mathfrak{A}_i, \phi)}.
	\end{eqnarray*}
	Thus, $I_1 \cup I_2 = I$, and hence, either $I_1 \in D$ or $I_2 \in D$. In the following, we consider two cases.\\
	\\Case 2.1 : $I_1 \in D$.\\
	Since $\prod_D \mathfrak{A}_i \vDash \phi[d_D^{a_1} \cdots d_D^{a_n}]$, there exists $f_D \in \prod_D \A$ such that
	\begin{eqnarray*}
		\prod_D \mathfrak{A}_i \vDash \alpha[d_D^{a_1} \cdots d_D^{a_{j - 1}}~f_D~d_D^{a_{j + 1}} \cdots d_D^{a_n}].
	\end{eqnarray*}
	By Theorem \ref{theorem:ft_ultraproducts}, we get
	\begin{eqnarray*}
			\set{i \in I \mid \mathfrak{A}_i \vDash \alpha[d^{a_1}(i) \cdots d^{a_{j - 1}}(i)~f(i)~d^{a_{j + 1}}(i) \cdots d^{a_n}(i)]} \in D.
	\end{eqnarray*}
	Since $\rho(x_k) = a_k$ and $d^{a_k}(i) = a_k$ for each $i \in I$ and $1 \leq k \leq n$, by the assertion above, we have
	\begin{eqnarray}
		I_3 \triangleq \set{i \in I \mid \mathfrak{A}_i \vDash \alpha[\rho \set{f(i)/x_j}]} \in D. \label{formulas: I3}
	\end{eqnarray}
	Then $I_4 \triangleq I_1 \cap I_3 \in D$.
	Put 
	\begin{eqnarray*}
		S \triangleq \bigcup_{i \in I_4} \set{b \in \A \mid \mathfrak{A}_i \vDash \alpha[\rho \set{b/x_j}]}.
	\end{eqnarray*}
	Due to $\phi \in \textit{FS}(\Omega)$, by Definition \ref{definition:FS_formulas}, the formula $\exists {x_j} \alpha$ is $\Omega$-finitary.
	Moreover, since $I_4 \subseteq I_1$, we have $\rho \in \textit{Nua}(\mathfrak{A}_i, \exists {x_j} \alpha)$ for each $i \in I_4$. Thus
	\begin{eqnarray*}
		S &=& \bigcup_{i \in I_4} \set{b \in \A \mid \tuple{\A, \rightarrow, E_i} \vDash \alpha[\rho \set{b/x_j}] \text{~and~} \rho \in \textit{Nua}(\mathfrak{A}_i, \exists {x_j} \alpha)} \\
		&\subseteq& \bigcup_{E \subseteq \A} \set{b \in \A \mid \tuple{\A, \rightarrow, E} \vDash \alpha[\rho \set{b/x_j}] \text{~and~} \rho \in \textit{Nua}(F, E, \exists {x_j} \alpha)}.
	\end{eqnarray*}
	Further, since $\exists {x_j} \alpha$ is $\Omega$-finitary, we have
	\begin{eqnarray*}
		|S| \leq |\bigcup_{E \subseteq \A} \set{b \in \A \mid \tuple{\A, \rightarrow, E} \vDash \alpha[\rho \set{b/x_j}] \text{~and~} \rho \in \textit{Nua}(F, E, \exists {x_j} \alpha)}| < \omega.
	\end{eqnarray*}
	For each $b \in S$, set
	\begin{eqnarray*}
		I_b \triangleq \set{i \in I_4 \mid \mathfrak{A}_i \vDash \alpha[\rho \set{b/x_j}]}.
	\end{eqnarray*}
	Clearly, for each $i \in I_4 \subseteq I_3$, $f(i) \in S$ and $i \in I_{f(i)}$ due to (\ref{formulas: I3}).
	Thus $I_4 \subseteq \bigcup_{b \in S} I_b$, and hence $\bigcup_{b \in S} I_b \in D$.
	Further, due to $|S| < \omega$, we have $I_{b_0} \in D$ for some $b_0 \in S$.
	So, it follows from $I_{b_0} \subseteq I_5 \triangleq \set{i \in I \mid \mathfrak{A}_i \vDash \alpha[\rho \set{b_0/x_j}]}$
	that $I_5 \in D$.
	Then, by Theorem \ref{theorem:ft_ultraproducts}, we obtain
	$\prod_D \mathfrak{A}_i \vDash \alpha[d_D^{a_1} \cdots d_D^{a_{j - 1}}~d_D^{b_0}~d_D^{a_{j + 1}} \cdots d_D^{a_n}]$.
	Thus, by IH, it follows that $\mathfrak{A} \vDash \alpha[\rho \set{b_0/x_j}]$. 
	Hence, $\mathfrak{A} \vDash \exists {x_j} \alpha[\rho]$.\\
	\\Case 2.2 : $I_2 \in D$.\\
	Since $\prod_D \mathfrak{A}_i \vDash \exists {x_j} \alpha[d_D^{a_1} \cdots d_D^{a_n}]$, by Theorem \ref{theorem:ft_ultraproducts} and $\rho(x_i) = a_i$ for $1 \leq i \leq n$, $I_6 \triangleq \set{i \in I \mid \mathfrak{A}_i \vDash \exists {x_j} \alpha[\rho]} \in D$.
	Then, due to $I_2 \in D$, we get
	\begin{eqnarray*}
		I_2 \cap I_6 = \set{i \in I \mid \mathfrak{A}_i \vDash \exists {x_j} \alpha[\rho] \text{~and~} \rho \notin \textit{Nua}(\mathfrak{A}_i, \exists {x_j} \alpha)} \in D.
	\end{eqnarray*}
	Further, by Definition \ref{definition:universal assignment}, it follows that $\mathfrak{A}_i \vDash \forall x_j \alpha[\rho]$ for each $i \in I_2 \cap I_6$.
	Thus, $I_2 \cap I_6 \subseteq \set{i \in I \mid \mathfrak{A}_i \vDash \forall {x_j} \alpha[\rho]}$, and hence $\set{i \in I \mid \mathfrak{A}_i \vDash \forall {x_j} \alpha[\rho]} \in D$.
	Then, since $\rho(x_k) = a_k = d^{a_k}(i)$ for $1 \leq k \leq n$ and $i \in I$, we have
	\begin{eqnarray*}
		\set{i \in I \mid \mathfrak{A}_i \vDash \forall {x_j} \alpha[d^{a_1}(i) \cdots d^{a_n}(i)]} \in D.
	\end{eqnarray*}
	Thus, by Theorem \ref{theorem:ft_ultraproducts}, 
	\begin{eqnarray*}
		\prod_D \mathfrak{A}_i \vDash \forall {x_j} \alpha[d_D^{a_1} \cdots d_D^{a_n}].
	\end{eqnarray*}
	Moreover, $\forall {x_j} \alpha(x_1 \cdots x_n) \in \textit{FS}(\Omega)$ due to $\alpha(x_1 \cdots x_n) \in \textit{FS}(\Omega)$. Then, by Case 1, we get $\mathfrak{A} \vDash \forall {x_j} \alpha[\rho]$.
	Thus, $\mathfrak{A} \vDash \exists {x_j} \alpha[\rho]$, as desired.	
\end{proof}
By Theorem \ref{theorem:ft_ultraproducts} and Lemma \ref{lemma:RM_ultraproduct models}, the following result follows immediately, which may be regarded as a generalization of Lemma \ref{lemma:RMMU_ultrafilter} in a model-theoretical style.
\begin{corollary} \label{corollary:RM_D_N}
	Let $\Omega$ be a class of AAFs and $F = \tuple{\A, \rightarrow} \in \Omega$, and let $\mathfrak{A}_i = \tuple{\A, \rightarrow, E_i} (i \in I \neq \emptyset)$ be $F$-based models and $D$ an ultrafilter over $I$. Then, for any sentence $\phi \in \textit{FS}(\Omega)$, $\set{i \in I \mid \mathfrak{A}_i \vDash \phi} \in D$ implies $\tuple{\A, \rightarrow, \bigcap_D E_i} \vDash \phi$.
\end{corollary}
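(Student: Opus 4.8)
The plan is to chain together the two results that immediately precede the statement, since the corollary is exactly their composition. First I would fix the hypothesis by naming the witnessing set: suppose $J \triangleq \set{i \in I \mid \mathfrak{A}_i \vDash \phi} \in D$, where each $\mathfrak{A}_i = \tuple{\A, \rightarrow, E_i}$ is an $F$-based model of $\Lan'$ and $\phi$ is a sentence lying in $\textit{FS}(\Omega)$. The whole point is that both the ultraproduct construction and the reduced-meet construction are built over the same carrier $\A$ and the same attack relation $\rightarrow$, so the two constructions can be compared directly; I would record this alignment at the outset so the two cited lemmas apply without adjustment.

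The second step is to pass to the ultraproduct. Applying clause ($b$) of the Fundamental Theorem of Ultraproducts (Theorem \ref{theorem:ft_ultraproducts}) to the sentence $\phi$, the condition $J \in D$ is precisely equivalent to $\prod_D \mathfrak{A}_i \vDash \phi$. This converts the ``majority'' statement about the family $\set{\mathfrak{A}_i \mid i \in I}$ into a single satisfaction fact about the ultraproduct model $\prod_D \mathfrak{A}_i = \tuple{\prod_D \A, \rightarrow_D, E_D}$.

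The third step transfers satisfaction from the ultraproduct down to the intended model $\mathfrak{A} \triangleq \tuple{\A, \rightarrow, \bigcap_D E_i}$. This is exactly the sentence form of Lemma \ref{lemma:RM_ultraproduct models}, which asserts that $\prod_D \mathfrak{A}_i \vDash \alpha$ implies $\mathfrak{A} \vDash \alpha$ for every sentence $\alpha \in \textit{FS}(\Omega)$. Instantiating $\alpha \triangleq \phi$ and combining with the previous step yields $\tuple{\A, \rightarrow, \bigcap_D E_i} \vDash \phi$, which is the desired conclusion.

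Since all of the real work is already carried by Lemma \ref{lemma:RM_ultraproduct models}, there is no genuine obstacle here; the only point deserving attention is \emph{why} the conclusion is a one-sided implication rather than a biconditional. This asymmetry is inherited directly from Lemma \ref{lemma:RM_ultraproduct models}, whose quantifier case (specifically Case 2 of its proof, for existential $\textit{FS}(\Omega)$-formulas) guarantees only the direction $\prod_D \mathfrak{A}_i \vDash \alpha \Rightarrow \mathfrak{A} \vDash \alpha$, the $\Omega$-finitariness of the relevant formulas being precisely what secures even this one direction. I would therefore resist claiming an equivalence and simply present the proof as the two-step invocation above.
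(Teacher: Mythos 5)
Your proposal is correct and follows exactly the paper's own route: the paper derives Corollary \ref{corollary:RM_D_N} immediately by combining Theorem \ref{theorem:ft_ultraproducts}($b$) (to convert the hypothesis $\set{i \in I \mid \mathfrak{A}_i \vDash \phi} \in D$ into $\prod_D \mathfrak{A}_i \vDash \phi$) with the sentence form of Lemma \ref{lemma:RM_ultraproduct models} (to transfer satisfaction to $\tuple{\A, \rightarrow, \bigcap_D E_i}$). Your closing remark on why the conclusion is only a one-sided implication, inherited from the existential-quantifier case of Lemma \ref{lemma:RM_ultraproduct models}, is an accurate observation consistent with the paper.
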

We are now in a position to give the main result in this section.
\begin{theorem} \label{theorem:RM_FoL}
	Let $\Omega$ be a class of AAFs and $\varepsilon$ an extension-based semantics. If $\varepsilon$ is $\textit{FS}(\Omega)$-definable w.r.t. $\Omega$, then $\varepsilon$ is closed under reduced meets modulo any ultrafilter w.r.t. $\Omega$.
\end{theorem}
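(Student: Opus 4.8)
The plan is to reduce the statement directly to Corollary \ref{corollary:RM_D_N}, which already packages the model-theoretic content we need. Fix an AAF $F = \tuple{\A, \rightarrow} \in \Omega$; by Definition \ref{definition:closed under reduced meets} it suffices to show that $\varepsilon(F)$ is closed under the operator $\bigcap_D$. So I would take an arbitrary nonempty index set $I$, an ultrafilter $D$ over $I$, and a family $\set{E_i \mid i \in I} \subseteq \varepsilon(F)$, and aim to conclude $\bigcap_D E_i \in \varepsilon(F)$. Since $\varepsilon$ is $\textit{FS}(\Omega)$-definable, Definition \ref{definition:fol_ext} hands us a set $\Sigma_{\varepsilon} \subseteq \textit{FS}(\Omega)$ of $\Lan'$-sentences with the property that, for every $E \subseteq \A$, $\tuple{\A, \rightarrow, E} \vDash \Sigma_{\varepsilon}$ iff $E \in \varepsilon(F)$. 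Hence the whole task collapses to verifying $\tuple{\A, \rightarrow, \bigcap_D E_i} \vDash \phi$ for each $\phi \in \Sigma_{\varepsilon}$.

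The key step is then almost immediate. Because each $E_i \in \varepsilon(F)$, the $F$-based model $\mathfrak{A}_i = \tuple{\A, \rightarrow, E_i}$ satisfies $\Sigma_{\varepsilon}$; in particular $\mathfrak{A}_i \vDash \phi$ holds for every $i \in I$ and every $\phi \in \Sigma_{\varepsilon}$. Consequently $\set{i \in I \mid \mathfrak{A}_i \vDash \phi} = I$, and $I \in D$ by clause $U_1$ of Definition \ref{definition:ultrafilter}. Since $\phi \in \Sigma_{\varepsilon} \subseteq \textit{FS}(\Omega)$ is a sentence of $\textit{FS}(\Omega)$, Corollary \ref{corollary:RM_D_N} applies and delivers $\tuple{\A, \rightarrow, \bigcap_D E_i} \vDash \phi$. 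Letting $\phi$ range over $\Sigma_{\varepsilon}$ gives $\tuple{\A, \rightarrow, \bigcap_D E_i} \vDash \Sigma_{\varepsilon}$, which by $\textit{FS}(\Omega)$-definability means exactly $\bigcap_D E_i \in \varepsilon(F)$, as required. (Equivalently, one can unwind Corollary \ref{corollary:RM_D_N}: the Fundamental Theorem \ref{theorem:ft_ultraproducts} gives $\prod_D \mathfrak{A}_i \vDash \phi$ from $I \in D$, and then Lemma \ref{lemma:RM_ultraproduct models} transfers this to the model built on $\bigcap_D E_i$.)

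I do not expect a genuine obstacle at the level of this theorem: the entire substance has already been carried out in Lemma \ref{lemma:RM_ultraproduct models}, whose proof handles the delicate case of an $\Omega$-finitary existential quantifier by bounding the relevant witness set using the $\Omega$-finitary condition of Definition \ref{definition:omega_finitary}. The only points demanding a moment's care are conceptual rather than technical. First, closure requires only the \emph{one-directional} implication supplied by Corollary \ref{corollary:RM_D_N} (satisfaction of $\Sigma_{\varepsilon}$ is \emph{preserved} when passing to $\bigcap_D E_i$), and this is precisely what we get, so no converse transfer is needed. Second, every sentence used is automatically in $\textit{FS}(\Omega)$ by the hypothesis $\Sigma_{\varepsilon} \subseteq \textit{FS}(\Omega)$, so the $\Omega$-finitary restriction built into $FS_3$ of Definition \ref{definition:FS_formulas} is met without further verification. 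Thus the proof is a clean deduction from the preceding ultraproduct machinery.
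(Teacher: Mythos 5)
Your proposal is correct and follows essentially the same route as the paper: the paper applies Theorem \ref{theorem:ft_ultraproducts} to get $\prod_D \mathfrak{A}_i \vDash \Sigma_\varepsilon$ and then Lemma \ref{lemma:RM_ultraproduct models} to transfer satisfaction to $\tuple{\A, \rightarrow, \bigcap_D E_i}$, which is precisely the composition you invoke as Corollary \ref{corollary:RM_D_N} (and spell out in your parenthetical). The only cosmetic difference is that you cite the packaged corollary rather than its two ingredients directly.
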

\begin{proof}
	Let $F = \tuple{\A, \rightarrow} \in \Omega$ and $E_i \in \varepsilon(F) (i \in I \neq \emptyset)$, and let $D$ be an ultrafilter over $I$. 
	Assume that $\varepsilon$ is defined by a set $\Sigma_\varepsilon$ of sentences in $\textit{FS}(\Omega)$.
	Put 
	\begin{eqnarray*}
		\mathfrak{A}_i \triangleq \tuple{\A, \rightarrow, E_i} \text{~for~} i \in I.
	\end{eqnarray*}
	Then, for each $i \in I$, due to $E_i \in \varepsilon(F)$, we have $\mathfrak{A}_i \vDash \Sigma_\varepsilon$.
	Thus, by Theorem \ref{theorem:ft_ultraproducts}, we get $\prod_D \mathfrak{A}_i \vDash \Sigma_\varepsilon$.
	Since $\Sigma_\varepsilon \subseteq \textit{FS}(\Omega)$, by Lemma \ref{lemma:RM_ultraproduct models}, we have $\tuple{\A, \rightarrow, \bigcap_D E_i} \vDash \Sigma_\varepsilon$.
	Hence, $\bigcap_D E_i \in \varepsilon(F)$ due to Definition \ref{definition:fol_ext}.
\end{proof}
Clearly, Proposition \ref{proposition:FoL_fundamental semantics} and Theorem \ref{theorem:RM_FoL} provide an alternative proof of Theorem \ref{theorem:close under meet_fundamental semantics} except the trivial case that $\varepsilon = \varepsilon_{gr}^{\ell mn}$. It should be pointed out that there exist extension-based semantics that are closed under the operator $\bigcap_D$ but aren't $\textit{FS}(\Omega)$-definable. For example, the semantics $\varepsilon_{gr}^{\ell mn}$ is closed under this operator trivially (Theorem \ref{theorem:close under meet_fundamental semantics}), however, it seems to be not $\textit{FS}(\Omega)$-definable because it is defined in terms of the minimum, whose formalization involves applying quantifiers on extensions, however, the language $\textit{FS}(\Omega)$ is first-order.
\begin{corollary} \label{corollary:epsilon_extensible2}
	Let $\Omega$ be a class of AAFs and $\varepsilon_\sigma$ an extension-based semantics. If $\varepsilon_\sigma$ is $\textit{FS}(\Omega)$-definable w.r.t. $\Omega$, then, for each $F = \tuple{\A, \rightarrow} \in \Omega$, $a \in \A$ and $X \subseteq \A$,
	\begin{itemize}
		\item[a.] $X$ is $\varepsilon_\sigma$-extensible iff $X$ is finitely $\varepsilon_\sigma$-extensible.
		\item[b.] $X \nc{\varepsilon_\sigma}{} a$ iff $X_0 \nc{\varepsilon_\sigma}{} a$ for some finite subset $X_0$ of $X$.
		\item[c.] There exist maximal extensions in $\varepsilon_\sigma(F)$ whenever $\varepsilon_\sigma(F) \neq \emptyset$. Hence, the semantics $\varepsilon_\sigma$ agrees with $\varepsilon_{\textit{max}, \sigma}$ on the universal definability w.r.t. $\Omega$.
		\item[d.] Each element in $\varepsilon_\sigma(F)$ can be extended to a maximal one.
		\item[e.] $\tuple{\varepsilon_\sigma(F), \subseteq}$ is a dcpo whenever $\varepsilon_\sigma(F) \neq 0$.
	\end{itemize}
\end{corollary}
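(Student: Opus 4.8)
The plan is to reduce every clause to a single structural fact, namely that $\varepsilon_\sigma(F)$ is closed under reduced meets modulo any ultrafilter, and then to invoke the metatheorems already established in Sections \ref{Sec: Graded range related semantics} and \ref{Sec: More on RM}. First I would apply Theorem \ref{theorem:RM_FoL}: since $\varepsilon_\sigma$ is $\textit{FS}(\Omega)$-definable w.r.t.\ $\Omega$, it is closed under reduced meets modulo any ultrafilter w.r.t.\ $\Omega$; in particular, for the fixed $F \in \Omega$, the set $\varepsilon_\sigma(F)$ is closed under the operator $\bigcap_D$. This is the only place where the $\textit{FS}(\Omega)$-definability hypothesis is used, and once it is in hand the remaining clauses are essentially instantiations of general results proved earlier with $\varepsilon \triangleq \varepsilon_\sigma$.

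With this closedness secured, clause (a) follows immediately from Theorem \ref{theorem:epsilon_extensible} (compactness of $\varepsilon$-extensibility), and clause (b) from Theorem \ref{theorem:epsilon_inference} (co-compactness of $\varepsilon$-inference). For clause (e) I would invoke Theorem \ref{theorem:dcpo}, which yields that $\tuple{\varepsilon_\sigma(F), \subseteq}$ is a dcpo with $\textit{sup}S = \bigcup S$ for any directed subset $S$ whenever $\varepsilon_\sigma(F) \neq \emptyset$.

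Clauses (c) and (d) carry the Lindenbaum content, and I would obtain them from Theorem \ref{theorem:Lindenbaum_RM}: its part (a) gives the existence of maximal elements in $\varepsilon_\sigma(F)$ whenever $\varepsilon_\sigma(F) \neq \emptyset$, which is exactly clause (c), and its part (b) gives that each element of $\varepsilon_\sigma(F)$ extends to a maximal one, which is clause (d). The only extra step is the final sentence of (c): to see that $\varepsilon_\sigma$ agrees with $\varepsilon_{\textit{max}, \sigma}$ on universal definability, note that $\varepsilon_{\textit{max}, \sigma}(F) \subseteq \varepsilon_\sigma(F)$ by definition, so $\varepsilon_{\textit{max}, \sigma}(F) \neq \emptyset$ forces $\varepsilon_\sigma(F) \neq \emptyset$; conversely, when $\varepsilon_\sigma(F) \neq \emptyset$ the existence of maximal elements just established gives $\varepsilon_{\textit{max}, \sigma}(F) \neq \emptyset$. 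Hence $|\varepsilon_\sigma(F)| \geq 1$ iff $|\varepsilon_{\textit{max}, \sigma}(F)| \geq 1$ for every $F \in \Omega$.

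There is no genuine obstacle in the corollary itself: all the difficulty has been front-loaded into Lemma \ref{lemma:RM_ultraproduct models} and Theorem \ref{theorem:RM_FoL}, which connect $\textit{FS}(\Omega)$-definability to $\bigcap_D$-closeness through the Fundamental Theorem of Ultraproducts. The one point demanding slight care is bookkeeping the nonemptiness hypotheses: clauses (c) and (e) are conditional on $\varepsilon_\sigma(F) \neq \emptyset$, while for (d) nonemptiness is automatic since the given element already witnesses it. I would simply make sure that each invocation of Theorem \ref{theorem:Lindenbaum_RM} and Theorem \ref{theorem:dcpo} respects that hypothesis, so that the proof remains a short chain of citations rather than a new argument.
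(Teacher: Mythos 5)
Your proposal is correct and follows exactly the paper's own route: the paper's proof is the one-line citation ``Follows from Theorems \ref{theorem:epsilon_extensible}, \ref{theorem:epsilon_inference}, \ref{theorem:dcpo}, \ref{theorem:Lindenbaum_RM} and \ref{theorem:RM_FoL}'', i.e., use Theorem \ref{theorem:RM_FoL} to convert $\textit{FS}(\Omega)$-definability into $\bigcap_D$-closeness and then instantiate the metatheorems clause by clause, just as you do. Your explicit bookkeeping of which theorem yields which clause, and the short argument that maximality gives the equivalence of universal definability for $\varepsilon_\sigma$ and $\varepsilon_{\textit{max}, \sigma}$, merely spell out what the paper leaves implicit.
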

\begin{proof}
	Follows from Theorems \ref{theorem:epsilon_extensible}, \ref{theorem:epsilon_inference}, \ref{theorem:dcpo}, \ref{theorem:Lindenbaum_RM} and \ref{theorem:RM_FoL}.
\end{proof}
For the class $\Omega$ of all finitary AAFs and any extension-based semantics $\varepsilon_{\sigma}$ defined by $\textit{FS}(\Omega)$-sentences, it is straightforward to establish the universal definability of the range related semantics $\varepsilon_{rr\sigma}^\eta$ associated with $\varepsilon_{\sigma}$. Formally, we have the following metatheorem.
\begin{theorem} \label{theorem:FoL and universal definability}
	Let $\Omega$ be a class of finitary AAFs. For any extension-based semantics $\varepsilon_{\sigma}$, if $\varepsilon_{\sigma}$ is $\textit{FS}(\Omega)$-definable w.r.t. $\Omega$ then, for any $F \in \Omega$,
	\begin{eqnarray*}
		|\varepsilon_{\sigma}(F)| \geq 1 \text{~implies~} |\varepsilon_{rr \sigma}^\eta(F)| \geq 1,
	\end{eqnarray*}
	and hence $\varepsilon_{\sigma}$ is universally defined w.r.t. $\Omega$ iff so is $\varepsilon_{rr \sigma}^\eta$.
\end{theorem}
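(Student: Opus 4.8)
The plan is to obtain this metatheorem as a direct composition of two pieces of machinery already in place: Theorem \ref{theorem:RM_FoL}, which converts syntactic definability into closure under the operator $\bigcap_D$, and Theorem \ref{theorem:close under meet and universal definability}, which ties the universal definability of a semantics to that of its range related companion under exactly this closure hypothesis. No genuinely new argument is required here; the real work resides in Lemma \ref{lemma:RM_ultraproduct models} (linking ultraproducts of $F$-based models to reduced meets of extensions) and Lemma \ref{lemma:RM_ub} (the upper-bound construction for $\propto$-chains), both already established. Thus the task reduces to chaining the two theorems correctly and checking that their hypotheses line up.

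Concretely, I would first invoke Theorem \ref{theorem:RM_FoL}: since $\varepsilon_{\sigma}$ is $\textit{FS}(\Omega)$-definable w.r.t.\ $\Omega$, it is closed under reduced meets modulo any ultrafilter w.r.t.\ $\Omega$; unwinding the defining notion of the ``w.r.t.\ $\Omega$'' terminology, this says precisely that $\varepsilon_{\sigma}(F)$ is closed under reduced meets modulo any ultrafilter for every $F \in \Omega$. Next I would fix an arbitrary $F \in \Omega$; because $\Omega$ consists of finitary AAFs, $F$ satisfies the finitariness hypothesis of Theorem \ref{theorem:close under meet and universal definability}, while the closure just obtained supplies its remaining hypothesis. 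That theorem then yields $|\varepsilon_{\sigma}(F)| \geq 1$ iff $|\varepsilon_{rr \sigma}^\eta(F)| \geq 1$, which in particular gives the displayed implication $|\varepsilon_{\sigma}(F)| \geq 1 \Rightarrow |\varepsilon_{rr \sigma}^\eta(F)| \geq 1$.

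Finally, the \emph{hence} clause follows by reading off the definition of universal definability as a universal quantification over $\Omega$: $\varepsilon_{\sigma}$ is universally defined w.r.t.\ $\Omega$ exactly when $|\varepsilon_{\sigma}(F)| \geq 1$ for all $F \in \Omega$, and applying the per-$F$ equivalence of the previous paragraph uniformly across $F \in \Omega$ transfers this property to $\varepsilon_{rr \sigma}^\eta$, and conversely. I do not anticipate a substantive obstacle; the only point warranting care is bookkeeping, namely verifying that the conclusion ``closed under reduced meets modulo any ultrafilter w.r.t.\ $\Omega$'' delivered by Theorem \ref{theorem:RM_FoL} is literally the per-$F$ hypothesis demanded by Theorem \ref{theorem:close under meet and universal definability}, which it is by the definition of the w.r.t.-$\Omega$ terminology.
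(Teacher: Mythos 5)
Your proposal is correct and matches the paper's own proof, which reads ``Immediately follows from Theorems \ref{theorem:close under meet and universal definability} and \ref{theorem:RM_FoL}'' — i.e., exactly the composition you describe: $\textit{FS}(\Omega)$-definability gives closure under $\bigcap_D$ via Theorem \ref{theorem:RM_FoL}, and that closure plus finitariness feeds Theorem \ref{theorem:close under meet and universal definability} for each $F \in \Omega$. Your hypothesis-bookkeeping and the quantification over $\Omega$ for the ``hence'' clause are exactly the (implicit) content of the paper's one-line argument.
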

\begin{proof}
	Immediately follows from Theorems \ref{theorem:close under meet and universal definability} and \ref{theorem:RM_FoL}.
\end{proof}
Clearly, Corollary \ref{corollary:RM of interval semantics}, Proposition \ref{proposition:RMMU_max} and Theorems \ref{theorem:anti-epsilon set}, \ref{theorem:RMMU_interval}, \ref{theorem:structure of interval semantics} and \ref{theorem:structure_para} also have corresponding ones respectively in the situation that the related semantics is $\textit{FS}(\Omega)$-defined. The detail is left to the reader.
\section{Summarization, discussion and future work}\label{Sec:Conclusion}
\begin{figure}[t]
	\centering
	\includegraphics[width=1 \linewidth]{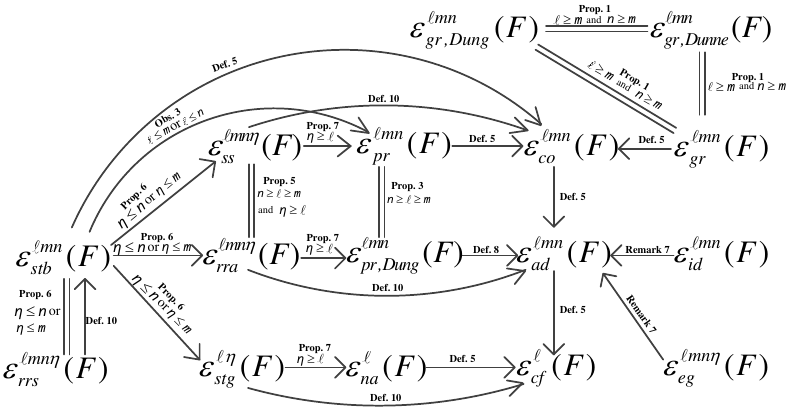}
	\caption{Relations between graded semantics, where two nodes are connected by $\rightarrow$ if the left one is a subset of the right one.}
	\label{figure:relations between semantics}
\end{figure}
This paper explores the extension-based concrete and abstract semantics of AAFs. The relationships among graded concrete semantics considered in \cite{Grossi19Graded} and this paper are summarized in Figure \ref{figure:relations between semantics}. Results obtained in this paper concern different aspects of various graded semantics, including universal definability, structural properties and explicit constructions of some special extensions, etc, which are summarized in Tables \ref{table:conclusion2}, \ref{table:conclusion3} and \ref{table:conclusion4}, respectively. Much of these results depend on the fundamental lemma obtained in this paper (Lemma \ref{lemma:fundamental lemma}), which provides a new sufficient condition for preserving $\ell$-conflict-freeness during iterations of $D_n^m$ starting at $\ell mn$-admissible sets and induces a Galois adjunction between admissible sets and complete extensions (Corollary \ref{corollary:properties_defense}). In addition, we present the operator reduced meet modulo an ultrafilter $\bigcap_D$, which is a useful tool in exploring theoretical issues of AAFs. We provide the distributive laws of the functions $N_\ell$ and $D_n^m$ over the operator $\bigcap_D$. Based on this, it has been shown that a variety of semantics are closed under the operator $\bigcap_D$ (Theorem \ref{theorem:close under meet_fundamental semantics}), moreover, the interval semantics and parameterized semantics inherit this closeness (Theorem \ref{theorem:RMMU_interval}), and the derived semantics $\varepsilon_{\textit{max}, \sigma}$ and $\varepsilon_{rr \sigma}^\eta$ agree with a given semantics $\varepsilon_{\sigma}$ on the universal definability w.r.t. a class $\Omega$ of finitary AAFs whenever $\varepsilon_{\sigma}$ is closed under $\bigcap_D$ w.r.t. $\Omega$ (Theorems \ref{theorem:close under meet and universal definability} and \ref{theorem:Lindenbaum_RM}). The properties of $\bigcap_D$ related to the semantics $\varepsilon_{na}^\ell$, $\varepsilon_{pr, \textit{Dung}}^{\ell mn}$ and $\varepsilon_{pr}^{\ell mn}$ are explored (Propositions \ref{proposition:RMMU_na}, \ref{proposition:RMMU_pr_Dung} and \ref{proposition:RMMU_pr}). Taking advantage of this operator, we provide a simple and uniform method to establish the universal definability of a family of range related semantics (Theorems \ref{theorem:existence_stg_ss_rra_rrs} and \ref{theorem:FoL and universal definability}), and obtain a number of metatheorems about $\varepsilon$-extensible (Theorem \ref{theorem:epsilon_extensible} and Corollaries \ref{corollary:RM of interval semantics} and \ref{corollary:epsilon_extensible2}), $\varepsilon$-inference (Theorem \ref{theorem:epsilon_inference} and Corollaries \ref{corollary:RM of interval semantics} and \ref{corollary:epsilon_extensible2}) and Lindenbaum property (Theorem \ref{theorem:Lindenbaum_RM} and Corollaries \ref{corollary:RM of interval semantics} and \ref{corollary:epsilon_extensible2}), etc. Moreover, making use of model-theoretical tools, we characterize a family of extension-based semantics that are closed under the operator $\bigcap_D$ in terms of $\textit{FS}(\Omega)$-definability (Theorem \ref{theorem:RM_FoL}). 
\par Some assumptions about AAFs and parameters are involved in this paper. In the following, we discuss them in turn.
\par 
We begin with summarizing the effect of two basic assumptions: one is $\ell \geq m$ and $n \geq m$, another is $n \geq \ell \geq m$. These two assumptions may ensure that $\ell$-conflict-freeness is preserved during iterations of the defense function $D_n^m$ starting at $\ell mn$-admissible sets. The important point to note here is that: \textit{the former works for iterations starting at any $E \in \varepsilon_{ad}^{\ell mn}(F)$ with $E \subseteq D_{\substack{m \\ n}}^{\lambda_F}(\emptyset)$ (in particular, $\emptyset$), while the latter is effective for ones starting at any $\ell mn$-admissible set.} By the way, these two assumptions have nothing to do with preserving $mn$-self-defense. In fact, $mn$-self-defense is always preserved during iterations of $D_n^m$. This is a consequence of the standard result in partial order theory which is the generalized version of Observation \ref{observation:D_ordinal} and asserts that, for any ordinal $\xi_1$ and $\xi_2$ with $\xi_1 \leq \xi_2$, $f^{\xi_1}(x) \leq f^{\xi_2}(x)$ for any monotone function $f$ on a poset and post-fixpoint $x$ of $f$.
\par 
Next, we discuss the assumption involved in Theorems \ref{theorem:close under meet and universal definability}, \ref{theorem:close under meet_fundamental semantics}, \ref{theorem:existence_stg_ss_rra_rrs} and \ref{theorem:FoL and universal definability}: AAFs are assumed to be finitary. This assumption is often adopted in exploring infinite AAFs, e.g., \cite{Dung95Acceptability,Grossi19Graded,Weydert11SsForInf,Baumann15Infinite,Baroni13Infinite}, which brings some nice properties, for instance, the function $D_n^m$ is continuous for finitary AAFs \cite{Grossi19Graded}, both the functions $N_\ell$ and $D_n^m$ are distributive over the operator $\bigcap_D$ (Lemma \ref{lemma:RMMU_Distributive}), and a variety of fundamental semantics are closed under reduced meets modulo any ultrafilter (Theorem \ref{theorem:close under meet_fundamental semantics}). It should be pointed out that some results depend on this assumption essentially, whereas others assume it only for simplification. For instance, as mentioned by Grossi and Modgil, without this assumption, constructions of some special extensions based on the iteration of $D_n^m$ can also go through by admitting transfinite iterations \cite{Grossi19Graded}. However, Theorems \ref{theorem:close under meet and universal definability}, \ref{theorem:existence_stg_ss_rra_rrs} and \ref{theorem:FoL and universal definability} in this paper depend on this assumption essentially because, in the situation $\ell = m = n = \eta =  1$, Baumann has provided an example to reveal that neither $\varepsilon_{stg}^{\ell \eta}$ nor $\varepsilon_{ss}^{\ell mn \eta}$ is universally defined for the class of all infinite AAFs \cite{Baumann15Infinite}. For Theorem \ref{theorem:close under meet_fundamental semantics}, the semantics $\varepsilon_{cf}^{\ell}$ and $\varepsilon_{gr}^{\ell mn}$ don't depend on this assumption, and an example has been provided (see Example \ref{Ex:counterexample for the derived semantics}) to illustrate that $\varepsilon_{stb}^{\ell mn}(F)$ is not always closed under the operator $\bigcap_D$ if $F$ isn't finitary, however, for $\textit{Def}^{mn}$, $\varepsilon_{ad}^{\ell mn}$ and $\varepsilon_{co}^{\ell mn}$, we do not know whether this assumption is necessary. For the universal definability of the semantics $\varepsilon_{pr}^{\ell mn}$ in the situation that $\ell \geq m$ and $n \geq m$, there is an analogous open problem, see Remark \ref{remark:pr}.
\par In the following, the special properties caused by (local) well-foundedness are summarized. Logically, they all are consequences of Lemma \ref{lemma:transitive closure}.
\par
First, as mentioned above, the assumptions ``$\ell \geq m$ and $n \geq m$'' and ``$n \geq \ell \geq m$'' are used in different situations to preserve $\ell$-conflict-freeness during iterations of $D_n^m$. However, for any AAF $F$ with a well-founded attack relation, such difference disappears.
In this situation, Corollary \ref{corollary:well-founded_co_empty}($c$) implies that the condition $\ell \geq m$ and $n \geq m$ is sufficient for preserving $\ell$-conflict-freeness during iterations of $D_n^m$ starting at $\ell mn$-admissible sets. In fact, for any well-founded AAF $F$, $\ell$-conflict-freeness is always preserved during iterations of $D_n^m$ starting at $\ell mn$-admissible sets for any $\ell$, $m$ and $n$ such that $\varepsilon_{co}^{\ell mn}(F) \neq \emptyset$ (see Corollary \ref{corollary:well-founded_AAF_stb}($c$)). By the way, by Corollary \ref{corollary:well-founded_co_empty}($b$),  both (I) and (II) in Section \ref{Sec:Fundamental Lemma and its application} hold for any finitary AAF $F$ with a well-founded attack relation, of course, in this situation, the assumption $\A \subseteq \Sigma_E$ in (II) is redundant. 
\begin{figure}[t]
	\centering
	\subfigtopskip=2pt
	\subfigbottomskip=2pt
	\subfigcapskip=-5pt
	{
		\includegraphics[scale=1]{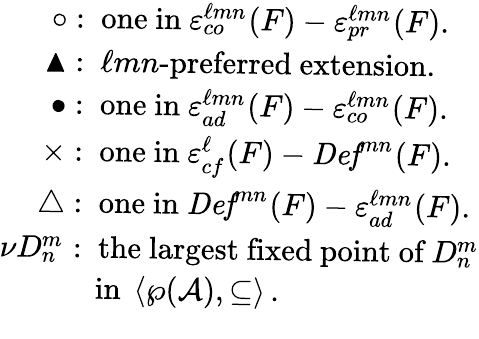}
	}
	\quad
	\subfigure[$n \geq \ell \geq m$]{
		\includegraphics[scale=0.8]{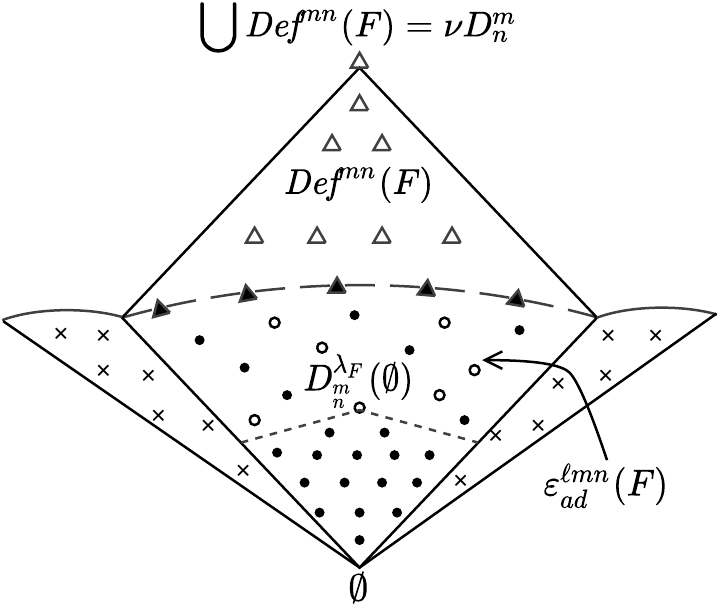}
	}
	\\
	\subfigure[$n \geq \ell \geq m$ and $E \nsubseteq D_{\mathop{}_{n}^{m}}^{\lambda_F}(\emptyset)$]{
		\includegraphics[scale=0.58]{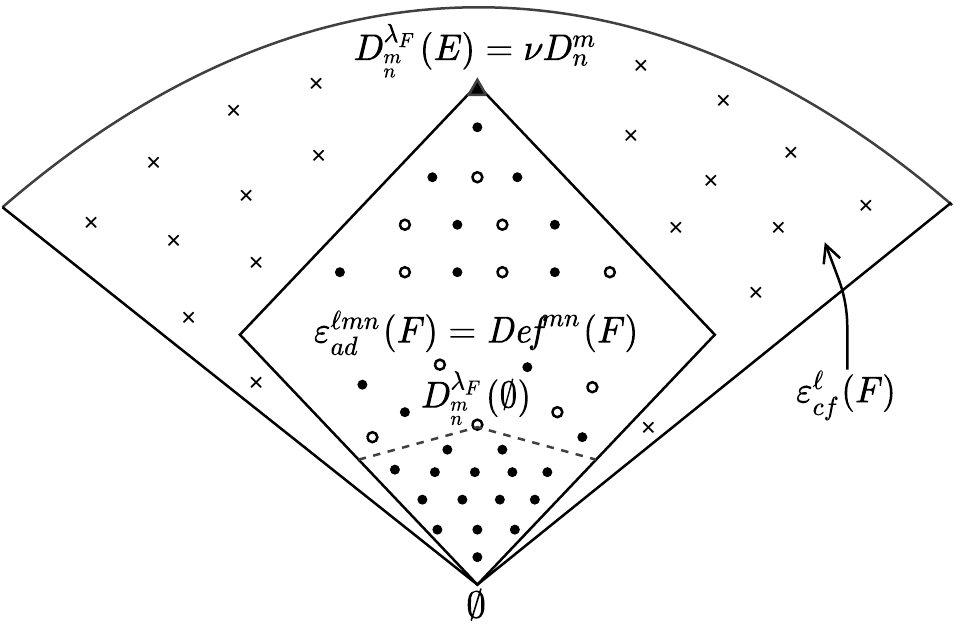}
	}
	\quad
	\subfigure[$\ell \geq m$, $n \geq m$ and $E \subseteq D_{\mathop{}_{n}^{m}}^{\lambda_F}(\emptyset)$]{
		\includegraphics[scale=0.58]{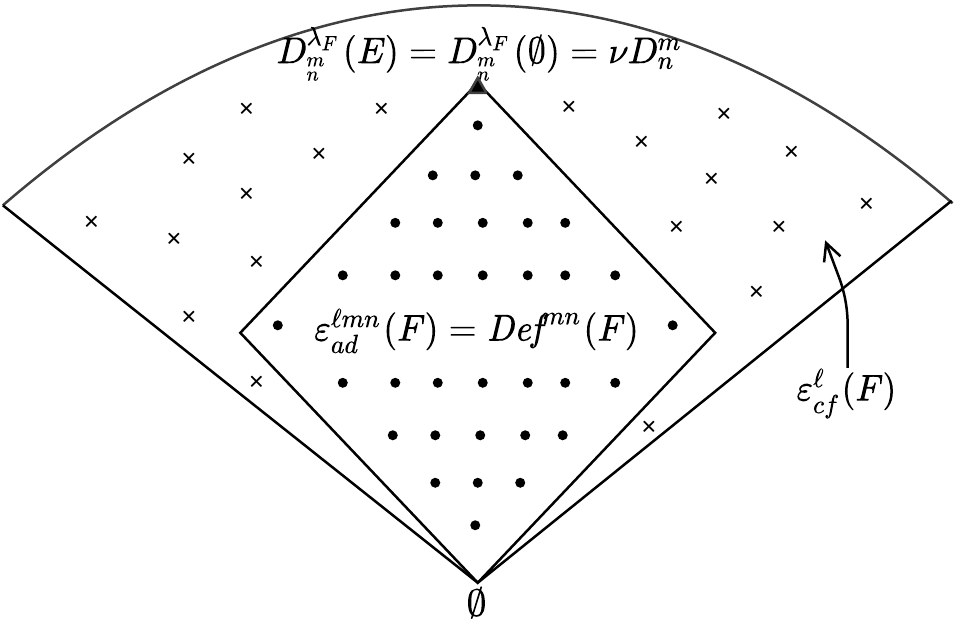}
	}
	\caption{Distributions of graded extensions in different situations.}
	\centering
	\label{figure:distributions}
\end{figure}
\par
Second, the distributions of $\ell$-conflict-free sets, $mn$-defended sets, $\ell mn$-admissible sets, $\ell mn$-complete extensions and $\ell mn$-preferred extensions are also related with well-foundedness, which are displayed graphically in Figure \ref{figure:distributions}. Figure \ref{figure:distributions}(a) displays their distributions in the general case, while Figure \ref{figure:distributions}(b) and (c) displays their distributions in the situation that $\rightarrow^+$ is well-founded on $\A - D_{\substack{m \\ n}}^{\lambda_F}(E)$ for some $E \in \varepsilon_{ad}^{\ell mn}(F)$. Corresponding assumptions on the admissible set $E$ and the parameters $\ell$, $m$ and $n$ are indicated in the identifiers. By the way, the equations $\varepsilon_{ad}^{\ell mn}(F) = \textit{Def}^{mn}(F)$ and $\bigcup \textit{Def}^{mn}(F) = \nu D_n^m$ in Figure \ref{figure:distributions} come from Corollaries \ref{corollary:equ_ad_def} and \ref{corollary:structure_cf_ad}($a$), respectively, and the distributions of $\bigtriangleup$ and $\times$ in Figure \ref{figure:distributions}(a) depend on the property of non-interpolant (see Corollary \ref{corollary:non-interpolant}).
\par 
Third, for any AAF $F$ such that $\rightarrow^+$is well-founded on $\A - D_{\substack{m \\ n}}^{\lambda_F}(\emptyset)$, since $D_n^m$ has the unique fixed point $D_{\substack{m \\ n}}^{\lambda_F}(\emptyset)$ (see, Corollary \ref{corollary:well-founded_co_empty}($a$)), its spectrum of extension-based semantics may collapse, that is, $\varepsilon_{co}^{\ell mn}(F)$, $\varepsilon_{gr}^{\ell mn}(F)$, $\varepsilon_{pr}^{\ell mn}(F)$, $\varepsilon_{stb}^{\ell mn}(F)$, $\varepsilon_{rra}^{\ell mn \eta}(F)$, $\varepsilon_{ss}^{\ell mn \eta}(F)$, $\varepsilon_{rrs}^{\ell mn \eta}(F)$, $\varepsilon_{id}^{\ell mn}(F)$ and $\varepsilon_{eg}^{\ell mn \eta}(F)$ are equivalent because that they all have the unique element $D_{\substack{m \\ n}}^{\lambda_F}(\emptyset)$ whenever the parameters $\ell$, $m$, $n$ and $\eta$ satisfy certain constraints, see Corollaries \ref{corollary:well-founded_co_empty}, \ref{corollary:well-founded_AAF_stb}, \ref{corollary:well-founded_ss_rrs_empty} and \ref{corollary:well-founded_id_empty}. 
\par 
Anyway, in addition to the assumptions on AAFs (e.g., finitary, co-finitary and well-founded), a number of results obtained in this paper depend on the assumptions about the parameters $\ell$, $m$, $n$ and $\eta$. It is a very marked difference compared to ones for standard semantics whose parameters are fixed as $\ell = m = n = \eta = 1$. Clearly, all results obtained in this paper also hold in the standard situation through instantiating all parameters as the constant 1.
\par 
We intend to end the paper by discussing future work simply. This paper has explored the structural properties of various extension-based semantics in detail. As summarized in Table \ref{table:conclusion3}, they form different ordered structures w.r.t. the inclusion relation such as complete lattice, (algebraic) complete semilattice, cpo and dcpo, etc. It is well known that there exist natural topological structures on these ordered structures, e.g., the Alexandor topology and the Scott topology, etc. One of our future work will study (infinite) AAFs in the view of topology, which may bring new viewpoints on semantics of AAFs, for instance, the topological relationships among various semantics, limits of sequences of extensions, evaluating the acceptability of arguments in a topological way and capturing extensions of infinite AAFs as limits of ones associated with finite AAFs, etc. In addition, based on the connection between reduced meet modulo an ultrafilter and ultraproducts in model theory (see Lemma \ref{lemma:RM_ultraproduct models}), we intend to take advantage of model theory to consider the issue that how to characterize equivalent AAFs w.r.t. a given semantics, and explore safe operators (see Definition \ref{definition:safe-operator}) on attack relations, which may be applied to simplify AAFs so that reducing the computational complexity of some computation issues. As applications of the operator reduced meet modulo an ultrafilter, some preliminary results on this issue have been given in this paper (see Theorems \ref{theorem:anti-epsilon set} and \ref{theorem:safe operators} and Corollary \ref{corollary:anti-epsilon set}). Finally, this paper provides only representation theorems for $\varepsilon_{cf}^\ell$ (see Theorems \ref{theorem:Representation theorem I of l-cf}, \ref{theorem:Representation theorem II of l-cf} and \ref{theorem:Representation theorem III of l-cf}), for other graded semantics, this issue needs to be explored further.
\newpage
\renewcommand{\arraystretch}{1.6}
\begin{center}
	\fontsize{8.0}{8.1}\selectfont 
	\setlength{\tabcolsep}{1pt}{
		\begin{longtable}{c | c | c | c | c}
			\caption{Definability for graded semantics of abstract argumentation. The symbol $\exists$ (or, $\exists!$) indicate that the semantics are universally (uniquely, resp.) defined.}
			\label{table:conclusion2} \\
			\hline
				Semantics & AAFs & Definability & Conditions & Sources \\
			\hline
			\endfirsthead
			\caption{Continue} \\
				\hline
					Semantics & AAFs & Definability & Conditions & Sources \\
				\hline                                                          
			\endhead
			\hline
				$\varepsilon_{cf}^\ell(F)$ &arbitrary &$\exists$  & & Trivial\\
			\hline
				$\varepsilon_{na}^\ell(F)$ &arbitrary &$\exists$  & & Corollary \ref{corollary:Lindenbaum}\\
			\hline
				$\varepsilon_{ad}^{\ell mn}(F)$  &arbitrary &$\exists$  &  & Trivial\\
			\hline
				\multirow{2}{*}{$\varepsilon_{co}^{\ell mn}(F)$}       
					& arbitrary &$\exists$  &$\ell \geq m$ and $n \geq m$  & Corollary \ref{corollary:existence_gr_co}\\
				\cline{2-5}
					& well-founded &$\exists!$ &$\ell \geq m$ and $n \geq m$  & Corollary \ref{corollary:well-founded_AAF_stb}       \\	
			\hline
				\multirow{2}{*}{$\varepsilon_{gr}^{\ell mn}(F)$}   
					& arbitrary &$\exists!$  &$\ell \geq m$ and $n \geq m$  & Corollary \ref{corollary:existence_gr_co}\\
				\cline{2-5}
					& arbitrary &$\exists!$  &$\varepsilon_{co}^{\ell mn}(F) \neq \emptyset$  & Observation \ref{observation:properties_gr}\\
			\hline
				\multirow{3}{*}{$\varepsilon_{pr}^{\ell mn}(F)$}  
					& finitary &$\exists$ &$\ell \geq m$ and $n \geq m$  & Corollary \ref{corollary:existence_pr}\\
				\cline{2-5}
					& arbitrary &$\exists$ &$n \geq \ell \geq m$ &Corollary \ref{corollary:existence_pr}       \\
				\cline{2-5}
					& well-founded &$\exists!$ &$\ell \geq m$ and $n \geq m$ & Corollary \ref{corollary:well-founded_AAF_stb} \\	
			\hline
				$\varepsilon_{stb}^{\ell mn}(F)$ & well-founded  &$\exists!$  &$\ell = m = n$  & Corollary \ref{corollary:well-founded_AAF_stb}\\
			\hline
				\multirow{2}{*}{$\varepsilon_{rr \sigma}^\eta(F)$}  
					& finitary  & $\exists$ &$|\varepsilon_{\sigma}(F)| \geq 1$ and $\varepsilon_{\sigma}(F)$ is closed under $\bigcap_D$  & Theorem \ref{theorem:close under meet and universal definability}\\
				\cline{2-5}
					& finitary&$\exists$ &$|\varepsilon_{\sigma}(F)| \geq 1$ and $\varepsilon_{\sigma}$ is $\textit{FS}(\Omega)$-definable & Theorem \ref{theorem:FoL and universal definability}\\
			\hline
				$\varepsilon_{stg}^{\ell \eta}(F)$  & finitary  &$\exists$  & & Theorem \ref{theorem:existence_stg_ss_rra_rrs}\\
			\hline
				\multirow{2}{*}{$\varepsilon_{rra}^{\ell mn \eta}(F)$} 
					& finitary  &$\exists$  &  & Theorem \ref{theorem:existence_stg_ss_rra_rrs}\\
				\cline{2-5}
					& well-founded  &$\exists!$  & $\eta \geq \ell \geq m$ and $n \geq m$  & Corollary \ref{corollary:well-founded_ss_rrs_empty}\\
			\hline
				\multirow{3}{*}{$\varepsilon_{ss}^{\ell mn \eta}(F)$} 
					& finitary& $\exists$ &$\varepsilon_{co}^{\ell mn}(F) \neq \emptyset$& Theorem \ref{theorem:existence_stg_ss_rra_rrs}\\
				\cline{2-5}
					& finitary  &  $\exists$  &$\ell \geq m$ and $n \geq m$  & Corollary \ref{corollary:existence_ss}\\
				\cline{2-5}
					& well-founded  &$\exists!$   & $\ell \geq m$ and $n \geq m$  & Corollary \ref{corollary:well-founded_ss_rrs_empty}\\
			\hline
				\multirow{2}{*}{$\varepsilon_{rrs}^{\ell mn \eta}(F)$}  
					& finitary  &$\exists$  &$\varepsilon_{stb}^{\ell mn}(F) \neq \emptyset$  &Theorem \ref{theorem:existence_stg_ss_rra_rrs}\\
				\cline{2-5}
					& well-founded  &$\exists!$  & $\ell = m = n$ & Corollary \ref{corollary:well-founded_ss_rrs_nonempty}\\
			\hline
				\multirow{2}{*}{$\varepsilon _{rr({\sigma _l},{\sigma },{\sigma _r})}^\eta(F)$} & \multirow{2}{*}{finitary}  &\multirow{2}{*}{$\exists$}  &$|\varepsilon_{\sigma_l, \sigma, \sigma_r}(F)| \geq 1$ and  & \multirow{2}{*}{Corollary \ref{corollary:RM of interval semantics}}\\
				&&&$\varepsilon_{\sigma}(F)$ is closed under $\bigcap_D$&\\
			\hline
				$\varepsilon_{cf, \sigma_r}^{\ell}(F)$  &arbitrary &$\exists$  &  & Trivial\\
			\hline
				$\varepsilon_{ad, \sigma_r}^{\ell mn}(F)$  &arbitrary &$\exists$  &  & Trivial\\
			\hline
				\multirow{2}{*}{$\varepsilon_{\textit{max}, (\sigma_l, \sigma, \sigma_r)}(F)$} & \multirow{2}{*}{arbitrary} &\multirow{2}{*}{$\exists$}  &$|\varepsilon_{\sigma_l, \sigma, \sigma_r}(F)| \geq 1$ and  & \multirow{2}{*}{Corollary \ref{corollary:RM of interval semantics}}\\
				&&&$\varepsilon_{\sigma}(F)$ is closed under $\bigcap_D$&\\
			\hline
				\multirow{2}{*}{$\varepsilon_{\textit{max}, (cf, \sigma_r)}^{\ell}(F)$} 
					& arbitrary &$\exists$  &   & Corollary \ref{corollary:RM of interval semantics}\\
				\cline{2-5}
					& arbitrary  &$\exists$!  & $\emptyset \neq \varepsilon_{\sigma_r}(F) \subseteq \varepsilon_{cf}^{\ell}(F)$   & Proposition \ref{proposition:structure_para}\\
			\hline
				\multirow{2}{*}{$\varepsilon_{\textit{max}, (ad, \sigma_r)}^{\ell mn}(F)$} 
					& finitary &$\exists$  &   & Corollary \ref{corollary:RM of interval semantics}\\
				\cline{2-5}
					& arbitrary  &$\exists$!  & $\emptyset \neq \varepsilon_{\sigma_r}(F) \subseteq \varepsilon_{cf}^{\ell}(F)$   & Proposition \ref{proposition:structure_para}\\
			\hline
				$\varepsilon_{\textit{max}, (co, \sigma_r)}^{\ell mn}(F)$ & finitary &$\exists$  & $\varepsilon_{co, \sigma_r}^{\ell mn}(F) \neq \emptyset$  & Corollary \ref{corollary:RM of interval semantics}\\
			\hline
				\multirow{3}{*}{$\varepsilon_{id}^{\ell mn}(F)$} 
					& finitary  &$\exists!$ &$\ell \geq m$ and $n \geq m$  & Corollary \ref{corollary:structure_id_eg}\\
				\cline{2-5}
					& arbitrary &$\exists!$ &$n \geq \ell \geq m$ & Corollary \ref{corollary:structure_id_eg}\\
				\cline{2-5}
					& well-founded & $\exists!$ & $\ell \geq m$ and $n \geq m$& Corollary \ref{corollary:well-founded_id_empty}\\
			\hline
				\multirow{2}{*}{$\varepsilon_{eg}^{\ell mn \eta}(F)$}   
					&finitary  &$\exists!$  &$\ell \geq m$ and $n \geq m$  &Corollary \ref{corollary:structure_id_eg}\\
				\cline{2-5}
					& well-founded & $\exists!$ & $\ell \geq m$ and $n \geq m$& Corollary \ref{corollary:well-founded_id_empty}\\
			\hline
		\end{longtable}%
	}
\end{center}%
\renewcommand{\arraystretch}{1.6}
\begin{center}
	\fontsize{7.8}{7.8}\selectfont 
	\setlength{\tabcolsep}{1mm}{
		\begin{longtable}{c | c | c | c}
			\caption{Structural properties (w.r.t $\subseteq$).}
			\label{table:conclusion3} \\
			\hline
				Semantics & Structure properties& Conditions & Sources \\
			\hline
			\endfirsthead
			\caption{Continue} \\
				\hline
				Semantics & Structure properties & Conditions & Sources \\
				\hline                                                          
			\endhead
				\multirow{3}{*}{$\textit{Def}^{mn}(F)$}  
					&complete lattice  &  &Corollary \ref{corollary:structure_cf_ad} \\
				\cline{2-4}
					& closed under union &  & Lemma \ref{lemma:properties_cf_def} \\
				\cline{2-4}
					&closed under $\bigcap_D$ &$F$ is finitary&Theorem \ref{theorem:close under meet_fundamental semantics} \\
			\hline
				\multirow{4}{*}{$\varepsilon_{cf}^\ell(F)$} 
					& algebraic complete semilattice  &  & Corollary \ref{corollary:structure_cf_ad}\\
				\cline{2-4}
					& closed under directed union &     &Lemma \ref{lemma:properties_cf_def}\\
				\cline{2-4}
					& down-closed &     & Lemma \ref{lemma:properties_cf_def}\\
				\cline{2-4}
					& closed under $\bigcap_D$  &  &Theorem \ref{theorem:close under meet_fundamental semantics}\\
			\hline
				$\varepsilon_{na}^\ell(F)$ &closed under $\bigcap_D$ &$F$ is both finitary and co-finitary& Proposition \ref{proposition:RMMU_na}\\
			\hline
				\multirow{7}{*}{$\varepsilon_{ad}^{\ell mn}(F)$}  
					&complete semilattice & &Corollary \ref{corollary:structure_cf_ad}       \\
					\cline{2-4}
					&\multirow{5}{*}{complete lattice} 
						&$\ell \geq m$, $n \geq m$ and $F$ is well-founded & Corollary \ref{corollary:well-founded_co_empty}\\
					\cline{3-4}
						&&$n \geq \ell \geq m$ and $\exists E \in \varepsilon_{ad}^{\ell mn}(F)$ s.t.          &\multirow{2}{*}{Corollary \ref{corollary:well-founded_co_nonempty}} \\
						&&$\rightarrow^+$ is well-founded on $\A - D_{\substack{m \\ n}}^{\lambda_F}(E)$&\\
					\cline{3-4}
						&&$\ell \geq m$, $n \geq m$ and &\multirow{2}{*}{Corollary \ref{corollary:well-founded_co_empty}}\\
						&&$\rightarrow^+$ is well-founded on $\A - D_{\substack{m \\ n}}^{\lambda_F}(\emptyset)$&\\
				\cline{2-4}
					&closed under $\bigcap_D$   &$F$ is finitary   &Theorem \ref{theorem:close under meet_fundamental semantics}\\
			\hline
				\multirow{6}{*}{$\varepsilon_{co}^{\ell mn}(F)$} 
					&\multirow{2}{*}{cpo} 
						&$\ell \geq m$, $n \geq m$ and $F$ is finitary  & Corollary \ref{corollary:structure_co}\\
					\cline{3-4}
						&  &$\varepsilon_{co}^{\ell mn}(F) \neq \emptyset$ and $F$ is finitary  & Corollary \ref{corollary:structure_co}\\
				\cline{2-4}
					&complete semilattice   &$n \geq \ell \geq m$   &Proposition \ref{proposition:structure_co} \\
				\cline{2-4}
					& \multirow{2}{*}{complete lattice}  &$n \geq \ell \geq m$ and $\exists E \in \varepsilon_{ad}^{\ell mn}(F)$ s.t.  &\multirow{2}{*}{Corollary \ref{corollary:well-founded_co_nonempty}} \\
					& &$\rightarrow^+$ is well-founded on $\A - D_{\substack{m \\ n}}^{\lambda_F}(E)$&\\
				\cline{2-4}
					&closed under $\bigcap_D$  &$F$ is finitary  &Theorem \ref{theorem:close under meet_fundamental semantics}\\
			\hline
				\multirow{2}{*}{$\varepsilon(F)$}  &closed under $\bigcap_D$& $|\varepsilon(F)| < \omega$ & Remark \ref{remark:RMMU_finite AAFs}\\
				\cline{2-4}	
				& dcpo &$\varepsilon(F) \neq \emptyset$ and $\varepsilon(F)$  is closed under $\bigcap_D$ & Theorem \ref{theorem:dcpo}\\			
			\hline
				$\varepsilon_{gr}^{\ell mn}(F)$ & closed under $\bigcap_D$  &  &Theorem \ref{theorem:close under meet_fundamental semantics}\\
			\hline
				$\varepsilon_{stb}^{\ell mn}(F)$ & closed under $\bigcap_D$ &$F$ is finitary &Theorem \ref{theorem:close under meet_fundamental semantics}\\
			\hline
				$\varepsilon_{stg}^{\ell \eta}(F)$  & dcpo &$F$ is finitary & Corollary \ref{corollary:structure_stg_ss}\\
			\hline
				$\varepsilon_{rra}^{\ell mn \eta}(F)$ & dcpo  &$F$ is finitary & Corollary \ref{corollary:structure_stg_ss}\\ 					     
			\hline
				$\varepsilon_{ss}^{\ell mn \eta}(F)$ & dcpo  &$\ell \geq m$, $n \geq m$ and $F$ is finitary  & Corollary \ref{corollary:structure_stg_ss}\\		     				
			\hline
				\multirow{6}{*}{$\varepsilon_{\sigma, \sigma_r}(F)$}  
					& \multirow{2}{*}{(algebraic) cpo (dcpo)}&$\varepsilon_{\sigma, \sigma_r}(F) \neq \emptyset$, $\varepsilon_{\sigma}(F)$ is closed under $\bigcap_D$  &\multirow{2}{*}{Theorem \ref{theorem:structure_para}}\\ 
					&& and $\varepsilon_{\sigma}(F)$ is a (algebraic) cpo (dcpo)&\\
				\cline{2-4}
					& \multirow{3}{*}{(algebraic) complete semilattice}&$\varepsilon_{\sigma,\sigma_r}(F) \neq \emptyset$,  &\multirow{3}{*}{Theorem \ref{theorem:structure_para}}\\ 
					&& $\varepsilon_{\sigma}(F)$ is closed under $\bigcap_D$ and &\\ 
					&& $\varepsilon_{\sigma}(F)$ is a (algebraic) complete semilattice&\\
				\cline{2-4}	
					&closed under $\bigcap_D$ &$\varepsilon_{\sigma}(F)$ is closed under $\bigcap_D$  &Theorem \ref{theorem:RMMU_interval}\\					
			\hline
				\multirow{3}{*}{$\varepsilon_{cf,\sigma_r}^{\ell}(F)$}  
					&algebraic complete lattice  &$\emptyset \neq \varepsilon_{\sigma_r}(F) \subseteq \varepsilon_{cf}^{\ell}(F)$  &Proposition \ref{proposition:structure_para}\\  
				\cline{2-4}
					&sub-algebraic complete  &  &\multirow{2}{*}{Corollary \ref{corollary:structure_para}}\\ 
					&semilattice of $\varepsilon_{cf}^\ell(F)$&&\\
			\hline
				\multirow{3}{*}{$\varepsilon_{ad,\sigma_r}^{\ell mn}(F)$}  
					&complete lattice  &$\emptyset \neq \varepsilon_{\sigma_r}(F) \subseteq \varepsilon_{cf}^{\ell}(F)$  &Proposition \ref{proposition:structure_para}\\
				\cline{2-4}
					&sub-complete semilattice  &  \multirow{2}{*}{$F$ is finitary} &\multirow{2}{*}{Corollary \ref{corollary:structure_para}}\\
					&of $\varepsilon_{ad}^{\ell mn}(F)$&&\\
			\hline
				\multirow{3}{*}{$\varepsilon_{co,\sigma_r}^{\ell mn}(F)$}  
					&sub-cpo of $\varepsilon_{co}^{\ell mn}(F)$ & $F$ is finitary and $\varepsilon_{co, \sigma_r}^{\ell mn}(F) \neq \emptyset$ &Corollary \ref{corollary:structure_para}\\
				\cline{2-4}
					&sub-complete semilattice  & $F$ is finitary, $n \geq \ell \geq m$ and  &\multirow{2}{*}{Corollary \ref{corollary:structure_para}}\\
					&of $\varepsilon_{co}^{\ell mn}(F)$& $\varepsilon_{co, \sigma_r}^{\ell mn}(F) \neq \emptyset$&\\
			\hline
				\multirow{3}{*}{$\varepsilon_{\sigma_l, \sigma, \sigma_r}(F)$}  
					& \multirow{2}{*}{sub-dcpo of $\varepsilon_{\sigma}(F)$}&$\varepsilon_{\sigma_l, \sigma, \sigma_r}(F) \neq \emptyset$ and   &\multirow{2}{*}{Theorem \ref{theorem:structure of interval semantics}}\\ 
					&& $\varepsilon_{\sigma}(F)$ is closed under $\bigcap_D$&\\
				\cline{2-4}	
					&closed under $\bigcap_D$ &$\varepsilon_{\sigma}(F)$ is closed under $\bigcap_D$  &Theorem \ref{theorem:RMMU_interval}\\
			\hline		
		\end{longtable}%
	}
\end{center}
\renewcommand{\arraystretch}{1.6}
\begin{center}
	\vspace{-1cm}
	\fontsize{6.7}{6.7}\selectfont 
	\setlength{\tabcolsep}{1mm}{
		\begin{longtable}{c | c | c | c | c}
			\caption{Constructions for a series of special extensions, where $S^\downarrow \triangleq \set{X \in \varepsilon_{ad}^{\ell mn}(F) \mid X \subseteq S}$, $E \in \varepsilon_{ad}^{\ell mn}(F)$, and $\lambda_F \triangleq \omega$ whenever $F$ is finitary.}
			\label{table:conclusion4} \\
			\hline
			Constructions & Conditions & AAFs & Sources &  Properties\\
			\hline
			\endfirsthead
			\caption{Continue} \\
			\hline
			Constructions & Conditions & AAFs & Sources &  Properties\\
			\hline                                                          
			\endhead
				&  & \multirow{5}{*}{arbitrary}& \multirow{2}{*}{Corollary \ref{corollary:well-founded_co_empty}}& the unique $\ell mn$-co(pr) extension\\
				& $\ell \geq m$, $n \geq m$ and&&& and the largest $\ell mn$-ad set\\
				\cline{4-5}
				& $\rightarrow^+$ is well-founded && Corollary \ref{corollary:well-founded_ss_rrs_empty}& the unique $\ell mn$-$\eta$-ss extension\\
				\cline{4-5}
				& on $\A - D_{\substack{m \\ n}}^{\lambda_F}(\emptyset)$& & \multirow{2}{*}{Corollary \ref{corollary:well-founded_id_empty}}& the unique $\ell mn$-id extension and\\
				& & & & the unique $\ell mn$-$\eta$-eg extension \\
				\cline{2-5}
				& \multirow{6}{*}{$\ell \geq m$ and $n \geq m$}& \multirow{5}{*}{well-founded} & \multirow{2}{*}{Corollary \ref{corollary:well-founded_AAF_stb}}& the unique $\ell mn$-co(pr) extension\\
				&&&& and the largest $\ell mn$-ad set\\
				\cline{4-5}
				&&& Corollary \ref{corollary:well-founded_ss_rrs_empty}& the unique $\ell mn$-$\eta$-ss extension\\
				\cline{4-5}
				$D_{\substack{m \\ n}}^{\lambda_F}(E)$ &&& \multirow{2}{*}{Corollary \ref{corollary:well-founded_id_empty}}& the unique $\ell mn$-id extension and\\
				with & & & & the unique $\ell mn$-$\eta$-eg extension \\
				\cline{3-5}
				$E \subseteq D_{\substack{m \\ n}}^{\lambda_F}(\emptyset)$& & arbitrary &  Corollary \ref{corollary:existence_gr_co} & \\
				\cline{2-4}
				&$\varepsilon_{co}^{\ell mn}(F) \neq \emptyset$ & arbitrary& \multirow{2}{*}{Observation \ref{observation:properties_gr}} & the least $\ell mn$-co extension\\
				\cline{2-3}
				&$D_{\substack{m \\ n}}^{\lambda_F}(\emptyset) \in \varepsilon_{cf}^{\ell}(F)$ & arbitrary&  & (containing $E$)\\
				\cline{2-5}
				& \multirow{2}{*}{$\ell = m = n$} & \multirow{2}{*}{well-founded} &  Corollary \ref{corollary:well-founded_AAF_stb} & the unique $\ell mn$-stb extension\\	
				\cline{4-5}
				&&& Corollary \ref{corollary:well-founded_ss_rrs_nonempty}& the unique $\ell mn$-$\eta$-rrs extension\\
				\cline{2-5}
				& $\eta \geq \ell \geq m$, $n \geq m$ &\multirow{3}{*}{arbitrary} & \multirow{4}{*}{Corollary \ref{corollary:well-founded_ss_rrs_empty}} & \multirow{4}{*}{the unique $\ell mn$-$\eta$-rra set}\\
				& and $\rightarrow^+$ is well-founded & &  &\\
				&on $\A - D_{\substack{m \\ n}}^{\lambda_F}(\emptyset)$&&&\\
				\cline{2-3}
				& $\eta \geq \ell \geq m$ and $n \geq m$& well-founded & &\\
			\hline
				\multirow{14}{*}{$D_{\substack{m \\ n}}^{\lambda_F}(E)$}
				& $n \geq \ell \geq m$& arbitrary& Corollary \ref{corollary:properties_defense} & the least $\ell mn$-co ext. containing $E$\\
				\cline{2-5}
				& $\ell \geq m$ and &\multirow{2}{*}{arbitrary} & \multirow{2}{*}{Proposition \ref{proposition:structure_stb}} & \multirow{2}{*}{the least $\ell mn$-stb ext. containing $E$}\\
				& $D_{\substack{m \\ n}}^{\lambda_F}(E) = N_n(D_{\substack{m \\ n}}^{\lambda_F}(E))$&  & & \\
				\cline{2-5}
				& $\eta \geq \ell$, $n \geq \ell \geq m$ and &\multirow{3}{*}{arbitrary} & \multirow{2}{*}{Corollary \ref{corollary:well-founded_ss_rrs_nonempty}} & the unique $\ell mn$-$\eta$-rra extension and\\	
				& $\rightarrow^+$ is well-founded &  & & the unique $\ell mn$-$\eta$-ss extension\\
				\cline{4-5}
				&on $\A - D_{\substack{m \\ n}}^{\lambda_F}(E)$&&Corollary \ref{corollary:well-founded_id_nonempty}&the unique $\ell mn$-$\eta$-eg extension\\
				\cline{2-5}
				&  &\multirow{5}{*}{arbitrary} & \multirow{2}{*}{Corollary \ref{corollary:well-founded_co_nonempty}} & the largest $\ell mn$-co ext. and the\\
				& $n \geq \ell \geq m$ and&&& unique $\ell mn$-co(pr) ext. containing $E$\\
				\cline{4-5}
				& $\rightarrow^+$ is well-founded & & \multirow{2}{*}{Corollary \ref{corollary:well-founded_ss_rrs_nonempty}} & the unique $\ell mn$-$\eta$-ss ext. \\
				&on $\A - D_{\substack{m \\ n}}^{\lambda_F}(E)$&&&containing $E$\\
				\cline{4-5}
				& & & Corollary \ref{corollary:well-founded_id_nonempty} & the unique $\ell mn$-id extension\\
				\cline{2-5}
				& $\ell = m = n$ and &\multirow{2}{*}{arbitrary} & Corollary \ref{corollary:well-founded_stb} & the unique $\ell mn$-stb extension \\
				\cline{4-5}
				& $\rightarrow^+$ is well-founded & & \multirow{2}{*}{Corollary \ref{corollary:well-founded_ss_rrs_nonempty}}  & \multirow{2}{*}{the unique $\ell mn$-$\eta$-rrs extension}\\
				&on $\A - D_{\substack{m \\ n}}^{\lambda_F}(E)$&&&\\
			\hline		
				\multirow{2}{*}{$\bigcap \varepsilon_{\sigma_r}(F)$} &\multirow{2}{*}{$\emptyset \neq \varepsilon_{\sigma_r}(F) \subseteq \varepsilon_{cf}^{\ell}(F)$}  &\multirow{2}{*}{arbitrary}   &\multirow{4}{*}{Proposition \ref{proposition:structure_para}}& the unique maximal \\
				&&&&$(cf, \sigma_r)$-parametrized extension\\
			\cline{1-3} \cline{5-5}
				\multirow{2}{*}{$\bigcup \varepsilon_{ad, \sigma_r}^{\ell mn}(F)$} &\multirow{2}{*}{$\emptyset \neq \varepsilon_{\sigma_r}(F) \subseteq \varepsilon_{cf}^{\ell}(F)$}  &\multirow{2}{*}{arbitrary}   & & the unique maximal \\
				&&&&$(ad, \sigma_r)$-parametrized extension\\
			\hline			
				\multirow{2}{*}{$\bigcup (\bigcap \varepsilon_{pr}^{\ell mn}(F))^\downarrow$} 
				&$\ell \geq m$ and $n \geq m$ &finitary   &\multirow{2}{*}{Corollary \ref{corollary:structure_cf_ad}}& \multirow{2}{*}{the unique $\ell mn$-id extension} \\
				\cline{2-3}
				&$n \geq \ell \geq m$&arbitrary&&\\
			\cline{1-3} \cline{5-5}	
				\multirow{2}{*}{$\bigcup (\bigcap \varepsilon_{ss}^{\ell mn \eta}(F))^\downarrow$}
				&\multirow{2}{*}{$\ell \geq m$ and $n \geq m$} &\multirow{2}{*}{finitary}  &Corollary \ref{corollary:structure_id_eg}& \multirow{2}{*}{the unique $\ell mn$-$\eta$-eg extension} \\
				&&&&\\
			\hline	
		\end{longtable}%
	}
\end{center}
\section{Acknowledgments}
The work was supported by the National Natural Science Foundation of China (Grant No. 61602249).

\section*{References}

\end{document}